\let\otmltransp\transp
\let\transp\undefined
\let\transp\otmltransp
\definecolor{otblue}{HTML}{1F4E79}
\definecolor{otred}{HTML}{A43D3D}
\definecolor{otgray}{HTML}{5F6670}
\titleformat{\chapter}[display]
	{\normalfont\bfseries\color{otblue}}
	{\filleft\MakeUppercase{\chaptertitlename}\ \thechapter}
	{1ex}
	{\titlerule[0.8pt]\vspace{1.3ex}\huge\raggedright #1}
	[\vspace{1ex}\titlerule]
\bfseries\color{otblue}}
\bfseries\color{otblue}}
\bfseries\color{otgray}}
\titlespacing*{\chapter}{0pt}{-8pt}{28pt}
\titlespacing*{\section}{0pt}{2.8ex plus .8ex minus .2ex}{1.2ex plus .2ex}
\titlespacing*{\subsection}{0pt}{2.2ex plus .6ex minus .2ex}{.8ex plus .2ex}
\renewcommand{\a}{\VectMode{a}}
\renewcommand{\b}{\VectMode{b}}
\renewcommand{\c}{c}
\renewcommand{\d}{\ins{d}}
\renewcommand{\P}{\VectMode{P}}
\renewcommand{\S}{\VectMode{S}}
\renewcommand{\th}{\theta}
\newcommand{\todo}[1]{\textbf{\textcolor{red}{[ToDo: #1]}}}
\renewcommand{\todo}[1]{}
\newcommand{\removed}[1]{}
\newcommand{\dims}{d}
\title{Optimal Transport\\for Machine Learners}
\author{%
\begin{tabular}{c}
	Gabriel Peyr{\'e} \\ CNRS and ENS, PSL Universit\'e \\
	 \url{gabriel.peyre@ens.fr}
\end{tabular}
}
\date{\today}
\renewcommand{\maketitle}{%
\begin{titlepage}
\thispagestyle{empty}
\vspace*{.08\textheight}
\noindent{\color{otblue}\rule{\textwidth}{1.1pt}}\par
\vspace{1.4cm}
{\centering\fontsize{30}{36}\selectfont\bfseries\color{otblue}\@title\par}
\vspace{1.2cm}
{\centering\large\@author\par}
\vfill
\begin{center}
{\color{otred}\rule{.38\textwidth}{2pt}}\par
\vspace{.65cm}
{\large\@date\par}
\end{center}
\end{titlepage}
}
\begin{document}

\hypersetup{pageanchor=false}
\pagenumbering{gobble}
\maketitle
\cleardoublepage
\hypersetup{pageanchor=true}

\frontmatter

\chapter*{Abstract}
\addcontentsline{toc}{chapter}{Abstract}
Modern machine learning repeatedly manipulates probability measures: empirical datasets, generated samples, latent distributions, class-conditional laws, particle systems, weights of wide networks and attention patterns. Optimal transport is useful in this setting because it compares such objects by asking how mass should move. It therefore combines a statistically meaningful notion of discrepancy with a geometry of interpolation, dual certificates and variational dynamics. This makes OT a common language for losses, generative modeling, domain adaptation, robust learning, barycenters, gradient flows and mean-field descriptions of learning algorithms.
\index{conditional law}
\index{dual!certificate}
\index{probability measure}
\index{gradient!flow}
\index{generative model}

This book presents the main OT techniques with these machine-learning uses in mind. It starts from finite assignment and the Monge map viewpoint, passes to Kantorovich couplings and dual potentials, and then explains the algorithmic ideas that make transport usable: linear programming, semi-discrete cells, Sinkhorn scaling and low-dimensional projections. The same objects are then reused as a geometry of measures, giving Wasserstein distances, barycenters, gradient flows, dynamic formulations and Gaussian/Bures formulas. The final chapters emphasize the variants most relevant to modern ML: divergences and adversarial losses, entropic and unbalanced relaxations, robust or spectral ground geometries, Gromov and quantum extensions, and transport-based views of generative models, mean-field networks and attention dynamics. The goal is to keep the mathematics explicit while exposing the computational and geometric intuitions needed to turn OT into a working toolbox for machine learners.
\index{Sinkhorn!scaling}
\index{adversarial!loss}
\index{Monge!problem}
\index{dual!potential}
\index{linear programming}
\index{Wasserstein!distance}
\index{gradient!flow}
\index{generative model}

\noindent All material for this book, including the code used to reproduce the figures, is available at \href{http://github.com/gpeyre/ot4ml}{\texttt{gpeyre/ot4ml}}. Most computational figures were produced with the Python Optimal Transport (POT) library~\cite{flamary2021pot}. The author warmly thanks the POT team and contributors for their important and sustained effort in making reliable optimal-transport algorithms available to the community.

\chapter*{Guide to the Literature and Scope}
\addcontentsline{toc}{chapter}{Guide to the Literature and Scope}

Several books already cover optimal transport from complementary viewpoints. The two-volume monograph of Rachev and R\"uschendorf~\cite{rachev1998mass,rachev1998mass2} gives a broad probabilistic treatment of mass transportation and its applications. Villani's books~\cite{Villani03,Villani09} are the standard references for the modern mathematical theory, from Kantorovich duality to curvature, concentration and geometric analysis. Santambrogio's text~\cite{SantambrogioBook} offers a concise applied-mathematics route through the same foundations, with a strong emphasis on PDEs and variational arguments. Ambrosio, Gigli and Savar\'e~\cite{ambrosio2006gradient} develop the metric-space theory of gradient flows that underlies the dynamical part of the subject.
\index{Kantorovich!duality}
\index{gradient!flow}

On the computational side, Peyr\'e and Cuturi~\cite{peyre2019computational} provide the reference account of numerical OT, entropic regularization and applications in data sciences. Galichon's book~\cite{galichon2016optimal} explains the economic and matching-theoretic viewpoint, while the statistical theory of OT is developed in the recent lecture notes of Chewi, Niles-Weed and Rigollet~\cite{weed2025statistical}. Recent surveys complement these books by emphasizing scalable algorithms and machine-learning applications~\cite{khamis2024scalable,montesuma2023recent}, as well as the role of OT in imaging and graphics~\cite{bonneel2023survey}. These references remain the natural places to find exhaustive proofs, historical details and specialized variants.
\index{entropic!regularization}

The aim here is different and more selective. The book keeps the core mathematics explicit, but organizes it around the questions that repeatedly arise in machine learning: how to compare singular empirical measures, how to compute differentiable transport losses, how regularization changes optimization and statistics, how dual potentials become discriminators, and how transport geometry produces flows of particles, neurons and tokens. The intended contribution is therefore not a replacement for the references above, but a compact bridge between rigorous OT and the geometric intuitions needed to use it in modern ML.
\index{neuron}
\index{dual!potential}
\index{empirical!measure}

\tableofcontents

\mainmatter


\chapter{Optimal Matching between Point Clouds}
\index{matching!optimal}
\label{sec-matching}

This opening chapter isolates the simplest form of optimal transport: pairing two finite point clouds. The stakes are algorithmic and geometric at once: one sees the combinatorial nature of transport, the special simplicity of the line, and the first hints that convex relaxation will be necessary in higher dimension. Classical assignment algorithms such as the Hungarian method and auction methods~\cite{Kuhn1955,bertsekas1992auction} provide the computational backdrop, while the geometric examples prepare the Kantorovich relaxation.
\index{Hungarian primal-dual method}
\index{Kantorovich!relaxation}

\section{Monge Problem for Discrete Points}
\index{Monge!problem}
\label{sec-monge-pbm}

This section formulates matching as Monge's deterministic transport problem on two equally weighted clouds. The one-dimensional case is a transparent reference case where the optimal map can be read off by sorting.

\paragraph{Matching Problem}
\index{assignment problem}

Given a cost matrix $(\C_{i,j})_{i \in \range{n}, j \in \range{m}}$ and assuming $n=m$, the optimal assignment problem aims to find a bijection $\sigma$ within the set $\Perm(n)$ of permutations of $n$ elements that solves
\index{cost matrix}
\index{assignment problem}
\eql{\label{eq-optimal-assignment}
	\umin{\sigma \in \Perm(n)} \frac{1}{n}\sum_{i=1}^n \C_{i,\sigma(i)}.
}
One could naively evaluate the cost function above using all permutations in the set $\Perm(n)$. However, this set has size $n!$, which becomes enormous even for small values of $n$.
In general, the optimal $\sigma$ is not unique.

\paragraph{1D Case}

In 1D, for convex cost, the matching defines a monotonic map.

\begin{proposition}[Monotone matching on the line]\label{prop-matching-1d-monotone}
\index{monotone!matching}
Assume that the points $(x_i)_i$ and $(y_j)_j$ are pairwise distinct. If the cost is of the form $\C_{i,j}=h(x_i-y_j)$, where $h: \RR \rightarrow \RR^+$ is strictly convex (for example, $\C_{i,j}=|x_i-y_j|^p$ for $p > 1$), then any optimal $\sigma$ defines a strictly increasing map $x_i \mapsto y_{\sigma(i)}$ (and thus is unique), i.e.,
\eq{
	\forall (i,i'), \quad (x_i-x_{i'})(y_{\sigma(i)}-y_{\sigma(i')}) > 0.
}
\end{proposition}
\begin{proof}
Indeed, if this property is violated, i.e., there exists $(i,i')$ such that $(x_i-x_{i'})(y_{\sigma(i)}-y_{\sigma(i')}) < 0$, then one can define a permutation $\tilde{\sigma}$ by swapping the match, i.e., $\tilde{\sigma}(i)=\sigma(i')$ and $\tilde{\sigma}(i')=\sigma(i)$, yielding a strictly better cost, as proved in the following fact.
Let $h:\RR\to\RR$ be strictly convex and let $x < x'$ and $y < y'$. Then
\[
  h(x-y)+h(x'-y') < h(x-y')+h(x'-y).
\]
We set the gap $d:=y'-y > 0$ and define for every $s\in\RR$
\[
	  D(s):=\frac{h(s)-h(s-d)}{d}
	\quad\text{and}\quad
	\Delta:=h(x-y')+h(x'-y)-h(x-y)-h(x'-y') = d ( D(x'-y)-D(x-y) ).
\]
Because $h$ is strictly convex, $D$ is strictly increasing.
Since $x-y < x'-y$, monotonicity yields $D(x-y) < D(x'-y)$, that is $\Delta > 0$.
\end{proof}

This property extends by continuity to convex (not strictly convex) costs such as $|x-y|$, but in this case, the matching is not necessarily unique.
For convex costs, the algorithm to compute an optimal transport is therefore to sort the points, i.e., find some pair of permutations $\sigma_X, \sigma_Y$ such that
\eq{
	x_{\sigma_X(1)} \leq x_{\sigma_X(2)} \leq \ldots
	\qandq
	y_{\sigma_Y(1)} \leq y_{\sigma_Y(2)} \leq \ldots
}
and then an optimal match is mapping $x_{\sigma_X(k)} \mapsto y_{\sigma_Y(k)}$, i.e., an optimal transport is $\sigma = \sigma_Y \circ \sigma_X^{-1}$. The total computational cost is thus $O(n\log(n))$, using, for instance, the quicksort algorithm.

\begin{alg}[One-dimensional sorting assignment]\label{alg:one-dimensional-sorting}
\textbf{Input:} Equal-weight point clouds $(x_i)_{i=1}^n$, $(y_j)_{j=1}^n$ on $\RR$; convex cost $h(x-y)$.

\textbf{Output:} Optimal permutation $\sigma$.

\textbf{Sort} source and target points:
\(x_{\sigma_X(1)}\leq\cdots\leq x_{\sigma_X(n)}, \qquad y_{\sigma_Y(1)}\leq\cdots\leq y_{\sigma_Y(n)}.\)

\textbf{For} $k=1,\ldots,n$ \textbf{do}:
\begin{algblock}

\textbf{Match} $x_{\sigma_X(k)}$ with $y_{\sigma_Y(k)}$.

\end{algblock}
\algreturnskip
\textbf{Return} \(\sigma=\sigma_Y\circ\sigma_X^{-1}.\)
\end{alg}

If the distance profile is concave instead of convex, the geometry changes. For costs such as $c_p(x,y)=|x-y|^p$ with $0<p<1$, splitting a displacement into several smaller displacements is expensive, so optimal matchings tend to create long exchanges rather than the monotone equal-rank pairing; see Figure~\ref{fig:matching-1d-convex-concave-costs}. This is the strictly concave regime studied by Gangbo and McCann~\cite{gangbo1996geometry}.

The real line still gives special structure. After sorting all red and blue points together, the ordered sequence decomposes into maximal alternating chains, and local matching indicators can certify pairs that must be matched in an optimum. Removing such certified pairs and repeating yields an exact hierarchical algorithm for the unit-mass balanced assignment problem, with worst-case complexity $O(n^2)$ in the framework of Delon, Salomon and Sobolevski~\cite{delon-concave}. Very concave costs also motivate simpler greedy heuristics, studied for instance by Ottolini and Steinerberger~\cite{OttoliniSteinerberger2023GreedyConcave}. The point is that these methods are not generic linear-programming solvers; they use the one-dimensional order and the concavity of the distance profile.
\index{mass!balance}
\index{local!matching indicator}
\index{assignment problem}

The resulting constructive rule is summarized as Algorithm~\ref{alg:concave-line-local-indicators}.

\begin{alg}[Concave line matching by local indicators]\label{alg:concave-line-local-indicators}
\textbf{Input:} Unit-mass source and target points on $\RR$; strictly concave distance cost.

\textbf{Output:} Optimal concave-cost matching $M$.

\textbf{Sort} combined red-blue sequence on the line.

\textbf{Decompose} it into maximal alternating chains.

\textbf{Initialize:} Set $M=\emptyset$.

\textbf{While} unmatched points remain \textbf{do}:
\begin{algblock}

\textbf{For} each active alternating chain \textbf{do}:
\begin{algblock}

\textbf{Compute} local matching indicators~\cite{delon-concave}.

\textbf{If} an indicator certifies a pair $(i,j)$ \textbf{then}:
\begin{algblock}

\textbf{Update} $M\leftarrow M\cup\{(i,j)\}$.

\textbf{Remove} points $i$ and $j$.

\end{algblock}
\end{algblock}
\textbf{Recompute} only chains affected by removals.

\end{algblock}
\algreturnskip
\textbf{Return} $M$.
\end{alg}

\begin{figure}[ht]
\centering
\begin{tabular}{@{}cc@{}}
\includegraphics[width=.46\linewidth]{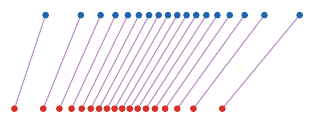} &
\includegraphics[width=.46\linewidth]{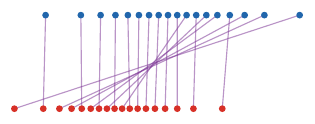} \\[-.1em]
\small convex cost, $p=2$ &
\small concave cost, $p=1/2$ \\[.25em]
\includegraphics[width=.46\linewidth]{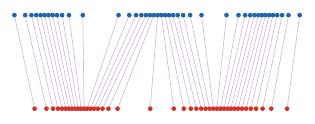} &
\includegraphics[width=.46\linewidth]{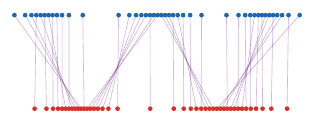} \\[-.1em]
\small two-mode to three-mode, $p=2$ &
\small two-mode to three-mode, $p=1/2$
\end{tabular}
\caption{One-dimensional assignments for ordered source and target clouds with costs $c_p(x,y)=|x-y|^p$. The top row uses single-Gaussian source and target clouds; the bottom row uses a denser two-component source and three-component target. For the convex quadratic cost, equal ranks are matched and the segments do not cross. For the concave cost, the optimum creates long crossing exchanges; the ordered line remains useful, but through the alternating-chain structure of concave transport rather than through monotone rearrangement.}
\index{monotone!rearrangement}
\index{cost!quadratic}
\label{fig:matching-1d-convex-concave-costs}
\end{figure}

\begin{figure}[ht]
\centering
\begin{tabular}{@{}cc@{}}
\small two mixtures & \small one mode to three modes \\[-.15em]
\includegraphics[width=.46\linewidth]{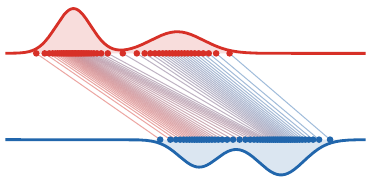} &
\includegraphics[width=.46\linewidth]{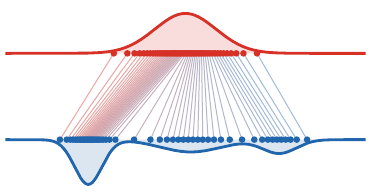}
\end{tabular}
\caption{One-dimensional optimal matching by quantile sorting. The red and blue curves are smooth laws used to generate equal-weight empirical measures; the dots are inverse-CDF samples at common quantile levels. The monotone assignment connects equal ranks, both for two Gaussian mixtures and for the transport from one central Gaussian toward a three-mode target law.}
\index{Gaussian mixture}
\index{empirical!measure}
\label{fig:matching-1d-quantile-assignment}
\end{figure}

Note that if $\phi : \RR \rightarrow \RR$ is an increasing map, one can apply this technique to costs of the form $h(|\phi(x)-\phi(y)|)$ with a change of variable.
A typical application is grayscale histogram equalization. The empirical cumulative distribution of the luminance values of an image is transported to a prescribed target histogram, for instance a concentrated or reference-image histogram. In one dimension, the monotone rearrangement above gives the exact transport map, so the operation is both computationally simple and geometrically faithful: it matches distributions of intensities rather than individual pixels.
\index{histogram}
\index{transport map}
\index{monotone!rearrangement}

\begin{figure}[H]
\centering
\begin{tabular}{@{}cccc@{}}
\small $t=0$ & \small $t=1/3$ & \small $t=2/3$ & \small $t=1$ \\[-.15em]
\includegraphics[width=.21\linewidth]{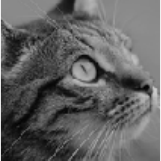} &
\includegraphics[width=.21\linewidth]{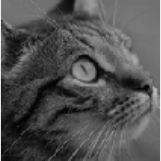} &
\includegraphics[width=.21\linewidth]{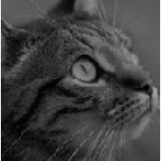} &
\includegraphics[width=.21\linewidth]{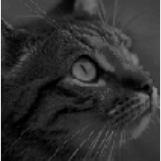} \\[-.1em]
\includegraphics[width=.21\linewidth]{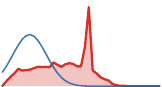} &
\includegraphics[width=.21\linewidth]{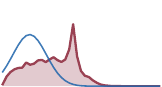} &
\includegraphics[width=.21\linewidth]{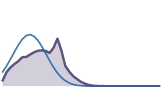} &
\includegraphics[width=.21\linewidth]{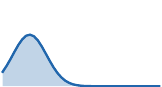}
\end{tabular}
\caption{Histogram equalization as one-dimensional Monge transport on pixel intensities. The map is the monotone rearrangement $T=Q_\beta\circ F_\alpha$, using the cumulative and quantile functions defined later in Definition~\ref{def-cdf-quantile}; here $\beta$ is a truncated Gaussian concentrated near dark intensities. The images are interpolated pointwise by $I_t=(1-t)I+tT(I)$, and all histograms share the same vertical scale to make the displacement of intensity mass comparable across time.}
\index{cumulative!function}
\index{quantile!function}
\index{monotone!rearrangement}
\label{fig:monge-histogram-equalization}
\end{figure}

Note that if $h$ is strictly convex, then all optimal assignments are increasing, and if the points are all distinct, this increasing map is unique. If $h$ is not strictly convex, for instance $c(x,y)=|x-y|$, non-increasing optimal assignments can also exist. This happens, for example, in the book-shifting problem with overlapping uniform distributions, where the mass in the intersection can stay fixed.

\paragraph{Optimal transport on the circle.}\label{rem-circle-ot-cut}
\index{circle!transport}

The sorting rule on the line has a periodic analogue. Identify the circle with $\mathbb S^1=\RR/\ZZ$, let
\[
	d_{\mathbb S^1}(x,y):=\min_{k\in\ZZ}|x-y+k|,
	\qquad
	c_p(x,y):=d_{\mathbb S^1}(x,y)^p,\qquad p>1.
\]
The only extra datum, compared with the line, is where one opens the circle. Once a cut has been chosen, the circle is unfolded into an interval and the one-dimensional monotone assignment can be used. In the discrete case, changing the cut is the same as applying a cyclic shift to one of the two circular orderings.
\index{cyclic!shift}

\begin{proposition}[Discrete circle transport by a cut]\label{prop-circle-ot-cut}
\index{circle!transport}
	Let $x_1,\ldots,x_n$ and $y_1,\ldots,y_n$ be two families of distinct points on $\mathbb S^1$, with equal weights. Let $x_{(1)},\ldots,x_{(n)}$ and $y_{(1)},\ldots,y_{(n)}$ denote any fixed cyclic orderings, with the convention $y_{(k+n)}=y_{(k)}$. For the cost $c_p$, $p>1$, an optimal assignment is one of the cyclic shifts
	\[
		x_{(k)}\longmapsto y_{(k+s)},\qquad k\in\range{n},\quad s\in\{0,\ldots,n-1\},
	\]
	and is found by minimizing
	\[
		\sum_{k=1}^n d_{\mathbb S^1}\!\left(x_{(k)},y_{(k+s)}\right)^p
	\]
	over the $n$ possible shifts. Equivalently, for an optimal shift one can choose a cut $\theta\in\mathbb S^1\setminus(\{x_i\}_i\cup\{y_j\}_j)$ so that, after lifting all points to $(\theta,\theta+1)$ and sorting them, the optimal matching is the equal-rank monotone matching on this interval.
\index{monotone!matching}
\end{proposition}

\begin{proof}
	Call two matched pairs cyclically inverted if the circular order of their source endpoints is opposite to the circular order of their target endpoints. Among optimal assignments, choose one with the smallest number of such inversions. The elementary exchange step is the circular analogue of the line argument in Proposition~\ref{prop-matching-1d-monotone}: if two matched pairs are inverted, then cutting the circle in a gap which does not meet the four endpoints and choosing integer lifts realizes the four geodesic distances involved in the exchange as ordinary distances between two ordered source lifts and two oppositely ordered target lifts. The one-dimensional Monge inequality for the strictly convex function $r\mapsto |r|^p$ then shows that swapping the two targets cannot increase the cost, and decreases it unless the four endpoints are in a degenerate tie configuration.
\index{convex!function}
	
	Thus an optimal assignment can be chosen with no cyclic inversion. A bijection between two finite cyclically ordered sets with no cyclic inversion is a rotation of the order, hence a cyclic shift. This shift specifies how the two cyclic orderings should be opened; after this cut, the rotation becomes an ordinary linear order and the matching is the equal-rank monotone assignment on the unfolded interval. Conversely, each cut gives one such cyclic shift, so minimizing over the finitely many shifts gives an optimal discrete circle assignment. Repeated points or ties are obtained by the same argument after an arbitrarily small perturbation and a limiting passage. This is the discrete form of the fast circle-Monge construction of~\cite{delon-circle}.
\index{circle!unfolded interval}
\index{cyclic!shift}
\end{proof}

\begin{alg}[Circle assignment by cutting]\label{alg:circle-cut-assignment}
\textbf{Input:} Equal-weight points $(x_i)_{i=1}^n$, $(y_j)_{j=1}^n$ on $\mathbb S^1$; cost $d_{\mathbb S^1}^p$.

\textbf{Output:} Optimal cyclic assignment.

\textbf{Let} $x_{(1)},\ldots,x_{(n)}$ and $y_{(1)},\ldots,y_{(n)}$ be the points sorted by increasing angle from a fixed origin.

\textbf{For} $s=0,\ldots,n-1$ \textbf{do}:
\begin{algblock}
\(E_s=\sum_{k=1}^n d_{\mathbb S^1}\!\left(x_{(k)},y_{(k+s)}\right)^p, \qquad y_{(k+n)}=y_{(k)}.\)
\end{algblock}
\textbf{Set} $s^\star=\min\argmin_{0\leq s<n}E_s$.

\textbf{Set} $\theta_{\rm cut}$ in an empty arc separating two consecutive matched pairs for the shift $s^\star$.

\textbf{Replace} every angle by its representative in $[\theta_{\rm cut},\theta_{\rm cut}+2\pi)$.
\textbf{Return} $x_{(k)}\mapsto y_{(k+s^\star)}$.
\end{alg}

\begin{figure}[htbp]
\centering
\begin{tabular}{cc}
\includegraphics[width=.34\linewidth]{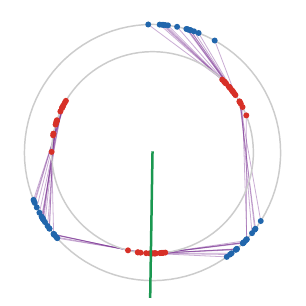} &
\includegraphics[width=.52\linewidth]{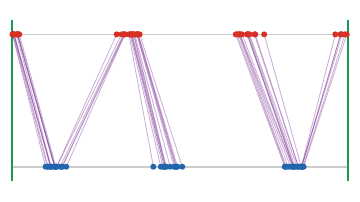} \\[-.1em]
\small circular problem &
\small unfolded interval
\index{circle!unfolded interval}
\end{tabular}
\caption{Optimal transport on the circle by cutting and unfolding. The red and blue atoms live on two copies of the circle; the denser point clouds make the cyclic ordering visible. Purple segments show the optimal matching and the green radius marks the chosen cut. Once the circle is opened at this angle, the same matching appears as a monotone one-dimensional assignment on the interval, with the two green endpoints identified.}
\index{circle!transport}
\label{fig:monge-circle-cut-unfolding}
\end{figure}

\begin{figure}[ht]
\centering
\begin{tabular}{ccc}
\includegraphics[width=.30\linewidth]{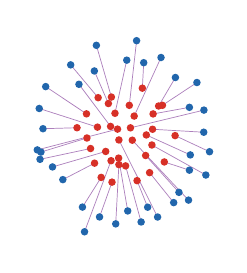} &
\includegraphics[width=.30\linewidth]{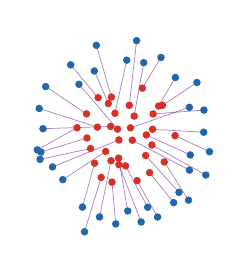} &
\includegraphics[width=.30\linewidth]{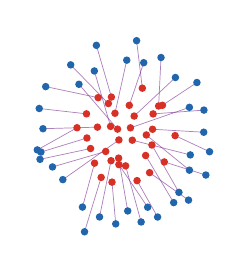} \\[-.1em]
\small $c(x,y)=\norm{x-y}$ &
\small $c(x,y)=\norm{x-y}^2$ &
\small $c(x,y)=\norm{x-y}^6$
\end{tabular}
\caption{Optimal assignments between the same two point clouds for three powers of the Euclidean distance. The source atoms are semi-regular samples in a central disk, while the target atoms are semi-regular samples on a thin annulus; this canonical geometry is reused in later coupling and regularization figures. The feasible set is unchanged, but increasing $p$ penalizes the longest edges more strongly and changes the global organization of the permutation.}
\label{fig:matching-2d-cost-exponent}
\end{figure}

\paragraph{Rational weights.}
\index{rational weights}

The strict assignment model is also tied to equal cardinalities and equal weights. As soon as the target resolution changes or the weights are not uniform, a permutation no longer describes the feasible transports. One instead needs a nonnegative transport matrix with prescribed row and column sums, as illustrated in Figure~\ref{fig:matching-resolution-and-weights}; this is the finite-dimensional Kantorovich relaxation developed in Chapter~\ref{sec-kantorovich}.
\index{matrix!nonnegative transport}
\index{Kantorovich!relaxation}

\begin{figure}[ht]
\centering
\begin{tabular}{ccc}
\includegraphics[width=.30\linewidth]{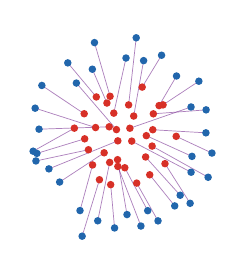} &
\includegraphics[width=.30\linewidth]{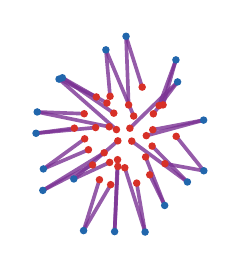} &
\includegraphics[width=.30\linewidth]{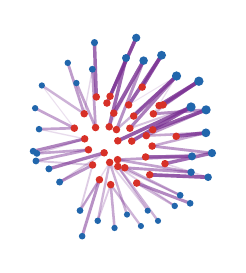} \\[-.1em]
\small $n=m$ &
\small $m<n$ &
\small $n=m$ but $\b_j\neq 1/n$
\end{tabular}
\caption{From assignments to transport plans, using the same disk-to-annulus geometry as Figure~\ref{fig:matching-2d-cost-exponent}. In the balanced equal-weight case, each source atom is matched to one target atom. With a target cloud that has half as many atoms, or with strongly nonuniform target weights, the coupling matrix can merge or split mass; segment thickness and opacity encode its nonzero entries, and blue marker areas encode the prescribed target masses.}
\index{plan!transport}
\label{fig:matching-resolution-and-weights}
\end{figure}

\begin{proposition}[Rational weights as duplicated uniform matching]\label{prop-rational-weights-duplicated-matching}
\index{rational weights}
\index{matching!uniform}
	Let
	\[
		\mu=\sum_{i=1}^n \frac{k_i}{N}\delta_{x_i},
		\qquad
		\nu=\sum_{j=1}^m \frac{\ell_j}{N}\delta_{y_j},
		\qquad
		\sum_i k_i=\sum_j\ell_j=N,
	\]
	with $k_i,\ell_j\in\NN$. The discrete Kantorovich problem between $(\mu,\nu)$ is equivalent to the uniform assignment problem obtained by replacing each $x_i$ by $k_i$ identical copies and each $y_j$ by $\ell_j$ identical copies. More precisely, after multiplying transport masses by $N$, optimal couplings correspond to optimal integer count matrices $(n_{ij})$ with row sums $k_i$ and column sums $\ell_j$, and these count matrices are exactly the collapsed form of assignments between the duplicated clouds.
\index{Kantorovich!problem}
\index{optimal coupling}
\index{assignment problem}
\end{proposition}
\begin{proof}
	Any assignment between the duplicated source and target clouds defines integers $n_{ij}$ counting how many copied particles of type $x_i$ are matched to copied particles of type $y_j$. These counts satisfy $\sum_j n_{ij}=k_i$ and $\sum_i n_{ij}=\ell_j$, and the associated coupling $P_{ij}=n_{ij}/N$ has marginals $k_i/N$ and $\ell_j/N$. The assignment cost is
\index{cost!assignment}
	\[
		\frac1N\sum_{i,j} n_{ij}c(x_i,y_j)
		=
		\sum_{i,j}P_{ij}c(x_i,y_j).
	\]
	Conversely, any nonnegative integer count matrix with those row and column sums can be realized by matching the corresponding duplicated particles. Finally, the transportation constraint matrix is totally unimodular, so the linear problem with integer supplies and demands has an optimal integer count matrix. Thus the optimum of the rational-weight Kantorovich problem is the same as the optimum of the duplicated uniform assignment problem.
\index{assignment problem}
\end{proof}

\begin{figure}[ht]
\centering
\begin{tabular}{ccc}
\includegraphics[width=.30\linewidth]{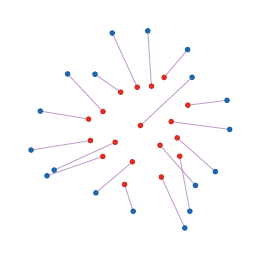} &
\includegraphics[width=.30\linewidth]{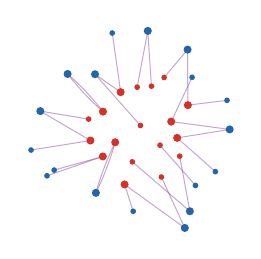} &
\includegraphics[width=.30\linewidth]{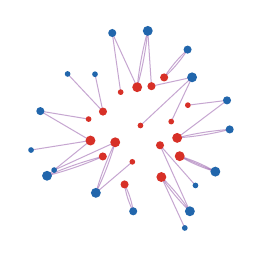} \\[-.1em]
\small $k_i=\ell_j=1$ &
\small $k_i,\ell_j\in\{1,2\}$ &
\small $k_i,\ell_j\in\{1,2,3\}$
\end{tabular}
\caption{Rational weights as duplicated uniform matchings, using the same disk-to-annulus geometry as Figure~\ref{fig:matching-resolution-and-weights} with fewer displayed atoms. The red and blue locations are kept fixed, while disk areas encode the integer multiplicities $k_i$ and $\ell_j$. Solving the assignment problem after duplicating particles produces several collapsed segments attached to high-multiplicity atoms; this is the integer count matrix of Proposition~\ref{prop-rational-weights-duplicated-matching}.}
\index{integer multiplicity}
\index{assignment problem}
\index{rational weights}
\index{matching!uniform}
\label{fig:matching-rational-duplication}
\end{figure}

\paragraph{2D case.}

This efficient strategy to compute the OT in 1-D does not extend to higher dimensions. In 2-D, as already noted by Monge, one has the following property.

\begin{proposition}[Non-crossing optimal matchings]
\index{matching!non-crossing}
	In dimension 2, for $c(x,y) = \norm{x-y}$, if $\sigma$ is an optimal assignment, then segments $[x_i,y_{\sigma(i)}]$ cannot cross.
\end{proposition}

\begin{proof}
	If two segments $[x_i,y_{\sigma(i)}]$ and $[x_j,y_{\sigma(j)}]$ cross at an interior point $z$, then the triangle inequality gives
\index{triangle inequality}
	\[
		\norm{x_i-y_{\sigma(j)}}+\norm{x_j-y_{\sigma(i)}}
		<
		\norm{x_i-y_{\sigma(i)}}+\norm{x_j-y_{\sigma(j)}}.
	\]
	The assignment obtained by swapping $(\sigma(i),\sigma(j))$ therefore has a strictly smaller cost, which contradicts optimality.
\end{proof}

This property alone is, however, not enough to lead to an efficient algorithm. Non-crossing is only a necessary local test, not a compact certificate of optimality. For instance, if $n$ sources and $n$ targets are placed alternately on the boundary of a convex polygon, the number of non-crossing perfect matchings is the Catalan number
\index{matching!perfect}
\index{Catalan number}
\[
	C_n=\frac{1}{n+1}\binom{2n}{n}\sim \frac{4^n}{\sqrt{\pi}n^{3/2}}.
\]

\begin{rem}[Catalan count of alternating non-crossing matchings]
\index{matching!non-crossing}
	The count follows from the standard Catalan recurrence. Fix one red vertex $r$. In a non-crossing perfect matching, if $r$ is matched to a blue vertex $b$, the chord $[r,b]$ splits the polygon into two smaller polygons. Since the boundary colors alternate, each side contains the same number of red and blue vertices. If one side contains $k$ red and $k$ blue vertices, the other contains $n-1-k$ red and $n-1-k$ blue vertices. Non-crossing matchings on the two sides are independent, because no segment can cross the chord $[r,b]$. Thus, denoting by $M_n$ the number of such matchings, one has
\index{matching!perfect}
	\[
		M_0=1,
		\qquad
		M_n=\sum_{k=0}^{n-1} M_k M_{n-1-k}.
	\]
	This recurrence characterizes the Catalan numbers, hence $M_n=C_n$.
\index{Catalan number}
\end{rem}

Thus even after forbidding crossings, an exhaustive search remains exponential. The two-segment swap in the proof above is nevertheless useful: it explains why a crossing matching cannot be optimal, but it does not select among the exponentially many planar matchings that survive this local test.

\section{Matching Algorithms}
\index{matching!algorithm}

This section briefly locates matching within classical combinatorial optimization. Its main point is that efficient algorithms exist, but their cleanest analysis is obtained only after introducing the linear-programming viewpoint.

Efficient algorithms exist to solve the optimal matching problem. The most well-known are the Hungarian method and auction algorithms~\cite{Kuhn1955,bertsekas1981new,bertsekas1992auction}. Auction algorithms use prices on the target points: each source bids for the target with largest reduced profit, the target price is increased, and the process terminates when the $\epsilon$-complementary slackness conditions are satisfied. For integer costs, choosing $\epsilon<1/n$ gives an exact optimum after a finite number of bids~\cite{bertsekas1992auction}. Section~\ref{sec-auction-dual-ascent} revisits this algorithm after Kantorovich duality and explains why it is a dual price method, parallel in spirit to Sinkhorn scaling.
\index{Sinkhorn!scaling}
\index{Hungarian primal-dual method}
\index{optimality!complementary slackness}
\index{Kantorovich!duality}
\index{auction algorithm}
\index{assignment problem}

\paragraph{Hungarian primal-dual method.}
\index{Hungarian primal-dual method}

The Hungarian method is best understood as a certificate-building algorithm for the assignment linear program. It maintains a partial matching $M$ and dual prices $(u_i,v_j)$ satisfying
\[
	u_i+v_j\leq C_{i,j}\qquad\forall i,j.
\]
The equality graph $E(u,v)=\{(i,j):u_i+v_j=C_{i,j}\}$ contains the edges whose reduced cost is zero. The algorithm only augments $M$ along alternating paths made of equality edges. Starting from an unmatched source, it grows an alternating tree with source set $S$ and target set $T$. If the tree reaches an unmatched target, the matching is augmented along the path. If no such edge exists, the dual variables are shifted by the smallest slack
\index{alternating!tree}
\index{graph!equality}
\index{cost!reduced}
\[
	\delta=\min_{i\in S,\ j\notin T}\bigl(C_{i,j}-u_i-v_j\bigr),
	\qquad
	u_i\leftarrow u_i+\delta\ (i\in S),\qquad
	v_j\leftarrow v_j-\delta\ (j\in T).
\]
This update preserves all inequalities $u_i+v_j\leq C_{i,j}$, keeps the current alternating tree tight, and creates at least one new equality edge leaving $S$. Maintaining these slacks incrementally gives the standard $O(n^3)$ implementation for an $n\times n$ assignment problem.
\index{assignment problem}
\index{alternating!tree}
Algorithm~\ref{alg:hungarian-primal-dual} summarizes the primal-dual loop. Figure~\ref{fig:matching-hungarian-progression} displays actual iterates by showing only the evolving partial assignment: unmatched rows are shown as flat rows to keep a fixed matrix format, and matched rows are shown as one-hot rows.
\index{Hungarian primal-dual method}

\begin{alg}[Hungarian primal-dual augmentation]\label{alg:hungarian-primal-dual}
\textbf{Input:} Square cost matrix $C\in\RR^{n\times n}$.

\textbf{Output:} Minimum-cost perfect matching $M$.

\textbf{Initialize:} Set $u_i=\min_j C_{ij}$ and $v_j=0$.

\textbf{Set} $M=\emptyset$.

\textbf{While} $M$ is not perfect \textbf{do}:
\begin{algblock}

\textbf{Build} equality graph:
\(E(u,v)=\{(i,j):u_i+v_j=C_{i,j}\}.\)

\textbf{Set} root $i_0=\min\{i:\ i\text{ is unmatched in }M\}$.

\textbf{Set} reached sets $S=\{i_0\}$ and $T=\emptyset$; clear parent pointers.

\textbf{While} $T$ contains no unmatched target \textbf{do}:
\begin{algblock}

\textbf{If} $N_E(S)\setminus T=\emptyset$ \textbf{then}:
\begin{algblock}

\textbf{Compute} $\delta=\min_{i\in S,\ j\notin T}\bigl(C_{i,j}-u_i-v_j\bigr)$.

\textbf{Update} $u_i\leftarrow u_i+\delta$ for $i\in S$ and $v_j\leftarrow v_j-\delta$ for $j\in T$.

\textbf{Refresh} equality graph $E(u,v)$.

\end{algblock}
\textbf{Set} $J=N_E(S)\setminus T$.

\textbf{For} each $j\in J$ in increasing order \textbf{do}:
\begin{algblock}

\textbf{Add} $j$ to $T$ and set parent row $p(j)=\min\{i\in S:(i,j)\in E(u,v)\}$.

\textbf{If} $j$ is matched to $i'$ in $M$ \textbf{then set} \(S\leftarrow S\cup\{i'\}\) and \(q(i')=j\).

\end{algblock}
\end{algblock}
\textbf{Set} $j_0=\min\{j\in T:\ j\text{ is unmatched in }M\}$.

\textbf{Set} $j=j_0$.

\textbf{While} $j$ is defined \textbf{do}:
\begin{algblock}

\textbf{Set} $i=p(j)$.

\textbf{Set} $M\leftarrow M\cup\{(i,j)\}$.

\textbf{Set} $j_{\rm old}=q(i)$.

\textbf{If} $j_{\rm old}$ is defined \textbf{then set} \(M\leftarrow M\setminus\{(i,j_{\rm old})\}\).
\textbf{Set} $j=j_{\rm old}$.
\end{algblock}
\end{algblock}
\algreturnskip
\textbf{Return} $M$.
\end{alg}

\begin{figure}[ht]
\centering
\setlength{\tabcolsep}{2pt}
\begin{tabular}{ccccc}
\includegraphics[width=.18\linewidth]{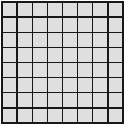} &
\includegraphics[width=.18\linewidth]{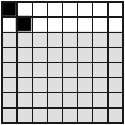} &
\includegraphics[width=.18\linewidth]{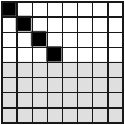} &
\includegraphics[width=.18\linewidth]{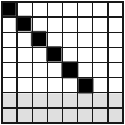} &
\includegraphics[width=.18\linewidth]{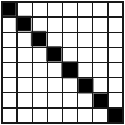} \\[-.1em]
\small initial &
\small 2 aug. &
\small 4 aug. &
\small 6 aug. &
\small final
\end{tabular}
\caption{Matrix view of actual Hungarian primal-dual iterates on a diagonally dominant ordered one-dimensional squared-distance assignment. Each panel records the current partial assignment state: unassigned rows are kept flat, while assigned rows are one-hot. The snapshots are taken at initialization and after two, four, six and eight augmentations; for this pedagogical instance the partial assignments grow along the diagonal, and the final matrix is the identity assignment certified by complementary slackness.}
\index{optimality!complementary slackness}
\label{fig:matching-hungarian-progression}
\end{figure}

\begin{prop}[Correctness and complexity of the Hungarian primal-dual method]\label{prop-hungarian-correct}
\index{Hungarian primal-dual method}
	Assume the Hungarian method terminates with a perfect matching $\sigma$ contained in the equality graph
\index{matching!perfect}
\index{graph!equality}
	\[
		E(u,v)=\enscond{(i,j)}{u_i+v_j=C_{i,j}},
	\]
	where $(u,v)$ is dual feasible, i.e. $u_i+v_j\leq C_{i,j}$ for all $(i,j)$. Then $\sigma$ is an optimal assignment. Moreover, the usual Hungarian updates terminate after finitely many augmentations.
\index{Hungarian primal-dual method}
	With maintained slacks, the method uses $O(n^3)$ arithmetic operations.
\end{prop}

\begin{proof}
	For any permutation $\tau$, dual feasibility gives
	\[
		\sum_i C_{i,\tau(i)} \geq \sum_i (u_i+v_{\tau(i)})
		= \sum_i u_i+\sum_j v_j.
	\]
	This is the weak duality lower bound. If $\sigma$ is contained in the equality graph, then
\index{duality!weak}
\index{graph!equality}
	\[
		\sum_i C_{i,\sigma(i)}
		= \sum_i u_i+\sum_j v_j,
	\]
	so the primal cost of $\sigma$ reaches the dual lower bound and is optimal.

	It remains to justify finite termination and the complexity bound. At each successful augmentation, the matching cardinality increases by one, so there are at most $n$ augmentations. During one augmentation phase, the algorithm grows an alternating tree in the equality graph. If no augmenting path is available, the dual update uses the smallest slack of an edge leaving the current tree. For edges inside the tree, adding $\delta$ to source labels and subtracting $\delta$ from target labels preserves tightness; for edges from $S$ to $T^c$, the definition of $\delta$ preserves feasibility and makes at least one new edge tight; all other inequalities are unchanged or become looser. Thus the reachable sets strictly grow between two failed augmentation attempts, and they can grow at most $n$ times within one phase. If the current slacks $\min_{i\in S}(C_{i,j}-u_i-v_j)$ are updated when a source enters $S$, each tree expansion costs $O(n)$. A phase therefore costs $O(n^2)$, and the $n$ phases give $O(n^3)$ operations. Hence the method reaches a perfect optimal matching.
\index{tightness}
\index{alternating!tree}
\index{graph!augmenting path}
\index{graph!equality}
\end{proof}


\chapter{Monge Problem between Measures}
\index{Monge!problem}
\label{sec-monge}

The goal of this chapter is to pass from finite matching to transport between arbitrary probability laws. The central stakes are to define measures, push-forwards and Monge maps carefully enough that the discrete picture survives, while exposing why deterministic maps can fail to exist. Monge's original formulation~\cite{Monge1781} and modern treatments~\cite{Villani03,Villani09,SantambrogioBook,rachev1998mass} are the conceptual background for this transition.
\index{push-forward}

The presentation of the previous chapter could only handle two sets with the same number of points. To relax this to a more general setting, one needs to consider probability distributions, so that the points are weighted by masses.

\section{Measures}
\label{sec-measures}

Measures are the language that lets point clouds, densities and singular objects be handled uniformly. We only recall the facts needed later: integration, total variation, densities and probabilistic laws.
\index{total variation}

\paragraph{Histograms}
\index{histogram}

The finite-dimensional model for a probability law is a nonnegative vector with unit total mass.

\begin{defn}[Probability simplex]\label{def-probability-simplex}
\index{probability simplex}
\index{histogram}
	The probability simplex of length $n$ is
	\[
		\simplex_n \eqdef \enscond{\a \in \RR_+^n}{ \sum_{i=1}^n \a_i = 1 }.
	\]
	Its elements are also called probability vectors or histograms.
\end{defn}

\paragraph{Discrete measure, empirical measure}
\index{discrete!measure}
\index{empirical!measure}

Probability vectors become measures once their masses are attached to locations.

\begin{defn}[Discrete measure]\label{def-discrete-measure}
\index{discrete!measure}
\index{Dirac mass}
	A discrete measure with weights $\a$ and locations $x_1,\dots,x_n\in\X$ is
	\eql{\label{eq-discr-meas}
			\al = \sum_{i=1}^n \a_i \de_{x_i},
	}
	where $\de_x$ is the Dirac mass at position $x$. It is a probability measure if $\a\in\simplex_n$, and a positive measure if all weights $\a_i$ are nonnegative.
\end{defn}
The Dirac mass should be thought of as a unit of mass infinitely concentrated at one location.
\index{probability measure}
\index{Dirac mass}
An ``empirical'' probability distribution is uniform on a point cloud, i.e. $\al=\frac{1}{n}\sum_i \de_{x_i}$.
In practice, in many applications, it is useful to be able to manipulate both the positions $x_i$ (``Lagrangian'' discretization) and the weights $\a_i$ (``Eulerian'' discretization). Lagrangian modification is usually more powerful (because it leads to adaptive discretization) but it breaks the convexity of most problems.

\paragraph{General measures}

We consider Borel measures $\al \in \Mm(\X)$ on a metric space $(\Xx,d)$, meaning that $\al(A)$ is defined for every Borel set $A$ (obtained from open sets by countable unions, countable intersections and complements). Unless otherwise stated, the measures are finite.
\index{Borel set}
\index{Borel measure}
A Dirac measure $\de_x$ is then defined as $\de_x(A)=1$ if $x \in A$ and $0$ otherwise, and this extends by linearity for discrete measures of the form~\eqref{eq-discr-meas} as
\index{Dirac mass}
\eq{
	\al(A) = \sum_{x_i \in A} \a_i
}
We denote $\Mm_+(\X)$ the subset of all positive measures on $\X$, i.e. $\al(A) \geq 0$ (and $\al(\X)<+\infty$ for the measure to be finite). The set of probability measures is denoted $\Mm_+^1(\X)$, which means that any $\al \in \Mm_+^1(\X)$ is positive, and that $\al(\X)=1$.
\index{probability measure}

\paragraph{Polish metric spaces.}
Many measure-theoretic statements used later require a mild regularity assumption on the underlying space. The point is not to restrict applications, since Euclidean spaces, complete separable manifolds and separable Hilbert spaces are covered, but to exclude pathological measurable spaces where disintegration, tightness or weak convergence can fail to behave properly.
\index{metric space}
\index{weak!convergence}
\index{tightness}

\begin{defn}[Polish metric space]\label{def-polish-metric-space}
\index{Polish space}
	A metric space $(\X,d)$ is Polish if it is complete and separable: every Cauchy sequence converges in $\X$, and $\X$ contains a countable dense subset. More generally, a topological space is called Polish if its topology can be induced by some complete separable metric.
\index{complete metric space}
\index{separable space}
\index{countable dense subset}
\end{defn}
Polish spaces are the natural ambient category for probability measures. Borel probability measures on them are regular, tightness gives compactness criteria, regular conditional probabilities and disintegrations exist under standard assumptions, and Wasserstein spaces remain Polish; see Proposition~\ref{prop-wasserstein-space-polish}.
\index{probability measure}
\index{Borel measure}
\index{conditional probability}
\index{disintegration}

\begin{defn}[Support of a measure]\label{def:support}
\index{support}
	The support $\supp(\al)$ of a Borel measure $\al$ on a metric space $\Xx$ is the smallest closed set of full $\al$-mass. Equivalently, $x\in\supp(\al)$ if every open ball centered at $x$ has positive $\al$-mass.
\index{Borel measure}
\end{defn}

\paragraph{Radon measures}
\index{Radon!measure}

Using Lebesgue integration, a Borel measure can be used to compute the integral of measurable functions (i.e. such that level sets $\enscond{x}{f(x) < t}$ are Borel sets), and we denote this pairing as
\index{Borel set}
\index{measurable function}
\index{Borel measure}
\eq{
	\dotp{f}{\al} \eqdef \int f(x) \d\al(x).
}
Integration of such a measurable $f$ against a discrete measure $\al$ computes a sum
\eq{
	\int_\X f(x) \d\al(x) = \sum_{i=1}^n \a_i f(x_i).
}

This applies in particular to continuous test functions, which are Borel measurable.
Integration against a finite measure on a compact space thus defines a continuous linear form $f \mapsto \int f \d\al$ on the Banach space of continuous functions $(\Cc(\Xx),\norm{\cdot}_\infty)$, indeed $|\int f \d\al| \leq \norm{f}_\infty |\al|(\Xx)$. On compact spaces, the converse is true: every continuous linear form on $\Cc(\Xx)$ is represented by integration against a finite signed Radon measure. This is the Riesz--Markov--Kakutani representation theorem~\cite{rudin1987realcomplex,bogachev2007measure}. It identifies $\Mm(\Xx)$ with the Banach dual of $\Cc(\Xx)$, and this duality pairing $\dotp{f}{\al}$ will be crucial for the convex duality arguments used later.
\index{Banach space}
\index{Banach dual}
\index{linear form}
\index{finite measure}
\index{Riesz-Markov-Kakutani theorem}
\index{dual!pairing}
\index{Radon!measure}

\paragraph{Relative densities.}
\index{density!ratio}

Many formulas below compare measures through densities with respect to a reference measure.

\begin{defn}[Relative density]\label{def-relative-density}
\index{Lebesgue measure}
\index{reference!measure}
\index{density!relative}
	If $\al$ is absolutely continuous with respect to a reference measure $\lambda$, its relative density is the Radon--Nikodym derivative
	\[
		\density{\al}\eqdef\frac{\d\al}{\d\lambda},
		\qquad
		\d\al(x)=\density{\al}(x)\d\lambda(x).
	\]
	Equivalently, for all $h\in\Cc(\Xx)$,
	\[
		\int_{\Xx} h(x) \d\al(x) = \int_{\Xx} h(x) \density{\al}(x) \d\lambda(x).
	\]
\end{defn}
On $\RR^d$ the reference $\lambda$ is often Lebesgue measure $\d x$.

\paragraph{Total variation norm.}
\index{total variation}

The norm inherited from the duality $\Mm(\Xx)=\Cc(\Xx)^*$ is the total variation norm. We use the notation
\index{total variation}
\begin{defn}[Total variation]\label{defn-total-variation}
	For a finite signed Radon measure $\al$ on a compact space $\Xx$,
\index{Radon!measure}
	\[
		\norm{\al}_{\TV}
		\eqdef
		\usup{f \in \Cc(\Xx)} \enscond{\dotp{f}{\al}}{\norm{f}_\infty \leq 1}.
	\]
\end{defn}
This formula is useful because it computes the norm of a measure as the operator norm of the corresponding linear form on $\Cc(\Xx)$.
\index{linear form}
\index{operator norm}

The same norm also has a direct measure-theoretic expression. The absolute value of a signed measure is
\index{signed!measure}
\[
	|\al|(A)
	\eqdef
	\usup{A=\cup_i B_i} \sum_i |\al(B_i)|,
\]
where the supremum is over finite or countable measurable partitions of $A$. Thus, if $\al=\sum_i \a_i \de_{x_i}$ with distinct atoms, $|\al|=\sum_i |\a_i|\de_{x_i}$; if $\d\al(x)=\rho(x)\d\lambda(x)$, then $\d|\al|(x)=|\rho(x)|\d\lambda(x)$.

\begin{prop}[Dual and measure definitions of total variation]\label{prop-tv-dual-measure}
\index{total variation}
	For a finite signed Radon measure $\al$ on a compact space $\Xx$,
\index{Radon!measure}
	\[
		\norm{\al}_{\TV}=|\al|(\Xx).
	\]
	Consequently $(\Mm(\Xx),\norm{\cdot}_{\TV})$ is isometrically the Banach dual of $(\Cc(\Xx),\norm{\cdot}_\infty)$.
\index{Banach dual}
\end{prop}
\begin{proof}
	The inequality $\norm{\al}_{\TV}\leq |\al|(\Xx)$ follows from
	\[
		\abs{\int f\,\d\al}\leq \int |f|\,\d|\al|\leq \norm{f}_\infty |\al|(\Xx).
	\]
	For the reverse inequality, write the Jordan decomposition $\al=\al^+-\al^-$, so that $|\al|=\al^++\al^-$. The measurable sign $s=\frac{\d\al}{\d|\al|}$ takes values in $\{-1,1\}$ outside a $|\al|$-null set and satisfies $\d\al=s\,\d|\al|$. By regularity of Radon measures on compact spaces, $s$ can be approximated in $L^1(|\al|)$ by continuous functions $f_k$ with $\norm{f_k}_\infty\leq1$. Hence $\int f_k\,\d\al\to\int s\,\d\al=|\al|(\Xx)$, which proves the equality. The final statement is the Riesz--Markov--Kakutani representation theorem with this norm.
\index{Riesz-Markov-Kakutani theorem}
\index{Jordan decomposition}
\index{Radon!measure}
\end{proof}

For two absolutely continuous measures $\d\al=\rho_\al\d\lambda$ and $\d\be=\rho_\be\d\lambda$, this gives the concrete formula
\[
	\norm{\al-\be}_{\TV}
	=
	\int_\Xx |\rho_\al(x)-\rho_\be(x)|\,\d\lambda(x).
\]
For two discrete measures written on the same union of supports, $\al=\sum_k a_k\delta_{z_k}$ and $\be=\sum_k b_k\delta_{z_k}$, with missing coefficients set to zero,
\[
	\norm{\al-\be}_{\TV}
	=
	\sum_k |a_k-b_k|.
\]
\paragraph{Probabilistic interpretation.}
\index{probabilistic coupling}

Radon probability measures represent the laws of random variables. A random variable with values in $\X$ is a measurable map $X:\Omega\to\X$ from an abstract probability space $(\Omega,\PP)$. Its law is the Radon probability measure $\al$ defined by
\index{random variable}
\index{probability measure}
\[
	\al(A)=\PP(\enscond{\omega\in\Omega}{X(\omega)\in A})
	\qquad\text{for Borel sets } A\subset \X .
\index{Borel set}
\]
Integrals with respect to this law are expectations:
\[
	\int_\X f(x)\,\d\al(x)=\mathbb{E}[f(X)].
\]

\section{Push Forward}
\index{push-forward}
\label{sec-push-forward}

Push-forwards encode how maps move mass. This short section is the bridge between deterministic maps and linear operations on measures.
\index{push-forward}

For some continuous map $\T : \X \rightarrow \Y$, we define the pushforward operator $\T_\sharp : \Mm(\X) \rightarrow \Mm(\Y)$.
For a Dirac mass, one has $\T_\sharp \de_{x} = \de_{\T(x)}$, and this formula is extended to arbitrary measures by linearity. In some sense, moving from $\T$ to $\T_\sharp$ is a way to linearize any map at the price of moving from a (possibly) finite-dimensional space $\Xx$ to the infinite-dimensional space $\Mm(\Xx)$, and this idea is central to many convex relaxation methods, most notably Lasserre's relaxation.
\index{Dirac mass}
For discrete measures~\eqref{eq-discr-meas}, the pushforward operation consists simply in moving the positions of all the points in the support of the measure
\index{support}
\eq{
	\T_{\sharp} \al \eqdef \sum_i \a_i \de_{\T(x_i)}.
}
For more general measures, for instance for those with a density, the notion of push-forward plays a fundamental role in describing spatial modifications of probability measures. The formal definition reads as follows.
\index{probability measure}
\index{push-forward}

\begin{defn}[Push-forward]\label{defn-pushfwd}
For $\T : \X \rightarrow \Y$, the push forward measure $\be = \T_\sharp \al \in \Mm(\Y)$ of some $\al \in \Mm(\X)$ satisfies
\eql{\label{eq-push-fwd}
	\foralls h \in \Cc(\Y), \quad \int_\Y h(y) \d \be(y) = \int_\X h(\T(x)) \d\al(x).
}
Equivalently, for any measurable set $B \subset \Y$, one has
\eql{\label{eq-equiv-pushfwd}
	\be(B) = \al( \enscond{x \in \X}{\T(x) \in B} ).
}
Note that $\T_\sharp$ preserves positivity and total mass, so that if $\al \in \Mm_+^1(\X)$ then $\T_\sharp \al \in \Mm_+^1(\Y)$.
\end{defn}

\begin{rem}[Pullback and push-forward]\label{rem-pullback-pushforward}\label{rem-push-forward-pull-back}
\index{pullback}
\index{push-forward}
	If $\T:\X\to\Y$ is continuous, the pullback by $\T$ is the linear operator
	\[
		\T^\sharp:\Cc(\Y)\to\Cc(\X),
		\qquad
		\T^\sharp g=g\circ\T .
	\]
	The definition of the push-forward is exactly the dual relation between this pullback on functions and the action of $\T_\sharp$ on measures:
\index{pullback}
\index{push-forward}
	\[
		\int_\X \T^\sharp g(x)\d\mu(x)
		=
		\int_\Y g(y)\d(\T_\sharp\mu)(y).
	\]
	In pairing notation,
	\[
		\left\langle \T^\sharp g,\mu\right\rangle_{\Cc(\X),\Mm(\X)}
		=
		\left\langle g,\T_\sharp\mu\right\rangle_{\Cc(\Y),\Mm(\Y)}.
	\]
	Thus push-forward is the adjoint operation to pullback, with the direction reversed. The two arrows should not be confused: $\T_\sharp$ transports mass from $\X$ to $\Y$, whereas $\T^\sharp$ transports test functions from $\Y$ back to $\X$.
\index{push-forward}
\end{rem}

\begin{prop}[Push-forward formula for densities]\label{prop-push-forward-densities}
\index{push-forward}
\index{density!formula}
	Let $\alpha$ and $\beta$ have densities $\density{\al}$ and $\density{\be}$ on open subsets of $\RR^\dims$. Assume that $\T$ is a $C^1$ diffeomorphism and that $\beta=\T_\sharp\alpha$. Then, for all $x$,
	\eql{\label{eq-pfwd-density}
			\density{\al}(x) = |\det(\T'(x))| \density{\be}(\T(x)).
	}
	Equivalently, for $y=\T(x)$,
	\[
		\rho_\beta(y)=\rho_\alpha(\T^{-1}y)\,
		\frac{1}{|\det(\T'(\T^{-1}y))|}.
	\]
\end{prop}

\begin{proof}
Explicitly doing the change of variable $y=\T(x)$, so that $\d y = |\det(\T'(x))| \d x$ in formula~\eqref{eq-push-fwd} for measures with densities $(\density{\al},\density{\be})$ on $\RR^\dims$, one has for all $h \in \Cc(\Yy)$
\begin{align*}
	\int_\Yy h(y)\rho_\be(y) \d y &= \int_\Yy h(y) \d \be(y) = \int_\Xx h(\T(x)) \d \al(x) = \int_\Xx h(\T(x)) \rho_\al(x) \d x \\
		&= \int_\Yy h(y) \rho_\al(\T^{-1}y) \frac{\d y}{|\det(\T'(\T^{-1} y))|},
\end{align*}
which shows that
\eq{
	\rho_\be(y) = \rho_\al(\T^{-1}y) \frac{1}{|\det(\T'(\T^{-1} y))|}.
}
Since $\T$ is a diffeomorphism, one obtains equivalently
\eq{
		\density{\al}(x) = |\det(\T'(x))| \density{\be}(\T(x))
}
where $\T'(x) \in \RR^{\dims \times \dims}$ is the Jacobian matrix of $\T$ (the matrix formed by taking the gradient of each coordinate of $\T$).
This implies, denoting $y=\T(x)$
\eq{
	|\det(\T'(x))| = \frac{ \density{\al}(x) }{ \density{\be}(y) }.
}
\end{proof}

\begin{rem}[Probabilistic interpretation]
\index{probabilistic coupling}
The law, i.e. probability distribution, of a random variable $X$ is the push-forward of $\PP$ by $X$, namely $\al=X_\sharp\PP$.
\index{random variable}
\index{push-forward}
Applying another push-forward $\be = \T_\sharp\al$ for $\T : \X \rightarrow \Y$, following~\eqref{eq-push-fwd}, is equivalent to defining another random variable $Y=\T(X)$, namely $\omega \in \Om \mapsto \T(X(\omega)) \in \Y$. Thus $\be$ is the law of $Y$.
Drawing a random sample $y$ from $Y$ is thus simply achieved by computing $y=\T(x)$ where $x$ is drawn from $X$.
\end{rem}

\section{Monge's Formulation}
\index{Monge!problem}
\label{sec-monge-formulation}
\label{sec-continuous-monge}

Monge's problem asks for a deterministic map transporting one law onto another while minimizing a prescribed cost. This is the original formulation introduced by Monge in his memoir on the ``d\'eblai et remblai'' problem~\cite{Monge1781}. It is geometrically direct, because every source point is assigned one destination, but it is also analytically fragile: the feasible set is non-convex, it can be empty, and a map cannot split mass. These limitations are precisely what motivate Kantorovich's relaxation in the next section.
\index{Kantorovich!relaxation}
\index{Monge!problem}

\paragraph{Monge problem.}

Given $\al\in\Mm_+^1(\Xx)$, $\be\in\Mm_+^1(\Yy)$ and a cost $c:\Xx\times\Yy\to\RR_+$, the Monge problem is
\index{Monge!problem}
\eql{\label{eq-monge-continuous}
	\Monge_c(\al,\be)
	\eqdef
	\inf_{\T:\Xx\to\Yy}
	\enscond{\int_\Xx c(x,\T(x))\d\al(x)}{\T_\sharp\al=\be}.
}
The constraint $\T_\sharp\al=\be$ means that $\T$ pushes the mass of $\al$ onto $\be$ in the sense of Definition~\ref{defn-pushfwd}.

\begin{prop}[Empirical Monge maps and matchings]\label{prop-empirical-monge-matching}
\index{Monge!problem}
\index{empirical!Monge map}
	Assume that the source atoms $x_1,\ldots,x_n$ are distinct and that
	\[
		\al=\frac1n\sum_{i=1}^n\de_{x_i},
		\qquad
		\be=\frac1n\sum_{j=1}^n\de_{y_j}.
	\]
	If $\T_\sharp\al=\be$, then for each distinct target value $z$ in the support of $\be$, exactly $n\be(\{z\})$ source atoms are mapped to $z$. In particular, if the $y_j$ are distinct, then there exists a permutation $\sigma\in\Perm(n)$ such that $\T(x_i)=y_{\sigma(i)}$ for all $i$. Conversely, every such assignment of source atoms to target atoms with the correct masses defines a feasible Monge map on the support of $\al$, and in the distinct-target case
	\[
		\int_\Xx c(x,\T(x))\d\al(x)=\frac1n\sum_{i=1}^n c(x_i,y_{\sigma(i)}).
	\]
	If source locations are repeated, they should first be merged into atoms with larger masses; such atoms cannot be split by a Monge map.
\end{prop}
\begin{proof}
	Since $\T_\sharp\al=\be$, all images $\T(x_i)$ must belong to the support of $\be$; otherwise the push-forward would give positive mass to a point outside that support. For any target atom $z$,
\index{push-forward}
	\[
		\be(\{z\})=\al(\T^{-1}(\{z\}))=\frac1n\#\enscond{i}{\T(x_i)=z}.
	\]
	This proves the counting statement. If all target atoms have mass $1/n$, each target receives exactly one source atom, which is a permutation. The converse and the cost identity follow by direct substitution.
\end{proof}

\begin{prop}[Existence of transport maps from atomless sources]\label{prop-existence-transport-map-atomless}
\index{transport map}
\index{measure!atomless}
	Let $\al$ and $\be$ be Borel probability measures on Polish spaces, and assume that $\al$ is atomless. Then there exists a measurable map $\T$ such that $\T_\sharp\al=\be$.
\index{probability measure}
\index{Polish space}
\end{prop}
\begin{proof}
	A standard measure-isomorphism theorem identifies the atomless probability space generated by $\al$ with Lebesgue measure on $[0,1]$, modulo null sets~\cite{bogachev2007measure}. It is therefore enough to construct a map from $[0,1]$ to the target law $\be$. Choose a Borel isomorphism $i$ from the support of $\be$ onto a Borel subset of $[0,1]$, set $\nu=i_\sharp\be$, and use the generalized inverse of the cumulative distribution function of $\nu$. This map sends Lebesgue measure on $[0,1]$ to $\nu$ and takes values in $i(\supp\be)$ almost surely. Composing with $i^{-1}$ and then with the source isomorphism gives a measurable transport map from $\al$ to $\be$.
\index{Lebesgue measure}
\index{support}
\index{transport map}
\index{cumulative!distribution function}
\index{generalized!inverse}
\index{measure!isomorphism}
\end{proof}

\begin{rem}[Feasibility versus optimality]
	An optimal map solving~\eqref{eq-monge-continuous} might fail to exist for two distinct reasons. First, the constraint set can be empty, for instance if $\al=\de_x$ and $\be$ is not a single Dirac mass. Proposition~\ref{prop-existence-transport-map-atomless} shows that non-atomicity of the source removes this feasibility obstruction. Second, even when feasible maps exist, the infimum may fail to be attained because the class of maps is not closed under weak limits.
\index{weak!limit}
\index{Dirac mass}
\end{rem}

\begin{example}[A splitting obstruction]\label{ex-splitting-obstruction}
\index{splitting obstruction}
	A classical example, discussed for instance in~\cite{SantambrogioBook}, takes $\al$ uniform on a vertical segment and $\be$ equal to the average of the uniform measures on two parallel vertical segments placed symmetrically to the left and to the right. For the quadratic cost, the relaxed Kantorovich optimizer splits each source point into its two symmetric targets. A deterministic Monge map cannot split one point into two destinations, so minimizing sequences must oscillate between the two sides and the Monge problem has no optimizer.
\index{cost!quadratic}
\index{Monge!problem}
\end{example}

\begin{example}[Semi-discrete Monge maps]
\index{semi-discrete!Monge map}
	The Monge formulation is not symmetric in $\al$ and $\be$. It makes sense, for instance, when $\al$ has a density with respect to Lebesgue measure and $\be$ is discrete. On $\Xx=\Yy=\RR^d$, let $\be=\sum_j b_j\delta_{y_j}$ be supported on $\{y_1,\ldots,y_m\}$. A map $\T$ such that $\T_\sharp\al=\be$ defines a segmentation of the space into cells
\index{Lebesgue measure}
	\[
		C_j\eqdef \T^{-1}(y_j),
		\qquad
		\al(C_j)=b_j.
	\]
	This is the semi-discrete setting. Chapter~\ref{sec-semidiscr-w1} explains how the cells become Laguerre cells for prescribed masses and ordinary Voronoi cells when the masses are free. If one exchanges the roles of $\al$ and $\be$ so that $\al$ is discrete, then no valid $\T$ exists in general: it is not possible to push forward a discrete measure to a measure with density.
\index{Laguerre cell}
\index{semi-discrete!OT}
\index{push-forward}
\index{Voronoi cell}
\end{example}

Figure~\ref{fig:monge-color-transfer-rgb} shows a finite-dimensional instance of this deterministic viewpoint. The source and target measures are empirical color clouds in RGB space, and the map transports colors while leaving pixel positions fixed. Grayscale equalization is one-dimensional, but transferring a full color palette requires transporting empirical measures in a three-dimensional color space, for instance RGB or Lab. Early color-transfer methods used affine statistics or iterated one-dimensional projections~\cite{reinhard2001color,pitie2005n}; replacing these projections by a genuine three-dimensional OT map gives a more intrinsic way to match palettes while preserving their geometry~\cite{rabin-ssvm-11}.
\index{empirical!measure}
\index{one-dimensional!projection}

\begin{figure}[H]
\centering
\begin{tabular}{@{}ccccc@{}}
\small source & \small $t=1/3$ & \small $t=2/3$ & \small transported & \small target \\[-.15em]
\includegraphics[width=.17\linewidth]{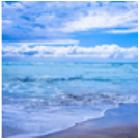} &
\includegraphics[width=.17\linewidth]{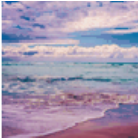} &
\includegraphics[width=.17\linewidth]{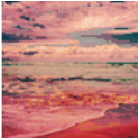} &
\includegraphics[width=.17\linewidth]{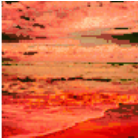} &
\includegraphics[width=.17\linewidth]{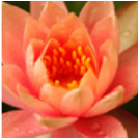} \\[-.1em]
\includegraphics[width=.17\linewidth]{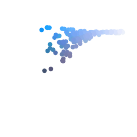} &
\includegraphics[width=.17\linewidth]{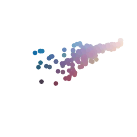} &
\includegraphics[width=.17\linewidth]{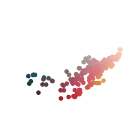} &
\includegraphics[width=.17\linewidth]{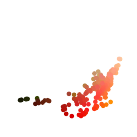} &
\includegraphics[width=.17\linewidth]{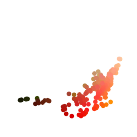}
\end{tabular}
\caption{Color transfer as a Monge map in RGB space, from a beach photograph to a flower photograph. The top row applies the palette map to the source image; the bottom row shows the empirical color clouds in the RGB cube. Only colors are transported here, not pixel locations.}
\label{fig:monge-color-transfer-rgb}
\end{figure}

\paragraph{Monge distance.}
\index{Monge!distance}

When $\Xx=\Yy$ and $c(x,y)=d(x,y)^p$ for a metric $d$, set
\[
	\Ee_\al(\T)
	\eqdef
	\int_\Xx d(x,\T(x))^p\d\al(x).
\]
The Monge value defines the directed quantity
\eql{\label{eq-monge-distance}
\index{Monge!distance}
	\tilde\Wass_p(\al,\be)^p
	\eqdef
	\inf_{\T:\Xx\to\Xx}
	\enscond{\Ee_\al(\T)}{\T_\sharp\al=\be}.
}
If the constraint set is empty, then $\tilde\Wass_p(\al,\be)=+\infty$.

\begin{prop}[Directed Monge distance]\label{prop-directed-monge-distance}
\index{Monge!distance}
	Assume that $\Xx=\Yy$ is a metric space. The quantity $\tilde\Wass_p$ is nonnegative, vanishes only on the diagonal, and satisfies the triangle inequality. It is therefore a directed extended distance: it need not be symmetric and may take the value $+\infty$.
\index{triangle inequality}
\end{prop}
\begin{proof}
	Nonnegativity is immediate. If $\tilde\Wass_p(\al,\be)=0$, choose feasible maps $\T_k$ with $\int d(x,\T_k(x))^p\d\al(x)\to0$. For every bounded $1$-Lipschitz function $h$,
\index{Lipschitz!function}
	\[
		\left|\int h\d\be-\int h\d\al\right|
		=
		\left|\int h(\T_k(x))-h(x)\d\al(x)\right|
		\leq
		\left(\int d(x,\T_k(x))^p\d\al(x)\right)^{1/p}
		\longrightarrow0.
	\]
	Since bounded Lipschitz functions separate probability measures on metric spaces, $\al=\be$.
\index{probability measure}
\index{Lipschitz!function}

	We prove the triangle inequality. If $\tilde\Wass_p(\al,\be)=+\infty$ while both $\tilde\Wass_p(\al,\ga)$ and $\tilde\Wass_p(\ga,\be)$ were finite, there would be maps $S_\sharp\al=\ga$ and $T_\sharp\ga=\be$, hence $(T\circ S)_\sharp\al=\be$, a contradiction. Thus the inequality is automatic whenever the left-hand side is infinite. In the finite case, fix $\epsilon>0$ and choose $\epsilon$-minimizers $S_\sharp\al=\ga$ and $T_\sharp\ga=\be$ such that
\index{triangle inequality}
	\[
		\Ee_\al(S)^{1/p}\leq\tilde\Wass_p(\al,\ga)+\epsilon,
		\qquad
		\Ee_\ga(T)^{1/p}\leq\tilde\Wass_p(\ga,\be)+\epsilon.
	\]
	The composed map is feasible from $\al$ to $\be$, and Minkowski's inequality gives
\index{Minkowski inequality}
	\[
	\begin{aligned}
		\tilde\Wass_p(\al,\be)
		&\leq
		\left(\int d(x,T(S(x)))^p\d\al(x)\right)^{1/p} \\
		&\leq
		\left(\int d(x,S(x))^p\d\al(x)\right)^{1/p}
		+
		\left(\int d(S(x),T(S(x)))^p\d\al(x)\right)^{1/p} \\
		&=
		\Ee_\al(S)^{1/p}+\Ee_\ga(T)^{1/p}
		\leq
		\tilde\Wass_p(\al,\ga)+\tilde\Wass_p(\ga,\be)+2\epsilon.
	\end{aligned}
	\]
	Letting $\epsilon\to0$ gives the result.
\end{proof}

The directed value $\tilde\Wass_p$ is useful conceptually, but it is too rigid to be the main distance between measures: it can be infinite and asymmetric. The Kantorovich formulation remedies both issues by replacing maps with couplings.

\section{Existence and Uniqueness of the Monge Map}
\index{Monge!problem}
\label{sec-monge-existence-uniqueness}

This section records the main regimes where Monge's deterministic formulation becomes well posed. Brenier's theorem is the central result: for the squared Euclidean cost, absolute continuity of the source restores existence, uniqueness and convex-potential structure.
\index{Brenier!theorem}
\index{absolute continuity}
\index{convex!potential}

\paragraph{Brenier's theorem.}

Brenier's theorem~\cite{MR923203,Brenier91} ensures that, in $\RR^d$ for the quadratic cost, absolute continuity of the source is enough for Monge's problem to have a unique solution. It also gives the decisive structural description of this solution: the optimal map is not an arbitrary map, but the gradient of a convex potential.
\index{Brenier!theorem}
\index{absolute continuity}
\index{convex!potential}
\index{cost!quadratic}
\index{Monge!problem}

\begin{thm}[Brenier]\label{thm-brenier}
	Let $\al,\be\in\Mm_+^1(\RR^d)$ have finite second moments, and assume that $\al$ is absolutely continuous with respect to Lebesgue measure. For the quadratic cost $c(x,y)=\norm{x-y}^2$, there exists a convex function $\phi:\RR^d\to\RR\cup\{+\infty\}$ such that
\index{Lebesgue measure}
\index{convex!function}
	\[
		\T=\nabla\phi,
		\qquad
		\T_\sharp\al=\be,
	\]
	and $\T$ is the unique optimal Monge map $\al$-almost everywhere. The optimal Kantorovich plan is $(\Id,\T)_\sharp\al$.
\index{Monge!problem}
\end{thm}
\begin{proof}
	The proof uses the Kantorovich relaxation and duality developed later in Chapter~\ref{sec-dual}. Kantorovich duality for the quadratic cost gives optimal potentials $(f,g)$ with equality $f(x)+g(y)=\norm{x-y}^2$ on the support of any optimal plan. After subtracting the harmless quadratic terms, this equality is equivalent to the Fenchel equality $\phi(x)+\phi^*(y)=\dotp{x}{y}$ for a convex function $\phi$. Hence the support of every optimal plan lies in the graph of the subdifferential $\partial\phi$. Since $\al$ has a density and convex functions are differentiable Lebesgue-almost everywhere, $\partial\phi(x)$ is a singleton for $\al$-almost every $x$. The plan is therefore concentrated on the graph of $\T=\nabla\phi$, which proves that the relaxed optimizer is induced by a Monge map. Any two optimal plans are concentrated on the same single-valued graph $\al$-almost everywhere, which gives uniqueness of the map. This is the standard route behind Brenier's polar factorization theorem; related existence and uniqueness results for more general strictly convex costs are developed for instance in~\cite{gangbo1996geometry,caffarelli2002constructing,Villani09}.
\index{support}
\index{subdifferential}
\index{optimal plan}
\index{Kantorovich!relaxation}
\index{Kantorovich!duality}
\index{polar factorization}
\index{cost!quadratic}
\end{proof}

\begin{defn}[Measures not charging hypersurfaces]\label{defn-not-charging-hypersurfaces}
\index{hypersurface}
	A Borel measure $\al$ on $\RR^d$ does not charge hypersurfaces if $\al(S)=0$ for every countably $(d-1)$-rectifiable set $S$, i.e. every set that can be covered, up to an $\mathcal H^{d-1}$-null set, by countably many $C^1$ hypersurfaces.
\index{rectifiable set}
\index{Hausdorff measure}
\index{Borel measure}
\end{defn}

\begin{rem}[A sharp source hypothesis]
\index{Brenier!source hypothesis}
	The absolute-continuity assumption in Brenier's theorem can be weakened: for the quadratic cost, it is enough that $\al$ does not charge hypersurfaces~\cite{gangbo1996geometry,Villani09,SantambrogioBook}. The reason is that the set where a convex potential has a genuinely multi-valued subdifferential is contained in a countable union of lower-dimensional pieces. This condition is close to sharp. If the source gives positive mass to a segment or a hypersurface, the subdifferential may be multi-valued on a set of positive $\al$-mass, and the optimal relaxed plan may need to split mass, as in Example~\ref{ex-splitting-obstruction}.
\index{multi-valued subdifferential}
\index{lower-dimensional set}
\index{hypersurface}
\index{subdifferential}
\index{Brenier!theorem}
\index{splitting obstruction}
\index{absolute continuity}
\index{convex!potential}
\index{cost!quadratic}
\end{rem}

\begin{rem}[Beyond the quadratic Euclidean cost]
\index{cost!quadratic}
	Brenier's theorem is the cleanest statement because the squared Euclidean cost turns optimal maps into gradients of convex functions. For costs $c(x,y)=\norm{x-y}^p$ with $p>1$, or more generally costs $c(x,y)=h(x-y)$ with $h$ smooth and strictly convex, the same strategy gives a unique optimal map under absolute continuity of the source, but the map is written in terms of a $c$-convex potential. On a Riemannian manifold, the local analogue for the cost $d_M(x,y)^2/2$ uses the exponential map
\index{convex!function}
\index{Brenier!theorem}
\index{absolute continuity}
\index{convex!potential}
	\[
		\T(x)=\exp_x(-\nabla\phi(x)),
	\]
	where $\phi$ is $c$-convex. The main additional issues are the cut locus and regularity of geodesics, which is why the Euclidean statement is usually presented first. These extensions are part of McCann's displacement convexity theory and the general theory of optimal maps on manifolds~\cite{mccann1997convexity,Villani09}.
\index{displacement!convexity}
\end{rem}

Brenier's theorem should be understood through the analogy between convex gradients and increasing functions. The gradient of a convex function is a monotone field:
\index{convex!function}
\index{Brenier!theorem}
\[
	\dotp{\nabla\phi(x)-\nabla\phi(x')}{x-x'}\geq0.
\]

\begin{rem}[Monotone fields need not be gradients]
\index{monotone!field}
\index{gradient!field}
	In dimensions larger than one, not all monotone fields are gradients of convex functions. Consider in $\RR^2$ the rotation matrix
\index{convex!function}
	\[
		R_\theta=\begin{pmatrix}
		\cos\theta & -\sin\theta\\
		\sin\theta & \cos\theta
		\end{pmatrix}.
	\]
	The linear map $x\mapsto R_\theta x$ is monotone as soon as $|\theta|\leq\pi/2$, because
	\[
		\dotp{R_\theta x-R_\theta x'}{x-x'}
		=
		\dotp{R_\theta(x-x')}{x-x'}
		=
		\cos(\theta)\norm{x-x'}^2\geq0.
	\]
	However, for $\theta\neq0$, $R_\theta$ is not symmetric and therefore cannot be the gradient of a scalar potential. Indeed, if a linear field $Ax$ equals $\nabla\phi(x)$, then its Jacobian $A$ must be symmetric; equivalently, a quadratic potential $\phi(x)=\dotp{Bx}{x}/2$ has gradient $((B+B^\top)/2)x$. Thus monotonicity is weaker than Brenier optimality in dimension $d\geq2$.
\index{quadratic!potential}
\end{rem}

\paragraph{Polar factorization.}
\index{polar factorization}

Brenier's theorem does more than solve one transport problem: it provides a canonical way to extract the ``monotone part'' of an arbitrary map. Suppose one starts from a square-integrable deformation $u:\Omega\to\RR^d$, for instance a velocity snapshot, an image deformation or a generative map. The law $\nu=u_\sharp\lambda$ records where the mass ends up, but it forgets how the points of $\Omega$ were labelled before being sent there. Brenier's polar factorization~\cite{MR923203,Brenier91} separates these two effects. The map first applies a measure-preserving rearrangement $s$ of the source, which changes labels but not mass, and then applies the unique convex-gradient map $\nabla\phi$ sending the uniform source to the output law. Thus the Brenier factor is the canonical optimal-transport representative among all maps with the same image distribution. This is useful because it isolates the part of a map that carries genuine Wasserstein displacement from the volume-preserving noise of a parametrization. It is also a bridge to fluid mechanics, where measure-preserving maps describe incompressible relabellings, and to data analysis, where one often wants to compare maps modulo such relabellings.
\index{measure-preserving!map}
\index{Brenier!theorem}
\index{Brenier!factor}

\begin{prop}[Polar factorization]\label{prop-polar-factorization}
\index{polar factorization}
	Let $\Omega\subset\RR^d$ be endowed with normalized Lebesgue measure $\lambda$, and let $u\in L^2(\Omega;\RR^d)$. Assume that the law $\nu=u_\sharp\lambda$ has finite second moment. Then there exist a measure-preserving map $s:\Omega\to\Omega$ and a convex function $\phi$ such that
\index{Lebesgue measure}
\index{convex!function}
	\[
		u=\nabla\phi\circ s
		\qquad \lambda\text{-a.e.}
	\]
	The Brenier factor $\nabla\phi$ is unique $\lambda$-almost everywhere.
\index{Brenier!factor}
\end{prop}
\begin{proof}
	By Brenier's theorem there is a unique gradient of a convex function $\T=\nabla\phi$ transporting $\lambda$ to $\nu$. The maps $u$ and $\T$ have the same image law. The rearrangement theorem for non-atomic probability spaces gives a measure-preserving map $s$ such that $u=\T\circ s$; equivalently, $s$ chooses, with the correct conditional probabilities, preimages of $u(x)$ under $\T$. Uniqueness of the Brenier factor follows from Theorem~\ref{thm-brenier}.
\index{non-atomic probability space}
\index{conditional probability}
\index{measure-preserving!map}
\index{Brenier!theorem}
\end{proof}

The name is meant to echo the usual polar decomposition of matrices. This analogy becomes literal for linear maps under the Gaussian reference measure. If $X\sim\Gaussian(0,\Id)$ and $u(x)=Ax$, then $u_\sharp\Gaussian(0,\Id)=\Gaussian(0,AA^\top)$. The Brenier map from $\Gaussian(0,\Id)$ to this Gaussian is $x\mapsto Sx$, where $S=(AA^\top)^{1/2}$ is symmetric positive semidefinite. Hence the decomposition $u=\nabla\phi\circ s$ reads
\index{matrix!polar decomposition}
\index{reference!measure}
\index{Brenier!map}
\[
	A = S O,
	\qquad
	O=S^\dagger A,
\]
with $O$ orthogonal when $A$ is invertible, or a partial isometry in the singular case. The factor $Sx$ is the convex-gradient transport part, while $Ox$ preserves the standard Gaussian law. This finite-dimensional picture is often the fastest way to remember the general theorem: polar factorization is matrix polar decomposition with matrices replaced by maps and orthogonal transformations replaced by measure-preserving rearrangements.
\index{matrix!polar decomposition}
\index{polar factorization}

\paragraph{Displacement interpolation.}
\index{displacement!interpolation}
\label{sec-monge-interpolation}

An optimal map does not only match two endpoint measures; it also tells how to draw a path between them. The construction is Lagrangian and geometric: each particle keeps its identity and travels at constant speed from its initial position to its image under the transport map.
\index{transport map}

\begin{defn}[Monge and McCann displacement interpolation]\label{def-monge-mccann-interpolation}
\index{Monge!interpolation}
\index{displacement!interpolation}
	If $\T_\sharp\al=\be$, the Monge interpolation generated by $\T$ is the curve
	\[
		\T_t(x)\eqdef(1-t)x+t\T(x),
		\qquad
		\al_t\eqdef(\T_t)_\sharp\al,
		\qquad t\in[0,1].
	\]
	For the quadratic cost, when $\T$ is the optimal Brenier map, this curve is called McCann's displacement interpolation.
\end{defn}
McCann's displacement convexity theory~\cite{mccann1997convexity} clarifies the geometric meaning of interpolating along Brenier maps; this is the language of Wasserstein geodesics used later for barycenters in Section~\ref{sec-barycenters} and for gradient flows in Chapter~\ref{sec-wasserstein-gradient-flows}. If no Monge map exists, or if the optimal object splits mass, the same straight-line idea is applied to each active pair of a coupling, as explained in the Kantorovich interpolation of Section~\ref{sec-kantorovich-plan-interpolation}. The following proposition makes the constant-speed property precise for the directed Monge value.
\index{Brenier!map}
\index{Monge!problem}
\index{Wasserstein!geodesic}
\index{gradient!flow}
\index{Wasserstein!gradient flow}
\index{displacement!interpolation}
\index{Kantorovich!interpolation}
\index{displacement!convexity}
\index{Monge!interpolation}
\index{cost!quadratic}

\begin{prop}[Directed Monge displacement geodesics]\label{prop-monge-displacement-geodesic}
\index{Monge!geodesic}
	Let $p\geq1$, let $\al,\be\in\Mm_+^1(\RR^d)$ have finite $p$-th moments, and let $\T$ be an optimal map for $\tilde\Wass_p(\al,\be)$ in~\eqref{eq-monge-distance}. Set $\al_t=(\T_t)_\sharp\al$ with $\T_t=(1-t)\Id+t\T$. Assume that, for every $t<1$, $\T_t$ is one-to-one on a full $\al$-measure Borel set, so that $\T_t^{-1}$ is defined $\al_t$-almost everywhere. Then, for $0\leq s\leq t\leq1$,
\index{Borel set}
\index{Monge!distance}
	\[
		\tilde\Wass_p(\al_s,\al_t)=(t-s)\tilde\Wass_p(\al,\be).
	\]
	Thus $t\mapsto\al_t$ is an oriented constant-speed geodesic for the directed Monge distance. In particular, for $p=2$, this applies to the Brenier map under the hypotheses of Theorem~\ref{thm-brenier}.
\index{constant-speed geodesic}
\index{Monge!distance}
\index{Brenier!map}
\end{prop}
\begin{proof}
	The case $s=t$ is trivial, so assume $s<t$. Since $s<1$, the inverse $\T_s^{-1}$ is defined $\al_s$-almost everywhere along the transported particles. Define
	\[
		S_{s,t}\eqdef\T_t\circ\T_s^{-1}
		\qquad \al_s\text{-a.e.}
	\]
	Then $(S_{s,t})_\sharp\al_s=\al_t$, and, using the optimality of $\T$,
	\[
	\begin{aligned}
		\tilde\Wass_p(\al_s,\al_t)^p
		&\leq
		\int \norm{S_{s,t}(z)-z}^p\d\al_s(z) \\
		&=
		\int \norm{\T_t(x)-\T_s(x)}^p\d\al(x)
		=
		(t-s)^p\tilde\Wass_p(\al,\be)^p.
	\end{aligned}
	\]
	The same particle construction gives $\tilde\Wass_p(\al,\al_s)\leq s\tilde\Wass_p(\al,\be)$. If $t<1$, the map $\T\circ\T_t^{-1}$ sends $\al_t$ to $\be$ and gives $\tilde\Wass_p(\al_t,\be)\leq(1-t)\tilde\Wass_p(\al,\be)$; if $t=1$, this latter distance is zero. Using the triangle inequality from Proposition~\ref{prop-directed-monge-distance},
\index{triangle inequality}
\index{Monge!distance}
	\[
		\tilde\Wass_p(\al,\be)
		\leq
		\tilde\Wass_p(\al,\al_s)+\tilde\Wass_p(\al_s,\al_t)+\tilde\Wass_p(\al_t,\be)
		\leq
		s\tilde\Wass_p(\al,\be)+\tilde\Wass_p(\al_s,\al_t)+(1-t)\tilde\Wass_p(\al,\be).
	\]
	Hence $\tilde\Wass_p(\al_s,\al_t)\geq(t-s)\tilde\Wass_p(\al,\be)$, which proves equality. For a Brenier map $\T=\nabla\phi$, the map $\T_t$ is the gradient of
\index{Brenier!map}
	\[
		x\mapsto (1-t)\frac{\norm{x}^2}{2}+t\phi(x),
	\]
	which is $(1-t)$-strongly convex for every $t<1$. On the full-measure set where $\phi$ is differentiable, this gives
	\[
		\dotp{\T_t(x)-\T_t(y)}{x-y}\geq(1-t)\norm{x-y}^2,
	\]
	so $\T_t$ is injective there. This gives the last claim.
\end{proof}

\begin{figure}[htbp]
\centering
\begin{tabular}{@{}ccccc@{}}
\includegraphics[width=.17\linewidth]{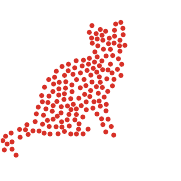} &
\index{McCann interpolation}
\includegraphics[width=.17\linewidth]{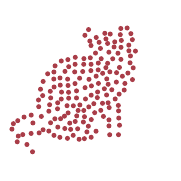} &
\includegraphics[width=.17\linewidth]{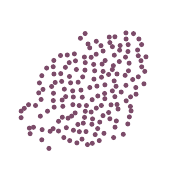} &
\includegraphics[width=.17\linewidth]{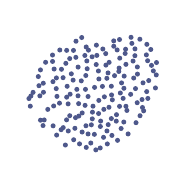} &
\includegraphics[width=.17\linewidth]{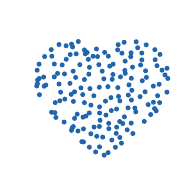} \\[-.1em]
\index{McCann interpolation}
\includegraphics[width=.17\linewidth]{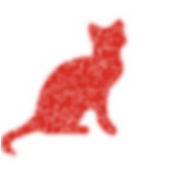} &
\includegraphics[width=.17\linewidth]{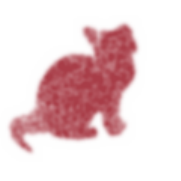} &
\includegraphics[width=.17\linewidth]{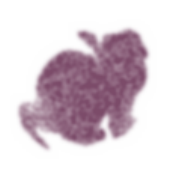} &
\includegraphics[width=.17\linewidth]{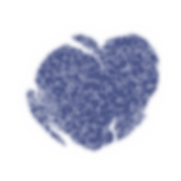} &
\index{McCann interpolation}
\includegraphics[width=.17\linewidth]{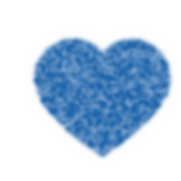} \\[-.1em]
\small $t=0$ & \small $t=1/4$ & \small $t=1/2$ & \small $t=3/4$ & \small $t=1$
\end{tabular}
\caption{McCann displacement interpolation between a cat silhouette and a heart silhouette. The first row displays a small farthest-point subset of transported particles along $T_t(x)=(1-t)x+tT(x)$. The second row renders kernel-smoothed densities from a denser transported cloud as color images: white means zero density, while high density saturates in the red-to-blue interpolation color of the corresponding time.}
\index{displacement!interpolation}
\label{fig:monge-shape-mccann-interpolation}
\end{figure}

Caffarelli's regularity theory, discussed next, explains when the convex potential is actually smooth enough to define a classical deformation.
\index{regularity theory}
\index{Caffarelli regularity}
\index{convex!potential}

\paragraph{Regularity and the Monge--Amp\`ere equation.}
\index{Monge-Ampere equation}

The previous results identify the optimal map; regularity theory asks when this map is a classical smooth deformation rather than only an almost-everywhere gradient. For quadratic costs this question becomes the regularity theory of the Monge--Amp\`ere equation.
\index{regularity theory}
\index{Monge-Ampere equation}
\index{cost!quadratic}

\begin{prop}[Caffarelli regularity]\label{prop-caffarelli-regularity}
\index{Caffarelli regularity}
	Let $\Omega,\Lambda\subset\RR^d$ be bounded uniformly convex domains with $C^2$ boundaries. Let $\al=\rho(x)\d x$ be supported on $\Omega$ and $\be=\eta(y)\d y$ be supported on $\Lambda$, with $0<m\leq\rho,\eta\leq M<+\infty$. If $\rho,\eta\in C^\alpha$ for some $\alpha\in(0,1)$, then the Brenier potential $\phi$ transporting $\al$ to $\be$ is strictly convex and satisfies $\phi\in C^{2,\alpha}_{\mathrm{loc}}(\Omega)$; in particular $\nabla\phi$ is locally H\"older. Under the corresponding boundary compatibility and smoothness assumptions, the regularity holds up to the boundary.
\end{prop}
\begin{proof}
	The potential solves the Monge--Amp\`ere equation
\index{Monge-Ampere equation}
	\[
		\det(\nabla^2\phi(x))
		=
		\frac{\rho(x)}{\eta(\nabla\phi(x))}
	\]
	in the Alexandrov sense, with second boundary condition $\nabla\phi(\Omega)=\Lambda$. The density bounds and convexity of the domains give strict convexity and localization of sections. Caffarelli's interior theory then yields $C^{2,\alpha}_{\mathrm{loc}}$ estimates for $\phi$; the boundary statement follows from the boundary regularity theory under uniform convexity and compatibility assumptions~\cite{caffarelli2003monge,Villani09}.
\index{second boundary condition}
\index{localization}
\index{regularity theory}
\index{Alexandrov solution}
\index{strict!convexity}
\end{proof}

\begin{rem}[Regularity, weak maps, and splitting]
\index{weak!map}
\index{splitting obstruction}
	Caffarelli's theorem should be read as a warning as well as a theorem. Brenier's theorem gives existence and uniqueness under mild assumptions, but smoothness requires density bounds, smoothness and convex geometry that are rarely satisfied by empirical, manifold-supported or neural generative distributions. In such applications, the exact OT map is often only weakly defined, possibly unstable, and better represented by a coupling, an entropic approximation or a learned parametric surrogate.
\index{exact OT}
\index{Brenier!theorem}

	Even without smoothness, the convex potential is locally Lipschitz on the interior of its domain, so $\nabla\phi$ is defined Lebesgue-almost everywhere. If the source measure does not satisfy the non-splitting hypotheses of Brenier's theorem, the correct relaxed object is instead an optimal Kantorovich plan concentrated on the graph of the set-valued map $\partial\phi$. At points where $\partial\phi(x)$ contains several target locations, the plan may split the mass starting from $x$. Thus the subdifferential still describes the geometry of optimality, but the transport object is a coupling rather than a single-valued map.
\index{subdifferential}
\index{convex!potential}
\end{rem}

For smooth densities, the change-of-variables formula~\eqref{eq-pfwd-density} gives the Monge--Amp\`ere equation
\index{change-of-variables}
\index{change-of-variables formula}
\index{Monge-Ampere equation}
\eql{\label{eq-monge-ampere}
	\det(\nabla^2\phi(x))\density{\be}(\nabla\phi(x))=\density{\al}(x).
}
With suitable boundary conditions, this characterizes the Brenier potential up to an additive constant among convex solutions. The convexity constraint forces $\det(\nabla^2\phi(x))\geq0$ and is necessary for this fully nonlinear elliptic equation to be well posed. Numerical schemes for this PDE connect OT with fully nonlinear elliptic solvers; see for instance~\cite{Benamou:2014jw,froese2011convergent}.

The following proposition records the infinitesimal form of this nonlinear equation. It is useful both conceptually and numerically: close to a smooth reference density, Monge--Amp\`ere transport reduces at first order to a weighted Poisson equation.
\index{weighted!Poisson equation}

\begin{prop}[Linearization of the Monge--Amp\`ere equation]\label{prop-linearized-monge-ampere}
\index{Monge-Ampere equation}
\index{linear!linearization}
	Let $\rho_\epsilon=\rho_0+\epsilon r+o(\epsilon)$ be a smooth perturbation of a positive reference density $\rho_0$ on a smooth bounded domain, with $\int r=0$. If $\T_\epsilon(x)=x+\epsilon\nabla u(x)+o(\epsilon)$ transports $\rho_0$ to $\rho_\epsilon$, then, to first order,
	\[
		-\nabla\cdot(\rho_0\nabla u)=r.
	\]
	In particular, when $\rho_0$ is constant, the linearized equation is $-\Delta u=r/\rho_0$.
\end{prop}
\begin{proof}
	The change-of-variables equation for $\T_\epsilon$ is
\index{change-of-variables}
	\[
		\rho_0(x)
		=
		\rho_\epsilon(\T_\epsilon(x))\det(\nabla \T_\epsilon(x)).
	\]
	Expanding $\rho_\epsilon(x+\epsilon\nabla u)=\rho_0(x)+\epsilon r(x)+\epsilon\dotp{\nabla\rho_0}{\nabla u}+o(\epsilon)$ and
	\[
		\det(I+\epsilon\nabla^2u)=1+\epsilon\Delta u+o(\epsilon)
	\]
	gives
	\[
		\rho_0
		=
		\rho_0+
		\epsilon\bigl(r+\dotp{\nabla\rho_0}{\nabla u}+\rho_0\Delta u\bigr)+o(\epsilon)
		=
		\rho_0+
		\epsilon\bigl(r+\nabla\cdot(\rho_0\nabla u)\bigr)+o(\epsilon).
	\]
	The first-order term must vanish.
\end{proof}

\section{One-Dimensional Transport and Quantiles}
\index{one-dimensional!transport}
\index{quantile}
\label{sec-1d-transport-quantiles}

In one dimension, optimal transport is completely explicit. The cumulative distribution function orders the mass, and the optimal coupling is obtained by matching equal quantile levels. This special case is both a computational tool and the template for several linearized constructions used later.
\index{optimal coupling}
\index{cumulative!distribution function}

\begin{defn}[Cumulative and quantile functions]\label{def-cdf-quantile}
\index{cumulative!function}
\index{quantile!function}
	For $\al\in\Mm_+^1(\RR)$, its cumulative distribution function is
	\eql{\label{eq-cumul-defn}
		\cumul{\al}(x)\eqdef\al((-\infty,x]).
	}
	Its generalized inverse, or quantile function, is
\index{generalized!inverse}
\index{quantile!function}
	\eql{\label{eq-OT-map-1d}
		\cumul{\al}^{-1}(r)
		\eqdef
		\inf\enscond{x\in\RR}{\cumul{\al}(x)\geq r},
		\qquad r\in(0,1).
	}
\end{defn}

\begin{prop}[Quantile push-forward]\label{prop-quantile-pushforward}
\index{push-forward}
\index{quantile!push-forward}
	One has $(\cumul{\al}^{-1})_\sharp\mathrm{Leb}_{[0,1]}=\al$. If $\al$ has no atoms, then $(\cumul{\al})_\sharp\al=\mathrm{Leb}_{[0,1]}$.
\end{prop}
\begin{proof}
	For simplicity, assume first that $\al$ has a strictly positive density, so that $\cumul{\al}$ is strictly increasing and continuous. Denote $\ga\eqdef(\cumul{\al}^{-1})_\sharp\mathrm{Leb}_{[0,1]}$. It is enough to prove that $\cumul{\ga}=\cumul{\al}$. For every $x$,
	\[
	\begin{aligned}
		\cumul{\ga}(x)
		&=
		\int_0^1 \mathbf 1_{(-\infty,x]}(\cumul{\al}^{-1}(z))\d z
		=
		\int_0^1 \mathbf 1_{[0,\cumul{\al}(x)]}(z)\d z
		=
		\cumul{\al}(x),
	\end{aligned}
	\]
	where we used $\cumul{\al}^{-1}(z)\leq x$ if and only if $z\leq\cumul{\al}(x)$. General measures follow from the same argument with generalized inverses and right-continuity of the cumulative distribution function. If $\al$ has no atoms, the probability integral transform gives $(\cumul{\al})_\sharp\al=\mathrm{Leb}_{[0,1]}$.
\index{cumulative!distribution function}
\index{generalized!inverse}
\end{proof}

If $\al$ has no atoms, the map
\eql{\label{eq-1d-monge-map}
	\T=\cumul{\be}^{-1}\circ\cumul{\al}
}
satisfies $\T_\sharp\al=\be$.

For the cost $c(x,y)=|x-y|^2$, this map is nondecreasing, hence the derivative of a convex function in dimension one. Brenier's theorem therefore identifies it as the optimal Monge map whenever $\al$ is atomless. With generalized inverses, the same quantile construction gives the optimal Kantorovich coupling for arbitrary measures, and it is also optimal for costs of the form $h(|x-y|)$ with $h$ convex and nondecreasing.
\index{Monge!problem}
\index{convex!function}
\index{Brenier!theorem}
\index{generalized!inverse}

\begin{prop}[Monotone rearrangement on the line]\label{prop-1d-quantile-map}
\index{monotone!rearrangement}
\index{quantile!map}
	Let $\al,\be\in\Mm_+^1(\RR)$ have finite $p$-th moments, with $p\geq1$. The coupling
	\[
		\pi^\star=(\cumul{\al}^{-1},\cumul{\be}^{-1})_\sharp\mathrm{Leb}_{[0,1]}
	\]
	minimizes $\int |x-y|^p\d\pi(x,y)$ among couplings. If $\al$ has no atoms, it is induced by the monotone Monge map~\eqref{eq-1d-monge-map}.
\index{Monge!problem}
\end{prop}
\begin{proof}
		The displayed measure is a coupling by Proposition~\ref{prop-quantile-pushforward}. Its support is monotone: for $s<t$, both quantile functions satisfy $\cumul{\al}^{-1}(s)\leq\cumul{\al}^{-1}(t)$ and $\cumul{\be}^{-1}(s)\leq\cumul{\be}^{-1}(t)$. If a coupling had two crossed pairs $x<x'$ and $y>y'$ with positive mass, exchanging the targets decreases the cost for strictly convex powers and does not increase it for $p=1$, by the two-point inequality used in Proposition~\ref{prop-matching-1d-monotone}. Eliminating crossings yields a monotone optimal coupling, and the monotone coupling with prescribed marginals is exactly the quantile coupling. If $\al$ has no atoms, $(\cumul{\al})_\sharp\al=\mathrm{Leb}_{[0,1]}$, so the coupling is generated by~\eqref{eq-1d-monge-map}.
\index{optimal coupling}
\index{push-forward}
\index{quantile!function}
\end{proof}

\begin{rem}[Composition is one-dimensional]\label{rem-1d-composition-optimal}
	In dimension one, optimal maps compose. Assume for simplicity that the intermediate laws have no atoms, so that the monotone rearrangements
\index{monotone!rearrangement}
	\[
		\T_{\al\to\be}=\cumul{\be}^{-1}\circ\cumul{\al},
		\qquad
		\T_{\be\to\ga}=\cumul{\ga}^{-1}\circ\cumul{\be}
	\]
	are well defined $\al$- and $\be$-almost everywhere. Then
	\[
		\T_{\be\to\ga}\circ\T_{\al\to\be}
		=
		\T_{\al\to\ga}
		\qquad \al\text{-a.e.}
	\]
	Indeed, each map is nondecreasing and sends quantile level $r$ to the same quantile level of the target law. This semigroup property is special to the ordered line.
\end{rem}

The obstruction in higher dimension is already visible for the most elementary Gaussian maps.

\begin{example}[Linear obstruction to composing Brenier maps]
	In higher dimension, Brenier maps for the quadratic cost are gradients of convex functions, and such maps do not generally remain gradients after composition. The simplest obstruction is linear. If $\T_1(x)=A_1x$ and $\T_2(x)=A_2x$ with $A_1,A_2$ symmetric positive definite, then $\T_2\circ\T_1$ has matrix $A_2A_1$. It is a gradient field only when this product is symmetric, equivalently $A_1A_2=A_2A_1$. Gaussian transport gives a concrete instance: between nondegenerate Gaussian laws, the Brenier map is affine with symmetric positive definite linear part. Compositions of Gaussian optimal maps are therefore optimal only in special commuting situations, for instance when all covariance matrices are simultaneously diagonalizable. Otherwise the composition contains a rotational or shearing component and is not the Brenier map between the initial and final Gaussians.
\index{rotational component}
\index{shearing component}
\index{convex!function}
\index{covariance!matrix}
\index{Gaussian!optimal map}
\index{cost!quadratic}
\index{Brenier!map}
\end{example}

For discrete measures, one cannot directly apply the map formula when the source has atoms, but if the measures are uniform on the same number of Dirac masses, then it is exactly the sorting formula of Proposition~\ref{prop-matching-1d-monotone}.
\index{Dirac mass}

\begin{alg}[Quantile rearrangement and one-dimensional geodesic]\label{alg:quantile-rearrangement-geodesic}
\index{quantile!map}
\index{one-dimensional!transport}
\textbf{Input:} One-dimensional probability measures $\alpha,\beta$; time $t\in[0,1]$.

\textbf{Output:} Quantile coupling, Monge map when defined, and geodesic point $\alpha_t$.

\textbf{Compute} $\cumul{\al}$, $\cumul{\be}$ and generalized inverses.

\textbf{Couple} equal quantile levels:
\(X=\cumul{\al}^{-1}(r), \qquad Y=\cumul{\be}^{-1}(r), \qquad r\in(0,1).\)

\textbf{If} $\alpha$ has no atoms \textbf{then}:
\begin{algblock}

\textbf{Set}
\(T(x)=\cumul{\be}^{-1}(\cumul{\al}(x)).\)
\end{algblock}
\textbf{Interpolate} quantiles:
\(Q_t(r)=(1-t)\cumul{\al}^{-1}(r)+t\cumul{\be}^{-1}(r), \qquad \al_t=(Q_t)_\sharp\mathrm{Leb}_{[0,1]}.\)
\textbf{Return} $(X,Y)$, $T$ if defined, and $\alpha_t$.
\end{alg}

\begin{prop}[One-dimensional Wasserstein formulas]\label{prop-wass-quantile-1d}
\index{Wasserstein!formula}
	Let $\al,\be\in\Mm_+^1(\RR)$ have finite $p$-th moments. For every $p\geq1$,
	\eql{\label{eq-wass-cumul}
		\Wass_p(\al,\be)^p
		=
		\int_0^1\abs{\cumul{\al}^{-1}(r)-\cumul{\be}^{-1}(r)}^p\d r
		=
		\norm{\cumul{\al}^{-1}-\cumul{\be}^{-1}}_{L^p([0,1])}^p.
	}
	For $p=1$, this is equivalently
	\begin{align}\label{eq-w1-1d}
		\Wass_1(\al,\be)
		&=
		\norm{\cumul{\al}-\cumul{\be}}_{L^1(\RR)}
		=
		\int_\RR \abs{\cumul{\al}(x)-\cumul{\be}(x)}\d x \\
		&=
		\int_\RR \abs{\int_{-\infty}^x\d(\al-\be)}\d x.
	\end{align}
\end{prop}
\begin{proof}
	The first formula follows from Proposition~\ref{prop-1d-quantile-map}: the optimal coupling is obtained by taking the same quantile level $r$ for both measures. For $p=1$, use the layer-cake identity. If $q_\al$ and $q_\be$ are the two quantile functions, then
\index{optimal coupling}
\index{quantile!function}
\index{quantile!map}
	\[
		\int_0^1 |q_\al(r)-q_\be(r)|\d r
		=
		\int_\RR
		\lambda\bigl(\{r:q_\al(r)\leq x<q_\be(r)\}\cup\{r:q_\be(r)\leq x<q_\al(r)\}\bigr)\d x.
	\]
	The measure of the displayed set is exactly $|\cumul{\al}(x)-\cumul{\be}(x)|$ for almost every $x$.
\end{proof}

Formula~\eqref{eq-wass-cumul} means that the map $\al\mapsto\cumul{\al}^{-1}$ embeds one-dimensional Wasserstein geometry isometrically into a linear $L^p$ space. For $p=2$, the Wasserstein distance on probability measures over the real line is therefore Hilbertian. This makes one-dimensional OT much simpler than higher-dimensional OT, where the Wasserstein geometry is not globally Hilbertian.
\index{probability measure}
\index{Wasserstein!distance}

\begin{figure}[htbp]
\centering
\setlength{\tabcolsep}{2pt}
\begin{tabular}{@{}cccc@{}}
\includegraphics[width=.24\linewidth]{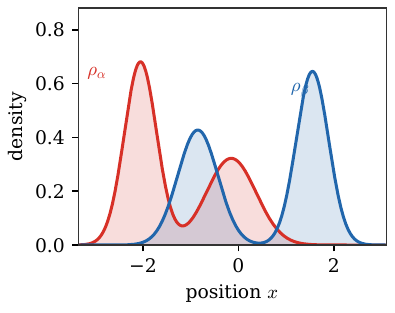} &
\includegraphics[width=.24\linewidth]{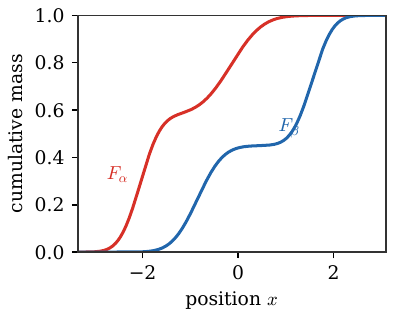} &
\includegraphics[width=.24\linewidth]{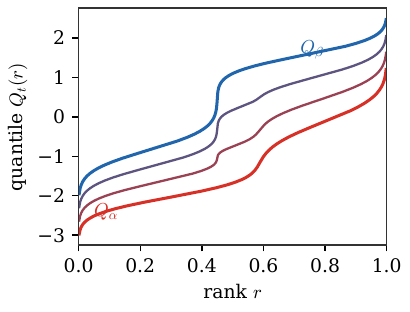} &
\includegraphics[width=.24\linewidth]{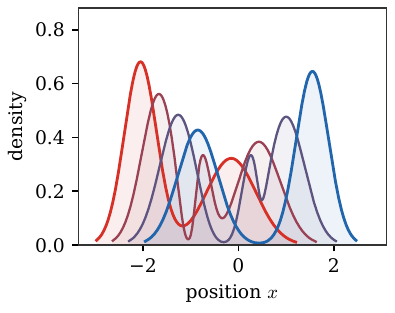} \\[-.1em]
\small densities &
\small cumulative functions &
\small quantiles &
\small displacement geodesic
\end{tabular}
\caption{One-dimensional transport through quantiles. The same two smooth laws are shown as densities, cumulative functions and quantile functions. The last panel displays the displacement interpolation obtained by the linear quantile path $Q_t=(1-t)Q_\alpha+tQ_\beta$, which is the explicit one-dimensional $\Wass_2$ geodesic.}
\index{one-dimensional!transport}
\index{displacement!interpolation}
\index{quantile!function}
\label{fig:monge-1d-quantile-geodesic}
\end{figure}

The last panel of Figure~\ref{fig:monge-1d-quantile-geodesic} is the one-dimensional specialization of the displacement interpolation of Section~\ref{sec-monge-interpolation}. In quantile coordinates, the interpolating measure is characterized by
\index{displacement!interpolation}
\index{Monge!interpolation}
\[
	\cumul{\al_t}^{-1}(r)
	=
	(1-t)\cumul{\al}^{-1}(r)+t\cumul{\be}^{-1}(r),
	\qquad r\in(0,1).
\]
The histogram-equalization figure in Section~\ref{sec-monge-pbm} is the same construction applied to pixel intensities.
\index{histogram}

For $p=1$, formula~\eqref{eq-w1-1d} shows that $\Wass_1$ is a norm on signed measures with zero total mass once they are identified with their cumulative primitives. Other classical one-dimensional distances are obtained by replacing the $L^1$ norm of cumulative functions with $L^2$ or $L^\infty$ norms; under suitable tightness assumptions, such norms also metrize convergence in law and lead for instance to Cram\'er--von Mises and Kolmogorov--Smirnov-type distances.
\index{convergence!in law}
\index{Cramer-von Mises distance}
\index{Kolmogorov-Smirnov distance}
\index{tightness}
\index{cumulative!function}
\index{signed!measure}

\paragraph{Triangular rearrangements.}
\index{triangular!rearrangement}

There is another canonical way to build transport maps in several dimensions: transport one coordinate at a time by conditional one-dimensional quantiles. This construction goes back to Knothe and Rosenblatt~\cite{Knothe1957,Rosenblatt1952}. It is not usually cost-optimal, but it gives a deterministic rearrangement under weak assumptions and clarifies how multivariate transport can be reduced recursively to scalar monotone maps.
\index{transport map}

\begin{prop}[Knothe--Rosenblatt triangular rearrangement]\label{prop-knothe-rosenblatt}
\index{Knothe-Rosenblatt rearrangement}
	Let $\al,\be\in\Mm_+^1(\RR^d)$. Assume, for simplicity, that the first marginal of $\al$ and the one-dimensional conditional laws of $\al$ used below are atomless, and that regular conditional distributions are fixed. There is a triangular map
\index{regular conditional distribution}
\index{conditional law}
\index{triangular!map}
	\[
		\T(x_1,\ldots,x_d)
		=
		(\T_1(x_1),\T_2(x_1,x_2),\ldots,\T_d(x_1,\ldots,x_d))
	\]
	such that $\T_\sharp\al=\be$ and, for each $k$, the function $x_k\mapsto\T_k(x_1,\ldots,x_k)$ is nondecreasing for $\al$-almost every value of $(x_1,\ldots,x_{k-1})$.
\end{prop}
\begin{proof}
	The construction is recursive. For $k=1$, let $\T_1$ be the monotone rearrangement between the first marginal of $\al$ and the first marginal of $\be$. Suppose that $\T_1,\ldots,\T_{k-1}$ have already been constructed. Write $x_{<k}=(x_1,\ldots,x_{k-1})$ and $\T_{<k}=(\T_1,\ldots,\T_{k-1})$. By induction, $(\T_{<k})_\sharp\al_{<k}=\be_{<k}$, where $\al_{<k}$ and $\be_{<k}$ are the first $(k-1)$-coordinate marginals. Let $\al^k_{x_{<k}}$ and $\be^k_{y_{<k}}$ be regular conditional laws of the $k$-th coordinate given the previous coordinates. Define $\T_k(x_{<k},\cdot)$ as the one-dimensional monotone rearrangement from $\al^k_{x_{<k}}$ to $\be^k_{\T_{<k}(x_{<k})}$.
\index{monotone!rearrangement}

	The map $\T_k(x_{<k},\cdot)$ is nondecreasing by the one-dimensional rearrangement theorem. The chain rule for disintegrations then shows that, after step $k$, the first $k$ coordinates of $\T_\sharp\al$ have the same law as the first $k$ coordinates of $\be$. At $k=d$ this gives $\T_\sharp\al=\be$. Target atoms are handled by generalized quantiles. If a source conditional law has atoms that must be split, the same recursive construction defines a triangular Markov kernel rather than a deterministic map.
\index{disintegration}
\index{Markov kernel}
\index{generalized quantile}
\index{conditional law}
\end{proof}

The recursive construction in the proof is an algorithm: it repeatedly applies the one-dimensional quantile rearrangement to conditional distributions.

\begin{alg}[Knothe--Rosenblatt triangular rearrangement]\label{alg:triangular-rearrangement}
\textbf{Input:} Probability measures $\alpha,\beta$ on $\RR^d$ with conditional laws.

\textbf{Output:} Knothe--Rosenblatt triangular map $\T$.

\textbf{Compute} first-coordinate rearrangement:
\(\T_1=(F_{\be_1})^{-1}\circ F_{\al_1}.\)

\textbf{For} $k=2,\ldots,d$ \textbf{do}:
\begin{algblock}

\textbf{Set} $x_{<k}=(x_1,\ldots,x_{k-1})$.

\textbf{Compute} conditional laws $\al^k_{x_{<k}}$ and $\be^k_{\T_{<k}(x_{<k})}$.

\textbf{Set}
\(\T_k(x_{<k},x_k) = \bigl(F_{\be^k_{\T_{<k}(x_{<k})}}\bigr)^{-1} \circ F_{\al^k_{x_{<k}}}(x_k).\)
\end{algblock}
\algreturnskip
\textbf{Return} \(\T(x)=(\T_1(x_1),\T_2(x_1,x_2),\ldots,\T_d(x_1,\ldots,x_d)).\)
\end{alg}

Figure~\ref{fig:monge-triangular-rearrangement} shows the two-dimensional mechanism on image histograms. The first stage transports only the horizontal marginal, so the middle pivot has the same $x$-marginal as the target but keeps the source vertical conditionals. The second stage then transports each vertical conditional law inside the corresponding column.
\index{histogram}
\index{conditional law}
\index{triangular!rearrangement}

\begin{figure}[H]
\centering
\setlength{\tabcolsep}{1pt}
\begin{tabular}{@{}ccccccc@{}}
\includegraphics[width=.135\linewidth]{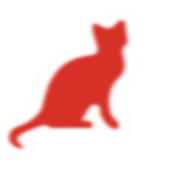} &
\index{triangular!rearrangement}
\includegraphics[width=.135\linewidth]{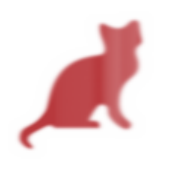} &
\includegraphics[width=.135\linewidth]{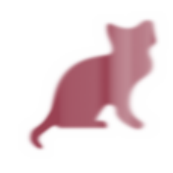} &
\includegraphics[width=.135\linewidth]{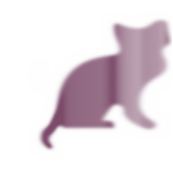} &
\includegraphics[width=.135\linewidth]{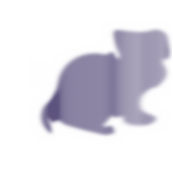} &
\index{triangular!rearrangement}
\includegraphics[width=.135\linewidth]{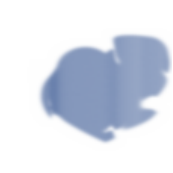} &
\includegraphics[width=.135\linewidth]{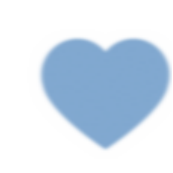} \\[-.1em]
\small source &
\small $x$-move &
\small $x$-move &
\small pivot &
\small $y$-move &
\small $y$-move &
\small target
\end{tabular}
\caption{Triangular rearrangement between the same cat and heart densities as in Figure~\ref{fig:monge-shape-mccann-interpolation}. The panels are computed directly on image histograms. The first three transitions move mass horizontally by the monotone rearrangement between the $x$-marginals; the pivot has the target horizontal marginal. The last three transitions keep each column fixed and move mass vertically by one-dimensional monotone rearrangements between conditional laws.}
\index{triangular!rearrangement}
\index{monotone!rearrangement}
\index{McCann interpolation}
\label{fig:monge-triangular-rearrangement}
\end{figure}

This construction transports successively along coordinate axes and is therefore often called axis-wise transport. It depends on the chosen ordering of coordinates and is not generally optimal for the quadratic cost. It is nevertheless a useful limiting object: Brenier maps for increasingly anisotropic quadratic costs converge to triangular rearrangements under suitable assumptions~\cite{carlier2010knothe}.
\index{triangular!rearrangement}
\index{cost!quadratic}
\index{Brenier!map}

\begin{prop}[Anisotropic Brenier maps converge to Knothe--Rosenblatt]\label{prop-knothe-limit-anisotropic-brenier}
\index{Knothe-Rosenblatt rearrangement}
	Let $\alpha,\beta$ be compactly supported probability measures on $\RR^d$ with densities bounded above and below on their rectangular supports, and assume that the conditional laws entering Proposition~\ref{prop-knothe-rosenblatt} are atomless. For $\epsilon>0$, set
\index{conditional law}
\index{probability measure}
	\[
		c_\epsilon(x,y)
		\eqdef
		\sum_{k=1}^d \epsilon^{k-1}\abs{x_k-y_k}^2 .
	\]
	Let $T_\epsilon$ be the Monge map from $\alpha$ to $\beta$ for the cost $c_\epsilon$, and let $T_{\mathrm{KR}}$ be the triangular Knothe--Rosenblatt rearrangement with the coordinate order used above. Then
\index{Monge!problem}
\index{Knothe-Rosenblatt rearrangement}
	\[
		T_\epsilon \longrightarrow T_{\mathrm{KR}}
		\qquad\text{in } L^2(\alpha;\RR^d)
		\qquad\text{as } \epsilon\to0 .
	\]
\end{prop}

\begin{proof}
	We give the standard lexicographic argument, which is the variational core of~\cite{carlier2010knothe}. Let $\pi_\epsilon=(\Id,T_\epsilon)_\sharp\alpha$. Since the supports are compact, a subsequence converges weakly to some coupling $\pi$ between $\alpha$ and $\beta$. The optimality of $\pi_\epsilon$ for
	\[
		F_\epsilon(\gamma)
		=
		\sum_{k=1}^d \epsilon^{k-1}
		\int\abs{x_k-y_k}^2\d\gamma(x,y)
	\]
	first implies, by letting $\epsilon\to0$, that $\pi$ minimizes the one-dimensional quadratic cost in the first coordinate among all couplings. Since the first marginal of $\alpha$ is atomless, the one-dimensional minimizer is the monotone rearrangement, so $y_1=T_1(x_1)$ under $\pi$.
\index{monotone!rearrangement}
\index{cost!quadratic}

	Now restrict attention to couplings satisfying this first-coordinate constraint. Subtract the common minimal value of the first-coordinate cost, divide the optimality inequality by $\epsilon$, and let $\epsilon\to0$. The limit coupling must minimize the second-coordinate quadratic cost among all couplings that already realize the first monotone rearrangement. Disintegrating with respect to $(x_1,y_1)$ reduces this constrained problem to the one-dimensional monotone rearrangement between the conditional laws of $x_2$ and $y_2$. Hence $y_2=T_2(x_1,x_2)$ under $\pi$.
\index{conditional law}

	Repeating the same subtraction-and-rescaling argument gives, for every $k$, the conditional monotone rearrangement $y_k=T_k(x_1,\ldots,x_k)$. Thus every weak limit of $(\pi_\epsilon)_\epsilon$ is concentrated on the graph of $T_{\mathrm{KR}}$. This graph coupling is unique, so the whole family $\pi_\epsilon$ converges weakly to $(\Id,T_{\mathrm{KR}})_\sharp\alpha$.
\index{weak!limit}
\index{monotone!rearrangement}

	Finally, let $X\sim\alpha$. The graph couplings are the laws of $(X,T_\epsilon(X))$, and they converge weakly to the law of $(X,T_{\mathrm{KR}}(X))$. To turn this into convergence of maps, fix $\zeta>0$. By Lusin's theorem, there is a compact set $K$ with $\alpha(K)>1-\zeta$ on which $T_{\mathrm{KR}}$ is continuous. On $K$, the set
	\[
		\enscond{(x,y)}{x\in K,\ \norm{y-T_{\mathrm{KR}}(x)}\geq\delta}
	\]
	is closed and has zero mass under the limiting graph coupling. Portmanteau's theorem gives
\index{zero mass}
	\[
		\limsup_{\epsilon\to0}
		\alpha\!\left(\enscond{x}{x\in K,\ \norm{T_\epsilon(x)-T_{\mathrm{KR}}(x)}\geq\delta}\right)
		=0 .
	\]
	Adding the complement of $K$ and letting $\zeta\to0$ proves convergence in $\alpha$-probability. Since all maps take values in a common compact set, this convergence is uniformly integrable and hence holds in $L^2(\alpha)$.
\end{proof}

\section{Gaussian Measures and the Bures Metric}
\index{Gaussian!measure}
\index{Bures!metric}
\label{sec-gaussian-bures}

Gaussian measures form the most important finite-dimensional family preserved by quadratic optimal transport. The mean moves linearly, while the covariance follows the Bures--Wasserstein geometry of positive semidefinite matrices.
\index{positive!semidefinite matrix}
\index{Bures-Wasserstein geometry}

\paragraph{One-dimensional Gaussians.}
\index{Gaussian!measure}

Let $\al=\Gaussian(m_\al,\sigma_\al^2)$ and $\be=\Gaussian(m_\be,\sigma_\be^2)$ be two nondegenerate Gaussians on $\RR$. Then
\[
	\T(x)=\frac{\sigma_\be}{\sigma_\al}(x-m_\al)+m_\be
\]
satisfies $\T_\sharp\al=\be$. It is the derivative of the convex function
\index{convex!function}
\[
	\phi(x)=\frac{\sigma_\be}{2\sigma_\al}(x-m_\al)^2+m_\be x,
\]
so Brenier's theorem shows that it is the optimal quadratic transport. The associated distance is
\index{Brenier!theorem}
\[
	\Wass_2(\al,\be)^2
	=
	\int_\RR\left(\frac{\sigma_\be}{\sigma_\al}(x-m_\al)+m_\be-x\right)^2\d\al(x)
	=
	(m_\al-m_\be)^2+(\sigma_\al-\sigma_\be)^2.
\]
The formula extends by continuity to Dirac masses, although the affine Monge map itself only pushes a Dirac source to another Dirac. Thus the OT geometry of one-dimensional Gaussians is the Euclidean geometry of the half-plane $(m,\sigma)\in\RR\times\RR_+$. This contrasts with KL geometry, where singular Gaussians are infinitely distant.
\index{Monge!problem}
\index{Dirac mass}
\index{Gaussian!measure}

\begin{figure}[htbp]
\centering
\setlength{\tabcolsep}{2pt}
\begin{tabular}{@{}ccc@{}}
\includegraphics[width=.30\linewidth]{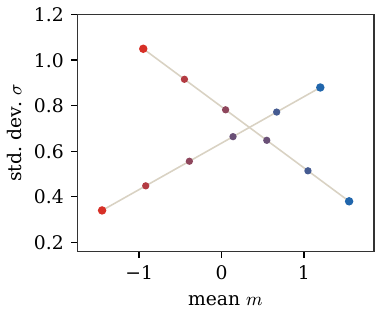} &
\includegraphics[width=.32\linewidth]{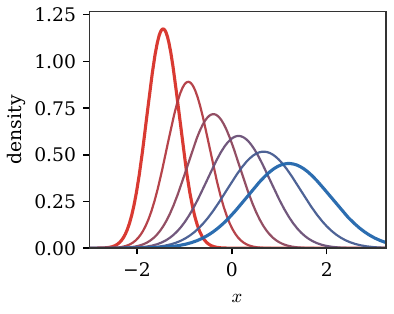} &
\includegraphics[width=.32\linewidth]{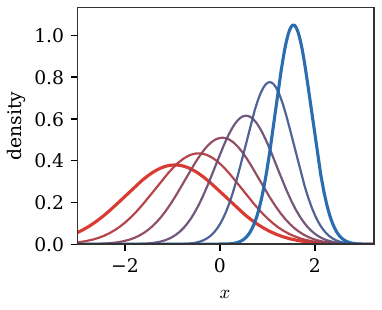} \\[-.1em]
\small $(m,\sigma)$ half-plane &
\small first geodesic &
\small second geodesic
\end{tabular}
\caption{One-dimensional Gaussian $\Wass_2$ geodesics. In the coordinates $(m,\sigma)$, the geodesics are Euclidean segments in the upper half-plane. The two density panels show the corresponding Gaussian densities along the two segments, with colors interpolating from the red endpoint to the blue endpoint.}
\label{fig:monge-gaussian-w2-geodesic-1d}
\end{figure}

\paragraph{Multivariate Gaussians.}
\index{Gaussian!measure}

If
\eql{\label{eq-gauss-pf}
	\al=\Gaussian(\mean_\al,\cov_\al),
	\qquad
	\be=\Gaussian(\mean_\be,\cov_\be),
	\qquad
	\T(x)=\mean_\be+A(x-\mean_\al),
}
then $\T$ is the gradient of the convex quadratic potential $\phi(x)=\dotp{\mean_\be}{x}+\dotp{A(x-\mean_\al)}{x-\mean_\al}/2$ if and only if $A$ is symmetric positive semidefinite.
\index{quadratic!potential}

\begin{prop}[Affine push-forward of Gaussians]\label{prop-gaussian-affine-push-forward}
\index{push-forward}
\index{Gaussian!push-forward}
	One has $\T_\sharp\al=\be$ if and only if
	\eql{\label{eq-gauss-covariance-pushforward}
		A\cov_\al A^\top=\cov_\be.
	}
\end{prop}
\begin{proof}
	An affine function maps a Gaussian to a Gaussian, so the law of $\T(X)$ is determined by its mean and covariance. If $X\sim\al$ and $Y=\T(X)$, then
	\[
		\EE(Y)=\mean_\be+A\EE(X-\mean_\al)=\mean_\be,
	\]
	and
	\[
		\EE((Y-\mean_\be)(Y-\mean_\be)^\top)
		=
		A\EE((X-\mean_\al)(X-\mean_\al)^\top)A^\top
		=
		A\cov_\al A^\top.
	\]
	Thus $A\cov_\al A^\top=\cov_\be$ is necessary and sufficient for $Y$ to have the same mean and covariance as $\be$.
\end{proof}

The covariance equation is quadratic in $A$. Under the symmetry constraint imposed by Brenier's theorem, it becomes $A\cov_\al A=\cov_\be$. The covariance part of the resulting cost is a matrix metric.
\index{Brenier!theorem}

\begin{defn}[Bures metric]\label{def-bures-metric}
\index{Bures!metric}
\index{positive!semidefinite matrix}
	For positive semidefinite covariance matrices $\Sigma$ and $\Lambda$, the Bures metric is
	\eql{\label{eq-bure-defn}
		\Bb(\Sigma,\Lambda)^2
		\eqdef
		\tr\pa{\Sigma+\Lambda-2(\Sigma^{1/2}\Lambda\Sigma^{1/2})^{1/2}}.
	}
\end{defn}
The next proposition solves the covariance equation and shows that this metric is exactly the covariance contribution to Gaussian $\Wass_2$.

\begin{prop}[Gaussian $\Wass_2$ formula and Bures covariance term]\label{prop-gaussian-w2-bures}
\index{covariance!term}
\index{Bures!covariance}
	Assume that $\cov_\al$ and $\cov_\be$ are positive definite. The unique symmetric positive-definite solution of
	\[
		A\cov_\al A=\cov_\be
	\]
	is
	\eql{\label{eq-bures-map}
		A=
		\cov_\al^{-1/2}
		\pa{\cov_\al^{1/2}\cov_\be\cov_\al^{1/2}}^{1/2}
		\cov_\al^{-1/2}.
	}
	The affine map $\T(x)=\mean_\be+A(x-\mean_\al)$ is the optimal quadratic-cost transport from $\Gaussian(\mean_\al,\cov_\al)$ to $\Gaussian(\mean_\be,\cov_\be)$, and
\index{affine map}
\index{cost!quadratic}
	\eql{\label{eq-dist-gauss}
		\Wass_2(\al,\be)^2
		=
		\norm{\mean_\al-\mean_\be}^2+\Bb(\cov_\al,\cov_\be)^2,
	}
	where $\Bb$ is the Bures metric of Definition~\ref{def-bures-metric}.
\end{prop}
\begin{proof}
	Multiplying $A\cov_\al A=\cov_\be$ on the left and right by $\cov_\al^{1/2}$ gives
	\[
		(\cov_\al^{1/2}A\cov_\al^{1/2})^2
		=
		\cov_\al^{1/2}\cov_\be\cov_\al^{1/2}.
	\]
	Since $A$ is symmetric positive, $\cov_\al^{1/2}A\cov_\al^{1/2}$ is symmetric positive and is therefore the unique positive square root of the right-hand side. Conversely, the matrix in~\eqref{eq-bures-map} is symmetric positive and satisfies the covariance equation.

	By Proposition~\ref{prop-gaussian-affine-push-forward}, this affine map pushes $\al$ to $\be$. It is the gradient of a convex quadratic potential, so Brenier's theorem implies optimality. If $X\sim\al$, then
\index{affine map}
\index{Brenier!theorem}
\index{quadratic!potential}
\index{push-forward}
	\begin{align*}
		\EE\norm{X-\T(X)}^2
		&=
		\norm{\mean_\al-\mean_\be}^2
		+
		\EE\norm{(I-A)(X-\mean_\al)}^2 \\
		&=
		\norm{\mean_\al-\mean_\be}^2
		+
		\tr((I-A)\cov_\al(I-A)^\top) \\
		&=
		\norm{\mean_\al-\mean_\be}^2
		+
		\tr(\cov_\al)+\tr(A\cov_\al A)-2\tr(A\cov_\al) \\
		&=
		\norm{\mean_\al-\mean_\be}^2
		+
		\tr(\cov_\al)+\tr(\cov_\be)
		-2\tr((\cov_\al^{1/2}\cov_\be\cov_\al^{1/2})^{1/2}),
	\end{align*}
	which is the desired expression. The formula for singular covariance matrices follows by adding $\eta I$ and letting $\eta\downarrow0$.
\index{covariance!matrix}
\end{proof}

The covariance term $\Bb$ is the Bures--Wasserstein metric on positive semidefinite matrices~\cite{bures1969extension,gelbrich1990formula,bhatia2018bures}. It separates the Euclidean displacement of the mean from the intrinsic transport geometry of covariance ellipsoids.
\index{Bures-Wasserstein geometry}
\index{positive!semidefinite matrix}
\index{covariance!term}

\begin{figure}[htbp]
\centering
\setlength{\tabcolsep}{2pt}
\begin{tabular}{@{}cc@{}}
\includegraphics[width=.42\linewidth]{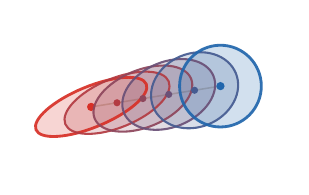} &
\includegraphics[width=.42\linewidth]{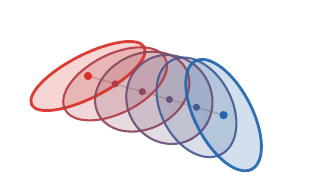} \\[-.1em]
\small anisotropic to isotropic &
\small rotated anisotropies
\end{tabular}
\caption{Two-dimensional Gaussian $\Wass_2$ geodesics. Means move linearly, while covariance ellipses follow the Bures--Wasserstein interpolation. The left panel contracts an anisotropic Gaussian toward an isotropic one; the right panel interpolates between two strongly oriented anisotropic covariances.}
\label{fig:monge-gaussian-w2-geodesic-2d}
\end{figure}

\begin{prop}[Metric and convexity properties of the Bures term]\label{prop-bures-metric-convex}
\index{Bures!metric}
	The function $\Bb$ is a distance on positive semidefinite covariance matrices. Moreover, $\Bb^2$ is jointly convex: for $t\in[0,1]$,
\index{covariance!matrix}
	\[
		\Bb^2((1-t)\Sigma_0+t\Sigma_1,(1-t)\Lambda_0+t\Lambda_1)
		\leq
		(1-t)\Bb^2(\Sigma_0,\Lambda_0)+t\Bb^2(\Sigma_1,\Lambda_1).
	\]
\end{prop}
\begin{proof}
	The key identity is the Procrustes representation
	\[
		\Bb^2(\Sigma,\Lambda)
		=
		\min_{Q^\top Q=I}\norm{\Sigma^{1/2}-\Lambda^{1/2}Q}_F^2.
	\]
	Indeed, expanding the square gives $\tr\Sigma+\tr\Lambda-2\max_{Q^\top Q=I}\tr(\Sigma^{1/2}Q^\top\Lambda^{1/2})$, and the orthogonal Procrustes formula identifies the maximum with $\tr((\Sigma^{1/2}\Lambda\Sigma^{1/2})^{1/2})$. Symmetry, positivity and separation follow immediately from this representation. The triangle inequality follows by choosing two almost optimal orthogonal matrices $Q_1,Q_2$ and writing
\index{triangle inequality}
	\[
		\norm{\Sigma^{1/2}-\Gamma^{1/2}Q_2Q_1}_F
		\leq
		\norm{\Sigma^{1/2}-\Lambda^{1/2}Q_1}_F
		+
		\norm{\Lambda^{1/2}-\Gamma^{1/2}Q_2}_F.
	\]
	Letting the two choices become optimal proves the metric property.

	For convexity, use the equivalent factor formulation
	\[
		\Bb^2(\Sigma,\Lambda)
		=
		\min_{UU^\top=\Sigma,\thinspace VV^\top=\Lambda}\norm{U-V}_F^2.
	\]
	Choose nearly optimal factors $(U_0,V_0)$ and $(U_1,V_1)$ for the two pairs, and define block factors
	\[
		U_t=[\sqrt{1-t}\,U_0,\sqrt t\,U_1],
		\qquad
		V_t=[\sqrt{1-t}\,V_0,\sqrt t\,V_1].
	\]
	Then $U_tU_t^\top=(1-t)\Sigma_0+t\Sigma_1$ and $V_tV_t^\top=(1-t)\Lambda_0+t\Lambda_1$, while
	\[
		\norm{U_t-V_t}_F^2
		=
		(1-t)\norm{U_0-V_0}_F^2+t\norm{U_1-V_1}_F^2.
	\]
	Taking the infimum over the initial factors proves joint convexity.
\end{proof}

\begin{rem}[Diagonal covariances and Hellinger geometry]
\index{covariance!diagonal}
\index{Hellinger!geometry}
	If $\cov_\al=\diag(r_i)_i$ and $\cov_\be=\diag(s_i)_i$ are diagonal, the Bures metric reduces to the Euclidean distance between square roots,
\index{Bures!metric}
	\[
		\Bb(\cov_\al,\cov_\be)=\norm{\sqrt r-\sqrt s}_2.
	\]
	This is the finite-dimensional Hellinger geometry on the nonnegative covariance coordinates: variances are compared after the amplitude change of variables $r_i\mapsto\sqrt{r_i}$.
\index{change-of-variables}
\index{Hellinger!geometry}
\end{rem}


\chapter{Kantorovich Relaxation}
\index{Kantorovich!relaxation}
\label{sec-kantorovich}

Kantorovich's relaxation is the decisive move that turns transport into convex optimization. This chapter explains how deterministic maps are replaced by couplings, why this fixes infeasibility and symmetry, and how it produces the Wasserstein distances. Historically, this linear-programming viewpoint grew from Kantorovich's economic planning work~\cite{Kantorovich42} and is now the standard foundation of OT~\cite{Villani03,Villani09,rachev1998mass2}.
\index{Wasserstein!distance}

\section{Discrete Relaxation}
\index{Kantorovich!relaxation}
\label{sec-discrete-relaxation}

The discrete relaxation is the cleanest place to see mass splitting. It replaces permutations by a transportation polytope and reveals the linear-programming structure that algorithms exploit.
\index{transportation!polytope}

Monge's discrete matching problem is problematic because it cannot be applied when $n \neq m$. A faithful formulation must keep track of masses $(\a_i,\b_j)$. The continuous Monge problem~\eqref{eq-monge-continuous}, based on push-forward maps, has the same obstruction in another form: there may be no map $T$ such that $T_\sharp\al=\be$. This happens, for instance, when a single Dirac mass should be sent to several Dirac masses.
\index{assignment problem}
\index{Monge!problem}
\index{push-forward}
\index{Dirac mass}

This lack of mass splitting also makes the Monge formulation asymmetric in $\al$ and $\be$: one can map two Dirac masses to a single one, but not the other way around. Even when a feasible map exists, the resulting optimization problem is non-convex and therefore difficult to solve numerically.

Kantorovich's key idea~\cite{Kantorovich42} is to relax the deterministic nature of transportation. Instead of requiring each source point $x_i$ to be sent to exactly one target, the mass at $x_i$ may be dispatched across several locations. This moves from deterministic transport maps to probabilistic, or fuzzy, transport plans. The relaxation is encoded, in place of a permutation $\sigma$ or a map $T$, by a coupling matrix $\P\in\RR_+^{n\times m}$, where $\P_{i,j}$ describes the amount of mass flowing from $x_i$ to $y_j$ in the formalism of discrete measures
\index{plan!transport}
\index{transport map}
\[
	\al=\sum_i \a_i\de_{x_i},
	\qquad
	\be=\sum_j \b_j\de_{y_j}.
\]
\begin{defn}[Discrete couplings and mass conservation]\label{def-discrete-couplings}
\index{discrete!coupling}
\index{mass!conservation}
	Admissible couplings are only constrained to satisfy conservation of mass:
	\eql{\label{eq-discr-couplings}
		\CouplingsD(\a,\b)
		\eqdef
		\enscond{\P\in\RR_+^{n\times m}}{\P\ones_m=\a \qandq \transp{\P}\ones_n=\b}.
	}
	Equivalently,
	\[
		\P\ones_m=\left(\sum_j \P_{i,j}\right)_i\in\RR^n,
		\qquad
		\transp{\P}\ones_n=\left(\sum_i \P_{i,j}\right)_j\in\RR^m.
	\]
\end{defn}
The first useful consequence of this relaxation is feasibility: there is always at least one admissible plan, obtained by making source and target independent.

\begin{defn}[Discrete product coupling]\label{def-discrete-product-coupling}
\index{product!coupling}
	Given weights $\a\in\simplex_n$ and $\b\in\simplex_m$, the discrete product, or trivial, coupling is
	\[
		(\a\otimes\b)_{i,j}\eqdef \a_i\b_j.
	\]
	It belongs to $\CouplingsD(\a,\b)$ and corresponds to choosing the source and target labels independently.
\end{defn}
This feasible set is the bounded intersection of an affine space with the nonnegative orthant, hence a convex polytope. In one dimension, an ordered coupling can be read as a matrix: rows index source bins, columns index target bins, and the marginal constraints appear as prescribed row and column sums.
\index{marginal!constraint}

\begin{prop}[Discrete product optimality is degenerate]\label{prop-discrete-product-coupling-degenerate}
\index{product!coupling}
	Assume that the zero-mass rows and columns have been removed, so that $\a_i>0$ and $\b_j>0$, and let $\C$ be a finite cost matrix. The product plan $\a\otimes\b$ minimizes $\P\mapsto\dotp{\C}{\P}$ over $\CouplingsD(\a,\b)$ if and only if every coupling $\P\in\CouplingsD(\a,\b)$ minimizes it.
\index{cost matrix}
\end{prop}
\begin{proof}
	The reverse implication is immediate. Assume conversely that $\a\otimes\b$ is optimal and let $\Q\in\CouplingsD(\a,\b)$ be arbitrary. Since all entries of $\a\otimes\b$ are positive, there exists $t>0$ small enough that
	\[
		\R\eqdef (1+t)(\a\otimes\b)-t\Q
	\]
	has nonnegative entries. Its row and column sums are
	\[
		\R\ones_m=(1+t)\a-t\a=\a,
		\qquad
		\R^\top\ones_n=(1+t)\b-t\b=\b,
	\]
	so $\R\in\CouplingsD(\a,\b)$. Moreover,
	\[
		\a\otimes\b=\frac{1}{1+t}\R+\frac{t}{1+t}\Q.
	\]
	By optimality of $\a\otimes\b$, both $\dotp{\C}{\R}$ and $\dotp{\C}{\Q}$ are at least $\dotp{\C}{\a\otimes\b}$. Taking the scalar product of the convex-combination identity with $\C$ gives
	\[
		\dotp{\C}{\a\otimes\b}
		=
		\frac{1}{1+t}\dotp{\C}{\R}
		+\frac{t}{1+t}\dotp{\C}{\Q}.
	\]
	A convex average of two numbers not smaller than $\dotp{\C}{\a\otimes\b}$ can equal $\dotp{\C}{\a\otimes\b}$ only if both numbers are equal to it. Hence $\dotp{\C}{\Q}=\dotp{\C}{\a\otimes\b}$, and $\Q$ is optimal. Since $\Q$ was arbitrary, all couplings are optimal.
\end{proof}

Thus the product plan is mainly a feasibility witness. Except in the degenerate situation where the linear cost is constant on the whole transportation polytope, it is not expected to solve optimal transport. It is also maximally diffuse: when all masses are positive, it has $nm$ positive entries, whereas Proposition~\ref{prop-sparse-optimal-plans} shows below that sparse optimal plans with at most $n+m-1$ positive entries always exist.
\index{optimal plan}
\index{plan!sparse}
\index{transportation!polytope}

\begin{figure}[H]
\centering
\begin{tabular}{@{}ccc@{}}
\includegraphics[width=.28\linewidth]{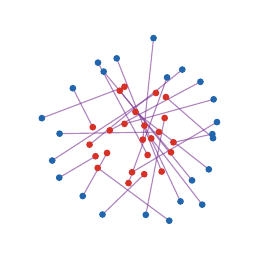} &
\includegraphics[width=.28\linewidth]{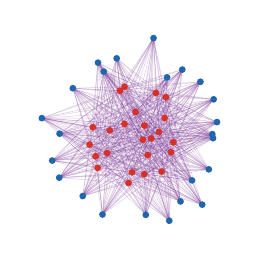} &
\includegraphics[width=.28\linewidth]{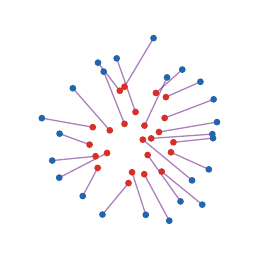} \\[-.1em]
\small deterministic graph & \small product plan $\a\otimes\b$ & \small optimal plan
\end{tabular}
\caption{Discrete couplings represented as straight transport segments on the canonical point clouds used in the matching section. The deterministic graph is a feasible Monge-type plan, the product plan spreads every source mass over all targets, and the optimal Kantorovich plan minimizes the quadratic transport cost. Line width and opacity encode the transported mass.}
\label{fig:kantorovich-coupling-polylines}
\end{figure}

\begin{figure}[H]
\centering
\begin{tabular}{@{}cccc@{}}
\small product, 20 bins & \small OT, 20 bins & \small product, 200 bins & \small OT, 200 bins \\[-.15em]
\includegraphics[width=.22\linewidth]{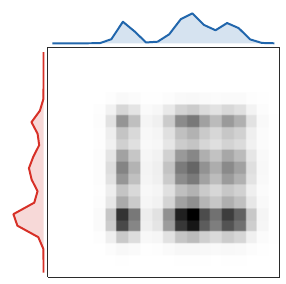} &
\includegraphics[width=.22\linewidth]{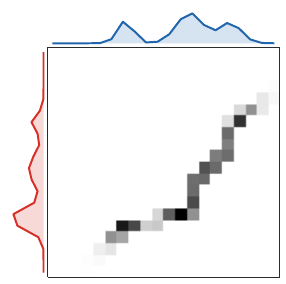} &
\includegraphics[width=.22\linewidth]{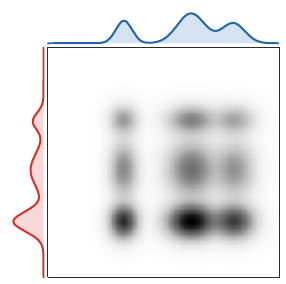} &
\includegraphics[width=.22\linewidth]{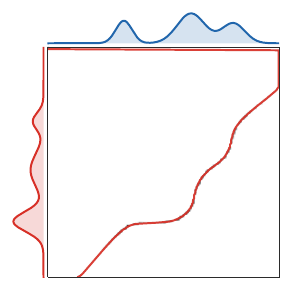}
\end{tabular}
\caption{Coupling matrices with their prescribed marginals. The central grayscale image displays $\P_{ij}$; the red curve on the left is the source marginal $\a$, and the blue curve on top is the target marginal $\b$. The independent product plan is diffuse, whereas the one-dimensional optimal plan concentrates near the monotone quantile correspondence. Only the dense 200-bin optimal panel overlays the barycentric projection $i\mapsto\sum_j \P_{ij}j/\a_i$, because that curve is meaningful visually only at sufficient resolution.}
\index{barycentric!projection}
\index{correspondence}
\label{fig:kantorovich-coupling-matrix-marginals}
\end{figure}

Whereas the Monge formulation is intrinsically asymmetric, Kantorovich's relaxed formulation is symmetric at the level of feasible sets: a coupling $\P$ belongs to $\CouplingsD(\a,\b)$ if and only if $\transp{\P}$ belongs to $\CouplingsD(\b,\a)$.
\index{Kantorovich!relaxation}

Kantorovich, aiming for economic planning, made a strong simplifying assumption: the cost of transportation should be linear in the amount of transported mass. Under this assumption, denoting $\C_{i,j}$ the cost of moving a unit amount of mass from $x_i$ to $y_j$, the discrete Kantorovich problem reads
\index{Kantorovich!problem}
\eql{\label{eq-kanto-discr}
	\MKD_{\C}(\a,\b)
	\eqdef
	\umin{\P\in\CouplingsD(\a,\b)}
		\dotp{\C}{\P}
	\eqdef
	\umin{\P\in\CouplingsD(\a,\b)}
		\sum_{i,j}\C_{i,j}\P_{i,j}.
}
This is a linear program, and its solutions need not be unique.

\begin{figure}[H]
\centering
\begin{tabular}{@{}cccc@{}}
\includegraphics[width=.18\linewidth]{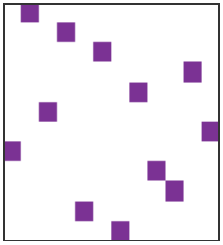} &
\index{matrix!permutation}
\includegraphics[width=.23\linewidth]{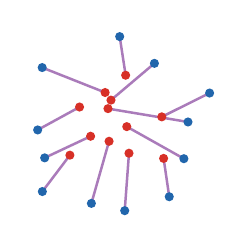} &
\includegraphics[width=.16\linewidth]{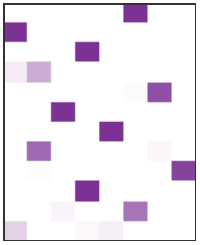} &
\includegraphics[width=.23\linewidth]{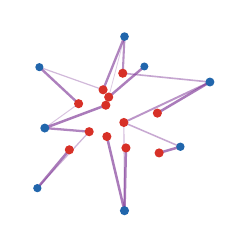} \\[-.1em]
\small permutation matrix $\P_\sigma$ &
\index{matrix!permutation}
\small assignment segments &
\small splitting matrix $\P$ &
\small splitting segments
\end{tabular}
\caption{From permutation matrices to splitting couplings. When the two empirical measures have the same number of atoms and uniform weights, an optimal plan can be a permutation matrix. Once target masses are nonuniform, the admissible set $\CouplingsD(\a,\b)$ also contains plans in which one source sends mass to several targets and several sources merge into the same target.}
\index{matrix!permutation}
\index{empirical!measure}
\label{fig:kantorovich-permutation-versus-splitting}
\end{figure}

\begin{prop}[Sparse optimal plans]\label{prop-sparse-optimal-plans}
\index{optimal plan}
\index{plan!sparse}
	Assume $\a_i>0$, $\b_j>0$ and $\sum_i \a_i=\sum_j \b_j=1$. The linear program~\eqref{eq-kanto-discr} admits an optimal coupling with at most $n+m-1$ nonzero entries.
\index{optimal coupling}
\end{prop}
\begin{proof}
	The transportation polytope is compact, so a linear objective attains its minimum at an extreme point. The row and column marginal equations have rank $n+m-1$: the only linear redundancy is that both sets of constraints impose the same total mass. The support-graph argument below is the combinatorial form of this basic-feasible-variable count.
\index{linear objective}
\index{transportation!polytope}
\index{extreme point}
\index{graph!support}

	Let $\P$ be an extreme point and let $E=\{(i,j):\P_{ij}>0\}$ be its support graph on the bipartite vertex set $\{1,\ldots,n\}\cup\{1,\ldots,m\}$. If this graph contains a cycle, orient the cycle and put alternating signs $+1,-1$ on its edges, obtaining a nonzero matrix $H$ supported on $E$ with zero row and column sums. For sufficiently small $t>0$, both $\P+tH$ and $\P-tH$ are nonnegative couplings, and $\P$ is their midpoint, contradicting extremality. Thus the support graph is a forest. Since a forest on $n+m$ vertices has at most $n+m-1$ edges, the claim follows.
\end{proof}

\begin{prop}[North-west corner feasible plan]\label{prop-northwest-corner}
\index{north-west corner!plan}
	Let $\a\in\RR_+^n$ and $\b\in\RR_+^m$ have the same positive total mass. The following greedy sweep constructs a coupling $\P\in\CouplingsD(\a,\b)$ with at most $n+m-1$ positive entries and an acyclic positive support. Starting from $(i,j)=(1,1)$ with residual masses $r_i=\a_i$ and $s_j=\b_j$, skip zero residuals, set
\index{greedy sweep}
	\[
		\P_{ij}=\min(r_i,s_j),
	\]
	subtract this value from both residuals, and advance every index whose residual has become zero. Repeat until all residual masses are exhausted.
\end{prop}
\begin{proof}
	All assignments are nonnegative. At each step, the mass placed in entry $(i,j)$ is subtracted from exactly one current row residual and one current column residual, so no row or column can receive more mass than prescribed. Conversely, an index is advanced only when its residual has been fully filled. When the algorithm stops, the total assigned mass is $\sum_i \a_i=\sum_j \b_j$, hence all row and column sums are exactly $\a$ and $\b$.

	Each positive assignment exhausts at least one current row or one current column. Before the final assignment, at most $n-1$ row advances and $m-1$ column advances can occur without terminating the construction. Hence the number of positive entries is at most $(n-1)+(m-1)+1=n+m-1$. For acyclicity, view the positive support as a bipartite graph. Once a row or column index is advanced, it never appears again, so each new positive edge either starts a new component or attaches at least one new vertex to the component currently being swept. No edge is ever added between two old vertices of the same component, so no cycle can be created.
\index{graph!bipartite}
\end{proof}

\begin{alg}[North-west corner coupling]\label{alg:north-west-corner}
\textbf{Input:} Source weights $\a\in\simplex_n$ and target weights $\b\in\simplex_m$.

\textbf{Output:} Sparse feasible coupling $\P\in\CouplingsD(\a,\b)$.

\textbf{Initialize:} Set $\P=0$, $r=\a$, $s=\b$, and $(i,j)=(1,1)$.

\textbf{While} $i\leq n$ and $j\leq m$ \textbf{do}:
\begin{algblock}
\(\eta=\min(r_i,s_j), \qquad \P_{ij}\leftarrow \eta.\)

\textbf{Update residuals:}
\(r_i\leftarrow r_i-\eta, \qquad s_j\leftarrow s_j-\eta.\)

\textbf{If} $r_i=0$ \textbf{then}:
\begin{algblock}

\textbf{Set} $i\leftarrow i+1$.

\end{algblock}
\textbf{If} $s_j=0$ \textbf{then}:
\begin{algblock}

\textbf{Set} $j\leftarrow j+1$.

\end{algblock}
\end{algblock}
\algreturnskip
\textbf{Return} $\P$.
\end{alg}

The north-west corner rule, summarized in Algorithm~\ref{alg:north-west-corner}, does not use the cost matrix and is therefore not meant to solve~\eqref{eq-kanto-discr}. Its role is algorithmic: an acyclic support corresponds to linearly independent marginal constraints. When the support has fewer than $n+m-1$ positive entries, transportation simplex implementations complete it with zero-mass basic variables to obtain a degenerate basic feasible solution. This gives a cheap initialization for the pivoting methods discussed in Section~\ref{sec-kantorovich-lp-algorithms}.
\index{cost matrix}
\index{north-west corner!rule}
\index{linear programming!basic feasible solution}
\index{transportation!simplex}
\index{marginal!constraint}

\paragraph{One-dimensional cases.}
\index{one-dimensional!transport}

In one dimension, the transportation polytope has a canonical monotone optimizer. This is the weighted version of the sorting rule from Section~\ref{sec-monge-pbm}.
\index{transportation!polytope}

\begin{prop}[One-dimensional weighted sweep]\label{prop-1d-weighted-sweep}
\index{weighted!sweep}
	Let $x_1\leq\cdots\leq x_n$ and $y_1\leq\cdots\leq y_m$ be points on the line, and let $c(x,y)=h(x-y)$ with $h$ convex. The north-west corner plan between the sorted weighted atoms is optimal for~\eqref{eq-kanto-discr}. Consequently, for unsorted one-dimensional inputs, an optimal plan is obtained in $O(n\log n+m\log m)$ time by sorting and then sweeping the masses once from left to right.
\index{optimal plan}
\index{north-west corner!plan}
\end{prop}
\begin{proof}
	The north-west plan is monotone: if $i<i'$ and $j>j'$, it cannot put positive mass on both $(i,j)$ and $(i',j')$, because the sweep exhausts rows and columns in increasing order. Conversely, any feasible plan with a crossing pair of positive entries can be improved by moving a small mass $\eta$ from $(i,j)$ and $(i',j')$ to $(i,j')$ and $(i',j)$. The two marginals are unchanged, and convexity of $h$ gives
	\[
		h(x_i-y_j)+h(x_{i'}-y_{j'})
		\geq
		h(x_i-y_{j'})+h(x_{i'}-y_j)
	\]
	for $i<i'$ and $j'<j$, with strict inequality for strictly convex $h$ and distinct points. Repeating this uncrossing procedure until no crossing remains yields a monotone optimal plan. There is only one monotone feasible plan with the prescribed sorted marginals, namely the sweep plan: it pairs the leftmost remaining source mass with the leftmost remaining target mass at every step. Sorting costs $O(n\log n+m\log m)$ and the sweep uses at most $n+m-1$ assignments.
\end{proof}

\begin{alg}[Weighted one-dimensional sweep]\label{alg:weighted-one-dimensional-sweep}
\textbf{Input:} One-dimensional atoms $(x_i,\a_i)$ and $(y_j,\b_j)$; convex cost $h(x-y)$.

\textbf{Output:} Monotone optimal coupling $\P$.

\textbf{Sort} atoms:
\(x_1\leq\cdots\leq x_n, \qquad y_1\leq\cdots\leq y_m.\)

\textbf{Set} $\P$ to the output of Algorithm~\ref{alg:north-west-corner} applied to the sorted weights $(\a_i)_i$ and $(\b_j)_j$.
\textbf{Return} $\P$.
\end{alg}

\paragraph{Permutation matrices as couplings.}
\index{matrix!permutation}

We now restrict attention to the special case $n=m$ and uniform weights $\a=\b=\ones_n/n$. In this case a matching can be encoded as a matrix with exactly one active entry per row and per column.

\begin{defn}[Permutation matrices]\label{def-permutation-matrices}
\index{matrix!permutation}
\index{permutation!matrix}
	For a permutation $\sigma\in\Perm(n)$, its permutation matrix $P_\sigma$ is
	\[
		(P_\sigma)_{i,j}=\choice{1 \qifq j=\sigma(i),\\ 0 \quad\text{otherwise}.}
	\]
	The set of all permutation matrices is
	\[
		\mathcal P_n^{\mathrm{perm}}
		\eqdef
		\enscond{P_\sigma}{\sigma\in\Perm(n)} .
	\]
\end{defn}
The corresponding probability coupling is $P_\sigma/n$. If the matching cost matrix is $\C$, then
\index{cost matrix}
\[
	\dotp{\C}{P_\sigma/n}=\frac1n\sum_{i=1}^n C_{i,\sigma(i)}.
\]
Thus the assignment problem is the minimization of a linear function over the discrete, non-convex set of permutation matrices.
\index{matrix!permutation}
\index{assignment problem}

The convex relaxation replaces this finite set by all bistochastic matrices.

\begin{defn}[Birkhoff polytope]\label{def-birkhoff-polytope}
\index{bistochastic matrix}
\index{Birkhoff polytope}
	The Birkhoff polytope is the convex set of bistochastic matrices
	\[
		\mathcal B_n
		\eqdef
		\enscond{P\in\RR_+^{n\times n}}{P\ones_n=\ones_n \qandq \transp{P}\ones_n=\ones_n}.
	\]
\end{defn}
Then $\CouplingsD(\ones_n/n,\ones_n/n)=\mathcal B_n/n$, and permutation couplings are included in this convex relaxation. More precisely,
\[
	\mathcal P_n^{\mathrm{perm}}
	=
	\mathcal B_n\cap\{0,1\}^{n\times n}
	\subset
	\mathcal B_n,
\]
so before using the structure of $\mathcal B_n$ one first obtains only the relaxed inequality
\[
	\min_{P\in\mathcal B_n}\dotp{\C}{P}
	\leq
	\min_{P\in\mathcal P_n^{\mathrm{perm}}}\dotp{\C}{P}.
\]
The next elementary facts lead to the Birkhoff--von Neumann theorem~\cite{birkhoff,vonNeumann1953assignment}, which explains why this relaxation is tight for uniform matching.
\index{Birkhoff-von Neumann theorem}
\index{matching!uniform}

\begin{defn}[Extreme points]\label{def-extreme-points}
\index{extreme point}
	For a compact convex set $\mathcal C$ in a finite-dimensional vector space,
	\[
		\operatorname{Extr}(\mathcal C)
		\eqdef
		\enscond{x\in\mathcal C}{x=(y+z)/2,\ y,z\in\mathcal C \Rightarrow y=z=x}.
	\]
\end{defn}

\begin{prop}[Existence of extreme points]\label{prop-extreme-point-existence}
\index{extreme point}
	If $\mathcal C$ is a non-empty compact convex subset of a finite-dimensional vector space, then $\operatorname{Extr}(\mathcal C)$ is non-empty.
\end{prop}
\begin{proof}
	Among all non-empty faces of $\mathcal C$, choose one of minimal affine dimension. If this face contained two distinct points, maximizing a linear functional that is not constant on the face would produce a non-empty proper exposed subface, contradicting minimality. Hence the minimal face is a singleton, and its point is extreme.
\end{proof}

\begin{example}[Unbounded convex sets may have no extreme point]
	Compactness cannot be dropped from Proposition~\ref{prop-extreme-point-existence}. For instance, the closed convex set $\enscond{(x,y)\in\RR_+^2}{xy\geq1}$ is unbounded and has no extreme point.
\index{extreme point}
\end{example}

\begin{prop}[Linear programs have extreme minimizers]\label{prop-linear-program-extreme-minimizer}
\index{extreme minimizer}
\index{linear programming}
\index{extreme point}
	Let $\mathcal C$ be non-empty and compact. For every linear form $\ell$,
\index{linear form}
	\[
		\operatorname{Extr}(\mathcal C)\cap\operatorname*{argmin}_{x\in\mathcal C}\ell(x)
		\neq\emptyset.
	\]
\end{prop}
\begin{proof}
	The set $S=\operatorname*{argmin}_{x\in\mathcal C}\ell(x)$ is non-empty, compact and convex. By Proposition~\ref{prop-extreme-point-existence}, it has an extreme point $x$. If $x=(y+z)/2$ with $y,z\in\mathcal C$, then by linearity and optimality of $x$, both $y$ and $z$ also minimize $\ell$ on $\mathcal C$, hence $y,z\in S$. Since $x$ is extreme in $S$, $y=z=x$. Thus $x$ is extreme in $\mathcal C$.
\index{extreme point}
\end{proof}

\begin{thm}[Birkhoff--von Neumann]\label{thm-birkhoff-von-neumann}
\index{Birkhoff-von Neumann theorem}
	The extreme points of $\mathcal B_n$ are exactly the permutation matrices.
\index{extreme point}
\end{thm}
\begin{proof}
	We first prove that permutation matrices are extreme. Let $P_\sigma\in\mathcal P_n^{\mathrm{perm}}$ and assume that
	\[
		P_\sigma=\frac{Q+R}{2}
		\qquad\text{with}\qquad
		Q,R\in\mathcal B_n .
	\]
	Every bistochastic matrix has entries in $[0,1]$. Since the only extreme points of $[0,1]$ are $0$ and $1$, each entry of $P_\sigma$ fixes the corresponding entries of $Q$ and $R$: if $(P_\sigma)_{ij}=0$, then $Q_{ij}=R_{ij}=0$, while if $(P_\sigma)_{ij}=1$, then $Q_{ij}=R_{ij}=1$. Hence $Q=R=P_\sigma$, so $P_\sigma$ is extreme.
	\index{matrix!permutation}

	We now prove the converse by contrapositive. Pick $P\in\mathcal B_n\setminus\mathcal P_n^{\mathrm{perm}}$. Since an integral bistochastic matrix is necessarily a permutation matrix, $P$ has at least one fractional entry. We shall split
	\[
		P=\frac{Q+R}{2}
	\]
	with $Q,R\in\mathcal B_n$ and $Q\neq R$, proving that $P$ is not extreme.

	Associate with $P$ the bipartite graph whose left vertices are the rows, whose right vertices are the columns, and whose edges are the fractional entries
	\[
		0<P_{ij}<1 .
	\]
	An entry equal to $1$ uses the whole mass of its row and column, so it is isolated in the positive support and does not appear in this fractional graph. If a left vertex $i$ is incident to a fractional edge $(i,j_1)$, then it must be incident to at least one other fractional edge. Indeed, the row sum is one; after the contribution $P_{i,j_1}\in(0,1)$, a positive amount $1-P_{i,j_1}$ remains in the same row, and it cannot be carried by an entry equal to $1$. The same argument applies to right vertices, using the column sums. Thus every non-isolated vertex of the fractional graph has degree at least two.

	Starting from any fractional edge, one may therefore walk through adjacent fractional edges without immediately backtracking and without getting stuck. Since the graph is finite, some vertex is eventually visited twice; the portion of the walk between the two visits contains a cycle. Choose a shortest such cycle and write it in alternating form
	\[
		(i_1,j_1,i_2,j_2,\ldots,i_p,j_p),
		\qquad i_{p+1}=i_1 ,
	\]
	where both $(i_s,j_s)$ and $(i_{s+1},j_s)$ are fractional for every $s$. The minimality of the cycle implies that the vertices $i_s$ are all distinct and that the vertices $j_s$ are all distinct. In particular,
	\[
		0<P_{i_s,j_s}<1
		\qandq
		0<P_{i_{s+1},j_s}<1 .
	\]
	Define
	\[
		\epsilon
		\eqdef
		\min_{1\leq s\leq p}
		\bigl\{
		P_{i_s,j_s},\,
		P_{i_{s+1},j_s},\,
		1-P_{i_s,j_s},\,
		1-P_{i_{s+1},j_s}
		\bigr\}.
	\]
	All these numbers are positive, so $\epsilon>0$. Split the cycle edges into the two alternating families
	\[
		A\eqdef \{(i_s,j_s)\}_{s=1}^p,
		\qquad
		B\eqdef \{(i_{s+1},j_s)\}_{s=1}^p .
	\]
	We now perform the standard alternating-cycle perturbation:
	\[
		Q_{ij}\eqdef
		\begin{cases}
			P_{ij}, & (i,j)\notin A\cup B,\\
			P_{ij}+\epsilon/2, & (i,j)\in A,\\
			P_{ij}-\epsilon/2, & (i,j)\in B,
		\end{cases}
		\qquad
		R_{ij}\eqdef
		\begin{cases}
			P_{ij}, & (i,j)\notin A\cup B,\\
			P_{ij}-\epsilon/2, & (i,j)\in A,\\
			P_{ij}+\epsilon/2, & (i,j)\in B .
		\end{cases}
	\]
	By the definition of $\epsilon$, all modified entries stay in $[0,1]$, so $Q$ and $R$ are nonnegative. Each row vertex $i_s$ of the cycle is incident to exactly one edge of $A$ and one edge of $B$; the $+\epsilon/2$ and $-\epsilon/2$ perturbations therefore cancel in that row. The same cancellation holds in each column vertex $j_s$, and all other rows and columns are unchanged. Consequently
	\[
		Q\ones_n=R\ones_n=\ones_n,
		\qquad
		\transp{Q}\ones_n=\transp{R}\ones_n=\ones_n,
	\]
	so $Q,R\in\mathcal B_n$. Finally, $Q\neq R$ because $\epsilon>0$ and the cycle is non-empty, while by construction $P=(Q+R)/2$. Thus $P$ is not extreme. Consequently every extreme point of $\mathcal B_n$ is integral, and every integral bistochastic matrix is a permutation matrix.
	\index{bistochastic matrix}
	\index{graph!bipartite}
\index{extreme point}
\end{proof}

The same combinatorial idea gives the constructive decomposition used to express a bistochastic matrix as a convex combination of permutations.

\begin{alg}[Birkhoff--von Neumann decomposition]\label{alg:birkhoff-von-neumann-decomposition}
\textbf{Input:} Bistochastic matrix $P\in\mathcal B_n$.

\textbf{Output:} Decomposition $P=\sum_r\lambda_rP_{\sigma_r}$.

\textbf{Initialize:} Set $R=P$ and $\mathcal L=\emptyset$.

\textbf{While} $R\neq0$ \textbf{do}:
\begin{algblock}

\textbf{Build} bipartite graph $G_R=\{(i,j):R_{ij}>0\}$.

\textbf{Set} $\sigma$ to the lexicographically first perfect matching of $G_R$.

\textbf{Set}
\(\lambda=\min_i R_{i,\sigma(i)}.\)

\textbf{Append} $(\lambda,\sigma)$ to $\mathcal L$.

\textbf{Update}
\(R\leftarrow R-\lambda P_\sigma .\)
\end{algblock}
\algreturnskip
\textbf{Return} \(P=\sum_{(\lambda_r,\sigma_r)\in\mathcal L}\lambda_rP_{\sigma_r}, \qquad \sum_r\lambda_r=1.\)
\end{alg}

\begin{cor}[Kantorovich for matching]\label{cor-kantorovich-matching}
\index{Kantorovich!relaxation}
	If $m=n$ and $\a=\b=\ones_n/n$, then the discrete Kantorovich problem~\eqref{eq-kanto-discr} admits an optimal solution of the form $P_\sigma/n$. The associated permutation $\sigma$ solves the assignment problem of Section~\ref{sec-monge-pbm}.
\index{Kantorovich!problem}
\index{assignment problem}
\end{cor}
\begin{proof}
	The feasible set is $\mathcal B_n/n$. By Proposition~\ref{prop-linear-program-extreme-minimizer}, the linear objective has an optimal extreme point. Since scaling preserves extreme points and Theorem~\ref{thm-birkhoff-von-neumann} identifies the extreme points of $\mathcal B_n$, this optimizer is $P_\sigma/n$ for some permutation $\sigma$. Its cost is exactly $n^{-1}\sum_i C_{i,\sigma(i)}$, so $\sigma$ is an optimal assignment.
\index{linear objective}
\index{Birkhoff-von Neumann theorem}
\index{extreme point}
\end{proof}

Equivalently, for uniform empirical measures, one can always choose a permutation matrix among the minimizers of the relaxed Kantorovich problem: the relaxation is tight for assignment problems.
\index{Kantorovich!problem}
\index{empirical!measure}
\index{assignment problem}
\index{matrix!permutation}

\begin{rem}[General discrete case]
	For general input measures, one does not have equivalence between Monge and Kantorovich problems, since the Monge constraint can be empty. In finite dimension, however, the support of an optimal coupling still enjoys strong sparsity: one can choose an optimal basic feasible plan whose bipartite support is cycle-free, hence with at most $n+m-1$ nonzero entries. Figure~\ref{fig:kantorovich-permutation-versus-splitting} illustrates the difference between the tight uniform matching case and the genuinely splitting nonuniform case.
\index{support}
\index{Kantorovich!problem}
\index{optimal coupling}
\index{matching!uniform}
\end{rem}

\section{Linear-Programming Algorithms}
\index{linear programming}
\label{sec-kantorovich-lp-algorithms}

The discrete Kantorovich problem is a linear program with much more structure than a generic dense LP. Its variables are the arcs of a complete bipartite network, its equality constraints are flow-conservation constraints, and its extreme points are sparse tree-like couplings. This is why classical transportation algorithms remain important even though the formulation is finite-dimensional and convex.
\index{Kantorovich!problem}
\index{equality constraint}
\index{flow!conservation}
\index{extreme point}

\paragraph{Transportation simplex and network simplex.}
\index{transportation!simplex}
\index{simplex network}

The transportation simplex goes back to Dantzig's formulation of the transportation problem~\cite{Dantzig51}. It works on basic feasible couplings, whose positive support is completed into a spanning tree of the bipartite supply-demand graph. Starting from a basis, for instance one produced by the north-west corner rule, reduced costs identify whether an unused arc can decrease the objective. Adding such an arc creates a unique cycle in the tree; one then pushes as much mass as possible around this cycle and removes the exhausted arc. This is exactly the simplex method specialized to the transportation polytope, with pivots that can be implemented using graph operations instead of a generic basis inverse.
\index{spanning tree}
\index{feasible coupling}
\index{north-west corner!rule}
\index{transportation!polytope}
\index{transportation!simplex}
\index{cost!reduced}

The network simplex is the corresponding pivoting method for general minimum-cost-flow problems~\cite{bertsekas1988dual}. It keeps node potentials, reduced costs and a spanning-tree basis, and it has become one of the most effective exact solvers for medium-scale discrete OT. Like the ordinary simplex method, its worst-case number of pivots can be exponential for adversarial instances, but the per-pivot operations exploit sparsity and are very efficient in practice. For theoretical polynomial guarantees, one can instead use strongly polynomial minimum-cost-flow algorithms, such as Orlin's algorithm~\cite{Orlin1997}. In a dense balanced transportation problem with $n$ sources and $n$ targets, the graph has $O(n)$ vertices and $O(n^2)$ arcs, so these general bounds are polynomial but still much heavier than the nearly matrix-vector structure that Sinkhorn will exploit.
\index{flow!minimum-cost}
\index{simplex network}

\paragraph{Interior-point methods.}
\index{interior-point method}

Generic interior-point methods approach the same LP through a smooth central path. For the transport polytope, the logarithmic-barrier version is
\index{barrier!logarithmic}
\index{transportation!polytope}
\index{central!path}
\eql{\label{eq-transport-log-barrier}
	\P_\epsilon
	\eqdef
	\uargmin{\substack{\P\ones_m=\a,\ \P^\top\ones_n=\b\\ \P_{ij}>0}}
		\dotp{\C}{\P}
		-
		\epsilon\sum_{i,j}\log \P_{ij},
}
where $\epsilon>0$ is decreased along the algorithm. The barrier is singular at the boundary, so each iterate stays strictly inside the transportation polytope; as $\epsilon\downarrow0$, the central path approaches the set of LP minimizers. Each Newton step solves a linear system involving the marginal constraints and the current diagonal Hessian $\operatorname{diag}(\epsilon/\P_{ij}^2)$, which gives robust polynomial complexity but can be expensive for dense couplings~\cite{nesterov1994interior}.
\index{Newton step}
\index{transportation!polytope}
\index{marginal!constraint}
\index{central!path}

\begin{figure}[H]
\centering
\setlength{\tabcolsep}{2pt}
\begin{tabular}{@{}cccc@{}}
\includegraphics[width=.21\linewidth]{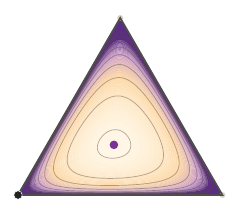} &
\includegraphics[width=.21\linewidth]{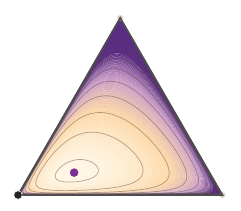} &
\includegraphics[width=.21\linewidth]{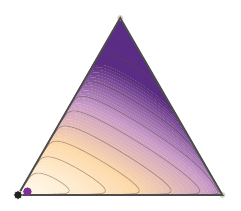} &
\includegraphics[width=.21\linewidth]{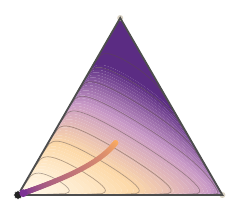} \\[-.1em]
\small large $\epsilon$ &
\small medium $\epsilon$ &
\small small $\epsilon$ &
\small central path
\index{central!path}
\end{tabular}
\caption{Logarithmic-barrier central path for a two-dimensional equilateral triangular slice of a linear program. The feasible triangle is constrained by positive slacks, and the displayed objective is $\ell^\top z-\epsilon\sum_i\log(b_i-\dotp{a_i}{z})$. Large $\epsilon$ selects a central interior point; decreasing $\epsilon$ moves the minimizer toward the optimal vertex while never touching the boundary. This should be contrasted with entropic OT, where the entropy temperature $\epsilon$ is usually fixed and defines the regularized problem itself.}
\index{barrier!logarithmic}
\label{fig:kantorovich-log-barrier-lp-geometry}
\end{figure}

The comparison with Sinkhorn is therefore subtle. Both methods keep iterates positive, but they use positivity in different ways. Interior-point algorithms solve the original LP by decreasing the barrier parameter $\epsilon$ and following a central path. Sinkhorn fixes an entropic temperature $\epsilon$ and solves a different, KL-regularized OT problem by alternating diagonal scalings. The parameter $\epsilon$ may be decreased in continuation strategies, but for a fixed run it is part of the objective rather than only an algorithmic barrier.
\index{barrier!parameter}
\index{central!path}

\section{Relaxation for Arbitrary Measures}
\index{Kantorovich!relaxation}
\label{sec-kantorovich-continuous}

This section lifts the finite-dimensional coupling matrix to a joint probability measure. The payoff is that existence, duality and metric properties can be stated for arbitrary laws, including discrete, singular and continuous distributions.
\index{probability measure}

\paragraph{Continuous couplings.}
\index{continuous!coupling}

The first step is to formalize what it means for a joint law to have prescribed marginals.
\index{joint!law}

\begin{defn}[Marginals of a joint measure]\label{def-joint-marginals}
\index{joint!measure}
	Let $\pi\in\Mm_+^1(\Xx\times\Yy)$ and let $P_\Xx(x,y)=x$, $P_\Yy(x,y)=y$ be the coordinate projections. The marginals of $\pi$ are
	\[
		\pi_1\eqdef(P_\Xx)_\sharp\pi\in\Mm_+^1(\Xx),
		\qquad
		\pi_2\eqdef(P_\Yy)_\sharp\pi\in\Mm_+^1(\Yy).
	\]
	Equivalently, for all bounded continuous test functions $f$ on $\Xx$ and $g$ on $\Yy$,
	\[
		\int_{\Xx\times\Yy} f(x)\d\pi(x,y)=\int_\Xx f\d\pi_1,
		\qquad
		\int_{\Xx\times\Yy} g(y)\d\pi(x,y)=\int_\Yy g\d\pi_2.
	\]
\end{defn}

A useful mnemonic for the marginal constraint $\pi_1=\al$ and $\pi_2=\be$ is the formal row-and-column notation
\index{marginal!constraint}
\[
	\int_\Yy \d\pi(x,y)=\d\al(x),
	\qquad
	\int_\Xx \d\pi(x,y)=\d\be(y),
\]
which is made rigorous by Definition~\ref{def-joint-marginals}, or equivalently by the identities $\pi(A\times\Yy)=\al(A)$ and $\pi(\Xx\times B)=\be(B)$ for measurable sets $A\subset\Xx$ and $B\subset\Yy$.

\begin{defn}[Couplings]\label{def-continuous-couplings}
\index{continuous!coupling}
	Given $\al\in\Mm_+^1(\Xx)$ and $\be\in\Mm_+^1(\Yy)$, the set of couplings between $\al$ and $\be$ is
	\eql{\label{eq-coupling-generic}
		\Couplings(\al,\be)
		\eqdef
		\enscond{\pi\in\Mm_+^1(\Xx\times\Yy)}{\pi_1=\al \qandq \pi_2=\be}.
	}
	This is the continuous analogue of the transportation polytope~\eqref{eq-discr-couplings}.
\index{transportation!polytope}
\end{defn}

\begin{rem}[Probabilistic interpretation of couplings]
	If $X\sim\al$ and $Y\sim\be$, then $\pi\in\Couplings(\al,\be)$ means that $\pi$ is the law of a pair $(X,Y)$ whose coordinates have laws $\al$ and $\be$. The coupling encodes the dependence between $X$ and $Y$. The tensor product $\al\otimes\be$ corresponds to independence, whereas a graph coupling $(\Id,T)_\sharp\al$ corresponds to the deterministic relation $Y=T(X)$.

	In the discrete case, when $\al=\sum_i \a_i\de_{x_i}$ and $\be=\sum_j \b_j\de_{y_j}$, the constraint $\pi_1=\al$ and $\pi_2=\be$ forces every coupling to have the form $\pi=\sum_{i,j}\P_{ij}\de_{(x_i,y_j)}$ with $\P\in\CouplingsD(\a,\b)$. The discrete formulation is therefore a special case of the continuous one, not merely an approximation.
\end{rem}

Unlike the Monge constraint, the coupling constraint is never empty. The continuous feasibility witness is the tensor product coupling, the measure-theoretic version of the discrete product plan above.

\begin{defn}[Tensor product and trivial coupling]\label{def-tensor-product-coupling}
\index{product!coupling}
\index{tensor product!coupling}
	Given $\al\in\Mm_+^1(\Xx)$ and $\be\in\Mm_+^1(\Yy)$, the tensor product coupling $\al\otimes\be$ is the probability measure on $\Xx\times\Yy$ defined by
	\[
		\int_{\Xx\times\Yy} h(x,y)\d(\al\otimes\be)(x,y)
		=
		\int_\Xx\left(\int_\Yy h(x,y)\d\be(y)\right)\d\al(x)
	\]
	for every bounded measurable $h$. It is also called the trivial coupling because it makes the two coordinates independent.
\end{defn}
Indeed, for every $f\in\Cc_b(\Xx)$,
\[
	\int_{\Xx\times\Yy} f(x)\d(\al\otimes\be)(x,y)
	=
	\left(\int_\Xx f(x)\d\al(x)\right)\left(\int_\Yy\d\be(y)\right)
	=
	\int f\d\al,
\]
and similarly for the second marginal, so $\al\otimes\be\in\Couplings(\al,\be)$.

\begin{prop}[Product optimality is degenerate]\label{prop-product-coupling-degenerate}
\index{product!coupling}
	Assume that $\Xx$ and $\Yy$ are compact metric spaces and that $c\in\Cc(\Xx\times\Yy)$. The following statements are equivalent:
	\[
		\al\otimes\be\in\uargmin{\pi\in\Couplings(\al,\be)}\int c\,\d\pi,
		\qquad
		\text{every coupling in }\Couplings(\al,\be)\text{ is optimal}.
	\]
	They are also equivalent to the additive decomposition of the cost on the product support,
	\[
		c(x,y)=u(x)+v(y).
	\]
\end{prop}
\begin{proof}
	If every coupling is optimal, then $\al\otimes\be$ is optimal. Conversely, assume that $\al\otimes\be$ is optimal. We first show that, for every $x_0,x_1\in\supp(\al)$ and $y_0,y_1\in\supp(\be)$,
	\[
		c(x_0,y_0)+c(x_1,y_1)
		=
		c(x_0,y_1)+c(x_1,y_0).
	\]
	Indeed, if this equality failed, after exchanging $y_0$ and $y_1$ if necessary one would have a strict inequality
	\[
		c(x_0,y_0)+c(x_1,y_1)
		>
		c(x_0,y_1)+c(x_1,y_0).
	\]
	By continuity, the strict inequality persists with a uniform margin on small neighborhoods $U_0,U_1$ of $x_0,x_1$ and $V_0,V_1$ of $y_0,y_1$, chosen disjoint within each pair. Since the four points lie in the supports, these neighborhoods have positive marginal mass. Denote by $\al_i$ and $\be_i$ the normalized restrictions of $\al$ to $U_i$ and of $\be$ to $V_i$, and choose
	\[
		0<\lambda\leq
		\min\{\al(U_0)\be(V_0),\al(U_1)\be(V_1)\}.
	\]
	The exchanged measure
	\[
		\tilde\pi
		=
		\al\otimes\be
		-\lambda\,\al_0\otimes\be_0
		-\lambda\,\al_1\otimes\be_1
		+\lambda\,\al_0\otimes\be_1
		+\lambda\,\al_1\otimes\be_0
	\]
	is nonnegative and has the same two marginals as $\al\otimes\be$. The uniform strict inequality on the neighborhoods implies that $\int c\,\d\tilde\pi<\int c\,\d(\al\otimes\be)$, contradicting optimality.

	Fixing any $x_\star\in\supp(\al)$ and $y_\star\in\supp(\be)$, the equality of cross differences gives, for all $(x,y)\in\supp(\al)\times\supp(\be)$,
	\[
		c(x,y)=c(x,y_\star)+c(x_\star,y)-c(x_\star,y_\star).
	\]
	Thus $c=u+v$ on the product support. Every coupling is concentrated on this product support, so for any $\pi\in\Couplings(\al,\be)$,
	\[
		\int c\,\d\pi
		=
		\int u\,\d\al+\int v\,\d\be,
	\]
	which depends only on the marginals. Hence all couplings are optimal.
\end{proof}

The tensor product is therefore a trivial feasible coupling, not a typical optimizer. Product optimality means that the cost cannot distinguish between dependences once the marginals are fixed. The continuity assumption is important: if $\al=\be$ is the uniform law on $[0,1]$ and $c(x,y)=\ones_{\{x=y\}}$, then $\al\otimes\be$ has zero cost and is optimal, whereas the identity coupling has cost one. Thus, for arbitrary merely measurable costs, changing the cost on an $\al\otimes\be$-negligible set may affect singular couplings without changing the product cost.
\index{feasible coupling}

If there exists a map $T:\Xx\to\Yy$ such that $T_\sharp\al=\be$, then the Monge map induces the graph coupling $\pi=(\Id,T)_\sharp\al\in\Couplings(\al,\be)$, characterized by
\index{Monge!problem}
\[
	\int_{\Xx\times\Yy} h(x,y)\d\pi(x,y)
	=
	\int_\Xx h(x,T(x))\d\al(x).
\]
Applying this identity to $h(x,y)=f(x)$ or $h(x,y)=g(y)$ gives respectively $\pi_1=\al$ and $\pi_2=\be$. Thus graph couplings are precisely the Kantorovich representation of deterministic Monge maps.
A last important class consists of semi-discrete problems, where $\al$ has a density and $\be$ is discrete. In this case couplings are singular measures supported on a union of graphs or cells inside $\Xx\times\Yy$.
\index{semi-discrete!OT}

\paragraph{Continuous Kantorovich problem.}
\index{Kantorovich!problem}

The discrete Kantorovich problem~\eqref{eq-kanto-discr} becomes, for arbitrary measures, the minimization of the average cost over all couplings,
\eql{\label{eq-mk-generic}
	\MK_c(\al,\be)
	\eqdef
	\inf_{\pi\in\Couplings(\al,\be)}
	\int_{\Xx\times\Yy} c(x,y)\d\pi(x,y).
}
This is an infinite-dimensional linear program over a space of measures.
\index{linear programming!finite-dimensional}

\begin{rem}[Probabilistic interpretation of Kantorovich's problem]
	The same problem can be written as
	\[
		\MK_c(\al,\be)
		=
		\inf_{X\sim\al,\,Y\sim\be}\EE(c(X,Y)).
	\]
	The minimization is not over the marginal laws, which are fixed, but over all possible dependences between the two random variables. OT therefore chooses the cheapest joint law among all couplings.
\index{random variable}
\index{joint!law}
\end{rem}

\begin{prop}[Existence on compact spaces]\label{prop-kantorovich-existence-compact}
\index{compact space}
	Assume that $\Xx$ and $\Yy$ are compact metric spaces and that $c\in\Cc(\Xx\times\Yy)$. Then the Kantorovich problem~\eqref{eq-mk-generic} admits at least one minimizer.
\index{Kantorovich!problem}
\end{prop}
\begin{proof}
	The constraint set is non-empty because it contains the product coupling $\al\otimes\be$. It is closed for weak convergence of measures because the marginal constraints are preserved under weak convergence. Since $\Xx\times\Yy$ is compact, the set of probability measures on it is compact for the weak topology, and therefore $\Couplings(\al,\be)$ is compact. Finally, the functional $\pi\mapsto\int c\d\pi$ is weakly continuous because $c$ is continuous and bounded. The minimum is thus attained.
\index{probability measure}
\index{marginal!constraint}
\index{product!coupling}
\index{weak!convergence}
\index{topology!weak}
\end{proof}

On non-compact domains, one needs coercivity and moment conditions. For the Wasserstein cost $c(x,y)=d(x,y)^p$ on a Polish metric space, the natural domain is
\[
	\mathcal P_p(\Xx)
	\eqdef
	\enscond{\mu\in\Mm_+^1(\Xx)}{\int d(x,x_0)^p\d\mu(x)<+\infty},
\]
for one, and hence every, reference point $x_0$. If $\al,\be\in\mathcal P_p(\Xx)$, then the product coupling has finite $p$-cost up to the triangle inequality, so the Kantorovich value is finite. Existence of minimizers holds under standard lower-semicontinuity assumptions on $c$, using tightness of finite-moment sublevel sets~\cite{Villani09,SantambrogioBook}.
\index{triangle inequality}
\index{tightness}
\index{product!coupling}

\paragraph{Monge--Kantorovich equivalence.}
\index{Monge-Kantorovich equivalence}

The proof of Brenier's theorem~\ref{thm-brenier} relies on Kantorovich relaxation and duality. It proves that, under its hypotheses, the relaxation is tight: it has the same cost as the Monge problem and its optimal coupling is induced by a map.
\index{optimal coupling}
\index{Brenier!theorem}
\index{Kantorovich!relaxation}
\index{Monge!problem}

\begin{cor}[Monge--Kantorovich equivalence under Brenier]\label{cor-monge-kantorovich-brenier}
\index{Monge-Kantorovich equivalence}
	Assume that $\al$ is absolutely continuous with respect to Lebesgue measure and that $c(x,y)=\norm{x-y}^2$. If $T$ is the Brenier map solving Monge's problem, then $\pi=(\Id,T)_\sharp\al$ is the unique optimal coupling solving the Kantorovich problem. In particular, Monge and Kantorovich costs are the same.
\index{Lebesgue measure}
\index{Kantorovich!problem}
\index{optimal coupling}
\index{Brenier!map}
\end{cor}
\begin{proof}
	The proof of Brenier's theorem shows that the support of any optimal Kantorovich plan is contained in the subdifferential $\partial\phi$ of a convex function $\phi$. When $\al$ has a density, $\phi$ is differentiable $\al$-almost everywhere, so $\partial\phi(x)=\{\nabla\phi(x)\}$ for $\al$-almost every $x$. Thus every optimal coupling is concentrated on the graph of $T=\nabla\phi$ and must equal $(\Id,T)_\sharp\al$. The graph coupling is feasible and optimal, and the two formulations have the same value.
\index{support}
\index{subdifferential}
\index{convex!function}
\index{Brenier!theorem}
\end{proof}

The density assumption is exactly what prevents the relaxed plan from using several destinations at a nonsmooth point.

\begin{rem}[Nonsmooth potentials and splitting]
	If $\al$ does not have a density, then $\phi$ may be non-smooth on a set charged by $\al$, and non-smooth points can lead to mass splitting. For instance, moving $\delta_0$ to $(\delta_{-1}+\delta_{+1})/2$ can be represented by a plan concentrated on the set-valued subdifferential of $\phi(x)=|x|$, but not by a deterministic map. This is the continuous counterpart of the gap between the uniform matching case of Corollary~\ref{cor-kantorovich-matching} and the general splitting case.
\index{matching!uniform}
\end{rem}

\begin{rem}[Probabilistic form of tightness]
\index{tightness}
	If $(X,Y)$ has the optimal Kantorovich law under the assumptions of Corollary~\ref{cor-monge-kantorovich-brenier}, then $Y=T(X)$ almost surely with $X\sim\al$ and $T(X)\sim\be$. This is analogous to the Birkhoff--von Neumann result in the fully discrete uniform case: in both settings, the convex relaxation admits an optimizer satisfying the original deterministic constraint. The hypotheses are quite different, however: Birkhoff--von Neumann is finite-dimensional and need not give uniqueness, whereas Brenier's theorem uses absolute continuity of the source and gives uniqueness of the optimal map almost everywhere.
\index{Birkhoff-von Neumann theorem}
\index{Brenier!theorem}
\index{absolute continuity}
\end{rem}

\section{\texorpdfstring{$c$}{c}-Cyclical Monotonicity}
\index{cyclic!monotonicity}
\label{sec-cyclical-monotonicity}

Cyclical monotonicity is the local geometric fingerprint of optimality. It converts a global minimization problem into finite exchange inequalities and is the bridge from Kantorovich plans to convex potentials.
\index{cyclic!monotonicity}
\index{convex!potential}

Optimal transport plans behave well when one looks at any finite sub-collection of points in their support: the restriction is still an optimal matching for those points alone. For finitely supported marginals this is immediate, and it leads to the notion of $c$-cyclical monotonicity.
\index{plan!transport}
\index{cyclic!monotonicity}
\index{c-cyclical monotonicity}

\paragraph{Support and $c$-cyclical monotonicity.}
\index{cyclic!monotonicity}
\index{c-cyclical monotonicity}

To formalize this, one needs a precise notion of support, i.e. the closed set that carries the mass of the coupling.

\begin{defn}[Support]\label{def-support}
	For a Radon measure $\pi$ on $\Xx\times\Yy$,
\index{Radon!measure}
	\[
		\supp(\pi)
		\eqdef
		\enscond{(x,y)}{\pi(U\times V)>0\text{ for every open }U\ni x,\ V\ni y}.
	\]
\end{defn}

\begin{defn}[$c$-cyclical monotonicity]\label{def:ccm}
\index{cyclic!monotonicity}
\index{c-cyclical monotonicity}
	A set $\Gamma\subset\Xx\times\Yy$ is $c$-cyclically monotone if, for every $k\geq2$, every finite family $(x_i,y_i)_{i=1}^k\subset\Gamma$ and every permutation $\sigma$ of $\{1,\ldots,k\}$,
	\[
		\sum_{i=1}^k c(x_i,y_i)
		\leq
		\sum_{i=1}^k c(x_i,y_{\sigma(i)}).
	\]
\end{defn}

Any permutation is a product of cycles, so it suffices to verify the inequality for cyclic permutations,
\[
	\sum_{i=1}^k c(x_i,y_i)
	\leq
	\sum_{i=1}^k c(x_i,y_{i+1}),
	\qquad y_{k+1}=y_1.
\]

\paragraph{Optimal matching to optimal transport.}
\index{matching!optimal}

Let the marginals be uniform on $n$ points, $\al=\frac1n\sum_{i=1}^n\delta_{x_i}$ and $\be=\frac1n\sum_{i=1}^n\delta_{y_i}$. By Corollary~\ref{cor-kantorovich-matching}, there exists an optimal plan induced by a permutation. Its support $\Gamma=\{(x_i,y_{\sigma(i)})\}_i$ is $c$-cyclically monotone: otherwise exchanging the finitely many targets along a violating cycle would lower the matching cost. The following theorem, in the spirit of Rockafellar's cyclic-monotonicity theorem~\cite{rockafellar2015convex}, says that the same finite-exchange condition holds for any optimal coupling, not only for finite uniform matchings.
\index{optimal coupling}
\index{optimal plan}
\index{cyclic!monotonicity}
\index{matching!uniform}

\begin{thm}[Optimal plans are $c$-cyclically monotone]\label{thm:opt_ccm}
\index{optimal plan}
	Assume $c$ is continuous. For any optimal plan $\pi$ solving the Kantorovich problem~\eqref{eq-mk-generic}, $\supp(\pi)$ is $c$-cyclically monotone.
\index{Kantorovich!problem}
\end{thm}
\begin{proof}
	We prove the contrapositive. Suppose that $\supp(\pi)$ is not $c$-cyclically monotone. Then there exist points $(x_i,y_i)_{i=1}^k$ in the support and a permutation $\sigma$ such that
	\[
		\sum_i c(x_i,y_i)>
		\sum_i c(x_i,y_{\sigma(i)}).
	\]
	By continuity of $c$, after shrinking neighborhoods $U_i\ni x_i$ and $V_i\ni y_i$, the same strict inequality holds uniformly for every choice of points in these neighborhoods:
	\[
		\sum_i c(u_i,v_i)
		>
		\sum_i c(u_i,\tilde v_{\sigma(i)})
		\qquad
		(u_i\in U_i,\ v_i\in V_i,\ \tilde v_{\sigma(i)}\in V_{\sigma(i)}).
	\]
	Choose the sets so that $\pi(U_i\times V_i)>0$. Because there are only finitely many rectangles, one can choose $\lambda>0$ small enough that the scaled restrictions
	\[
		\pi_i=\lambda\frac{\pi|_{U_i\times V_i}}{\pi(U_i\times V_i)}
	\]
	have common mass $\lambda$ and satisfy $\sum_i\pi_i\leq\pi$. Let $\alpha_i=(P_\Xx)_\sharp\pi_i$ and $\beta_i=(P_\Yy)_\sharp\pi_i$. Define
	\[
		\tilde\pi
		=
		\pi-\sum_i\pi_i
		+\sum_i \frac{\alpha_i\otimes\beta_{\sigma(i)}}{\lambda}.
	\]
	The removed and reinserted first marginals are both $\sum_i\alpha_i$, and the removed and reinserted second marginals are both $\sum_i\beta_i$ because $\sigma$ is a permutation. Hence $\tilde\pi\in\Couplings(\al,\be)$. Integrating the uniform strict inequality against the product probability $\otimes_i(\pi_i/\lambda)$ shows that the reinserted crossed terms have strictly smaller cost than the removed diagonal terms. This contradicts the optimality of $\pi$.
\end{proof}

\paragraph{Monotonicity.}

Assume the optimal plan is induced by a measurable map $T:\Xx\to\Yy$, i.e. $\pi=(\Id,T)_\sharp\al$. For any $k$ points $x_1,\ldots,x_k$ in the domain, cyclical monotonicity reads
\index{optimal plan}
\index{cyclic!monotonicity}
\[
	\sum_{i=1}^k c(x_i,T(x_i))
	\leq
	\sum_{i=1}^k c(x_i,T(x_{i+1})),
	\qquad x_{k+1}=x_1.
\]
For $c(x,y)=\frac12\norm{x-y}^2$, taking $k=2$ gives, for any $x,y$,
\[
	\dotp{T(x)-T(y)}{x-y}\geq0,
\]
so $T$ is a monotone vector field. Brenier's theorem adds that, when $\al$ is absolutely continuous, $T=\nabla\phi$ for a convex potential $\phi$. The converse fails in dimension $d\geq2$: as shown by the rotation example in the Monge section, a small rotation is monotone yet not a gradient.
\index{Brenier!theorem}
\index{convex!potential}

\paragraph{One dimension.}

In one space dimension, with cost $c(x,y)=|x-y|^p$ for any $p\geq1$, the two-point inequality becomes
\[
	|x-T(x)|^p+|y-T(y)|^p
	\leq
	|x-T(y)|^p+|y-T(x)|^p,
\]
which is equivalent to $T(x)\leq T(y)$ whenever $x<y$. Thus $T$ must be nondecreasing, recovering the classical monotone rearrangement.
\index{monotone!rearrangement}

\section{Metric Properties: Wasserstein Distances}
\index{Wasserstein!distance}

The final part of the section proves that OT costs are genuine distances when the ground cost comes from a metric. It also compares Wasserstein convergence with total variation and explains why OT is weak enough to move Dirac masses continuously.
\index{ground cost}
\index{Dirac mass}
\index{total variation}

\paragraph{OT defines a distance.}
\index{Wasserstein!distance}

An important feature of OT is that it defines a distance between histograms and probability measures as soon as the cost matrix satisfies certain suitable properties. Indeed, OT can be understood as a canonical way to lift a ground distance between points to a distance between histograms or measures.
\index{histogram}
\index{cost matrix}
\index{probability measure}
The proof of this result relies on a ``gluing lemma'', which we first prove in the discrete case.
\index{gluing lemma}

\begin{lem}[Discrete gluing lemma]\label{lem-gluing-discr}
	Given $(\a,\b,\VectMode{c}) \in \simplex_n \times \simplex_p \times \simplex_m$,
	let $\P \in \CouplingsD(\a,\b)$ and $\Q \in \CouplingsD(\b,\VectMode{c})$. Then there exists a 3-D tensor coupling $\S \in \RR_+^{n \times p \times m}$
	such that the 2-D marginals satisfy
	\eq{
		\sum_{k} \S_{i,j,k} = \P_{i,j}
		\qandq
		\sum_{i} \S_{i,j,k} = \Q_{j,k}.
	}
	Consequently the marginal between the first and third variables,
	\[
		\R_{i,k}\eqdef\sum_j \S_{i,j,k},
	\]
	belongs to $\CouplingsD(\a,\VectMode{c})$. For the canonical construction below, this glued coupling is the twisted matrix product
	\[
		\R=\P\diag(1/\b)\Q,
		\qquad
		\R_{i,k}=\sum_{j:\b_j>0}\frac{\P_{i,j}\Q_{j,k}}{\b_j}.
	\]
	In the matrix notation, $1/\b_j$ is understood as $0$ when $\b_j=0$.
	Figure~\ref{fig:kantorovich-discrete-gluing-lemma} displays this construction in matrix form.
\index{gluing lemma}
\end{lem}
\begin{proof}
	One verifies that
	\eql{\label{eq-glued-discr}
		\S_{i,j,k} = \choice{
			\frac{\P_{i,j} \Q_{j,k}}{\b_j}  \qifq \b_j \neq 0 \\
			0 \text{ otherwise}
		}
	}
	is acceptable. Indeed, if $\b_j \neq 0$
	\eq{
		\sum_{k} \S_{i,j,k} = \sum_{k} \frac{\P_{i,j} \Q_{j,k}}{\b_j}
		= \frac{\P_{i,j}}{\b_j} ( \Q \ones_m )_j = \frac{\P_{i,j}}{\b_j}  \b_j.
	}
	If $\b_j = 0$, then necessarily $\P_{i,j}=0$ and $\sum_{k} \S_{i,j,k} = 0 = \P_{i,j}$.
	The same computation gives the other prescribed marginal:
	\[
		\sum_i \S_{i,j,k}
		=
		\choice{
			\frac{\Q_{j,k}}{\b_j}\sum_i\P_{i,j}=\Q_{j,k} \qifq \b_j>0,\\
			0=\Q_{j,k} \qifq \b_j=0.
		}
	\]
	Summing over $j$ then gives the displayed formula for $\R$. Its row and column sums are
	\[
		\sum_k\R_{i,k}=\sum_j\P_{i,j}=\a_i,
		\qquad
		\sum_i\R_{i,k}=\sum_j\Q_{j,k}=\VectMode{c}_k,
	\]
	so $\R\in\CouplingsD(\a,\VectMode{c})$.
\end{proof}

\begin{figure}[H]
\centering
\begin{tabular}{@{}cccc@{}}
\small $P\in\CouplingsD(\a,\b)$ & \small $Q\in\CouplingsD(\b,\VectMode{c})$ & \small glued $R$ & \small direct OT \\[-.15em]
\includegraphics[width=.22\linewidth]{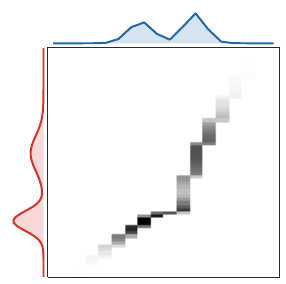} &
\index{gluing lemma}
\includegraphics[width=.22\linewidth]{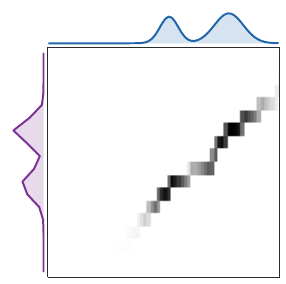} &
\includegraphics[width=.22\linewidth]{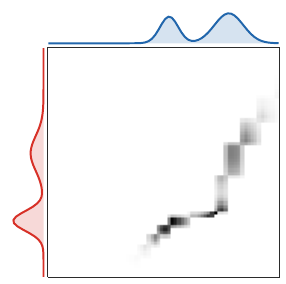} &
\includegraphics[width=.22\linewidth]{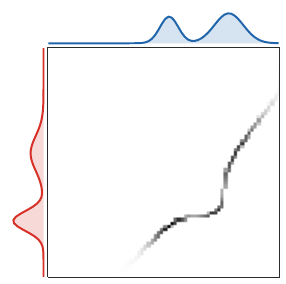}
\end{tabular}
\caption{Discrete gluing lemma in matrix form. The first two panels are optimal one-dimensional couplings through an intermediate marginal $\b$. The third panel shows the induced marginal $R=P\diag(1/\b)Q$ between $\a$ and $\VectMode{c}$; it is feasible and is the coupling used in the triangle-inequality proof. Because the intermediate marginal is represented on a coarser grid, the glued coupling is more mediated than the direct optimal coupling shown on the right. The thin box frames only the coupling matrix in each panel, while the attached marginal strips remain outside it.}
\index{gluing lemma}
\label{fig:kantorovich-discrete-gluing-lemma}
\end{figure}


When the cost matrix is the $p$th power of a distance matrix, the discrete Kantorovich value becomes a metric on histograms.

\begin{defn}[Discrete Wasserstein distance]\label{def-discrete-wasserstein-distance}
\index{Wasserstein!distance}
	Let $\distD\in\RR_+^{n\times n}$ be a distance matrix on $\range{n}$ and let $p\geq1$. The discrete $p$-Wasserstein distance between histograms $\a,\b\in\simplex_n$ is
	\eql{\label{eq-wass-p-disc}
		\WassD_p(\a,\b) \eqdef \MKD_{\distD^p}(\a,\b)^{1/p}.
	}
	It depends on the chosen ground distance $\distD$.
\end{defn}

\begin{prop}[Metric property of discrete Wasserstein distance]\label{prop-metric-histo}
\index{Wasserstein!distance}
	For every distance matrix $\distD$ on $\range{n}$, Definition~\ref{def-discrete-wasserstein-distance} defines a distance on $\simplex_n$: $\WassD_p$ is symmetric, positive, $\WassD_p(\a,\b)=0$ if and only if $\a = \b$, and it satisfies the triangle inequality
\index{triangle inequality}
\index{Wasserstein!distance}
\eq{
	\foralls \a,\b,\VectMode{c} \in \simplex_n, \quad \WassD_p(\a,\VectMode{c}) \leq \WassD_p(\a,\b) + \WassD_p(\b,\VectMode{c}).
}
\end{prop}

\begin{proof}
For symmetry, since $\distD^p$ is symmetric, we use the fact that if $\P \in \CouplingsD(\a,\b)$ is optimal for $\WassD_p(\a,\b)$, then $\P^\top \in \CouplingsD(\b,\a)$ is optimal for $\WassD_p(\b,\a)$. For definiteness, since $\C = \distD^p$ has a null diagonal, $\WassD_p(\a,\b)=0$ is achieved by the diagonal coupling $\P^\star=\diag(\a)=\diag(\b)$ when $\a=\b$; by positivity of all off-diagonal elements of $\distD^p$, $\WassD_p(\a,\b)>0$ whenever $\a\ne \b$ because any admissible coupling then has a nonzero element outside the diagonal.

To prove the triangle inequality in this discrete setting, we consider $\a,\b,\VectMode{c} \in\simplex_n$, and let $\P$ and $\Q$ be two optimal solutions of the transport problems between $\a$ and $\b$, and $\b$ and $\VectMode{c}$ respectively.
\index{triangle inequality}
We use the gluing Lemma~\ref{lem-gluing-discr} which defines $\S \in \RR_+^{n^3}$ with marginals $\sum_{k}\S_{\cdot,\cdot,k}=\P$ and $\sum_i \S_{i,\cdot,\cdot}=\Q$. We define $\R=\sum_{j}\S_{\cdot,j,\cdot}$, which is an element of $\CouplingsD(\a,\VectMode{c})$.
\index{gluing lemma}
\[
\begin{tikzpicture}
  \node (a) at (0,0) {$a$};
  \node (b) at (2,0) {$b$};
  \node (c) at (4,0) {$c$};

  \draw[->] (a) -- (b) node[midway, above] {$P$};
  \draw[->] (b) -- (c) node[midway, above] {$Q$};
  \draw[->, bend right=45] (a) to node[midway, below] {$R$} (c);
\end{tikzpicture}
\]
Note that if one assumes $\b>0$ then $\R = \P \diag(1/\b) \Q$.

The triangle inequality follows from
\index{triangle inequality}
$$\begin{aligned}
\WassD_p(\a,\VectMode{c})&=\Big(\min_{\tilde\R\in \CouplingsD(\a,\VectMode{c})}\dotp{\tilde\R}{\distD^p}\Big)^{1/p} \leq \dotp{\R}{\distD^p}^{1/p}\\
&= \Big(\sum_{i,k}  \distD^p_{ik}\sum_{j} \S_{i,j,k} \Big)^{1/p}
 \leq \Big(\sum_{i,j,k} \left(\distD_{ij}+\distD_{j,k}\Big)^p  \S_{i,j,k} \right)^{1/p} \\
& \leq \Big(\sum_{i,j,k} \distD^p_{ij} \S_{i,j,k} \Big)^{1/p} + \Big(\sum_{i,j,k}\distD^p_{j,k} \S_{i,j,k} \Big)^{1/p} \\
&= \Big(\sum_{i,j} \distD^p_{i,j}  \sum_k \S_{i,j,k} \Big)^{1/p} + \Big(\sum_{j,k} \distD^p_{j,k}  \sum_i \S_{i,j,k} \Big)^{1/p}\\
&= \Big(\sum_{i,j} \distD^p_{i,j}\P_{i,j}\Big)^{1/p} + \Big(\sum_{j,k} \distD^p_{j,k} \Q_{j,k}\Big)^{1/p}
= \WassD_p(\a,\b) +\WassD_p(\b,\VectMode{c}).
\end{aligned}
$$
The first inequality follows from the feasibility of $\R$, the second is the usual triangle inequality for elements in $\distD$, and the third comes from Minkowski's inequality.
\index{Minkowski inequality}
\end{proof}

\paragraph{Continuous gluing.}
\index{gluing lemma}

Proposition~\ref{prop-metric-histo} generalizes from histogram to arbitrary measures that need not be discrete. For this, one needs the following general gluing lemma.
\index{histogram}
\index{gluing lemma}

\begin{lem}[Gluing lemma]\label{lem-gluing-general}
	Let $(\al,\be,\ga) \in \Mm_+^1(\Xx) \times \Mm_+^1(\Yy) \times \Mm_+^1(\Zz)$
	where $(\Xx,\Yy,\Zz)$ are Polish spaces in the sense of Definition~\ref{def-polish-metric-space}.
\index{Polish space}
	Given $\pi \in \Couplings(\al,\be)$ and $\xi \in \Couplings(\be,\ga)$, then there exists a tensor coupling measure
\index{coupling measure}
	$\sigma \in \Mm_+(\Xx \times \Yy \times \Zz)$ such that
	\eq{
		(P_{\Xx,\Yy})_\sharp \sigma = \pi
		\qandq
		(P_{\Yy,\Zz})_\sharp \sigma = \xi
	}
	where we denoted the projector $P_{\Xx,\Yy}(x,y,z)=(x,y)$ and $P_{\Yy,\Zz}(x,y,z)=(y,z)$.
\end{lem}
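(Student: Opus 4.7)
The plan is to mimic the discrete construction~\eqref{eq-glued-discr}, where $\S_{i,j,k} = \P_{i,j}\Q_{j,k}/\b_j$ can be read as ``first choose $j$ according to $\b$, then choose $i$ conditionally on $j$ according to $\P/\b_j$ and independently choose $k$ conditionally on $j$ according to $\Q/\b_j$.'' The continuous analogue of the fractions $\P_{i,j}/\b_j$ and $\Q_{j,k}/\b_j$ is obtained through the \emph{disintegration} (conditional probabilities) of $\pi$ and $\xi$ with respect to their common marginal $\be$. This is precisely where the Polish space assumption enters: on Polish spaces the disintegration theorem guarantees the existence of regular conditional probabilities.

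First I would apply the disintegration theorem to $\pi \in \Couplings(\al,\be)$ viewed as a measure on $\Xx \times \Yy$ whose $\Yy$-marginal is $\be$. This yields a $\be$-measurable family $\{\pi_y\}_{y \in \Yy} \subset \Mm_+^1(\Xx)$ such that
\eq{
    \foralls h \in \Cc(\Xx \times \Yy), \quad
    \int_{\Xx \times \Yy} h(x,y)\, \d\pi(x,y) = \int_\Yy \pa{\int_\Xx h(x,y)\, \d\pi_y(x)} \d\be(y).
}
Analogously disintegrate $\xi \in \Couplings(\be,\ga)$ with respect to its $\Yy$-marginal $\be$ to obtain $\{\xi_y\}_{y \in \Yy} \subset \Mm_+^1(\Zz)$ satisfying the corresponding identity. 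Then I would define $\sigma \in \Mm_+(\Xx \times \Yy \times \Zz)$ as the measure that integrates bounded continuous test functions $H \in \Cc(\Xx \times \Yy \times \Zz)$ according to
\eq{
    \int_{\Xx \times \Yy \times \Zz} H(x,y,z)\, \d\sigma(x,y,z)
    \eqdef
    \int_\Yy \pa{\int_\Xx \int_\Zz H(x,y,z)\, \d\pi_y(x)\, \d\xi_y(z)} \d\be(y).
}
Intuitively, $\sigma$ is the law of $(X,Y,Z)$ where $Y \sim \be$, and conditionally on $Y=y$ the variables $X$ and $Z$ are independent with laws $\pi_y$ and $\xi_y$; this is the direct counterpart of the discrete product formula.

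It then remains to check the two marginal constraints. Testing against $H(x,y,z)=h(x,y)$ collapses the $z$-integral to $\int_\Zz \d\xi_y=1$, which leaves exactly the disintegration formula for $\pi$, so $(P_{\Xx,\Yy})_\sharp \sigma = \pi$. Symmetrically, testing against $H(x,y,z)=h(y,z)$ gives $(P_{\Yy,\Zz})_\sharp \sigma = \xi$. As a byproduct, the $\Xx$-, $\Yy$-, and $\Zz$-marginals of $\sigma$ are $\al$, $\be$, and $\ga$ respectively, since $\pi$ and $\xi$ already have the required marginals.

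The main obstacle is really the first step: invoking the disintegration theorem in its correct form. The conditional measures $\pi_y$ are only defined $\be$-almost everywhere, and one must know that they can be chosen so that $y \mapsto \pi_y(A)$ is $\be$-measurable for every Borel $A \subset \Xx$, which is precisely what the Polish space hypothesis buys. Once the disintegrations are in hand, the verification of the marginal identities is a direct application of Fubini and the defining property of disintegration, and extending from continuous test functions to arbitrary Borel sets follows by standard monotone class arguments.
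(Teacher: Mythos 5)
Your proposal is correct and follows essentially the same route as the paper's proof: disintegrate $\pi$ and $\xi$ against their common marginal $\be$ (using the Polish hypothesis to guarantee regular conditional probabilities), then glue via the conditionally independent product $\d\sigma = \d\pi_y\,\d\xi_y\,\d\be(y)$. Your explicit verification of the two marginal identities is a welcome addition that the paper leaves implicit.
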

\begin{proof}
	The proof of this fundamental result is involved since it requires using the disintegration of measure (which corresponds to conditional probabilities).
\index{conditional probability}
\index{disintegration}
	The disintegration of measures is applicable because the spaces are Polish.
	We disintegrate $\pi$ and $\xi$ against $\be$ to obtain two families $(\pi_y)_{y \in \Yy}$ and $(\xi_y)_{y \in \Yy}$ of probability distributions on $\Xx$ and $\Zz$. These families are defined by the fact that
\index{disintegration}
	\eq{
		\foralls h \in \Cc(\Xx \times \Yy), \quad
		\int_\Yy \Big( \int_\Xx h(x,y) \d \pi_y(x) \Big) \d \be(y) = \int h(x,y) \d\pi(x,y).
	}
	and similarly for $\xi$.
	When $\be=\sum_j \b_j \de_{y_j}$ and $\pi=\sum_{i,j} \P_{i,j} \de_{(x_i,y_j)}$, then this conditional distribution is defined on the support of $\be$ as
	$\pi_{y_j} = \sum_i \frac{\P_{i,j}}{\b_j} \de_{x_i}$ (and similarly for $\xi$).
		The glued measure is then defined by the conditional-product formula
	\eq{
		\foralls g \in \Cc(\Xx \times \Yy \times \Zz), \quad
		\int g(x,y,z) \d\sigma(x,y,z) = \int g(x,y,z) \d \pi_y(x) \d \xi_y(z) \d \be(y).
	}
	For discrete measures, this matches the definition~\eqref{eq-glued-discr}, since $\sigma=\sum_{i,j,k} \S_{i,j,k} \de_{x_i,y_j,z_k}$ where
	\eq{
		\S_{i,j,k} = \frac{\P_{i,j}}{\b_j} \frac{\Q_{j,k}}{\b_j} \b_j.
	}
\end{proof}

Using this gluing lemma, we can now construct the Wasserstein distance in the general setting of arbitrary distributions on a Polish space.
\index{Wasserstein!distance}
\index{gluing lemma}
\index{Polish space}

\begin{defn}[Wasserstein distance]\label{def-wasserstein-distance}
\index{Wasserstein!distance}
	Let $(\X,\dist)$ be a metric space and $p\geq1$. For $\al,\be\in\Pp_p(\X)$, the $p$-Wasserstein distance is
	\eql{\label{eq-defn-wass-dist}
		\Wass_p(\al,\be) \eqdef \MK_{\dist^p}(\al,\be)^{1/p}
		=
		\left(\inf_{\pi\in\Couplings(\al,\be)}
		\int_{\X\times\X}\dist(x,y)^p\d\pi(x,y)\right)^{1/p}.
	}
	It depends on the ground distance $\dist$.
\end{defn}

\begin{prop}[Metric property of the Wasserstein distance]\label{prop-metric-measure}
\index{Wasserstein!distance}
	Definition~\ref{def-wasserstein-distance} defines a distance: $\Wass_p$ is symmetric, positive, $\Wass_p(\al,\be)=0$ if and only if $\al = \be$, and it satisfies the triangle inequality
\index{triangle inequality}
\index{Wasserstein!distance}
\eq{
	\foralls (\al,\be,\ga) \in  \Pp_p(\X)^3, \quad \Wass_p(\al,\ga) \leq \Wass_p(\al,\be) + \Wass_p(\be,\ga).
}
\end{prop}

\begin{proof}
	The symmetry follows from the fact that since $\dist$ is symmetric, if $\pi(x,y)$ is optimal for $\MK_{\dist^p}(\al,\be)$, then
	$\pi(y,x) \in \Couplings(\be,\al)$ is optimal for $\MK_{\dist^p}(\be,\al)$.
	If $\MK_{\dist^p}(\al,\be)=0$, then necessarily an optimal coupling $\pi$ is supported on the diagonal $\De \eqdef \{(x,x)\}_x \subset \Xx^2$.
\index{optimal coupling}
	We denote $\la(x)$ the corresponding measure on the diagonal, i.e. such that $\int h(x,y) \d\pi(x,y) = \int h(x,x) \d \la(x)$.
	Then since $\pi \in \Couplings(\al,\be)$ necessarily $\la=\al$ and $\la=\be$ so that $\al=\be$.

	For the triangle inequality, we consider optimal couplings $\pi \in \Couplings(\al,\be)$ and $\xi \in \Couplings(\be,\ga)$
\index{triangle inequality}
\index{optimal coupling}
	and we glue them according to the Lemma~\ref{lem-gluing-general}.
		We define the composition of the two couplings $(\pi,\xi)$ as $\rho \eqdef (P_{\Xx,\Zz})_\sharp \si$.
	Note that if $\pi$ and $\xi$ are couplings induced by two Monge maps $T_\Xx(x)$ and $T_\Yy(y)$, then $\rho$ is itself induced by the Monge map $T_\Yy \circ T_\Xx$, so that this notion of composition of coupling generalizes the composition of maps.
\index{Monge!problem}
	The triangular inequality follows from
	$$\begin{aligned}
		\Wass_p(\al,\ga) & \leq \Big( \int_{\Xx \times \Zz} \dist(x,z)^p \d\rho(x,z)\Big)^{1/p}
		= \Big( \int_{\Xx \times \Yy \times \Zz} \dist(x,z)^p \d\si(x,y,z)\Big)^{1/p}		\\
		 &\leq \Big( \int_{\Xx \times \Yy \times \Zz} (\dist(x,y)+\dist(y,z))^p \d\si(x,y,z)\Big)^{1/p} \\
		 &\leq \Big( \int_{\Xx \times \Yy \times \Zz} \dist(x,y)^p \d\si(x,y,z)\Big)^{1/p}
		    +  \Big( \int_{\Xx \times \Yy \times \Zz} \dist(y,z)^p \d\si(x,y,z)\Big)^{1/p} \\
	  &= \Big( \int_{\Xx \times \Yy} \dist(x,y)^p \d\pi(x,y)\Big)^{1/p}
		    +  \Big( \int_{\Yy \times \Zz} \dist(y,z)^p \d\xi(y,z)\Big)^{1/p}
		    = \Wass_p(\al,\be)  + \Wass_p(\be,\ga) .
	\end{aligned}$$
\end{proof}

\paragraph{Interpolation induced by an optimal plan.}
\index{plan!interpolation}
\label{sec-kantorovich-plan-interpolation}

The quadratic Wasserstein distance does not only compare two endpoint measures. An optimal plan also says how to move mass between them: each active pair $(x,y)$ travels along the segment joining $x$ to $y$. This turns an optimal coupling into a curve of measures.
\index{optimal coupling}
\index{Wasserstein!distance}

\begin{defn}[$\Wass_2$ geodesic induced by an optimal plan]\label{def-w2-geodesic-induced-by-plan}
\index{Wasserstein!geodesic}
\index{plan!optimal}
\index{McCann interpolation}
	Let $\al_0,\al_1\in\Pp_2(\RR^d)$, and let $\pi^\star\in\Couplings(\al_0,\al_1)$ be optimal for $\Wass_2^2(\al_0,\al_1)$. For $t\in[0,1]$, define
	\[
		e_t(x,y)\eqdef(1-t)x+t y,
		\qquad
		\al_t\eqdef(e_t)_\sharp\pi^\star .
	\]
	The curve $(\al_t)_{t\in[0,1]}$ is the displacement, or McCann, $\Wass_2$ geodesic induced by $\pi^\star$.
\end{defn}
In the discrete case, each mass $\P_{ij}$ moves from $x_i$ to $y_j$ along its own segment. When the optimal plan is not induced by a map, one source atom can split into several moving atoms. If the optimal plan is not unique, different optimal plans may also induce different $\Wass_2$ geodesics.
\index{discrete!measure}
\index{plan!transport}

\begin{alg}[Displacement interpolation from a transport plan]\label{alg:plan-displacement-interpolation}
\index{plan!interpolation}
\index{McCann interpolation}
\textbf{Input:} Measures $\alpha,\beta$ on $\RR^d$, time $t\in[0,1]$.

\textbf{Output:} Displacement interpolant $\alpha_t$.

\textbf{Let} $\pi^\star$ be any minimizer of the quadratic Kantorovich problem.

\textbf{Set} interpolation map:
\(e_t(x,y)=(1-t)x+t y.\)

\textbf{Push forward:}
\(\al_t=(e_t)_\sharp\pi^\star.\)

\textbf{If} $\pi^\star=\sum_{i,j}P^\star_{ij}\delta_{(x_i,y_j)}$ \textbf{then}:
\begin{algblock}

\textbf{Compute}
\(\al_t= \sum_{i,j}P^\star_{ij} \delta_{(1-t)x_i+t y_j}.\)
\end{algblock}
\algreturnskip
\textbf{Return} $\alpha_t$.
\end{alg}

\begin{prop}[Optimal-plan interpolation is a $\Wass_2$ geodesic]\label{prop-plan-interpolation-w2-geodesic}
\index{Wasserstein!geodesic}
\index{constant-speed geodesic}
	Let $(\al_t)_{t\in[0,1]}$ be defined by Definition~\ref{def-w2-geodesic-induced-by-plan}. Then, for every $0\leq s\leq t\leq1$,
	\[
		\Wass_2(\al_s,\al_t)
		=
		(t-s)\Wass_2(\al_0,\al_1).
	\]
	Thus $t\mapsto\al_t$ is a constant-speed geodesic for the metric $\Wass_2$.
\end{prop}
\begin{proof}
	Push the optimal plan $\pi^\star$ forward by $(e_s,e_t)$. This gives a coupling $\gamma_{s,t}\in\Couplings(\al_s,\al_t)$, and
	\[
		\int \norm{z-z'}^2\d\gamma_{s,t}(z,z')
		=
		\int \norm{e_t(x,y)-e_s(x,y)}^2\d\pi^\star(x,y)
		=
		(t-s)^2\Wass_2^2(\al_0,\al_1).
	\]
	Hence $\Wass_2(\al_s,\al_t)\leq(t-s)\Wass_2(\al_0,\al_1)$. Applying this upper bound to the three pairs $(0,s)$, $(s,t)$ and $(t,1)$, and using the triangle inequality of Proposition~\ref{prop-metric-measure}, gives
	\[
		\Wass_2(\al_0,\al_1)
		\leq
		\Wass_2(\al_0,\al_s)+\Wass_2(\al_s,\al_t)+\Wass_2(\al_t,\al_1)
		\leq
		\Wass_2(\al_0,\al_1).
	\]
	All inequalities are therefore equalities, in particular the middle segment has the claimed length.
\end{proof}

\begin{figure}[H]
\centering
\begin{tabular}{@{}ccccc@{}}
\small $t=0$ & \small $t=1/4$ & \small $t=1/2$ & \small $t=3/4$ & \small $t=1$ \\[-.15em]
\includegraphics[width=.17\linewidth]{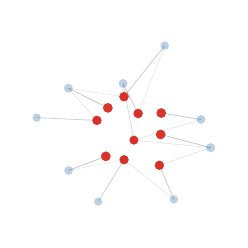} &
\includegraphics[width=.17\linewidth]{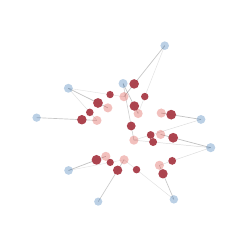} &
\includegraphics[width=.17\linewidth]{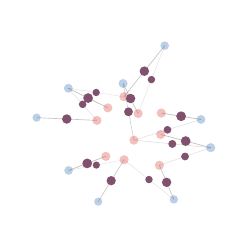} &
\includegraphics[width=.17\linewidth]{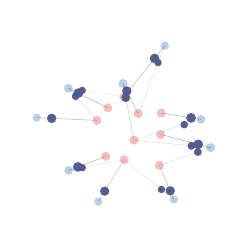} &
\includegraphics[width=.17\linewidth]{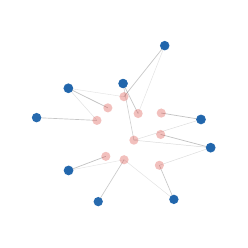}
\end{tabular}
\caption{McCann interpolation induced by a non-deterministic optimal transport plan. In every panel, the red and blue endpoint measures are shown with low opacity, thin gray segments display the support $\P_{ij}>\mathrm{tol}$ of the coupling, and the moving atoms are colored from red to blue along the interpolation.}
\index{McCann interpolation}
\index{plan!transport}
\label{fig:kantorovich-plan-interpolation}
\end{figure}

\paragraph{General geodesic spaces.}
\index{geodesic!space}
For Dirac masses in Euclidean space, the $\Wass_2$ geodesic from $\de_x$ to $\de_y$ is $t\mapsto\de_{(1-t)x+t y}$. The same idea extends to any geodesic metric space $(\X,\dist)$, meaning that each pair of points can be joined by a constant-speed metric geodesic. For each pair $(x,y)$, one replaces the Euclidean segment by a curve $\gamma^{x,y}:[0,1]\to\X$ such that $\gamma^{x,y}_0=x$, $\gamma^{x,y}_1=y$, and
\[
	\dist(\gamma^{x,y}_s,\gamma^{x,y}_t)=|t-s|\dist(x,y).
\]
If this geodesic is unique and depends measurably on $(x,y)$, one defines $e_t(x,y)=\gamma^{x,y}_t$ and sets $\al_t=(e_t)_\sharp\pi^\star$ for an optimal coupling $\pi^\star$. When geodesics are not unique, there is no canonical interpolation of a pair of Diracs unless a choice is made: one may select a particular geodesic between $x$ and $y$, or randomize among several such geodesics. The intrinsic formulation is to choose a probability measure $\eta$ on the path space of constant-speed geodesics, called a dynamical optimal plan, such that $(e_0,e_1)_\sharp\eta$ is an optimal coupling, and to set $\al_t=(e_t)_\sharp\eta$. Different measurable choices, or different conditional distributions over geodesics with the same endpoints, can give different $\Wass_2$ geodesics; the constant-speed identity remains the same. This path-space viewpoint is standard in the general theory of Wasserstein spaces~\cite{ambrosio2006gradient,Villani09,SantambrogioBook}.
\index{Dirac mass}
\index{path space}
\index{dynamical optimal plan}

\paragraph{Comparison with Monge.}
\index{Monge!problem}

	This distance $\Wass_p$ defined through the Kantorovich problem~\eqref{eq-defn-wass-dist} should be contrasted with the directed distance $\tilde\Wass$ obtained using Monge's problem~\eqref{eq-monge-distance}. The Kantorovich feasible set is never empty, since it contains the product coupling, although the $p$-cost may still be infinite without moment assumptions on non-compact spaces. By contrast, Monge's constraint set $\enscond{T}{T_\sharp \al=\be}$ can be empty. When an optimal Monge map exists, Kantorovich gives the same value by choosing the graph coupling $(\Id,T)_\sharp\alpha$; in this sense the Kantorovich problem is the convex relaxation of Monge's problem, with much better stability properties.
\index{Kantorovich!problem}
\index{product!coupling}
\index{Monge!distance}
\index{Monge!problem}

\section{Metric Properties: Topology and Applications}
\index{Wasserstein!distance}

This section shifts from metric axioms to topology and uses. Wasserstein distances metrize weak convergence under moment control, sit between weak and strong topologies, and provide quantitative estimates in probability and robust optimization.
\index{Wasserstein!distance}
\index{weak!convergence}

\paragraph{Convergence in law topology.}
\index{convergence!law}

On a bounded metric space, all $\Wass_p$ distances define the same topology, although they are not equivalent as distances.

\begin{prop}[Equivalence of Wasserstein distances on compact spaces]\label{prop-comp-wass-p}
\index{Wasserstein!distance}
	One has for $p \leq q$
	\eq{
		\Wass_p( \al,\be ) \leq \Wass_q(\al,\be) \leq \text{\upshape diam}(\Xx)^{\frac{q-p}{q}} \Wass_p(\al,\be)^{\frac{p}{q}}
	}
	where $\text{\upshape diam}(\Xx) \eqdef \usup{x,y} d(x,y)$.
\end{prop}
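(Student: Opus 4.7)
The plan is to establish each inequality separately; both rely on comparing $L^p$ and $L^q$ norms on the probability space $(\Xx\times\Xx,\pi)$ for a well-chosen coupling $\pi$, together with the observation that an optimal coupling for one exponent is admissible (though typically suboptimal) for the other. No gluing or duality is needed, unlike in the triangle-inequality proof of Proposition~\ref{prop-metric-measure}.

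For the first inequality $\Wass_p(\al,\be)\le\Wass_q(\al,\be)$, I would let $\pi^\star$ be an optimal coupling for $\Wass_q$, so $\pi^\star\in\Couplings(\al,\be)$ is admissible for the $p$-problem. Since $t\mapsto t^{q/p}$ is convex (as $q\ge p\ge 1$), Jensen's inequality applied on the probability space $(\Xx\times\Xx,\pi^\star)$ to the function $(x,y)\mapsto \dist(x,y)^p$ yields
$$\Big(\int \dist(x,y)^p\,\d\pi^\star(x,y)\Big)^{q/p} \;\le\; \int \dist(x,y)^q\,\d\pi^\star(x,y) \;=\; \Wass_q(\al,\be)^q.$$
Combining with the suboptimality bound $\Wass_p(\al,\be)^p\le\int \dist^p\,\d\pi^\star$ and taking a $1/q$ root immediately gives $\Wass_p(\al,\be)\le\Wass_q(\al,\be)$.

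For the upper bound, the idea is to trade excess powers of $\dist$ for powers of the diameter. Pointwise, $\dist(x,y)\le\text{\upshape diam}(\Xx)$, hence
$$\dist(x,y)^q = \dist(x,y)^{q-p}\,\dist(x,y)^p \;\le\; \text{\upshape diam}(\Xx)^{q-p}\,\dist(x,y)^p.$$
Integrating against $\pi^\dagger$ optimal for $\Wass_p(\al,\be)$, and using $\pi^\dagger$ as a (suboptimal) admissible coupling for the $q$-problem, yields
$$\Wass_q(\al,\be)^q \;\le\; \int \dist^q\,\d\pi^\dagger \;\le\; \text{\upshape diam}(\Xx)^{q-p}\,\Wass_p(\al,\be)^p,$$
and a $q$-th root gives the claimed bound (with exponent $(q-p)/q$ on $\text{\upshape diam}(\Xx)$ and $p/q$ on $\Wass_p$). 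No real obstacle is anticipated: both steps are one-line applications of standard tools (Jensen/Hölder for the first, a pointwise diameter estimate for the second). The only point to keep track of is that $\Wass_p^p=\MK_{\dist^p}$ uses the $p$-th power of the distance, so Jensen must be applied to $\dist^p$ rather than to $\dist$, and the exponents must be tracked carefully when taking roots.
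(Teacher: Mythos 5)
Your proof is correct and is essentially identical to the paper's: Jensen's inequality with $\phi(r)=r^{q/p}$ applied to $\dist^p$ for the lower bound, and the pointwise estimate $\dist^q\le \mathrm{diam}(\Xx)^{q-p}\dist^p$ for the upper bound, each combined with suboptimality of the transferred coupling. Note that the exponent $p/q$ on $\Wass_p$ that you obtain is the correct one — the $q/p$ appearing in the statement of the proposition is a typo, and the paper's own argument yields the same $\mathrm{diam}(\Xx)^{(q-p)/q}\,\Wass_p(\al,\be)^{p/q}$ as yours.
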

\begin{proof}
		The left inequality follows from Jensen inequality, $\phi(\int c(x,y) \d\pi(x,y)) \leq \int \phi(c(x,y)) \d\pi(x,y)$, applied to any probability distribution $\pi$ and to the convex function $\phi(r)=r^{q/p}$ with $c(x,y)=d(x,y)^p$, so that one gets
\index{Jensen inequality}
\index{convex!function}
		\eq{
			\pa{\int d(x,y)^{p} \d\pi(x,y)}^{\frac{q}{p}} \leq \int d(x,y)^{q} \d\pi(x,y).
		}
		The right inequality follows from
		\eq{
			d(x,y)^q \leq \text{diam}(\Xx)^{q-p} d(x,y)^p.
		}
	\end{proof}

The Wasserstein distance $\Wass_p$ is a weak distance: it compares singular distributions, such as discrete measures, and quantifies spatial shifts between supports. Its topology is studied in detail in~\cite{Villani09,SantambrogioBook,gigli2011user}, while empirical rates are quantified in~\cite{dudley1969speed,fournier2015rate,weed2017sharp,boissard2015distribution,bolley2007quantitative}.
\index{Wasserstein!distance}

\Needspace{5\baselineskip}
\begin{defn}[Weak$^*$ topology]\label{dfn-weak-conv}
\index{topology!weak}
	$(\al_k)_k$ converges weakly$^*$ to $\al$ in $\Mm_+^1(\Xx)$ (denoted $\al_k \rightharpoonup \al$) if and only if for any bounded continuous function $f \in \Cc_b(\Xx)$, $\int_\Xx f \d\al_k \rightarrow \int_\Xx f \d\al$. On compact spaces, $\Cc_b(\Xx)=\Cc(\Xx)$, which is why the boundedness is often left implicit there.
\index{continuous!bounded function}
\end{defn}

\begin{rem}[A Riemann-sum weak limit]\label{rem-riemann-weak-limit}
\index{Riemann sum}
\index{weak!limit}
	On $\Xx=\RR$, the empirical measures on a regular grid satisfy
\index{empirical!measure}
	\[
		\frac{1}{n} \sum_{k=1}^n \de_{k/n} \rightharpoonup \Uu_{[0,1]}.
	\]
	Indeed, for every continuous bounded function $f$,
	\[
		\frac{1}{n} \sum_{k=1}^n f(k/n) \longrightarrow \int_0^1 f(x) \d x,
	\]
	which is precisely the convergence of Riemann sums. This convergence is weak but not strong: for every $n$, the discrete measure and the uniform density are mutually singular, hence their total variation distance is equal to $2$.
\index{Riemann sum}
\index{total variation}
\end{rem}

\begin{rem}[Weak convergence for discrete measures]\label{rem-weak-conv-disc}
\index{weak!convergence}
\index{discrete!measure}
	In the special case of a single Dirac, $\de_{x^{(n)}} \rightharpoonup \de_x$ is equivalent to $\int f \d\de_{x^{(n)}} = f(x^{(n)}) \rightarrow \int f \d\de_{x} = f(x)$ for any continuous $f$. This in turn is equivalent to $x^{(n)} \rightarrow x$.
	For a fixed number of atoms, if $\al_n=\sum_{i=1}^N a_i^{(n)}\de_{x_i^{(n)}}$ and, after extracting a subsequence and relabeling, $a_i^{(n)}\to a_i$ and $x_i^{(n)}\to x_i$, then $\al_n$ converges weakly to $\sum_i a_i\de_{x_i}$, with atoms at identical limits merged. Without a uniform bound on the number of atoms, weak limits of discrete measures can be non-discrete; empirical measures are the standard example.
\index{weak!limit}
\index{empirical!measure}
\end{rem}

In terms of random vectors, if $X_n \sim \al_n$ and $X \sim \al$ (not necessarily defined on the same probability space), weak convergence corresponds to convergence in law of $X_n$ toward $X$.
\index{convergence!in law}
\index{weak!convergence}

\begin{rem}[Modes of convergence for random variables]\label{rem-random-variable-convergences}
\index{random variable}
\index{convergence!mode}
	Convergence of laws should be distinguished from stronger notions of convergence for random variables. If $X_n$ and $X$ are defined on a common probability space, then $X_n\to X$ almost surely means pointwise convergence outside a null set, while convergence in probability means
\index{convergence!in probability}
	\[
		\foralls \epsilon>0,\qquad
		\PP(\norm{X_n-X}>\epsilon)\to0.
	\]
	Almost-sure convergence implies convergence in probability, and convergence in probability implies convergence in law. Convergence in law is exactly weak$^*$ convergence of the probability measures $(X_n)_\sharp\PP\rightharpoonup X_\sharp\PP$, and does not require all variables to live on the same probability space. Strong convergence of measures, for instance convergence in total variation, is different and usually much stronger: it controls the mass assigned to all measurable sets, not only averages against continuous test functions. In particular, total variation convergence implies weak convergence, but the converse fails for empirical approximations of continuous laws.
\index{convergence!in law}
\index{convergence!in probability}
\index{probability measure}
\index{weak!convergence}
\index{total variation}
\end{rem}

\begin{rem}[Central limit theorem]\label{rem-clt}
\index{central!limit theorem}
		The central limit theorem states that if $(X_1,\ldots,X_n)$ are i.i.d. random vectors with finite second moments, $\EE(X_i)=0$, and $\EE(X_i X_i^\top)=\Id$, then the rescaled average $Z_n \eqdef \frac{1}{\sqrt{n}} \sum_{i=1}^n X_i$ converges in law toward a Gaussian $\Gaussian(0,\Id)$. This means that the measure $\al_n$ representing the law of $Z_n$ converges weakly toward the measure $\al$ of the centered normalized Gaussian.
\end{rem}

The total variation norm was introduced in Definition~\ref{defn-total-variation} and Proposition~\ref{prop-tv-dual-measure}. Its induced topology is often called the ``strong'' topology on measures. In the present section we only use the recall that, for a signed difference $\al-\be$,
\index{total variation}
\[
	\norm{\al-\be}_{\TV}=|\al-\be|(\Xx),
\]
so densities give an $L^1$ norm and discrete signed measures give an $\ell^1$ norm of the signed weights.
\index{signed!measure}

The following proposition shows that the TV norm can be seen as a Wasserstein distance, but for a ``degenerate'' 0/1 metric.
\index{Wasserstein!distance}

\begin{prop}[Total variation as Wasserstein for the discrete metric]\label{prop-rel-wass-tv}
\index{total variation}
	Denoting $d$ the 0/1 distance such that $d(x,x)=0$ and $d(x,y)=1$ if $x \neq y$, then
	\eq{
		\Wass_p(\al,\be)^p = \frac{1}{2}\norm{\al-\be}_{\TV}.
	}
\end{prop}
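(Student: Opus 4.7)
Since $d$ takes values in $\{0,1\}$, the cost $c = d^p$ equals the indicator $c(x,y) = \mathbf{1}_{x \neq y}$, independently of $p \geq 1$. For any coupling $\pi \in \Couplings(\al,\be)$ we therefore have
\eq{
	\int c \, \d \pi = 1 - \pi(\Delta), \qquad \Delta \eqdef \{(x,x) : x \in \Xx\} \subset \Xx \times \Xx,
}
so minimizing the transport cost is equivalent to maximizing the mass placed on the diagonal. The plan is to (i) prove the sharp upper bound $\pi(\Delta) \le (\al \wedge \be)(\Xx)$ on the diagonal mass, (ii) exhibit a coupling attaining this bound, and (iii) rewrite $(\al \wedge \be)(\Xx)$ in terms of the TV norm via the Jordan decomposition.

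For the upper bound, I introduce the pushforward $\mu \eqdef \mathrm{diag}_\sharp^{-1}(\pi|_\Delta)$, i.e.\ the measure on $\Xx$ defined by $\mu(A) \eqdef \pi(\{(x,x) : x \in A\})$. Monotonicity of $\pi$ gives $\mu(A) \le \pi(A \times \Xx) = \al(A)$ and likewise $\mu(A) \le \be(A)$ for every Borel $A$, so $\mu \le \al \wedge \be$ and hence $\pi(\Delta) = \mu(\Xx) \le (\al \wedge \be)(\Xx)$.

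For the matching upper construction, let $\ga \eqdef \al \wedge \be$ and set $m \eqdef 1 - \ga(\Xx)$. Define
\eq{
	\pi \eqdef (\mathrm{Id},\mathrm{Id})_\sharp \ga + \tfrac{1}{m}(\al - \ga) \otimes (\be - \ga),
}
with the second term omitted when $m = 0$ (the case $\al = \be$). A direct marginal check shows $\pi \in \Couplings(\al,\be)$, and since the positive measures $\al - \ga$ and $\be - \ga$ are mutually singular, the product term is concentrated off the diagonal, so $\pi(\Delta) = \ga(\Xx)$, which matches the upper bound.

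To conclude, I use the Jordan decomposition $\al - \be = \mu_+ - \mu_-$, from which $\ga = \al - \mu_+ = \be - \mu_-$, hence $\|\al-\be\|_{\TV} = \mu_+(\Xx) + \mu_-(\Xx)$. Because $\al$ and $\be$ have equal total mass $1$, $\mu_+(\Xx) = \mu_-(\Xx) = \tfrac12 \|\al-\be\|_{\TV}$, and therefore $\ga(\Xx) = 1 - \tfrac12\|\al-\be\|_{\TV}$. Combining this with $\Wass_p(\al,\be)^p = 1 - \ga(\Xx)$ yields the claim. The only subtlety is the degenerate case $m=0$ and ensuring the product-coupling term is well defined; the main conceptual ingredient is the bound $\pi|_\Delta \le \al \wedge \be$, which is the ``coupling'' analogue of $\min(\al,\be)$.
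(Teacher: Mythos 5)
Your proof is correct and uses essentially the same construction as the paper's: put the common mass $\al \wedge \be$ on the diagonal, couple the mutually singular residuals $(\al-\be)_+$ and $(\be-\al)_+$ by a normalized product measure off the diagonal, and match this against the upper bound $\pi(\Delta) \le (\al\wedge\be)(\Xx)$ forced by the marginal constraints. The only difference is that you carry the argument out for general measures via the Jordan decomposition, whereas the paper restricts to discrete measures on a common support ``for the sake of simplicity''; your version is slightly more general but conceptually identical.
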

\begin{proof}
	For the sake of simplicity, we do the proof for discrete measures with weights $(\a,\b)$ and without loss of generality assume they have the same support $(x_i)_i$ and we denote $\D \eqdef (d(x_i,x_j))_{i,j}$ which is 0 on the diagonal and one outside.
	Also since $d^p=d$ we consider $p=1$.
	We denote $\text{c}_i = \min(\a_i,\b_i)$.
		By conservation of mass, for every $\P \in \CouplingsD(\a,\b)$, $\P_{i,i} \leq \text{c}_i$, thus
		\eq{
			\dotp{\P}{\D} = \sum_{i \neq j} \P_{i,j}
			= 1 - \sum_i \P_{i,i}
			\geq 1-\sum_i \text{c}_i.
		}
	We need to show that this bound is tight, namely to construct $\hat P \in \CouplingsD(\a,\b)$ such that
	$\diag(\hat P)=\text{c}$. Let
	\eq{
		\bar\a \eqdef \a-\text{c} = (\a-\b)_+ \geq 0
		\qandq
		\bar\b \eqdef \b-\text{c} = (\b-\a)_+ \geq 0
	}
	If $\bar\a=\bar\b=0$, then $\a=\b$ and the diagonal coupling is optimal. Otherwise, one has
		\eq{
			\frac{ \bar\a \otimes \bar\b }{ \dotp{\bar\a}{\ones} } \in \CouplingsD(\bar\a,\bar\b)
		}
		and we remark that $\dotp{\bar\a}{\ones} = \dotp{\bar\b}{\ones} = 1-\dotp{\text{c}}{\ones}$.
		Thus denoting
		\eq{
			\hat \P \eqdef \diag(\text{c}) + \frac{ \bar\a \otimes \bar\b }{ \dotp{\bar\a}{\ones} } \geq 0
		}
	satisfies
	\eq{
			\hat\P \ones = \text{c} + \bar\a = \a
			\qandq
			\hat\P^\top \ones = \text{c} + \bar\b = \b
		}
		so that $\hat P \in \CouplingsD(\a,\b)$  is a coupling so that $\diag(\hat P) = \diag(\text{c})$ since $\diag( \bar\a \otimes \bar\b )=0$. We thus conclude that
		\eq{
			\WassD_1(\a,\b) = \dotp{\D}{\hat \P}
			= \sum_{i,j} \frac{\bar\a_i\bar\b_j}{\dotp{\bar\a}{\ones}}
		= \sum_i \bar\a_i = \sum_i \bar\b_i
		= \frac{1}{2} \sum_i (\bar\a_i + \bar\b_i)
		= \frac{1}{2} \norm{\a-\b}_{\TV}.
	}
\end{proof}

As explained in Remark~\ref{rem-weak-conv-disc}, in the special case of Diracs, $\de_{x_n} \rightharpoonup \de_x$  is equivalent to $x_n \rightarrow x$. One can then contrast the strong topology with the Wasserstein distance if $x_n \neq x$,
\index{topology!strong}
\index{Wasserstein!distance}
\eq{
	\norm{\de_{x_n}-\de_x}_{\TV}=2
	\qandq
	\Wass_p(\de_{x_n},\de_x) = d(x_n,x).
}
This shows that for the strong topology, Diracs never converge, while they do converge for the Wasserstein distance. It is a powerful property of the Wasserstein distance: on compact spaces, it metrizes weak convergence.
\index{Wasserstein!distance}
\index{weak!convergence}

\begin{prop}[Wasserstein metrizes weak convergence on compact spaces]\label{prop-wass-metrizes-weak-compact}
\index{Wasserstein topology}
	If $\Xx$ is compact, $\al_k \rightharpoonup \al$ if and only if $\Wass_p(\al_k,\al) \rightarrow 0$.
\end{prop}
\begin{proof}
	For $p=1$, this is the Kantorovich--Rubinstein metrization theorem: by duality, $\Wass_1$ is the supremum over $1$-Lipschitz test functions, and on a compact metric space this class is compact modulo constants by Arzel\`a--Ascoli. Thus convergence in $\Wass_1$ is equivalent to weak convergence. Proposition~\ref{prop-comp-wass-p} then shows that all Wasserstein distances $\Wass_p$ induce the same convergent sequences on compact spaces. Hence weak convergence is equivalent to convergence in $\Wass_p$ for every $p\geq1$.
\index{Wasserstein!distance}
\index{weak!convergence}
\end{proof}

On non-compact spaces, one needs also to impose convergence of the $p$-th moments. More precisely, on a Polish metric space, $\Wass_p(\alpha_k,\alpha)\to0$ if and only if $\alpha_k\rightharpoonup\alpha$ and, for some reference point $x_0$,
\[
	\int d(x,x_0)^p\d\alpha_k(x)\longrightarrow
	\int d(x,x_0)^p\d\alpha(x).
\]

On a discrete space, the strong and weak topologies coincide, and the following proposition relates the TV and Wasserstein distances.
\index{Wasserstein!distance}
\index{topology!weak}

\begin{prop}[Comparison with total variation on discrete spaces]
\index{total variation}
\index{discrete!space}
	One has
	\eq{
		\frac{d_{\min}}{2}\norm{\al-\be}_{\TV} \leq \Wass_1(\al,\be) \leq \frac{d_{\max}}{2}  \norm{\al-\be}_{\TV}
		\qwhereq
		\choice{
			d_{\min} \eqdef \uinf{x \neq y} d(x,y) \\
			d_{\max} \eqdef \usup{x,y} d(x,y)
		}
	}
\end{prop}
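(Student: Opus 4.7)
The plan is to sandwich the cost $d$ between two multiples of the $0/1$ distance $d_{0/1}(x,y) = \mathbf{1}_{x \neq y}$ and then invoke Proposition~\ref{prop-rel-wass-tv}, which identifies $\Wass_1$ for the $0/1$ distance with $\tfrac{1}{2}\|\al-\be\|_{\TV}$.

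First I observe the pointwise sandwich
\eq{
    d_{\min}\, d_{0/1}(x,y) \;\leq\; d(x,y) \;\leq\; d_{\max}\, d_{0/1}(x,y).
}
Both inequalities are trivial on the diagonal (all three terms vanish), and off the diagonal they follow directly from the definitions of $d_{\min}$ and $d_{\max}$.

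For the lower bound, I integrate the left inequality against an arbitrary $\pi \in \Couplings(\al,\be)$:
\eq{
    \int d(x,y) \d\pi(x,y) \;\geq\; d_{\min} \int d_{0/1}(x,y) \d\pi(x,y).
}
Taking the infimum over $\pi$ on both sides yields $\Wass_1(\al,\be) \geq d_{\min} \cdot \Wass_1^{0/1}(\al,\be)$, and Proposition~\ref{prop-rel-wass-tv} applied to the $0/1$ distance turns the right-hand side into $\tfrac{d_{\min}}{2}\|\al-\be\|_{\TV}$.

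For the upper bound, the inequality of infima goes the wrong way if I try the same argument directly, so instead I evaluate both sides at an optimizer $\pi^\star$ of the $0/1$-Wasserstein problem (which exists by compactness of $\Couplings(\al,\be)$ in the weak$^*$ topology and continuity of the linear objective): using suboptimality of $\pi^\star$ for the $d$-cost,
\eq{
    \Wass_1(\al,\be) \;\leq\; \int d(x,y) \d\pi^\star(x,y) \;\leq\; d_{\max} \int d_{0/1}(x,y) \d\pi^\star(x,y) \;=\; d_{\max}\, \Wass_1^{0/1}(\al,\be),
}
and again Proposition~\ref{prop-rel-wass-tv} identifies the rightmost quantity as $\tfrac{d_{\max}}{2}\|\al-\be\|_{\TV}$. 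The only real point requiring care is the asymmetry between the two infima: for the lower bound the inequality of integrands propagates through the infimum, while for the upper bound one must fix a specific minimizer of the majorizing problem and use it as a competitor for the original problem.
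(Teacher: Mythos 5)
Your proof is correct and follows the same route as the paper: sandwich $d$ pointwise between $d_{\min} d_0$ and $d_{\max} d_0$, integrate against couplings, and invoke Proposition~\ref{prop-rel-wass-tv}. One remark: your claim that for the upper bound ``the inequality of infima goes the wrong way'' is mistaken. If $F(\pi) \leq G(\pi)$ for every $\pi$, then $\inf_\pi F \leq \inf_\pi G$ (for any $\pi$, $\inf F \leq F(\pi) \leq G(\pi)$, so $\inf F$ lower-bounds $G$ everywhere), and this monotonicity applies in both directions of the sandwich — which is exactly the paper's one-line argument. Your detour through an exact minimizer $\pi^\star$ of the $0/1$ problem is still valid, but it imports an existence claim whose stated justification is imprecise: $d_0 = \mathbf{1}_{x\neq y}$ is only lower semicontinuous, not continuous, so the objective $\pi \mapsto \int d_0\,\d\pi$ is weak$^*$ l.s.c.\ rather than continuous (this still yields a minimizer on a compact set, and an $\epsilon$-minimizer would avoid the issue entirely). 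The simpler symmetric argument makes all of this unnecessary.
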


\begin{proof}
	We denote $d_0(x,y)$ the distance such that $d_0(x,x)=0$ and $d_0(x,y)=1$ for $x \neq y$. One has
	\eq{
		d_{\min} d_0(x,y) \leq d(x,y) \leq d_{\max} d_0(x,y)
	}
	so that integrating this against any $\pi \in \Couplings(\al,\be)$ and taking the minimum among those $\pi$ gives the result using Proposition~\ref{prop-rel-wass-tv}.
\end{proof}

This bound is sharp, as this can be observed by taking $\al=\de_{x}$ and $\be=\de_y$, in which case the bound simply reads if $x \neq y$
\eq{
	d_{\min} \leq d(x,y) \leq d_{\max}.
}
This shows that the ratio between the two distances can blow up as $d_{\max}/d_{\min}$ increases. On non-discrete spaces, if $d_{\min}=0$, then the two distances are not equivalent, in line with the fact that the strong and weak topologies do not coincide.
\index{topology!weak}

\section{Wasserstein over Wasserstein}
\index{Wasserstein!over Wasserstein}
\label{sec-wasserstein-over-wasserstein}

The construction can be iterated. Once $(\X,\dist)$ is a metric space, the set of probability measures on $\X$ becomes a metric space through $\Wass_p$. It can therefore serve as a new ground space. This is useful whenever the objects to compare are themselves random probability measures, or mixtures whose components are meaningful objects rather than only a collapsed density.
\index{probability measure}
\index{random probability measure}

The standard setting is that of Polish spaces, introduced in Definition~\ref{def-polish-metric-space}. These assumptions rule out many measure-theoretic pathologies: probability laws can be approximated by countable objects, tightness gives compactness criteria, regular conditional probabilities exist on the associated Borel spaces, and weak convergence is stable. The next proposition records that Wasserstein spaces preserve this structure, so the construction can be iterated without leaving the same well-behaved category.
\index{conditional probability}
\index{tightness}
\index{Wasserstein!space}
\index{weak!convergence}
\index{Polish space}

\begin{prop}[Wasserstein spaces as ground spaces]\label{prop-wasserstein-space-polish}
\index{Wasserstein!space}
	If $(\X,\dist)$ is a Polish metric space, then $\Pp_p(\X)$ endowed with $\Wass_p$ is Polish. If $\X$ is compact, then $\Pp(\X)$ is compact for the Wasserstein topology, and the construction can be iterated to form $\Pp(\Pp(\X))$, $\Pp(\Pp(\Pp(\X)))$, and so on.
\index{Wasserstein topology}
\end{prop}
\begin{proof}
	This is a standard structural theorem for Wasserstein spaces~\cite{Villani09,SantambrogioBook,ambrosio2006gradient}. Completeness follows by representing a $\Wass_p$-Cauchy sequence through almost optimally glued couplings, which gives a Cauchy random sequence whose law is the desired limit; separability follows by approximating measures with finitely supported measures on a countable dense subset and rational weights. If $\X$ is compact, Prokhorov compactness gives compactness of $\Pp(\X)$ for weak convergence, and Proposition~\ref{prop-wass-metrizes-weak-compact} identifies this topology with any Wasserstein topology.
\index{countable dense subset}
\index{Wasserstein!space}
\index{rational weights}
\index{weak!convergence}
\end{proof}

We denote elements of $\Pp_2(\X)$ by $\alpha,\beta,\ldots$. Elements of
$\Pp_2(\Pp_2(\X))$ are denoted by fraktur letters, for instance
$\mathfrak A,\mathfrak B$; they are probability laws over probability measures, or random probability measures. A basic parametric example is obtained from a family $(\alpha_\zeta)_{\zeta\in Z}$ and a probability law $\gamma$ on the parameter space:
\index{probability measure}
\index{random probability measure}
\begin{equation}\label{eq-wow-parametric-law}
	\mathfrak A=(\zeta\mapsto\alpha_\zeta)_\sharp\gamma.
\end{equation}
If $\gamma=\sum_{i=1}^K a_i\delta_{\zeta_i}$, then
\[
	\mathfrak A=\sum_{i=1}^K a_i\delta_{\alpha_{\zeta_i}}.
\]
\begin{defn}[Collapsed, or barycentric, mixture]\label{def-collapsed-barycentric-mixture}
\index{collapsed mixture}
\index{barycentric!mixture}
	For $\mathfrak A\in\Pp(\Pp_2(\X))$, the collapsed, or barycentric, mixture associated with $\mathfrak A$ is the measure $\bar\alpha_{\mathfrak A}$ defined by
	\begin{equation}\label{eq-wow-barycentric-mixture}
		\int_\X f(x)\d\bar\alpha_{\mathfrak A}(x)
		=
		\int_{\Pp_2(\X)}
		\left(\int_\X f(x)\d\alpha(x)\right)
		\d\mathfrak A(\alpha),
	\end{equation}
	for bounded continuous $f$.
\end{defn}
In the finite case, $\bar\alpha_{\mathfrak A}=\sum_i a_i\alpha_{\zeta_i}$.

The Wasserstein distance on the Wasserstein space is
\index{Wasserstein!distance}
\index{Wasserstein!space}
\begin{equation}\label{eq-wow-distance}
	\mathbb W_2^2(\mathfrak A,\mathfrak B)
	\eqdef
	\inf_{\Pi\in\Couplings(\mathfrak A,\mathfrak B)}
	\int_{\Pp_2(\X)\times\Pp_2(\X)}
	\Wass_2^2(\alpha,\beta)\d\Pi(\alpha,\beta).
\end{equation}
For Gaussian mixtures, this separates two levels of geometry. A mixture
\index{Gaussian mixture}
$\sum_i a_i\Gaussian(m_i,\Sigma_i)$ can either be viewed as the collapsed measure on $\X$, or as the component law
\[
	\mathfrak A=\sum_i a_i\delta_{\Gaussian(m_i,\Sigma_i)}
\]
on the Bures-Wasserstein space of Gaussian components. Given two component laws
\[
	\mathfrak A=\sum_i a_i\delta_{\Gaussian(m_i,\Sigma_i)},
	\qquad
	\mathfrak B=\sum_j b_j\delta_{\Gaussian(n_j,\Lambda_j)},
\]
the discrete problem induced by~\eqref{eq-wow-distance} uses the component cost
\[
	C_{ij}=\norm{m_i-n_j}^2+\Bb(\Sigma_i,\Lambda_j)^2.
\]
Let $\Pi^\star$ be an optimal coupling between the weights $a$ and $b$. If $A_{ij}$ denotes the Brenier linear part from $\Sigma_i$ to $\Lambda_j$, then each active component pair is interpolated by
\[
	m_{ij,t}=(1-t)m_i+t n_j,
	\qquad
	\Sigma_{ij,t}
	=
	\big((1-t)\Id+tA_{ij}\big)\Sigma_i
	\big((1-t)\Id+tA_{ij}\big),
\]
and collapsing these component geodesics gives
\[
	\bar\alpha_t=
	\sum_{i,j}\Pi^\star_{ij}\Gaussian(m_{ij,t},\Sigma_{ij,t}).
\]
This component-level interpolation transports Gaussian components as atoms of the Wasserstein space before returning to measures on~$\X$. It is generally not the same as the true $\Wass_2$ interpolation between the collapsed mixture densities, which can split and recombine mass inside and across components.
\index{Bures-Wasserstein geometry}
\index{collapsed mixture}
\index{Wasserstein!space}

\begin{figure}[htbp]
\centering
\begin{tabular}{cc}
\includegraphics[width=.46\linewidth]{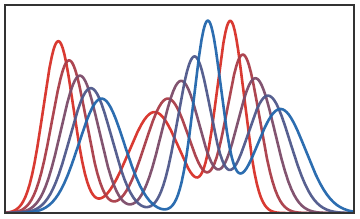} &
\includegraphics[width=.46\linewidth]{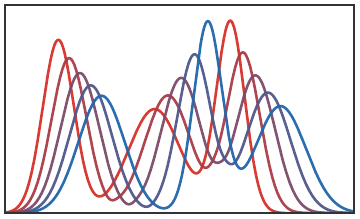} \\[-.1em]
\small component law on Gaussian atoms &
\small collapsed mixture law
\index{collapsed mixture}
\end{tabular}
\caption{Two interpolations between the same three-component one-dimensional Gaussian mixtures. On the left, each mixture is represented as a discrete law over Gaussian components, and the components are matched using the Bures-Wasserstein distance between Gaussians. On the right, the mixtures are first collapsed into ordinary one-dimensional densities and then interpolated by the true quantile formula for $\Wass_2$. The two constructions encode different geometries.}
\index{Gaussian mixture}
\index{Wasserstein!distance}
\index{quantile!formula}
\label{fig:kantorovich-wow-mixtures}
\end{figure}

\begin{prop}[Collapsing is non-expansive]\label{prop-wow-collapsed-bound}
	Let $\mathfrak A,\mathfrak B\in\Pp_2(\Pp_2(\X))$, and let $\bar\alpha_{\mathfrak A}$ and $\bar\beta_{\mathfrak B}$ be the collapsed mixtures defined by~\eqref{eq-wow-barycentric-mixture}. Then
\index{barycentric!mixture}
\index{collapsed mixture}
	\[
		\Wass_2(\bar\alpha_{\mathfrak A},\bar\beta_{\mathfrak B})
		\leq
		\mathbb W_2(\mathfrak A,\mathfrak B).
	\]
\end{prop}
\begin{proof}
	Fix $\Pi\in\Couplings(\mathfrak A,\mathfrak B)$. For every $(\alpha,\beta)$ choose, by a standard measurable selection argument and up to an arbitrarily small error, a coupling $\pi_{\alpha,\beta}\in\Couplings(\alpha,\beta)$ whose quadratic cost is $\Wass_2^2(\alpha,\beta)$. Integrating this Markov kernel against $\Pi$ gives a coupling $\bar\pi$ between $\bar\alpha_{\mathfrak A}$ and $\bar\beta_{\mathfrak B}$. Its cost satisfies
\index{measurable selection}
\index{Markov kernel}
\index{cost!quadratic}
	\[
		\int_{\X\times\X}d(x,y)^2\d\bar\pi(x,y)
		\leq
		\int_{\Pp_2(\X)^2}\Wass_2^2(\alpha,\beta)\d\Pi(\alpha,\beta)
	\]
	up to the arbitrary selection error. Taking first the infimum over $\bar\pi$ and then over $\Pi$ proves the claim.
\end{proof}

The following remark records a useful way in which this iterated construction reappears later for Gromov--Wasserstein lower bounds.

\begin{rem}[Local profiles as Wasserstein-over-Wasserstein laws]
	Given a metric-measure space $\XX=(\X,\dist_\X,\mu_\X)$, each point defines a local distance distribution
\index{Wasserstein!distance}
\index{Gromov-Wasserstein}
\index{Gromov-Wasserstein!distance}
\index{local!distance distribution}
\index{metric-measure space}
\[
	\alpha_x=(\dist_\X(x,\cdot))_\sharp\mu_\X\in\Pp(\RR_+),
	\qquad
	\mathfrak D_\X=(x\mapsto\alpha_x)_\sharp\mu_\X\in\Pp(\Pp(\RR_+)).
\]
The M\'emoli profile lower bound in Proposition~\ref{prop-memoli-gw-profile-lower-bound} is precisely a Wasserstein-over-Wasserstein comparison of these laws of local profiles. It replaces the full pairwise distortion by an ordinary OT problem whose ground cost is itself a one-dimensional Wasserstein distance.
\index{profile lower bound}
\index{local profile}
\index{distortion}
\index{ground cost}
\index{Memoli profile}
\index{Wasserstein!over Wasserstein}
\index{Wasserstein!distance}
	Note that there exist alternative distances which also metricize weak convergence. The simplest ones are Hilbertian kernel norms, which are detailed in Section~\ref{sec-dual-norms}.
\index{kernel!norm}
\index{weak!convergence}
\index{dual!norm}
\end{rem}

\section{Distributional Robustness and \texorpdfstring{$\Wass_\infty$}{W-infinity}}
\index{robustness!distributional}
\label{sec-dro-wasserstein-infinity}

\paragraph{DRO ambiguity sets.}
\index{robustness!distributionally robust optimization}

Wasserstein distances are also used to define ambiguity sets around an empirical law. Given samples $z_i$ and $\hat\alpha_n=\frac1n\sum_i\delta_{z_i}$, a distributionally robust optimization (DRO) problem replaces the empirical risk $\frac1n\sum_i \ell_\theta(z_i)$ by
\index{robustness!distributionally robust optimization}
\index{Wasserstein!distance}
\index{empirical!law}
\[
	\sup_{\beta:\,\Wass_p(\beta,\hat\alpha_n)\leq \rho}
	\int \ell_\theta(z)\d\beta(z),
\]
or, in Lagrangian penalized form, by $\sup_\beta \int \ell_\theta\d\beta-\lambda\Wass_p(\beta,\hat\alpha_n)^p$. The constrained and penalized formulations are linked by the choice of multiplier $\lambda$, but are not the same problem for an arbitrary fixed $\lambda$. Both ask for performance against distributions that can be reached by transporting the empirical mass within a budget. The radius $\rho$ is expressed in the geometry of the data space, so it can encode feature perturbations, domain shift or model misspecification~\cite{esfahani2018data,BlanchetMurthy2019,GaoKleywegt2016}.

The basic computational reason for the popularity of Wasserstein DRO is a dual reformulation. Under the usual upper-semicontinuity and growth assumptions on the loss, one has
\begin{equation}\label{eq-dro-dual-envelope}
	\sup_{\beta:\,\Wass_p(\beta,\hat\alpha_n)^p\leq \rho^p}
	\int \ell_\theta\d\beta
	=
	\inf_{\lambda\geq0}
	\lambda\rho^p
	+
	\frac1n\sum_{i=1}^n
	\sup_z\bigl\{\ell_\theta(z)-\lambda d(z,z_i)^p\bigr\}.
\end{equation}
Thus the robust risk is an empirical risk in which each sample is replaced by its worst penalized perturbation. For $p=1$ and an $L_\theta$-Lipschitz loss, the Kantorovich--Rubinstein dual gives the transparent upper bound
\[
	\sup_{\beta:\,\Wass_1(\beta,\hat\alpha_n)\leq \rho}
	\int \ell_\theta\d\beta
	\leq
	\frac1n\sum_i\ell_\theta(z_i)+\rho L_\theta,
\]
which exhibits Wasserstein robustness as a Lipschitz regularizer. The bound is sharp for worst-case optimization over a full Lipschitz ball of losses, while a fixed loss may not realize all extremal Lipschitz directions.

In machine learning, the inner supremum in~\eqref{eq-dro-dual-envelope} has the form of an adversarial perturbation problem: the adversary moves each datum away from $z_i$ but pays a transport penalty. This connects Wasserstein DRO to adversarial training and certified robustness~\cite{sinha2018certifying,NIPS2015_5745}. In sequential decision problems, the same idea places Wasserstein ambiguity sets around transition kernels or rewards, producing robust Bellman operators and robust reinforcement-learning models~\cite{xu2012distributionallyrobustmdp,yang2017convex}.
\index{adversarial!perturbation}

\begin{prop}[Convexity of transport costs]\label{prop-wasserstein-cost-convex}
\index{cost!transport}
	For any nonnegative lower-semicontinuous cost $c$, the value
	\[
		(\alpha,\beta)\mapsto\MK_c(\alpha,\beta)
	\]
	is jointly convex. In particular, for a ground metric $d$ and $p\geq1$, the map $(\alpha,\beta)\mapsto\Wass_p(\alpha,\beta)^p$ is jointly convex. The distance $\Wass_1$ is jointly convex, but $\Wass_p$ itself need not be convex for $p>1$.
\index{metric!learning}
\end{prop}
\begin{proof}
	Let $\pi_0\in\Couplings(\al_0,\beta_0)$ and $\pi_1\in\Couplings(\al_1,\beta_1)$ be $\eta$-optimal. Then $(1-t)\pi_0+t\pi_1$ is a coupling between $(1-t)\al_0+t\al_1$ and $(1-t)\beta_0+t\beta_1$, and its cost is the corresponding convex combination of the two costs. Letting $\eta\to0$ proves joint convexity of $\MK_c$. Taking $c=d^p$ gives convexity of $\Wass_p^p$. For $p=1$, this is convexity of $\Wass_1$ itself. For $p>1$, the root can destroy convexity: on the real line, $F(t)\eqdef\Wass_p((1-t)\delta_0+t\delta_1,\delta_0)=t^{1/p}$ satisfies $F(1/2)> (F(0)+F(1))/2$.
\end{proof}

This convexity is useful when distributions themselves are decision variables. In the usual DRO problem, however, the ambiguity set is fixed once the data are fixed. Therefore, if $z\mapsto\ell_\theta(z)$ is measurable and $\theta\mapsto\ell_\theta(z)$ is convex for every $z$, then
\[
	\theta\mapsto
	\sup_{\beta:\,\Wass_p(\beta,\hat\alpha_n)\leq\rho}
	\int \ell_\theta\,\d\beta
\]
is convex as a supremum of convex functions of $\theta$. This explains why Wasserstein DRO can preserve convex learning formulations while still modeling adversarial distributional shifts.
\index{convex!function}

\paragraph{\texorpdfstring{$\Wass_\infty$}{W-infinity} robustness.}
\index{Wasserstein!infinity distance}
\index{robustness!Wasserstein infinity}

The limiting distance
\begin{equation}\label{eq-wass-infty}
	\Wass_\infty(\alpha,\beta)
	\eqdef
	\inf_{\pi\in\Couplings(\alpha,\beta)}
	\operatorname*{ess\,sup}_{(x,y)\sim\pi} d(x,y)
\end{equation}
is the limit of $\Wass_p(\alpha,\beta)$ as $p\to\infty$ on bounded spaces, not the limit of the convex programs defining $\Wass_p^p$. It minimizes the worst displacement rather than an average displacement, and the resulting optimization is no longer a linear convex program because of the essential-supremum objective. This makes $\Wass_\infty$ less convenient algorithmically, but very natural in robust formulations where one wants to guarantee that every transported sample stays within a prescribed perturbation radius.

\begin{prop}[$\Wass_\infty$ robust envelope around an empirical law]\label{prop-wasserstein-infty-dro}
\index{empirical!law}
\index{robust!envelope}
	Let $(\Zz,d)$ be a Polish metric space. Let $\hat\alpha=\sum_{i=1}^n a_i\delta_{z_i}$ with $a_i>0$ and $\sum_i a_i=1$, and assume that the closed balls $\overline B(z_i,\rho)$ are compact. For any real-valued upper-semicontinuous loss $\ell$,
	\[
		\sup_{\beta:\,\Wass_\infty(\beta,\hat\alpha)\leq\rho}
		\int \ell(z)\d\beta(z)
		=
		\sum_{i=1}^n a_i\sup_{z\in\overline B(z_i,\rho)}\ell(z).
	\]
\end{prop}
\begin{proof}
	If $\Wass_\infty(\beta,\hat\alpha)\leq\rho$, then, by symmetry, there are couplings $\pi_m\in\Couplings(\hat\alpha,\beta)$ whose essential displacements are at most $\rho+1/m$. Since the two marginals are fixed probability measures on a Polish space, $\Couplings(\hat\alpha,\beta)$ is tight and closed, hence weakly compact by Prokhorov's theorem. After extraction, $\pi_m\rightharpoonup\pi\in\Couplings(\hat\alpha,\beta)$.
\index{probability measure}
\index{Prokhorov theorem}
\index{Polish space}
	For every $\eta>0$, the closed set $F_\eta=\{(x,z):d(x,z)\leq\rho+\eta\}$ has $\pi_m(F_\eta)=1$ for all sufficiently large $m$. Portmanteau's theorem gives $\pi(F_\eta)=1$. Letting $\eta\downarrow0$ along a countable sequence gives $\pi(\{(x,z):d(x,z)\leq\rho\})=1$.
	Disintegrating $\pi$ with respect to the first marginal gives $\pi=\sum_i a_i\delta_{z_i}\otimes\nu_i$, where each $\nu_i$ is supported in the closed ball $\overline B(z_i,\rho)$ and $\beta=\sum_i a_i\nu_i$. Hence
	\[
		\int\ell\,\d\beta
		=
		\sum_i a_i\int\ell\,\d\nu_i
		\leq
		\sum_i a_i\sup_{\overline B(z_i,\rho)}\ell.
	\]
	The reverse inequality follows by choosing, for each $i$, a maximizer $z_i^\star\in\overline B(z_i,\rho)$ and setting $\beta=\sum_i a_i\delta_{z_i^\star}$. The coupling $\sum_i a_i\delta_{(z_i,z_i^\star)}$ has essential displacement at most $\rho$, so this $\beta$ is feasible and attains the displayed value.
\end{proof}

\section{Quantitative Central Limit Theorems}
\index{central!limit theorem}
\label{sec-quantitative-clt}

The weak topology only says whether laws converge; Wasserstein distances also quantify how fast they converge. The next result is a representative theoretical application: the central limit theorem becomes a rate estimate in $\Wass_1$. This is useful because $\Wass_1$ is exactly the supremum over 1-Lipschitz observables, so the bound controls the error of all stable numerical or statistical measurements of the normalized sum at once. Figure~\ref{fig:matching-quantitative-clt} illustrates the elementary Bernoulli case.
\index{central!limit theorem}
\index{Wasserstein!distance}
\index{topology!weak}

\begin{figure}[H]
\centering
\begin{tabular}{@{}ccccc@{}}
\small $n=1$ & \small $n=2$ & \small $n=4$ & \small $n=16$ & \small $n=64$ \\[-.15em]
\includegraphics[width=.17\linewidth]{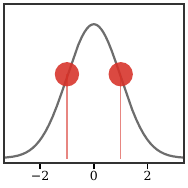} &
\includegraphics[width=.17\linewidth]{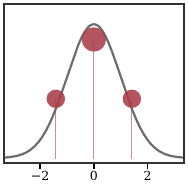} &
\includegraphics[width=.17\linewidth]{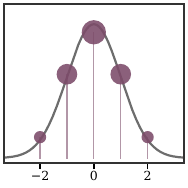} &
\includegraphics[width=.17\linewidth]{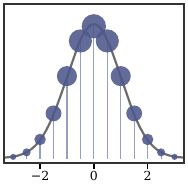} &
\includegraphics[width=.17\linewidth]{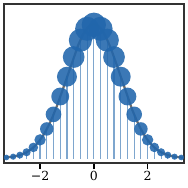}
\end{tabular}
\caption{Central-limit theorem for normalized Bernoulli sums. Starting from $\al_0=\frac12(\delta_{-1}+\delta_1)$, the law of $Z_n=n^{-1/2}\sum_{i=1}^n X_i$ remains discrete, but its rescaled atom heights approach the standard Gaussian density shown in gray. The Wasserstein Berry--Esseen bound below quantifies this weak convergence by a $\Wass_1$ distance.}
\index{central!limit theorem}
\index{weak!convergence}
\label{fig:matching-quantitative-clt}
\end{figure}

The following estimate is the Wasserstein form of the classical Berry--Esseen theorem~\cite{berry1941accuracy,esseen1942liapunoff}. The proof sketch uses Stein's method, whose modern normal-approximation formulation is developed in~\cite{chen2011normal}.
\index{Berry-Esseen!theorem}
\index{normal approximation}
\index{Stein method}

\begin{prop}[Berry--Esseen bound in $\Wass_1$]\label{prop-berry-esseen-w1}
\index{Berry-Esseen!bound}
	Let $(X_i)_{i=1}^n$ be i.i.d. real random variables such that $\EE X_i=0$, $\EE X_i^2=1$ and $\EE |X_i|^3<+\infty$. If $\alpha_n$ is the law of $n^{-1/2}\sum_i X_i$ and $\gamma$ is the standard Gaussian law, then
\index{random variable}
	\[
		\Wass_1(\alpha_n,\gamma)
		\leq
		\frac{C\,\EE |X_1|^3}{\sqrt n},
	\]
	where $C$ is a universal constant.
\end{prop}

\begin{proof}
	We give the standard Stein-method sketch. By Kantorovich--Rubinstein duality,
\index{Kantorovich-Rubinstein!duality}
\index{Stein method}
	\[
		\Wass_1(\alpha_n,\gamma)
		=
		\sup_{\Lip(h)\leq1}
		\left|\EE h(S_n)-\EE h(G)\right|,
		\qquad
		S_n=n^{-1/2}\sum_i X_i,\ G\sim\gamma.
	\]
	For each such $h$, solve Stein's equation
	\[
		f_h'(x)-x f_h(x)=h(x)-\EE h(G).
	\]
	The solution satisfies uniform derivative bounds depending only on the Lipschitz constant of $h$. Hence
	\[
		\EE h(S_n)-\EE h(G)=\EE[f_h'(S_n)-S_nf_h(S_n)].
	\]
	Expanding $f_h'(S_n)$ and $f_h(S_n)$ by replacing the summands one at a time, the first- and second-order terms cancel because $\EE X_i=0$ and $\EE X_i^2=1$. The Taylor remainder is bounded by $C\sum_i \EE |X_i/\sqrt n|^3$, which gives the displayed $n^{-1/2}$ rate. Sharper constants and higher-order transport-distance refinements are studied in~\cite{bobkov2018berry,rio2011asymptotic}.
\end{proof}


\chapter{Dual Problem}
\index{dual!problem}
\label{sec-dual}

Duality turns the transport problem into a search for potentials rather than couplings. This chapter explains why potentials certify optimality, how $c$-transforms regularize them, and why the quadratic case reveals convex analysis behind Brenier maps. Linear-programming duality gives the discrete picture~\cite{bertsimas1997introduction}, while the continuous form is one of the central theorems of OT~\cite{Villani03,SantambrogioBook}.
\index{Brenier!map}
\index{c-transform}

\section{Discrete dual}

The discrete dual gives finite-dimensional certificates of optimality. Its complementary slackness conditions identify where an optimal coupling can put mass.
\index{optimal coupling}
\index{optimality!complementary slackness}

The Kantorovich problem~\eqref{eq-kanto-discr} is a linear program so that one can equivalently compute its value by solving a dual linear program.
\index{Kantorovich!problem}


\begin{defn}[Admissible potentials]\label{def-admissible-potentials}
\index{admissible!potential}
\index{dual!potential}
	For a discrete problem with marginal sizes $n,m$ and cost matrix $\C\in\RR^{n\times m}$, a pair $(\fD,\gD)\in\RR^n\times\RR^m$ is admissible if it lies below the cost:
	\eql{\label{eq-feasible-potential}
		\PotentialsD(\a,\b) \eqdef \enscond{
			(\fD,\gD) \in \RR^n \times \RR^m
		}{ \foralls (i,j) \in \range{n} \times \range{m}, \fD_i+\gD_j \leq \C_{i,j} }.
	}
	Equivalently, $\fD\oplus\gD\leq\C$ entrywise. The notation suppresses the dependence on $\C$, which is fixed in the surrounding problem.
\end{defn}
The two vectors play the role of source and target prices; admissibility means that no transported pair is priced above its travel cost.

\begin{prop}[Discrete Kantorovich duality]\label{prop-duality-discr}
\index{Kantorovich!duality}
One has
\eql{\label{eq-dual}
	\MKD_\C(\a,\b) =
	\umax{(\fD,\gD) \in \PotentialsD(\a,\b)} \dotp{\fD}{\a} + \dotp{\gD}{\b}
}
\end{prop}

\begin{proof}
For the sake of completeness, let us derive this dual problem using Lagrangian duality. The Lagrangian associated to~\eqref{eq-kanto-discr} reads
\index{dual!problem}
\eql{\label{eq-mk-lagr}
	\umin{\P \geq 0} \umax{ (\fD,\gD) \in \RR^n \times \RR^m }
		\dotp{\C}{\P} + \dotp{\a - \P\ones_m}{\fD} + \dotp{\b - \P^\top \ones_n}{\gD}.
}
For a linear program, if the primal constraint set is non-empty, one can always exchange the min and the max and get the same value. We thus consider
\eq{
	\umax{ (\fD,\gD) \in \RR^n \times \RR^m }
	\dotp{\a}{\fD} + \dotp{\b}{\gD}
	+ \umin{\P \geq 0}
		\dotp{\C - \fD\ones_m^\top - \ones_n \gD^\top}{\P}.
}
We conclude by remarking that
\eq{
	\umin{\P \geq 0} \dotp{\Q}{\P} =
	\choice{
		0 \qifq \Q \geq 0\\
		-\infty \quad \text{otherwise}
	}
}
so that the constraint reads $\C - \fD\ones_m^\top - \ones_n \gD^\top = \C-\fD\oplus \gD \geq 0$.
\end{proof}

The primal-dual optimality relation for the Lagrangian~\eqref{eq-mk-lagr} allows locating the support of the optimal transport plan
\index{support}
\index{plan!transport}
\eql{\label{eq-mk-pd-rel}
	\Supp(\P) \subset \enscond{(i,j) \in \range{n} \times \range{m}}{ \fD_i+\gD_j=\C_{i,j} }.
}

Figure~\ref{fig:dual-kantorovich-discrete-potentials} shows these finite-dimensional certificates on a one-dimensional quadratic problem.  The potentials are not transport maps themselves; rather, their contact set with the cost matrix is where an optimal coupling is allowed to put mass.
\index{cost matrix}
\index{optimal coupling}
\index{transport map}

\begin{figure}[ht]
\centering
\begin{tabular}{@{}ccc@{}}
\includegraphics[width=.30\linewidth]{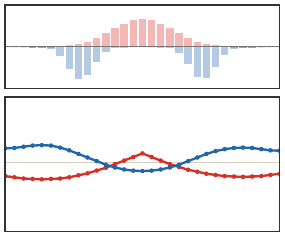} &
\includegraphics[width=.30\linewidth]{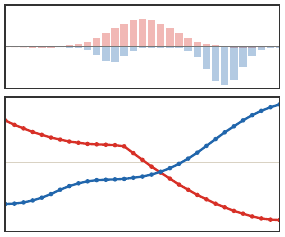} &
\includegraphics[width=.30\linewidth]{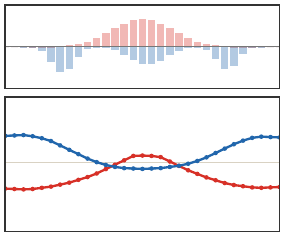} \\[-.1em]
\small balanced target &
\small strongly shifted target &
\small three-mode target
\end{tabular}
\caption{Discrete Kantorovich dual potentials for the quadratic cost $C_{i,j}=|x_i-y_j|^2$.  In each panel the upper strip shows the fixed source histogram in red and the target histogram in blue: a balanced bimodal target, a more strongly shifted target, and a three-component mixture.  The lower strip shows an optimal pair of dual vectors $(\fD,\gD)$, with gauge chosen so that $\dotp{\fD}{\a}=0$.  Complementary slackness states that mass can be transported only through entries where $\fD_i+\gD_j=C_{i,j}$.}
\index{optimality!complementary slackness}
\index{dual!potential}
\index{cost!quadratic}
\label{fig:dual-kantorovich-discrete-potentials}
\end{figure}

The formulation~\eqref{eq-dual} shows that $(\a,\b) \mapsto \MKD_\C(\a,\b)$ is a convex function (as a supremum of linear functions). From the primal problem~\eqref{eq-kanto-discr}, one also sees that $\C \mapsto \MKD_\C(\a,\b)$ is concave.
\index{convex!function}

\section{Auction Algorithm and Dual Prices}
\index{auction algorithm}
\index{dual!price}
\label{sec-auction-dual-ascent}

The assignment algorithms mentioned in Chapter~\ref{sec-matching} become more transparent once one has dual variables. The auction algorithm is a dual price method: it updates target prices and maintains an approximate complementary-slackness certificate. The small tolerance $\epsilon$ removes ties, stabilizes the price updates and gives a quantitative optimality certificate~\cite{bertsekas1981new,bertsekas1992auction,merigot2020optimaltransportalgorithms}.
\index{optimality!certificate}
\index{optimality!complementary slackness}
\index{auction algorithm}

Consider the square assignment problem with costs $C_{i,j}$ and rewrite it as the profit maximization problem with $a_{i,j}=-C_{i,j}$. The auction algorithm keeps prices $p_j$ on the target points and a partial assignment. For an unassigned source $i$, define the best and second-best reduced profits
\index{assignment problem}
\[
	v_i=\max_j (a_{i,j}-p_j),\qquad
	j_i\in\operatorname*{argmax}_j(a_{i,j}-p_j),\qquad
	w_i=\max_{j\neq j_i}(a_{i,j}-p_j).
\]
Source $i$ bids for $j_i$ and increases its price by the gap to the second-best target, plus a margin:
\[
	p_{j_i}\leftarrow p_{j_i}+v_i-w_i+\epsilon.
\]
The target $j_i$ is then assigned to $i$, and its previous owner, if any, becomes unassigned. The iteration stops when all sources are assigned. Algorithm~\ref{alg:auction-bidding} records the bidding loop. Figure~\ref{fig:dual-auction-progression} displays actual auction iterates using the same assignment-state convention as Figure~\ref{fig:matching-hungarian-progression}: flat rows denote unassigned bidders, and one-hot rows denote currently owned targets.
\index{Hungarian primal-dual method}

\begin{alg}[Auction bidding with target prices]\label{alg:auction-bidding}
\textbf{Input:} Profit matrix $A=(a_{ij})$, bid increment $\epsilon>0$.

\textbf{Output:} Assignment map $\sigma$.

\textbf{Initialize:} Set prices $p_j=0$, ownership map $o(j)=\emptyset$, and $U=\{1,\ldots,n\}$.

\textbf{While} the unassigned set $U$ is nonempty \textbf{do}:
\begin{algblock}
\textbf{Set} $i=\min U$.

\textbf{Set} $j_i=\min\argmax_j(a_{ij}-p_j)$.

\textbf{Set} \(v_i=a_{ij_i}-p_{j_i}\) and \(w_i=\max_{j\neq j_i}(a_{ij}-p_j)\).

\textbf{Update price:}
\(p_{j_i}\leftarrow p_{j_i}+v_i-w_i+\epsilon.\)

\textbf{If} $o(j_i)=i'\neq\emptyset$ \textbf{then}:
\begin{algblock}

\textbf{Set} $U\leftarrow U\cup\{i'\}$.

\end{algblock}
\textbf{Set} $o(j_i)=i$ and $U\leftarrow U\setminus\{i\}$.

\end{algblock}
\algreturnskip
\textbf{Return} $\sigma(i)=j$ iff $o(j)=i$.
\end{alg}

\begin{figure}[ht]
\centering
\setlength{\tabcolsep}{2pt}
\begin{tabular}{ccccc}
\includegraphics[width=.18\linewidth]{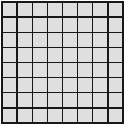} &
\includegraphics[width=.18\linewidth]{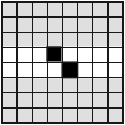} &
\includegraphics[width=.18\linewidth]{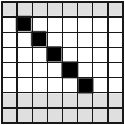} &
\includegraphics[width=.18\linewidth]{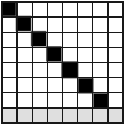} &
\includegraphics[width=.18\linewidth]{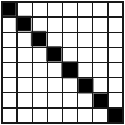} \\[-.1em]
\small initial &
\small 2 bids &
\small 5 bids &
\small 7 bids &
\small final
\end{tabular}
\caption{Matrix view of actual auction iterates on the same diagonally dominant one-dimensional squared-distance assignment as Figure~\ref{fig:matching-hungarian-progression}. Each panel records the current ownership state: unassigned bidders are shown as flat rows, while assigned bidders are shown as one-hot rows at their currently held target. The snapshots show initialization, intermediate price updates, and the final identity assignment satisfying complementary slackness.}
\index{optimality!complementary slackness}
\label{fig:dual-auction-progression}
\end{figure}

For fixed prices $p$, eliminating the bidder utilities $u_i$ in the dual minimization gives the convex objective
\[
	D(p)=\sum_j p_j+\sum_i \max_j(a_{i,j}-p_j),
\]
which comes from the dual constraints $u_i+p_j\geq a_{i,j}$. The auction update should thus be viewed as a price-adjustment method on this nonsmooth dual landscape, rather than as a generic gradient step: the proof of correctness is through approximate complementary slackness.
\index{optimality!complementary slackness}

\begin{defn}[$\epsilon$-complementary slackness]
	An assignment $\sigma$ and prices $p$ satisfy $\epsilon$-complementary slackness if, for every source $i$,
	\[
		a_{i,\sigma(i)}-p_{\sigma(i)}
		\geq
		\max_j(a_{i,j}-p_j)-\epsilon.
	\]
\end{defn}

\begin{prop}[Auction optimality certificate]\label{prop-auction-eps-cs}
\index{optimality!certificate}
	If a complete assignment $\sigma$ satisfies $\epsilon$-complementary slackness, then it is $n\epsilon$-optimal for the profit maximization problem, or equivalently $n\epsilon$-optimal for the original cost minimization problem. If all costs are integers and $\epsilon<1/n$, then $\sigma$ is optimal.
\index{optimality!complementary slackness}
\end{prop}

\begin{proof}
	Let $\tau$ be any assignment. By $\epsilon$-complementary slackness,
\index{optimality!complementary slackness}
	\[
		a_{i,\tau(i)}-p_{\tau(i)}
		\leq
		\max_j(a_{i,j}-p_j)
		\leq
		a_{i,\sigma(i)}-p_{\sigma(i)}+\epsilon.
	\]
	Summing over $i$ cancels prices, because both $\sigma$ and $\tau$ are permutations:
	\[
		\sum_i a_{i,\tau(i)}
		\leq
		\sum_i a_{i,\sigma(i)}+n\epsilon.
	\]
	Thus no assignment has profit more than $n\epsilon$ above that of $\sigma$. Since $a=-C$, the same statement says that the cost of $\sigma$ is at most $n\epsilon$ above the minimum cost. If the costs are integers, all assignment costs are integers; a gap strictly smaller than one therefore forces the gap to be zero.
\index{cost!assignment}
\end{proof}

During the bidding process the last bid made by a source makes its chosen target better, up to the margin $\epsilon$, than all alternatives. Subsequent price increases can only make targets less attractive, and one checks by induction that currently assigned pairs satisfy $\epsilon$-complementary slackness. The standard finite-termination proof normalizes prices by subtracting their minimum, observes that each bid increases one target price by at least $\epsilon$, and bounds the normalized price spread in terms of the range of the profits; see~\cite{bertsekas1992auction,merigot2020optimaltransportalgorithms} for the full bound and implementation details.
\index{optimality!complementary slackness}

\begin{rem}[$\epsilon$-scaling and relation with Sinkhorn]
\index{matrix!scaling}
\index{Sinkhorn!algorithm}
	In practice one starts with a coarse $\epsilon$ and repeatedly decreases it, warm-starting the prices and assignment. This $\epsilon$-scaling strategy is a homotopy method: large $\epsilon$ regularizes the combinatorial problem by enforcing a visible margin between the best and second-best reduced profits, while small $\epsilon$ recovers the exact dual certificate. If one wants a continuous-optimization analogy, the margin is closer to an exact-penalty or proximal continuation parameter than to a literal quadratic penalty.
\index{dual!certificate}

	Sinkhorn scaling plays a parallel role for entropic OT. There, the hard minimum in the dual $c$-transform is replaced by a soft minimum, or log-sum-exp, with temperature $\epsilon$; in the auction algorithm, the hard maximum is kept but the complementary slackness condition is relaxed by $\epsilon$. Both methods therefore use an $\epsilon$-controlled dual continuation, and both recover the unregularized transport certificate as $\epsilon\to0$ under the usual assumptions. The outputs are different: Sinkhorn produces dense entropic couplings, whereas auction keeps a sparse assignment throughout.
\index{Sinkhorn!scaling}
\index{entropic!OT}
\index{log-sum-exp}
\index{optimality!complementary slackness}
\index{auction algorithm}
\index{soft!minimum}
\index{c-transform}
\end{rem}

\section{General formulation}
\label{sec-dual-general}

The continuous dual is the analytic counterpart of the discrete linear program. It uses continuous potentials because measures are probed through integration.

To extend this primal-dual construction to arbitrary measures, it is important to realize that measures are naturally paired in duality with continuous functions, using the pairing $\dotp{\f}{\al} \eqdef \int f \d \al$.


\begin{prop}[Kantorovich duality]\label{prop-kantorovich-duality-general}
\index{Kantorovich!duality}
	Assume that $\Xx$ and $\Yy$ are compact metric spaces and that $c\in\Cc(\Xx\times\Yy)$. Then
	\eql{\label{eq-dual-generic}
		\MK_\c(\al,\be) =
		\umax{(\f,\g) \in \Potentials(\c)}
			\int_\X \f(x) \d\al(x) + \int_\Y \g(y) \d\be(y),
	}
	where the set of admissible dual potentials is
\index{dual!potential}
	\eql{\label{eq-dfn-pot-dual}
		\Potentials(\c) \eqdef \enscond{
			(\f,\g) \in \Cc(\X) \times \Cc(\Y)
		}{
			\forall (x,y), \f(x)+\g(y) \leq \c(x,y)
		}.
	}
	Here, $(\f,\g)$ is a pair of continuous functions, often called ``Kantorovich potentials''.
\index{Kantorovich!potential}
	The same formula extends under the usual lower-semicontinuity and integrability assumptions, replacing maxima by suprema when dual optimizers need not exist.
\end{prop}
\begin{proof}
	Weak duality is immediate: if $f(x)+g(y)\leq c(x,y)$ and $\pi\in\Couplings(\alpha,\beta)$, then
\index{duality!weak}
	\[
		\int f\d\alpha+\int g\d\beta
		=
		\int (f(x)+g(y))\d\pi(x,y)
		\leq
		\int c\d\pi.
	\]
	Taking the supremum over admissible potentials and the infimum over couplings gives ``$\leq$''.

	For the reverse inequality, view the primal problem as a linear program over the locally convex space of Radon measures, paired with $\Cc(\X\times\Y)$. The affine map $\pi\mapsto(\pi_1,\pi_2)$ is continuous for the weak topology, the feasible set is non-empty because it contains $\alpha\otimes\beta$, and the cost is continuous and bounded on compact sets. Since the set of probability measures on the compact product is weakly compact, the set of attainable cost-marginal triples is closed after adding the epigraph variable below. The separating-hyperplane theorem applied to the convex set of attainable triples
\index{affine map}
\index{probability measure}
\index{Radon!measure}
\index{topology!weak}
	\[
		\Big\{\big(\pi_1,\pi_2,\int c\d\pi+r\big): \pi\geq0,\ r\geq0\Big\}
	\]
	gives a continuous affine separator, hence functions $(f,g)$ and a scalar multiplier which can be normalized so that $f\oplus g\leq c$. The separating inequality then states that the supremum over such potentials is at least the primal value. This proves equality. The same argument is the infinite-dimensional analogue of the finite linear-programming proof in Proposition~\ref{prop-duality-discr}.
	\end{proof}

\begin{rem}[Dual attainment from $c$-transforms]\label{rem-kantorovich-dual-attainment}
\index{dual!attainment}
\index{c-transform}
	Under the compactness and continuity assumptions of Proposition~\ref{prop-kantorovich-duality-general}, the maximum in~\eqref{eq-dual-generic} is attained. If $c$ is Lipschitz, Proposition~\ref{prop-c-transform-lipschitz} shows that one may replace an admissible pair by its $c$-transforms without decreasing the dual objective; after fixing one additive gauge, the transformed potentials are uniformly bounded and equi-Lipschitz. Arzel\`a--Ascoli then gives a converging maximizing subsequence, and the closed constraint $f\oplus g\leq c$ passes to the limit.
\index{Kantorovich!duality}
\index{c-transform}
\end{rem}

The discrete case~\eqref{eq-dual} corresponds to the dual vectors being samples of the continuous potentials, \emph{i.e.} $(\fD_i,\gD_j)=(\f(x_i),\g(y_j))$. The primal-dual optimality conditions allow for tracking the support of the optimal plan, and~\eqref{eq-mk-pd-rel} is generalized as
\index{support}
\index{optimal plan}
\eql{\label{eq-mk-pd-rel-cont}
	\Supp(\pi) \subset \enscond{(x,y) \in \X \times \Y}{ \f(x)+\g(y)=\c(x,y) }.
}
For the one-dimensional quadratic cost, the continuous potentials can be read from the monotone map $T=F_\be^{-1}\circ F_\al$: on the active graph, $f'(x)=2(x-T(x))$ and $g=f^c$.
\index{cost!quadratic}

\begin{figure}[ht]
\centering
\begin{tabular}{@{}ccc@{}}
\includegraphics[width=.30\linewidth]{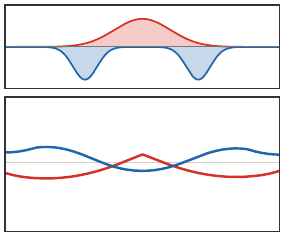} &
\includegraphics[width=.30\linewidth]{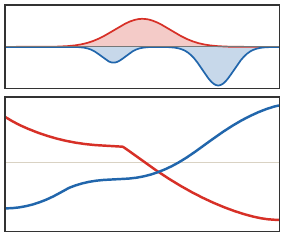} &
\includegraphics[width=.30\linewidth]{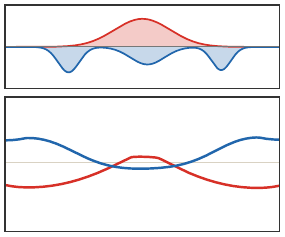} \\[-.1em]
\small balanced target &
\small strongly shifted target &
\small three-mode target
\end{tabular}
\caption{Continuous Kantorovich potentials for the same source and target families as Figure~\ref{fig:dual-kantorovich-discrete-potentials}.  The upper strips show the source density $\al$ in red and the target density $\be$ in blue, including the strongly shifted and three-mode targets.  The lower strips show potentials $f$ and $g=f^c$ for the quadratic cost $c(x,y)=|x-y|^2$, with the same gauge convention as in the discrete figure.  The equality set $f(x)+g(y)=c(x,y)$ contains the monotone transport graph.}
\index{Kantorovich!potential}
\index{cost!quadratic}
\label{fig:dual-kantorovich-continuous-potentials}
\end{figure}

Note that in contrast to the primal problem~\eqref{eq-mk-generic}, showing the existence of solutions to~\eqref{eq-dual-generic} is non-trivial, because the constraint set $\Potentials(\c)$ is not compact and the objective is not coercive. Using the machinery of $c$-transforms detailed in Section~\ref{sec-c-transfo}, one can show that optimal $(\f,\g)$ are necessarily Lipschitz regular, which enables the replacement of the constraint by a compact one.
\index{c-transform}

\section[c-transforms]{$c$-transforms}
\index{c-transform}
\label{sec-c-transfo}

The $c$-transform is the operation that improves potentials without changing feasibility. It is both a proof device for dual attainment and the route from duality to Brenier's convex potentials.
\index{dual!attainment}
\index{convex!potential}

\paragraph{Best-response potentials and the $c$-transform.}
\index{c-transform}

Keeping a dual potential $\f$ fixed, one can maximize in closed form over the second potential in the dual problem~\eqref{eq-dual-generic}, which leads one to consider
\index{dual!potential}
\index{dual!problem}
\eq{
	\usup{\g \in \Cc(\Yy)} \enscond{
		\int g \d \be }{
			\foralls (x,y), \g(y) \leq c(x,y) - \f(x)
		}.
}
The constraint can be replaced by
\eq{
	\foralls y \in \Yy, \quad \g(y) \leq \f^\c(y)
}.

\begin{defn}[$c$-transform]\label{def-c-transform}
\index{c-transform}
	For a function $f:\Xx\to\RR\cup\{-\infty\}$, its $c$-transform is
	\eql{\label{eq-c-transform}
		\foralls y \in \Y, \quad
		\f^\c(y) \eqdef \uinf{x \in \X} \c(x,y) - \f(x).
	}
	For a function $g:\Yy\to\RR\cup\{-\infty\}$, the $\bar c$-transform associated with $\bar c(y,x)=c(x,y)$ is
	\[
		\foralls x \in \X, \quad
		\g^{\bar\c}(x) \eqdef \uinf{y \in \Y} \c(x,y) - \g(y).
	\]
\end{defn}
Since $\be$ is positive, the maximization of $\int \g \d \be$ is thus achieved at those functions such that $\g=\f^\c$ on the support of $\be$, which means $\be$-almost everywhere.

\begin{figure}[H]
\centering
\begin{tabular}{@{}ccc@{}}
\small $p=1$ &
\small $p=2$ &
\small $p=4$ \\[-.15em]
\includegraphics[width=.30\linewidth]{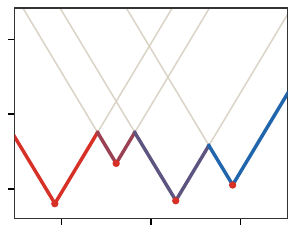} &
\index{c-transform}
\includegraphics[width=.30\linewidth]{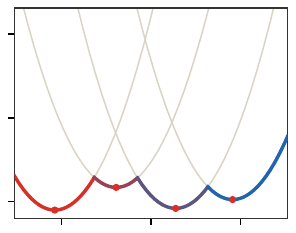} &
\includegraphics[width=.30\linewidth]{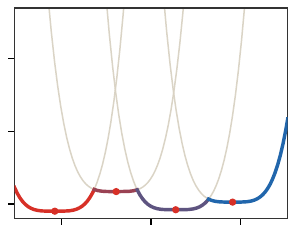}
\end{tabular}
\caption{Discrete $c$-transform as a lower envelope for costs $c_p(x,y)=|x-y|^p$.  The red circles are four source atoms $x_i$ with potential values $f_i$; the gray curves are the translated functions $y\mapsto c_p(x_i,y)-f_i$; the colored curve is their lower envelope $f^c(y)=\min_i c_p(x_i,y)-f_i$. This is the semi-discrete situation where $\X$ is finite, equivalently the source measure $\al$ is discrete; Chapter~\ref{sec-semidiscr-w1} studies the complementary case where the eliminated potential is supported on finitely many target atoms.}
\index{semi-discrete!OT}
\index{c-transform}
\label{fig:dual-c-transform-envelope}
\end{figure}

\begin{prop}[$c$-transforms solve the semi-relaxed problems]\label{prop-c-transform-semi-relaxed}
\index{c-transform}
\index{semi-relaxed!problem}
	For fixed $f$, the maximizers of the dual objective over all $g$ such that $f\oplus g\leq c$ are exactly the functions satisfying $g=f^c$ $\be$-almost everywhere. Equivalently, $f^c$ gives the value of the one-marginal primal problem
	\[
		\inf_{\pi:\,\pi_2=\be}
		\int c(x,y)\d\pi(x,y)-\int f(x)\d\pi_1(x)
		=
		\int f^c(y)\d\be(y).
	\]
	Symmetrically, for fixed $g$, the maximizers over $f$ are the functions satisfying $f=g^{\bar c}$ $\al$-almost everywhere.
\end{prop}
\begin{proof}
	The constraint $f(x)+g(y)\leq c(x,y)$ for all $x$ is equivalent, for each fixed $y$, to
	\[
		g(y)\leq \inf_x c(x,y)-f(x)=f^c(y).
	\]
	Since $\be$ is nonnegative, the largest possible value of $\int g\d\be$ is obtained by saturating this pointwise upper bound on the support of $\be$. The proof for $f=g^{\bar c}$ is identical after exchanging the two marginals.

	For the primal formula, disintegrate any feasible $\pi$ as $\pi(\d x,\d y)=\pi_y(\d x)\be(\d y)$. Then
\index{disintegration}
	\[
		\int c\d\pi-\int f\d\pi_1
		=
		\int\left(\int (c(x,y)-f(x))\d\pi_y(x)\right)\d\be(y)
		\geq
		\int f^c(y)\d\be(y).
	\]
	If minimizers admit a measurable selection, equality is obtained by choosing $\pi_y$ supported on minimizers of $x\mapsto c(x,y)-f(x)$. Otherwise one uses approximate measurable selections and lets the approximation error vanish.
\index{measurable selection}
\end{proof}

The map $(f,g)\mapsto(g^{\bar c},f^c)$ replaces dual potentials by better ones, in the sense that it preserves feasibility and improves the dual objective. Functions of the form $f^c$ and $g^{\bar c}$ are called $c$-concave and $\bar c$-concave functions. These partial minimizations define maximizers on the supports of $\al$ and $\be$, while Definition~\ref{def-c-transform} defines functions on the whole spaces $\X$ and $\Y$. This gives a canonical extension of dual solutions beyond the active supports.
\index{dual!potential}
\index{c-transform}

\begin{prop}[Lipschitz stability of $c$-transforms]\label{prop-c-transform-lipschitz}
\index{Lipschitz stability}
\index{Lipschitz!stability}
\index{c-transform}
	If $c$ is $L$-Lipschitz with respect to its second variable, uniformly in the first one and for the metric $d_\Y$ on $\Y$, then $f^c$ is $L$-Lipschitz.
\end{prop}
\begin{proof}
	For each $x$, set $F_x(y)=c(x,y)-f(x)$ and $F(y)=f^c(y)=\inf_x F_x(y)$. Since all the functions $F_x$ are $L$-Lipschitz,
	\[
		|F(y)-F(y')|
		=
		\left|\inf_x F_x(y)-\inf_x F_x(y')\right|
		\leq
		\sup_x |F_x(y)-F_x(y')|
		\leq
		L d_\Y(y,y').
	\]
\end{proof}

This stability is crucial for dual attainment. When $c$ is Lipschitz on compact spaces, one can replace arbitrary admissible potentials by $c$-transformed ones with a uniform Lipschitz bound; after fixing the harmless additive gauge, compactness follows from the Arzela--Ascoli theorem.
\index{dual!attainment}

\paragraph{Euclidean case.}

The Euclidean quadratic cost is the model case where $c$-transforms become ordinary convex conjugates after removing the quadratic terms. This is the algebraic bridge between Kantorovich duality and Brenier maps.
\index{Kantorovich!duality}
\index{convex!conjugate}
\index{cost!quadratic}
\index{Brenier!map}
\index{c-transform}

The special cost $c(x,y)=-\dotp{x}{y}$ on $\X=\Y=\RR^d$ is central because it reduces the quadratic Wasserstein problem to convex duality. Indeed, for any $\pi\in\Couplings(\al,\be)$,
\[
	\int \norm{x-y}^2\d\pi(x,y)
	=
	\int\norm{x}^2\d\al(x)+\int\norm{y}^2\d\be(y)
	-2\int \dotp{x}{y}\d\pi(x,y).
\]
For $c(x,y)=-\dotp{x}{y}$, one has
\[
	f^c(y)
	=
	\inf_x -\dotp{x}{y}-f(x)
	=
	-(-f)^*(y),
	\qquad
	h^*(y)\eqdef\sup_x \dotp{x}{y}-h(x).
\]
Thus $c$-concave functions are negatives of convex functions. In the one-dimensional bilinear model case, the hard double $c$-transform is therefore an operation of taking concave envelopes.
\index{convex!function}
\index{c-transform}

\begin{rem}[Proof of Brenier's theorem]\label{rem-proof-brenier}
\index{Brenier!theorem}
	For $c(x,y)=\norm{x-y}^2$, subtracting the harmless quadratic terms reduces the geometry to the bilinear cost $-\dotp{x}{y}$. The primal-dual relationship, together with the fact that one can replace $(f,g)$ by $(f^{c\bar c},f^c)$, shows that an optimal plan satisfies
\index{optimal plan}
	\[
		\supp(\pi)
		\subset
		\enscond{(x,y)}{\phi(x)+\phi^*(y)=\dotp{x}{y}},
	\]
	where $\phi=-f^{c\bar c}$ is convex and $-g=\phi^*$. By the Fenchel inequality, equality holds exactly when $y\in\partial\phi(x)$. If $\al$ has a density, convex functions are differentiable Lebesgue-almost everywhere, hence $\al$-almost everywhere, so $\partial\phi(x)$ is a singleton for $\al$-almost every $x$. This yields the Brenier map $T=\nabla\phi$ and explains why the optimal coupling is concentrated on a graph.
\index{optimal coupling}
\index{convex!function}
\index{Brenier!map}
\end{rem}

\paragraph{The failure of alternate optimization.}
\index{alternate optimization}
\index{alternating optimization}

A crucial property of the Legendre transform is that $f^{***}=f^*$, and that $f^{**}$ is the convex envelope of $f$ (the largest convex function below $f$). These properties carry over for the more general setting of $c$-transforms. The required convex-analytic background is standard in Rockafellar's theory~\cite{rockafellar2015convex}.
\index{convex envelope}
\index{convex!function}
\index{Legendre transform}
\index{c-transform}

\begin{prop}[Algebra of $c$-transforms]
\index{c-transform}
The following identities, in which the inequality sign between vectors should be understood elementwise, hold, denoting $\f^{c\bar c} \eqdef (\f^{c})^{\bar c}$:
	\[
		\textnormal{(i) } f \leq f' \Rightarrow f^{c}\geq f'^{c},
		\qquad
		\textnormal{(ii) } f^{\c\bar{\c}} \geq f,
		\qquad
		\textnormal{(iii) } g^{\bar{\c}\c} \geq g,
		\qquad
		\textnormal{(iv) } \f^{\c\bar{\c}\c}=\f^{\c}.
	\]
\end{prop}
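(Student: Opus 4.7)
My plan is to treat (i) as the pure monotonicity fact that drives everything, derive (ii) and (iii) from the definition of $c$-transform by a one-line ``pick-the-witness'' argument, and then combine (i), (ii), (iii) to get (iv) by sandwiching $f^{c\bar c c}$ between $f^c$ and itself.

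\textbf{Step (i).} If $f \leq f'$ pointwise, then for every fixed $y$ and every $x$ one has
$c(x,y) - f(x) \geq c(x,y) - f'(x)$,
so taking the infimum over $x$ on both sides preserves the inequality and gives $f^c(y) \geq f'^{c}(y)$. The only thing to note is that $c$-transform is \emph{order-reversing}, which will be the workhorse of (iv).

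\textbf{Steps (ii)--(iii).} For (ii), fix $x \in \X$. By definition, for any $y$,
\eq{
f^c(y) = \inf_{x'} \bigl(c(x',y) - f(x')\bigr) \leq c(x,y) - f(x),
}
so $c(x,y) - f^c(y) \geq f(x)$. Taking the infimum over $y$ yields $f^{c\bar c}(x) \geq f(x)$. Statement (iii) is the same argument with the roles of $\X$ and $\Y$ swapped (using $\bar c(y,x) = c(x,y)$), so no new work is needed.

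\textbf{Step (iv).} This is where one must be slightly careful with the order-reversing nature of the transform. From (ii), $f^{c\bar c} \geq f$. Applying (i) to the pair $(f, f^{c\bar c})$ reverses the inequality and gives
$f^{c\bar c c} \leq f^c$.
Conversely, applying (iii) with $g := f^c$ gives $(f^c)^{\bar c c} \geq f^c$, that is, $f^{c\bar c c} \geq f^c$. Combining the two yields $f^{c\bar c c} = f^c$, as claimed.

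There is no real obstacle here: the whole proposition is a formal consequence of the fact that $f \mapsto f^c$ is an order-reversing Galois-type operation, and (iv) is the standard ``triple transform collapses to single transform'' identity (the same mechanism that makes $f^{***}=f^*$ for the Legendre transform, which is just the special case $c(x,y)=-\langle x,y\rangle$). The only place one could slip is in (iv), by forgetting to flip the inequality when applying (i), so I would write out both inequalities explicitly.
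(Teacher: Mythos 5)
Your proof is correct and follows essentially the same route as the paper's: (i) is the order-reversal from the minus sign in the definition, (ii)--(iii) come from bounding the inner infimum by the witness $x'=x$, and (iv) is obtained by exactly the same sandwich, applying (i) to the inequality of (ii) and then (iii) to $g=f^c$. No gaps.
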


\begin{proof} The first inequality (i) follows from the definition of $\c$-transforms (because of the $-$ sign). To prove (ii), expanding the definition of $\f^{\c\bar{\c}}$ we have
\eq{
	\left(\f^{\c\bar{\c}}\right)(x)= \min_{y} \c(x,y)-\f^{\c}(y)
		= \min_{y} \c(x,y) - \min_{x'}( \c(x',y) - \f(x') ).}
Now, since $-\min_{x'}\big(\c(x',y)-\f(x')\big) \geq -(\c(x,y)-\f(x))$, we recover
\eq{
	(\f^{\c\bar{\c}})(x) \geq \min_{y} \c(x,y) - \c(x,y)+\f(x) = \f(x).
}
The relation $\g^{\bar{\c}\c} \geq \g$ is obtained in the same way.
Now, to prove (iv), we first apply (ii) and then (i) with $f'=f^{c\bar c}$ to have $f^c \geq f^{c \bar c c}$.
Then we apply (iii) to $g=f^c$ to obtain $f^c \leq f^{c\bar c c}$.
\end{proof}

This invariance property shows that one can ``improve'' only once the dual potential this way. Indeed, starting from any pair $(f,g)$, one obtains the following iterates by alternating maximization
\index{dual!potential}
\eql{\label{eq-iter-c-trans}
	(f,g) \mapsto (f,f^c) \mapsto (f^{c\bar c},f^c) \mapsto (f^{c\bar c},f^{c\bar c c}) =  (f^{c\bar c},f^c) \ldots
}
so that one reaches a stationary point.
\index{stationarity}

\begin{alg}[Hard alternating $c$-transform closure]\label{alg:hard-c-transform-closure}
\index{c-transform}
\textbf{Input:} Source potential $f$ on $\X$, cost $c$.

\textbf{Output:} Closed $c$-concave pair $(\tilde f,\tilde g)$.

\textbf{Set} target best response:
\(g=f^c, \qquad g(y)=\inf_{x\in\X}\ c(x,y)-f(x).\)

\textbf{Set} source closure:
\(\tilde f=g^{\bar c}=f^{c\bar c}.\)

\textbf{Set} closed target potential:
\(\tilde g=\tilde f^c=f^{c\bar c c}=f^c.\)
\textbf{Return} $(\tilde f,\tilde g)=(f^{c\bar c},f^c)$.
\end{alg}

This failure is the classical behavior of alternating maximization on a non-smooth problem, where the non-smooth part of the functional (here the constraint) mixes the two variables.
The workaround is to introduce smoothing, which is the classical method of augmented Lagrangian, and that we will develop here using entropic regularization, which corresponds to Sinkhorn's algorithm.
\index{Sinkhorn!algorithm}
\index{entropic!regularization}

For the bilinear cost $c(x,y)=-xy$ on a compact interval, the $c$-concave functions are ordinary concave functions and $f^{c\bar c}$ is the smallest concave majorant of $f$. In that model case, a hard transform removes non-concave oscillations in one closure step rather than producing a gradual ascent.

\begin{figure}[ht]
\centering
\begin{tabular}{@{}cc@{}}
\includegraphics[width=.46\linewidth]{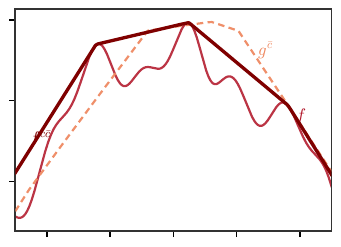} &
\index{c-transform}
\includegraphics[width=.46\linewidth]{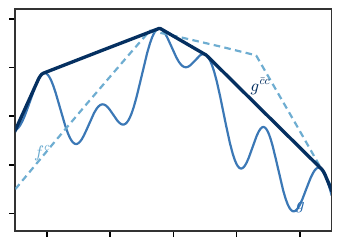} \\[-.1em]
\small source-side closure & \small target-side closure
\end{tabular}
\caption{Hard $c$-transforms for the bilinear cost $c(x,y)=-xy$. The source-side panel uses a reddish palette and starts from a sharply oscillatory potential $f$; the target-side panel uses a blueish palette and starts from a visually different potential $g$. Dark curves are the double-transform closures $f^{c\bar c}$ and $g^{\bar c c}$, which are concave majorants, while dashed lighter curves are the one-sided best responses $g^{\bar c}$ and $f^c$ after a harmless vertical gauge shift. The figure illustrates why exact alternating best responses are useful for dual certificates but do not give the smooth iterative dynamics later provided by entropic regularization.}
\index{c-transform}
\index{entropic!regularization}
\index{dual!certificate}
\label{fig:dual-alternating-c-transform-failure}
\end{figure}


\chapter[Semi-discrete and W1]{Semi-discrete and $\Wass_1$}
\index{semi-discrete!OT}
\label{sec-semidiscr-w1}

This chapter focuses on two computationally useful degeneracies of the dual problem. Semi-discrete OT turns a continuous-to-discrete map into finite-dimensional geometry, while $\Wass_1$ replaces convex potentials by Lipschitz functions and flow fields. The material connects computational geometry~\cite{AurenhammerHA98,Merigot11,merigot2013comparison} with the Kantorovich--Rubinstein and Beckmann formulations~\cite{kantorovich1958space,Beckmann52}.
\index{Beckmann!formulation}
\index{Lipschitz!function}
\index{convex!potential}
\index{semi-discrete!OT}
\index{dual!problem}

\section{Semi-dual}
\index{semi-dual}

The semi-dual eliminates one potential by an exact $c$-transform. It keeps concavity while removing explicit inequality constraints.
\index{c-transform}

Write the dual problem~\eqref{eq-dual-generic} as
\index{dual!problem}
\eq{
	\usup{f,g \in \Cc(\Xx) \times \Cc(\Yy)} \Ee(f,g)
}
where $\Ee(f,g)$ is the dual objective, with value $-\infty$ when the feasibility constraint fails. One can optimize out $g$ exactly and obtain the following semi-dual problem
\index{dual!problem}
\index{semi-dual}
\eql{\label{eq-semi-dual}
	\usup{f \in \Cc(\Xx)} \tilde\Ee(f) \eqdef \Ee(f,f^c) = \usup{g} \Ee(f,g) =
	\int_\Xx f \d \al + \int_\Yy f^c \d \be.
}
Partial maximization of a concave problem preserves concavity, so $\tilde \Ee$ is still concave. The major advantage of the semi-dual is that it removes the explicit inequality constraint, which allows the use of simpler optimization algorithms.
\index{semi-dual}

\section{Semi-discrete}
\index{semi-discrete!OT}

The semi-discrete case is the setting where dual potentials become weights of Laguerre cells. This gives both geometry and algorithms for quantization and density fitting.
\index{dual!potential}
\index{Laguerre cell}

\paragraph{Discrete target and Laguerre cells.}
\index{discrete!target}
\index{Laguerre cell}

A case of particular interest is when $\be = \sum_j \b_j \de_{y_j}$ is discrete (of course the same construction applies if $\al$ is discrete by exchanging the role of $\al,\be$).
One can adapt the definition of the $\bar c$ transform~\eqref{eq-c-transform} to this setting by restricting the minimization to the support $(y_j)_j$ of $\be$,
\index{c-transform}
\eql{\label{eq-disc-c-transfo}
	\foralls \gD \in \RR^m, \;
	\foralls x \in \Xx, \quad
	\gD^{\bar \c}(x) \eqdef \umin{j \in \range{m}} \c(x,y_j) - \gD_j.
}
This transform maps a vector $\gD$ to a continuous function $\gD^{\bar \c} \in \Cc(\Xx)$ under the same regularity assumptions on $c$ as in the continuous setting.
Note that this definition coincides with~\eqref{eq-c-transform} when the target space $\Y$ is restricted to the support of $\be$.
\index{c-transform}

Crucially, using the discrete $\bar c$-transform, when $\be$ is a discrete measure, yields a finite-dimensional optimization,
\eql{\label{eq-semi-dual-discr}
\index{semi-dual}
	\MK_\c(\al,\be) =
		\umax{\gD \in \RR^m}
			\Ee(\gD) \eqdef
			\int_\X \gD^{\bar \c}(x) \d\al(x) + \sum_j \gD_j \b_j.
}
The geometric object encoded by the dual weights is a weighted nearest-neighbor diagram: each point of the source space is assigned to the target atom that realizes the discrete $\bar c$-transform.

\begin{defn}[Laguerre cells and power diagrams]\label{def-laguerre-power-cells}
\index{Laguerre cell}
\index{power diagram}
\index{dual!weight}
	For sites $(y_j)_{j=1}^m$ and weights $\gD\in\RR^m$, the Laguerre cell associated with $y_j$ is
	\eql{\label{eq-laguerre-cells}
		\Laguerre_{j}(\gD) \eqdef \enscond{x \in \Xx}{ \foralls j' \neq j, \c(x,y_j) - \gD_j \leq \c(x,y_{j'}) - \gD_{j'} }.
	}
	The cells cover $\Xx$; after arbitrary tie-breaking on common boundaries, they induce a disjoint partition. When $c(x,y)=\norm{x-y}^2$, this Laguerre decomposition is also called a power diagram. If $\gD$ is constant, it reduces to the ordinary Voronoi diagram.
\end{defn}
For quadratic costs, varying the dual weights moves the walls between adjacent cells while keeping them parallel; this is the geometric mechanism by which the cell masses are adjusted.
\index{cell mass}
\index{cost!quadratic}
\index{dual!weight}
\begin{figure}[H]
\centering
\begin{tabular}{@{}ccc@{}}
\small zero weights &
\small intermediate weights &
\small balanced cells \\[-.15em]
\includegraphics[width=.30\linewidth]{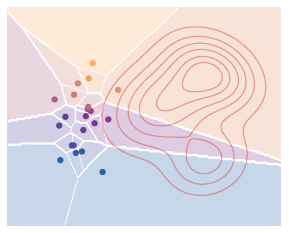} &
\index{Laguerre cell}
\includegraphics[width=.30\linewidth]{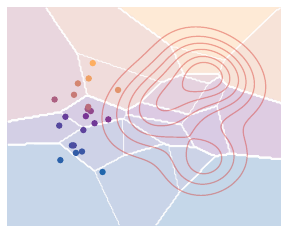} &
\includegraphics[width=.30\linewidth]{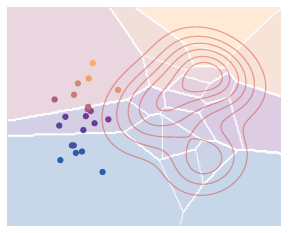}
\end{tabular}
\caption{Laguerre cells for semi-discrete quadratic transport.  The red contours show a continuous source density $\al$ given by a three-component Gaussian mixture on the right.  The twenty-one colored circular sites are the atoms of the discrete target $\be$ sampled from a compact Gaussian cloud on the left; each site color matches its Laguerre cell.  Starting from ordinary Voronoi cells, semi-dual weight updates deform the cells so that the $\al$-mass captured by each cell approaches the prescribed target mass.}
\index{Gaussian mixture}
\index{Laguerre cell}
\index{semi-discrete!OT}
\index{Voronoi cell}
\index{dual!weight}
\index{semi-dual}
\label{fig:semidiscrete-laguerre-cells}
\end{figure}

\paragraph{Mass balance.}
\index{mass!balance}

This allows one to conveniently rewrite the semi-dual energy as
\index{semi-dual}
\eql{\label{eq-semi-disc-energy}
	\Ee(\gD) = \sum_{j=1}^m \int_{\Laguerre_{j}(\gD)} \pa{ c(x,y_j) - \gD_j } \d\al(x) + \dotp{\gD}{\b}.
}
The following proposition provides a formula for the gradient of this concave function.

\begin{prop}[Gradient of the semi-discrete dual]
\index{semi-discrete!dual}
If $\al$ gives zero mass to the Laguerre cell boundaries, then $\Ee$ is differentiable at $\gD$ and
\index{zero mass}
\index{Laguerre cell}
\eq{
	\foralls j \in \range{m}, \quad
	\nabla\Ee(\gD)_j = \b_j - \int_{\Laguerre_{j}(\gD)} \d\al.
}
\end{prop}
\begin{proof}
	For $\al$-almost every $x$, the minimizing index in $\min_j c(x,y_j)-\gD_j$ is unique. If this index is $j(x)$, then the directional derivative in a direction $h\in\RR^m$ is
	\[
		\left.\frac{\d}{\d t}\right|_{t=0}
		\min_j \bigl(c(x,y_j)-(\gD_j+t h_j)\bigr)
		=
		-h_{j(x)}.
	\]
	Dominated convergence gives
	\[
		\d\Ee(\gD)[h]
		=
		-\sum_j h_j\int_{\Laguerre_j(\gD)}\d\al
		+\sum_j h_j\b_j,
	\]
	which is the announced gradient formula.
\end{proof}

The first-order optimality condition shows that solving the dual semi-discrete problem amounts to choosing the weights $\gD$ so that $\int_{\Laguerre_{j}(\gD)} \d\al = \b_j$, i.e. each cell captures the prescribed amount of mass. In this case, the optimal transport $T$ with $T_\sharp \al=\be$ is piecewise constant and maps $x \in \Laguerre_{j}(\gD)$ to $y_j$; for the quadratic cost, uniqueness follows from Brenier's theorem when $\al$ has a density.
\index{optimality!first-order}
\index{Brenier!theorem}
\index{cost!quadratic}
\index{semi-discrete!OT}

\begin{alg}[Semi-discrete Laguerre descent]\label{alg:semidiscrete-laguerre-descent}
\textbf{Input:} Source measure $\alpha$, target atoms $(y_j,\b_j)$, cost $c$, steps $\tau_k$.

\textbf{Output:} Semi-discrete dual weights $\gD$ and Laguerre cells.

\textbf{Initialize:} Set $\gD^{(0)}=0$.

\textbf{For} $k=0,1,\ldots$ \textbf{do}:
\begin{algblock}

\textbf{Compute cells:}
\(\Laguerre_j(\gD^{(k)}) = \enscond{x}{c(x,y_j)-\gD^{(k)}_j\leq c(x,y_\ell)-\gD^{(k)}_\ell\quad\forall \ell}.\)

\textbf{Compute masses:}
\(m_j^{(k)}=\int_{\Laguerre_j(\gD^{(k)})}\d\al .\)

\textbf{Update}
\(\gD^{(k+1)} = \gD^{(k)}+\tau_k\bigl(\b-m^{(k)}\bigr).\)

\textbf{If} $\max_j\abs{m_j^{(k)}-\b_j}\leq\mathrm{tol}$ \textbf{then}:
\begin{algblock}
\textbf{Return} $\gD^{(k+1)}$ and the cells.
\end{algblock}
\end{algblock}
\end{alg}

The quadratic power diagrams of Definition~\ref{def-laguerre-power-cells} have polyhedral cells and can be computed efficiently using computational geometry algorithms~\cite{aurenhammer1987power,AurenhammerHA98,Merigot11}. One classical construction lifts the sites to points $(y_j,\norm{y_j}^2-\gD_j)\in\RR^{d+1}$ and obtains the power diagram by projecting the lower envelope of their convex hull. In dimensions two and three, Chan's output-sensitive convex-hull algorithm~\cite{chan1996optimal} has complexity $O(m\log Q)$ for $m$ sites and $Q$ hull vertices.
\index{power diagram}

\paragraph{Stochastic optimization.}
\index{stochastic!optimization}

The semi-discrete formulation~\eqref{eq-semi-disc-energy} is useful because the objective is an expectation with respect to $\al$,
\index{semi-discrete!OT}
\eql{\label{eq-semi-disc-energy-entropy}
	\Ee(\gD)
	=
	\int_\X E(\gD,x)\d\al(x)
	=
	\EE_X(E(\gD,X)),
	\qquad
	E(\gD,x)\eqdef \gD^{\bar c}(x)+\dotp{\gD}{\b}.
}
Here $X\sim\al$. Away from cell boundaries, the stochastic gradient of the integrand is
\index{stochastic!gradient}
\[
	\nabla_{\gD}E(\gD,x)
	=
	\left(\b_j-\ones_{\Laguerre_j(\gD)}(x)\right)_{j=1}^m,
\]
which is an unbiased estimator of $\nabla\Ee(\gD)$ when cell boundaries have $\al$-measure zero. One can therefore maximize~\eqref{eq-semi-disc-energy} without first discretizing $\al$: the measure is used as a black box from which independent samples are drawn, a natural setup in high-dimensional statistics and machine learning.

Starting from $\gD^{(0)}=0$, stochastic gradient ascent draws $x_\ell\sim\al$ and performs
\index{stochastic!gradient}
\eql{\label{eq-sgd}
	\gD^{(\ell+1)}
	\eqdef
	\gD^{(\ell)}+\tau_\ell\nabla_{\gD}E(\gD^{(\ell)},x_\ell).
}
Equivalently, if
\[
	j_\ell\in\argmin_j\bigl(c(x_\ell,y_j)-\gD_j^{(\ell)}\bigr),
\]
then the coordinate update is
\[
	\gD_j^{(\ell+1)}
	=
	\gD_j^{(\ell)}
	+
	\tau_\ell\bigl(\b_j-\ones_{\{j=j_\ell\}}\bigr).
\]
The step size must decay so that the sampling noise averages out. A typical schedule is
\eql{\label{eq-step-size-sgd}
	\tau_\ell\eqdef \frac{\tau_0}{1+\ell/\ell_0},
}
where $\ell_0$ is a warmup scale. Under standard stochastic-approximation assumptions, one obtains the usual sublinear rate
\index{rate!sublinear}
\[
	\Ee(\gD^\star)-\EE\big(\Ee(\gD^{(\ell)})\big)
	=
	O(\ell^{-1/2}),
\]
where $\gD^\star$ is a maximizer and the expectation is over the i.i.d. samples. This stochastic viewpoint is one of the main algorithmic advantages of the semi-discrete formulation~\cite{Merigot11,genevay2016stochastic}.
\index{semi-discrete!OT}

\begin{alg}[Stochastic semi-discrete ascent]\label{alg:semidiscrete-stochastic-ascent}
\textbf{Input:} Source sampler $x\sim\alpha$, target atoms $(y_j,\b_j)$, steps $\tau_\ell$.

\textbf{Output:} Stochastic semi-discrete dual weights $\gD$.

\textbf{Initialize:} Set $\gD^{(0)}=0$.

\textbf{For} $\ell=0,1,\ldots$ \textbf{do}:
\begin{algblock}

\textbf{Draw} $x_\ell\sim\alpha$.

\textbf{Set} \(j_\ell=\min\argmin_j\bigl(c(x_\ell,y_j)-\gD_j^{(\ell)}\bigr)\).

\textbf{For} $j=1,\ldots,m$ \textbf{do}
\begin{algblock}
\(\gD_j^{(\ell+1)} = \gD_j^{(\ell)} + \tau_\ell\bigl(\b_j-\ones_{\{j=j_\ell\}}\bigr).\)
\end{algblock}
\end{algblock}
\algreturnskip
\textbf{Return} $\gD^{(\ell)}$ or its running average.
\end{alg}

\section{Optimal Quantization}
\index{quantization!optimal}
\label{sec-optimal-quantization}

Optimal quantization asks for the best discrete approximation of a measure by $m$ codepoints. It is the geometric core of vector quantization, compression and $k$-means clustering.

The optimal quantization problem for a measure $\al$ is
\index{quantization!optimal}
\eql{\label{eq-optimal-quantization}
	\Qq_m(\al)
	\eqdef
	\umin{Y=(y_j)_{j=1}^m,\ \b\in\simplex_m}
	\Wass_p\left(\al,\sum_{j=1}^m \b_j\de_{y_j}\right).
}
This problem is classical in approximation theory and information theory~\cite{graf2000foundationsquantization,Lloyd82}. The OT formulation emphasizes that one optimizes both the support locations $Y$ and, unless prescribed, the masses $\b$.

\begin{prop}[Quantization rate and curse of dimensionality]\label{prop-quantization-rate}
\index{quantization!rate}
\index{curse of dimensionality}
	Let $\Omega\subset\RR^d$ be a bounded Lipschitz domain and assume $\al=\rho\,\d x$ on $\Omega$, with $0<\rho_-\leq\rho\leq\rho_+<+\infty$. Then, for fixed $p\geq1$, there exist constants $0<c\leq C<+\infty$ such that
	\[
		c\,m^{-1/d}
		\leq
		\Qq_m(\al)
		\leq
		C\,m^{-1/d}.
	\]
\end{prop}
\begin{proof}
	For the upper bound, partition $\Omega$ into $m$ cells of diameter at most $Cm^{-1/d}$, up to boundary effects, and put one codepoint in each non-empty cell. Sending each point to the codepoint in its cell gives a transport distance bounded by $Cm^{-1/d}$.

	For the lower bound, fix any set $Y$ of $m$ codepoints and write $d_Y(x)=\min_j\norm{x-y_j}$. Since the density is bounded above, the mass of the $t$-neighborhood of $Y$ is at most $Cmt^d$. Choosing $t_0\simeq m^{-1/d}$ small enough gives $\al(\{d_Y>t\})\geq c$ for $0<t<t_0$. Hence
	\[
		\int d_Y(x)^p\d\al(x)
		=
		\int_0^{+\infty} p t^{p-1}\al(\{d_Y>t\})\d t
		\geq
		c t_0^p
		\simeq
		c m^{-p/d}.
	\]
	Taking the $p$-th root and minimizing over $Y$ proves the lower bound.
\end{proof}

This deterministic rate mirrors the empirical OT sample-complexity rate: both are governed by the spacing $m^{-1/d}$ of points in dimension $d$. Quantization is best-case and deterministic, while empirical OT is random, but both display the same curse of dimensionality.
\index{empirical!OT}
\index{curse of dimensionality}
\index{sample complexity}
For fixed codepoints $Y$, the problem is convex with respect to the weights $\b$. The dependence on $Y$ is non-convex and is generally computationally hard. The one-dimensional case is substantially simpler: monotonicity fixes the ordering of the cells, reducing the problem to interval endpoints and centroids; for the uniform law with the quadratic cost, the optimal centroids are equally spaced.
\index{cost!quadratic}

\begin{prop}[Free masses give Voronoi cells]\label{prop-free-masses-voronoi}
\index{mass!free}
\index{Voronoi cell}
	For the cost $c(x,y)=d(x,y)^p$, fix distinct codepoints $Y=(y_j)_{j=1}^m$. Duplicate codepoints can be merged beforehand. Minimizing over the weights $\b\in\simplex_m$ gives
	\[
		\min_{\b\in\simplex_m}
		\Wass_p^p\left(\al,\sum_j \b_j\de_{y_j}\right)
		=
		\int_\X \min_{1\leq j\leq m} c(x,y_j)\d\al(x).
	\]
	An optimal coupling is induced by sending each $x$ to a nearest codepoint. The corresponding cells are the Voronoi cells
\index{optimal coupling}
\index{Voronoi cell}
	\[
		\VV_j(Y)
		\eqdef
		\enscond{x}{\foralls j',\ c(x,y_j)\leq c(x,y_{j'})},
	\]
	up to arbitrary tie-breaking on common boundaries.
\end{prop}
\begin{proof}
	For any coupling between $\al$ and a measure supported on $Y$, the conditional destination of a point $x$ belongs to $Y$, hence its conditional cost is at least $\min_j c(x,y_j)$. Integrating gives the lower bound. Conversely, choose a measurable nearest-codepoint map $T_Y(x)\in\argmin_j c(x,y_j)$, breaking ties measurably, and set $\b_j=\al(T_Y^{-1}(y_j))$. Then $(T_Y)_\sharp\al=\sum_j \b_j\de_{y_j}$ and the induced transport reaches the displayed lower bound.
\end{proof}

Consequently, the quantization energy can be written in the nearest-centroid form
\index{quantization!energy}
\[
	\Qq_m(\al)^p
	=
	\min_Y \Ff(Y),
	\qquad
	\Ff(Y)
	\eqdef
	\int_\X \min_{1\leq j\leq m} c(x,y_j)\d\al(x).
\]
At a differentiability point of this energy, each local minimizer satisfies the centroid condition
\[
	y_j
	\in
	\uargmin{y}
	\int_{\VV_j(Y)} c(x,y)\d\al(x).
\]
For the squared Euclidean cost, this becomes the fixed-point equation
\[
	y_j
	=
	\frac{\int_{\VV_j(Y)} x\d\al(x)}
	{\int_{\VV_j(Y)} \d\al}.
\]
Lloyd's algorithm, also known as the $k$-means algorithm, iterates this fixed point: assign points to nearest sites, then replace each site by the centroid of its cell~\cite{Lloyd82}. With standard tie-breaking, the objective decreases at each step. Since the problem is non-convex in $Y$, the iterates generally converge only to a local minimum. Good seeding matters; for the squared Euclidean objective, $k$-means++ gives a logarithmic approximation guarantee in expectation~\cite{ArthurVassilvitskii2007}.
\index{Lloyd algorithm}

\begin{alg}[Lloyd quantization]\label{alg:lloyd-quantization}
\textbf{Input:} Source measure $\alpha$, initial codepoints $Y^{(0)}=(y_j^{(0)})_{j=1}^m$, squared Euclidean cost, tolerance $\mathrm{tol}$.

\textbf{Output:} Codepoints $Y=(y_j)_{j=1}^m$.

\textbf{Initialize:} Set \(d_0=+\infty\) and \(k=0\).

\textbf{While} \(d_k>\mathrm{tol}\) \textbf{do}:
\begin{algblock}

\textbf{Set} \(k\leftarrow k+1\).

\textbf{Compute Voronoi cells:}
\(\VV_j(Y^{(k-1)}) = \enscond{x}{c(x,y_j^{(k-1)})\leq c(x,y_\ell^{(k-1)})\quad\forall \ell}.\)

\textbf{For} each nonempty cell $\VV_j$ \textbf{do}
\begin{algblock}
\(y_j^{(k)} = \frac{\int_{\VV_j(Y^{(k-1)})}x\,\d\al(x)} {\int_{\VV_j(Y^{(k-1)})}\d\al(x)}.\)
\end{algblock}
\textbf{For} each empty cell $\VV_j$ \textbf{do}:
\begin{algblock}

\textbf{Set} $y_j^{(k)}=y_j^{(k-1)}$.

\end{algblock}

\textbf{Set} \(d_k=\max_j\norm{y_j^{(k)}-y_j^{(k-1)}}\).
\end{algblock}
\algreturnskip
\textbf{Return} $Y^{(k)}$.
\end{alg}

\begin{figure}[H]
\centering
\begin{tabular}{@{}ccc@{}}
\small initialization &
\small after $6$ iterations &
\small after $36$ iterations \\[-.15em]
\includegraphics[width=.30\linewidth]{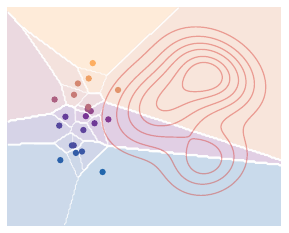} &
\includegraphics[width=.30\linewidth]{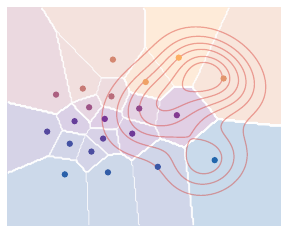} &
\includegraphics[width=.30\linewidth]{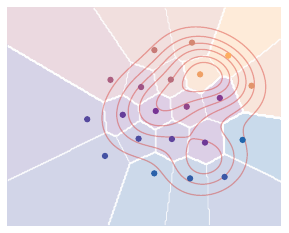}
\end{tabular}
\caption{Lloyd quantization for the same continuous density and twenty-one initial sites as Figure~\ref{fig:semidiscrete-laguerre-cells}.  The red contours show the density $\al$, while the colored disks are the current codepoints and have the same colors as their Voronoi cells.  The iterations move the initially left-located sites toward the high-density region and reshape the cells according to centroidal Voronoi geometry.}
\index{centroidal Voronoi}
\index{Laguerre cell}
\index{Voronoi cell}
\label{fig:semidiscrete-lloyd-quantization}
\end{figure}

\section[W1]{Wasserstein-1 norm}
\index{norm!Wasserstein-1}
\label{sec-W1}

The $\Wass_1$ distance has an especially transparent dual: the admissible potentials are exactly $1$-Lipschitz test functions. This makes $\Wass_1$ the meeting point between transport, PDE formulations and weak norms on signed measures.
\index{signed!measure}

\paragraph{$c$-transform for $\Wass_1$.}
\index{c-transform}

Assume that $d$ is a distance on $\X=\Y$ and take the ground cost $c(x,y)=d(x,y)$. We denote the Lipschitz constant of $f\in\Cc(\X)$ by
\index{ground cost}
\begin{defn}[Lipschitz constant]\label{def-lipschitz-constant}
\index{Lipschitz!constant}
	For a function $f:\X\to\RR$ on a metric space $(\X,d)$, its Lipschitz constant is
	\eql{\label{eq-lip-constant}
		\Lip(f)
		\eqdef
		\sup\enscond{\frac{|f(x)-f(y)|}{d(x,y)}}{x\neq y}.
	}
	The function is $1$-Lipschitz when $\Lip(f)\leq1$.
\end{defn}

\begin{prop}[$c$-transforms and $1$-Lipschitz functions]\label{prop-w1-c-transform-lipschitz}
\index{c-transform}
\index{Lipschitz!function}
	Suppose $\X=\Y$ and $c(x,y)=d(x,y)$. Then there exists $g$ such that $f=g^c$ if and only if $\Lip(f)\leq1$. Furthermore, if $\Lip(f)\leq1$, then $f^c=-f$.
\end{prop}
\begin{proof}
	First suppose $f=g^c$ for some $g$. For $x,y\in\X$,
	\[
		|f(x)-f(y)|
		=
		\left|\inf_z[d(x,z)-g(z)]-\inf_z[d(y,z)-g(z)]\right|
		\leq
		\sup_z |d(x,z)-d(y,z)|
		\leq d(x,y),
	\]
	where the last inequality is the reverse triangle inequality. Thus $\Lip(f)\leq1$.
\index{triangle inequality}

	If $\Lip(f)\leq1$, then $f(x)\leq f(y)+d(x,y)$, so $d(x,y)-f(x)\geq -f(y)$ for all $x$ and hence $f^c(y)\geq -f(y)$. Taking $x=y$ gives $f^c(y)\leq -f(y)$. Therefore $f^c=-f$. Applying the same property to $-f$ gives $(-f)^c=f$, so every $1$-Lipschitz function is $c$-concave.
\index{Lipschitz!function}
\end{proof}

Using the alternating $c$-transform scheme~\eqref{eq-iter-c-trans}, one can replace the dual pair by $(f,-f)$ with $\Lip(f)\leq1$. The Kantorovich dual therefore becomes the Kantorovich--Rubinstein formula
\index{Kantorovich-Rubinstein!formula}
\index{c-transform}
\eql{\label{eq-w1-metric}
	\Wass_1(\al,\be)
	=
	\umax{f}
	\enscond{\int_\X f\d(\al-\be)}{\Lip(f)\leq1}.
}
This expression depends only on the signed measure $\al-\be$. It therefore extends to finite signed measures of total mass zero and defines the Kantorovich--Rubinstein norm on that space~\cite{kantorovich1958space}.
\index{signed!measure}

For a discrete signed measure $\al-\be=\sum_k r_k\de_{z_k}$ with $\sum_k r_k=0$,~\eqref{eq-w1-metric} becomes the finite-dimensional linear program
\index{linear programming!finite-dimensional}
\eql{\label{eq-w1-discr}
	\Wass_1(\al,\be)
	=
	\umax{(f_k)_k}
	\enscond{\sum_k f_k r_k}{\foralls k,\ell,\ |f_k-f_\ell|\leq d(z_k,z_\ell)}.
}
This linear program can be solved by generic interior-point or first-order methods; structured graph versions admit the flow formulations described below.
When $d(x,y)=|x-y|$ on $\RR$, ordering the support points $z_1\leq z_2\leq\cdots$ reduces the constraints to neighboring pairs,
\[
	\Wass_1(\al,\be)
	=
	\umax{(f_k)_k}
	\enscond{\sum_k f_k r_k}{\foralls k,\ |f_{k+1}-f_k|\leq z_{k+1}-z_k}.
\]
In one dimension this is equivalent to the closed-form cumulative formula introduced earlier.

\paragraph{$\Wass_1$ on Euclidean spaces.}

In the special case of Euclidean spaces $\X=\Y=\RR^\dim$, using $\c(x,y) = \norm{x-y}$, the global Lipschitz constraint in the Kantorovich--Rubinstein formula can be made local as a uniform bound on the gradient of $f$,
\index{Kantorovich-Rubinstein!formula}
\eql{\label{eq-w1-cont}
	\Wass_1(\al,\be) =
	\usup{f} \enscond{ \int_{\RR^\dim} \f (\d\al-\d\be) }{ \norm{\nabla \f}_\infty \leq 1 }.
}
Here the constraint $\norm{\nabla \f}_\infty \leq 1$ signifies that the norm of the gradient of $\f$ at any point $x$ is upper bounded by $1$, $\norm{\nabla \f(x)}_2 \leq 1$ for any $x$.

Considering the dual problem to~\eqref{eq-w1-cont}, denoting $\xi \eqdef \al-\be$, and using the equivalent form
\index{dual!problem}
\eq{
	-\iota_{\norm{\cdot}_{\RR^d} \leq 1}(u) = \uinf{v} \dotp{u}{v} + \norm{v}_{\RR^d},
}
one has a maximization on flow vector fields $\flow :  \RR^d \rightarrow \RR^d$
\begin{align*}
	\Wass_1(\al,\be) &=
	\usup{f} \uinf{\flow(x) \in \RR^d} \int_{\RR^\dim} \f \d\xi + \int \dotp{\nabla f(x)}{\flow(x)} \d x + \int \norm{\flow(x)}_{\RR^d} \d x \\
	&=
	\uinf{\flow(x) \in \RR^d} \int \norm{\flow(x)} \d x
		+ \usup{f} \int f(x) ( \d \xi - \diverg(\flow) \d x)
\end{align*}
one obtains an optimization problem under a fixed divergence constraint
\eql{\label{eq-w1-cont-div}
	\Wass_1(\al,\be) =
	\uinf{\flow} \enscond{ \int_{\RR^\dim} \norm{\flow(x)}_{\RR^d} \d x }{  \diverg(\flow)=\al-\be },
}
which is often called the Beckmann formulation~\cite{Beckmann52}.
\index{Beckmann!formulation}
Here the vectorial function $\flow(x) \in \RR^\dim$ can be interpreted as a flow field, describing locally the movement of mass. Outside the support of the two input measures, $\diverg(\flow)=0$, which is the conservation of mass constraint.
\index{support}
Once properly discretized using finite elements, Problems~\eqref{eq-w1-cont} and~\eqref{eq-w1-cont-div} become a nonsmooth convex optimization problem.

The previous formulations~\eqref{eq-w1-cont} and~\eqref{eq-w1-cont-div} of $\Wass_1$ can be generalized to the setting where $\X$ is a Riemannian manifold, i.e. $\c(x,y)=\dist(x,y)$ where $\dist$ is the associated geodesic distance (and then for smooth manifolds, the gradient and divergence should be understood as differential operators on manifolds).

\begin{defn}[Graph geodesic distance]\label{def-graph-geodesic-distance}
\index{graph!geodesic distance}
	Let $G=(V,E)$ be a connected finite graph with positive edge lengths $(\ell_e)_{e\in E}$. The graph geodesic distance between two vertices is
	\[
		d_G(i,j)=\min_{\gamma:i\leadsto j}\sum_{e\in\gamma}\ell_e.
	\]
	The minimum is over all paths $\gamma$ joining $i$ to $j$.
\end{defn}
This graph distance turns $\Wass_1$ into a finite-dimensional flow problem.

\begin{prop}[$\Wass_1$ and Beckmann flow on a graph]\label{prop-graph-w1-beckmann}
\index{Beckmann!flow}
	Let $G=(V,E)$ be a connected finite graph with positive edge lengths $(\ell_e)_{e\in E}$ and graph geodesic distance $d_G$.
	For two probability vectors $\a,\b$ on $V$, set $r=\a-\b$ and orient each edge $e=(i,j)$. If
	\[
		(\nabla_G f)_e=f_j-f_i,\qquad
		\operatorname{div}_G=-\nabla_G^*
	\]
	are the finite-difference gradient and its negative adjoint, then a positive flow on the oriented edge $i\to j$ has positive divergence at $i$ and negative divergence at $j$. With this convention,
	\[
		\Wass_{1,G}(\a,\b)
		=
		\max_f \enscond{\sum_{i\in V} f_i r_i}{|f_i-f_j|\leq \ell_e\quad\forall e=(i,j)}
		=
		\min_m \enscond{\sum_{e\in E}\ell_e |m_e|}{\operatorname{div}_G m=r}.
	\]
	The vector $m_e$ is an oriented edge flow, and the constraint $\operatorname{div}_G m=r$ is conservation of mass at each vertex.
\end{prop}

\begin{proof}
	The edge constraint $|f_i-f_j|\leq\ell_e$ implies, by summing along paths, that $|f_i-f_j|\leq d_G(i,j)$ for all vertices. Conversely, any $1$-Lipschitz function for $d_G$ satisfies the edge constraints because each edge is a path of length $\ell_e$. The first equality is therefore the Kantorovich--Rubinstein formula on the metric space $(V,d_G)$.
\index{Kantorovich-Rubinstein!formula}
\index{Lipschitz!function}

	For the second equality, write the graph Beckmann problem and dualize its equality constraint with a potential $f$:
\index{equality constraint}
	\[
		\inf_m \sum_e\ell_e|m_e|
		+\sup_f \sum_i f_i(r_i-(\operatorname{div}_G m)_i).
	\]
	Using $\operatorname{div}_G=-\nabla_G^*$, the coupling term is $\sum_e m_e(\nabla_G f)_e$. The minimization over each scalar flow $m_e$ is finite exactly when $|(\nabla_G f)_e|\leq\ell_e$, and is then equal to zero. The dual problem is therefore precisely the graph Lipschitz dual above. Strong duality holds because this is a finite-dimensional linear program with a non-empty feasible set: connectedness and $\sum_i r_i=0$ allow the signed surplus to be routed along paths. This proves the graph Beckmann formula.
\index{duality!strong}
\index{linear programming!finite-dimensional}
\index{dual!problem}
\end{proof}

\begin{figure}[H]
\centering
\begin{tabular}{@{}cc@{}}
\includegraphics[width=.46\linewidth]{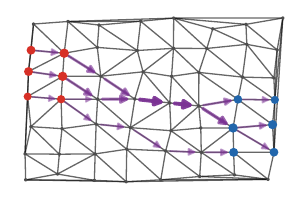}
&
\includegraphics[width=.46\linewidth]{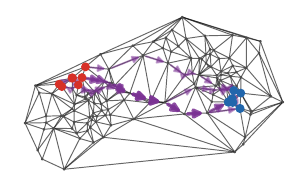}
\\[-.5mm]
{\small quasi-regular Delaunay graph}
&
{\small denser nonuniform Delaunay graph}
\end{tabular}
\index{graph!transport}
\caption{Graph Beckmann formulation of $\Wass_1$ on two Delaunay graphs $G=(V,E)$.  The left panel uses a quasi-regular vertex set, while the right panel uses a denser nonuniform vertex set sampled from a two-component Gaussian mixture.  Red and blue disks encode the positive and negative parts of $r=\alpha-\beta$, localized on six vertices each.  The darker graph edges show the triangulation, while the violet arrows display the optimal signed edge flow $m$: orientation gives the sign, width is proportional to $\sqrt{|m_e|}$, and the flow satisfies $\operatorname{div}_G m=r$.}
\index{signed!positive part}
\index{signed!negative part}
\index{Beckmann!formulation}
\label{fig:w1-graph-transport-flow}
\end{figure}

\begin{rem}[Sparse LP and network simplex]\label{rem-graph-w1-network-simplex}
\index{linear programming!sparse}
\index{simplex network}
	Let $N=|V|$ and $M=|E|$. The graph Beckmann problem is a linear program, but it is much smaller than the dense Kantorovich LP on the same vertex set. Indeed, writing $m=m^+-m^-$ gives
	\[
		\min_{m^+,m^-\geq0}\sum_{e\in E}\ell_e(m^+_e+m^-_e)
		\quad\text{subject to}\quad
		\operatorname{div}_G(m^+-m^-)=r .
	\]
	This formulation has $2M$ nonnegative variables and $N-1$ independent balance constraints, whereas the standard transport LP between two measures on $V$ has $N^2$ coupling variables and $2N-1$ independent marginal constraints. For sparse geometric graphs, such as planar Delaunay graphs where typically $M=O(N)$, the graph formulation is therefore linear-size rather than quadratic-size.
\index{Delaunay!graph}
\index{marginal!constraint}

	The same LP is a minimum-cost transshipment problem. Replace each undirected edge $\{i,j\}$ by the two directed arcs $i\to j$ and $j\to i$, both with cost $\ell_e$, and impose the node balances $\sum_{j}u_{ij}-\sum_j u_{ji}=r_i$. This is exactly the setting of the network simplex method: a basis is a spanning tree, a pivot adds one non-tree arc, creates a unique cycle, sends flow along this cycle, and updates the node potentials and reduced costs~\cite{bertsekas1988dual,Orlin1997}. A basic implementation needs $O(M)$ work to price all arcs and $O(N)$ work to update the tree at each pivot, hence $O(PM)$ arithmetic operations for $P$ pivots on a sparse graph. The pivot count $P$ depends on the rule and can be large in worst-case simplex analyses, but network-simplex variants and general minimum-cost-flow algorithms give polynomial guarantees in $N$ and $M$; in practice, this edge-based formulation is often far cheaper than solving the dense $N^2$-variable transport LP.
\index{flow!minimum-cost}
\index{spanning tree}
\index{cost!reduced}
\end{rem}

\begin{alg}[Graph Beckmann network-simplex pivot]\label{alg:graph-beckmann-network-simplex}
\index{Beckmann!flow}
\index{simplex network}
\textbf{Input:} Graph $G=(V,E)$, edge lengths $\ell_e$, node balances $r_i$ with $\sum_ir_i=0$.

\textbf{Output:} Minimum-cost graph flow $u$.

\textbf{Replace} each undirected edge by two directed arcs.

\textbf{Impose balances:}
\(\sum_j u_{ij}-\sum_j u_{ji}=r_i .\)

\textbf{Initialize:} Add artificial root arcs and compute a feasible tree flow on a spanning tree $T$ with node potentials.

\textbf{While} \(\min_{e\notin T}\bar c_e<0\) \textbf{do}:
\begin{algblock}

\textbf{Set} entering arc \(e\) to the first minimizer of \(\bar c_a\) over \(a\notin T\) in the prescribed arc order.

\textbf{Add} it to the tree.

\textbf{Set} $\mathcal C=$ unique induced cycle, oriented in the direction of the entering arc \(e\).

\textbf{Set} \(\theta=\min\{u_a:\ a\in\mathcal C^-\}\), where \(\mathcal C^-\) are the arcs opposed to the cycle orientation.

\textbf{Update} \(u_a\leftarrow u_a+\theta\) for \(a\in\mathcal C^+\) and \(u_a\leftarrow u_a-\theta\) for \(a\in\mathcal C^-\).

\textbf{Remove} the first arc in \(\mathcal C^-\) attaining the minimum \(\theta\), using the prescribed cycle order.

\textbf{Update} the tree, potentials, and reduced costs.

\end{algblock}
\algreturnskip
\textbf{Return} $u$.
\end{alg}

This graph formulation is the transshipment version of $\Wass_1$. It is the natural discrete analogue of~\eqref{eq-w1-cont-div}: gradients are edge differences, divergences are incidence-matrix balances, and geodesic distance is the shortest-path length. It can be solved by min-cost flow methods on sparse graphs, and entropic or KL-projection variants lead to flow-Sinkhorn algorithms for graph $\Wass_1$~\cite{Beckmann52,peyre2026robust}.
\index{integral probability metric}
\index{Sinkhorn!algorithm}
\index{matrix!incidence}
\index{KL!projection}

\chapter{Divergences and Dual Norms}

\index{dual!norm}
\label{sec-divergences-dual-norms}

This chapter compares OT with divergence-based and adversarial ways of measuring discrepancy. The main stake is topological: $\phi$-divergences are cheap but strong, while dual norms and GAN objectives can be weak enough to compare singular measures. The discussion connects classical information divergences~\cite{ciszar1967information,ali1966general} with modern integral probability metrics and generative modeling~\cite{sriperumbudur2009integral,GAN,WassersteinGAN}.
\index{generative model}
\index{integral probability metric}
\index{phi-divergence}
\index{dual!norm}

\section{Dual norms (Integral Probability Metrics)}
\index{dual!norm}
\index{integral probability metric}
\label{sec-dual-norms}

Dual norms generalize the $\Wass_1$ test-function principle. They are useful in statistics because they compare distributions by restricting the discriminator class.
\index{discriminator class}
\index{dual!norm}

\paragraph{Integral probability metrics.}
\index{integral probability metric}

Formulation~\eqref{eq-w1-cont} is a special case of a dual norm. This viewpoint designs ``weak'' discrepancies by testing signed differences of measures against a controlled class of functions.
\index{measurable function}
\index{dual!norm}

\begin{defn}[Dual norm and integral probability metric]\label{def-dual-norm-ipm}
\index{dual!norm}
\index{integral probability metric}
	For a symmetric convex set $B$ of measurable functions, define on signed measures $\xi$
	\eql{\label{eq-dual-norm-cont}
		\norm{\xi}_B \eqdef
		\usup{\f} \enscond{ \int_{\X} \f(x) \d\xi(x) }{ \f \in B}.
	}
	When this quantity is applied to $\al-\be$ for probability measures, it is often called an integral probability metric.
\end{defn}
The choice of the test-function class $B$ determines both the topology and the statistical behavior of the discrepancy; see~\cite{sriperumbudur2012empirical,sriperumbudur2009integral,sriperumbudur2008injective}.
\index{integral probability metric}
\index{dual!norm}

\begin{example}[Total variation]
\index{total variation}
As recalled in Definition~\ref{defn-total-variation} and Proposition~\ref{prop-tv-dual-measure}, total variation is the dual norm associated with the unit ball of continuous functions
\eq{
	B = \enscond{f \in \Cc(\X)}{\norm{f}_\infty \leq 1}.
}
Total variation is the canonical nontrivial example of a discrepancy that is both a $\phi$-divergence and a dual norm; see~\cite{sriperumbudur2009integral}.
\index{total variation}
\index{phi-divergence}
\index{dual!norm}
\end{example}

\begin{example}[$\Wass_1$ norm]
$\Wass_1$, as defined in~\eqref{eq-w1-cont}, is the dual norm~\eqref{eq-dual-norm-cont} associated with
\index{dual!norm}
\eq{
	B = \enscond{f}{\Lip(f) \leq 1}
}
the set of 1-Lipschitz functions.
\index{Lipschitz!function}
\end{example}

\begin{example}[Flat norm and Dudley metric]
\index{norm!flat}
\index{Dudley metric}
If the set $B$ is bounded and separates measures, then $\norm{\cdot}_B$ is a norm on the whole space $\Mm(\Xx)$ of finite measures.
\index{finite measure}
	This is not the case of $\Wass_1$, which is only finite on signed measures $\xi$ such that $\int_\X \d\xi=0$; otherwise $\norm{\xi}_B=+\infty$ because constants belong to the Lipschitz ball.
This is remedied by imposing a bound on the value of the potential $\f$, which leads for instance to the flat norm,
\index{norm!flat}

\eql{\label{eq-set-flatnorm}
	B=\enscond{f}{\Lip(f) \leq 1 \qandq \norm{\f}_\infty \leq 1}.
}
On compact metric spaces, it metrizes weak convergence on the whole space $\Mm(\X)$ of finite measures.
\index{finite measure}
\index{weak!convergence}
The finite-dimensional version is obtained from the usual $\Wass_1$ dual linear program by adding the box constraints $\abs{\fD_k}\leq1$.
The flat norm is sometimes called the ``Kantorovich--Rubinstein'' norm~\cite{hanin1992kantorovich} and has been used as a fidelity term for inverse problems in imaging~\cite{lellmann2014imaging}.
\index{norm!flat}
The flat norm is similar to the Dudley metric, which uses
\index{Dudley metric}
\eq{\label{eq-set-dudley}
	B=\enscond{f}{\norm{\nabla \f}_\infty + \norm{\f}_\infty \leq 1}.
}
\end{example}

\begin{figure}[ht]
\centering
\setlength{\tabcolsep}{2pt}
\begin{tabular}{@{}ccc@{}}
\small $\Wass_1$ witness & \small MMD witness & \small total variation witness \\[-.15em]
\index{total variation}
\includegraphics[width=.31\linewidth]{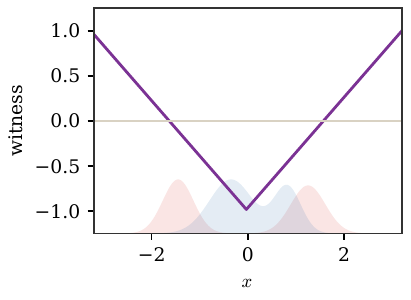} &
\includegraphics[width=.31\linewidth]{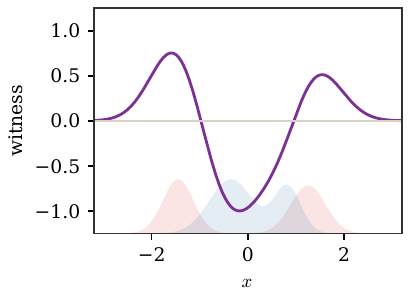} &
\includegraphics[width=.31\linewidth]{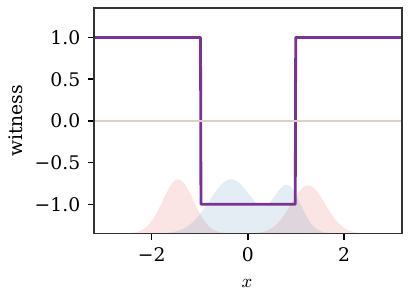}
\end{tabular}
\caption{Dual witnesses for integral probability metrics. The red and blue curves are two one-dimensional probability densities and the violet curve is a normalized optimal dual witness $f^\star_{\alpha,\beta}$ for the IPM variational problem~\eqref{eq-dual-norm-cont}. $\Wass_1$ restricts the slope through Kantorovich--Rubinstein duality~\eqref{eq-w1-cont}, MMD restricts the RKHS norm as in Proposition~\ref{prop-kernel-rkhs-dual}, and total variation can saturate pointwise and therefore reacts sharply to signed density differences.}
\index{Kantorovich-Rubinstein!duality}
\index{integral probability metric}
\index{total variation}
\index{dual!norm}
\index{RKHS}
\label{fig:dualnorms-ipm-witnesses}
\end{figure}

The following proposition gives a useful compact-space criterion. The dual ball should be rich enough to approximate continuous observables, but compact enough for weak convergence to imply uniform convergence over the discriminator class.
\index{discriminator class}
\index{weak!convergence}

\begin{prop}[Metrization by dual norms]\label{prop-dual-norm-metrization}
\index{dual!norm}
	Assume that $\Xx$ is compact, that $B=-B$, and that the measures considered are probability measures.
\index{probability measure}
	\begin{enumerate}
		\item If every function in $\Cc(\Xx)$ can be uniformly approximated by elements of $\Span(B)$, then $\norm{\al_n-\al}_B \rightarrow 0$ implies $\al_n \rightharpoonup \al$.
		\item If $B \subset \Cc(\Xx)$ is compact for $\norm{\cdot}_\infty$, then $\al_n \rightharpoonup \al$ implies $\norm{\al_n-\al}_B \rightarrow 0$.
	\end{enumerate}
\end{prop}
\begin{proof}
	For the first implication, $\norm{\al_n-\al}_B\to0$ and the symmetry of $B$ imply
	\[
		\abs{\dotp{f}{\al_n-\al}}\leq \norm{\al_n-\al}_B
		\qforq f\in B.
	\]
	By linearity, integrals converge for every $h\in\Span(B)$. Let $u\in\Cc(\Xx)$ and choose $h\in\Span(B)$ with $\norm{u-h}_\infty\leq\eta$. Since $\al_n$ and $\al$ are probabilities,
	\[
		\abs{\dotp{u}{\al_n-\al}}
		\leq
		\abs{\dotp{h}{\al_n-\al}}+2\eta.
	\]
	Taking the limsup as $n\to\infty$ and then letting $\eta\to0$ gives $\dotp{u}{\al_n}\to\dotp{u}{\al}$ for all $u\in\Cc(\Xx)$, which is weak convergence.
\index{weak!convergence}

	For the second implication, assume $\al_n\rightharpoonup\al$ and choose a subsequence $(\al_{n_k})_k$ such that $\norm{\al_{n_k}-\al}_B\to\limsup_n\norm{\al_n-\al}_B$. Since $B$ is compact and the map $f\mapsto\dotp{f}{\al_{n_k}-\al}$ is continuous on $B$, the supremum is attained by some $f_{n_k}\in B$. Extracting a further subsequence if needed, $f_{n_k}\to f$ uniformly for some $f\in B$. Then
	\[
		\dotp{f_{n_k}}{\al_{n_k}-\al}
		=
		\dotp{f}{\al_{n_k}-\al}
		+\dotp{f_{n_k}-f}{\al_{n_k}}
		-\dotp{f_{n_k}-f}{\al}.
	\]
	The first term tends to zero by weak convergence and the last two by uniform convergence. Hence every limsup subsequence has limit zero, proving $\norm{\al_n-\al}_B\to0$.
\index{weak!convergence}
\end{proof}

\begin{cor}[Wasserstein metrizes weak convergence]\label{cor-topol-wass}
	On a compact metric space, $\Wass_p$ metrizes weak convergence on probability measures for every $p\geq1$.
\index{probability measure}
\index{weak!convergence}
\end{cor}
\begin{proof}
	For $p=1$, take $B=\enscond{f}{\Lip(f)\leq1}$. The span of $B$ contains all Lipschitz functions, and Lipschitz functions are dense in $\Cc(\Xx)$ on compact metric spaces. This gives the implication $\Wass_1(\al_n,\al)\to0\Rightarrow \al_n\rightharpoonup\al$ by Proposition~\ref{prop-dual-norm-metrization}.
\index{Lipschitz!function}
\index{dual!norm}

	Conversely, constants do not change the pairing with $\al_n-\al$. Fix $x_0\in\Xx$ and normalize potentials by $f(x_0)=0$. The normalized unit Lipschitz ball is uniformly bounded by $\mathrm{diam}(\Xx)$ and equicontinuous, hence compact in $\norm{\cdot}_\infty$ by Arzel\`a--Ascoli. Proposition~\ref{prop-dual-norm-metrization} gives $\Wass_1(\al_n,\al)\to0$. Proposition~\ref{prop-comp-wass-p} shows that all $\Wass_p$ distances induce the same topology on a compact space, so the result follows for every $p\geq1$.
\end{proof}

\section{Dual RKHS Norms and Maximum Mean Discrepancies}
\index{maximum mean discrepancy}
\index{RKHS}
\label{sec-rkhs-mmd}

Kernel methods turn probability measures into mean elements of a reproducing kernel Hilbert space (RKHS). The resulting Hilbertian dual norms are quadratic discrepancies, handled with Euclidean geometry while retaining a weak test-function interpretation. We first recall the positivity assumptions under which the quadratic form on signed measures is nonnegative.
\index{probability measure}
\index{signed!measure}
\index{dual!norm}

\begin{defn}[Positive and conditionally positive kernels]\label{def-positive-kernels}
\index{kernel!positive}
\index{kernel!conditionally positive}
	A symmetric function $\Krkhs:\Xx\times\Xx\to\RR$ is positive definite if for every $n\geq1$, every $x_1,\ldots,x_n\in\Xx$ and every $r\in\RR^n$,
\index{RKHS}
	\eql{\label{eq-dual-kern}
		\sum_{i,j=1}^n r_i r_j \Krkhs(x_i,x_j)\geq0.
	}
	It is conditionally positive definite if the same inequality is required only for zero-sum vectors, $\dotp{r}{\ones_n}=0$.
\end{defn}

The conditional version is the right notion for probability distances, because one applies the quadratic form to signed measures $\xi=\al-\be$ of total mass zero. Adding a term of the form $a(x)+a(y)$ to the kernel does not change $\iint \Krkhs(x,y)\d\xi(x)\d\xi(y)$ on such measures, and many natural distance kernels are only conditionally positive definite.
\index{RKHS}
\index{signed!measure}

\begin{example}[Riesz, energy and Mat\'ern-type kernels]
\index{kernel!Riesz}
\index{kernel!energy}
\index{kernel!Matern}
	On $\RR^d$, translation-invariant kernels are most transparent in Fourier variables. The Riesz family associated with $(-\Delta)^{-s}$ has multiplier $\norm{\om}^{-2s}$ and defines a nonnegative quadratic form on zero-mass measures for which the low-frequency singularity is integrable; this is the kernel counterpart of classical Riesz potentials~\cite{berg84harmonic}. The energy distance corresponds to the conditionally positive kernel $\Krkhs(x,y)=-\norm{x-y}$, whose Fourier multiplier is proportional to $\norm{\om}^{-(d+1)}$; for $\xi=\al-\be$,
\index{kernel!translation-invariant}
\index{Riesz potential}
\index{Fourier multiplier}
\index{kernel!conditionally positive}
\index{RKHS}
\index{energy!distance}
\index{kernel!positive}
	\[
		-\iint \norm{x-y}\d\xi(x)\d\xi(y)
	\]
	is the squared energy distance up to a dimensional constant~\cite{schoenberg38,szekely2004testing}.
\index{energy!distance}

	Shifted kernels replace $(-\Delta)^{-s}$ by $(-\Delta+\lambda I)^{-s}$ with $\lambda>0$. Their Fourier multiplier $(\norm{\om}^2+\lambda)^{-s}$ is bounded at the origin, hence the kernel is positive definite without imposing zero mass. These are Mat\'ern kernels; in closed form they are radial and involve a modified Bessel function~\cite{wendland2005scattered}. The Laplacian kernel $e^{-\norm{x-y}/\sigma}$ is a low-smoothness Mat\'ern example, while the Gaussian kernel $e^{-\norm{x-y}^2/(2\sigma^2)}$ is the infinite-smoothness limit after the usual rescaling of the Mat\'ern smoothness parameter.
\index{zero mass}
\index{Bessel function}
\index{kernel!Laplacian}
\index{Fourier multiplier}
\index{kernel!Gaussian}
\index{kernel!Matern}
\end{example}

\begin{defn}[Kernel norm and MMD]\label{def-kernel-mmd-norm}
\index{kernel!norm}
\index{maximum mean discrepancy}
	Let $\Krkhs$ be positive definite. More generally, let $\Krkhs$ be conditionally positive definite and restrict attention to signed measures of total mass zero. For a signed measure $\xi$ with finite kernel energy, the associated norm is
\index{kernel!energy}
\index{RKHS}
\index{signed!measure}
	\eql{\label{eq-kernel-dual}
		\norm{\xi}^2_{\Krkhs}
		\eqdef
		\iint_{\Xx\times\Xx}\Krkhs(x,y)\d\xi(x)\d\xi(y).
	}
	For two probability measures, the maximum mean discrepancy associated with $\Krkhs$ is
\index{probability measure}
\index{maximum mean discrepancy}
	\[
		\MMD_{\Krkhs}(\al,\be)\eqdef\norm{\al-\be}_{\Krkhs}.
	\]
\end{defn}

These norms are usually called maximum mean discrepancies in statistics and machine learning~\cite{gretton2012kernel,muandet2017kernel}, and kernel norms in shape analysis~\cite{Hofmann2008}. If $X,X'$ are independent with law $\al$, then \(\norm{\al}_{\Krkhs}^2=\EE_{X,X'}(\Krkhs(X,X'))\), whenever this expression is finite.
\index{kernel!norm}
\index{RKHS}
\index{maximum mean discrepancy}

\begin{prop}[Kernel norm as an RKHS dual norm]\label{prop-kernel-rkhs-dual}
\index{dual!norm}
\index{RKHS}
\index{kernel!norm}
\index{norm!RKHS}
	Let $\RKHS$ be the RKHS with reproducing kernel $\Krkhs$, and assume that the kernel mean embedding
\index{kernel!mean embedding}
	\[
		m_\xi \eqdef \int \Krkhs(x,\cdot)\d\xi(x)
	\]
	is well-defined for the signed measure $\xi$. Then
\index{signed!measure}
	\[
		\norm{\xi}_{\Krkhs}
		=
		\sup_{\norm{h}_{\RKHS}\leq1}
		\int h(x)\d\xi(x),
	\]
	so $\norm{\cdot}_{\Krkhs}$ is the dual norm in the sense of~\eqref{eq-dual-norm-cont} associated with the RKHS unit ball.
\index{dual!norm}
\end{prop}
\begin{proof}
	By the reproducing property,
	\[
		\int h(x)\d\xi(x)
		=
		\left\langle h,\int \Krkhs(x,\cdot)\d\xi(x)\right\rangle_{\RKHS}
		=
		\langle h,m_\xi\rangle_{\RKHS}.
	\]
	Cauchy--Schwarz gives
	\[
		\sup_{\norm{h}_{\RKHS}\leq1}\int h\d\xi
		=
		\norm{m_\xi}_{\RKHS}.
	\]
	Finally,
	\[
		\norm{m_\xi}_{\RKHS}^2
		=
		\iint \Krkhs(x,y)\d\xi(x)\d\xi(y),
	\]
	which is exactly~\eqref{eq-kernel-dual}.
\end{proof}

\begin{prop}[Universal kernels metrize weak convergence]\label{prop-mmd-metrization}
\index{maximum mean discrepancy}
\index{kernel!universal}
\index{weak!convergence}
	Assume that $\Xx$ is compact and that the RKHS generated by the continuous kernel $\Krkhs$ is dense in $\Cc(\Xx)$ for the uniform norm. Then
\index{RKHS}
	\[
		\MMD_{\Krkhs}(\al_n,\al)\to0
		\quad\Longleftrightarrow\quad
		\al_n\rightharpoonup\al
	\]
	for probability measures on $\Xx$.
\index{probability measure}
\end{prop}
\begin{proof}
	If $\MMD_{\Krkhs}(\al_n,\al)\to0$, then integrals of all RKHS functions converge. For any $h\in\Cc(\Xx)$ and any $\eta>0$, choose $g\in\RKHS$ with $\norm{h-g}_\infty\leq\eta$. Since $\al_n$ and $\al$ are probabilities,
	\[
		\left|\int h\,\d(\al_n-\al)\right|
		\leq
		2\eta+\left|\int g\,\d(\al_n-\al)\right|,
	\]
	and the last term tends to zero. This proves weak convergence. Conversely, if $\al_n\rightharpoonup\al$, then $\al_n\otimes\al_n$, $\al_n\otimes\al$ and $\al\otimes\al$ converge weakly on the compact product space. Applying this to the continuous bounded function $\Krkhs$ in the identity
\index{weak!convergence}
	\[
		\MMD_{\Krkhs}(\al_n,\al)^2
		=
		\iint \Krkhs\,\d\al_n\d\al_n
		-2\iint \Krkhs\,\d\al_n\d\al
		+\iint \Krkhs\,\d\al\d\al
	\]
	gives convergence to zero.
\end{proof}

We refer to~\cite{berlinet03reproducing,Hofmann2008,scholkopf2002learning} for more details on RKHS functional spaces.
\index{RKHS}

\begin{rem}[Universal kernels]
\index{kernel!universal}
The hypothesis in Proposition~\ref{prop-mmd-metrization} is called universality of the kernel. Equivalently, finite sums of the form $\sum_{i=1}^n a_i \Krkhs(x_i,\cdot)$ are dense in $\Cc(\Xx)$ for the uniform norm. For translation-invariant kernels on $\Xx=\RR^d$, $\Krkhs(x,y)=\Krkhs_0(x-y)$, this is equivalent, in the usual sense on compact sets or with suitable decay assumptions, to the Fourier transform not vanishing on its support~\cite{sriperumbudur2008injective,sriperumbudur2012empirical}.
\index{kernel!translation-invariant}
\index{RKHS}
\index{maximum mean discrepancy}
\end{rem}

In the special case where $\al$ is a discrete measure, one thus has the simple expression
\eq{
	\norm{\al}_{\Krkhs}^2 = \sum_{i=1}^n \sum_{i'=1}^n \a_i\a_{i'} \KrkhsD_{i,i'} = \dotp{\KrkhsD\a}{\a}
\index{RKHS}
	\qwhereq
	\KrkhsD_{i,i'} \eqdef \Krkhs(x_i,x_{i'}).
}
In particular, when $\al=\sum_{i=1}^n \a_i \de_{x_i}$ and $\be=\sum_{i=1}^n \b_i \de_{x_i}$ are supported on the same set of points, $\norm{\al-\be}_{\Krkhs}^2 = \dotp{\KrkhsD(\a-\b)}{\a-\b}$, so that $\norm{\cdot}_{\Krkhs}$ is a Euclidean norm (proper if $\KrkhsD$ is positive definite, degenerate otherwise if $\KrkhsD$ is semidefinite) on the simplex $\simplex_n$.
To compute the discrepancy between two discrete measures, one can use
\eql{\label{eq-mmd-discr}
\index{maximum mean discrepancy}
	\norm{\al-\be}_{\Krkhs}^2 =
		\sum_{i,i'} \a_i \a_{i'} \Krkhs(x_i,x_{i'})	+
		\sum_{j,j'} \b_j \b_{j'} \Krkhs(y_j,y_{j'}) - 2
		\sum_{i,j} \a_i \b_j \Krkhs(x_i,y_j).
}

\section[phi-divergences]{$\phi$-divergences}
\index{phi-divergence}
\label{sec-phi-div}

This section develops divergences based on pointwise density ratios. They are computationally simple and statistically classical, but they do not see small spatial displacements between singular measures.
\index{density!ratio}

\paragraph{Definition by density ratios.}
\index{density!ratio}

We now consider a radically different class of methods to compare distributions, which are simpler to compute ($O(n)$ for discrete distributions) but never metrize weak-* convergence.
\index{weak!convergence}
Note that yet another way is possible, using Bregman divergence, which might metrize weak-* convergence when the associated entropy function is weak-* regular.
\index{Bregman!divergence}
\index{entropy!function}

\begin{defn}[Entropy function]
\label{def_entropy}
A function $\phi : \RR \to \RR \cup \{\infty\}$ is an entropy function if it is lower semicontinuous, convex, $\dom \phi\subset [0,\infty[$, and satisfies the feasibility condition $\dom \phi \cap (0,+\infty) \neq \emptyset$. The speed of growth of $\phi$ at $\infty$ is described by 
\index{lower semicontinuity}
\index{entropy!function}
\eq{
\phi'_\infty = \lim_{x\rightarrow +\infty} \phi(x)/x \in \RR \cup \{\infty\} \, .
}
\end{defn}

If $\phi'_\infty = \infty$, then $\phi$ grows faster than any linear function and $\phi$ is said to be \emph{superlinear}. Any entropy function $\phi$ induces a $\phi$-divergence (also known as Cisz\'ar divergence~\cite{ciszar1967information,ali1966general} or $f$-divergence) as follows.
\index{entropy!function}
\index{phi-divergence}

\begin{defn}[$\phi$-Divergences]
\label{def_divergence}
Let $\phi$ be an entropy function.
\index{entropy!function}
For $\al,\be \in \Mm(\X)$, let $\frac{\d \al}{\d \be} \be + \al^{\perp}$ be the Lebesgue decomposition of $\al$ with respect to $\be$: this means that $\al$ is uniquely decomposed as $\al^{\mathrm{ac}}+\al^\perp$, with $\al^{\mathrm{ac}}\ll\be$, $\al^\perp\perp\be$, and $\al^{\mathrm{ac}}=(\d\al/\d\be)\be$. The divergence $\Divergm_\phi$ is defined by
\index{Lebesgue decomposition}
\eql{\label{eq-phi-div}
	\Divergm_\phi (\al|\be) \eqdef \int_\X \phi\left(\frac{\d \al}{\d \be} \right) \d \be
+ \phi'_\infty \al^{\perp}(\X)
}
if $\al,\be$ are nonnegative and $\infty$ otherwise.
\end{defn}%

The additional term $\phi'_\infty \al^{\perp}(\X)$ in~\eqref{eq-phi-div} is the recession contribution of the perspective functional. It gives the weak-* lower-semicontinuous extension of the density-ratio integral when singular mass appears. This is essential for entropies with linear growth at infinity, such as the absolute value~\eqref{eq-tv-entropy} defining the TV norm. If $\phi$ has superlinear growth, \eg the usual entropy~\eqref{eq-shannon-entropy}, then $\phi'_\infty=+\infty$ so that $\Divergm_\phi (\al|\be) = +\infty$ if $\al$ does not have a density with respect to $\be$.
\index{Shannon!entropy}
\index{density!ratio}

In the discrete setting, assuming
\eql{\label{eq-div-disc-meas}
	\al=\sum_i \a_i \de_{x_i}
	\qandq \be=\sum_i \b_i \de_{x_i}
}
are supported on the same set of $n$ points $(x_i)_{i=1}^n \subset \X$,~\eqref{eq-phi-div} defines a divergence on $\simplex_n$
\eql{\label{eq-discr-diverg}
	\DivergmD_\phi(\a|\b) = \sum_{i \in \Supp(\b)} \phi\pa{ \frac{\a_i}{\b_i} } \b_i + \phi'_\infty \sum_{i \notin \Supp(\b)} \a_i,
}
where $\Supp(\b) \eqdef \enscond{i \in \range{n}}{ \b_i \neq 0 }$.

\begin{proposition}[Basic properties of $\phi$-divergences]
\index{phi-divergence}
If $\phi$ is an entropy function, then $\Divergm_\phi$ is jointly $1$-homogeneous, convex and weak-* lower semicontinuous in $(\al,\be)$.
\index{lower semicontinuity}
\index{entropy!function}
\end{proposition}

\begin{proof}
	One defines the associated perspective function
	\eq{
		\foralls (u,v) \in (\RR_+)^2, \quad
		\psi(u,v) = \choice{
			\phi(u/v) v \qifq v \neq 0, \\
				u \phi'_\infty \qifq v=0
		}
	}
		The joint $1$-homogeneity follows from the definition of this perspective. We prove convexity in the discrete case, where
		\eq{
			\DivergmD_\phi(\a|\b) = \sum_{i} \psi(\a_i,\b_i),
		}
		and it is enough to show that $\psi$ is convex on $(\RR_+)^2$. We first prove this on $\RR_+ \times \RR_+^*$; the extension to $v=0$ follows by lower semicontinuity of the recession value $u\phi'_\infty$.
\index{lower semicontinuity}
		Indeed, for any $\la \in [0,1]$, $\tau=1-\la$, set
		\[
			\theta_1=\frac{\tau v_1}{\tau v_1+\lambda v_2},
			\qquad
			\theta_2=\frac{\lambda v_2}{\tau v_1+\lambda v_2}.
		\]
		Then $\theta_1+\theta_2=1$ and
		\[
			\frac{\tau u_1+\lambda u_2}{\tau v_1+\lambda v_2}
			=
			\theta_1\frac{u_1}{v_1}
			+
			\theta_2\frac{u_2}{v_2}.
		\]
		Convexity of $\phi$ therefore gives
		\[
			\phi\pa{ \frac{\tau u_1+\lambda u_2}{\tau v_1+\lambda v_2} }
			(\tau v_1+\lambda v_2)
			\leq
			\tau v_1\phi\pa{\frac{u_1}{v_1}}
			+
			\lambda v_2\phi\pa{\frac{u_2}{v_2}} .
		\]
		In the general measure case, weak-* lower semicontinuity is the standard lower-semicontinuity theorem for convex integral functionals with recession extension; in the discrete case it is immediate from the lower semicontinuity of $\psi$.
\end{proof}

The following proposition records when $\Divergm_\phi$ is nonnegative.

\begin{proposition}[Non-negativity of $\phi$-divergences]\label{phi-div-positive}
\index{phi-divergence}
Assume that $\phi$ is normalized by $\phi(1)=0$. For probability distributions $(\al,\be) \in \Mm_+^1(\Xx)$, one has $\Divergm_\phi(\al|\be) \geq 0$. If $\phi$ is strictly convex, then one has $\Divergm_\phi(\al|\be)=0$ if and only if $\al=\be$.
This property extends to arbitrary distributions $(\al,\be) \in \Mm_+(\Xx)$ if one furthermore imposes that $\phi \geq 0$.
\end{proposition}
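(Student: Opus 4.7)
The plan is to apply Jensen's inequality to the convex function $\phi$ against the probability measure $\be$, and then account for the singular part using the recession slope $\phi'_\infty$.

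First I would treat the case of probability measures. Write the Lebesgue decomposition $\al = \al^a + \al^\perp$ with $\al^a \ll \be$, set $\rho \eqdef \d\al^a/\d\be$, and let $s \eqdef \al^\perp(\Xx) \in [0,1]$, so that $\int_\Xx \rho \, \d\be = 1 - s$. If $s>0$ and $\phi'_\infty = +\infty$, the divergence is $+\infty$ and there is nothing to prove, so assume $\phi'_\infty < +\infty$. Jensen's inequality applied to the probability measure $\be$ and the convex function $\phi$ yields
\eq{
	\int_\Xx \phi(\rho) \,\d\be \;\geq\; \phi\!\left(\int_\Xx \rho\,\d\be\right) \;=\; \phi(1-s).
}
Hence $\Divergm_\phi(\al|\be) \geq \phi(1-s) + s\,\phi'_\infty$, so it suffices to show $\phi(1-s) + s\,\phi'_\infty \geq \phi(1) = 0$.

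The latter is a pure convexity statement about $\phi$. For any $M > 1$, convexity gives
\eq{
	\phi(1) \;\leq\; \tfrac{M-1}{M-1+s}\,\phi(1-s) \;+\; \tfrac{s}{M-1+s}\,\phi(M),
}
and letting $M \to \infty$ the coefficients tend to $1$ and $s$ respectively while $\phi(M)/M \to \phi'_\infty$, producing $\phi(1) \leq \phi(1-s) + s\,\phi'_\infty$ as required. This establishes positivity for probability measures.

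For the equality case under strict convexity with $\phi(1)=0$, I would argue in two stages. Strict convexity makes the secant slope $(\phi(M)-\phi(1-s))/(M-(1-s))$ \emph{strictly} increasing in $M$ and strictly less than its limit $\phi'_\infty$ whenever $s > 0$; pushing this through the convex-combination identity above turns the previous inequality into $\phi(1-s) + s\,\phi'_\infty > 0$. Thus vanishing of the divergence forces $s=0$, i.e.\ $\al \ll \be$. Then $\int \phi(\rho)\,\d\be = 0 = \phi(1) = \phi(\int \rho\,\d\be)$, and the equality case of Jensen's inequality for a strictly convex integrand forces $\rho$ to be $\be$-a.e.\ constant; the constant is $\int \rho \,\d\be = 1$, giving $\al = \be$.

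For the extension to arbitrary $(\al,\be) \in \Mm_+(\Xx)$ under the hypothesis $\phi \geq 0$, both summands in the definition~\eqref{eq-phi-div} are manifestly nonnegative: $\phi(\rho) \geq 0$ integrated against the nonnegative measure $\be$, and $\phi'_\infty \geq 0$ because $\phi'_\infty = \lim_{t\to\infty}\phi(t)/t$ with $\phi \geq 0$. The only delicate step is really the convex-combination argument linking $\phi(1)$, $\phi(1-s)$, and $\phi'_\infty$; everything else is bookkeeping around the Lebesgue decomposition and a standard Jensen/strict-Jensen dichotomy.
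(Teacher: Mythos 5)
Your proof is correct and rests on the same core idea as the paper's: Jensen's inequality applied to $\phi$ against the probability measure $\be$, with the equality case of strict Jensen forcing $\d\al/\d\be$ to be constant equal to $1$. The difference is that the paper's proof is a one-liner that silently assumes $\al \ll \be$ (it writes $\int \frac{\d\al}{\d\be}\d\be = 1$, which fails when $\al$ has a singular part, in which case that integral equals $1-s$ with $s = \al^\perp(\Xx) > 0$), whereas you carry the Lebesgue decomposition through explicitly and close the resulting gap with the convex-combination limit argument showing $\phi(1-s) + s\,\phi'_\infty \geq \phi(1)$, together with its strict version under strict convexity to rule out $s>0$ in the equality case. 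That extra step is genuinely needed for the statement as written (the divergence is defined with the recession term $\phi'_\infty \al^\perp(\Xx)$ precisely to cover non-absolutely-continuous $\al$), so your argument is a more complete version of the paper's proof rather than a different route; the treatment of the general positive-measure case via $\phi \geq 0$ and $\phi'_\infty \geq 0$ matches the paper's.
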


\begin{proof}
		Let $m=\al+\be$ and write $a=\frac{\d\al}{\d m}$ and $b=\frac{\d\be}{\d m}$. Using the perspective function $\psi$ from the previous proof,
		\[
			\Divergm_\phi(\al|\be)=\int \psi(a,b)\d m .
		\]
		Since $\al$ and $\be$ are probabilities, $\int a\d m=\int b\d m=1$. Jensen's inequality and $\psi(1,1)=\phi(1)=0$ give
\index{Jensen inequality}
		\[
			\Divergm_\phi(\al|\be)
			\geq
			\psi\pa{\int a\d m,\int b\d m}=0.
		\]
		If $\phi$ is strictly convex, equality in Jensen forces $a=b$ $m$-almost everywhere, hence $\al=\be$. In the general non-probability case, if $\phi \geq 0$ then the divergence is positive by construction.
\end{proof}

\paragraph{Classical examples and topology.}

The following examples calibrate the strength of $\phi$-divergences. KL is sensitive to absolute continuity, while total variation gives the strong topology and therefore behaves very differently from Wasserstein-type weak metrics.
\index{topology!strong}
\index{absolute continuity}
\index{total variation}
\index{phi-divergence}

\begin{example}[Kullback--Leibler divergence]
\index{Kullback-Leibler divergence}
\label{ex_KLdiv}
The Kullback--Leibler divergence $\KL \eqdef \Divergm_{\phi_{\KL}}$, also known as the relative entropy, was already introduced in~\eqref{eq-defn-rel-entropy} and~\eqref{eq-kl-defn}. It is the divergence associated to the Shannon--Boltzmann entropy function $\phi_{\KL}$, given by
\index{entropy!function}
\index{entropy!relative}
\eql{\label{eq-shannon-entropy}
\index{Shannon!entropy}
	\phi_{\KL}(s)= \begin{cases}
		s\log(s)-s+1 & \textnormal{for } s>0 , \\
		1 & \textnormal{for } s=0 , \\
		+\infty & \textnormal{otherwise.}
		\end{cases}
}
\end{example}

\begin{example}[Total variation]\label{exmp-tv}
\index{total variation}
The total variation distance $\TV \eqdef \Divergm_{\phi_{\TV}}$ is the divergence associated to
\eql{\label{eq-tv-entropy}
	\phi_{\TV}(s)= \begin{cases}
		|s-1| & \textnormal{for } s\geq0 , \\
		+\infty & \textnormal{otherwise.}
		\end{cases}
}
It actually defines a norm on the full space of measures $\Mm(\X)$ where
\eql{\label{eq-defn-tv}
	\TV(\al|\be) = \norm{\al-\be}_{\TV},
	\qwhereq
	\norm{\al}_{\TV} = |\al|(\X) = \int_\X \d|\al|(x).
}
If $\al$ has a density $\density{\al}$ on $\X=\RR^\dim$, then the TV norm is the $L^1$ norm on functions, $\norm{\al}_{\TV} = \int_\X |\density{\al}(x)| \d x = \norm{\density{\al}}_{L^1}$.
If $\al$ is discrete as in~\eqref{eq-div-disc-meas}, then the TV norm is the $\ell^1$ norm of vectors in $\RR^n$, $\norm{\al}_{\TV}=\sum_i |\a_i| = \norm{\a}_{\ell^1}$.
\end{example}

\begin{rem}[Strong vs. weak topology]
\index{topology!strong}
\index{topology!weak}
		The total variation norm~\eqref{eq-defn-tv} defines the so-called ``strong'' topology on the space of measures.
\index{total variation}
	On a compact domain $\X$ of radius $R$, one has
	\eq{
		\Wass_1(\al,\be) \leq R \norm{\al-\be}_{\TV}
	}
	so that this strong notion of convergence implies the weak convergence metrized by Wasserstein distances.
\index{Wasserstein!distance}
\index{weak!convergence}
	The converse is, however, not true, since $\de_x$ does not converge strongly to $\de_y$ if $x \rightarrow y$ (note that
	$\norm{\de_x-\de_y}_{\TV}=2$ if $x \neq y$).
	A chief advantage is that $\Mm_+^1(\Xx)$ (once again on a compact ground space $\X$) is compact for the weak topology so that from any sequence of probability measures $(\al_k)_k$, one can always extract a converging subsequence, which makes it a suitable space for several optimization problems. 
\index{probability measure}
\index{topology!weak}
\end{rem}

\paragraph{Main families of $\phi$-divergences.}
\index{phi-divergence}

Several classical divergences fit in the same template. The power-divergence family
\[
	\phi_\gamma(s)=\frac{s^\gamma-\gamma s+\gamma-1}{\gamma(\gamma-1)}
	\qquad(\gamma\neq0,1)
\]
interpolates between the Pearson $\chi^2$ divergence at $\gamma=2$, the Hellinger-type behavior at $\gamma=1/2$, and, by taking limits, the KL divergence as $\gamma\to1$ and the reverse KL or Burg entropy $\phi_0(s)=-\log s+s-1$ as $\gamma\to0$. The Hellinger divergence is often written separately with $\phi_H(s)=(\sqrt s-1)^2$, giving $\Hellinger(\alpha,\beta)=\norm{\sqrt{\rho_\alpha}-\sqrt{\rho_\beta}}_{L^2}$ when both measures have densities. The Jensen--Shannon divergence is the symmetrized and bounded KL-to-the-mixture divergence
\index{Pearson divergence}
\index{reverse KL divergence}
\index{Burg entropy}
\index{Jensen-Shannon divergence}
\index{Hellinger!divergence}
\[
	\JS(\alpha,\beta)^2
	=
	\frac12\KL\!\left(\alpha\middle|\frac{\alpha+\beta}{2}\right)
	+
	\frac12\KL\!\left(\beta\middle|\frac{\alpha+\beta}{2}\right),
\]
and is generated by a bounded entropy equivalent to $\phi_{\JS}(s)=s\log s-(s+1)\log((s+1)/2)$ up to an irrelevant affine term. Total variation corresponds to the non-smooth entropy $\phi_{\TV}(s)=|s-1|$ and is exceptional because it is both a $\phi$-divergence and an integral probability metric.
\index{integral probability metric}
\index{total variation}
\index{phi-divergence}

\begin{figure}[ht]
\centering
\begin{tabular}{@{}cc@{}}
\small generator functions & \small discrete ratio penalties \\[-.15em]
\includegraphics[width=.43\linewidth]{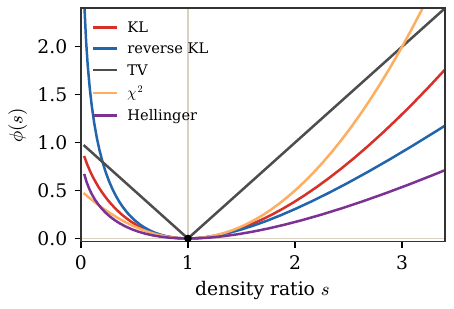} &
\includegraphics[width=.43\linewidth]{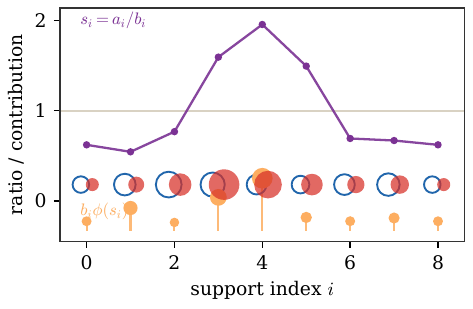}
\end{tabular}
\caption{$\phi$-divergences through density ratios. The left panel shows normalized generators for common divergences as functions of $s=\d\alpha/\d\beta$; all curves vanish at $s=1$ up to affine normalization. The right panel shows the discrete formula $D_\phi(a|b)=\sum_i b_i\phi(a_i/b_i)$: hollow blue circles encode $b_i$, filled red circles encode $a_i$, the violet curve gives the ratios $a_i/b_i$, and orange lollipops show local KL-type contributions.}
\index{phi-divergence}
\index{density!ratio}
\label{fig:dualnorms-phi-generators}
\end{figure}

\begin{rem}[$\phi$-divergences versus Bregman divergences]
\index{phi-divergence}
\index{Bregman!divergence}
	Except for KL-type entropies, $\phi$-divergences should not be confused with Bregman divergences. A $\phi$-divergence compares measures pointwise through the density ratio $\d\alpha/\d\beta$ and is invariant under measurable changes of variables. A Bregman divergence is generated by a convex functional on a linear space and compares two points through first-order Taylor error. KL is special because the integral entropy $\alpha\mapsto\int \rho\log\rho$ produces a Bregman divergence whose density-ratio form is also a $\phi$-divergence.
\index{convex!function}
\index{phi-divergence}
\index{density!ratio}
\end{rem}

\paragraph{Variational dual formula.}
\index{variational dual formula}

The following formula turns a pointwise density-ratio penalty into a dual optimization problem over test functions. It is the analogue, for $\phi$-divergences, of the Kantorovich dual formula for transport costs.
\index{phi-divergence}
\index{density!ratio}
\index{dual!formula}

\begin{proposition}[Dual expression]
	A $\phi$-divergence can be expressed using the Legendre transform
\index{Legendre transform}
	\eq{
		\phi^{*,\geq 0}(s) \eqdef \usup{t \in \RR^+} st - \phi(t)
	}
	(notice that we restrict the function to the positive real)
	of $\phi$ as 
	\eql{\label{eq-dual-div}
		\Divergm_\phi(\al|\be) = \usup{f: \X \rightarrow \RR} \int_\X f(x) \d\al(x) - \int_\X \phi^{*,\geq 0}(f(x)) \d\be(x).
	}
	which equivalently reads that the Legendre transform of $\Divergm_\phi(\cdot|\be)$ reads
\index{Legendre transform}
	\eql{\label{eq-legendre}
		\foralls f \in \Cc(\Xx), \quad
		\Divergm_\phi^*(f|\be) = \int_\X \phi^{*,\geq 0}(f(x)) \d\be(x).
	}
\end{proposition}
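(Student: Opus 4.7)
The plan is to prove equality via two inequalities, using the Fenchel--Moreau biconjugation identity $(\phi^{*,\geq 0})^{*,\geq 0}(t)=\phi(t)$ for $t\geq 0$, which holds because $\phi$ is lower semicontinuous, convex, and $\mathrm{dom}(\phi) \subset [0,+\infty)$. A key preliminary observation is that $\mathrm{dom}(\phi^{*,\geq 0}) \subset (-\infty,\phi'_\infty]$: for $s>\phi'_\infty$ one has $st-\phi(t) = t(s-\phi(t)/t)\to+\infty$ as $t\to+\infty$, so $\phi^{*,\geq 0}(s)=+\infty$. Hence the recession direction of $\phi^{*,\geq 0}$ encodes precisely the growth constant $\phi'_\infty$ appearing in the singular term of $\Divergm_\phi$.

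The first (easy) direction is the upper bound. Using the Lebesgue decomposition $\al = \rho\,\be + \al^\perp$ with $\rho=\d\al/\d\be$, split any admissible $f$ as
\begin{equation*}
\int_\X f\,\d\al \;-\; \int_\X \phi^{*,\geq 0}(f)\,\d\be \;=\; \int_\X\bigl(f\rho - \phi^{*,\geq 0}(f)\bigr)\d\be \;+\; \int_\X f\,\d\al^\perp.
\end{equation*}
On one hand, pointwise Young's inequality gives $f(x)\rho(x)-\phi^{*,\geq 0}(f(x)) \leq \phi(\rho(x))$ $\be$-a.e. On the other hand, choosing a Borel set $A$ with $\be(A)=0$ and $\al^\perp(A^c)=0$, and noting that $f(x)\leq \phi'_\infty$ is forced on $A$ (otherwise $\phi^{*,\geq 0}(f)$ is not $\be$-integrable and the RHS equals $-\infty$; on $A$ itself the $\be$ integral does not see $f$, but we still need $f$ to take values where $\phi^{*,\geq 0}$ is defined if we want the identity to make sense globally), the singular integral is bounded by $\phi'_\infty\,\al^\perp(\X)$. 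Summing yields $\sup_f(\cdots) \leq \Divergm_\phi(\al|\be)$.

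For the matching lower bound, one constructs near-optimal test functions. On $A^c$, use measurable selection applied to the (closed convex) subdifferential multifunction $x\mapsto \partial\phi(\rho(x))$ to pick $f(x)$ attaining the biconjugate, i.e.\ $f(x)\rho(x)-\phi^{*,\geq 0}(f(x))=\phi(\rho(x))$; on $A$, set $f \equiv M$ for some $M\leq \phi'_\infty$, and let $M\uparrow\phi'_\infty$ at the end. Passing to the limit via monotone convergence recovers $\int\phi(\rho)\d\be + \phi'_\infty\al^\perp(\X)=\Divergm_\phi(\al|\be)$ (the case $\phi'_\infty=+\infty$ with $\al^\perp\neq 0$ is handled separately by sending $M\to+\infty$ to obtain $+\infty$ on both sides). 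The principal technical obstacle is precisely this exchange of supremum and integral: the clean pointwise argument must be upgraded to a measurable selection, and if the class of admissible $f$ in the dual formula is restricted to continuous bounded functions (rather than arbitrary measurable ones) one needs an additional density argument, approximating the optimal measurable selection in $L^1(\be)$ and in $\al^\perp$-measure by continuous functions, using the finiteness of the measures and standard Lusin/Urysohn-type constructions.
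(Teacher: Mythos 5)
Your proof is correct in substance but takes a genuinely different, and considerably more complete, route than the paper's. The paper only proves the second identity~\eqref{eq-legendre}: it restricts at the outset to superlinear $\phi$ (so $\phi'_\infty=+\infty$ and the singular term disappears), computes $\Divergm_\phi^*(\cdot|\be)$ by exchanging the supremum over densities $\rho\geq 0$ with the integral against $\be$ (a step it asserts without justification), and then merely remarks that~\eqref{eq-dual-div} "intuitively corresponds" to the idempotence $\Divergm_\phi^{**}=\Divergm_\phi$. You instead prove~\eqref{eq-dual-div} directly by two inequalities in the general case: Young's inequality pointwise on the absolutely continuous part plus the bound $f\leq\phi'_\infty$ on the singular part for the upper bound, and a measurable selection from $\partial\phi(\rho(\cdot))$ together with $M\uparrow\phi'_\infty$ and monotone convergence for the lower bound. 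What your approach buys is a rigorous treatment of exactly the two points the paper waves away: the sup/integral interchange (handled via measurable selection rather than asserted) and the singular term $\phi'_\infty\,\al^\perp(\X)$. One point you flag parenthetically deserves to be made explicit: when $\phi'_\infty<\infty$ and $\al^\perp\neq 0$, the supremum over \emph{all} measurable $f:\X\to\RR$ as written in~\eqref{eq-dual-div} is actually $+\infty$ (take $f=M>\phi'_\infty$ on a $\be$-null set carrying $\al^\perp$; the $\be$-integral does not see it), so the correct statement must restrict $f$ to take values in $\overline{\dom(\phi^{*,\geq 0})}=(-\infty,\phi'_\infty]$, or to $f\in\Cc(\Xx)$ with that constraint; with that restriction in place your argument, including the final Lusin/Urysohn density step, goes through.
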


\begin{proof}
		We first consider the superlinear case $\phi'_\infty=+\infty$, so that $\Dd_\phi(\al|\be)=+\infty$ if $\al$ does not have a density $\rho \geq 0$ with respect to $\be$, $\d\al=\rho \d\be$. Thus the Legendre-Fenchel transform of $\Dd_\phi(\cdot|\be)$
	reads
	\begin{align*}
		\Dd_\phi^*(f|\be) &= \usup{\rho \geq 0} \int_\Xx f(x) \rho(x)\d\be(x) - \int_\Xx \phi(\rho(x)) \d\be(x)\\
			&= \int_\Xx \usup{\rho(x) \geq 0} \pa{ f(x) \rho(x) -\phi(\rho(x)) }\d\be(x)
		= \int_\Xx \phi^{*,\geq 0}(f(x)) \d\be(x).
	\end{align*}
		Fenchel--Moreau then gives the displayed dual expression. For a general entropy, the same argument is applied to the perspective with its recession term; the singular part is exactly encoded by the effective domain of $\phi^{*,\geq0}$.
	\end{proof}

\section{GANs via Duality}
\index{GAN}

GANs fit naturally into the dual viewpoint: the discriminator is a parameterized potential and the generator moves a reference measure. This section first explains the original divergence-based GAN objective, then contrasts it with integral probability metrics such as MMD and Wasserstein distances.
\index{maximum mean discrepancy}
\index{integral probability metric}
\index{Wasserstein!distance}
\index{reference!measure}

The goal is to fit a generative parametric model $\al_\th = g_{\th,\sharp} \zeta$ to empirical data $\be = \frac{1}{m}\sum_{j} \de_{y_j}$, where $\zeta \in \Mm_+^1(\Zz)$ is a fixed density over the latent space and $g_\th : \Zz \rightarrow \Xx$ is the generator, often a neural network.

\paragraph{Divergence-based adversarial losses.}
\index{adversarial!loss}

Any $\phi$-divergence can be written in adversarial form through the dual formula~\eqref{eq-dual-div}:
\index{phi-divergence}
\index{dual!formula}
\eq{
	\umin{\th} \Divergm_\phi(\al_\th|\be)
	= \umin{\th} \usup{\f} \int_\Xx f(x) \d\al_\th(x) - \Divergm_\phi^*(f|\be)
	= \umin{\th} \usup{\f} \int_\Xx f(g_\th(z)) \d\zeta(z) -
	\frac{1}{m}\sum_j \phi^*(f(y_j)).
}
Replacing the unrestricted potential $f$ by a neural network $f_\xi$ gives a saddle problem
\[
	\min_\theta\max_\xi
	\int_\Zz f_\xi(g_\theta(z))\d\zeta(z)
	-
	\frac1m\sum_j \phi^*(f_\xi(y_j)).
\]
The original vanilla GAN~\cite{GAN} is this construction for the Jensen--Shannon generator discussed above,
\[
	\phi_{\JS}(s)=s\log s-(s+1)\log\frac{s+1}{2},
	\qquad
	\phi_{\JS}^*(u)=-\log(2-e^u),\quad u<\log2,
\]
up to affine normalizations and the usual reparametrization of the potential by a discriminator with values in $(0,1)$. In practice the min--max problem is solved by alternating stochastic gradient descent/ascent on $(\theta,\xi)$. Unlike the convex-concave variational formula, the neural parametrization is non-convex in $\theta$ and non-concave in $\xi$, which explains the instability and mode-collapse pathologies of divergence-based GAN training. These losses estimate density ratios, which is statistically meaningful when the measures overlap but can saturate when the model and data are mutually singular; for example, the Jensen--Shannon divergence is maximal for disjoint supports.
\index{stochastic!gradient}
\index{Jensen-Shannon divergence}
\index{density!ratio}

\paragraph{Dual norms and integral probability metrics.}
\index{dual!norm}
\index{integral probability metric}

Instead of a density-ratio divergence, one can minimize a dual norm~\eqref{eq-dual-norm-cont}, also called an integral probability metric,
\index{density!ratio}
\eq{
	\umin{\th} \norm{\al_\th - \be}_B
	= \umin{\th}
	\usup{\f \in B} \int_{\X} \f(x) \d(\al_\th-\be)(x)
	= \umin{\th}
	\usup{\f \in B} \int_{\Zz} \f( g_\th(z)) \d\zeta - \frac{1}{m} \sum_j f(y_j).
}
MMD-GANs take $B$ to be a unit ball in an RKHS~\cite{MMD-GAN}; Wasserstein GANs take $B$ to be a Lipschitz ball, following Kantorovich--Rubinstein duality~\cite{WassersteinGAN,FrognerNIPS}. The advantage of such choices is topological: for bounded continuous RKHS balls, or for bounded Lipschitz balls on compact spaces, the resulting objective is weak-* continuous. It can therefore compare singular empirical and generated measures through test functions, instead of requiring pointwise density ratios. The price is that the discriminator class must be controlled geometrically, either by a kernel norm, a Lipschitz constraint or a related regularization.
\index{kernel!norm}
\index{RKHS}
\index{maximum mean discrepancy}
\index{Kantorovich-Rubinstein!duality}
\index{discriminator class}
\index{density!ratio}

\begin{rem}[Weight clipping is only a proxy]
	Wasserstein GANs originally used weight clipping, constraining $\norm{\xi}_\infty \leq 1$ as a proxy for enforcing $\f_\xi \in B = \enscond{f}{\Lip(f) \leq 1}$. This parameter set is both smaller than the true Lipschitz ball and non-convex, so clipping should be understood as a practical heuristic rather than a faithful implementation of the Kantorovich--Rubinstein dual constraint.
\end{rem}


\chapter{Entropic Regularization: Sinkhorn Algorithm}
\index{entropic!regularization}
\index{Sinkhorn!algorithm}
\label{sec-sinkhorn}

Entropic regularization makes optimal transport smooth, strictly convex and scalable. This chapter first explains the discrete KL-regularized problem, derives Sinkhorn's alternating matrix scaling algorithm, and then rewrites the same construction as a relative-entropy projection problem. It then records the general continuous formulation, explains the path-space Schr\"odinger problem behind the static coupling formulation, develops the dual soft-transform picture, and presents the main convex regularization variants and the debiased Sinkhorn divergence. The presentation connects the older matrix-scaling literature~\cite{Sinkhorn64,SinkhornKnopp67,Sinkhorn67} with modern entropic OT~\cite{CuturiSinkhorn,peyre2019computational}.
\index{matrix!scaling}
\index{entropic!OT}
\index{path-space!formulation}
\index{Schrodinger!problem}
\index{entropic!regularization}
\index{Sinkhorn!divergence}
\index{scaling!algorithm}
\index{entropy!relative}
\index{matrix!scaling}
\index{soft!transform}

\section{Entropic Regularization for Discrete Measures}
\index{entropic!regularization}
\index{discrete!measure}
\label{sec-entropic-discrete}

Entropy turns a possibly non-unique linear program into a unique smooth problem. The price is bias, but the reward is differentiability and fast scaling algorithms.
\index{scaling!algorithm}

The idea of the entropic regularization of optimal transport is to penalize concentrated couplings by adding the negative of the discrete Shannon--Boltzmann entropy.
\index{entropic!regularization}

\begin{defn}[Discrete Shannon--Boltzmann entropy]\label{def-discrete-shannon-boltzmann-entropy}
\index{Shannon!entropy}
\index{Boltzmann entropy}
	For a nonnegative matrix $\P$, its Shannon--Boltzmann entropy is
	\[
		\HD(\P) \eqdef -\sum_{i,j} \P_{i,j} \log(\P_{i,j}),
	\]
	with the convention $0\log(0)=0$.
\end{defn}
Using this entropy as a regularizing function gives approximate solutions to the original transport problem~\eqref{eq-kanto-discr}
\eql{\label{eq-regularized-discr}
	\MKD_\C^\epsilon(\a,\b) \eqdef
	\umin{\P \in \CouplingsD(\a,\b)}
		\dotp{\P}{\C} - \epsilon \HD(\P).
}

\begin{prop}[Existence and uniqueness of entropic OT]\label{prop-entropic-unique}
\index{entropic!OT}
	Assume that $\a,\b$ are probability histograms and that $\C$ is finite. For every $\epsilon>0$, problem~\eqref{eq-regularized-discr} admits a unique minimizer. If all entries of $\a$ and $\b$ are positive, then this minimizer is positive on every entry.
\index{histogram}
\end{prop}

\begin{proof}
	The transport polytope $\CouplingsD(\a,\b)$ is non-empty and compact, and the objective is continuous on it with the convention $0\log0=0$, so a minimizer exists. On the relative interior of the polytope,
\index{transportation!polytope}
	\[
		-\partial^2 \HD(\P)=\diag(1/\P_{i,j})
	\]
	is positive definite on every non-zero feasible direction. Hence $-\HD$ is strictly convex on the polytope and $\dotp{\P}{\C}-\epsilon\HD(\P)$ is strictly convex, which implies uniqueness.

	If $\a_i,\b_j>0$ and a minimizer had $\P_{i,j}=0$, then for small $t>0$ the perturbation $\P_t=(1-t)\P+t\,\a\otimes\b$ remains feasible. The directional derivative of the entropic part at $t=0$ is $-\infty$ because the derivative of $r\log r$ at $0$ is $-\infty$ along a positive direction. Thus the objective decreases for small $t$, contradicting optimality. Therefore the minimizer is strictly positive.
\end{proof}

\paragraph{Smoothing effect.}
\index{entropic!smoothing}

By Proposition~\ref{prop-entropic-unique}, problem~\eqref{eq-regularized-discr} has a unique optimal solution.
This smoothing, beyond providing uniqueness, actually leads to $\MKD_\C^\epsilon(\a,\b)$ being a smooth function of $\a$, $\b$ and $\C$ whenever these variables stay in the relative interior of their domains. In finite dimension, this follows from strict convexity and the envelope theorem applied to the dual problem.
\index{envelope theorem}
\index{strict!convexity}
\index{dual!problem}
The effect of the entropy is to act as a barrier function for the positivity constraint. As we will show later, this forces the solution $\P$ to be strictly positive on the support of $\a \otimes \b$. We will also show that as $\epsilon\to+\infty$, the solution satisfies $\P \to \a \otimes \b$.

\begin{figure}[H]
\centering
\setlength{\tabcolsep}{2pt}
\begin{tabular}{@{}cccc@{}}
\includegraphics[width=.21\linewidth]{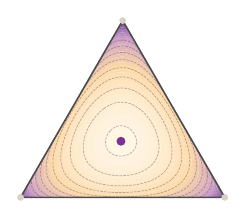} &
\includegraphics[width=.21\linewidth]{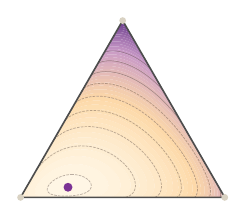} &
\includegraphics[width=.21\linewidth]{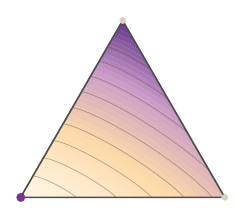} &
\includegraphics[width=.21\linewidth]{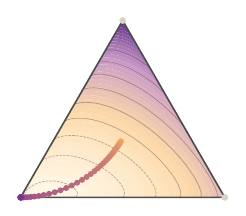} \\[-.1em]
\small large $\epsilon$ &
\small medium $\epsilon$ &
\small small $\epsilon$ &
\small entropic path \\[.35em]
\index{entropic!path}
\includegraphics[width=.21\linewidth]{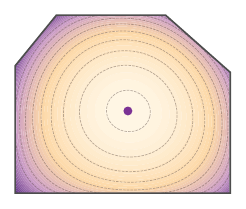} &
\includegraphics[width=.21\linewidth]{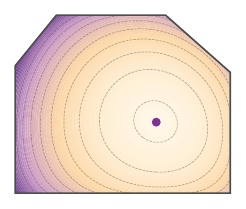} &
\includegraphics[width=.21\linewidth]{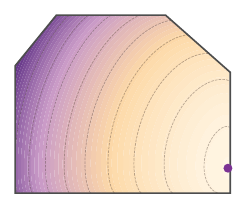} &
\includegraphics[width=.21\linewidth]{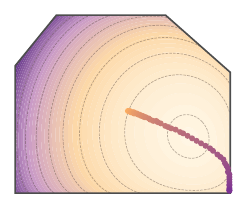} \\[-.1em]
\small LP slack barrier &
\small intermediate &
\small low temperature &
\small barrier path
\end{tabular}
\caption{Entropic regularization and slack barriers. The first row shows the penalized objective $\dotp{c}{p}+\epsilon\sum_i p_i(\log p_i-1)$ on a triangular face of the transport polytope; color and level sets represent the regularized functional itself, not only the linear part. The second row shows the analogous entropy-on-slacks objective $\ell^\top z+\epsilon H(b-Az)$ on a two-dimensional polyhedron $Az\leq b$, with $H(s)=\sum_i s_i(\log s_i-1)$. Large $\epsilon$ selects an interior reference point, while small $\epsilon$ moves the minimizer toward a low-cost face.}
\index{entropic!regularization}
\index{transportation!polytope}
\label{fig:sinkhorn-entropy-lp-geometry}
\end{figure}

\paragraph{Entropy barriers versus generic LP barriers.}
\index{barrier!entropy}
\index{barrier!LP}

For a generic linear program $\min_z \ell^\top z$ subject to $Az\leq b$, one can introduce positive slacks $s=b-Az$ and use an entropy-on-slacks penalty $H(s)=\sum_i s_i(\log s_i-1)$ as a smooth interior regularization. This is a useful analogy for Figure~\ref{fig:sinkhorn-entropy-lp-geometry}, but it is not the standard interior-point barrier for linear programming. The canonical barrier on the positive orthant is the Burg, or reverse-KL, logarithmic barrier $-\sum_i\log s_i$; it is self-concordant and therefore fits the Newton theory of interior-point methods~\cite{nesterov1994interior}. The price is that a generic Newton step solves a dense linear system, leading to cubic per-iteration scaling in the relevant number of variables or constraints. Optimal transport is special: the entropy is placed on the entries of $\P$, while the constraints are only the row and column marginals. This separable structure turns the associated Bregman projections into diagonal rescalings, hence into the Sinkhorn matrix-vector iterations developed next.
\index{Newton step}
\index{self-concordance}
\index{dense linear system}
\index{matrix-vector iteration}
\index{linear programming}
\index{interior-point method}
\index{barrier!logarithmic}
\index{Bregman!projection}


\section{Sinkhorn's Algorithm}
\index{Sinkhorn!algorithm}

Sinkhorn's algorithm is alternating normalization of rows and columns. This section derives the scaling form of the optimizer and explains why each iteration only needs matrix-vector products.
\index{scaling!form}

The underlying matrix-scaling iteration has a long history, including iterative proportional fitting and the work of Sinkhorn and Knopp~\cite{Sinkhorn64,SinkhornKnopp67,Sinkhorn67}. Its modern role in OT was transformed by Cuturi's entropic formulation~\cite{CuturiSinkhorn}: the algorithm became a practical large-scale tool for machine learning, and also changed the way OT is viewed in ML, from a mostly geometric distance to a differentiable computational primitive.
\index{scaling!iterative proportional fitting}
\index{matrix!scaling}

The following proposition shows that the solution of~\eqref{eq-regularized-discr} has a specific form, which can be parameterized using $n+m$ variables. That parameterization is therefore essentially dual, in the sense that a coupling $\P$ in $\CouplingsD(\a,\b)$ has $nm$ variables but $n+m$ constraints.

\begin{prop}[Scaling form of entropic OT]\label{prop-regularized-primal}
\index{scaling!form}
\index{entropic!OT}
$\P$ is the unique solution to~\eqref{eq-regularized-discr} if and only if there exists $(\uD,\vD) \in \RR_+^n \times \RR_+^m$ such that
\eql{\label{eq-scaling-form}
\index{scaling!form}
	\foralls (i,j) \in \range{n} \times \range{m}, \quad \P_{i,j} = \uD_i \K_{i,j} \vD_j
		\qwhereq \K_{i,j} \eqdef e^{-\frac{\C_{i,j}}{\epsilon}},
}
and $\P \in \Couplings(\a,\b)$.
\end{prop}

\begin{proof}
	Without loss of generality, we assume $\a_i,\b_j>0$ (otherwise, rows or columns with zero mass are fixed to zero and can be removed). By Proposition~\ref{prop-entropic-unique}, the minimizer is strictly positive on the remaining support.
\index{zero mass}

	We can thus ignore the positivity constraint when introducing two dual variables $\fD\in\RR^n,\gD\in\RR^m$ for each marginal constraint so that the Lagrangian of~\eqref{eq-regularized-discr} reads
\index{marginal!constraint}
	\eq{\label{eq-sinkhorn-lagrangian}
		\Lag(\P,\fD,\gD)=
		\dotp{\P}{\C}
		+\epsilon\sum_{i,j}\P_{i,j}\log(\P_{i,j})
		+ \dotp{\fD}{\a - \P\ones_m}
		+ \dotp{\gD}{\b - \transp{\P}\ones_n}.
	}
Considering first-order conditions (where we ignore the positivity constraint as explained above), we have
$$
	\frac{\partial\Lag(\P,\fD,\gD)}{\partial \P_{i,j}}= \C_{i,j} + \epsilon (\log\pa{ \P_{i,j} }+1) - \fD_i -\gD_j = 0.
$$
which results, in an optimal $\P$ coupling of the regularized problem, in the expression
$\P_{i,j}=e^{\frac{\fD_i+\gD_j - \C_{i,j}}{\epsilon}-1}$
which can be rewritten in the form provided in the proposition using non-negative vectors $\uD_i \eqdef e^{\fD_i/\epsilon-1}$ and $\vD_j \eqdef e^{\gD_j/\epsilon}$.
\end{proof}

The factorization of the optimal solution exhibited in Equation~\eqref{eq-scaling-form} can be conveniently rewritten in matrix form as $\P=\diag(\uD)\K\diag(\vD)$.
\index{scaling!form}
$\uD,\vD$ must therefore satisfy the following nonlinear equations which correspond to the mass conservation constraints inherent to $\CouplingsD(\a,\b)$,
\index{mass!conservation}
\eql{\label{eq-dualsinkhorn-constraints}
	\diag(\uD)\K\diag(\vD)\ones_m=\a,
	\qandq
	\diag(\vD)\K^\top \diag(\uD)\ones_n=\b,
}
These two equations can be further simplified, since $\diag(\vD)\ones_m$ is  $\vD$, and the multiplication of $\diag(\uD)$ times $\K \vD$ is
\eql{\label{eq-dualsinkhorn-constraints2}
	\uD \odot (\K \vD) = \a
	\qandq
	\vD \odot (\transp{\K}\uD) = \b
}
where $\odot$ corresponds to the entry-wise multiplication of vectors. This problem is known in the numerical analysis community as the matrix scaling problem (see~\cite{nemirovski1999complexity} and references therein).
\index{matrix!scaling}

The problem of normalizing a positive matrix $\K$ has a long history, from iterative proportional fitting in statistics and economics~\cite{kruithof,yule1912methods,Galichon-Entropic} to modern matrix balancing algorithms~\cite{ReviewSinkhorn,cohen2017matrix}.
\index{scaling!iterative proportional fitting}
The problem of normalizing a positive matrix $\K$ by diagonal scaling is well known, in particular when $n=m$ and $\a$ and $\b$ are uniform. This corresponds to diagonal scaling toward bistochasticity, which is a very old problem. The previous result shows that there is a unique such scaled matrix $\P$, thanks to the strong convexity of the regularized problem. The remaining question is how to compute this scaled matrix in practice. If some entries of $\K$ vanish (equivalently, if the cost matrix $\C$ can have infinite values), additional support conditions are needed; here we focus on the strictly positive case.
\index{cost matrix}

An intuitive way to try to solve these equations is to solve them iteratively, by modifying first $\uD$ so that it satisfies the left-hand side of Equation~\eqref{eq-dualsinkhorn-constraints2} and then $\vD$ to satisfy its right-hand side. These two updates define Sinkhorn's algorithm
\index{Sinkhorn!algorithm}
\eql{\label{eq-sinkhorn}
	\itt{\uD} \eqdef \frac{\a}{\K \it{\vD}}
	\qandq
	\itt{\vD} \eqdef \frac{\b}{\transp{\K}\itt{\uD}},
}
initialized with an arbitrary positive vector, for instance $\init{\vD} = \ones_m$. The division operator used above between two vectors is to be understood entry-wise. Note that a different initialization will likely lead to a different solution for $\uD,\vD$, since $\uD,\vD$ are only defined up to a multiplicative constant (if $\uD,\vD$ satisfy \eqref{eq-dualsinkhorn-constraints} then so do $\lambda\uD,\vD/\lambda$ for any $\lambda>0$).
The alternating normalization can be read directly on the coupling matrix: a row update enforces the source marginal and generally perturbs the target marginal, while the next column update does the converse.

\begin{alg}[Sinkhorn scaling]\label{alg:sinkhorn-scaling}
\textbf{Input:} Weights $\a,\b$, cost matrix $\C$, regularization $\epsilon>0$, tolerance $\mathrm{tol}$.

\textbf{Output:} Entropic coupling $\P$.

\textbf{Initialize:} Set $\K_{ij}=e^{-\C_{ij}/\epsilon}$, $\vD^{(0)}=\ones_m$, \(r_0=+\infty\), and \(k=0\).

\textbf{While} \(r_k>\mathrm{tol}\) \textbf{do}:
\begin{algblock}

\textbf{Set} \(k\leftarrow k+1\).

\(\uD^{(k)}=\frac{\a}{\K\vD^{(k-1)}}.\)

\(\vD^{(k)}=\frac{\b}{\transp{\K}\uD^{(k)}}.\)

\(\P^{(k)}=\diag(\uD^{(k)})\K\diag(\vD^{(k)}).\)

\textbf{Set} \(r_k=\max\{\norm{\P^{(k)}\ones_m-\a}_1,\norm{(\P^{(k)})^\top\ones_n-\b}_1\}\).
\end{algblock}
\algreturnskip
\textbf{Return} \(\P^{(k)}\).
\end{alg}

\begin{figure}[H]
\centering
\setlength{\tabcolsep}{1.5pt}
\begin{tabular}{@{}ccccc@{}}
\includegraphics[width=.18\linewidth]{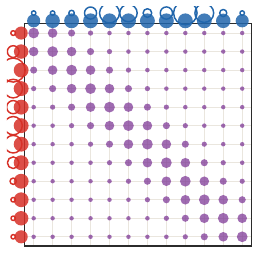} &
\includegraphics[width=.18\linewidth]{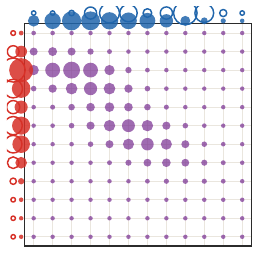} &
\includegraphics[width=.18\linewidth]{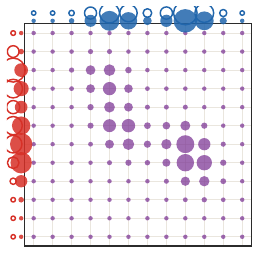} &
\includegraphics[width=.18\linewidth]{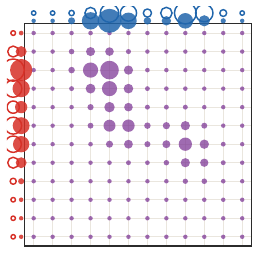} &
\includegraphics[width=.18\linewidth]{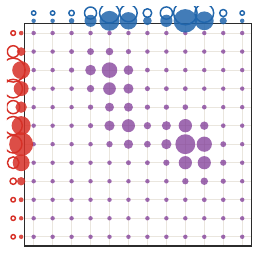} \\[-.1em]
\small initial $K$ &
\small row 1 &
\small column 1 &
\small row 2 &
\small column 2
\end{tabular}
\caption{Marginal constraints during Sinkhorn scaling on a twelve-bin one-dimensional problem. Violet circles represent the current coupling matrix, framed by the thin black box; the red source marginal is displayed on the left and the blue target marginal below the matrix. Hollow side circles show the prescribed marginals, while filled circles show the current marginals. Row normalizations align the red marginal and leave a blue defect; column normalizations align the blue marginal and leave a red defect.}
\index{row normalization}
\index{column normalization}
\index{Sinkhorn!scaling}
\index{marginal!constraint}
\label{fig:sinkhorn-marginal-errors}
\end{figure}

The same alternating projection mechanism is clearer on a dense one-dimensional discretization, where the marginal defects appear as continuous side curves.
\index{alternating!projection}

\begin{figure}[H]
\centering
\begin{tabular}{@{}cccc@{}}
\includegraphics[width=.22\linewidth]{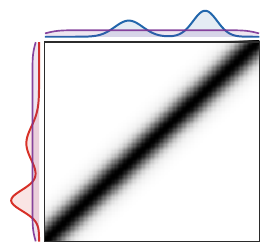} &
\includegraphics[width=.22\linewidth]{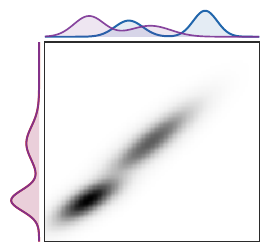} &
\includegraphics[width=.22\linewidth]{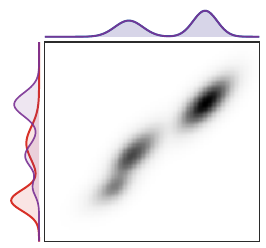} &
\includegraphics[width=.22\linewidth]{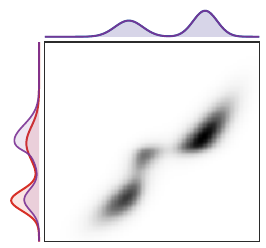} \\[-.1em]
\small initial $K$ &
\small after one row scaling &
\index{scaling!row}
\small after one column scaling &
\index{scaling!column}
\small after 12 cycles
\end{tabular}
\caption{Dense Sinkhorn scaling for one-dimensional Gaussian-mixture marginals. The grayscale matrix is the current coupling, with a thin box delimiting only the matrix and not the side marginal plots. The red and blue side curves are the prescribed source and target marginals, while the violet side curves are the current row and column sums. A row scaling makes the violet curve coincide with the red source marginal, a column scaling makes it coincide with the blue target marginal, and the alternation rapidly stabilizes both sides.}
\index{scaling!row}
\index{scaling!column}
\index{Sinkhorn!scaling}
\index{Gaussian mixture}
\label{fig:sinkhorn-continuous-marginal-scaling}
\end{figure}

After convergence, the regularization strength controls how much of the Gibbs kernel remains visible in the optimal plan.  Small $\epsilon$ produces a concentrated transport band, while larger $\epsilon$ spreads the same marginals into a smoother coupling.
\index{optimal plan}
\index{Gibbs!kernel}

\begin{figure}[H]
\centering
\begin{tabular}{@{}cccc@{}}
\includegraphics[width=.215\linewidth]{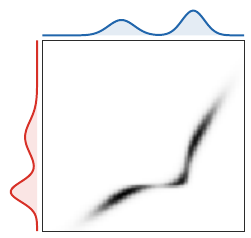} &
\includegraphics[width=.215\linewidth]{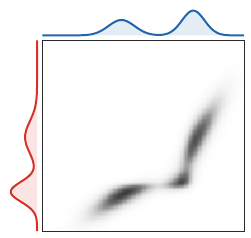} &
\includegraphics[width=.215\linewidth]{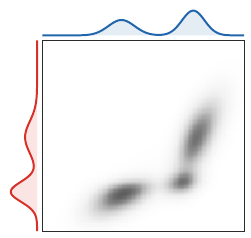} &
\includegraphics[width=.215\linewidth]{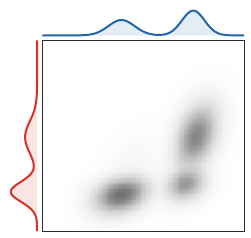} \\[-.1em]
\small $\epsilon=0.010$ &
\small $\epsilon=0.030$ &
\small $\epsilon=0.085$ &
\small $\epsilon=0.240$
\end{tabular}
\caption{Final Sinkhorn couplings for the same one-dimensional Gaussian-mixture marginals and four regularization strengths. Each boxed matrix is the converged solution of the KL-regularized problem and uses a common grayscale normalization; the side curves display the fixed source and target marginals. Decreasing $\epsilon$ sharpens the plan toward an optimal-transport graph, whereas increasing $\epsilon$ keeps more of the diffuse product-measure structure.}
\index{Gaussian mixture}
\index{product!measure}
\label{fig:sinkhorn-coupling-iterations}
\end{figure}

Chapter~\ref{sec-entropic-convergence} gives the formal convergence analysis. Before that, the following figure shows how the dual potentials stabilize along the same scaling iteration.
\index{dual!potential}

\begin{figure}[ht]
\centering
\setlength{\tabcolsep}{2pt}
\begin{tabular}{@{}cccc@{}}
\includegraphics[width=.23\linewidth]{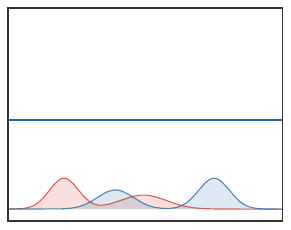} &
\includegraphics[width=.23\linewidth]{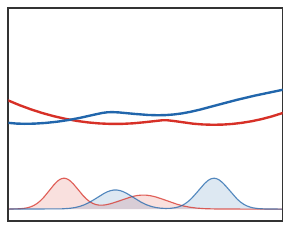} &
\includegraphics[width=.23\linewidth]{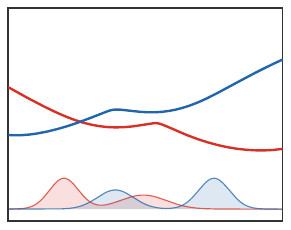} &
\includegraphics[width=.23\linewidth]{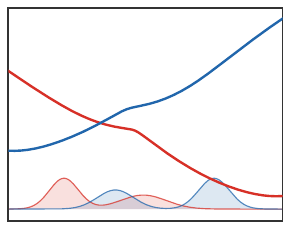} \\[-.1em]
\small $k=0$ &
\small $k=1$ &
\small $k=3$ &
\small $k=12$
\end{tabular}
\caption{KL-normalized Sinkhorn dual potentials along the scaling iteration for the same one-dimensional Gaussian-mixture setting as Figure~\ref{fig:sinkhorn-dual-potentials-epsilon}, with fixed regularization strength $\epsilon=0.045$. The bottom silhouettes show both the source histogram $\a$ in red and the target histogram $\b$ in blue, while the red and blue curves are the logarithmic scaling potentials. All panels share the same axes, making stabilization visible.}
\index{scaling!potential}
\index{Gaussian mixture}
\index{dual!potential}
\label{fig:sinkhorn-potentials-iterations}
\end{figure}

Complexity bounds for Sinkhorn and comparisons with accelerated first-order methods are discussed in~\cite{altschuler2017near,pmlr-v80-dvurechensky18a,knight2008sinkhorn}.
A chief computational advantage of Sinkhorn's algorithm, besides its simplicity, is that the only expensive step is multiplication by the Gibbs kernel. Its complexity therefore scales like $Cnm$, where $C$ is the number of Sinkhorn iterations. Chapter~\ref{sec-convergence-dual} gives a more precise convergence statement: for a fixed regularization strength $\epsilon$, the entropic dual gap has an $O(1/k)$ bound with constants proportional to $1/\epsilon$, while Hilbert-metric arguments give eventual linear convergence when the Gibbs kernel is uniformly positive. To approximate the unregularized OT value to accuracy $\delta$, one must also balance the entropic bias, which is typically $O(\epsilon)$ in finite dimension. Choosing $\epsilon$ proportional to $\delta$ and solving the entropic problem to accuracy $O(\delta)$ leads to the familiar iteration scaling of order $1/\delta^2$ for the unregularized value, up to logarithmic and cost-range factors~\cite{altschuler2017near,pmlr-v80-dvurechensky18a}.
\index{dual!gap}
\index{Sinkhorn!iteration}
\index{Sinkhorn!algorithm}
\index{linear!convergence}
\index{entropic!bias}
\index{Gibbs!kernel}
\index{dual!gap}

\begin{figure}[H]
\centering
\setlength{\tabcolsep}{3pt}
\begin{tabular}{@{}cc@{}}
\includegraphics[width=.58\linewidth]{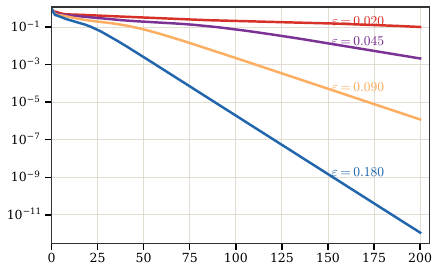} &
\raisebox{.75em}{\includegraphics[width=.31\linewidth]{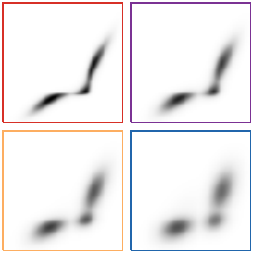}} \\[-.1em]
\small marginal violation &
\small limiting plans $P_\epsilon$
\end{tabular}
\caption{Marginal violation along Sinkhorn half-steps for four values of $\epsilon$ on the same one-dimensional Gaussian-mixture problem, together with the corresponding limiting plans. The plotted error is $\frac12(\|P_k\ones-\a\|_1+\|P_k^\top\ones-\b\|_1)$ on a logarithmic scale. The colored boxes on the right use the same colors as the convergence curves. For fixed positive $\epsilon$, the curves enter a linear regime; smaller $\epsilon$ gives a sharper transport geometry but a more peaked Gibbs kernel and slower scaling.}
\index{Sinkhorn!half-step}
\index{Gaussian mixture}
\index{Gibbs!kernel}
\label{fig:sinkhorn-linear-rate-epsilon}
\end{figure}

In many applications, however, one does not need a highly accurate optimizer for the OT subproblem. Downstream performance is often tied more to the geometric bias of the discrepancy than to exact optimality. In such settings, $C$ is often moderate.
This should be contrasted with generic interior-point methods, which use the logarithmic barrier discussed above and require solving Newton systems. For a dense transportation linear program, these algorithms typically have worst-case complexity of order $O(n^6 \log(1/\delta))$ to reach accuracy $\delta$, up to problem-dependent conditioning factors.
\index{interior-point method}
\index{barrier!logarithmic}

The second crucial aspect of Sinkhorn is that matrix-vector multiplication streams extremely well on GPU. Even better, if one is interested in computing many OT problems with a fixed cost matrix $\C$, one can replace many matrix-vector multiplications with matrix-matrix multiplications, so that the computational gain can be substantial.
\index{cost matrix}

\begin{rem}[Separable Gaussian kernels on grids]\label{rem-sinkhorn-separable-gaussian}
\index{kernel!Gaussian}
	When the samples lie on a Cartesian grid and $c(x,y)=\norm{x-y}^2$, the Gibbs kernel is Gaussian and factorizes along coordinates. If the grid has $q$ points per axis in dimension $d$, so that $N=q^d$ grid points are used, then
\index{Gibbs!kernel}
	\[
		K(x,y)=\exp\!\left(-\frac{\norm{x-y}^2}{\epsilon}\right)
		=
		\prod_{\ell=1}^d
		\exp\!\left(-\frac{(x_\ell-y_\ell)^2}{\epsilon}\right).
	\]
	Multiplication by $K$ can therefore be applied by successively multiplying along each coordinate direction, equivalently by applying one-dimensional Gaussian kernel operators along the axes. On a periodic or sufficiently padded uniform grid these are literal discrete convolutions. A direct dense one-dimensional multiplication costs $O(q^2)$ on each of the $q^{d-1}$ coordinate lines, and this is repeated for $d$ axes. Hence one Sinkhorn half-step costs
\index{Sinkhorn!half-step}
\index{kernel!Gaussian}
	\[
		O(d\,q^{d+1})=O(d\,N^{1+1/d})
	\]
	instead of $O(N^2)$. With FFT-based or truncated Gaussian convolutions, the same separability can be pushed further, but the simple tensor-product estimate already explains why grid-based Sinkhorn can scale much better than a generic dense coupling.
\end{rem}


\section{Reformulation using relative entropy}
\index{entropy!relative}

The KL formulation identifies Sinkhorn as a projection method. It also prepares the continuous and unbalanced settings, where a reference measure is essential.
\index{reference!measure}

A convenient tool to reformulate and ``normalize'' this discrete entropy is relative entropy. It is the finite-dimensional divergence that turns entropy regularization into a projection problem and admits a direct measure-theoretic extension.

\begin{defn}[Discrete relative entropy]\label{def-discrete-relative-entropy}
\index{Kullback-Leibler divergence}
\index{entropy!relative}
	For nonnegative matrices $\P,\Q$ of the same size, the generalized relative entropy, or Kullback--Leibler divergence, is
	\eql{\label{eq-kl-defn}
		\KLD(\P|\Q) \eqdef \sum_{i,j}  \P_{i,j} \log\pa{\frac{\P_{i,j}}{\Q_{i,j}}} - \P_{i,j} + \Q_{i,j}.
	}
	The convention is $0\log(0)=0$, and $\KLD(\P|\Q)=+\infty$ if there exists $(i,j)$ such that $\Q_{i,j}=0$ but $\P_{i,j} \neq 0$.
\end{defn}
For the specific case of comparing probability distributions, where $\P$ and $\Q$ have the same total mass, this further simplifies to
\eq{
	\KLD(\P|\Q) = \sum_{i,j}  \P_{i,j} \log\pa{\frac{\P_{i,j}}{\Q_{i,j}}}.
}
For the reference matrix $\Q=\ones_{n \times m}$, one has
\eq{
	-\KLD(\P|\ones_{n \times m})
	=
	\HD(\P)+\sum_{i,j}\P_{i,j}-n\,m .
}
On fixed-mass couplings the last two terms are constant, so KL regularization with reference $\ones_{n\times m}$ is equivalent to subtracting Shannon--Boltzmann entropy.
$\KLD$ is a particular instance of both a $\phi$-divergence (as defined in Section~\ref{sec-phi-div}) and a Bregman divergence; up to standard affine rescalings, it is the canonical overlap between these two families. This special property is at the heart of the fact that this regularization leads to elegant algorithms and a tractable mathematical analysis.
\index{Bregman!divergence}
\index{phi-divergence}

\begin{prop}[Relative entropy is distance-like]\label{prop-kl-distance-like}
\index{entropy!relative}
	Let $\P,\Q\in\RR_+^{n\times m}$ have the same total mass and assume $\Q_{i,j}>0$ on the support of $\P$. Then $\KLD(\P|\Q)\geq0$, with equality if and only if $\P=\Q$.
\end{prop}
\begin{proof}
	Write $\phi(s)=s\log s-s+1$. Convexity gives $\phi(s)\geq \phi(1)+\phi'(1)(s-1)=0$, and strict convexity gives equality only at $s=1$. Hence
\index{strict!convexity}
	\[
		\KLD(\P|\Q)=\sum_{i,j}\Q_{i,j}\phi(\P_{i,j}/\Q_{i,j})\geq0,
	\]
	with equality only when $\P_{i,j}/\Q_{i,j}=1$ for all entries with $\Q_{i,j}>0$. The support convention rules out positive $\P$ where $\Q=0$, so equality is equivalent to $\P=\Q$.
\end{proof}

Equivalently, when $\P$ and $\Q$ have the same total mass, it reads
\eq{
	\KLD(\P|\Q) = \sum_{i,j}  \phi( \P_{i,j} / \Q_{i,j} ) \Q_{i,j}.
}
where $\phi(s)=s\log(s)$. For any convex $\phi$ such that $\phi(1)=0$, one has indeed by Jensen
\index{Jensen inequality}
\eq{
	\sum_{i,j}  \phi( \P_{i,j} / \Q_{i,j} ) \Q_{i,j} \geq \phi( \sum_{i,j}  \P_{i,j} / \Q_{i,j}  \Q_{i,j}  )
	= \phi( \sum_{i,j} \P_{i,j} ) =\phi(1)= 0.
}

For instance, one can use as reference measure the tensor product $\a \otimes \b = (\a_i \b_j)_{i,j}$ and consider
\index{reference!measure}
\eql{\label{eq-regularized-discr-rescaled}
	\umin{\P \in \CouplingsD(\a,\b)}
		\dotp{\P}{\C} + \epsilon \KLD(\P|\a \otimes \b).
}
This normalization will matter again for unbalanced OT, where changing the reference measure is no longer merely a harmless notational choice.
\index{reference!measure}
\index{unbalanced!OT}

For the balanced problem with fixed positive marginals, however, the choice of tensor-product reference does not affect the selected coupling: it only adds a constant to the objective, as shown in the following proposition. In particular,~\eqref{eq-regularized-discr-rescaled} and~\eqref{eq-regularized-discr} have the same unique solution.

\begin{prop}[Reference measure shift for KL]\label{prop-kl-shift}
\index{reference!measure}
\index{Kullback-Leibler divergence}
	After removing zero-mass rows and columns, assume that $\a,\a'\in\simplex_n$ and $\b,\b'\in\simplex_m$ have positive entries. For every $\P \in \CouplingsD(\a,\b)$, one has
\eq{
	\KLD(\P|\a \otimes \b) =
	\KLD(\P|\a' \otimes \b') - \KLD(\a|\a') - \KLD(\b|\b').
}
Consequently, for fixed positive marginals, changing the positive tensor-product reference measure only adds a constant on the transport polytope. In particular,~\eqref{eq-regularized-discr-rescaled} and~\eqref{eq-regularized-discr} have the same unique minimizer.
\index{transportation!polytope}
\index{reference!measure}
\end{prop}

\begin{proof}
Expanding the logarithm and using the marginal constraints gives
\index{marginal!constraint}
\begin{align*}
	\KLD(\P|\a\otimes\b)
	&=
	\KLD(\P|\a'\otimes\b')
	+
	\sum_i \a_i\log\frac{\a_i'}{\a_i}
	+
	\sum_j \b_j\log\frac{\b_j'}{\b_j} \\[.25em]
	&=
	\KLD(\P|\a'\otimes\b')
	-\KLD(\a|\a')-\KLD(\b|\b').
\end{align*}
\end{proof}

The tensor-product reference is nevertheless useful when supports vary, because it makes explicit which entries are allowed to vanish. It is also the normalization that passes cleanly to the continuous formulation below, where the reference measure is $\alpha\otimes\beta$ rather than an ambient Lebesgue measure.
\index{Lebesgue measure}
\index{reference!measure}

\begin{figure}[H]
\centering
\begin{tabular}{@{}ccc@{}}
\includegraphics[width=.30\linewidth]{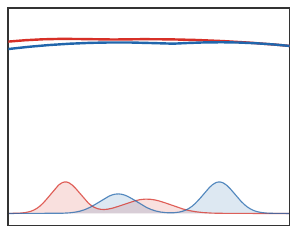} &
\index{dual!potential}
\includegraphics[width=.30\linewidth]{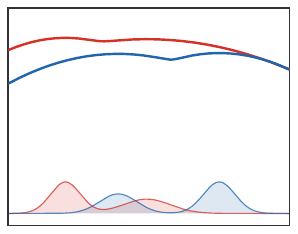} &
\includegraphics[width=.30\linewidth]{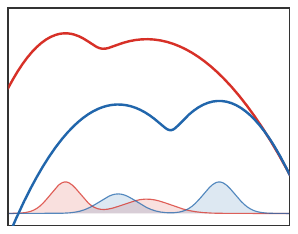} \\[-.1em]
\small $\epsilon=0.010$ &
\small $\epsilon=0.045$ &
\small $\epsilon=0.20$
\end{tabular}
\caption{KL-normalized Sinkhorn dual potentials for the same one-dimensional Gaussian-mixture histograms. The bottom silhouettes show both the source histogram $\a$ in red and the target histogram $\b$ in blue. The red and blue curves are the logarithmic scalings $\fD_i^\epsilon=\epsilon\log u_i^\epsilon$ and $\gD_j^\epsilon=\epsilon\log v_j^\epsilon$, with gauge $\dotp{\fD^\epsilon}{\a}=0$, computed from $\P_{i,j}=u_i\,\a_i\b_j e^{-\C_{i,j}/\epsilon}\,v_j$. The squared Euclidean cost is normalized by its median positive entry, and all panels use the same axes; increasing $\epsilon$ turns the hard $c$-transform geometry into smoother log-sum-exp potentials.}
\index{Gaussian mixture}
\index{log-sum-exp}
\index{dual!potential}
\index{c-transform}
\label{fig:sinkhorn-dual-potentials-epsilon}
\end{figure}

The KL-normalized formulation also makes the two limiting regimes of the regularization parameter transparent.

\begin{prop}[Convergence with $\epsilon$]\label{prop-convergence-eps}
	Assume, after removing zero-mass rows and columns, that $\a$ and $\b$ are positive and that $\C$ is finite.
	The unique solution $\P_\epsilon$ of~\eqref{eq-regularized-discr} converges to the optimal solution with maximal entropy within the set of all optimal solutions of the Kantorovich problem, namely
\index{Kantorovich!problem}
\eql{\label{eq-entropy-conv-1}
	\P_\epsilon \overset{\epsilon \rightarrow 0}{\longrightarrow}
	\uargmin{\P} \enscond{ -\HD(\P) }{
		\P \in \CouplingsD(\a,\b), \dotp{\P}{\C} = \MKD_\C(\a,\b)
	}
}
so that in particular
\eq{
	\MKD_\C^\epsilon(\a,\b) \overset{\epsilon \rightarrow 0}{\longrightarrow} \MKD_\C(\a,\b).
}
Moreover,
\eql{\label{eq-entropy-conv-2}
	\P_\epsilon \overset{\epsilon \rightarrow \infty}{\longrightarrow}
	\a \otimes \b.
}
\end{prop}

\begin{proof}
	\textbf{Case $\epsilon \rightarrow 0$.}
	 We consider a sequence $(\epsilon_\ell)_\ell$ such that $\epsilon_\ell \rightarrow 0$ and $\epsilon_\ell > 0$.
	We denote $\P_\ell$ the solution of~\eqref{eq-regularized-discr} for $\epsilon=\epsilon_\ell$.
	Since $\CouplingsD(\a,\b)$ is bounded, we can extract a sequence (that we do not relabel for the sake of simplicity) such that $\P_\ell \rightarrow \P^\star$. Since $\CouplingsD(\a,\b)$ is closed, $\P^\star \in \CouplingsD(\a,\b)$. We consider any $\P$ such that $\dotp{\C}{\P} = \MKD_\C(\a,\b)$. Using the equivalent KL-normalized formulation~\eqref{eq-regularized-discr-rescaled}, optimality of $\P$ and $\P_\ell$ for their respective optimization problems (for $\epsilon=0$ and $\epsilon=\epsilon_\ell$) gives
	\eql{\label{eq-proof-gamma-conv}
			0 \leq \dotp{\C}{\P_\ell} - \dotp{\C}{\P} \leq \epsilon_\ell ( \KLD(\P|\a \otimes \b)-\KLD(\P_\ell|\a \otimes \b) ).
		}
	Since $\KLD$ is continuous, taking the limit $\ell \rightarrow +\infty$ in this expression shows that
	$\dotp{\C}{\P^\star} = \dotp{\C}{\P}$ so that $\P^\star$ is a feasible point of~\eqref{eq-entropy-conv-1}. Furthermore, dividing by $\epsilon_\ell$ in~\eqref{eq-proof-gamma-conv} and taking the limit shows that
		$\KLD(\P^\star|\a \otimes \b) \leq \KLD(\P|\a \otimes \b)$, which shows that $\P^\star$ is a solution of~\eqref{eq-entropy-conv-1}. Since the solution $\P_0^\star$ to this program is unique by strict convexity of $\KLD(\cdot|\a \otimes \b)$ on the optimal face, one has $\P^\star = \P_0^\star$, and the whole sequence is converging.
\index{strict!convexity}

\textbf{Case $\epsilon \rightarrow +\infty$.}
Subtracting $\min_{i,j}\C_{i,j}$ from the cost changes every feasible objective by the same constant, so it does not change the minimizer. We can therefore assume $\C\geq0$. Evaluating the energy at $\a \otimes \b$ (which belongs to the constraint set $\CouplingsD(\a,\b)$), one has
	\eq{
		\dotp{\C}{\P_\epsilon} + \epsilon \KLD(\P_\epsilon|\a \otimes \b) \leq \dotp{\C}{\a \otimes \b} + \epsilon \times 0
	}
	and since $\dotp{\C}{\P_\epsilon} \geq 0$, this leads to
	\eq{
		\KLD(\P_\epsilon|\a \otimes \b) \leq \epsilon^{-1} \dotp{\C}{\a \otimes \b} \leq \frac{\norm{\C}_\infty}{\epsilon}
	}
	so that $\KLD(\P_\epsilon|\a \otimes \b) \rightarrow 0$ and thus $\P_\epsilon \rightarrow \a \otimes \b$ since $\KLD$ is a valid divergence.
\end{proof}

\begin{figure}[H]
\centering
\begin{tabular}{@{}cccc@{}}
\includegraphics[width=.225\linewidth]{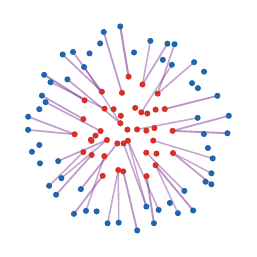} &
\includegraphics[width=.225\linewidth]{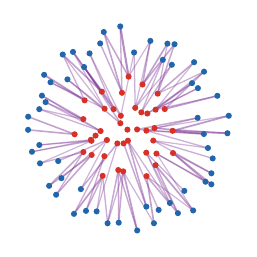} &
\includegraphics[width=.225\linewidth]{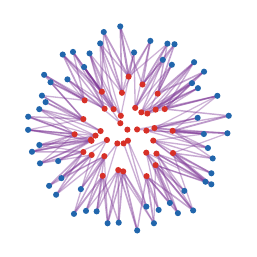} &
\includegraphics[width=.225\linewidth]{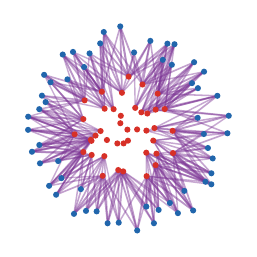} \\[-.1em]
\small $\epsilon=0.018$ &
\small $\epsilon=0.045$ &
\small $\epsilon=0.120$ &
\small $\epsilon=0.320$
\end{tabular}
\caption{Entropically regularized couplings between the canonical red disk and blue annulus point clouds for four fixed regularization strengths. The squared Euclidean cost is normalized by its median and each plan is computed by log-domain Sinkhorn until the marginal residual is below the notebook tolerance. Violet segments display the largest visible entries of the computed plan, with thickness and opacity proportional to transported mass. The plans are strictly positive for every $\epsilon>0$, but the visible mass pattern evolves from nearly radial and sparse to diffuse as $\epsilon$ increases.}
\index{log-domain Sinkhorn}
\index{marginal!residual}
\label{fig:sinkhorn-plan-epsilon}
\end{figure}


\section{General Formulation}

The continuous formulation replaces matrices by measures and discrete KL by relative entropy. It is the static endpoint problem solved by Sinkhorn; the next section explains how it is obtained from an optimization problem on stochastic paths.
\index{entropy!relative}

One can consider arbitrary measures by replacing the discrete entropy with the relative entropy with respect to the product measure $\d\al\otimes\d\be(x,y) \eqdef \d\al(x)\d\be(y)$, and propose a regularized counterpart to~\eqref{eq-mk-generic} using
\index{product!measure}
\eql{\label{eq-entropic-generic}
	\MK_\c^\epsilon(\al,\be) \eqdef
	\umin{\pi \in \Couplings(\al,\be)}
		\int_{\X \times \Y} c(x,y) \d\pi(x,y) + \epsilon \KL(\pi|\al\otimes\be)
}
The measure-theoretic definition is the exact analogue of Definition~\ref{def-discrete-relative-entropy}, with absolute continuity replacing the entrywise support condition.

\begin{defn}[Relative entropy of measures]\label{def-measure-relative-entropy}
\index{Kullback-Leibler divergence}
\index{entropy!relative}
	For nonnegative measures $\pi$ and $\xi$ on $\X\times\Y$, the relative entropy is
	\eql{\label{eq-defn-rel-entropy}
		\KL(\pi|\xi) \eqdef \int_{\X \times \Y} \log\Big( \frac{\d \pi}{\d\xi}(x,y) \Big) \d\pi(x,y)
		  + \int_{\X \times \Y} (\d\xi(x,y)-\d\pi(x,y)).
	}
		By convention, $\KL(\pi|\xi)=+\infty$ if $\pi$ is not absolutely continuous with respect to $\xi$.
\end{defn}
For fixed balanced marginals, the specific product reference in the entropy only matters up to additive constants, exactly as in Proposition~\ref{prop-kl-shift}, provided the alternative reference marginals are mutually absolutely continuous with $\al$ and $\be$. Its support and absolute-continuity structure are nevertheless essential: they determine which couplings have finite entropy. This distinction becomes substantive in the unbalanced setting, where the marginal constraints no longer freeze the additive terms.
\index{marginal!constraint}
\index{absolute continuity}
The path-space meaning of this static problem is developed next. The main point is that a noisy reference dynamics first defines a probability law on trajectories; after optimizing out the conditional law of the path given its endpoints, only the endpoint coupling remains.
\index{conditional law}
\index{path-space!formulation}
\index{endpoint coupling}
\index{path!space}


\section{Path-Space Schr\"odinger Problem}
\index{path-space!formulation}
\index{Schrodinger!problem}
\label{sec-path-space-schrodinger}

Schr\"odinger's reciprocal problem is naturally posed on paths rather than on endpoint pairs. The Sinkhorn problem appears after the path law is reduced to its two endpoint marginals.

\paragraph{Unregularized path-space transport.}
\index{path-space!formulation}
\index{path-space!transport}

Throughout this section, both endpoint measures live on the same state space $\X$; this is the setting relevant to dynamic transport and Brownian bridges. The static coupling formulation above also makes sense between two different spaces $\X$ and $\Y$.
\index{bridge!Brownian}
Let $\Om=C([0,1];\X)$ be a path space and denote by
\index{path!space}
\[
	e_t:\Om\to\X,\qquad e_t(\omega)=\omega_t
\]
the evaluation maps. A probability $M\in\Pp(\Om)$ is a law of random trajectories. Imposing the endpoint constraints
\[
	(e_0)_\sharp M=\al,
	\qquad
	(e_1)_\sharp M=\be
\]
means that the random path starts with law $\al$ and ends with law $\be$. Given a path action $\Aa:\Om\to[0,+\infty]$, the unregularized path-space problem is
\index{path-space!formulation}
\index{path!space}
\begin{equation}\label{eq-path-space-ot}
	\inf_{M\in\Pp(\Om)}
	\enscond{
		\int_\Om \Aa(\omega)\,\d M(\omega)
	}{
		(e_0)_\sharp M=\al,\ (e_1)_\sharp M=\be
	}.
\end{equation}
For the quadratic Wasserstein geometry on $\RR^d$, one takes
\[
	\Aa(\omega)=
	\begin{cases}
	\displaystyle \int_0^1 \norm{\dot\omega_t}^2\,\d t,
		& \text{if }\omega\text{ is absolutely continuous},\\[.4em]
	+\infty, & \text{otherwise}.
	\end{cases}
\]
This is the Lagrangian version of the Benamou--Brenier formulation recalled in Remark~\ref{rem-bb-path-space}.
\index{Benamou-Brenier}
\index{Benamou-Brenier!formulation}
\index{path!space}

The endpoint cost induced by the action is
\begin{equation}\label{eq-path-action-endpoint-cost}
	c_\Aa(x,y)
	\eqdef
	\inf_{\omega\in\Om}
	\enscond{\Aa(\omega)}{e_0(\omega)=x,\ e_1(\omega)=y}.
\end{equation}
For the quadratic action above, the minimizing path is the straight segment
$\omega_t=(1-t)x+ty$, and $c_\Aa(x,y)=\norm{x-y}^2$.

\begin{prop}[Endpoint reduction of path-space transport]\label{prop-path-space-ot-endpoint-reduction}
\index{path-space!formulation}
\index{path-space!transport}
	Assume that minimizing paths in~\eqref{eq-path-action-endpoint-cost} can be selected measurably, or more generally that the infimum can be approximated by measurable selections. Then~\eqref{eq-path-space-ot} has the same value as the Kantorovich problem
\index{measurable selection}
\index{Kantorovich!problem}
\index{path!space}
	\[
		\inf_{\pi\in\Couplings(\al,\be)}
		\int_{\X\times\X} c_\Aa(x,y)\,\d\pi(x,y).
	\]
	Moreover, if $\pi^\star$ is an optimal endpoint coupling and $\omega^{x,y}$ is an optimal path from $x$ to $y$, then
\index{endpoint coupling}
	\[
		M^\star
		=
		\int_{\X\times\X}\delta_{\omega^{x,y}}\,\d\pi^\star(x,y)
	\]
	is an optimal path law.
\end{prop}

\begin{proof}
	Let $M$ be any feasible path law and set $\pi=(e_0,e_1)_\sharp M$. Then $\pi\in\Couplings(\al,\be)$ and, by the definition of $c_\Aa$,
	\[
		\int_\Om \Aa(\omega)\,\d M(\omega)
		\geq
		\int_{\X\times\X} c_\Aa(x,y)\,\d\pi(x,y).
	\]
	This proves that the path-space value is at least the Kantorovich value. Conversely, given a coupling $\pi$ and a measurable selection $(x,y)\mapsto\omega^{x,y}$ with action arbitrarily close to $c_\Aa(x,y)$, the mixture of Dirac path laws
\index{measurable selection}
\index{path-space!formulation}
\index{path!space}
	\[
		M=\int\delta_{\omega^{x,y}}\,\d\pi(x,y)
	\]
	has endpoints $\al$ and $\be$ and action equal, up to the selected approximation error, to $\int c_\Aa\,\d\pi$. Optimizing over $\pi$ and letting the approximation error vanish proves equality. If exact minimizing paths are selected for an optimal $\pi^\star$, the displayed $M^\star$ is optimal.
\end{proof}

Thus the classical coupling problem can be read as a path problem where the endpoints are chosen first and the connecting path is then selected with minimal action. The Schr\"odinger problem changes exactly this last step: between two endpoints, it keeps the random fluctuations of a reference dynamics instead of collapsing onto a deterministic least-action path.
\index{Schrodinger!problem}

\paragraph{Entropic path-space problem.}
\index{path-space!formulation}
\index{path-space!problem}

Let $\Rr^\epsilon\in\Pp(\Om)$ be a reference path law, for instance a Brownian or Langevin dynamics at noise level $\epsilon$. Schr\"odinger's dynamic problem is the entropy projection
\index{Langevin dynamics}
\begin{equation}\label{eq-schrodinger-path-space}
\index{path-space!formulation}
	\mathrm{SB}_\epsilon(\al,\be)
	\eqdef
	\inf_{M\in\Pp(\Om)}
	\enscond{
		\epsilon\KL(M|\Rr^\epsilon)
	}{
		(e_0)_\sharp M=\al,\ (e_1)_\sharp M=\be
	}.
\end{equation}
This is the dynamic Schr\"odinger bridge problem. It asks for the most likely path law, relative to the prior dynamics $\Rr^\epsilon$, among all path laws matching the observed endpoint marginals. This viewpoint goes back to Schr\"odinger's reciprocal problem~\cite{Schroedinger31} and is surveyed in modern OT language in~\cite{leonard2012schrodinger,LeonardSchroedinger}; stochastic-control formulations are developed in~\cite{chen2016relation}.
\index{Schrodinger!bridge}

\paragraph{Viscous Benamou--Brenier formulations.}
\index{viscous Benamou-Brenier formulation}
The Schr\"odinger interpolation also admits dynamic optimal-control descriptions, which are viscous analogues of the Benamou--Brenier formula. Write $\sigma$ for the diffusion normalization in this paragraph; it is independent of the entropic temperature $\epsilon$ used elsewhere. Formally, for smooth positive densities and with the convention
\index{Schrodinger!interpolation}
\index{Benamou-Brenier}
\[
	\partial_t\rho_t+\diverg(\rho_t v_t)
	=
	\frac{\sigma}{2}\Delta\rho_t,
\]
one minimizes, among curves joining the prescribed endpoint densities, the kinetic action
\index{kinetic action}
\[
	\int_0^1\!\int
	\frac12\norm{v_t(x)}^2\rho_t(x)\d x\d t .
\]
Equivalently, one can absorb the diffusion into the velocity by writing
\[
	u_t=v_t-\frac{\sigma}{2}\nabla\log\rho_t,
	\qquad
	\partial_t\rho_t+\diverg(\rho_t u_t)=0.
\]
Expanding $v_t=u_t+\frac{\sigma}{2}\nabla\log\rho_t$ and using the continuity equation for $(\rho_t,u_t)$ gives
\index{continuity equation}
	\begin{align*}
		\int_0^1\!\int
		\frac12\norm{v_t}^2\rho_t\,\d x\,\d t
		&=
		\int_0^1\!\int
		\left(
			\frac12\norm{u_t}^2
		+
		\frac{\sigma^2}{8}\norm{\nabla\log\rho_t}^2
	\right)\rho_t\,\d x\,\d t
		\\
		&\quad+
		\frac{\sigma}{2}
		\left[
			\int\rho_1\log\rho_1\,\d x
		-
			\int\rho_0\log\rho_0\,\d x
		\right].
	\end{align*}
Since the entropy term depends only on the prescribed endpoints, the same minimizers are obtained from the modified Benamou--Brenier action
\index{Benamou-Brenier}
\[
	\int_0^1\!\int
	\left(
		\frac12\norm{u_t(x)}^2
		+
		\frac{\sigma^2}{8}\norm{\nabla\log\rho_t(x)}^2
	\right)
	\rho_t(x)\d x\d t .
\]
Thus the Schr\"odinger bridge is a least-action interpolation with both transport kinetic energy and a Fisher-information penalty. If one instead writes the viscous equation with diffusion coefficient $\sigma\Delta\rho_t$, the same formula is obtained after replacing $\sigma$ above by $2\sigma$, so the Fisher coefficient becomes $\sigma^2/2$.
\index{Fisher information}
\index{Schrodinger!bridge}

The reduction to endpoint couplings follows from disintegration. We write
\index{disintegration}
\index{endpoint coupling}
\[
	\Rr^\epsilon(\d\omega)
	=
	\int \Rr^{\epsilon,x,y}(\d\omega)\,
		\Rr^\epsilon_{01}(\d x,\d y),
	\qquad
	\Rr^\epsilon_{01}\eqdef(e_0,e_1)_\sharp\Rr^\epsilon,
\]
where $\Rr^{\epsilon,x,y}$ is the reference bridge conditioned on endpoints $(x,y)$. Similarly, for any feasible $M$, write
\[
	M(\d\omega)
	=
	\int M^{x,y}(\d\omega)\,\pi(\d x,\d y),
	\qquad
	\pi\eqdef(e_0,e_1)_\sharp M.
\]

\begin{prop}[Endpoint reduction of the Schr\"odinger problem]\label{prop-schrodinger-endpoint-reduction}
\index{Schrodinger!problem}
	Assume that the regular conditional laws above exist and that the relative-entropy chain rule applies, with value $+\infty$ when absolute continuity fails. Then
\index{conditional law}
\index{absolute continuity}
\index{entropy!relative}
	\begin{equation}\label{eq-schrodinger-static-endpoint}
		\mathrm{SB}_\epsilon(\al,\be)
		=
		\inf_{\pi\in\Couplings(\al,\be)}
		\epsilon\KL(\pi|\Rr^\epsilon_{01}).
	\end{equation}
	For a fixed endpoint coupling $\pi$ with finite $\KL(\pi|\Rr^\epsilon_{01})$, the minimizing path law is the mixture of reference bridges
\index{endpoint coupling}
	\begin{equation}\label{eq-schrodinger-bridge-mixture}
		M^\pi
		=
		\int \Rr^{\epsilon,x,y}\,\d\pi(x,y).
	\end{equation}
	Consequently, if $\pi^\star$ solves~\eqref{eq-schrodinger-static-endpoint}, then $M^{\pi^\star}$ solves the path-space problem~\eqref{eq-schrodinger-path-space}.
\index{path-space!formulation}
\index{path!space}
\end{prop}

\begin{proof}
	If $M$ has finite relative entropy with respect to $\Rr^\epsilon$, then $\pi=(e_0,e_1)_\sharp M$ is necessarily absolutely continuous with respect to $\Rr^\epsilon_{01}$. The chain rule for relative entropy gives
\index{entropy!relative}
	\begin{equation}\label{eq-kl-chain-rule-path}
		\KL(M|\Rr^\epsilon)
		=
		\KL(\pi|\Rr^\epsilon_{01})
		+
		\int_{\X\times\X}
			\KL(M^{x,y}|\Rr^{\epsilon,x,y})
			\,\d\pi(x,y).
	\end{equation}
	The second term is nonnegative and vanishes exactly when $M^{x,y}=\Rr^{\epsilon,x,y}$ for $\pi$-almost every $(x,y)$. Thus, once an endpoint coupling $\pi$ with finite $\KL(\pi|\Rr^\epsilon_{01})$ is fixed, the best path law is~\eqref{eq-schrodinger-bridge-mixture}, and the remaining minimization is precisely~\eqref{eq-schrodinger-static-endpoint}. If no such endpoint coupling exists, both sides are $+\infty$.
\index{Schrodinger!bridge}
\index{endpoint coupling}
\end{proof}

This proposition makes the connection between the dynamic and static views precise. The static Schr\"odinger problem stores only the endpoints of the optimal random trajectories; the full bridge is recovered by filling each transported endpoint pair with the corresponding reference bridge. In the zero-noise limit, these bridges concentrate on least-action paths and one recovers the unregularized path-space transport problem, hence the Monge--Kantorovich problem~\cite{leonard2012schrodinger}.
\index{Kantorovich!problem}
\index{path-space!formulation}
\index{path-space!transport}
\index{Schrodinger!problem}

\paragraph{Brownian bridges and Sinkhorn couplings.}
\index{bridge!Brownian}
\index{Sinkhorn!coupling}

For $\X=\RR^d$, take $\Rr^\epsilon$ to be a Brownian reference dynamics, up to the conventional scaling of $\epsilon$. Its endpoint law has a heat-kernel density of the form
\[
	p_\epsilon(x,y)\propto \exp\!\left(-\frac{\norm{x-y}^2}{\epsilon}\right)
\]
after absorbing harmless constants into $\epsilon$. More generally, suppose that the endpoint prior can be written, up to a normalization constant independent of $\pi$, as
\[
	\Rr^\epsilon_{01}(\d x,\d y)
	\propto
	\exp\!\left(-\frac{c(x,y)}{\epsilon}\right)
	\al(\d x)\be(\d y).
\]
This includes the usual heat-kernel reference after rewriting it with respect to $\al\otimes\be$, whenever the one-time endpoint densities are fixed and mutually absolutely continuous. The additional one-body density factors only add constants under the marginal constraints.
\index{marginal!constraint}
Then, for every $\pi\in\Couplings(\al,\be)$,
\[
	\epsilon\KL(\pi|\Rr^\epsilon_{01})
	=
	\int c(x,y)\,\d\pi(x,y)
	+
	\epsilon\KL(\pi|\al\otimes\be)
	+
	\mathrm{constant},
\]
where the constant does not depend on $\pi$. Hence~\eqref{eq-schrodinger-static-endpoint} is exactly the continuous Sinkhorn problem~\eqref{eq-entropic-generic}, up to an additive constant in the value. The optimal static coupling $\pi_\epsilon^\star$ is the endpoint law of the most likely controlled noisy dynamics, while the associated path law is
\[
	M_\epsilon^\star
	=
	\int \Rr^{\epsilon,x,y}\,\d\pi_\epsilon^\star(x,y).
\]
In words, Sinkhorn computes which endpoints should be paired; the path-space Schr\"odinger bridge then connects each paired endpoint by a Brownian bridge rather than by a deterministic straight line.
\index{path-space!formulation}
\index{Schrodinger!bridge}
\index{bridge!Brownian}
\index{path!space}

Figure~\ref{fig:sinkhorn-path-space-bridges} illustrates this endpoint-to-path lifting on a small discrete example. The red atoms form a compact source cloud and the blue atoms surround them. The first panel uses the unregularized OT coupling and zero bridge noise; the next panels increase $\epsilon$, which simultaneously softens the endpoint coupling and amplifies the Brownian fluctuations between paired endpoints.
\index{path-space!formulation}
\index{endpoint coupling}

\begin{alg}[Endpoint-to-path Schr\"odinger lift]\label{alg:schrodinger-endpoint-path-lift}
\index{Schrodinger!bridge}
\index{bridge!Brownian}
\textbf{Input:} Endpoint laws $\alpha,\beta$, cost $c$, regularization $\epsilon>0$, reference bridges $\Rr^{\epsilon,x,y}$.

\textbf{Output:} Schr\"odinger path law $M_\epsilon^\star$.

\textbf{Let} $\pi_\epsilon^\star$ be a minimizer of the static entropic endpoint problem:
\(\pi_\epsilon^\star \in \argmin_{\pi\in\Couplings(\al,\be)} \int c\,\d\pi+\epsilon\KL(\pi|\al\otimes\be).\)

\textbf{For} each endpoint pair $(x,y)$ sampled from $\pi_\epsilon^\star$ \textbf{do}:
\begin{algblock}

\textbf{Draw} bridge path:
\(\omega\sim\Rr^{\epsilon,x,y}.\)
\end{algblock}
\algreturnskip
\textbf{Return} \(M_\epsilon^\star= \int \Rr^{\epsilon,x,y}\,\d\pi_\epsilon^\star(x,y).\)
\end{alg}

\begin{figure}[H]
\centering
\setlength{\tabcolsep}{2pt}
\begin{tabular}{@{}cccc@{}}
\includegraphics[width=.235\linewidth]{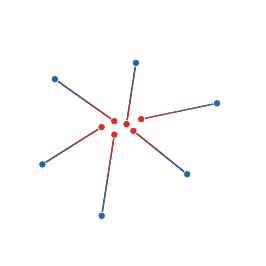} &
\index{path!space}
\includegraphics[width=.235\linewidth]{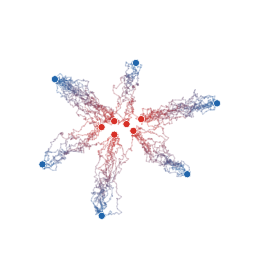} &
\includegraphics[width=.235\linewidth]{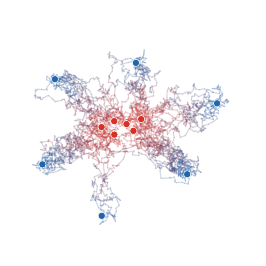} &
\includegraphics[width=.235\linewidth]{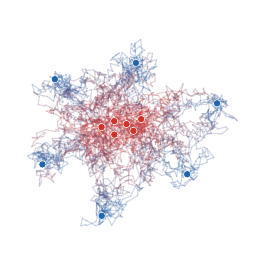} \\[-.1em]
\small $\epsilon=0$ &
\small $\epsilon=0.035$ &
\small $\epsilon=0.10$ &
\small $\epsilon=0.28$
\end{tabular}
\caption{Endpoint couplings lifted to Brownian bridges. Each panel transports six equally weighted red atoms concentrated near the center to six blue atoms distributed around them. For the displayed value of $\epsilon$, the endpoint coupling is either the exact quadratic OT plan ($\epsilon=0$) or the entropic Sinkhorn plan ($\epsilon>0$). A total of 60 paths is sampled by a multinomial allocation with probabilities $\pi_{ij}$, and each selected pair $(x_i,y_j)$ is filled by a Brownian bridge with covariance proportional to $\epsilon t(1-t)$. Colors encode time from red to blue.}
\index{endpoint coupling}
\index{bridge!Brownian}
\label{fig:sinkhorn-path-space-bridges}
\end{figure}

\begin{defn}[Mutual information]\label{def-mutual-information}
\index{mutual information}
	If $(X,Y)\sim\pi$ have marginals $X\sim\al$ and $Y\sim\be$, the mutual information of the pair is
	\[
		\Ii(X,Y)\eqdef \KL(\pi|\al\otimes\be).
	\]
	It is nonnegative and vanishes if and only if $X$ and $Y$ are independent.
\end{defn}
With this terminology, the entropic problem~\eqref{eq-entropic-generic} is equivalent to
\index{mutual information}
\[
	\inf_{X\sim\al,\;Y\sim\be}
	\EE\bigl(c(X,Y)\bigr)+\epsilon\Ii(X,Y).
\]
Large $\epsilon$ therefore favors nearly independent endpoints, while small $\epsilon$ suppresses endpoint randomness and recovers an optimal Monge--Kantorovich coupling in the limit. When the unregularized quadratic problem has a Brenier map, this limiting coupling is deterministic.
\index{Brenier!map}


\section{Dual of Sinkhorn}
\index{Sinkhorn!dual}

The dual point of view replaces couplings by potentials and soft $c$-transforms. It is the right formulation for stabilized implementations and differentiation.
\index{soft!c-transform}

\paragraph{Discrete dual.}

The following proposition details the dual problem associated with the KL-normalized formulation~\eqref{eq-regularized-discr-rescaled}. This formulation has the same minimizer as~\eqref{eq-regularized-discr}; its optimal value is shifted by the constant $\epsilon\HD(\a)+\epsilon\HD(\b)$.
\index{dual!problem}

\begin{prop}[Dual of entropic OT]
\index{entropic!OT}
The optimal value of~\eqref{eq-regularized-discr-rescaled} is
\eql{\label{eq-dual-formulation}
	\umin{\P \in \CouplingsD(\a,\b)}
		\dotp{\P}{\C}+\epsilon\KLD(\P|\a\otimes\b)
	=
	\umax{\fD \in \RR^n,\gD \in \RR^m}
		 \dotp{\fD}{\a} + \dotp{\gD}{\b}
	- \epsilon \sum_{i,j} \exp\pa{
		\frac{\fD_i+\gD_j-\C_{i,j}}{\epsilon}
	} \a_i \b_j + \epsilon.
}
The optimal $(\fD,\gD)$ are linked to scalings $(\uD,\vD)$ appearing in~\eqref{eq-scaling-form} through
\index{scaling!form}
\eql{\label{eq-entropy-pd}
		\uD_i=\a_i e^{\fD_i/\epsilon}
		\qandq
		\vD_j=\b_j e^{\gD_j/\epsilon}.
}
\end{prop}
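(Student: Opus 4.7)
The plan is to derive the dual by Lagrangian duality applied to the $\KLD(\P|\a\otimes\b)$-normalized form of the regularized problem~\eqref{eq-regularized-discr-rescaled}, which by Proposition~\ref{prop-kl-shift} (combined with the marginal constraints $\P\ones_m=\a$ and $\P^\top\ones_n=\b$) has the same optimizer $\P^\star$ as~\eqref{eq-regularized-discr} and differs from it only by additive constants depending on $\a$ and $\b$. Introducing multipliers $\fD\in\RR^n$ and $\gD\in\RR^m$ for the two marginal constraints, the Lagrangian reads
\begin{equation*}
\Lag(\P,\fD,\gD)=\dotp{\P}{\C}+\epsilon\,\KLD(\P\,|\,\a\otimes\b)+\dotp{\fD}{\a-\P\ones_m}+\dotp{\gD}{\b-\P^\top\ones_n}.
\end{equation*}
The positivity constraint $\P\geq 0$ may be dropped because the $\KLD$ term acts as a barrier, precisely as argued in the proof of Proposition~\ref{prop-regularized-primal} (a vanishing entry $\P^\star_{i,j}=0$ contradicts optimality since the directional derivative along $\a\otimes\b-\P^\star$ diverges). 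Strong duality holds: the primal is strictly convex, and Slater's condition is satisfied by $\a\otimes\b\in\CouplingsD(\a,\b)$.

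Next, I would compute the inner minimum in $\P$ via the first-order condition $\partial_{\P_{i,j}}\Lag=0$, i.e.\ $\C_{i,j}+\epsilon\log(\P_{i,j}/(\a_i\b_j))-\fD_i-\gD_j=0$, which yields the closed form
\begin{equation*}
\P^\star_{i,j}=\a_i\b_j\exp\!\Big(\tfrac{\fD_i+\gD_j-\C_{i,j}}{\epsilon}\Big).
\end{equation*}
Substituting back into $\Lag$ and using the identity $\epsilon\log(\P^\star_{i,j}/(\a_i\b_j))=\fD_i+\gD_j-\C_{i,j}$, the term $\dotp{\P^\star}{\C}$ cancels against the cost contribution hidden in the $\KLD$, and the potential terms $\sum_{i,j}\P^\star_{i,j}(\fD_i+\gD_j)$ cancel against $\dotp{\fD}{\P^\star\ones_m}+\dotp{\gD}{\P^{\star\top}\ones_n}$. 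What remains is $\dotp{\fD}{\a}+\dotp{\gD}{\b}-\epsilon\sum_{i,j}\P^\star_{i,j}+\epsilon\sum_{i,j}\a_i\b_j$, which, using $\sum_{i,j}\a_i\b_j=1$ and substituting the explicit form of $\P^\star_{i,j}$ in the remaining sum, is precisely the claimed dual objective~\eqref{eq-dual-formulation}.

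Finally, for the scaling identification~\eqref{eq-entropy-pd}: the primal recovery formula can be rewritten as
\begin{equation*}
\P^\star_{i,j}=\bigl(\a_i\,e^{\fD_i/\epsilon}\bigr)\,\K_{i,j}\,\bigl(\b_j\,e^{\gD_j/\epsilon}\bigr),\qquad \K_{i,j}=e^{-\C_{i,j}/\epsilon},
\end{equation*}
which matches the factorization $\P^\star=\diag(\uD)\K\diag(\vD)$ of Proposition~\ref{prop-regularized-primal}, forcing $\uD_i=\a_i e^{\fD_i/\epsilon}$ and $\vD_j=\b_j e^{\gD_j/\epsilon}$ (up to the usual multiplicative degree of freedom $\uD\leftarrow\lambda\uD$, $\vD\leftarrow\vD/\lambda$). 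The main obstacle in this proof is the rigorous treatment of the positivity constraint and the justification of strong duality under the KL reparameterization; once that is established, the remaining derivation is an essentially mechanical substitution, and the identification of the scalings is immediate.
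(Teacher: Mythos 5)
Your proof is correct and follows essentially the same route as the paper's: Lagrangian duality for the marginal constraints, exchange of min and max, and closed-form minimization over $\P$. The only cosmetic difference is that the paper packages the inner minimization as the Legendre transform $\KLD^*(\cdot|\a\otimes\b)$ with $\phi^*(s)=e^s-1$, whereas you solve the first-order condition and substitute back by hand (and you are slightly more careful than the paper in justifying the removal of the positivity constraint and in acknowledging that the stated dual really corresponds to the $\KLD(\cdot|\a\otimes\b)$-normalized primal).
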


\begin{proof}
We introduce Lagrange multipliers and consider
\eq{
	\umin{\P \geq 0} \umax{\fD,\gD} \dotp{\C}{\P} + \epsilon \KLD(\P|\a \otimes \b) + \dotp{\a-\P\ones}{\fD} + \dotp{\b-\P^\top\ones}{\gD}.
}
	Finite-dimensional convex duality allows us to exchange the minimum over $\P$ with the maximum over $(\fD,\gD)$, giving
	\eq{
		 \umax{\fD,\gD}
		 \dotp{\fD}{\a} + \dotp{\gD}{\b}
		 +
		 \epsilon \umin{\P \geq 0}
		 \pa{
			\KLD(\P|\a \otimes \b)
			-
			\dotp{\frac{\fD\oplus\gD-\C}{\epsilon}}{\P}
		 }
			=
			\dotp{\fD}{\a} + \dotp{\gD}{\b} - \epsilon \KLD^*\pa{ \frac{\fD\oplus\gD-\C}{\epsilon}|\a \otimes \b }.
	}
	One concludes by using~\eqref{eq-legendre} for $\phi(r)=r \log(r)-r+1$
	\eq{
		\KLD^*(H|\a \otimes \b) = \sum_{i,j} \phi^*(H_{i,j}) \a_i \b_j.
	}
	Indeed, the scalar maximization
	\[
		\phi^*(s)=\sup_{r\geq0}\{rs-r\log r+r-1\}
	\]
	has first-order condition $s-\log r=0$, hence $r=e^s$ and $\phi^*(s)=e^s-1$.
\end{proof}

\paragraph{Discrete soft $c$-transforms.}
\index{soft!c-transform}

Since the dual problem~\eqref{eq-dual-formulation} is smooth and concave, one can perform alternating block maximization. For a fixed $\gD$, maximizing with respect to $\fD$ leads to the following equation after zeroing the derivative with respect to $\fD$:
\index{dual!problem}
\eq{
	\a_i - e^{\frac{\fD_i}{\epsilon}} \a_i \sum_j \exp\pa{
		\frac{\gD_j-\C_{i,j}}{\epsilon}
	} \b_j = 0
}
which leads to the explicit solution
\eq{
	\fD_i = -\epsilon \log \sum_j \exp\pa{ \frac{\gD_j-\C_{i,j}}{\epsilon} } \b_j.
}
The log-sum-exp form is best read as a smoothed minimum. This gives the entropic analogue of the hard $c$-transform.

\begin{defn}[Soft-min and discrete soft $c$-transform]\label{def-discrete-soft-c-transform}
\index{soft!minimum}
\index{soft!c-transform}
\index{c-transform}
		For $h\in\RR^m$ and weights $\b\in\simplex_m$, the weighted soft-min at temperature $\epsilon>0$ is
		\[
			{\min}_{\b}^\epsilon(h) \eqdef -\epsilon \log \sum_j e^{-h_j/\epsilon} \b_j.
		\]
		It converges to $\min_j h_j$ as $\epsilon\to0$. Given a cost matrix $\C$, the discrete soft $c$-transforms are
	\eql{\label{eq-soft-c-1}
		\fD_i = {\min}_{\b}^\epsilon( \C_{i,\cdot} - \gD )
	}
	and
	\eql{\label{eq-soft-c-2}
		\gD_j = {\min}_{\a}^\epsilon( \C_{\cdot,j} - \fD ).
	}
\end{defn}
Exponentiating these iterations recovers exactly the Sinkhorn algorithm. These iterations, however, become unstable for small $\epsilon$. To apply the algorithm in this regime, one needs to stabilize it using the celebrated log-sum-exp trick. This follows from noticing that, similarly to the minimum operator, one has
\index{log-sum-exp}
\index{Sinkhorn!algorithm}
\eq{
	{\min}_{\b}^\epsilon(h-\text{cst})={\min}_{\b}^\epsilon(h)-\text{cst}
}
and to replace the computation of ${\min}_{\b}^\epsilon(h)$ by its stabilized version (equal when using infinite precision computation) ${\min}_{\b}^\epsilon(h-\min(h)) + \min(h)$.

\begin{alg}[Log-domain Sinkhorn by soft transforms]\label{alg:log-domain-sinkhorn}
\textbf{Input:} Weights $\a,\b$, cost matrix $\C$, regularization $\epsilon>0$, tolerance $\mathrm{tol}$.

\textbf{Output:} Entropic coupling $\P$ computed from stabilized potentials.

\textbf{Initialize:} Set $\gD^{(0)}=0$, \(\eta_0=+\infty\), and \(k=0\).

\textbf{While} \(\eta_k>\mathrm{tol}\) \textbf{do}:
\begin{algblock}

\textbf{Set} \(k\leftarrow k+1\).

\textbf{Compute} stabilized soft transform:
\(\fD_i^{(k)} = -\epsilon\log\sum_j \exp\!\left(\frac{\gD_j^{(k-1)}-\C_{ij}}{\epsilon}\right)\b_j.\)

\textbf{Compute} stabilized reverse soft transform:
\(\gD_j^{(k)} = -\epsilon\log\sum_i \exp\!\left(\frac{\fD_i^{(k)}-\C_{ij}}{\epsilon}\right)\a_i.\)

\textbf{Set} \(\eta_k=\max\{\norm{\fD^{(k)}-\fD^{(k-1)}}_\infty,\norm{\gD^{(k)}-\gD^{(k-1)}}_\infty\}\), with the first term ignored for \(k=1\).
\end{algblock}
\algreturnskip
\textbf{Return} \(\P_{ij} = \a_i\b_j \exp\!\left(\frac{\fD_i^{(k)}+\gD_j^{(k)}-\C_{ij}}{\epsilon}\right).\)
\end{alg}

\begin{figure}[ht]
\centering
\setlength{\tabcolsep}{2pt}
\begin{tabular}{@{}cccc@{}}
\small hard transform & \small $\epsilon=.55$ & \small $\epsilon=.14$ & \small $\epsilon=.035$ \\[-.15em]
\includegraphics[width=.235\linewidth]{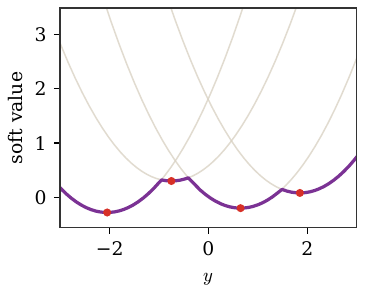} &
\index{soft!c-transform}
\includegraphics[width=.235\linewidth]{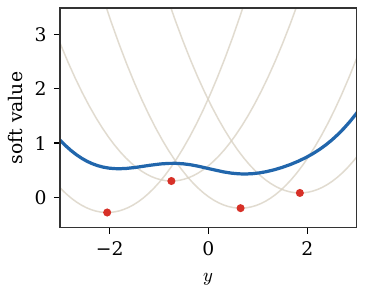} &
\index{c-transform}
\includegraphics[width=.235\linewidth]{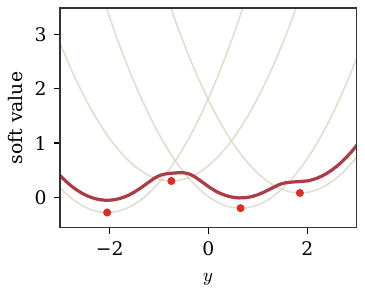} &
\includegraphics[width=.235\linewidth]{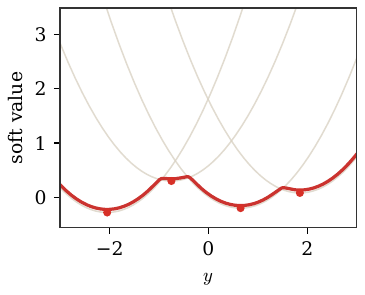}
\end{tabular}
\caption{Soft $c$-transforms for decreasing temperatures. The hard transform is the lower envelope of shifted cost functions, while a positive $\epsilon$ replaces the pointwise minimum by a log-sum-exp soft minimum. As $\epsilon$ decreases, the soft transform sharpens and approaches the non-smooth hard envelope.}
\index{soft!c-transform}
\index{c-transform}
\index{log-sum-exp}
\index{soft!transform}
\index{soft!minimum}
\label{fig:sinkhorn-soft-c-transform-epsilon}
\end{figure}

\paragraph{Continuous dual and soft-transforms.}
\index{continuous!dual}
\index{soft!transform}

For general, not necessarily discrete, measures $(\al,\be)$, the KL-regularized problem~\eqref{eq-entropic-generic} has the concave dual
\eql{\label{eq-dual-sinkh-cont}
	\MK_\c^\epsilon(\al,\be)
	=
	\usup{f\in\Cc(\Xx),\,g\in\Cc(\Yy)}
	\mathcal D_\epsilon(f,g),
}
where
\eql{\label{eq-dual-sinkhorn-objective}
	\mathcal D_\epsilon(f,g)
	\eqdef
	\int_{\Xx} f(x)\d\al(x)
	+
	\int_{\Yy} g(y)\d\be(y)
	-
	\epsilon
	\int_{\Xx\times\Yy}
	\pa{
		e^{\frac{f(x)+g(y)-c(x,y)}{\epsilon}}-1
	}
	\d\al(x)\d\be(y).
}
This is the smooth counterpart of the hard feasibility constraint $f\oplus g\leq c$ from the Kantorovich dual: violations are penalized exponentially and disappear in the limit $\epsilon\to0$.

The corresponding soft $c$-transforms are the exact block maximizers of this dual objective. They are the continuous log-integral counterparts of Definition~\ref{def-discrete-soft-c-transform}.

\begin{defn}[Continuous soft $c$-transforms]\label{def-continuous-soft-c-transform}
\index{soft!c-transform}
\index{soft!transform}
	For $f\in\Cc(\Xx)$ and $g\in\Cc(\Yy)$, define
	\begin{align}
		f^{c,\epsilon}(y)
		&\eqdef
		-\epsilon\log\!\left(
			\int_{\Xx}
			e^{\frac{f(x)-c(x,y)}{\epsilon}}\d\al(x)
		\right),
		\qquad y\in\Yy, \label{eq-soft-c-cont-f}\\
		g^{\bar c,\epsilon}(x)
		&\eqdef
		-\epsilon\log\!\left(
			\int_{\Yy}
			e^{\frac{g(y)-c(x,y)}{\epsilon}}\d\be(y)
		\right),
		\qquad x\in\Xx . \label{eq-soft-c-cont-g}
	\end{align}
\end{defn}
In the case of discrete measures, these formulas reduce to~\eqref{eq-soft-c-1} and~\eqref{eq-soft-c-2}. The same calculus also gives the entropic semi-discrete problem: when one measure is discrete, one alternates between a finite-dimensional potential and an integral soft transform, often estimated stochastically.
\index{soft!transform}
\index{semi-discrete!OT}

\begin{prop}[Existence and uniqueness of entropic dual potentials]\label{prop-entropic-dual-potentials}
\index{dual!potential}
	Assume that $\Xx$ and $\Yy$ are compact and that $c$ is continuous. The dual problem~\eqref{eq-dual-sinkh-cont} has solutions, and the set of solutions is of the form
\index{dual!problem}
	\[
		(f^\star+\lambda,g^\star-\lambda),
		\qquad \lambda\in\RR .
	\]
\end{prop}

\begin{proof}
	Normalize potentials by imposing $\int f\d\al=0$. Replacing any pair $(f,g)$ by the corresponding soft $c$-transforms does not decrease the dual objective, because each soft transform is the exact maximizer in one block variable. For transformed potentials, the oscillations are bounded by the oscillation of the cost:
\index{soft!c-transform}
\index{soft!transform}
	\[
		\norm{f}_V+\norm{g}_V
		\leq
		2(\sup c-\inf c).
	\]
	Moreover the modulus of continuity of the soft transforms is controlled by that of $c$; for instance
\index{soft!transform}
	\[
		\abs{g^{\bar c,\epsilon}(x)-g^{\bar c,\epsilon}(x')}
		\leq
		\sup_y\abs{c(x,y)-c(x',y)}.
	\]
	After normalization, maximizing sequences are therefore uniformly bounded and equicontinuous. Arzel\`a--Ascoli gives a uniformly converging subsequence, and continuity of~\eqref{eq-dual-sinkhorn-objective} gives a maximizer.

	For uniqueness, use strict convexity of
\index{strict!convexity}
	\[
		H\mapsto \int e^{H/\epsilon}\d(\al\otimes\be)
	\]
	on the image of $(f,g)\mapsto H=f\oplus g-c$, modulo constants. If two optimal pairs exist, their midpoint is also optimal; strict convexity forces the two functions $f\oplus g$ to agree $\al\otimes\be$-almost everywhere. Since the potentials are continuous on compact supports, this implies that $f-f'$ is constant and $g-g'$ is the opposite constant.
\index{strict!convexity}
\end{proof}

\begin{rem}[Convexity properties of soft transforms]\label{rem-soft-transform-convexity}
\index{soft!transform}
	The log-sum-exp part behaves like a smoothed maximum and preserves convexity. Since the soft transform takes the negative of this quantity after inserting the cost, it preserves the usual $c$-concavity structure. In particular, for the bilinear cost $c(x,y)=-\dotp{x}{y}$, the transform $f^{c,\epsilon}$ is concave for any $f$. Therefore, for the quadratic cost $c(x,y)=\norm{x-y}^2/2$, the optimal potentials have the form $f^\star(x)=\norm{x}^2/2-\phi^\star(x)$ and $g^\star(y)=\norm{y}^2/2-\psi^\star(y)$, where $\phi^\star$ and $\psi^\star$ are convex.
\index{log-sum-exp}
\index{cost!quadratic}
\index{c-concavity}
\end{rem}

\begin{rem}[Gaussian marginals]\label{rem-sinkhorn-gaussian-marginals}
\index{Gaussian!marginals}
	For $c(x,y)=\norm{x-y}^2$ and Gaussian marginals, the soft transforms preserve quadratic functions, because products and convolutions of Gaussian functions remain Gaussian. Hence optimal entropic potentials are quadratic and the optimal entropic coupling is Gaussian. Section~\ref{sec-gaussian-sinkhorn} makes this finite-dimensional closure explicit.
\index{entropic!potential}
\index{soft!transform}
\end{rem}


\section{Other Convex Regularizers}
\index{convex!regularizer}
\label{sec-sinkhorn-other-regularizers}

KL regularization is the case that leads to multiplicative Sinkhorn scalings. Replacing the KL divergence by another density-ratio penalty keeps the same transport constraints but changes the scalar law linking the optimal density to the dual potentials.
\index{Sinkhorn!scaling}
\index{dual!potential}
\index{density!ratio}

Let $\phi$ be an entropy function in the sense of Definition~\ref{def_entropy}, and recall the $\phi$-divergence $\Divergm_\phi$ from Definition~\ref{def_divergence}. For $\epsilon>0$, define the $\phi$-regularized transport value
\index{entropy!function}
\index{phi-divergence}
\eql{\label{eq-phi-regularized-ot}
\index{phi-divergence regularized OT}
	\MK_{\c,\phi}^{\epsilon}(\al,\be)
	\eqdef
	\umin{\pi\in\Couplings(\al,\be)}
	\int_{\Xx\times\Yy} c(x,y)\d\pi(x,y)
	+
	\epsilon\Divergm_\phi(\pi|\al\otimes\be).
}

\begin{prop}[Dual and density law for $\phi$-regularized OT]\label{prop-phi-regularized-ot-dual}
\index{phi-divergence regularized OT}
	Under the usual Fenchel--Rockafellar qualification assumptions, for instance compact spaces, continuous $c$, and finite value in~\eqref{eq-phi-regularized-ot}, one has
\index{duality!Fenchel-Rockafellar}
\index{Fenchel duality}
	\eql{\label{eq-phi-regularized-ot-dual}
		\MK_{\c,\phi}^{\epsilon}(\al,\be)
		=
		\usup{f\in\Cc(\Xx),\,g\in\Cc(\Yy)}
		\int f\d\al+\int g\d\be
		-
		\epsilon
		\int
		\phi^{*,\geq 0}\!\left(\frac{f(x)+g(y)-c(x,y)}{\epsilon}\right)
		\d\al(x)\d\be(y).
	}
	If an optimal plan has density $r^\star=\frac{\d\pi^\star}{\d(\al\otimes\be)}$ and optimal potentials $(f^\star,g^\star)$, then
\index{optimal plan}
	\[
		\frac{f^\star(x)+g^\star(y)-c(x,y)}{\epsilon}
		\in
		\partial\phi(r^\star(x,y))
		\qquad
		\al\otimes\be\text{-a.e.}
	\]
	In the smooth interior this reads
	\[
		r^\star(x,y)
		=
		(\phi')^{-1}
		\!\left(\frac{f^\star(x)+g^\star(y)-c(x,y)}{\epsilon}\right).
	\]
\end{prop}

\begin{proof}
	Introduce dual variables $(f,g)$ for the two marginal constraints. For fixed $(f,g)$, the minimization over $\pi$ gives
\index{marginal!constraint}
	\[
		\int f\d\al+\int g\d\be
		+
		\inf_{\pi\in\Mm_+(\Xx\times\Yy)}
		\left\{
		\int\big(c-(f\oplus g)\big)\d\pi
		+
		\epsilon\Divergm_\phi(\pi|\al\otimes\be)
		\right\}.
	\]
	Using the Legendre formula~\eqref{eq-legendre} for the convex functional $\Divergm_\phi(\cdot|\al\otimes\be)$, the infimum equals
\index{convex!function}
	\[
		-\epsilon
		\int
		\phi^{*,\geq0}\!\left(\frac{f(x)+g(y)-c(x,y)}{\epsilon}\right)
		\d\al(x)\d\be(y),
	\]
	which gives~\eqref{eq-phi-regularized-ot-dual}. Equality in the Fenchel inequality is equivalent to the subgradient inclusion
\index{phi-divergence regularized OT}
	\[
		\frac{f^\star\oplus g^\star-c}{\epsilon}
		\in
		\partial\phi(r^\star),
	\]
	and inversion of $\phi'$ gives the density law when $\phi$ is differentiable and the optimizer is in the interior of its domain.
\end{proof}

For the KL entropy $\phi(r)=r\log r-r+1$, one has $\phi^{*,\geq0}(s)=e^s-1$. Taking this parameter to be the Sinkhorn temperature $\epsilon$ in~\eqref{eq-phi-regularized-ot-dual} recovers exactly the continuous Sinkhorn dual~\eqref{eq-dual-sinkhorn-objective}. Other choices replace the exponential law by another scalar transfer function:
\index{Sinkhorn!dual}
\index{continuous!Sinkhorn dual}
\index{phi-divergence regularized OT}
\[
\begin{array}{lll}
	\phi(r)=r\log r-r+1 &\Rightarrow& r^\star=e^s,\\[.25em]
	\phi(r)=r-\log r-1 &\Rightarrow& r^\star=(1-s)^{-1}\quad (s<1),\\[.25em]
	\phi(r)=\frac12(r-1)^2 &\Rightarrow& r^\star=(1+s)_+,
\end{array}
\qquad
s\eqdef\frac{f^\star\oplus g^\star-c}{\epsilon}.
\]

\begin{figure}[ht]
\centering
\setlength{\tabcolsep}{2pt}
\begin{tabular}{@{}ccc@{}}
\includegraphics[width=.31\linewidth]{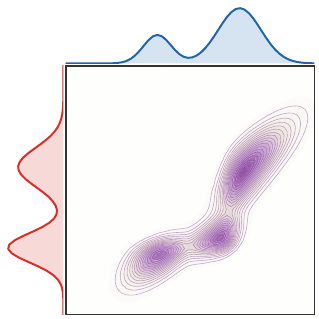} &
\includegraphics[width=.31\linewidth]{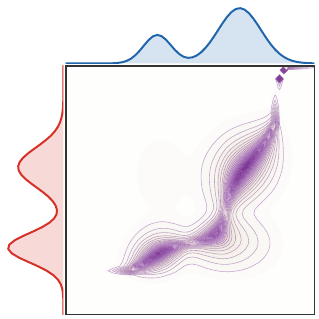} &
\includegraphics[width=.31\linewidth]{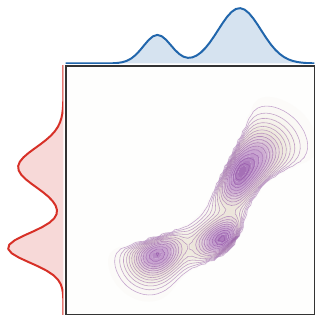} \\[-.1em]
\small $\phi(r)=r\log r-r+1$ &
\small $\phi(r)=r-\log r-1$ &
\small $\phi(r)=\frac12(r-1)^2$
\end{tabular}
\caption{Density-ratio regularizers and coupling support. The red and blue side curves are fixed one-dimensional Gaussian-mixture marginals. Dense violet level sets display the square root of the coupling density on a common scale, using the smaller regularization strength $\epsilon=.06$. KL regularization gives the usual diffuse positive plan, the Burg barrier keeps a positive but differently tailed support, and the quadratic density penalty can set entries exactly to zero through its positive-part law.}
\index{Gaussian mixture}
\index{density!ratio}
\label{fig:sinkhorn-entropic-versus-quadratic-regularization}
\end{figure}

\paragraph{Bregman vs. $\phi$-divergence regularization.}
\index{Bregman!regularization}
\index{phi-divergence}

The previous construction regularizes OT by a density-ratio divergence. This differs from using a Bregman divergence generated by a convex functional on the space of measures.
\index{convex!function}
\index{Bregman!divergence}
\index{reference!measure}
\index{density!ratio}

\Needspace{7\baselineskip}
\begin{defn}[Measure Bregman divergence]\label{def-measure-bregman-divergence}
\index{Bregman!divergence}
\index{first variation}
	If $\Phi$ is a differentiable convex functional on a convex class of nonnegative measures and $\xi$ is a reference measure in its domain, the measure Bregman divergence generated by $\Phi$ is
	\begin{equation}\label{eq-measure-bregman-divergence}
		B_\Phi(\pi|\xi)
		\eqdef
		\Phi(\pi)-\Phi(\xi)
		-
		\int \delta\Phi(\xi)\d(\pi-\xi),
	\end{equation}
	where $\delta\Phi(\xi)$ is the first variation and the formula is understood whenever the right-hand side is well-defined.
\end{defn}
In finite dimension this reduces to Definition~\ref{def-bregman-divergence}. The KL divergence is special because it is simultaneously a density-ratio divergence and a Bregman divergence.
\index{first variation}
\index{Wasserstein!gradient flow}
\index{Bregman!divergence}
\index{density!ratio}

\begin{prop}[Dual comparison: Bregman vs. density-ratio penalties]\label{prop-bregman-phi-dual-comparison}
	Fix the marginals $\alpha,\beta$ and set $\xi\eqdef\alpha\otimes\beta$. Let $\Phi$ be a convex Gateaux-differentiable functional on nonnegative measures on $\Xx\times\Yy$. Its convex conjugate is, for a continuous test function $u$,
\index{nonnegative measure}
\index{convex!conjugate}
	\[
		\Phi^*(u)
		\eqdef
		\sup_{\pi\geq0}
		\left\{
			\int u\,\d\pi-\Phi(\pi)
		\right\}.
	\]
	Define the Bregman-regularized value, using the same product reference as the density-ratio penalty,
\index{density!ratio}
	\[
		\MK_{\c,\Phi}^{\epsilon}(\alpha,\beta)
		\eqdef
		\inf_{\pi\in\Couplings(\alpha,\beta)}
		\int c\,\d\pi+\epsilon B_\Phi(\pi|\xi).
	\]
	Assume that Fenchel duality is exact for this constrained problem, as happens in finite-dimensional discretizations and, more generally, under standard compactness and lower-semicontinuity hypotheses. Then
\index{Fenchel duality}
	\eql{\label{eq-bregman-regularized-ot-dual}
		\MK_{\c,\Phi}^{\epsilon}(\alpha,\beta)
		=
		\sup_{f,g}
		\int f\d\alpha+\int g\d\beta
		-
		\epsilon
		\left[
			\Phi^*\!\left(
				\delta\Phi(\xi)+\frac{f\oplus g-c}{\epsilon}
			\right)
			-
			\Phi^*(\delta\Phi(\xi))
		\right].
	}
	If $(f^\star,g^\star)$ and $\pi^\star$ are optimal and the solution is interior, then
	\[
		\delta\Phi(\pi^\star)
		=
		\delta\Phi(\xi)+\frac{f^\star\oplus g^\star-c}{\epsilon}.
	\]
	By contrast, the density-ratio formulation~\eqref{eq-phi-regularized-ot} has the scalar-integral dual~\eqref{eq-phi-regularized-ot-dual} and the pointwise density law
\index{phi-divergence regularized OT}
\index{density!ratio}
	\[
		\frac{f^\star\oplus g^\star-c}{\epsilon}
		\in
		\partial\phi\!\left(
			\frac{\d\pi^\star}{\d(\alpha\otimes\beta)}
		\right).
	\]
\end{prop}

\begin{proof}
	Using~\eqref{eq-measure-bregman-divergence}, the Bregman-regularized objective can be written, up to a constant independent of $\pi$, as
\index{Bregman!divergence}
	\[
		\epsilon\Phi(\pi)
		+
		\int\big(c-\epsilon\delta\Phi(\xi)\big)\d\pi
		-\epsilon\Phi(\xi)+\epsilon\int\delta\Phi(\xi)\d\xi .
	\]
	Introduce dual potentials $(f,g)$ for the two marginal constraints. The inner minimization over nonnegative measures $\pi$ gives
\index{nonnegative measure}
\index{marginal!constraint}
\index{dual!potential}
	\[
		\inf_{\pi\geq0}
		\left\{
			\epsilon\Phi(\pi)
			+
			\int
			\big(c-f\oplus g-\epsilon\delta\Phi(\xi)\big)
			\d\pi
		\right\}
		=
		-\epsilon
		\Phi^*\!\left(
			\delta\Phi(\xi)+\frac{f\oplus g-c}{\epsilon}
		\right).
	\]
	Fenchel equality at $\xi$ gives
	\[
		-\Phi(\xi)+\int\delta\Phi(\xi)\d\xi
		=
		\Phi^*(\delta\Phi(\xi)),
	\]
	which yields~\eqref{eq-bregman-regularized-ot-dual}. Equality in Fenchel's inequality gives the optimality condition for $\pi^\star$. The density-ratio dual and density law are exactly those of Proposition~\ref{prop-phi-regularized-ot-dual}. Placing the two formulas side by side shows the structural difference: Bregman regularization translates the reference measure in the dual coordinate $\delta\Phi$, whereas $\phi$-regularization applies a scalar nonlinearity to the density with respect to the moving product reference $\alpha\otimes\beta$.
\index{phi-divergence regularized OT}
\index{Bregman!regularization}
\index{reference!measure}
\index{density!ratio}
\end{proof}

When $\Phi$ is separable with respect to a fixed dominating measure $\xi_0$, say $\Phi(\pi)=\int h(\d\pi/\d\xi_0)\d\xi_0$ with $\xi\ll\xi_0$, the Bregman optimality condition becomes
\[
	h'\!\left(\frac{\d\pi^\star}{\d\xi_0}\right)
	=
	h'\!\left(\frac{\d\xi}{\d\xi_0}\right)
	+
	\frac{f^\star\oplus g^\star-c}{\epsilon}.
\]
This is an additive update in entropy coordinates. The density-ratio formulation instead uses the scalar law associated with $\phi$ relative to $\alpha\otimes\beta$. These two laws coincide for the KL entropy, where $h'(r)=\log r$ turns additive dual shifts into multiplicative scalings. The next proposition shows that, under natural smoothness assumptions, this is the only overlap.
\index{density!ratio}

\begin{prop}[KL is the common Bregman and $\phi$ case]\label{prop-kl-only-bregman-phi}
	Let $\omega$ be a finite reference measure with a nontrivial measurable subset. Work on probability measures $\alpha=p\omega$ and $\beta=q\omega$ whose densities are bounded above and below away from $0$. Assume that $\Phi$ is twice Gateaux differentiable along bounded zero-mass density perturbations, and that $\phi\in C^2(0,+\infty)$ is convex with $\phi(1)=0$. If
\index{probability measure}
\index{reference!measure}
	\[
		B_\Phi(\alpha|\beta)=\Divergm_\phi(\alpha|\beta)
	\]
	for all such $\alpha,\beta$, then there exist $c\geq0$ and $a\in\RR$ such that
	\[
		\phi(t)=c\,t\log t+a(t-1).
	\]
	Hence the common divergence is $c\KL(\alpha|\beta)$, and $\Phi(p\omega)$ differs from $c\int p\log p\,\d\omega$ by an affine functional on the positive probability simplex.
\index{probability simplex}
\end{prop}

\begin{proof}
	Fix $\beta=q\omega$ and perturb it by $\alpha_t=(q+t h)\omega$, where $h$ is bounded, $\int h\d\omega=0$, and $t$ is small enough that $q+t h>0$. Differentiating twice at $t=0$ gives
	\[
		D^2\Phi(q)[h,h]
		=
		\phi''(1)\int \frac{h^2}{q}\d\omega .
	\]
	Indeed the left-hand side is the second variation of the Bregman error, while
	\[
		\Divergm_\phi(\alpha_t|\beta)
		=
		\int q\,\phi\left(1+t h/q\right)\d\omega
	\]
	has second derivative $\phi''(1)\int h^2/q\,\d\omega$. Setting $c=\phi''(1)\geq0$, the functional $\Psi(p\omega)=\Phi(p\omega)-c\int p\log p\d\omega$ has zero second variation along every zero-mass line segment in the positive simplex. It is therefore affine there, and $B_\Psi=0$. Thus $B_\Phi=c\KL$.

	It remains to identify the generator. Since $\Divergm_\phi=c\KL$, the function $g(t)=\phi(t)-c\,t\log t$ generates the zero $\phi$-divergence. Testing on densities for which the ratio $p/q$ takes two values $x<1<y$, with weights chosen so that the mean ratio is $1$, gives $(y-1)g(x)+(1-x)g(y)=0$. Hence $g(t)/(t-1)$ is constant on $(0,+\infty)\setminus\{1\}$, so $g(t)=a(t-1)$. This proves the claim.
\index{phi-divergence}
\end{proof}

Thus the two generalizations lead to different duals and different algorithms. Bregman regularization by $B_\Phi(\pi|\xi)$ keeps the projection geometry of Section~\ref{sec-convergence-init}: linear costs tilt the reference in dual coordinates and alternating marginal updates are Bregman projections. A density-ratio penalty $\Divergm_\phi(\pi|\alpha\otimes\beta)$ instead gives the Fenchel dual~\eqref{eq-phi-regularized-ot-dual} and, for interior solutions, the pointwise law $r^\star=(\phi')^{-1}((f\oplus g-c)/\epsilon)$. Proposition~\ref{prop-bregman-phi-dual-comparison} makes the distinction explicit at the dual level. The Bregman dual contains the global conjugate $\Phi^*$ and the product reference $\alpha\otimes\beta$, whereas the $\phi$-dual integrates a scalar conjugate against the moving product measure $\alpha\otimes\beta$. Only for KL do these two viewpoints coincide and reduce to multiplicative Sinkhorn scalings.
\index{Sinkhorn!scaling}
\index{phi-divergence regularized OT}
\index{Bregman!regularization}
\index{Bregman!projection}
\index{product!measure}
\index{density!ratio}


\section{Sinkhorn Divergences}
\index{Sinkhorn!divergence}
\label{sec-sinkhorn-div}

Sinkhorn divergences remove the entropic self-bias while retaining smoothness. They interpolate between OT-like geometry and kernel-like norms, which explains their statistical behavior.
\index{Sinkhorn!divergence}

\paragraph{Entropic bias.}
\index{entropic!bias}

A major issue with the value of the Sinkhorn problem~\eqref{eq-entropic-generic} is that $\MK_\c^\epsilon(\al,\be)>0$. In particular,
\eq{
	\al_\epsilon = \uargmin{\be} \MK_\c^\epsilon(\al,\be)
}
does not satisfy $\al_\epsilon=\al$ unless $\epsilon=0$. The following proposition shows that the bias induced by this entropic regularization dominates the large $\epsilon$ limit.
\index{entropic!regularization}

\begin{prop}[Large-temperature entropic bias]
\index{entropic!bias}
		Assume that $c$ is bounded and continuous. Then $\MK_\c^\epsilon(\al,\be) \rightarrow \iint c(x,y)\d\al(x)\d\be(y)$ as $\epsilon \rightarrow +\infty$.
\end{prop}
\begin{proof}
	Let $(f_\epsilon,g_\epsilon)$ be optimal dual potentials, normalized by $\int g_\epsilon\d\beta=0$. The soft $c$-transform equation gives
\index{soft!c-transform}
\index{dual!potential}
	\[
		f_\epsilon(x)
		=
		-\epsilon\log\int
		\exp\!\left(\frac{g_\epsilon(y)-c(x,y)}{\epsilon}\right)\d\beta(y).
	\]
	For bounded $c$, the oscillations of normalized entropic potentials are bounded uniformly in $\epsilon$ by the oscillation of $c$. Hence the log-sum-exp expansion is uniform:
\index{log-sum-exp}
\index{entropic!potential}
	\[
		f_\epsilon(x)
		=
		-\int (g_\epsilon(y)-c(x,y))\d\beta(y)+O(\epsilon^{-1})
		=
		\int c(x,y)\d\beta(y)+O(\epsilon^{-1}).
	\]
	At optimality the exponential penalty in the dual integrates to zero, so
	\[
		\MK_\c^\epsilon(\alpha,\beta)
		=
		\int f_\epsilon\d\alpha+\int g_\epsilon\d\beta
		=
		\iint c(x,y)\d\alpha(x)\d\beta(y)+O(\epsilon^{-1}).
	\]
	This proves the limit.
\end{proof}

So in the large $\epsilon$ limit, $\MK_\c^\epsilon$ behaves like an inner product and not like a norm. The following special case makes the resulting attraction explicit.

\begin{example}[Large-temperature collapse for quadratic costs]
	The limiting functional minimized by $\al_\epsilon$ is linear in the second argument:
	\[
		\be\mapsto \int V_\alpha(y)\d\beta(y),
		\qquad
		V_\alpha(y)\eqdef\int c(x,y)\d\alpha(x).
	\]
	Thus any limiting minimizer is supported on $\argmin V_\alpha$. When this minimizer is unique,
	\[
		\al_\epsilon \rightharpoonup \delta_{y^\star(\alpha)},
		\qquad
		y^\star(\alpha)=\uargmin{y} V_\alpha(y).
	\]
	For the quadratic cost $c(x,y)=\norm{x-y}^2$ on $\RR^d$, assuming $\alpha$ has finite second moment, one has $V_\alpha(y)=\norm{y-\int x\d\alpha(x)}^2+\mathrm{const}$, so the collapse is toward the Dirac mass at the mean of $\alpha$.
\end{example}

\paragraph{Sinkhorn divergences.}
\index{Sinkhorn!divergence}

The raw entropic OT value has a large-temperature attraction toward the product coupling. The standard debiasing subtracts the two self-interaction energies, in the same spirit as passing from a positive kernel value to the associated squared distance.
\index{entropic!OT}
\index{energy!interaction}
\index{product!coupling}
\index{kernel!positive}

\begin{defn}[Sinkhorn divergence]\label{def-sinkhorn-divergence}
\index{Sinkhorn!divergence}
	For $\epsilon>0$, the debiased Sinkhorn divergence associated with the entropic OT value $\MK_\c^\epsilon$ is
	\eql{\label{eq-sinkhorn-divergence}
		\bar\MK_\c^\epsilon(\al,\be) \eqdef
		\MK_\c^\epsilon(\al,\be) - \frac{1}{2} \MK_\c^\epsilon(\al,\al) - \frac{1}{2}\MK_\c^\epsilon(\be,\be).
	}
\end{defn}
Although this formula is a debiasing by self-costs, its non-negativity is not automatic from the definition; Proposition~\ref{prop-sinkhorn-positive} proves it below by a kernel Cauchy--Schwarz argument.

\begin{figure}[ht]
\centering
\setlength{\tabcolsep}{1.5pt}
\begin{tabular}{@{}cccc@{}}
\includegraphics[width=.235\linewidth]{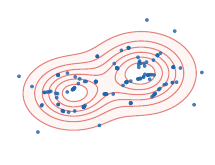} &
\index{Sinkhorn!divergence}
\includegraphics[width=.235\linewidth]{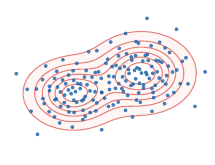} &
\includegraphics[width=.235\linewidth]{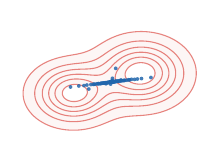} &
\includegraphics[width=.235\linewidth]{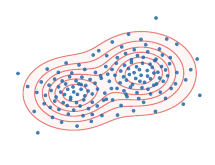} \\[-.1em]
\small $\MK_\c^\epsilon$, small $\epsilon$ &
\small $\bar\MK_\c^\epsilon$, small $\epsilon$ &
\small $\MK_\c^\epsilon$, large $\epsilon$ &
\small $\bar\MK_\c^\epsilon$, large $\epsilon$
\end{tabular}
\caption{Visualization of the debiasing effect by point optimization. The red level sets show a fixed two-Gaussian target density $\beta$, while the blue atoms are an optimized empirical measure $\alpha_n$ initialized in the same way in all panels. For small $\epsilon$, the entropic cost $\MK_\c^\epsilon$ and the debiased Sinkhorn divergence $\bar\MK_\c^\epsilon$ both keep the two overlapping modes. For large $\epsilon$, minimizing $\MK_\c^\epsilon$ collapses the atoms toward the barycenter predicted by the large-temperature bias, whereas the self-cost subtraction in~\eqref{eq-sinkhorn-divergence} keeps a bimodal cloud.}
\index{Sinkhorn!divergence}
\index{empirical!measure}
\label{fig:sinkhorn-divergence-debiasing}
\end{figure}

We first record a fundamental lemma: at an optimal dual pair, the exponential regularization term integrates to zero, so the entropic cost can be read directly from the potentials. The same cancellation holds along Sinkhorn iterations after each exact block update.
\index{Sinkhorn!iteration}

\begin{lem}[Entropic dual cost at optimum]
	Let $(f_{\al,\be},g_{\al,\be})$ be optimal dual potentials, normalized arbitrarily. Then
\index{dual!potential}
	\eql{\label{eq-formula-cost-dual}
		\MK_\c^\epsilon(\al,\be) = \dotp{f_{\al,\be}}{\al} + \dotp{g_{\al,\be}}{\be}.
	}
\end{lem}
\begin{proof}
	We first notice that at optimality, the relation
	\eq{
		f_{\al,\be} = -\epsilon\log\int_\Yy e^{\frac{g_{\al,\be}(y)-c(x,y)}{\epsilon}} \d\be(y)
	}
	after taking the exponential, equivalently reads
	\eq{
		1 = \int_\Yy e^{\frac{f_{\al,\be}(x) + g_{\al,\be}(y)-c(x,y)}{\epsilon}} \d\be(y)
		\qarrq
			\int_{\Xx\times\Yy} \pa{ e^{\frac{f_{\al,\be} \oplus g_{\al,\be}-c}{\epsilon}} -1 } \d(\al \otimes \be) = 0.
	}
	Substituting this identity in~\eqref{eq-dual-sinkh-cont} gives the result.
\end{proof}

The next proposition records the two limiting regimes of this debiased quantity.

\begin{prop}[Asymptotics of Sinkhorn divergences]\label{prop-sinkhorn-divergence-asymptotics}
\index{Sinkhorn!divergence}
	Assume that the two measures are supported on the same space and that $c$ is bounded, continuous, nonnegative and satisfies $c(x,x)=0$.
	Then $\bar\MK_\c^\epsilon(\al,\be) \rightarrow \MK_\c(\al,\be)$ when $\epsilon\rightarrow 0$ and
	\eq{
		\bar\MK_\c^\epsilon(\al,\be) \rightarrow
		\frac{1}{2}\int -c \d(\al-\be) \otimes \d(\al-\be)
		\qwhenq
		\epsilon \rightarrow +\infty.
	}
\end{prop}
\begin{proof}
	The discrete convergence result above already gives the correct intuition; we now use the standard continuous argument.
	\textbf{Case $\epsilon \rightarrow 0$.}
	The first limit follows from the standard $\Gamma$-convergence argument for entropic optimal transport: the entropy term is lower semicontinuous along weakly converging couplings, while any finite-cost coupling can be approximated by couplings with finite entropy. Since $c(x,x)=0$, the two self-costs in the debiased expression converge to zero, and the cross term converges to $\MK_\c(\al,\be)$.
\index{lower semicontinuity}

	\textbf{Case $\epsilon \rightarrow +\infty$.} We denote by $(f_\epsilon,g_\epsilon)$ optimal dual potentials. After normalizing them and using boundedness of $c$, their oscillations stay uniformly bounded, so the following expansion is uniform. The optimality condition on $f_\epsilon$ (equivalently the Sinkhorn fixed point on $f_\epsilon$) reads
\index{dual!potential}
		\begin{align*}
			f_\epsilon &= -\epsilon \log \int \exp\pa{ \frac{g_\epsilon(y) - c(\cdot,y)}{\epsilon} } \d \be(y)
			=	-\epsilon \log \int \pa{ 1 + \frac{g_\epsilon(y) - c(\cdot,y)}{\epsilon} + o(1/\epsilon) } \d \be(y) \\
			&=	-\epsilon \log\pa{1+\frac{1}{\epsilon}\int (g_\epsilon(y)-c(\cdot,y))\d\be(y)+o(1/\epsilon)}
			= - \int g_\epsilon \d\be + \int c(\cdot,y) \d \be(y) + o(1).
		\end{align*}
	Plugging this relation in the dual expression~\eqref{eq-formula-cost-dual}
	\eq{
		\MK_\c^\epsilon(\al,\be) = \int f_\epsilon \d\al + \int g_\epsilon \d \be =
		\iint c(x,y) \d\al(x)\d\be(y) + o(1).
	}
	Applying this expansion to $(\alpha,\beta)$, $(\alpha,\alpha)$ and $(\beta,\beta)$ gives
	\[
		\bar\MK_\c^\epsilon(\alpha,\beta)
		\to
		\int c\,\d\alpha\otimes\d\beta
		-\frac12\int c\,\d\alpha\otimes\d\alpha
		-\frac12\int c\,\d\beta\otimes\d\beta
		=
		-\frac12\int c\,\d(\alpha-\beta)\otimes\d(\alpha-\beta).
	\]
\end{proof}

\begin{rem}[Large-temperature Hilbertian limit]
\index{Hilbertian!limit}
	If $-c$ defines a conditionally positive definite kernel, the large-temperature limit in Proposition~\ref{prop-sinkhorn-divergence-asymptotics} is the square of a Hilbertian kernel norm. A typical example is $c(x,y)=\norm{x-y}^p$ for $0 < p < 2$, which corresponds to the energy-distance kernel. This kernel norm is the dual of a homogeneous Sobolev norm.
\index{homogeneous Sobolev norm}
\index{Sobolev norm}
\index{kernel!norm}
\index{kernel!positive definite}
\index{large-temperature limit}
\index{Sinkhorn!divergence}
\index{energy!distance}
\end{rem}

We now show that this debiased Sinkhorn divergence is positive.

\begin{prop}[Non-negativity of Sinkhorn divergences]\label{prop-sinkhorn-positive}
\index{Sinkhorn!divergence}
	If $k(x,y)=e^{-c(x,y)/\epsilon}$ is positive definite, then $\bar\MK_\c^\epsilon(\al,\be) \geq 0$.
\end{prop}

\begin{proof}
	In the following, we denote by $(f_{\al,\be},g_{\al,\be})$ optimal dual potentials for the
\index{dual!potential}
	dual Schr\"odinger problem between $\al$ and $\be$.
\index{Schrodinger!problem}
	We denote by $f_{\al,\al}=g_{\al,\al}$ (one can assume they are equal by symmetry) the solution for the problem between $\al$ and itself.
	Using the suboptimal function $(f_{\al,\al},g_{\be,\be})$ in the dual maximization problem, and using relation~\eqref{eq-formula-cost-dual} for the simplified expression of the dual cost, one obtains
	\eq{
		\MK_\c^\epsilon(\al,\be) \geq \dotp{f_{\al,\al}}{\al} + \dotp{g_{\be,\be}}{\be}
			 - \epsilon \dotp{ e^{\frac{f_{\al,\al} \oplus g_{\be,\be}-c}{\epsilon}} -1 }{\al \otimes \be}
	}
	Moreover $\dotp{f_{\al,\al}}{\al} = \frac{1}{2}\MK_\c^\epsilon(\al,\al)$, and similarly for $\be$, so the previous inequality equivalently reads
	\eq{
		\frac{1}{\epsilon} \bar \MK_\c^\epsilon(\al,\be)
			\geq 	1 - \dotp{ e^{\frac{f_{\al,\al} \oplus g_{\be,\be}-c}{\epsilon}} }{\al \otimes \be}
			= 1 - \dotp{\tilde\al}{\tilde\be}_{k}
	}
		where $\tilde\al=e^{f_{\al,\al}/\epsilon} \al$, $\tilde\be=e^{f_{\be,\be}/\epsilon} \be$
		and we introduced the inner product, valid because $k$ is positive definite,
	$\dotp{\tilde\al}{\tilde\be}_{k} \eqdef \int k(x,y) \d\tilde\al(x) \d\tilde\be(y)$.
	The self Sinkhorn fixed point equation, once exponentiated, reads pointwise
\index{self Sinkhorn}
	\[
		e^{f_{\al,\al}(x)/\epsilon}\int k(x,y)\d\tilde\al(y)=1
		\qquad \text{for }\al\text{-a.e. }x,
	\]
	and hence
	\eq{
		\norm{\tilde \al}_k^2 = \dotp{k(\tilde\al)}{\tilde\al}
		= \int e^{f_{\al,\al}(x)/\epsilon} k(\tilde\al)(x)\d\al(x) = 1
	}
		and similarly $\norm{\tilde \be}_k^2=1$. Therefore, by Cauchy--Schwarz, one has
	$1 - \dotp{\tilde\al}{\tilde\be}_{k} \geq 0$.
\end{proof}

\begin{rem}[Strict positivity]
\index{strict!positivity}
	Under additional assumptions on the kernel, one can furthermore show that $\bar\MK_\c^\epsilon(\al,\be)=0$ implies $\al=\be$, and that this debiased divergence metrizes convergence in law.
\index{convergence!in law}
\end{rem}


\chapter{Entropic Regularization: Convergence}
\index{entropic!regularization}
\index{Sinkhorn!convergence}
\label{sec-sinkhorn-advanced}
\label{sec-entropic-convergence}
\label{sec-convergence-dual}

Convergence for entropic optimal transport has two complementary meanings. At fixed marginals and fixed temperature, one studies how Sinkhorn iterates approach the regularized optimizer; when the marginals are empirical, one also studies how the regularized value and potentials behave as the number of samples grows toward a mean-field limit. This chapter keeps these two scales together: first the algorithmic convergence of matrix scaling and soft transforms, then the Gaussian closed forms and sample-complexity consequences that explain the statistical role of the regularization.
\index{mean-field!limit}
\index{sample complexity}
\index{matrix!scaling}
\index{soft!transform}

The algorithmic part of the chapter revisits Sinkhorn convergence through three complementary lenses. Bregman projections explain the alternating-projection geometry, Fortet's order argument gives qualitative fixed-point convergence, robust Bregman estimates give a non-asymptotic $O(1/k)$ dual-gap bound, and Hilbert's metric gives a clean linear contraction when the kernel is uniformly positive. The first and last viewpoints explain convergence mechanisms, while the robust estimate is often the most useful for explicit complexity guarantees.
\index{Sinkhorn!convergence}
\index{Bregman!projection}
\index{dual!gap}

\section{Sinkhorn Convergence: Bregman View}
\index{Bregman!projection}
\index{Sinkhorn!convergence}
\label{sec-convergence-init}

This section explains Sinkhorn as alternating Bregman projections. The main message is geometric: each row or column rescaling is the KL projection onto one affine marginal constraint, so convergence follows from the Pythagorean identity for Bregman divergences.
\index{KL!projection}
\index{marginal!constraint}
\index{affine!marginal constraint}
\index{Pythagorean identity}
\index{Bregman!divergence}
\index{Bregman!projection}
\index{KL!projection}

For simplicity, this section is written for discrete measures, but the same ideas carry over to general measures. The robust-rate section later revisits the alternating-projection mechanism through convex duality, with constants expressed through the oscillation of the potentials and of the cost range.

\paragraph{Alternating $\KL$ projections.}
\index{alternating!projection}

The projection viewpoint explains Sinkhorn as repeated enforcement of one marginal constraint at a time. It is not specific to entropy, although the KL case is the one where the projections reduce to elementary row and column scalings.
\index{scaling!row}
\index{scaling!column}
\index{marginal!constraint}

\begin{defn}[Bregman divergence]\label{def-bregman-divergence}
\index{Bregman!divergence}
	Let $\Phi$ be a differentiable strictly convex function on a convex domain $\Omega$. The Bregman divergence generated by $\Phi$ is
\index{convex!function}
	\[
		B_\Phi(P|Q)\eqdef \Phi(P)-\Phi(Q)-\dotp{\nabla\Phi(Q)}{P-Q}.
	\]
	For the negative entropy $\Phi(P)=\sum_{i,j}P_{i,j}\log P_{i,j}$ on the positive orthant, one obtains $B_\Phi(P|Q)=\KLD(P|Q)$ up to the harmless convention at the boundary.
\index{Shannon!negative entropy}
\end{defn}

Bregman divergences are useful because their geometry can encode constraints. A Legendre-type generator $\Phi$ blows up, or has an infinite derivative, at the boundary of its domain. For negative entropy, positivity is therefore built into the divergence, so one projects onto affine marginal constraints without separately handling non-negativity.
\index{Shannon!negative entropy}
\index{affine!marginal constraint}
\index{marginal!constraint}
\index{Bregman!divergence}

\paragraph{Linear tilts and Gibbs references.}
\index{linear!tilt}
\index{Gibbs!reference}

The next proposition explains why adding a linear cost to a Bregman penalty merely shifts the reference point in dual coordinates. The usual Gibbs--KL reformulation is the entropy specialization.

\begin{prop}[Linear tilts of Bregman penalties]\label{prop-bregman-linear-tilt}
\index{linear!tilt}
\index{Bregman!penalty}
	Let $\Phi$ be differentiable and strictly convex, and let $B_\Phi$ be its Bregman divergence. Fix a reference point $\Q$ in the interior of the domain. Assume that there exists $\Q^\C$ such that
\index{Bregman!divergence}
	\[
		\nabla\Phi(\Q^\C)=\nabla\Phi(\Q)-\C/\epsilon .
	\]
	Then, for all $\P$ in the domain,
	\[
		\dotp{\P}{\C}+\epsilon B_\Phi(\P|\Q)
		=
		\epsilon B_\Phi(\P|\Q^\C)+\text{\upshape cst},
	\]
	where the constant does not depend on $\P$.
\end{prop}
\begin{proof}
	Subtract the two Bregman divergences:
\index{Bregman!divergence}
	\[
		B_\Phi(\P|\Q^\C)-B_\Phi(\P|\Q)
		=
		\dotp{\nabla\Phi(\Q)-\nabla\Phi(\Q^\C)}{\P}
		+\text{cst}.
	\]
	Using $\nabla\Phi(\Q)-\nabla\Phi(\Q^\C)=\C/\epsilon$ and multiplying by $\epsilon$ gives the claim.
\end{proof}

For the negative entropy $\Phi(\P)=\sum_{i,j}\P_{i,j}\log\P_{i,j}$, one has $B_\Phi=\KLD$. Taking $\Q=\a\otimes\b$ gives the tilted reference
\index{Shannon!negative entropy}
\[
	\K_{\a,\b}^\epsilon
	\eqdef
	(\a\otimes\b)\odot e^{-\C/\epsilon}.
\]
Thus
\[
	\dotp{\P}{\C}+\epsilon\KLD(\P|\a\otimes\b)
	=
	\epsilon\KLD(\P|\K_{\a,\b}^\epsilon)+\text{\upshape cst}.
\]
On the transport polytope, scaling $\K_{\a,\b}^\epsilon$ is equivalent to scaling the Gibbs kernel $\K=e^{-\C/\epsilon}$ because the factors $\a_i$ and $\b_j$ can be absorbed into the Sinkhorn scalings.
\index{Sinkhorn!scaling}
\index{transportation!polytope}
\index{Gibbs!kernel}

Thus the unique solution $\P_\epsilon$ of~\eqref{eq-regularized-discr} is the KL projection of the tilted Gibbs reference onto $\CouplingsD(\a,\b)$:
\index{Gibbs!reference}
\index{KL!projection}
\eql{\label{eq-kl-proj}
	\P_\epsilon = \Proj_{\CouplingsD(\a,\b)}^\KLD(\K_{\a,\b}^\epsilon) \eqdef \uargmin{\P \in \CouplingsD(\a,\b)} \KLD(\P|\K_{\a,\b}^\epsilon).
}

\paragraph{Cyclic projection convergence.}
\index{cyclic!projection}

The convergence mechanism is the classical one of Bregman~\cite{bregman1967relaxation}.

\begin{prop}[Cyclic Bregman projections on affine constraints]\label{prop-cyclic-kl-affine}
\index{Bregman!projection}
\index{affine!constraint}
	Let $\Phi$ be a Legendre strictly convex generator on a finite-dimensional convex domain, and let $\Cc_1,\Cc_2$ be affine constraint sets whose intersection meets the domain. Define
	\[
		P_{k+1}
		=
		\Proj_{\Cc_2}^{B_\Phi}
		\Proj_{\Cc_1}^{B_\Phi}(P_k),
	\]
	starting from an interior point $P_0$. Assume the projections are well-defined and that the iterates remain in a compact subset of the domain. Then $P_k$ converges to the Bregman projection of $P_0$ onto $\Cc_1\cap\Cc_2$. In particular, the KL case converges for positive affine marginal constraints on a bounded transportation polytope.
\index{affine!marginal constraint}
\index{transportation!polytope}
\index{marginal!constraint}
\index{Bregman!projection}
\end{prop}
\begin{proof}
	We first prove the Pythagorean identity used by the projection argument. For three interior points one has the Bregman three-point formula
\index{Bregman!three-point formula}
\index{Pythagorean identity}
	\[
		B_\Phi(Q|P)
		=
		B_\Phi(Q|P^+)
		+
		B_\Phi(P^+|P)
		+
		\dotp{\nabla\Phi(P^+)-\nabla\Phi(P)}{Q-P^+}.
	\]
	If $P^+=\Proj_\Cc^{B_\Phi}(P)$ and $\Cc$ is affine, the first-order optimality condition for minimizing $R\mapsto B_\Phi(R|P)$ over $R\in\Cc$ is
\index{optimality!first-order}
	\[
		\dotp{\nabla\Phi(P^+)-\nabla\Phi(P)}{R-P^+}=0
		\qquad \forall R\in\Cc,
	\]
	because $R-P^+$ ranges over the tangent linear space of $\Cc$. Taking $R=Q\in\Cc$ cancels the last term and gives
	\[
		B_\Phi(Q|P)=B_\Phi(Q|P^+)+B_\Phi(P^+|P)
		\qquad\forall Q\in\Cc.
	\]

	Let $(Z_\ell)_\ell$ be the half-step sequence obtained by alternating projections onto $\Cc_1$ and $\Cc_2$, so that $Z_{2k}=P_k$ and $Z_{2k+2}=P_{k+1}$. Fix $Q\in\Cc_1\cap\Cc_2$. Applying the identity at each half-step gives
\index{Sinkhorn!half-step}
\index{alternating!projection}
	\[
		B_\Phi(Q|Z_\ell)-B_\Phi(Q|Z_{\ell+1})
		=
		B_\Phi(Z_{\ell+1}|Z_\ell)\geq0.
	\]
	Thus $B_\Phi(Q|Z_\ell)$ decreases and the series $\sum_\ell B_\Phi(Z_{\ell+1}|Z_\ell)$ is finite. The compactness assumption gives cluster points. Since the projection drops tend to zero and $\Phi$ is strictly convex on compact subsets of the domain, $\norm{Z_{\ell+1}-Z_\ell}\to0$. Every cluster point of the even subsequence is therefore also a cluster point of the adjacent odd subsequence. Because these two subsequences lie alternately in the closed affine sets $\Cc_1$ and $\Cc_2$, every cluster point belongs to $\Cc_1\cap\Cc_2$.

	Let $\bar P$ be such a cluster point. For each half-step, the dual displacement
\index{Sinkhorn!half-step}
	\[
		\nabla\Phi(Z_{\ell+1})-\nabla\Phi(Z_\ell)
	\]
	belongs to the normal space of the affine set onto which one projects. Telescoping and using the convergence of $Z_\ell$ gives
	\[
		\nabla\Phi(\bar P)-\nabla\Phi(P_0)
		\in
		N_{\Cc_1}+N_{\Cc_2}
		=
		N_{\Cc_1\cap\Cc_2},
	\]
	where the last equality uses that the sets are affine. This is precisely the first-order optimality condition for minimizing $R\mapsto B_\Phi(R|P_0)$ over $R\in\Cc_1\cap\Cc_2$. Thus $\bar P$ is the Bregman projection of $P_0$ onto the intersection. Strict convexity gives uniqueness of this minimizer, so all cluster points coincide and the whole sequence converges. The KL statement follows by choosing the negative entropy generator.
\index{Shannon!negative entropy}
\index{optimality!first-order}
\index{Bregman!projection}
\index{strict!convexity}
\end{proof}

\begin{alg}[Cyclic Bregman projections]\label{alg:cyclic-bregman-projections}
\index{cyclic!Bregman projection}
\textbf{Input:} Constraint sets $\Cc_1,\Cc_2$, Bregman divergence $B_\Phi$, interior point $P_0$, constraint defects \(\mathrm{def}_{\Cc_1},\mathrm{def}_{\Cc_2}\), tolerance $\mathrm{tol}$.

\textbf{Output:} Point in $\Cc_1\cap\Cc_2$ when the intersection is feasible.

\textbf{Specialize} to entropic OT, if needed:
\(\Cc_1=\Cc^1_\a, \qquad \Cc_2=\Cc^2_\b, \qquad B_\Phi=\KL.\)

\textbf{Initialize:} Set \(r_0=+\infty\) and \(k=0\).

\textbf{While} \(r_k>\mathrm{tol}\) \textbf{do}:
\begin{algblock}

\textbf{Set} \(k\leftarrow k+1\).

\(P_{k-1/2}=\Proj_{\Cc_1}^{B_\Phi}(P_{k-1}).\)

\(P_k=\Proj_{\Cc_2}^{B_\Phi}(P_{k-1/2}).\)

\textbf{Set} \(r_k=\max\{\mathrm{def}_{\Cc_1}(P_k),\mathrm{def}_{\Cc_2}(P_k)\}\).
\end{algblock}
\algreturnskip
\textbf{Return} $P_k$.
\end{alg}

Denoting
\eq{
	\Cc^1_\a \eqdef \enscond{\P}{\P\ones_m=\a}
	\qandq
	\Cc^2_\b \eqdef \enscond{\P}{\transp{\P}\ones_n=\b}
}
the rows and columns constraints, one has $\CouplingsD(\a,\b) = \Cc^1_\a \cap \Cc^2_\b$. One can use KL, or more generally Bregman, iterative projections~\cite{bregman1967relaxation,Ruschendorf95,RuschendorfThomsen}
\eql{\label{eq-kl-sinkh-proj}
	\itt{\P} \eqdef \Proj_{\Cc^1_\a}^{\KLD}(\it{\P})
	\qandq
	\ittt{\P} \eqdef \Proj_{\Cc^2_\b}^{\KLD}(\itt{\P}).
}
Since the sets $\Cc^1_\a$ and $\Cc^2_\b$ are affine, Proposition~\ref{prop-cyclic-kl-affine} applies with $P_0=\K_{\a,\b}^\epsilon$ and shows convergence to the solution of~\eqref{eq-kl-proj}.

\paragraph{Row and column scalings.}
\index{scaling!row}
\index{scaling!column}

The two projectors are simple to compute since they correspond to scaling respectively the rows and the columns, as explained in this proposition.

\begin{prop}[KL projections are scalings]
\index{KL!projection}
One has
\eq{
	 \Proj_{\Cc^1_\a}^{\KLD}(\P) = \diag\pa{\frac{\a}{\P \ones_m}} \P
	 \qandq
	 \Proj_{\Cc^2_\b}^{\KLD}(\P) =  \P \diag\pa{\frac{\b}{\P^\top \ones_n}}.
}
\end{prop}
\begin{proof}
	Consider the problem along each row or column vector to impose a fixed sum $s \in \RR_+$
	\eq{
		\umin{p} \enscond{ \KL(p|q) }{ \dotp{p}{\ones}=s }.
	}
	The Lagrange multiplier equation for this problem reads
	\eq{
		\log(p/q)+\la \ones=0 \qarrq
		p = u q \qwhereq u = e^{-\la}>0.
	}
	The constraint $\dotp{p}{\ones} = s$ is equivalent to $\dotp{uq}{\ones}=s$, i.e. $u=s/\sum_i q_i$, which gives the desired scaling formula
\index{scaling!form}
	$p=s q/\sum_i q_i$.
\end{proof}

These iterations are equivalent to Sinkhorn iterations~\eqref{eq-sinkhorn} since defining
\index{Sinkhorn!iteration}
\eq{\label{eq-sink-matrix}\P^{(2\ell)} \eqdef \diag(\it{\uD}) \K \diag(\it{\vD}),}
one has
\begin{align*}
	\P^{(2\ell+1)} &\eqdef \diag(\itt{\uD}) \K \diag(\it{\vD}) \\
	\qandq
	\P^{(2\ell+2)} &\eqdef \diag(\itt{\uD}) \K \diag(\itt{\vD})
\end{align*}
In practice, however, one should prefer using~\eqref{eq-sinkhorn}, which only requires manipulating scaling vectors and multiplying by a Gibbs kernel, and can often be accelerated when the kernel has separable, sparse, low-rank or geometric structure.
\index{scaling!vectors}
\index{Gibbs!kernel}

Such a convergence analysis using Bregman projection is of limited interest because it only works directly in finite dimension. For instance, the linear convergence speed one can obtain from strong convexity degrades with the dimension and with $\epsilon$. The robust dual analysis below gives a dimension-free qualitative message: the constants are expressed through the oscillation of the potentials and the cost range, and the resulting $O(1/k)$ estimate explains what can be guaranteed before any asymptotic linear regime becomes visible.
\index{Bregman!projection}
\index{linear!convergence}
It is also possible to anneal $\epsilon$ during the iterations and to rely on multiscale strategies in low dimensions.

\section{Sinkhorn Convergence: Monotone Point of View}
\index{monotone!convergence}
\index{Sinkhorn!convergence}

There is another, older way to understand convergence, going back to Fortet's proof of the Schr\"odinger system~\cite{fortet1940schrodinger,essid2019fortet,leonard2019fortet}. It does not primarily estimate a contraction factor. Instead, it uses the order structure of the soft transforms.
\index{Schrodinger!system}
\index{soft!transform}

\begin{prop}[Monotone fixed-point route to Sinkhorn convergence]\label{prop-fortet-monotone}
\index{monotone!fixed point}
\index{Sinkhorn!convergence}
	Let $c$ be bounded and continuous on compact spaces, and define the double Sinkhorn map, normalized by subtracting its $\alpha$-mean,
	\[
		\mathcal A(f) \eqdef
		(f^{c,\epsilon})^{\bar c,\epsilon}
		-\int (f^{c,\epsilon})^{\bar c,\epsilon}\d\alpha.
	\]
	The map $\mathcal A$ is order preserving on the quotient by additive constants. If a representative of the initial class is chosen so that $f_0\leq \mathcal A(f_0)$, then representatives of the iterates $f_{k+1}=\mathcal A(f_k)$ can be chosen to increase pointwise, remain uniformly bounded in oscillation, and converge to a fixed point. Since constants are free, any bounded initialization can be shifted downward to satisfy the subsolution inequality. The fixed point is the entropic potential, hence the associated Sinkhorn scalings converge.
\index{Sinkhorn!scaling}
\index{order-preserving map}
\index{entropic!potential}
\end{prop}

\begin{proof}
	The soft $c$-transform is order reversing: if $g\leq g'$, then
\index{soft!c-transform}
	\[
		-\epsilon\log\int e^{(g-c)/\epsilon}\d\beta
		\geq
		-\epsilon\log\int e^{(g'-c)/\epsilon}\d\beta.
	\]
	The composition of two order-reversing transforms is therefore order preserving. The transform also commutes with additive constants in the projective sense, which is why the argument is naturally stated for equivalence classes modulo constants. Starting from a subsolution representative gives $f_0\leq f_1$, and order preservation gives representatives satisfying $f_k\leq f_{k+1}$ for all $k$. Soft-transform oscillation bounds, controlled by $\sup c-\inf c$, prevent escape to infinity after normalization. Monotone pointwise convergence, compactness of equicontinuous soft transforms, and continuity of $\mathcal A$ then give a fixed point. Uniqueness of entropic potentials up to constants, Proposition~\ref{prop-entropic-unique} and the dual uniqueness statement above identify this fixed point with the Sinkhorn solution.
\index{order-preserving map}
\index{entropic!potential}
\index{soft!transform}
\end{proof}

This proof is qualitative rather than quantitative, but it is conceptually useful: Sinkhorn is not only alternating projection or projective contraction; it is also a monotone fixed-point iteration on potential classes once constants are quotiented out.
\index{alternating!projection}

\begin{defn}[Variation seminorm]\label{def-variation-seminorm}
\index{variation seminorm}
	For a bounded real-valued function $h$, the variation seminorm is
	\[
		\norm{h}_V\eqdef \sup h-\inf h .
	\]
	It vanishes exactly on constant functions, hence becomes a norm after quotienting by additive constants.
\end{defn}
This is the natural size for Sinkhorn potentials because adding constants changes their gauge but not the coupling.

\begin{prop}[Topical maps are variation-nonexpansive]\label{prop-topical-variation-nonexpansive}
\index{topical map}
	Let $E$ be a vector space of real-valued bounded functions, ordered pointwise, and write
	$\norm{\cdot}_V$ for the variation seminorm of Definition~\ref{def-variation-seminorm}.
	Let $\mathcal T:E\to E$ be monotone and additively homogeneous,
\index{additively homogeneous map}
	\[
		f\leq g \Rightarrow \mathcal T(f)\leq \mathcal T(g),
		\qquad
		\mathcal T(f+\lambda)=\mathcal T(f)+\lambda
		\quad \forall \lambda\in\RR .
	\]
	Then
	\[
		\norm{\mathcal T(f)-\mathcal T(g)}_V
		\leq
		\norm{f-g}_V .
	\]
	The same conclusion holds for order-reversing maps satisfying $\mathcal T(f+\lambda)=\mathcal T(f)-\lambda$.
\end{prop}

\begin{proof}
	Set $a=\inf(f-g)$ and $b=\sup(f-g)$. Then
	\[
		g+a\leq f\leq g+b .
	\]
	If $\mathcal T$ is order preserving and additively homogeneous, applying $\mathcal T$ gives
\index{order-preserving map}
\index{additively homogeneous map}
	\[
		\mathcal T(g)+a\leq \mathcal T(f)\leq \mathcal T(g)+b .
	\]
	Hence every value of $\mathcal T(f)-\mathcal T(g)$ lies in $[a,b]$, so its oscillation is at most $b-a=\norm{f-g}_V$. For an order-reversing, additively anti-homogeneous map, the same inequalities give
	\[
		\mathcal T(g)-b\leq \mathcal T(f)\leq \mathcal T(g)-a,
	\]
	and the oscillation bound is identical.
\end{proof}

\begin{cor}[Soft transforms are nonexpansive]\label{cor-soft-transform-nonexpansive}
\index{soft!transform}
	For every $\epsilon>0$, the soft $c$-transforms~\eqref{eq-soft-c-cont-f}--\eqref{eq-soft-c-cont-g} are $1$-Lipschitz for the variation seminorm. Consequently, the double Sinkhorn map used in Proposition~\ref{prop-fortet-monotone}, after quotienting constants, is also $1$-Lipschitz for $\norm{\cdot}_V$.
\index{soft!c-transform}
\index{variation seminorm}
\end{cor}

\begin{proof}
	The soft transform is order reversing and satisfies $(g+\lambda)^{\bar c,\epsilon}=g^{\bar c,\epsilon}-\lambda$, and similarly for the other block. Proposition~\ref{prop-topical-variation-nonexpansive} applies to each block, and the composition of two $1$-Lipschitz maps is $1$-Lipschitz.
\index{soft!transform}
\end{proof}

\begin{rem}[Topical maps and projective geometry]\label{rem-topical-maps}
\index{topical map}
\index{projective!geometry}
	Order-preserving additively homogeneous maps are called topical maps in nonlinear Perron--Frobenius theory~\cite{lemmens2012nonlinear}. Proposition~\ref{prop-topical-variation-nonexpansive} is the basic nonexpansiveness mechanism behind Fortet's monotone argument. The Hilbert-metric analysis in Section~\ref{sec-sinkhorn-hilbert} is stronger: under strict positivity assumptions on the kernel it upgrades nonexpansiveness to a genuine projective contraction.
\index{Perron-Frobenius theorem}
\index{additively homogeneous map}
\index{strict!positivity}
\end{rem}


\section{Sinkhorn Convergence: Sublinear Robust Rate}
\index{rate!sublinear}
\index{Sinkhorn!convergence}

Sinkhorn is cyclic coordinate ascent on the smooth dual objective $\mathcal D_\epsilon$ defined in~\eqref{eq-dual-sinkhorn-objective}; equivalently, it alternates KL projections on the two marginal constraint sets. The following statement records the rate most useful for complexity estimates: before any possible linear regime becomes visible, the dual objective gap decreases at order $1/k$. We state it for balanced entropic OT, which is the specialization needed here. Related robust Bregman-projection rates are developed in~\cite{peyre2026robust,altschuler2017near,pmlr-v80-dvurechensky18a}, and statistical consequences of entropic smoothing are analyzed in~\cite{genevay2018sample,bigot2017central}.
\index{dual!objective gap}
\index{KL!projection}
\index{coordinate ascent}
\index{entropic!OT}
\index{entropic!smoothing}
\index{marginal!constraint}
\index{Bregman!projection}
\index{KL!projection}

\begin{prop}[Pinsker inequality]\label{prop-pinsker}
\index{Pinsker inequality}
	If $p,q\in\simplex_n$, then
	\[
		\norm{p-q}_1^2 \leq 2\KL(p|q).
	\]
\end{prop}
\begin{proof}
	Let $A=\{i:p_i\geq q_i\}$ and set $a=\sum_{i\in A}p_i$, $b=\sum_{i\in A}q_i$. Then $a-b=\frac12\norm{p-q}_1$. Applying data processing for relative entropy to the partition $(A,A^c)$ gives
\index{data processing inequality}
\index{entropy!relative}
	\[
		\KL(p|q)
		\geq
		a\log\frac{a}{b}+(1-a)\log\frac{1-a}{1-b}.
	\]
	For fixed $b\in(0,1)$, the function
	\[
		h(a)=a\log\frac{a}{b}+(1-a)\log\frac{1-a}{1-b}-2(a-b)^2
	\]
	satisfies $h(b)=h'(b)=0$ and
	\[
		h''(a)=\frac1a+\frac1{1-a}-4\geq0,
	\]
	because $a(1-a)\leq1/4$. Hence the binary relative entropy is at least $2(a-b)^2=\frac12\norm{p-q}_1^2$. The boundary cases follow by approximation.
\index{entropy!relative}
\end{proof}

\begin{prop}[A compact $O(1/k)$ dual rate]\label{prop-sinkhorn-dual-rate}
\index{dual!rate}
	Assume that $\Xx$ and $\Yy$ are compact, that $c$ is bounded, and write $R=\sup c-\inf c$. Let $(f_k,g_k)$ be Sinkhorn dual iterates normalized by $\int f_k\d\alpha=0$, and let
\index{Sinkhorn!dual}
	\[
		\Delta_k
		\eqdef
		\mathcal D_\epsilon(f^\star,g^\star)-\mathcal D_\epsilon(f_k,g_k)
	\]
	be the dual suboptimality gap for the entropic dual objective~\eqref{eq-dual-sinkhorn-objective}. Then there exists a numerical constant $C$ such that
	\[
		\Delta_k \leq \frac{C R^2}{\epsilon(k+1)}.
	\]
\end{prop}

\begin{proof}
	The proof uses three elementary ingredients. First, the soft $c$-transform bounds the oscillation of every normalized iterate and every normalized optimum:
\index{soft!c-transform}
	\[
		\norm{f_k}_V+\norm{g_k}_V+\norm{f^\star}_V+\norm{g^\star}_V \leq C R.
	\]
	Here $\norm{h}_V\eqdef\sup h-\inf h$ is the variation seminorm, the same projective norm used in Hilbert's metric in Section~\ref{sec-sinkhorn-hilbert}. It is natural because dual potentials can be shifted by constants without changing the coupling.
\index{variation seminorm}
\index{dual!potential}

	Second, each Sinkhorn half-step is an exact KL projection. The Pythagorean identity for KL projections gives the ascent identity
\index{Sinkhorn!half-step}
\index{Pythagorean identity}
\index{KL!projection}
	\[
		\mathcal D_\epsilon(f_{k+1},g_{k+1})-
		\mathcal D_\epsilon(f_k,g_k)
		=
		\epsilon\big[\KL(\pi^\star|\pi_k)-\KL(\pi^\star|\pi_{k+1})\big],
	\]
	where $\pi_k=e^{(f_k\oplus g_k-c)/\epsilon}\alpha\otimes\beta$ and $\pi^\star$ is the optimal entropic coupling. The KL drop controls the marginal residuals through Pinsker's inequality, Proposition~\ref{prop-pinsker}. Third, convexity of the exponential dual objective gives a one-step estimate of the form
\index{marginal!residual}
\index{Pinsker inequality}
	\[
		\Delta_k^2
		\leq
		\frac{C R^2}{\epsilon}
		\big(\mathcal D_\epsilon(f_{k+1},g_{k+1})-
		\mathcal D_\epsilon(f_k,g_k)\big).
	\]
	This is the usual Bregman-projection estimate: the dual gap is controlled by the product of a bounded dual radius and the marginal residual corrected by the next projection, while the residual squared is controlled by the KL drop. Summing the reciprocal inequality obtained from the last display yields
\index{dual!radius}
\index{marginal!residual}
\index{Bregman!projection}
\index{dual!gap}
	\[
		\frac{1}{\Delta_{k+1}}-\frac{1}{\Delta_k}
		\geq \frac{\epsilon}{C R^2},
	\]
	and therefore $\Delta_k\leq C R^2/(\epsilon(k+1))$.
\end{proof}

\begin{cor}[Approximating unregularized OT by regularized dual costs]\label{cor-sinkhorn-dual-complexity}
	Consider discrete histograms $\a\in\simplex_n$, $\b\in\simplex_m$ and a finite cost matrix $\C$. Let $\mathcal D_{\epsilon,k}$ be the KL-normalized entropic dual value after $k$ Sinkhorn cycles, and let $\Delta_k$ be its dual gap. Define the entropy-corrected lower bound
\index{histogram}
\index{cost matrix}
\index{dual!gap}
	\[
		L_{\epsilon,k}
		\eqdef
		\mathcal D_{\epsilon,k}
		-\epsilon\HD(\a)-\epsilon\HD(\b),
		\qquad
		\HD(\a)=-\sum_i a_i\log a_i .
	\]
	Then
	\[
		0\leq
		\MKD_\C(\a,\b)-L_{\epsilon,k}
		\leq
		\epsilon\log(nm)+\Delta_k .
	\]
	Consequently, choosing $\epsilon\leq\delta/(2\log(nm))$ and running Sinkhorn until $\Delta_k\leq\delta/2$ gives a $\delta$-accurate lower bound on the unregularized OT value. Under Proposition~\ref{prop-sinkhorn-dual-rate}, the intermediate condition is $k+1 \geq 2 C R^2/(\epsilon\delta)$. With the above choice of $\epsilon$, it is sufficient to take
	\[
		k+1 \geq \frac{4 C R^2\log(nm)}{\delta^2},
	\]
	hence $k=O(R^2\log(nm)/\delta^2)$, up to constants and logarithmic stabilization factors.
\end{cor}
\begin{proof}
	The KL-normalized objective differs from the entropy convention~\eqref{eq-regularized-discr} by the constant $\epsilon\HD(\a)+\epsilon\HD(\b)$ on the transport polytope, because
\index{transportation!polytope}
	\[
		\KLD(\P|\a\otimes\b)
		=
		-\HD(\P)+\HD(\a)+\HD(\b).
	\]
	Let $E_\epsilon$ be the optimum of the entropy-regularized objective $\dotp{\P}{\C}-\epsilon\HD(\P)$. Since $0\leq\HD(\P)\leq\log(nm)$ for any coupling matrix,
	\[
		\MKD_\C(\a,\b)-\epsilon\log(nm)
		\leq
		E_\epsilon
		\leq
		\MKD_\C(\a,\b).
	\]
	The corrected iterate satisfies $L_{\epsilon,k}=E_\epsilon-\Delta_k$, which gives the displayed value bound. The final iteration estimate follows by combining $\Delta_k\leq C R^2/(\epsilon(k+1))$ with the target $\Delta_k\leq\delta/2$.
\end{proof}

The same identity also yields computable stopping diagnostics. The KL drops are exactly marginal defects: after a row update the row marginal is correct and the remaining drop is measured by the column marginal, and conversely after a column update. Thus marginal violations monitor both feasibility and the remaining dual gap, up to the bounded-radius constant above.
\index{dual!gap}

\begin{alg}[Certified entropic approximation of discrete OT]\label{alg:certified-entropic-ot-accuracy}
\index{entropic!regularization}
\index{dual!gap}
\textbf{Input:} Cost matrix $C\in\RR^{n\times m}$, weights $\a,\b$, target accuracy $\delta>0$.

\textbf{Output:} Certified lower bound $L_{\epsilon,k}$ for exact OT.

\textbf{Set}
\(\epsilon=\frac{\delta}{2\log(nm)}.\)

\textbf{Initialize} stabilized Sinkhorn potentials and set \(k=0\), \(\widehat\Delta_0=+\infty\).

\textbf{While} \(\widehat\Delta_k>\delta/2\) \textbf{do}:
\begin{algblock}

\textbf{Set} \(k\leftarrow k+1\).

\textbf{Compute} one row soft-transform update and one column soft-transform update in stabilized log-domain variables.

\textbf{Compute} the entropic dual value \(\mathcal D_{\epsilon,k}\) and the certified gap upper bound \(\widehat\Delta_k\).
\end{algblock}
\algreturnskip
\textbf{Return} \(L_{\epsilon,k} = \mathcal D_{\epsilon,k} - \epsilon\HD(\a)-\epsilon\HD(\b), \qquad 0\leq\MKD_{\C}(\a,\b)-L_{\epsilon,k}\leq\delta.\)
\end{alg}


\section{Sinkhorn Convergence: Linear Hilbert Metric Rate}
\index{Sinkhorn!convergence}
\index{linear!convergence}
\index{Hilbert!metric}
\label{sec-sinkhorn-hilbert}

Hilbert's projective metric gives a complementary convergence mechanism. Instead of following objective values, it measures distances between positive scaling vectors modulo global multiplication. Positive kernels are contractions in this geometry, yielding a global linear convergence statement.
\index{Hilbert!projective metric}
\index{linear!convergence}
\index{kernel!positive}
\index{scaling!vectors}

As initially explained by~\cite{franklin1989scaling}, the global convergence analysis of Sinkhorn is greatly simplified using Hilbert's projective metric on positive vectors.

\begin{defn}[Hilbert metric]\label{def-hilbert-metric}
\index{Hilbert!metric}
\index{Hilbert!projective metric}
	On $\RR_{+,*}^n$, Hilbert's projective metric is
	\eql{\label{eq-hilbert-metric}
		\foralls (\uD,\uD') \in (\RR_{+,*}^n)^2, \quad
		\Hilbert(\uD,\uD') \eqdef
			\norm{\log(\uD)-\log(\uD')}_V .
	}
	where, for vectors, $\norm{z}_V=\max_i z_i-\min_i z_i$.
\end{defn}
Multiplying both vectors by arbitrary positive constants does not change this quantity, so it is a distance only after passing to projective classes.

\begin{prop}[Hilbert metric on the projective cone]
\index{Hilbert!metric}
\index{projective!cone}
	The function $\Hilbert$ defines a complete distance on the projective cone $\RR_{+,*}^n/\sim$, where $\uD \sim \uD'$ means that $\uD=s\uD'$ for some $s>0$.
\index{projective!cone}
\end{prop}
\begin{proof}
	The map $\uD \mapsto \log \uD$ identifies $\RR_{+,*}^n/\sim$ with the quotient vector space $\RR^n/\Span(\ones_n)$, because multiplying $\uD$ by $s>0$ adds the constant vector $\log(s)\ones_n$. The variation seminorm $\norm{z}_V=\max_i z_i-\min_i z_i$ vanishes exactly on constant vectors, so it induces a norm on this quotient. Symmetry, the triangle inequality and separation therefore follow from the corresponding norm properties. Completeness follows because $\RR^n/\Span(\ones_n)$ is finite-dimensional and all finite-dimensional normed spaces are complete.
\index{triangle inequality}
\end{proof}
It was introduced independently by~\cite{birkhoff1957extensions} and~\cite{samelson1957perron} to provide quantitative proofs of the Perron-Frobenius theorem (convergence of iterations of positive matrices). Sinkhorn should be viewed as a nonlinear generalization of Perron-Frobenius.
\index{Perron-Frobenius theorem}

\begin{thm}[Birkhoff contraction theorem]\label{thm-birkhoff}
\index{Birkhoff contraction theorem}
	Let $\K \in \RR_{+,*}^{n \times m}$, then for $(\vD,\vD') \in (\RR_{+,*}^m)^2$
	\eq{
		\Hilbert(\K \vD,\K \vD') \leq \la(\K) \Hilbert(\vD,\vD')
		\text{ where }
		\choice{
			\la(\K) \eqdef \frac{ \sqrt{\eta(\K)}-1 }{ \sqrt{\eta(\K)}+1 } < 1 \\
			\eta(\K) \eqdef \umax{i,j,k,\ell} \frac{ \K_{i,k} \K_{j,\ell} }{ \K_{j,k} \K_{i,\ell} }.
		}
	}
\end{thm}
\begin{proof}
	We recall the finite-dimensional Birkhoff--Hopf estimate. For a positive linear map $A$ on a cone, define its projective diameter
\index{projective!diameter}
	\[
		\Delta(A) \eqdef \sup_{u,v>0} \Hilbert(Au,Av).
	\]
	Then
	\[
		\Hilbert(Au,Av) \leq \tanh(\Delta(A)/4)\Hilbert(u,v).
	\]
	Indeed, after quotienting by positive scalings, write $r_k=u_k/v_k$ and normalize so that $e^{-h/2}\leq r_k\leq e^{h/2}$, where $h=\Hilbert(u,v)$. The ratio between two coordinates of $Au/Av$ is a quotient of two weighted averages of the numbers $r_k$. A two-point extremal argument shows that the largest possible contraction is obtained when the mass of the two weights is placed on the two endpoints $e^{-h/2}$ and $e^{h/2}$; the cross-ratio bound defining $\Delta(A)$ then gives
	\[
		\Hilbert(Au,Av)
		\leq
		2\log\frac{e^{\Delta(A)/4}e^{h/2}+e^{-\Delta(A)/4}e^{-h/2}}
		{e^{\Delta(A)/4}e^{-h/2}+e^{-\Delta(A)/4}e^{h/2}}
		\leq \tanh(\Delta(A)/4)h.
	\]
	For the matrix $\K$, its projective diameter is
\index{projective!diameter}
	\[
		\Delta(\K)=\log \eta(\K),
		\qquad
		\eta(\K)=\max_{i,j,k,\ell}
		\frac{\K_{i,k}\K_{j,\ell}}{\K_{j,k}\K_{i,\ell}}.
	\]
	Therefore $\tanh(\Delta(\K)/4)=(\sqrt{\eta(\K)}-1)/(\sqrt{\eta(\K)}+1)$, which is the claimed contraction factor.
\end{proof}

This result extends to arbitrary convex cones and affine mappings from the cone to its interior.
\index{affine map}

The following theorem of~\cite{franklin1989scaling} uses Theorem~\ref{thm-birkhoff} to show the linear convergence of Sinkhorn's iterations.
\index{linear!convergence}
\index{Sinkhorn!algorithm}

\begin{thm}[Linear convergence of Sinkhorn]
\index{linear!convergence}
	One has $(\it{\uD},\it{\vD}) \rightarrow (\uD^\star,\vD^\star)$ and
	\eql{\label{eq-convlin-sinkh}
		\Hilbert(\it{\uD}, \uD^\star) = O(\la(\K)^{2\ell}), \quad
		\Hilbert(\it{\vD}, \vD^\star) = O(\la(\K)^{2\ell}).
	}
	One also has
	\eql{\label{eq-convsinkh-control}
		\Hilbert(\it{\uD}, \uD^\star) \leq \frac{\Hilbert( \it{\P}\ones_m,\a )}{1-\la(\K)}
		\qandq
		\Hilbert(\it{\vD}, \vD^\star) \leq \frac{\Hilbert( \P^{(\ell),\top} \ones_n,\b )}{1-\la(\K)},
	}
	where we denoted $\it{\P} \eqdef \diag(\it{\uD}) \K \diag(\it{\vD})$. Lastly, one has
	\eql{\label{eq-convlin-sinkh-prim}
		\norm{\log(\it{\P}) - \log(\P^\star)}_\infty \leq \Hilbert(\it{\uD}, \uD^\star) + \Hilbert(\it{\vD}, \vD^\star)
	}
	where $\P^\star$ is the unique solution of~\eqref{eq-regularized-discr}.
\end{thm}

\begin{proof}
	Notice that for any $(\vD,\vD') \in (\RR_{+,*}^m)^2$, one has
	\eq{
		\Hilbert(\vD,\vD') = \Hilbert(\vD/\vD',\ones_m) = \Hilbert(\ones_m/\vD,\ones_m/\vD'),
	}
	since indeed $\Hilbert(\a/\vD,\a/\vD') = \Hilbert(\vD,\vD')$.
	This shows that
	\begin{align*}
		\Hilbert(\itt{\uD},\uD^\star) &= \Hilbert\pa{ \frac{\a}{\K \it{\vD}}, \frac{\a}{\K \vD^\star} }
		= \Hilbert( \K \it{\vD}, \K \vD^\star ) \leq \la(\K) \Hilbert( \it{\vD}, \vD^\star ).
	\end{align*}
	where we used Theorem~\ref{thm-birkhoff}. This shows~\eqref{eq-convlin-sinkh}. One also has, using the triangular inequality,
\index{triangle inequality}
	\begin{align*}
		\Hilbert(\it{\uD},\uD^\star) &\leq \Hilbert(\itt{\uD},\it{\uD}) + \Hilbert(\itt{\uD},\uD^\star)
		\leq \Hilbert\pa{ \frac{\a}{\K \it{\vD}},\it{\uD} } + \la(\K) \Hilbert(\it{\uD},\uD^\star) \\
		&= \Hilbert\pa{ \a,\it{\uD} \odot  ( \K \it{\vD} ) } + \la(\K) \Hilbert(\it{\uD},\uD^\star),
	\end{align*}
	which gives the first part of~\eqref{eq-convsinkh-control} since
	$\it{\uD} \odot  ( \K \it{\vD} ) = \it{\P}\ones_m$ (the second one being similar).
	The proof of~\eqref{eq-convlin-sinkh-prim} follows from~\cite[Lemma 3]{franklin1989scaling}.
\end{proof}

\paragraph{Dual-potential form of the contraction.}
\index{dual!potential}
\index{Birkhoff contraction theorem}

The Hilbert-metric contraction above can also be read directly on the dual potentials $(f,g)$ through their variation norms. For bounded cost $c$ (e.g. on compact spaces),
\index{dual!potential}
\eq{
	\norm{f_k-f^\star}_V = O(\la^k)
	\qandq
		\norm{g_k-g^\star}_V = O(\la^k)
}
\eq{
	\norm{ \log \frac{\d\pi_k}{\d\pi^\star} }_\infty
	=
	\norm{ (f_k-f^\star) \oplus (g_k-g^\star) }_\infty
	\leq
	\norm{f_k-f^\star}_V + \norm{g_k-g^\star}_V
}
where the contraction ratio is the Birkhoff factor of the Gibbs kernel $\K_\epsilon=e^{-c/\epsilon}$. Namely, with $\eta=\eta(\K_\epsilon)$ as in Theorem~\ref{thm-birkhoff} and $R\eqdef\sup c-\inf c$,
\index{contraction ratio}
\index{Gibbs!kernel}
\[
	\la=\frac{\sqrt{\eta}-1}{\sqrt{\eta}+1}
	\leq
	\tanh(R/(2\epsilon))<1 .
\]
One also has the following bounds
\eq{
	\norm{f_k-f^\star}_V \leq \frac{ \norm{\log\frac{\d \pi_{k,1}}{\d\al}}_\infty }{1-\la}
}
which can be used to provide a posterior estimate of the rate of convergence and serve as a stopping criterion.


The bound~\eqref{eq-convsinkh-control} shows that some error measures on the marginal constraints violation, for instance, $\norm{\it{\P} \ones_m - \a}_1$ and $\norm{\transp{\it{\P}} \ones_n - \b}_1$, are useful stopping criteria to monitor the convergence.
\index{marginal!constraint}
This theorem shows that the Sinkhorn algorithm converges linearly, but the worst-case rate becomes exponentially bad as $\epsilon \rightarrow 0$, since the global contraction factor is controlled by the cost range divided by $\epsilon$. In practice, one often observes a much better local linear regime after enough iterations.
\index{Sinkhorn!algorithm}
The same Hilbert-metric mechanism extends beyond finite matrices to positive integral operators under suitable compactness and positivity assumptions.
An important limitation of this analysis is that it requires a uniformly bounded cost and a kernel bounded away from degeneracy; Gaussian distributions with quadratic cost therefore require a different approach.
\index{cost!quadratic}

\section{Entropic Optimal Transport between Gaussians}
\index{Gaussian!Sinkhorn}
\label{sec-gaussian-sinkhorn}

Gaussian marginals provide an explicit finite-dimensional model of Sinkhorn's behavior. The soft $c$-transform preserves quadratic potentials, the optimal entropic coupling is Gaussian, and the value can be written with matrix square roots~\cite{janati2020gaussian}. This is the entropic counterpart of the Gaussian $\Wass_2$ formula in Proposition~\ref{prop-gaussian-w2-bures}.
\index{matrix square root}
\index{soft!c-transform}
\index{quadratic!potential}
\index{Gaussian!marginals}

\begin{prop}[Quadratic closure of Sinkhorn iterates]\label{prop-gaussian-sinkhorn-closure}
\index{quadratic!closure}
\index{Sinkhorn!iteration}
	Let $\be=\Gaussian(\mean_\be,\cov_\be)$ on $\RR^d$ and take $c(x,y)=\norm{x-y}^2$. If $g(y)$ is a quadratic polynomial such that the Gaussian integral below is finite, then the soft transform
\index{soft!transform}
	\[
		f(x)
		=
		-\epsilon\log
		\int
		\exp\!\left(\frac{g(y)-\norm{x-y}^2}{\epsilon}\right)
		\d\be(y)
	\]
	is a quadratic polynomial in $x$. In particular, starting Sinkhorn from $g_0=0$ gives
	\[
		f_1(x)
		=
		\frac{\epsilon}{2}
		\log\det\!\left(\Id+\frac{2\cov_\be}{\epsilon}\right)
		+
		\epsilon
		\dotp{x-\mean_\be}{(\epsilon\Id+2\cov_\be)^{-1}(x-\mean_\be)}.
	\]
\end{prop}

\begin{proof}
	The exponent is the sum of a quadratic polynomial in $y$ and the logarithm of the Gaussian density of $\be$. Completing the square in $y$ evaluates the integral as a positive constant times the exponential of a quadratic polynomial in $x$. Taking $-\epsilon\log$ therefore gives a quadratic polynomial.

	For $g_0=0$, let $Y\sim\be$. The Gaussian identity
	\[
		\EE\exp\!\left(-\frac{\norm{x-Y}^2}{\epsilon}\right)
		=
		\det\!\left(\Id+\frac{2\cov_\be}{\epsilon}\right)^{-1/2}
		\exp\!\left(
			-\dotp{x-\mean_\be}{(\epsilon\Id+2\cov_\be)^{-1}(x-\mean_\be)}
		\right)
	\]
	gives the displayed expression.
\end{proof}

\begin{prop}[Balanced entropic OT between Gaussians]\label{prop-gaussian-sinkhorn-closed-form}
\index{entropic!OT}
\index{Gaussian!Sinkhorn}
	Let $\al=\Gaussian(\mean_\al,\cov_\al)$ and $\be=\Gaussian(\mean_\be,\cov_\be)$ with positive-definite covariances, and let
	\[
		\cov_\al^{1/2}\cov_\be^{1/2}
		=
		U\diag(\sigma_i)V^\top
	\]
	be a singular-value decomposition. For the balanced objective
	\[
		\min_{\pi\in\Couplings(\al,\be)}
		\int\norm{x-y}^2\d\pi(x,y)
		+
		\epsilon\KL(\pi|\al\otimes\be),
	\]
	the optimizer is Gaussian with cross-covariance
	\[
		K_\epsilon
		=
		\cov_\al^{1/2}
		U\diag(s_i)V^\top
		\cov_\be^{1/2},
		\qquad
		s_i
		=
		\frac{\sqrt{\epsilon^2+16\sigma_i^2}-\epsilon}{4\sigma_i}.
	\]
	The optimal value is
	\[
		\norm{\mean_\al-\mean_\be}^2
		+
		\tr(\cov_\al)+\tr(\cov_\be)
		+
		\sum_i
		\left(
			-2\sigma_i s_i
			-\frac{\epsilon}{2}\log(1-s_i^2)
		\right).
	\]
	As $\epsilon\downarrow0$, $s_i\to1$ and the full covariance contribution, including the two trace terms, converges to $\Bb(\cov_\al,\cov_\be)^2$.
\end{prop}

\begin{proof}
	Let $(X,Y)$ be any coupling with finite second moments and cross-covariance
	\[
		K=\EE\big[(X-\mean_\al)(Y-\mean_\be)^\top\big].
	\]
	Replacing $(X,Y)$ by the Gaussian vector with the same mean and covariance leaves the quadratic cost unchanged. Since the marginals are fixed,
\index{cost!quadratic}
	\[
		\KL(\pi|\al\otimes\be)
		=
		-h(X,Y)+h(\al)+h(\be),
	\]
	where $h$ denotes differential entropy when it is finite. Among laws with a fixed covariance, the Gaussian maximizes entropy; if the entropy is not finite, the relative entropy is already $+\infty$. Thus the Gaussian replacement cannot increase the objective, and it is enough to optimize over Gaussian couplings.
\index{entropy!relative}

	Any such coupling has covariance
	\[
		\begin{pmatrix}
			\cov_\al & K\\
			K^\top & \cov_\be
		\end{pmatrix}.
	\]
	Write $K=\cov_\al^{1/2}S\cov_\be^{1/2}$. The block covariance constraint is equivalent to the singular values of $S$ being at most one, and finite entropy forces them to be strictly smaller than one. The cost depends on $K$ through
	\[
		\norm{\mean_\al-\mean_\be}^2+\tr(\cov_\al)+\tr(\cov_\be)-2\tr(K),
	\]
	while
	\[
		\KL(\pi|\al\otimes\be)
		=
		-\frac12\log\det(\Id-SS^\top).
	\]
	By von Neumann's trace inequality, the minimizer aligns $S$ with the singular vectors of $\cov_\al^{1/2}\cov_\be^{1/2}$, so $S=U\diag(s_i)V^\top$. The problem separates into scalar minimizations
	\[
		\min_{0\leq s<1}
		-2\sigma_i s-\frac{\epsilon}{2}\log(1-s^2).
	\]
	The first-order condition is $2\sigma_i=\epsilon s/(1-s^2)$, whose positive solution is the displayed $s_i$. Substitution gives the value formula. Since $s_i\to1$ and $\epsilon\log(1-s_i^2)\to0$ as $\epsilon\downarrow0$, the spectral sum converges to $-2\sum_i\sigma_i$. The identity
	\[
		\sum_i\sigma_i
		=
		\tr\!\left((\cov_\al^{1/2}\cov_\be\cov_\al^{1/2})^{1/2}\right)
	\]
	then gives the Bures--Wasserstein covariance contribution.
\index{Bures-Wasserstein geometry}
\end{proof}

\begin{alg}[Closed-form Gaussian Sinkhorn coupling]\label{alg:gaussian-sinkhorn-closed-form}
\index{Gaussian!Sinkhorn}
\index{entropic!OT}
\textbf{Input:} Gaussian marginals $\al=\Gaussian(\mean_\al,\cov_\al)$, $\be=\Gaussian(\mean_\be,\cov_\be)$, scale $\epsilon>0$.

\textbf{Output:} Gaussian entropic coupling covariance.

\textbf{Compute singular value decomposition}
\(\cov_\al^{1/2}\cov_\be^{1/2} = U\diag(\sigma_i)V^\top .\)

\textbf{For} each $\sigma_i>0$ \textbf{do}
\begin{algblock}
\(s_i=\frac{\sqrt{\epsilon^2+16\sigma_i^2}-\epsilon}{4\sigma_i}.\)
\end{algblock}
\textbf{Set} cross-covariance:
\(K_\epsilon = \cov_\al^{1/2}U\diag(s_i)V^\top\cov_\be^{1/2}.\)
\textbf{Return} Gaussian coupling with means $(\mean_\al,\mean_\be)$, marginal covariances $(\cov_\al,\cov_\be)$, and cross-covariance $K_\epsilon$.
\end{alg}

\begin{cor}[Gaussian Sinkhorn divergence and smoothed Bures term]\label{cor-gaussian-sinkhorn-divergence}
\index{Bures!smoothed term}
\index{Sinkhorn!divergence}
\index{Gaussian!Sinkhorn divergence}
\index{Gaussian!Sinkhorn}
	Let $\al=\Gaussian(\mean_\al,\cov_\al)$ and $\be=\Gaussian(\mean_\be,\cov_\be)$ have positive-definite covariances. For $r>0$, define
	\[
		\tau_\epsilon(r)
		\eqdef
		\frac{\sqrt{\epsilon^2+16r^2}-\epsilon}{4r},
		\qquad
		\psi_\epsilon(r)
		\eqdef
		-2r\,\tau_\epsilon(r)
		-\frac{\epsilon}{2}\log\bigl(1-\tau_\epsilon(r)^2\bigr).
	\]
	If $\sigma_i(\Sigma,\Lambda)$ denotes the singular values of $\Sigma^{1/2}\Lambda^{1/2}$ and $\lambda_i(\Sigma)$ the eigenvalues of $\Sigma$, then the debiased Sinkhorn divergence~\eqref{eq-sinkhorn-divergence} is
\index{Sinkhorn!divergence}
	\[
		\bar\MK_{\norm{\cdot-\cdot}^2}^{\epsilon}(\al,\be)
		=
		\norm{\mean_\al-\mean_\be}^2
		+
		\Bb_\epsilon(\cov_\al,\cov_\be)^2,
	\]
	where the Gaussian covariance contribution is the debiased smoothed Bures term
\index{Gaussian!covariance}
\index{Bures!smoothed term}
	\[
		\Bb_\epsilon(\Sigma,\Lambda)^2
		\eqdef
		\sum_i \psi_\epsilon\bigl(\sigma_i(\Sigma,\Lambda)\bigr)
		-\frac12\sum_i\psi_\epsilon\bigl(\lambda_i(\Sigma)\bigr)
		-\frac12\sum_i\psi_\epsilon\bigl(\lambda_i(\Lambda)\bigr).
	\]
	Moreover $\Bb_\epsilon(\Sigma,\Lambda)^2\to\Bb(\Sigma,\Lambda)^2$ as $\epsilon\downarrow0$, where $\Bb$ is the Bures--Wasserstein metric of Proposition~\ref{prop-gaussian-w2-bures}.
\index{Bures-Wasserstein geometry}
\end{cor}

\begin{proof}
	Proposition~\ref{prop-gaussian-sinkhorn-closed-form} writes the raw entropic value as the squared mean displacement plus trace terms and a spectral sum. With the notation above, the spectral part is exactly $\sum_i\psi_\epsilon(\sigma_i(\cov_\al,\cov_\be))$. Applying the same formula to the self-costs $(\al,\al)$ and $(\be,\be)$ replaces these singular values by the eigenvalues of $\cov_\al$ and $\cov_\be$. In the polarization formula~\eqref{eq-sinkhorn-divergence}, the trace terms cancel:
\index{Sinkhorn!divergence}
	\[
		\tr\cov_\al+\tr\cov_\be
		-\frac12(2\tr\cov_\al)
		-\frac12(2\tr\cov_\be)=0,
	\]
	while the polarization of the squared mean terms leaves exactly $\norm{\mean_\al-\mean_\be}^2$. This gives the displayed formula. Since $\tau_\epsilon(r)\to1$ and $\epsilon\log(1-\tau_\epsilon(r)^2)\to0$, one has $\psi_\epsilon(r)\to-2r$. The limit is therefore
	\[
		\tr\Sigma+\tr\Lambda
		-2\sum_i\sigma_i(\Sigma,\Lambda)
		=
		\Bb(\Sigma,\Lambda)^2,
	\]
	which is the Bures formula~\eqref{eq-bure-defn}.
\end{proof}

\begin{prop}[One-dimensional Gaussian Sinkhorn rate]\label{prop-gaussian-sinkhorn-1d-rate}
\index{Gaussian!Sinkhorn}
	Consider $\al=\be=\Gaussian(0,1)$ on $\RR$ with $c(x,y)=(x-y)^2$. If a dual potential has the form $g_q(y)=q y^2+\mathrm{cst}$, then one soft transform has quadratic coefficient
\index{dual!potential}
\index{soft!transform}
	\[
		T_\epsilon(q)
		=
		1-\frac{1}{1-q+\epsilon/2},
		\qquad q<1+\epsilon/2,
	\]
	and one full Sinkhorn cycle acts as $q\mapsto T_\epsilon(T_\epsilon(q))$. The fixed point $q_\star=T_\epsilon(q_\star)$ is determined by
	\[
		A_\star^2-\frac{\epsilon}{2}A_\star-1=0,
		\qquad
		A_\star\eqdef 1-q_\star+\frac{\epsilon}{2}
		=
		\frac{\epsilon+\sqrt{\epsilon^2+16}}{4}.
	\]
	Consequently the local asymptotic contraction factor of one full Sinkhorn cycle on the quadratic coefficient is
	\[
		\rho_\epsilon=A_\star^{-4}
		=
		\left(\frac{4}{\epsilon+\sqrt{\epsilon^2+16}}\right)^4 .
	\]
\end{prop}

\begin{proof}
	Completing the square in
	\[
		\int
		\exp\!\left(\frac{q y^2-(x-y)^2}{\epsilon}\right)
		\d\Gaussian(0,1)(y)
	\]
	gives the coefficient $T_\epsilon(q)$. The fixed-point equation $q_\star=1-1/A_\star$, together with $q_\star=1+\epsilon/2-A_\star$, gives
	\[
		A_\star^2-\frac{\epsilon}{2}A_\star-1=0.
	\]
	The positive solution is the displayed $A_\star$. Since
	\[
		T_\epsilon'(q)=-\frac{1}{(1-q+\epsilon/2)^2},
	\]
	the derivative of the full-cycle map at the fixed point is $T_\epsilon'(q_\star)^2=A_\star^{-4}$. This gives the local asymptotic rate.
\end{proof}

This scalar calculation illustrates the general Gaussian convergence picture of Chizat, Delalande and Va\v{s}kevi\v{c}ius~\cite{chizat2024sharper}: the rate improves when $\epsilon$ is large or the covariance scales overlap well, and deteriorates in the small-temperature limit where the entropic coupling approaches a deterministic Brenier map.
\index{Brenier!map}


\section{Sample Complexity}
\index{sample complexity}

This section separates two statistical regimes. Exact OT resolves geometry at all spatial scales and pays dimension-dependent empirical rates; fixed-temperature Sinkhorn divergences smooth the dual potentials and recover parametric fluctuations, at the price of regularization bias.
\index{exact OT}
\index{Sinkhorn!divergence}
\index{dual!potential}

The sample complexity of unregularized OT suffers from the curse of dimensionality. Entropic regularization changes this picture: for a fixed $\epsilon>0$, Sinkhorn divergences have parametric $n^{-1/2}$ statistical rates, although the constant deteriorates when $\epsilon\to0$~\cite{genevay2018sample,bigot2017central}. Related two-sample-testing viewpoints are developed in~\cite{ramdas2017wasserstein}, and the large-$\epsilon$ kernel limit connects to classical MMD tests~\cite{gretton2012kernel}. This improvement should be balanced against the regularization bias, which vanishes only when $\epsilon$ is sent to zero.
\index{maximum mean discrepancy}
\index{curse of dimensionality}
\index{entropic!regularization}
\index{sample complexity}

\begin{figure}[ht]
\centering
\setlength{\tabcolsep}{2pt}
\begin{tabular}{@{}cc@{}}
\includegraphics[width=.45\linewidth]{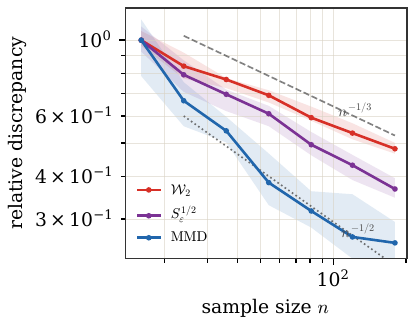} &
\includegraphics[width=.45\linewidth]{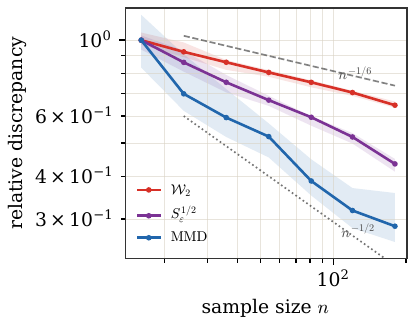} \\[-.1em]
\small $d=3$ & \small $d=6$
\end{tabular}
\caption{Empirical fluctuations in dimensions three and six. For each sample size $n$, two independent empirical measures $\al_n$ and $\al'_n$ are drawn from the same standard Gaussian law in $\RR^d$. The curves show the median of $D(\al_n,\al'_n)$, normalized by its value at the smallest displayed $n$, with interquartile bands. Exact OT follows a slower dimension-dependent scale, while MMD and the fixed-$\epsilon$ Sinkhorn divergence behave closer to the parametric $n^{-1/2}$ guide. This is a statistical illustration, not a solver benchmark.}
\index{Sinkhorn!divergence}
\index{empirical!measure}
\label{fig:sinkhorn-bias-variance-tradeoff}
\end{figure}

\begin{prop}[Empirical OT has dimension-dependent value rates]\label{prop-empirical-ot-rate}
\index{empirical!OT}
\index{empirical!OT rate}
	Let $\alpha$ and $\beta$ be probability distributions with densities bounded above and below on $[0,1]^d$, and let $\hat\alpha_n$ and $\hat\beta_m$ be independent empirical measures. For $d>2p$, the expected empirical error for estimating the two-sample distance obeys
\index{empirical!measure}
	\[
		\EE\abs{\Wass_p(\hat\alpha_n,\hat\beta_m)-\Wass_p(\alpha,\beta)}
		\lesssim
		n^{-1/d}+m^{-1/d}.
	\]
	The exponent changes in low dimension, but the important message is that exact OT deteriorates with the ambient dimension.
\index{exact OT}
\end{prop}

\begin{proof}
	By the triangle inequality,
\index{triangle inequality}
	\[
		\abs{\Wass_p(\hat\alpha_n,\hat\beta_m)-\Wass_p(\alpha,\beta)}
		\leq
		\Wass_p(\hat\alpha_n,\alpha)+\Wass_p(\hat\beta_m,\beta).
	\]
	We then recall the standard multiscale argument for each one-sample term, suppressing constants~\cite{dudley1969speed,fournier2015rate,weed2017sharp}. For the upper bound, partition $[0,1]^d$ into dyadic cubes. At scale $2^{-j}$, the empirical mass fluctuation over the cells is of order $n^{-1/2}2^{jd/2}$, while moving this excess mass inside cells costs $2^{-j}$. Summing the multiscale contributions up to the scale where the expected number of samples per cell is of order one gives $2^{-J}$ with $2^{Jd}\simeq n$, hence $n^{-1/d}$. The same estimate with $m$ samples gives the second term. Matching lower bounds for empirical OT follow from packing arguments; they show that this dimension dependence is intrinsic for exact OT.
\index{dyadic cube}
\index{packing argument}
\index{exact OT}
\index{empirical!OT}
\index{dimension dependence}
\end{proof}

\begin{prop}[MMD has a parametric value rate]\label{prop-mmd-sample-rate}
\index{maximum mean discrepancy}
\index{rate!parametric}
	Let $k$ be a bounded positive definite kernel with RKHS $\Hh_k$, and define
\index{RKHS}
\index{kernel!positive definite}
	\[
		\MMD_k(\alpha,\beta)
		=
		\norm{\int k(x,\cdot)\d(\alpha-\beta)(x)}_{\Hh_k}.
	\]
	If $\hat\alpha_n$ and $\hat\beta_m$ are independent empirical measures, then
\index{empirical!measure}
	\[
		\EE\abs{\MMD_k(\hat\alpha_n,\hat\beta_m)-\MMD_k(\alpha,\beta)}
\index{maximum mean discrepancy}
		\leq
		\kappa\left(\frac1{\sqrt n}+\frac1{\sqrt m}\right)
	\]
	when $k(x,x)\leq\kappa^2$.
\end{prop}

\begin{proof}
	Let $\Phi(x)=k(x,\cdot)$ be the feature map and $m_\alpha=\EE\Phi(X)$. The reverse triangle inequality for the RKHS norm gives
\index{triangle inequality}
\index{RKHS}
	\[
		\abs{\MMD_k(\hat\alpha_n,\hat\beta_m)-\MMD_k(\alpha,\beta)}
\index{maximum mean discrepancy}
		\leq
		\MMD_k(\hat\alpha_n,\alpha)+\MMD_k(\hat\beta_m,\beta).
	\]
	Independence cancels the cross terms after taking the squared norm and expectation, giving
	\[
		\EE\MMD_k(\hat\alpha_n,\alpha)^2
		=\frac1n\EE\norm{\Phi(X)-m_\alpha}_{\Hh_k}^2
		=\frac{1}{n}\Big(\EE k(X,X)-\EE k(X,X')\Big).
	\]
	The same estimate applies to $\hat\beta_m$, and Jensen's inequality together with $k(x,x)\leq\kappa^2$ gives the displayed bound.
\index{Jensen inequality}
\end{proof}

\begin{prop}[Sinkhorn divergences interpolate the rates]\label{prop-sinkhorn-sample-rate}
\index{Sinkhorn!divergence}
	Assume that $\alpha$ and $\beta$ are supported in a compact subset of $\RR^d$ and that the cost is smooth. For fixed $\epsilon>0$, debiased Sinkhorn divergences satisfy representative empirical bounds of the form
\index{Sinkhorn!divergence}
	\[
		\EE\abs{\bar\MK_\c^\epsilon(\hat\alpha_n,\hat\beta_m)-\bar\MK_\c^\epsilon(\alpha,\beta)}
		\leq
		C_{c,d}\,\epsilon^{-d/2}\left(\frac1{\sqrt n}+\frac1{\sqrt m}\right),
	\]
	up to constants and exponents depending on the precise smoothness class and support diameter. Thus regularization removes the $n^{-1/d}$ curse for fixed $\epsilon$, while the prefactor deteriorates as $\epsilon\to0$.
\end{prop}

\begin{proof}
	The proof follows the empirical-process argument of~\cite{genevay2018sample}. By the envelope theorem, the fluctuation of $\MK_\c^\epsilon$ with respect to its first marginal is controlled by the class of entropic dual potentials. The soft $c$-transform smooths these potentials at spatial scale $\sqrt\epsilon$ for a quadratic-type cost. Covering a bounded $d$-dimensional domain at this scale gives an effective complexity of order $\epsilon^{-d/2}$. Standard Rademacher or Dudley entropy bounds then give an empirical-process fluctuation of order $\epsilon^{-d/2}/\sqrt n$ for each marginal. Applying the same estimate to the three terms defining the debiased divergence gives the stated bound.
\index{Rademacher bound}
\index{Dudley entropy}
\index{soft!c-transform}
\index{empirical!process}
\index{envelope theorem}
\index{dual!potential}
\end{proof}

\begin{rem}[No free lunch when approximating exact OT]\label{rem-sinkhorn-no-free-lunch}
\index{exact OT}
	The parametric rate in Proposition~\ref{prop-sinkhorn-sample-rate} holds for fixed $\epsilon$. If the goal is to approximate the unregularized OT value, one must also account for the regularization bias. In a typical bounded-cost finite-dimensional regime,
\index{rate!parametric}
	\[
		\abs{\bar\MK_\c^\epsilon(\alpha,\beta)-\MK_\c(\alpha,\beta)}
		\leq C\epsilon,
		\qquad
		\EE\abs{\bar\MK_\c^\epsilon(\hat\alpha_n,\hat\beta_n)-\bar\MK_\c^\epsilon(\alpha,\beta)}
		\leq C_{c,d}\epsilon^{-d/2}n^{-1/2}.
	\]
	Balancing the two terms gives $\epsilon\simeq n^{-1/(d+2)}$ and total error of order $n^{-1/(d+2)}$. Equivalently, target accuracy $\eta$ requires choosing $\epsilon\simeq\eta$ and $n\simeq\eta^{-(d+2)}$ samples under this bound. Thus entropic smoothing improves the statistical behavior at fixed scale, but approximating exact OT still forces a bias-variance tradeoff whose exponent deteriorates with dimension.
\index{exact OT}
\index{entropic!smoothing}
\end{rem}



\chapter{Generalized Wasserstein Distances}
\index{generalized!Wasserstein}
\index{Wasserstein!distance}
\label{sec-extensions}
\label{sec-generalized-wasserstein-distances}

The first family of extensions keeps the idea of a distance between measures, but changes the geometry used to compare them. The variants below relax mass conservation, reduce high-dimensional transport to one-dimensional projections, or replace the trace quadratic cost by spectral gauges and robust projected viewpoints. They are useful when standard $\Wass_p$ is too rigid or too expensive, while still preserving a metric interpretation.
\index{mass!conservation}
\index{one-dimensional!projection}
\index{cost!quadratic}
\index{spectral!gauge}

\section{Unbalanced OT}
\index{unbalanced!OT}
\label{sec-unbalanced}

Unbalanced OT allows mass creation and destruction by penalizing marginal mismatch. It is essential when histograms are not normalized, when observations contain outliers, or when only part of the source should match the target~\cite{LieroMielkeSavareLong,2015-chizat-unbalanced,2017-chizat-focm}.
\index{histogram}
\index{mass!creation}

\paragraph{Relaxed formulation.}
\index{Kantorovich!relaxation}

For nonnegative measures $(\al,\be) \in \Mm_+(\Xx) \times \Mm_+(\Yy)$, a generic relaxed formulation is
\index{nonnegative measure}
\eql{\label{eq-unbalanced-primal}
	\UW_c(\al,\be)= \uinf{\pi \in \Mm_+(\Xx \times \Yy)} \int_{\Xx \times \Yy} c(x,y) \d\pi(x,y)
		+ \Dd_{\psi_1}(\pi_1|\al) + \Dd_{\psi_2}(\pi_2|\be),
}
where $\psi_1,\psi_2$ are convex entropy functions. Exact conservation $(\pi_1,\pi_2)=(\al,\be)$ is replaced by a cost for changing the marginals. Writing $\psi_s=\tau\bar\psi_s$ exposes the relaxation scale:
\index{entropy!function}
\[
	\UW_{c,\tau}(\alpha,\beta)
	=
	\inf_{\pi\geq0}
	\int c\d\pi
	+
	\tau\Dd_{\bar\psi_1}(\pi_1|\alpha)
	+
	\tau\Dd_{\bar\psi_2}(\pi_2|\beta).
\]
Large $\tau$ makes marginal mismatch expensive and approaches balanced OT when the total masses are compatible. Small $\tau$ makes creation and destruction cheap; after rescaling by $\tau$, the zero-transport part reveals the pure divergence geometry.

\begin{prop}[Small-transport-scale limit for marginal penalties]\label{prop-unbalanced-small-scale-limit}
\index{marginal!penalty}
	Assume that $\alpha,\beta$ are finite measures on a compact metric space $\Xx$, that $c$ is continuous, $c\geq0$, and $c(x,y)=0$ if and only if $x=y$. Assume also that the marginal divergences are nonnegative, weak-* lower semicontinuous, and have weak-* compact sublevel sets on $\Mm_+(\Xx)$. Then
\index{compact sublevel set}
\index{finite measure}
\index{lower semicontinuity}
\index{marginal!divergence}
	\[
		\lim_{\tau\downarrow0}\frac1\tau\UW_{c,\tau}(\alpha,\beta)
		=
		\inf_{\rho\in\Mm_+(\Xx)}
		\Dd_{\bar\psi_1}(\rho|\alpha)
		+
		\Dd_{\bar\psi_2}(\rho|\beta).
	\]
	The right-hand side is the infimal gluing divergence obtained by matching the two measures through a common zero-transport marginal $\rho$. In the dominated case, if $\alpha=a\lambda$, $\beta=b\lambda$, and the minimizing common marginal is absolutely continuous, $\rho=r\lambda$, this divergence decouples pointwise as
	\[
		\int \mathfrak m_{\bar\psi_1,\bar\psi_2}(a(x),b(x))\d\lambda(x),
		\qquad
		\mathfrak m_{\bar\psi_1,\bar\psi_2}(a,b)
		\eqdef
		\inf_{r\geq0} a\,\bar\psi_1(r/a)+b\,\bar\psi_2(r/b),
	\]
	with the usual recession conventions when $a=0$ or $b=0$. For superlinear entropies, and in particular for KL, finite energy forces this dominated form. Thus, when $\bar\psi_1=\bar\psi_2$ is the KL entropy,
\index{recession convention}
	\[
		\inf_{\rho\in\Mm_+(\Xx)} \KL(\rho|\alpha)+\KL(\rho|\beta)
		=
		\int (\sqrt{a}-\sqrt{b})^2\d\lambda .
	\]
	Thus the KL marginal relaxation contains the squared Hellinger distance as its local mass-variation limit.
\index{marginal!relaxation}
\end{prop}

\begin{proof}
	For the upper bound, restrict to diagonal plans $\pi=(\Id,\Id)_\sharp\rho$, whose transport cost is zero and whose two marginals are both $\rho$. This gives the desired upper bound after optimizing over $\rho$.

	For the lower bound, let $\tau_n\downarrow0$ and let $\pi_n$ be almost minimizing plans with bounded scaled values $\tau_n^{-1}\UW_{c,\tau_n}(\alpha,\beta)$. Since the divergences are nonnegative, $\int c\d\pi_n=O(\tau_n)$, hence $\int c\d\pi_n\to0$. The bounded scaled values also put the two marginals in compact divergence sublevel sets. Since a coupling has the same total mass as each marginal, the couplings are tight on $\Xx\times\Xx$. Up to subsequences, $\pi_n\rightharpoonup\pi_0$. Lower semicontinuity of the transport cost yields $\int c\d\pi_0=0$, so $\pi_0$ is concentrated on the diagonal. Its two marginals are therefore equal to a common measure $\rho$. Lower semicontinuity of the marginal divergences gives
\index{lower semicontinuity}
\index{marginal!divergence}
	\[
		\liminf_n\frac1{\tau_n}\UW_{c,\tau_n}(\alpha,\beta)
		\geq
		\Dd_{\bar\psi_1}(\rho|\alpha)+\Dd_{\bar\psi_2}(\rho|\beta),
	\]
	and optimizing over $\rho$ gives the lower bound.

	In the dominated case, writing $\rho=r\lambda$ gives
	\[
		\Dd_{\bar\psi_1}(\rho|\alpha)+\Dd_{\bar\psi_2}(\rho|\beta)
		=
		\int
		a\,\bar\psi_1(r/a)+b\,\bar\psi_2(r/b)
		\d\lambda,
	\]
	so the minimization over $\rho$ decouples into the scalar envelope $\mathfrak m_{\bar\psi_1,\bar\psi_2}$. For KL, no singular part is admissible when $\alpha$ and $\beta$ are dominated by $\lambda$. The pointwise objective is $r\log(r/a)-r+a+r\log(r/b)-r+b$. Its optimality condition is $\log(r/a)+\log(r/b)=0$, hence $r=\sqrt{ab}$, and the minimum is $a+b-2\sqrt{ab}=(\sqrt a-\sqrt b)^2$.
\end{proof}

\begin{prop}[Dual of unbalanced optimal transport]\label{prop-dual-unbalanced-ot}
\index{unbalanced!OT dual}
\index{unbalanced!OT}
	Under the usual Fenchel--Rockafellar qualification assumptions, one has equality between~\eqref{eq-unbalanced-primal} and
\index{duality!Fenchel-Rockafellar}
\index{Fenchel duality}
	\[
		\UW_c(\al,\be)
		=
		\usup{f\oplus g\leq c}
		-
		\Dd_{\psi_1}^*(-f|\al)
		-
		\Dd_{\psi_2}^*(-g|\be).
	\]
\end{prop}

\begin{proof}
	Use the variational formula~\eqref{eq-legendre} for the dual of a divergence and introduce the marginal variables through continuous potentials:
	\[
	\inf_{\pi\geq0}\sup_{f,g}
	\int c\d\pi + \int -f\d\pi_1 + \int -g\d\pi_2
	-
	\Dd_{\psi_1}^*(-f|\al)
	-
	\Dd_{\psi_2}^*(-g|\be).
	\]
	Exchanging the infimum and the supremum gives
	\[
	\sup_{f,g}
	-
	\Dd_{\psi_1}^*(-f|\al)
	-
	\Dd_{\psi_2}^*(-g|\be)
	+
	\inf_{\pi\geq0}\int\big(c-(f\oplus g)\big)\d\pi.
	\]
	The last infimum is $0$ when $f\oplus g\leq c$ and $-\infty$ otherwise, which gives the displayed dual.
\end{proof}

\paragraph{Reverse and homogeneous formulations.}
\index{homogeneous formulation}

The Liero--Mielke--Savar\'e formulation rewrites marginal penalties as a local transport cost and then homogenizes it. Assuming first that the reference measures and transported marginals have mutually absolutely continuous parts, one can factor the objective as
\index{reference!measure}
\index{marginal!penalty}
\begin{align*}
&\int c(x,y)\d\pi(x,y)
+\Dd_{\psi_1}(\pi_1|\al)+\Dd_{\psi_2}(\pi_2|\be) \\
&\qquad =
\int
\left(
	c(x,y)
	+
	\psi_1\!\left(\frac{\d\pi_1}{\d\al}(x)\right)\frac{\d\al}{\d\pi_1}(x)
	+
	\psi_2\!\left(\frac{\d\pi_2}{\d\be}(y)\right)\frac{\d\be}{\d\pi_2}(y)
\right)
\d\pi(x,y).
\end{align*}
This motivates the local reverse cost
\eql{\label{eq-unbalanced-reverse-local-cost}
	L_c(r,s) \eqdef c + r\psi_1(1/r)+s\psi_2(1/s),
}
with the usual recession convention at $r=0$ or $s=0$. If $\al=F\pi_1+\al^\perp$ and $\be=G\pi_2+\be^\perp$ are the Lebesgue decompositions of the reference marginals with respect to the transported marginals, then the reverse formulation reads
\index{recession convention}
\index{Lebesgue decomposition}
\index{reverse formulation}
\[
	\UW_c(\al,\be)
	=
	\inf_{\pi\geq0}
	\int L_{c(x,y)}(F(x),G(y))\d\pi(x,y)
	+
	\psi_1(0)\al^\perp(\Xx)
	+
	\psi_2(0)\be^\perp(\Yy).
\]

The homogeneous formulation is obtained by taking the perspective transform of $L_c$,
\index{homogeneous formulation}
\index{perspective transform}
\eql{\label{eq-unbalanced-homogeneous-local-cost}
	H_c(r,s) \eqdef \inf_{\theta>0}\theta L_c(r/\theta,s/\theta),
}
which is positively $1$-homogeneous. It defines
\eql{\label{eq-homogeneous}
	\HW_c(\al,\be)
	=
	\inf_{\pi\geq0}
	\int H_{c(x,y)}(F(x),G(y))\d\pi(x,y)
	+
	\psi_1(0)\al^\perp(\Xx)
	+
	\psi_2(0)\be^\perp(\Yy).
}

\begin{prop}[Homogenization does not change the unbalanced cost]\label{prop-homogeneous-unbalanced}
\index{homogenization}
\index{cost!unbalanced}
	One has $\HW_c(\al,\be)=\UW_c(\al,\be)$.
\end{prop}

\begin{proof}
	The inequality $\HW\leq\UW$ follows from $H_c\leq L_c$ by taking $\theta=1$. Conversely, take a feasible measure $\pi$ in the homogeneous formulation. By definition of the perspective transform, for every $(x,y)$ and every $\eta>0$ there exists a scale $\theta(x,y)>0$ such that
\index{homogeneous formulation}
\index{perspective transform}
	\[
		H_{c(x,y)}(F(x),G(y))+\eta
		\geq
		\theta(x,y)L_{c(x,y)}\big(F(x)/\theta(x,y),G(y)/\theta(x,y)\big).
	\]
	Replacing $\pi$ by the rescaled measure $\tilde\pi=\theta\pi$ and the densities by $F/\theta$ and $G/\theta$ gives an admissible competitor for the reverse formulation with cost no larger than the homogeneous cost plus $\eta\pi(\Xx\times\Yy)$. Letting $\eta\to0$ yields $\UW\leq\HW$. The singular terms are unchanged because the same rescaling is performed before taking the Lebesgue decomposition of the marginals.
\index{Lebesgue decomposition}
\index{reverse formulation}
\end{proof}

\paragraph{Conic lifting.}
\index{conic lifting}
\index{cone!lifting}

Assume now that $\Xx=\Yy$ and $\psi_1=\psi_2=\psi$. The last formulation lifts the problem to the cone space $\mathfrak C[\Xx]\eqdef(\Xx\times\RR_+)/\sim$, where all points $(x,0)$ are identified at the apex. For an exponent $p\geq1$, define
\[
	\De((x,r),(y,s))\eqdef H_{c(x,y)}(r^p,s^p)^{1/p}.
\]
Several classical unbalanced geometries are obtained by choosing $\psi$, $c$ and $p$ so that $\De$ is a distance on the cone:
\begin{itemize}
	\item $\Dd_\psi=\KL$, $p=2$, and $c(x,y)=-\log\cos^2(d(x,y)\wedge\pi/2)$ give the Hellinger--Kantorovich or Wasserstein--Fisher--Rao cone metric
\index{cone!metric}
	\[
		\De((x,r),(y,s))^2=r^2+s^2-2rs\cos(d(x,y)\wedge\pi/2).
	\]
	\item $\Dd_\psi=\KL$, $p=2$, and $c(x,y)=d(x,y)^2$ give the Gaussian Hellinger cone metric
\index{cone!metric}
	\[
		\De((x,r),(y,s))^2=r^2+s^2-2rs e^{-d(x,y)^2/2}.
	\]
	\item $\Dd_\psi=\TV$, $p=1$, and $c(x,y)=d(x,y)$ give the partial-transport cone cost
	\[
		\De((x,r),(y,s))=r+s-(r\wedge s)(2-d(x,y))_+.
	\]
\end{itemize}
The corresponding cone value is
\[
	\CW(\al,\be)
	=
	\inf_{\gamma\in\Mm_+(\mathfrak C[\Xx]^2)}
	\left\{
	\int \De((x,r),(y,s))^p\d\gamma
	\; ; \;
	\int r^p\d\gamma_1(\cdot,r)=\al,
	\int s^p\d\gamma_2(\cdot,s)=\be
	\right\}.
\]

\begin{thm}[Cone formulation of unbalanced OT]\label{thm-cone-unbalanced-ot}
\index{cone!formulation}
\index{unbalanced!OT}
	One has $\UW=\HW=\CW$. If $\De$ is a distance, then $\CW^{1/p}$ is a distance between nonnegative measures.
\index{nonnegative measure}
\end{thm}

\begin{proof}
	The equality $\UW=\HW$ is Proposition~\ref{prop-homogeneous-unbalanced}. To prove $\HW=\CW$, disintegrate an admissible cone coupling $\gamma$ with respect to its spatial variables $(x,y)$ and radii $(r,s)$. The cone marginal constraints say precisely that the spatial marginals are recovered after weighting by $r^p$ and $s^p$. Since $\De((x,r),(y,s))^p=H_{c(x,y)}(r^p,s^p)$, integrating the cone cost gives the homogeneous objective. Conversely, any homogeneous competitor can be lifted to the cone by placing, over each $(x,y)$, radii whose $p$th powers are the two density factors appearing in $H_c$.
\index{disintegration}
\index{marginal!constraint}

	If $\De$ is a distance on the cone, then $\CW^{1/p}$ is the usual $p$-Wasserstein distance between lifted measures under the linear cone-marginal constraints. Symmetry and the triangle inequality follow from the corresponding Wasserstein properties and the gluing lemma on the cone. If the distance is zero, an optimal cone coupling is concentrated on the diagonal of the cone, so the weighted projections agree and therefore $\alpha=\beta$.
\index{triangle inequality}
\index{Wasserstein!distance}
\index{gluing lemma}
\end{proof}

\paragraph{Entropic KL relaxation.}
\index{KL!relaxation}

A generic entropic regularization of unbalanced OT reads
\index{entropic!regularization}
\index{unbalanced!OT}
\[
	\inf_{\pi\in\Mm_+(\Xx\times\Yy)}
	\int c\d\pi
	+
	\Dd_{\psi_1}(\pi_1|\al)
	+
	\Dd_{\psi_2}(\pi_2|\be)
	+
	\epsilon\Dd_\phi(\pi|\al\otimes\be).
\]
Its dual is
\[
	\sup_{f,g}
	-
	\Dd_{\psi_1}^*(-f|\al)
	-
	\Dd_{\psi_2}^*(-g|\be)
	-
	\epsilon\Dd_\phi^*\left(\frac{f\oplus g-c}{\epsilon}\middle|\al\otimes\be\right).
\]
For $\Dd_\phi=\KL$, the primal-dual relation is $\d\pi=e^{(f\oplus g-c)/\epsilon}\d\al\d\be$. If in addition $\Dd_{\psi_1}=\Dd_{\psi_2}=\tau\KL$, the dual specializes to
\[
	\sup_{f,g}
	-\tau\int(e^{-f/\tau}-1)\d\al
	-\tau\int(e^{-g/\tau}-1)\d\be
	-\epsilon\iint
	\left(e^{(f\oplus g-c)/\epsilon}-1\right)
	\d\al\d\be,
\]
and coordinate maximization gives the damped soft transforms
\index{soft!transform}
\begin{align*}
	f(x)
	&=
	-
	\frac{\tau\epsilon}{\tau+\epsilon}
	\log\int_{\Yy}
	\exp\left(\frac{g(y)-c(x,y)}{\epsilon}\right)\d\be(y),\\
	g(y)
	&=
	-
	\frac{\tau\epsilon}{\tau+\epsilon}
	\log\int_{\Xx}
	\exp\left(\frac{f(x)-c(x,y)}{\epsilon}\right)\d\al(x).
\end{align*}
In the discrete case, with $K_{i,j}=e^{-\C_{i,j}/\epsilon}\a_i\b_j$ and $\rho=\tau/(\tau+\epsilon)$, this gives the generalized Sinkhorn scaling
\index{Sinkhorn!scaling}
\[
	u_i\leftarrow \left(\frac{\a_i}{(K v)_i}\right)^\rho,
	\qquad
	v_j\leftarrow \left(\frac{\b_j}{(\transp{K}u)_j}\right)^\rho,
	\qquad
	\P=\diag(u)K\diag(v).
\]
The exponent $\rho<1$ is the visible difference with balanced Sinkhorn: marginal corrections are damped because violating the marginals is allowed.

\begin{alg}[Unbalanced Sinkhorn scaling]\label{alg:unbalanced-sinkhorn}
\textbf{Input:} Weights $\a,\b$, cost matrix $\C$, entropic scale $\epsilon>0$, KL strength $\tau>0$, tolerance $\mathrm{tol}$.

\textbf{Output:} Unbalanced entropic coupling $\P$.

\textbf{Initialize:} Set
\(K_{ij}=e^{-\C_{ij}/\epsilon}\a_i\b_j, \quad \rho=\frac{\tau}{\tau+\epsilon}, \quad u^{(0)}=\ones_n, \quad v^{(0)}=\ones_m, \quad \eta_0=+\infty, \quad k=0.\)

\textbf{While} \(\eta_k>\mathrm{tol}\) \textbf{do}:
\begin{algblock}

\textbf{Set} \(k\leftarrow k+1\).

\(u^{(k)} = \left(\frac{\a}{K v^{(k-1)}}\right)^\rho, \qquad v^{(k)} = \left(\frac{\b}{\transp{K}u^{(k)}}\right)^\rho.\)

\textbf{Set} \(\eta_k=\max\{\norm{u^{(k)}-u^{(k-1)}}_\infty,\norm{v^{(k)}-v^{(k-1)}}_\infty\}\).
\end{algblock}
\algreturnskip
\textbf{Return} \(\P^{(k)}=\diag(u^{(k)})K\diag(v^{(k)})\).
\end{alg}

\begin{figure}[H]
\centering
\setlength{\tabcolsep}{3pt}
\begin{tabular}{@{}ccc@{}}
\small small $\tau$ & \small medium $\tau$ & \small large $\tau$ \\[-.15em]
\includegraphics[width=.30\linewidth]{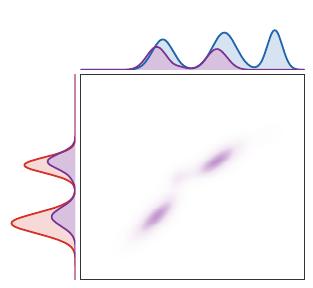} &
\includegraphics[width=.30\linewidth]{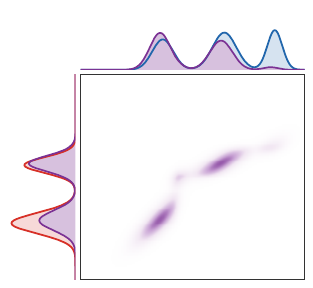} &
\includegraphics[width=.30\linewidth]{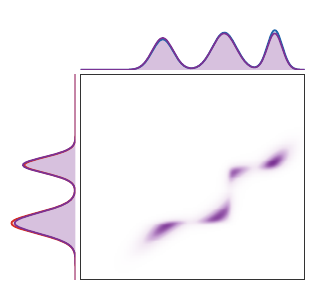}
\end{tabular}
\caption{KL unbalanced OT on one-dimensional Gaussian-mixture densities. The central matrix is the transported coupling. On the left, the red curve is the prescribed source marginal and the violet curve is the transported source marginal; the red gap is destroyed mass. On the top, the blue curve is the prescribed target marginal and the violet curve is the transported target marginal; the blue gap is created mass. Increasing $\tau$ makes marginal mismatch more expensive, so more mass is transported, including toward the far right target mode.}
\index{created mass}
\index{destroyed mass}
\index{Gaussian mixture}
\index{unbalanced!OT}
\label{fig:unbalanced-mass-relaxation}
\end{figure}

The entropy used in the marginal relaxation also changes the qualitative behavior. A KL penalty leads to smooth multiplicative rescaling. The reverse-KL, or Burg, penalty blows up when a transported marginal vanishes where the prescribed marginal is positive, so it discourages complete deletion of small modes. Total variation has a linear kink and behaves closer to partial transport: mass is either kept active or created and destroyed at nearly constant marginal price.
\index{partial transport}
\index{marginal!relaxation}
\index{total variation}

\begin{figure}[H]
\centering
\setlength{\tabcolsep}{3pt}
\begin{tabular}{@{}ccc@{}}
\small KL & \small Burg & \small total variation \\[-.15em]
\index{total variation}
\includegraphics[width=.30\linewidth]{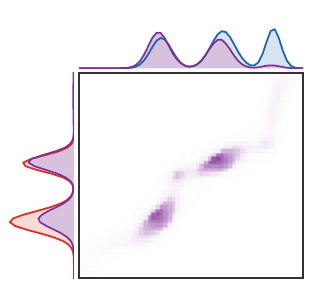} &
\includegraphics[width=.30\linewidth]{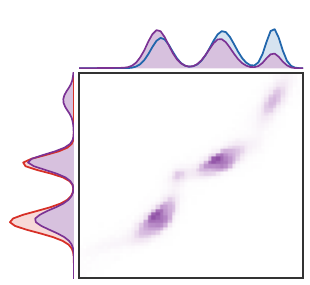} &
\includegraphics[width=.30\linewidth]{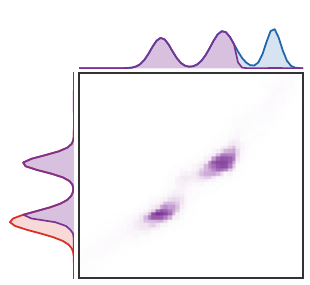}
\end{tabular}
\caption{Effect of the marginal divergence in unbalanced entropic OT. The geometric cost, entropic plan regularization $\epsilon$, and relaxation strength $\tau$ are fixed; only the marginal penalty changes. KL allows smooth mass variation, Burg keeps transported marginals from vanishing on prescribed modes, and total variation gives a sharper active-mass selection. The side plots use the same convention as Figure~\ref{fig:unbalanced-mass-relaxation}.}
\index{mass!variation}
\index{marginal!divergence}
\index{marginal!penalty}
\index{total variation}
\label{fig:unbalanced-divergence-choice}
\end{figure}

\section{Sliced Wasserstein Distances}
\index{Wasserstein!distance}
\index{sliced Wasserstein!distance}
\label{sec-sliced-wasserstein}

Sliced Wasserstein trades exact high-dimensional geometry for many one-dimensional projections. It is cheap, differentiable after sorting, and often effective in imaging and learning. For measures on $\RR^d$ and $\theta\in\Sphere^{d-1}$, let $P_\theta(x)=\dotp{\theta}{x}$ be the projection on direction $\theta$.
\index{one-dimensional!projection}

\begin{defn}[Sliced Wasserstein distance]\label{def-sliced-wasserstein}
\index{sliced Wasserstein!distance}
	Let $\sigma$ be the uniform probability measure on the sphere $\Sphere^{d-1}$. The sliced $p$-Wasserstein distance is
	\eql{\label{eq-sliced-wasserstein}
		\SW_p(\al,\be)^p
		\eqdef
		\int_{\Sphere^{d-1}}
		\Wass_p\big((P_\theta)_\sharp\al,(P_\theta)_\sharp\be\big)^p
		\d\sigma(\theta).
	}
\end{defn}
This construction is closely related to the Radon transform and is much cheaper to approximate numerically than high-dimensional OT, since each projected problem can be solved by sorting or quantiles~\cite{rabin-ssvm-11,2013-Bonneel-barycenter,kolouri2016sliced}. It metrizes the same weak-plus-moment topology as $\Wass_p$, but its geometry is not bi-Lipschitz equivalent to $\Wass_p$ in high dimension~\cite{nadjahi2019asymptotic}.
\index{probability measure}
\index{Radon!transform}

\begin{alg}[Monte Carlo sliced Wasserstein]\label{alg:monte-carlo-sliced-wasserstein}
\index{sliced Wasserstein!distance}
\textbf{Input:} Equal-weight point clouds $(x_i)_{i=1}^n$, $(y_i)_{i=1}^n$, exponent $p$, number of directions $L$.

\textbf{Output:} Monte Carlo estimate $\widehat{\SW}_p^p(\alpha,\beta)$.

\textbf{For} $\ell=1,\ldots,L$ \textbf{do}:
\begin{algblock}

\textbf{Sample} $\theta_\ell\sim\sigma$ on $\Sphere^{d-1}$.

\textbf{Set} $s_i^\ell=\dotp{\theta_\ell}{x_i}$ and $t_i^\ell=\dotp{\theta_\ell}{y_i}$.

\textbf{Let} $\sigma_\ell,\tau_\ell$ be stable sorting permutations:
\(s_{\sigma_\ell(1)}^\ell\leq\cdots\leq s_{\sigma_\ell(n)}^\ell, \quad t_{\tau_\ell(1)}^\ell\leq\cdots\leq t_{\tau_\ell(n)}^\ell.\)

\textbf{Compute}
\(E_\ell=\frac1n\sum_{i=1}^n\abs{s_{\sigma_\ell(i)}^\ell-t_{\tau_\ell(i)}^\ell}^p.\)
\end{algblock}
\algreturnskip
\textbf{Return} \(\widehat{\SW}_p^p(\alpha,\beta)=\frac1L\sum_{\ell=1}^L E_\ell.\)
\end{alg}

\begin{figure}[H]
\centering
\setlength{\tabcolsep}{3pt}
\begin{tabular}{@{}cccc@{}}
\includegraphics[width=.23\linewidth]{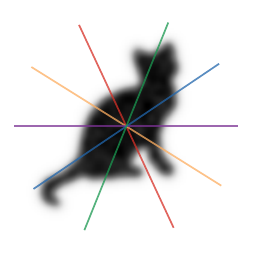} &
\includegraphics[width=.17\linewidth]{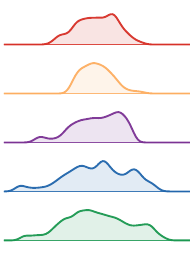} &
\includegraphics[width=.17\linewidth]{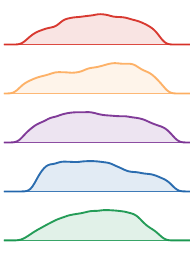} &
\includegraphics[width=.23\linewidth]{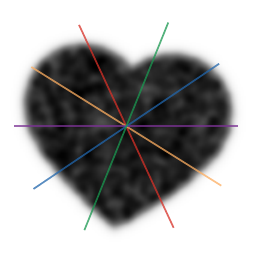}
\\[-.1em]
\small source density & \small source slices & \small target slices & \small target density
\end{tabular}
\caption{Sliced Wasserstein projections between two planar densities. The source and target are rendered as smoothed black-and-white density images obtained from dense farthest-point samples of two silhouettes. Five fixed directions are drawn on both densities. For each direction, the middle panels show smoothed one-dimensional density estimates of the projected measures $(P_\theta)_\sharp\alpha$ and $(P_\theta)_\sharp\beta$. Sliced OT averages one-dimensional Wasserstein discrepancies over many such directions, replacing a difficult planar comparison by a collection of sorted one-dimensional comparisons.}
\label{fig:sliced-wasserstein-projections}
\end{figure}

\begin{prop}[Metric properties of sliced Wasserstein]\label{prop-sliced-wasserstein-metric}
\index{sliced Wasserstein!distance}
	For $p\geq1$, $\SW_p$ is a distance on $\Pp_p(\RR^d)$. Moreover, $\SW_p$ metrizes weak convergence together with convergence of the $p$th moment. Finally,
\index{weak!convergence}
	\[
		\SW_p(\alpha,\beta)\leq \Wass_p(\alpha,\beta),
	\]
	and, for $p=2$ with the uniform probability measure on the sphere,
\index{probability measure}
	\[
		\SW_2(\alpha,\beta)^2\leq \frac1d\Wass_2(\alpha,\beta)^2.
	\]
\end{prop}

\begin{proof}
	Non-negativity and symmetry follow from the one-dimensional Wasserstein distance. For the triangle inequality, apply the triangle inequality of $\Wass_p$ for each direction $\theta$ and then Minkowski's inequality in $L^p(\Sphere^{d-1})$.
\index{Minkowski inequality}
\index{triangle inequality}
\index{Wasserstein!distance}

	If $\SW_p(\alpha,\beta)=0$, then $(P_\theta)_\sharp\alpha=(P_\theta)_\sharp\beta$ for almost every direction. By continuity of characteristic functions this holds for all directions, and the Cram\'er--Wold theorem implies $\alpha=\beta$. This proves separation.

	The bound $\SW_p\leq\Wass_p$ follows because $P_\theta$ is $1$-Lipschitz, so
	\[
		\Wass_p((P_\theta)_\sharp\alpha,(P_\theta)_\sharp\beta)
		\leq
		\Wass_p(\alpha,\beta)
	\]
	for every $\theta$. For $p=2$, using any coupling $\pi$ between $\alpha$ and $\beta$,
	\[
		\int_{\Sphere^{d-1}}\int |\dotp{x-y}{\theta}|^2\d\pi(x,y)\d\sigma(\theta)
		=
		\frac1d\int\norm{x-y}^2\d\pi(x,y).
	\]
	Optimizing over $\pi$ gives the sharper inequality. The weak-convergence statement follows from the same Cram\'er--Wold mechanism plus the moment condition: convergence in $\SW_p$ gives convergence of almost all one-dimensional projections and tightness of the $p$th moments; conversely, weak convergence with $p$th-moment convergence implies convergence of projected $\Wass_p$ distances and dominated convergence on the sphere.
\index{tightness}
\index{one-dimensional!projection}
\index{weak!convergence}
\end{proof}

\begin{rem}[Hilbert embedding for $\SW_2$]\label{rem-sliced-hilbert-embedding}
\index{Hilbert!embedding}
	In one dimension, $\Wass_2$ is the $L^2(0,1)$ distance between quantile functions. Hence
\index{quantile!function}
	\[
		\SW_2(\alpha,\beta)^2
		=
		\int_{\Sphere^{d-1}}\int_0^1
		\abs{F_{\theta,\alpha}^{-1}(u)-F_{\theta,\beta}^{-1}(u)}^2
		\d u\d\sigma(\theta),
	\]
	where $F_{\theta,\alpha}^{-1}$ is the quantile of $(P_\theta)_\sharp\alpha$. Thus $\SW_2$ is a Hilbertian distance after embedding each measure into its field of projected quantiles. Consequently, $\exp(-\gamma\SW_2^2)$ is a positive definite kernel on probability measures for every $\gamma>0$.
\index{probability measure}
\index{kernel!positive definite}
	Conversely, on compact sets, $\Wass_p$ can be bounded by a dimension-dependent power of $\SW_p$; such inequalities are weaker than the direct bound of Proposition~\ref{prop-sliced-wasserstein-metric} and explain why sliced distances metrize the same topology without being bi-Lipschitz equivalent to $\Wass_p$ in high dimension~\cite{bonnotte2013unidimensional,nadjahi2019asymptotic}.
\end{rem}

\begin{defn}[Max-sliced Wasserstein]\label{def-sliced-variants}
\index{sliced Wasserstein!distance}
\index{sliced Wasserstein!max}
	The max-sliced distance replaces the average over directions by the most discriminating one:
	\[
		\MaxSW_p(\alpha,\beta)
		\eqdef
		\sup_{\theta\in\Sphere^{d-1}}
		\Wass_p((P_\theta)_\sharp\alpha,(P_\theta)_\sharp\beta).
	\]
	It is useful when only a small set of projections carries most of the discrepancy, for instance in generative modeling~\cite{deshpande2019maxsliced}.
\index{generative model}
\end{defn}

\paragraph{Subspace-sliced variants.}
\index{subspace!sliced Wasserstein}

One-dimensional slices are extremely cheap, but they may discard too much geometry in high dimension. A natural compromise is to project onto $k$-dimensional subspaces: the projected OT problems remain lower dimensional, while each projection retains correlations inside a small block of coordinates. Varying $k$ therefore interpolates between ordinary slicing and full OT.

\begin{defn}[Subspace-sliced Wasserstein]\label{def-subspace-sliced-wasserstein}
\index{sliced Wasserstein!distance}
\index{subspace!sliced Wasserstein}
	Fix $1\leq k\leq d$. Subspace-sliced variants replace one-dimensional lines by $k$-dimensional orthogonal projections. If $U\in\RR^{d\times k}$ satisfies $\transp{U}U=\Id_k$, then
\index{subspace!sliced Wasserstein}
	\[
		\SW_{p,k}(\alpha,\beta)^p
		\eqdef
		\int \Wass_p((\transp{U})_\sharp\alpha,(\transp{U})_\sharp\beta)^p\d U,
	\]
	where $\d U$ denotes the normalized invariant measure on the Stiefel manifold
\index{Stiefel manifold}
	$\mathrm{St}(d,k)=\{U\in\RR^{d\times k}\;:\;\transp{U}U=\Id_k\}$, and
	\[
		\MaxSW_{p,k}(\alpha,\beta)
		\eqdef
		\sup_{\transp{U}U=\Id_k}
		\Wass_p((\transp{U})_\sharp\alpha,(\transp{U})_\sharp\beta).
	\]
	The case $k=1$ recovers ordinary sliced and max-sliced Wasserstein, while $k=d$ recovers the original Wasserstein distance.
\index{sliced Wasserstein!max}
\index{Wasserstein!distance}
\end{defn}

\begin{prop}[Basic bounds for sliced variants]\label{prop-sliced-variant-bounds}
\index{sliced Wasserstein!distance}
	Let $p\geq1$ and let $\alpha,\beta\in\Pp_p(\RR^d)$. With normalized spherical and Stiefel measures,
	\[
		\SW_p(\alpha,\beta)
		\leq
		\MaxSW_p(\alpha,\beta)
		\leq
		\Wass_p(\alpha,\beta).
	\]
	For $k$-dimensional subspace projections,
\index{subspace!projection}
	\[
		\SW_{p,k}(\alpha,\beta)
		\leq
		\MaxSW_{p,k}(\alpha,\beta)
		\leq
		\Wass_p(\alpha,\beta).
	\]
\end{prop}

\begin{proof}
	The first inequality in each line follows because an $L^p$ average over a probability space is bounded by the corresponding supremum. The second inequality follows because orthogonal projections are $1$-Lipschitz: pushing any admissible coupling between $\alpha$ and $\beta$ through a projection gives an admissible coupling for the projected measures with no larger transport cost. Optimizing over couplings and then averaging or maximizing over the projection gives the result.
\end{proof}

\paragraph{Min-sliced lifted transport plans.}
\index{sliced Wasserstein!min transport}
\index{plan!transport}

The preceding constructions define distances between projected measures. A different use of slicing is to use a projection only as a device for building a feasible high-dimensional transport plan. For equal-weight empirical measures $\alpha=n^{-1}\sum_i\delta_{x_i}$ and $\beta=n^{-1}\sum_i\delta_{y_i}$, sort the projected samples $\dotp{x_i}{\theta}$ and $\dotp{y_j}{\theta}$, and let $\sigma_\theta$ be the monotone matching induced by this sorting. The lifted plan
\index{empirical!measure}
\index{monotone!matching}
\index{plan!lifted}
\[
	\pi_\theta
	=
	\frac1n\sum_{i=1}^n\delta_{(x_i,y_{\sigma_\theta(i)})}
\]
is a genuine coupling between $\alpha$ and $\beta$ in the original space. Min-SWGG-type methods then choose the projection whose lifted plan has the smallest full-dimensional quadratic cost,
\index{cost!quadratic}
\index{plan!lifted}
\[
	\operatorname{MSWGG}_2(\alpha,\beta)^2
	\eqdef
	\min_{\theta\in\Sphere^{d-1}}
	\int\norm{x-y}^2\d\pi_\theta(x,y).
\]
This quantity is not a projected distance; it is a cheap feasible-plan construction. Consequently it gives an upper bound on $\Wass_2^2(\alpha,\beta)$, and the interest is algorithmic: the plan is obtained by sorting rather than by solving a high-dimensional linear program.
Indeed, each $\pi_\theta$ is an admissible coupling between the original measures, so
\[
	\Wass_2^2(\alpha,\beta)
	\leq
	\int\norm{x-y}^2\d\pi_\theta(x,y),
	\qquad
	\Wass_2^2(\alpha,\beta)
	\leq
	\operatorname{MSWGG}_2(\alpha,\beta)^2.
\]

\begin{alg}[Lifted min-sliced matching]\label{alg:lifted-min-sliced-matching}
\index{sliced Wasserstein!min transport}
\textbf{Input:} Equal-weight point clouds $(x_i)_{i=1}^n$, $(y_i)_{i=1}^n$, finite direction set $\Theta\subset\Sphere^{d-1}$.

\textbf{Output:} Feasible coupling $\pi_{\theta^\star}$ induced by the selected projection direction.

\textbf{For} each $\theta\in\Theta$ \textbf{do}:
\begin{algblock}

\textbf{Let} $\sigma_\theta,\tau_\theta$ be stable sorting permutations of $\dotp{\theta}{x_i}$ and $\dotp{\theta}{y_j}$.

\textbf{Match} $x_{\sigma_\theta(k)}$ to $y_{\tau_\theta(k)}$ for $k=1,\ldots,n$.

\textbf{Store} rank-matching permutation $\rho_\theta=\tau_\theta\circ\sigma_\theta^{-1}$.

\textbf{Evaluate}
\(E(\theta)=\frac1n\sum_{i=1}^n\norm{x_i-y_{\rho_\theta(i)}}^2.\)
\end{algblock}
\textbf{Set} $\theta^\star=\min\argmin_{\theta\in\Theta}E(\theta)$ for the fixed order on $\Theta$.
\textbf{Return} \(\pi_{\theta^\star}=\frac1n\sum_i\delta_{(x_i,y_{\rho_{\theta^\star}(i)})}.\)
\end{alg}

\begin{figure}[H]
\centering
\setlength{\tabcolsep}{3pt}
\begin{tabular}{@{}ccc@{}}
\small selected projection & \small lifted sliced plan & \small quadratic $W_2$ plan \\[-.15em]
\includegraphics[width=.30\linewidth]{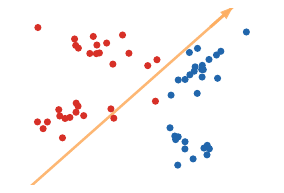} &
\index{sliced Wasserstein!min transport}
\includegraphics[width=.30\linewidth]{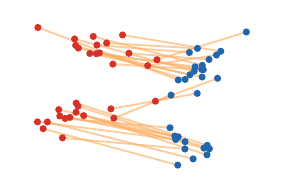} &
\index{plan!lifted}
\includegraphics[width=.30\linewidth]{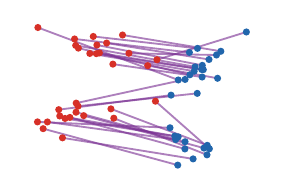}
\end{tabular}
\caption{Lifted min-sliced plan. A one-dimensional direction is selected by a small deterministic sweep, then red and blue atoms are sorted after projection and matched in that order. The middle panel lifts this one-dimensional matching back to the plane; it is a feasible coupling but not the same object as the quadratic $W_2$ matching shown on the right. This illustrates why sliced constructions are computationally light and interpretable, while losing some of the geometry of the full transport problem.}
\label{fig:min-sliced-transport-plan}
\end{figure}

\section{Vector Quantiles and Linear Optimal Transport}
\index{vector quantile}
\index{linear!OT}
\label{sec-linear-ot}
\label{sec-vector-quantiles-linearized-transport}

Linear OT starts from the multivariate analogue of quantile coordinates. The one-dimensional quantile function represents a probability measure by the monotone map sending a fixed reference law to it; in dimension $d>1$, Brenier's theorem gives the corresponding construction after choosing an absolutely continuous reference probability $\rho$, typically the uniform law on a convex body or a standard Gaussian.
\index{probability measure}
\index{Brenier!theorem}
\index{quantile!function}
\index{linear!OT}

\paragraph{Vector quantiles.}
\index{vector quantile}

Assume that $\rho$ is absolutely continuous. For a target law $\mu$ with finite second moment, its vector quantile relative to $\rho$ is the Brenier map
\index{Brenier!map}
\[
	\T_\mu=\nabla\phi_\mu,
	\qquad
	(\T_\mu)_\sharp\rho=\mu,
\]
or equivalently the solution of
\[
	\min_{\T_\sharp\rho=\mu}
	\int \norm{x-\T(x)}^2\d\rho(x).
\]
This construction is canonical only after fixing $\rho$: changing the reference law changes the coordinates used to represent $\mu$. Vector quantile regression uses the same idea conditionally, replacing scalar conditional quantiles by conditional Brenier maps and thereby encoding multivariate ranks and depths~\cite{carlier2016vector}.
\index{conditional!quantile}
\index{vector quantile}
\index{Brenier!map}

\paragraph{Linearized Wasserstein coordinates.}
\index{Wasserstein!coordinate}

Linear OT replaces a nonlinear transport distance by a Hilbert norm between reference maps. It is useful when one reference measure is fixed and many nearby distributions must be compared cheaply. Let $T_\alpha$ be the Brenier map pushing $\rho$ to $\alpha$, understood as an element of $L^2(\rho;\RR^d)$ and hence defined only $\rho$-almost everywhere. The linear OT embedding is
\index{reference!measure}
\index{Brenier!map}
\index{linear!OT}
\begin{equation}\label{eq-lot-embedding}
	\alpha \mapsto T_\alpha-\Id\in L^2(\rho;\RR^d),
	\qquad
	\LOT_\rho(\alpha,\beta)=\norm{T_\alpha-T_\beta}_{L^2(\rho)}.
\end{equation}
If one of the two targets equals the reference, the linearized distance is exact: for instance, $\LOT_\rho(\rho,\alpha)=\norm{T_\alpha-\Id}_{L^2(\rho)}=\Wass_2(\rho,\alpha)$. For two arbitrary targets, the coupling $(T_\alpha,T_\beta)_\sharp\rho$ is admissible but not generally optimal, so $\LOT_\rho$ is a tangent-space approximation of the Wasserstein geometry~\cite{wang2013linear}.
For a family $(\alpha_s)_s$ with weights $(\lambda_s)_s$, the linearized barycenter is obtained by averaging maps,
\[
	\bar T=\sum_s\lambda_s T_{\alpha_s},
	\qquad
	\bar\alpha_{\LOT}=\bar T_\sharp\rho.
\]
This is exact in one dimension, where quantile functions linearize $\Wass_2$, and it is especially useful when many barycenters with changing weights must be evaluated quickly.
\index{quantile!function}

\begin{rem}[Three Hilbertian embeddings of measures]\label{rem-three-hilbertian-measure-embeddings}
\index{Hilbertian!embedding}
	Several constructions in this text embed measures into Hilbert spaces, but they encode different geometries. Kernel mean embeddings send $\alpha$ to $\int k(x,\cdot)\d\alpha(x)$ in an RKHS and lead to MMD distances; see Section~\ref{sec-rkhs-mmd}. Quadratic sliced Wasserstein sends a measure to the collection of one-dimensional quantile functions of its projections, viewed in $L^2(\Sphere^{d-1}\times[0,1])$; see Section~\ref{sec-sliced-wasserstein}. Linear OT sends $\alpha$ to the displacement field $T_\alpha-\Id$ from a fixed reference $\rho$ in $L^2(\rho;\RR^d)$. The first construction is linear in the measure and depends on the kernel, the second is nonlinear but reduces OT to projected one-dimensional quantiles, and the third is a tangent approximation to the full Wasserstein geometry around a chosen reference.
\index{sliced Wasserstein!distance}
\index{RKHS}
\index{maximum mean discrepancy}
\index{kernel!mean embedding}
\index{linear!OT}
\end{rem}

\begin{figure}[htbp]
\centering
\setlength{\tabcolsep}{2pt}
\begin{tabular}{@{}ccc@{}}
\includegraphics[width=.45\linewidth]{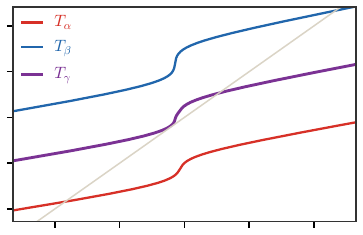} &
\index{linear!OT}
\includegraphics[width=.45\linewidth]{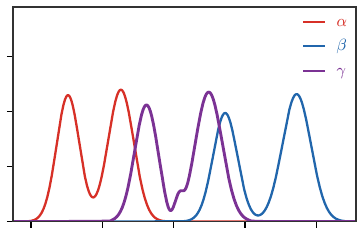} \\[-.1em]
\small 1D maps $T_\alpha,T_\beta,T_\gamma$ & \small measures $\alpha,\beta,\gamma$
\end{tabular}

\vspace{.25em}
\begin{tabular}{@{}ccc@{}}
\includegraphics[width=.30\linewidth]{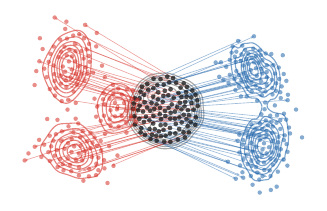} &
\index{linear!OT}
\includegraphics[width=.30\linewidth]{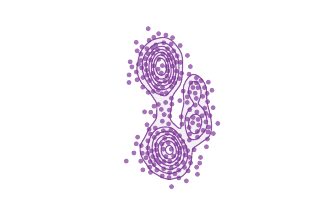} &
\includegraphics[width=.30\linewidth]{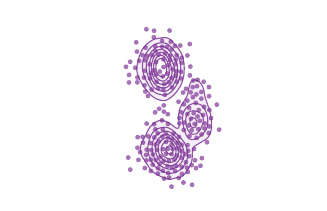} \\[-.1em]
\small maps from $\rho$ to $\alpha,\beta$ &
\small linearized barycenter $\bar T_\sharp\rho$ &
\small McCann midpoint
\end{tabular}
\caption{Linear OT coordinates. Fixing a reference measure $\rho$ turns each target into a map $T_\alpha$ from $\rho$ to $\alpha$, or equivalently into the displacement field $T_\alpha-\Id$. In one dimension this is exactly the quantile parametrization of $\Wass_2$, so averaging the maps toward a two-component $\alpha$ and a two-component $\beta$ gives the true Wasserstein barycenter. In two dimensions, the first panel shows the reference-to-target maps, computed on dense clouds and displayed on farthest-point subsets, with $\beta$ represented by two Gaussian components. The middle purple panel shows the linearized barycenter obtained by averaging the two maps from $\rho$. The right purple panel shows the genuine McCann midpoint between $\alpha$ and $\beta$, obtained by solving a direct OT problem between the two target clouds and interpolating at $t=1/2$.}
\index{Wasserstein!barycenter}
\index{reference!measure}
\index{linear!OT}
\label{fig:dualnorms-linear-ot-embedding}
\end{figure}

\begin{prop}[Local stability of linear OT]\label{prop-linear-ot-stability}
\index{linear!OT}
	Assume that the measures are supported on a fixed convex compact set, with densities bounded above and below, and that the Brenier maps from $\rho$ are regular. Then, for $\alpha,\beta$ in a sufficiently small regular neighborhood of $\rho$,
\index{Brenier!map}
	\[
		\Wass_2(\alpha,\beta)\leq \LOT_\rho(\alpha,\beta)
		\quad\text{and}\quad
		\LOT_\rho(\alpha,\beta)\leq C\Wass_2(\alpha,\beta)^\eta
	\]
	for constants $C>0$ and $\eta\in(0,1]$ depending on regularity.
\end{prop}
\begin{proof}
	The first inequality is immediate: $(T_\alpha,T_\beta)_\sharp\rho$ is a feasible coupling between $\alpha$ and $\beta$. The reverse local estimate is a standard stability statement for the Monge--Amp\`ere equation under the stated regularity assumptions: changes in the target measure control changes in the Brenier potential in H\"older norms, hence control $T_\alpha-T_\beta$ in $L^2(\rho)$. In simple one-dimensional settings, quantile functions make this exact with $\eta=1$.
\index{feasible coupling}
\index{Monge-Ampere equation}
\index{quantile!function}
	In several dimensions one should not read the statement as a global Lipschitz estimate in $\Wass_2$. Quantitative stability results for semi-discrete and Monge--Amp\`ere maps give H\"older exponents depending on the dimension, density bounds, support geometry and regularity; see for instance the estimates of M\'erigot, Delalande and Chazal~\cite{merigot2020stability}. Under stronger smooth perturbations of uniformly convex smooth densities, elliptic regularity can give Lipschitz dependence in stronger function norms, but converting those controls to Wasserstein perturbations generally loses powers.
\index{semi-discrete!OT}
\index{elliptic regularity}
\end{proof}

\section{Spectral and Robust Wasserstein Distances}
\index{Wasserstein!distance}
\index{spectral!Wasserstein}
\index{robust!Wasserstein}
\index{robust!Wasserstein distance}
\label{sec-spectral-subspace-wasserstein}

Spectral OT changes the scalar quadratic cost by measuring the whole displacement covariance through a matrix gauge. The same object admits a robust projected formulation: instead of fixing one projection, one maximizes over the polar set of the gauge. Subspace robust OT is the important non-convex rank-constrained version of this idea~\cite{paty2019subspace}; spectral gauges provide its convex minimax counterpart and connect to recent spectral-gradient viewpoints such as Muon dynamics~\cite{peyre2026muon}.
\index{rank constraint}
\index{minimax}
\index{polar set}
\index{projected formulation}
\index{Muon algorithm}
\index{displacement!covariance}
\index{cost!quadratic}
\index{spectral!gauge}

\begin{defn}[Monotone spectral gauge]\label{def-monotone-spectral-gauge}
\index{spectral!gauge}
\index{monotone!spectral gauge}
	A monotone spectral gauge on positive semidefinite matrices is a convex, positively $1$-homogeneous map $\gamma:\mathbb S_+^d\to\RR_+$ such that $\gamma(M)=0$ only for $M=0$, $\gamma(QM\transp{Q})=\gamma(M)$ for every orthogonal matrix $Q$, and
\index{positive!semidefinite matrix}
	\[
		0\preceq M\preceq N
		\quad\Longrightarrow\quad
		\gamma(M)\leq\gamma(N).
	\]
\end{defn}
The monotonicity condition means that increasing the displacement covariance in Loewner order cannot decrease the transport penalty.

\begin{defn}[Spectral Wasserstein distance]\label{def-spectral-wasserstein}
\index{Wasserstein!distance}
\index{spectral!Wasserstein}
\index{spectral!Wasserstein distance}
	Let $\gamma$ be a monotone spectral gauge. For a coupling $\pi\in\Couplings(\alpha,\beta)$, define its displacement covariance
\index{displacement!covariance}
	\[
		M_\pi\eqdef\int (x-y)(x-y)^\top\d\pi(x,y).
	\]
	The spectral Wasserstein distance associated with $\gamma$ is
\index{spectral!Wasserstein distance}
	\eql{\label{eq-spectral-wasserstein}
		\Wass_\gamma(\alpha,\beta)^2
		\eqdef
		\inf_{\pi\in\Couplings(\alpha,\beta)}\gamma(M_\pi).
	}
\end{defn}

The special case $\gamma(M)=\tr(M)$ gives the usual quadratic Wasserstein distance $\Wass_2$. The spectral gauge $\gamma(M)=\lambda_{\max}(M)$ instead measures the worst transported variance direction. For $A\succeq0$, define the quadratic projected transport cost
\index{Wasserstein!distance}
\index{spectral!gauge}
\eql{\label{eq-quadratic-projected-cost}
	\Wass_{2,A}(\alpha,\beta)^2
	\eqdef
	\inf_{\pi\in\Couplings(\alpha,\beta)}
	\int (x-y)^\top A(x-y)\d\pi(x,y)
	=
	\Wass_2((A^{1/2})_\sharp\alpha,(A^{1/2})_\sharp\beta)^2.
}
The polar set of the gauge is
\index{polar set}
\eql{\label{eq-spectral-polar-set}
	\mathcal B_\gamma
	\eqdef
	\enscond{A\succeq0}{\tr(AM)\leq\gamma(M)\quad\text{for all }M\succeq0},
}
so that, for a closed gauge, $\gamma(M)=\sup_{A\in\mathcal B_\gamma}\tr(AM)$.

\begin{prop}[Robust representation and metric equivalence]\label{prop-spectral-wasserstein-robust}
\index{robust!representation}
\index{metric!equivalence}
	Assume, for simplicity, that the measures are compactly supported and that $\gamma$ is closed and finite on the positive semidefinite cone. Then
	\[
		\Wass_\gamma(\alpha,\beta)^2
		=
		\sup_{A\in\mathcal B_\gamma}
		\Wass_{2,A}(\alpha,\beta)^2.
	\]
	If there exist constants $0<a\leq b<+\infty$ such that $a\Id\in\mathcal B_\gamma$ and $\mathcal B_\gamma\subset\enscond{A}{0\preceq A\preceq b\Id}$, equivalently
	\[
		a\tr(M)\leq\gamma(M)\leq b\tr(M)
		\qquad (M\succeq0),
	\]
	then
	\[
		\sqrt a\,\Wass_2(\alpha,\beta)
		\leq
		\Wass_\gamma(\alpha,\beta)
		\leq
		\sqrt b\,\Wass_2(\alpha,\beta).
	\]
	In particular, $\Wass_\gamma$ is a distance. When $\gamma$ is the restriction of a norm to the positive semidefinite cone, these bounds hold automatically in finite dimension, so $\Wass_\gamma$ is equivalent to $\Wass_2$ on measures with finite second moments.
\end{prop}

\begin{proof}
	Using the polar representation of $\gamma$,
	\[
		\Wass_\gamma(\alpha,\beta)^2
		=
		\inf_{\pi\in\Couplings(\alpha,\beta)}
		\sup_{A\in\mathcal B_\gamma}\tr(AM_\pi).
	\]
	The coupling set is convex and compact for weak convergence under compact support. Since $\gamma$ is a finite gauge on a finite-dimensional cone and vanishes only at the origin, it is equivalent to the trace norm on the slice $\tr(M)=1$, so $\mathcal B_\gamma$ is convex and compact. The map $(\pi,A)\mapsto\tr(AM_\pi)$ is affine in each variable and continuous. Sion's minimax theorem gives
\index{minimax}
\index{trace norm}
\index{weak!convergence}
	\[
		\inf_\pi\sup_{A\in\mathcal B_\gamma}\tr(AM_\pi)
		=
		\sup_{A\in\mathcal B_\gamma}\inf_\pi\tr(AM_\pi)
		=
		\sup_{A\in\mathcal B_\gamma}\Wass_{2,A}(\alpha,\beta)^2.
	\]
	For fixed $A\succeq0$, $\Wass_{2,A}$ is the Wasserstein pseudodistance associated with the seminorm $x\mapsto\norm{A^{1/2}x}$. Since all terms are nonnegative, the robust identity also gives
	\[
		\Wass_\gamma(\alpha,\beta)
		=
		\sup_{A\in\mathcal B_\gamma}\Wass_{2,A}(\alpha,\beta).
	\]
	A supremum of pseudodistances is symmetric and satisfies the triangle inequality.
\index{triangle inequality}

	If $a\Id\in\mathcal B_\gamma$ and $A\preceq b\Id$ for all $A\in\mathcal B_\gamma$, then
	\[
		a\Wass_2(\alpha,\beta)^2
		=
		\Wass_{2,a\Id}(\alpha,\beta)^2
		\leq
		\Wass_\gamma(\alpha,\beta)^2
		\leq
		b\Wass_2(\alpha,\beta)^2,
	\]
	which proves definiteness and equivalence with $\Wass_2$. The equivalence between these operator bounds and $a\tr(M)\leq\gamma(M)\leq b\tr(M)$ follows directly from the polar formula. In finite dimension, any norm restricted to the positive semidefinite cone is equivalent to the trace norm on that cone. The finite-second-moment case follows by truncation when these norm-equivalence bounds hold.
\index{polar formula}
\index{operator bound}
\index{trace norm}
\end{proof}

\begin{defn}[Subspace robust Wasserstein]\label{def-subspace-robust-wasserstein}
\index{robust!Wasserstein}
\index{subspace!robust Wasserstein}
	For $1\leq k\leq d$, the Paty--Cuturi subspace robust Wasserstein distance is
\index{Wasserstein!distance}
\index{robust!Wasserstein distance}
	\[
		\SRW_{2,k}(\alpha,\beta)
		\eqdef
		\sup_{\transp{U}U=\Id_k}
		\Wass_2((\transp{U})_\sharp\alpha,(\transp{U})_\sharp\beta)
		=
		\sup_{P^2=P=\transp{P},\ \tr(P)=k}\Wass_{2,P}(\alpha,\beta).
	\]
\end{defn}

For the Ky Fan gauge
\[
	\gamma_k(M)=\sum_{\ell=1}^k\lambda_\ell(M),
\]
where the eigenvalues are sorted in decreasing order, the polar set is
\index{polar set}
\[
	\mathcal B_{\gamma_k}=\enscond{A}{0\preceq A\preceq\Id,\ \tr(A)\leq k}.
\]
Thus $k=d$ gives $\gamma_d(M)=\tr(M)$ and recovers $\Wass_2$. The convex hull of rank-$k$ projectors is
\[
	\enscond{A}{0\preceq A\preceq\Id,\ \tr(A)=k},
\]
and, since $M\succeq0$, the associated support function is the same Ky Fan gauge. Thus $\Wass_{\gamma_k}$ is the convexified spectral counterpart of $\SRW_{2,k}$, while $\SRW_{2,k}$ keeps the original non-convex rank constraint. For $k=1$, $\gamma_1(M)=\lambda_{\max}(M)$ and $\mathcal B_{\gamma_1}=\enscond{A\succeq0}{\tr(A)\leq1}$.
This top-eigenvalue spectral Wasserstein geometry is the case connected to Muon-type spectral dynamics in~\cite{peyre2026muon}.
\index{spectral dynamics}
\index{Muon algorithm}
\index{spectral!Wasserstein}

\begin{figure}[H]
\centering
\setlength{\tabcolsep}{3pt}
\begin{tabular}{@{}cc@{}}
\includegraphics[width=.34\linewidth]{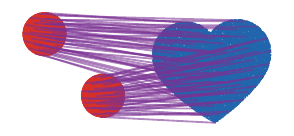} &
\includegraphics[width=.34\linewidth]{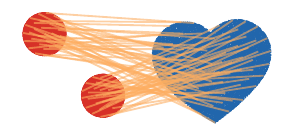}
\\[-.1em]
\small trace-gauge coupling & \small $\lambda_{\max}$-gauge coupling
\index{trace!gauge}
\end{tabular}
\\[.35em]
\begin{tabular}{@{}ccccc@{}}
\includegraphics[width=.17\linewidth]{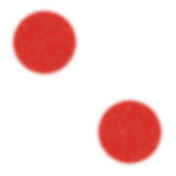} &
\includegraphics[width=.17\linewidth]{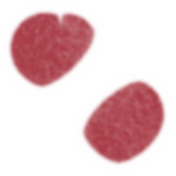} &
\includegraphics[width=.17\linewidth]{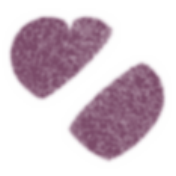} &
\includegraphics[width=.17\linewidth]{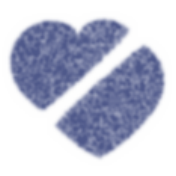} &
\includegraphics[width=.17\linewidth]{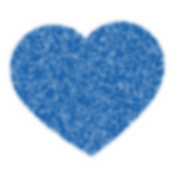} \\[-.1em]
\multicolumn{5}{c}{\small trace-gauge interpolation} \\[.2em]
\index{trace!gauge}
\includegraphics[width=.17\linewidth]{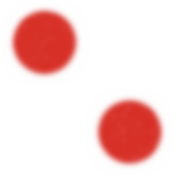} &
\includegraphics[width=.17\linewidth]{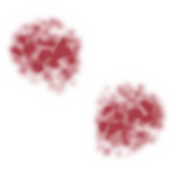} &
\includegraphics[width=.17\linewidth]{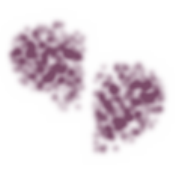} &
\includegraphics[width=.17\linewidth]{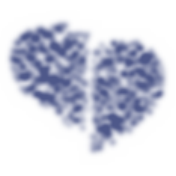} &
\includegraphics[width=.17\linewidth]{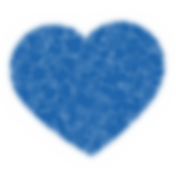} \\[-.1em]
\multicolumn{5}{c}{\small $\lambda_{\max}$-gauge interpolation} \\[-.1em]
\small $t=0$ & \small $t=1/4$ & \small $t=1/2$ & \small $t=3/4$ & \small $t=1$
\end{tabular}
\caption{Trace and spectral gauges for displacement covariances. The trace gauge minimizes the average squared displacement and gives the usual quadratic transport plan from a shifted two-disk source silhouette to a shifted heart-shaped target. The $\lambda_{\max}$ gauge penalizes the worst projected displacement variance; the displayed plan is obtained by approximating the robust formulation with finitely many directions. The last two rows render the corresponding displacement interpolations from very dense farthest-point silhouette samples and kernel-smoothed lifted plans, with the same convention as Figure~\ref{fig:monge-shape-mccann-interpolation}: white means zero density, and high density saturates in the red-to-blue interpolation color of the corresponding time. The spatial separation makes the different displacement geometries easier to read.}
\index{displacement!interpolation}
\index{displacement!covariance}
\index{McCann interpolation}
\index{spectral!gauge}
\index{plan!transport}
\index{plan!lifted}
\index{trace!gauge}
\label{fig:spectral-wasserstein-gauge}
\end{figure}



\chapter{Generalized OT Problems}
\label{sec-generalized-ot-problems}

The second family changes the optimization problem rather than only the ground distance. Barycenters average several measures, multi-marginal OT couples many measures at once, inverse OT learns the cost from observed transport, and weak OT allows randomized conditional responses. These formulations remain close to Kantorovich linear programming, but the object being optimized is richer than a single two-marginal coupling.
\index{linear programming}
\index{multi-marginal}

\section{OT Barycenters}
\index{Wasserstein!barycenter}
\label{sec-barycenters}

Barycenters ask how to average probability measures rather than points. This section explains the variational definition, the special closed forms in one dimension and for Gaussians, and the entropic algorithms used in practice.
\index{probability measure}

\paragraph{Fr\'echet means.}
\index{Frechet mean}

For discrete input histograms $\{\b_s\}_{s=1}^S$, with $\b_s \in \simplex_{n_s}$, and weights $\la \in \simplex_S$, a Wasserstein barycenter can be computed by minimizing
\index{histogram}
\index{Wasserstein!barycenter}
\eql{\label{eq-wass-discr}
	\umin{\a \in \simplex_n} \sum_{s=1}^S \la_s \MKD_{\C_s}(\a,\b_s),
}
where the cost matrices $\C_s \in \RR^{n \times n_s}$ are prescribed.
\index{cost matrix}

This barycenter problem was originally introduced by~\cite{Carlier_wasserstein_barycenter} following earlier ideas of~\cite{carlierekelandmatching}. For the quadratic cost on $\X=\RR^d$, their theory gives existence and uniqueness when at least one input measure is absolutely continuous, and more generally under hypotheses ensuring that the relevant optimal maps are well defined. Discrete existence, consistency and fixed-point constructions are further studied in~\cite{anderes2016discrete,alvarez2016fixed,leGouic2016existence}.
\index{Wasserstein!barycenter}
\index{cost!quadratic}

Given a set of input measures $(\be_s)_s$ defined on some space $\X$, the barycenter problem becomes
\eql{\label{eq-barycenter-generic}
	\umin{\al \in \Mm_+^1(\X)} \sum_{s=1}^S \la_s \MK_{\c}(\al,\be_s).
}
For $\X=\RR^d$ and $c(x,y)=\norm{x-y}^2$, if one input measure has a density, then the barycenter is unique~\cite{Carlier_wasserstein_barycenter}.
\index{Wasserstein!barycenter}

\begin{example}[Two measures recover a Wasserstein geodesic]
\index{Wasserstein!geodesic}
	For $S=2$, $c(x,y)=\norm{x-y}^2$ and weights $(1-t,t)$, the barycenter is the point at time $t$ on the Wasserstein geodesic between $\beta_0$ and $\beta_1$. If $T$ is the Brenier map from $\beta_0$ to $\beta_1$, this barycenter is $((1-t)\Id+tT)_\sharp\beta_0$, the McCann interpolation detailed in Section~\ref{sec-geodesic-convexity}. If no Monge map is available, the same construction uses an optimal coupling $\pi$ and the interpolation map $(x,y)\mapsto(1-t)x+ty$, giving $((1-t)x+ty)_\sharp\pi$.
\index{Monge!problem}
\index{optimal coupling}
\index{McCann interpolation}
\index{convexity!geodesic}
\index{interpolation map}
\index{Brenier!map}
\end{example}
\begin{example}[Dirac inputs recover Fr\'echet means]
	Problem~\eqref{eq-barycenter-generic} generalizes the computation of barycenters of points $(x_s)_{s=1}^S \in \X^S$ to arbitrary measures. Indeed, if $\be_s=\de_{x_s}$ is a single Dirac mass, then a solution to~\eqref{eq-barycenter-generic} is $\de_{x^\star}$ where $x^\star$ is a Fr\'echet mean of the points $(x_s)_s$.
\index{Frechet mean}
\index{Dirac mass}
\end{example}

\begin{figure}[ht]
\centering
\begin{tabular}{@{}cc@{}}
\includegraphics[width=.42\linewidth]{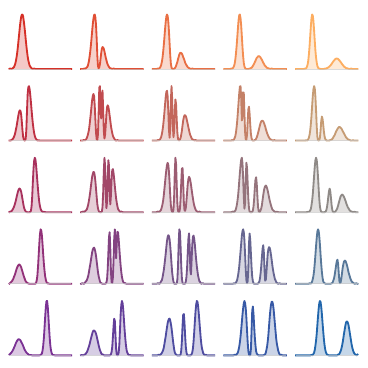} &
\includegraphics[width=.42\linewidth]{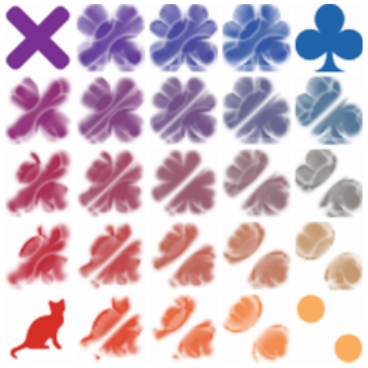}
\\[-.1em]
\small one-dimensional quantile barycenters &
\index{quantile!barycenter}
\small two-dimensional entropic barycenters
\index{entropic!barycenter}
\end{tabular}
\caption{Wasserstein barycenter grids for four corner measures. The left panel uses the one-dimensional formula $Q_{u,v}=\sum_{i,j}\lambda_{ij}(u,v)Q_{ij}$ for one Gaussian law and three asymmetric two-Gaussian mixtures, and displays densities reconstructed from the averaged quantiles. The right panel computes entropic Wasserstein barycenters on a common pixel grid for the cat, two-disk, cross and clover silhouettes, using the normalized squared ground cost, $\epsilon=4\cdot10^{-4}$ and a Sinkhorn tolerance of $5\cdot10^{-8}$. The barycenters are rendered as density images with values clamped at their $95\%$ quantile rather than by threshold contours. Colors interpolate between the four corners and encode the same bilinear weights in both panels.}
\index{Gaussian mixture}
\index{Wasserstein!barycenter}
\label{fig:barycenters-four-shapes}
\end{figure}

\begin{rem}[Mean of a quadratic barycenter]
\index{quadratic!barycenter}
\index{barycenter mean}
	For $c(x,y)=\norm{x-y}^2$, the mean of the barycenter $\al^\star$ is necessarily the barycenter of the means,
	\eq{
			\int_\Xx x \d\al^\star(x) = \sum_s \la_s \int_\Xx x \d\be_s(x).
	}
	Indeed, the squared Wasserstein distance decomposes into a squared distance between means plus a centered Wasserstein term. Minimizing the resulting quadratic function of the barycenter mean gives the displayed identity. If the input measures have compact support, the usual multi-marginal barycentric construction also gives a barycenter supported in the convex hull of the input supports.
\index{Wasserstein!distance}
\index{multi-marginal}
\end{rem}

The next elementary proposition explains why~\eqref{eq-barycenter-generic} is a convex optimization problem over measures. The difficulty is not convexity, but the fact that the unknown is itself a measure whose support is not known in advance.

\begin{prop}[Convexity of the OT cost]\label{prop-barycenter-ot-cost-convexity}
\index{cost!transport}
	The map $(\al,\be)\mapsto\MK_\c(\al,\be)$ is convex.
\end{prop}

\begin{proof}
	Let $(\al_0,\be_0)$ and $(\al_1,\be_1)$ be two pairs of probability measures and let $t\in[0,1]$. For $\eta>0$, choose couplings $\pi_i\in\Couplings(\al_i,\be_i)$ such that
\index{probability measure}
	\[
		\int c\d\pi_i\leq\MK_\c(\al_i,\be_i)+\eta
		\qquad (i=0,1).
	\]
	Then $\pi_t=(1-t)\pi_0+t\pi_1$ is a coupling between $(1-t)\al_0+t\al_1$ and $(1-t)\be_0+t\be_1$. Hence
	\[
		\MK_\c((1-t)\al_0+t\al_1,(1-t)\be_0+t\be_1)
		\leq
		(1-t)\MK_\c(\al_0,\be_0)+t\MK_\c(\al_1,\be_1)+\eta.
	\]
	Letting $\eta\to0$ gives the claim.
\end{proof}

Even when all input measures are discrete, the support of a barycenter is not known a priori. The multi-marginal formulation of Section~\ref{sec-multimarginal-ot} shows that a discrete barycenter can be supported on all weighted averages of one support point from each input. This gives at most $\prod_s n_s$ candidate points if the $s$-th input has $n_s$ atoms, which is prohibitive when the number of inputs is large. A common numerical compromise is therefore to prescribe a smaller support for the barycenter and solve a fixed-support problem.
\index{multi-marginal!OT}
\index{multi-marginal}

\paragraph{One-dimensional case.}

On the line, barycenters become linear after the quantile change of variables. This gives the rare case where the barycenter is explicit rather than the solution of a high-dimensional optimization problem.
\index{change-of-variables}

\begin{prop}[Quantile barycenters on the line]\label{prop-quantile-barycenters}
\index{quantile!barycenter}
	For $\X=\RR$ and $c(x,y)=|x-y|^2$, the quantile function of a Wasserstein barycenter is the weighted average of the input quantile functions:
\index{Wasserstein!barycenter}
\index{quantile!function}
	\[
		\cumul{\al^\star}^{-1}(r)
		=
		\sum_{s=1}^S\la_s\cumul{\be_s}^{-1}(r),
		\qquad r\in[0,1].
	\]
\end{prop}

\begin{proof}
	The one-dimensional formula~\eqref{eq-wass-cumul} gives
	\[
		\sum_s\la_s\Wass_2^2(\al,\be_s)
		=
		\int_0^1
		\sum_s\la_s
		\abs{\cumul{\al}^{-1}(r)-\cumul{\be_s}^{-1}(r)}^2
		\d r.
	\]
	The minimization decouples pointwise in $r$. For each fixed $r$, the minimizer of $z\mapsto\sum_s\la_s|z-\cumul{\be_s}^{-1}(r)|^2$ is the weighted average $\sum_s\la_s\cumul{\be_s}^{-1}(r)$. This function is nondecreasing because it is a positive weighted sum of nondecreasing quantile functions, hence it is a valid quantile function.
\index{quantile!function}
\end{proof}

\paragraph{Gaussian case.}

Gaussian barycenters show that the same separation as in the Gaussian Wasserstein formula~\eqref{eq-dist-gauss} persists: means average linearly, while covariances average according to the Bures--Wasserstein geometry.
\index{Bures-Wasserstein geometry}
\index{Gaussian!barycenter}

\begin{example}[Gaussian inputs remain Gaussian]
	The barycenter of Gaussian measures is Gaussian. In one dimension, it is obtained by averaging the means and the standard deviations, so the barycenter variance is the square of this averaged standard deviation. In higher dimensions, the covariance $\cov$ minimizes the Bures objective
\index{Gaussian!measure}
	\[
		\cov \mapsto \sum_s \la_s \Bb(\cov,\cov_s)^2,
	\]
	and equivalently solves the fixed-point equation
	\[
		\cov =
		\sum_s \la_s
		\pa{\cov^{1/2}\cov_s\cov^{1/2}}^{1/2}.
	\]
	This is the covariance analogue of the usual Euclidean barycenter equation: the mean part averages linearly, while the covariance part averages through the Bures--Wasserstein geometry~\cite{alvarez2016fixed,bhatia2018bures}.
\index{Bures-Wasserstein geometry}
\end{example}

\begin{alg}[Gaussian barycenter fixed point]\label{alg:gaussian-barycenter-fixed-point}
\index{Gaussian!barycenter}
\index{fixed point!iteration}
\textbf{Input:} Gaussian measures $\Gaussian(\mean_s,\cov_s)$, weights $\lambda_s$, tolerance $\mathrm{tol}$.

\textbf{Output:} Gaussian barycenter $\Gaussian(\mean,\cov)$.

\textbf{Set}
\(\mean=\sum_s\lambda_s\mean_s .\)

\textbf{Initialize:} Set $\cov^{(0)}=\sum_s\lambda_s\cov_s$.

\textbf{For} $k=0,1,\ldots$ \textbf{do}:
\begin{algblock}
\(S^{(k)} = \sum_s\lambda_s \left((\cov^{(k)})^{1/2}\cov_s(\cov^{(k)})^{1/2}\right)^{1/2}.\)

\(\cov^{(k+1)} = (\cov^{(k)})^{-1/2} \left(S^{(k)}\right)^2 (\cov^{(k)})^{-1/2}.\)

\textbf{If} $\norm{\cov^{(k+1)}-\cov^{(k)}}\leq\mathrm{tol}$ \textbf{then}:
\begin{algblock}

\textbf{Set} $\cov=\cov^{(k+1)}$.
\textbf{Return} $\Gaussian(\mean,\cov)$.
\end{algblock}
\end{algblock}
\end{alg}

\begin{figure}[ht]
\centering
\begin{tabular}{@{}cc@{}}
\includegraphics[width=.35\linewidth]{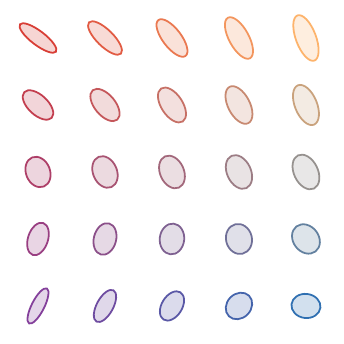} &
\includegraphics[width=.35\linewidth]{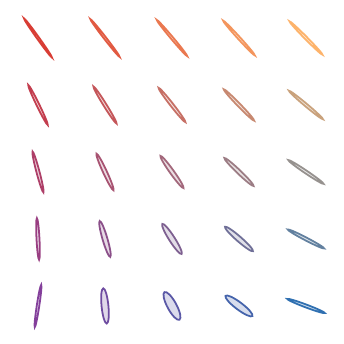}
\\[-.1em]
\small moderate anisotropy &
\small strong anisotropy
\end{tabular}
\caption{Bures--Wasserstein barycenters of centered Gaussian covariance matrices. Each panel shows a $5\times5$ grid of barycenter ellipses for four corner covariances, without separate input panels: the corner ellipses are the four input covariances themselves. The right grid uses more anisotropic inputs, making the nonlinear rotation and scaling of covariance barycenters more visible.}
\index{Wasserstein!barycenter}
\label{fig:barycenters-gaussian-covariances}
\end{figure}

\paragraph{Sinkhorn for barycenters.}
\index{Sinkhorn!barycenter}

A key difference with the regularized two-marginal OT problem is that there is no canonical reference measure $\al\otimes\be$, because the barycenter $\al$ is unknown. To reduce complexity, one usually fixes a candidate support for the barycenter and solves the discrete problem~\eqref{eq-wass-discr}; this introduces a discretization error but keeps the number of unknowns manageable.
\index{discretization error}
\index{reference!measure}

One can then use entropic smoothing and approximate~\eqref{eq-wass-discr} by
\index{entropic!smoothing}
\eql{\label{eq-entropic-bary}
	\umin{\a\in\simplex_n}
	\sum_{s=1}^S\la_s\MKD_{\C_s}^\epsilon(\a,\b_s)
}
for some $\epsilon>0$. This is a smooth convex minimization problem, which can be tackled using gradient descent~\cite{CuturiBarycenter}. An alternative is to use a descent method, typically quasi-Newton, on the semi-dual~\cite{2016-Cuturi-siims}; this is useful when adding extra regularization on the barycenter, for instance to impose smoothness.
\index{semi-dual}

A simple but effective approach, as remarked in~\cite{2015-benamou-cisc}, rewrites~\eqref{eq-entropic-bary} as a weighted KL projection problem
\index{KL!projection}
\eql{\label{eq-bary-entropy-couplings}
	\umin{(\P_s)_s}
	\enscond{
		\epsilon\sum_s\la_s\KLD(\P_s|\K_s)
	}{
		\foralls s,\ \transp{\P_s}\ones_n=\b_s,
		\quad
		\P_1\ones_{n_1}=\ldots=\P_S\ones_{n_S}
	},
}
where $\K_s\eqdef e^{-\C_s/\epsilon}$. The barycenter $\a$ is implicitly encoded in the common row marginal
\[
	\a=\P_1\ones_{n_1}=\ldots=\P_S\ones_{n_S}.
\]
The optimal couplings solving~\eqref{eq-bary-entropy-couplings} have scaling form
\index{optimal coupling}
\index{scaling!form}
\eql{\label{eq-bary-opt}
	\P_s=\diag(\uD_s)\K_s\diag(\vD_s),
}
and the generalized Sinkhorn iterations are
\index{Sinkhorn!iteration}
\begin{align}
	\foralls s\in\range{1,S},\qquad
	\itt{\vD}_s
	&\eqdef
	\frac{\b_s}{\transp{\K_s}\it{\uD}_s},
	\label{eq-sinkhorn-bary}
	\\
	\foralls s\in\range{1,S},\qquad
	\itt{\uD}_s
	&\eqdef
	\frac{\itt{\a}}{\K_s\itt{\vD}_s},
	\label{eq-sinkhorn-bary-2}
	\\
	\qwhereq
	\itt{\a}
	&\eqdef
	\prod_s(\K_s\itt{\vD}_s)^{\la_s}.
	\label{eq-sinkhorn-bary-3}
\end{align}
The geometric mean in~\eqref{eq-sinkhorn-bary-3} enforces the fact that all couplings share the same barycenter marginal.

\begin{alg}[Entropic barycenter Sinkhorn]\label{alg:entropic-barycenter-sinkhorn}
\textbf{Input:} Costs $\C_s$, target weights $\b_s$, barycenter weights $\lambda_s$, regularization $\epsilon>0$, tolerance $\mathrm{tol}$.

\textbf{Output:} Barycenter weights $\a$ and couplings $\P_s$.

\textbf{Initialize:} Set $\K_s=e^{-\C_s/\epsilon}$, $\uD_s^{(0)}=\ones_n$ for all $s$, \(r_0=+\infty\), and \(k=0\).

\textbf{While} \(r_k>\mathrm{tol}\) \textbf{do}:
\begin{algblock}

\textbf{Set} \(k\leftarrow k+1\).

\textbf{For} each marginal $s$ \textbf{do}
\begin{algblock}
\(\vD_s^{(k)} = \frac{\b_s}{\transp{\K_s}\uD_s^{(k-1)}}.\)
\end{algblock}
\textbf{Compute} barycenter marginal:
\(\a^{(k)} = \prod_s \bigl(\K_s\vD_s^{(k)}\bigr)^{\lambda_s}.\)

\textbf{For} each marginal $s$ \textbf{do}
\begin{algblock}
\(\uD_s^{(k)} = \frac{\a^{(k)}}{\K_s\vD_s^{(k)}}.\)
\end{algblock}

\textbf{Set} \(\P_s^{(k)}=\diag(\uD_s^{(k)})\K_s\diag(\vD_s^{(k)})\) for all \(s\).

\textbf{Set} \(r_k=\max_s \max\{\norm{\P_s^{(k)}\ones-\a^{(k)}}_1,\norm{(\P_s^{(k)})^\top\ones-\b_s}_1\}\).
\end{algblock}
\algreturnskip
\textbf{Return} \(\a^{(k)}\) and \(\P_s^{(k)}\).
\end{alg}

\begin{prop}[Dual of entropic barycenters]\label{prop-dual-entropic-barycenters}
\index{entropic!barycenter}
	The optimal scalings in~\eqref{eq-bary-opt} can be written as $(\uD_s,\vD_s)=(e^{\fD_s/\epsilon},e^{\gD_s/\epsilon})$, where $(\fD_s,\gD_s)_s$ solve the dual problem
\index{dual!problem}
	\eql{\label{eq-dual-bary-entropy}
		\umax{(\fD_s,\gD_s)_s}
		\enscond{
			\sum_s\la_s
			\left(
				\dotp{\gD_s}{\b_s}
				-\epsilon\dotp{\K_s e^{\gD_s/\epsilon}}{e^{\fD_s/\epsilon}}
			\right)
		}{
			\sum_s\la_s\fD_s=0
		}.
	}
\end{prop}

\begin{proof}
	Introduce Lagrange multipliers in~\eqref{eq-bary-entropy-couplings}:
	\begin{align*}
		\umin{(\P_s)_s,\a}
		\umax{(\fD_s,\gD_s)_s}
		\sum_s\la_s\Big(
			\epsilon\KLD(\P_s|\K_s)
			+\dotp{\a-\P_s\ones_{n_s}}{\fD_s}
			+\dotp{\b_s-\transp{\P_s}\ones_n}{\gD_s}
		\Big).
	\end{align*}
	Strong duality holds, so one can exchange the minimum and maximum. The minimization with respect to $\a$ gives the constraint $\sum_s\la_s\fD_s=0$, and the minimization with respect to $\P_s$ gives the Legendre transform of $\KLD(\cdot|\K_s)$:
\index{duality!strong}
\index{Legendre transform}
	\[
		\umax{(\fD_s,\gD_s)_s}
		\sum_s\la_s
		\left[
			\dotp{\gD_s}{\b_s}
			-\epsilon
			\KLD^*\left(\frac{\fD_s\oplus\gD_s}{\epsilon}\middle|\K_s\right)
		\right],
		\qquad
		\sum_s\la_s\fD_s=0.
	\]
	The separable conjugate is
	\eql{\label{eq-legendre-kl-bary}
		\KLD^*(\VectMode{U}|\K)
		=
		\sum_{i,j}\K_{i,j}\big(e^{\VectMode{U}_{i,j}}-1\big),
	}
	because for $k>0$,
	\[
		\sup_{r\geq0}
		ur-\big(r\log(r/k)-r+k\big)
		=
		k(e^u-1),
	\]
	and the case $k=0$ follows by lower semicontinuity. Dropping constants independent of $(\fD_s,\gD_s)_s$ gives~\eqref{eq-dual-bary-entropy}. The coordinate maximization in $\gD_s$ gives~\eqref{eq-sinkhorn-bary}; the block maximization in all $(\fD_s)_s$ gives the common marginal~\eqref{eq-sinkhorn-bary-3} and then~\eqref{eq-sinkhorn-bary-2}.
\index{lower semicontinuity}
\end{proof}

Classical applications include two-dimensional image interpolation, three-dimensional shape interpolation, and barycenters on surfaces where the ground cost is the square of the geodesic distance; see~\cite{2015-solomon-siggraph} for applications to computer graphics and imaging.
\index{ground cost}

\section{Multimarginal OT}
\index{multi-marginal!OT}
\label{sec-multimarginal-ot}
\label{subsec:multi-marginal}

Multi-marginal OT couples more than two measures at once. It is the natural language for barycenters, matching with teams and several-body costs, but its tensor dimension is the main computational obstacle.
\index{multi-marginal}

\paragraph{Definition and basic structure.}

The multi-marginal formulation replaces a coupling between two measures by a joint distribution with several prescribed marginals. Given measures $(\al_s)_{s=1}^S$ on spaces $(\X_s)_{s=1}^S$ and a cost $c:\X_1\times\cdots\times\X_S\to\RR$, the problem reads
\index{multi-marginal}
\[
	\inf_{\pi\in\Couplings(\al_1,\ldots,\al_S)}
	\int_{\X_1\times\cdots\times\X_S}
	c(x_1,\ldots,x_S)\d\pi(x_1,\ldots,x_S),
\]
where $\Couplings(\al_1,\ldots,\al_S)$ is the set of probability measures whose $s$-th marginal is $\al_s$. This is still a linear program in the discrete setting, but its ambient tensor has size $\prod_s n_s$.
\index{probability measure}

\paragraph{Multi-marginal formulation of barycenters.}
\index{multi-marginal}
\index{multi-marginal!formulation}

Wasserstein barycenters are the central example. For the squared Euclidean cost, one can introduce a latent barycenter point and eliminate it explicitly, leading to the multi-marginal cost
\index{multi-marginal}
\index{multi-marginal!cost}
\index{Wasserstein!barycenter}
\[
	c_{\mathrm{bar}}(x_1,\ldots,x_S)
	=
	\min_{x\in\RR^d}
	\sum_{s=1}^S\la_s\norm{x-x_s}^2.
\]

\begin{prop}[Multi-marginal formula for quadratic barycenters]\label{prop-multimarginal-barycenter}
\index{multi-marginal}
\index{multi-marginal!OT}
\index{quadratic!barycenter}
	Let $\be_1,\ldots,\be_S\in\Pp_2(\RR^d)$ and $\la\in\simplex_S$. Define
	\[
		B(x_1,\ldots,x_S)=\sum_{s=1}^S\la_s x_s,
		\qquad
		c_{\mathrm{bar}}(x_1,\ldots,x_S)=\min_x\sum_s\la_s\norm{x-x_s}^2.
	\]
	If $\pi^\star$ solves the multi-marginal OT problem with marginals $(\be_s)_s$ and cost $c_{\mathrm{bar}}$, then $\al^\star=B_\sharp\pi^\star$ is a Wasserstein barycenter. Conversely, every barycenter is obtained this way from an optimal multi-marginal plan.
\index{Wasserstein!barycenter}
\index{multi-marginal}
\end{prop}

\begin{proof}
	For any candidate barycenter $\al$ and couplings $\pi_s\in\Couplings(\al,\be_s)$, glue the couplings along their common $\al$ marginal to obtain a joint law of $(X,Y_1,\ldots,Y_S)$. Conditioning on $(Y_s)_s$ and minimizing over $X$ gives
\index{joint!law}
	\[
		\sum_s\la_s\EE\norm{X-Y_s}^2
		\geq
		\EE\min_x\sum_s\la_s\norm{x-Y_s}^2
		=
		\EE c_{\mathrm{bar}}(Y_1,\ldots,Y_S).
	\]
	Taking the infimum over the couplings gives that the barycenter value is at least the multi-marginal value. Conversely, from an optimal multi-marginal plan $\pi^\star$, set $X=B(Y_1,\ldots,Y_S)$. The couplings between $X$ and each $Y_s$ are feasible for the barycenter problem and attain exactly the multi-marginal cost, proving equality and the formula.
\index{multi-marginal!cost}
	If $\al^\star$ is any barycenter, choose optimal couplings between $\al^\star$ and each $\be_s$ and glue them along the common $\al^\star$ marginal. Since the barycenter and multi-marginal values are equal, the conditional minimization inequality above must be an equality. Thus $X=B(Y_1,\ldots,Y_S)$ almost surely for the induced optimal multi-marginal plan, and $\al^\star=B_\sharp\pi^\star$.
\index{optimal coupling}
\index{multi-marginal}
\end{proof}

\begin{cor}[Gaussian and discrete barycenters]\label{cor-gaussian-discrete-barycenters}
\index{Gaussian!barycenter}
\index{discrete!barycenter}
	Quadratic Wasserstein barycenters of Gaussian measures are Gaussian. If the input measures are discrete, then there exists a barycenter supported on the set of weighted averages $\sum_s\la_s x_{s,i_s}$ of one support point from each input; in particular, if the $s$-th input has $n_s$ atoms, a barycenter exists with at most $\prod_s n_s$ atoms.
\index{Wasserstein!barycenter}
\index{Gaussian!measure}
\end{cor}

\begin{proof}
	Let the input Gaussians have means $\mean_s$ and covariances $\cov_s$. For any candidate barycenter $\al$ with mean $\mean$ and covariance $\cov$, Gelbrich's inequality~\cite{gelbrich1990formula}, proved later in Theorem~\ref{thm-gelbrich-projection}, gives
	\[
		\Wass_2^2(\al,\be_s)
		\geq
		\norm{\mean-\mean_s}^2+\Bb(\cov,\cov_s)^2,
	\]
	with equality for the Gaussian law with mean $\mean$ and covariance $\cov$. Therefore the barycenter objective is bounded below by a function depending only on $(\mean,\cov)$, and this lower bound is attained by the Gaussian measure with the minimizing mean and covariance. Hence at least one barycenter is Gaussian, and uniqueness in the usual nondegenerate setting gives the Gaussian barycenter mentioned above. For discrete inputs, any multi-marginal optimizer is supported on the finite product of the input supports, and $B$ maps this product to at most $\prod_s n_s$ points.
\index{Gaussian!barycenter}
\index{Gaussian!measure}
\index{multi-marginal}
\end{proof}

\paragraph{Entropic regularization of multi-marginal OT.}
\index{entropic!regularization}
\index{multi-marginal!OT}

As in the two-marginal case, adding an entropic penalty with respect to the product measure $\al_1\otimes\cdots\otimes\al_S$ leads to scaling algorithms:
\index{scaling!algorithm}
\index{product!measure}
\[
	\inf_{\pi\in\Couplings(\al_1,\ldots,\al_S)}
	\int c\d\pi+\epsilon\KL(\pi|\al_1\otimes\cdots\otimes\al_S).
\]
The optimizer has the generalized Gibbs form
\[
	\d\pi^\star(x_1,\ldots,x_S)
	=
	\exp\!\left(\frac{\sum_s f_s(x_s)-c(x_1,\ldots,x_S)}{\epsilon}\right)
	\prod_s\d\al_s(x_s),
\]
and generalized Sinkhorn iterations alternately update one potential $f_s$ so that the $s$-th marginal is correct. The bottleneck is the tensor size $\prod_s n_s$ in the discrete case. Practical barycenter solvers therefore exploit separability of the cost, low-rank structure, convolutional kernels, or a fixed barycenter support.
\index{Sinkhorn!iteration}

In finite dimension, the direct generalized scaling scheme is the tensor version of Sinkhorn.

\begin{alg}[Multi-marginal Sinkhorn]\label{alg:multimarginal-sinkhorn}
\textbf{Input:} Marginals $\a_s\in\simplex_{n_s}$, tensor cost $C$, regularization $\epsilon>0$, tolerance $\mathrm{tol}$.

\textbf{Output:} Multi-marginal entropic coupling tensor $P$.

\textbf{Build}
\(K_{i_1,\ldots,i_S} = \exp\!\left(-\frac{C_{i_1,\ldots,i_S}}{\epsilon}\right) \prod_{s=1}^S(\a_s)_{i_s}.\)

\textbf{Initialize:} Set \(u_s=\ones_{n_s}\) for all $s$ and residual \(r=+\infty\).

\textbf{While} \(r>\mathrm{tol}\) \textbf{do}:
\begin{algblock}

\textbf{For} $s=1,\ldots,S$ \textbf{do}:
\begin{algblock}
\((u_s)_i \leftarrow \frac{(\a_s)_i} { \sum_{i_1,\ldots,i_{s-1},i_{s+1},\ldots,i_S} K_{i_1,\ldots,i_{s-1},i,i_{s+1},\ldots,i_S} \prod_{r\neq s}(u_r)_{i_r}}.\)
\end{algblock}

\textbf{Set} \(P_{i_1,\ldots,i_S}=K_{i_1,\ldots,i_S}\prod_s (u_s)_{i_s}\).

\textbf{Set} \(r=\max_s\norm{(\mathrm{proj}_s)_\sharp P-\a_s}_1\).
\end{algblock}
\algreturnskip
\textbf{Return} \(P\).
\end{alg}

\section{Metric learning and inverse OT}
\index{metric!learning}
\index{inverse OT}

This final section points to inverse problems where the ground cost itself is learned. Such problems are typically bilevel and non-convex, but OT provides useful gradients with respect to the cost.
\index{ground cost}

\paragraph{Metric learning and derivatives of OT}
\index{metric!learning}
\index{cost!derivative}

OT is convex with respect to the measure and concave with respect to the cost. Ground-metric learning was explicitly studied in~\cite{CuturiGroundMetric2014}, and it connects to the broader metric-learning literature~\cite{MAL-019,bellet2015metric}.
\index{metric!learning}

\begin{prop}[Derivative with respect to the cost]
\index{cost!derivative}
	In the discrete setting, assume that the optimal coupling for $\MKD_\C(\a,\b)$ is unique and denote it by $\P^\star(\C)$. Then $\C \mapsto \MKD_\C(\a,\b)$ is differentiable at $\C$ and
\index{optimal coupling}
	\[
		\nabla_\C \MKD_\C(\a,\b)=\P^\star(\C).
	\]
\end{prop}
\begin{proof}
	The value is the minimum of affine functions of $\C$,
	\[
		\MKD_\C(\a,\b)=\min_{\P\in\CouplingsD(\a,\b)}\dotp{\C}{\P}.
	\]
	The envelope theorem, or equivalently Danskin's theorem, states that the subdifferential with respect to $\C$ is the convex hull of the optimal couplings. If the optimizer is unique, this subdifferential is the singleton $\{\P^\star(\C)\}$, hence the value is differentiable with the displayed gradient.
\index{Danskin theorem}
\index{subdifferential}
\index{envelope theorem}
\end{proof}

Thus, if the cost is parameterized as $\C_\theta$, gradients of losses involving OT values are obtained by backpropagating through the inner product $\dotp{\P^\star(\C_\theta)}{\partial_\theta \C_\theta}$. The difficulty is not differentiating a solved OT problem, but learning a cost for which the resulting matching has the desired semantic behavior; this is a bilevel and usually non-convex optimization problem.

\begin{figure}[ht]
\centering
\begin{tabular}{@{}ccc@{}}
\includegraphics[width=.30\linewidth]{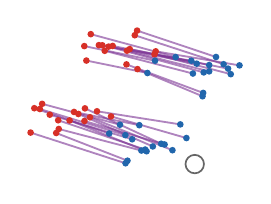} &
\index{metric!learning}
\includegraphics[width=.30\linewidth]{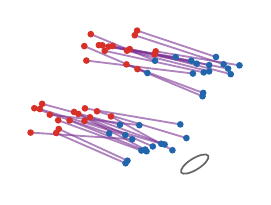} &
\includegraphics[width=.30\linewidth]{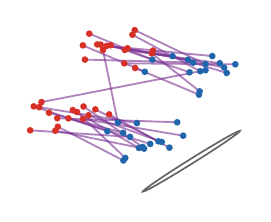}
\\[-.1em]
\small Euclidean cost & \small moderate anisotropy & \small strong anisotropy
\end{tabular}
\caption{Changing the ground metric changes the optimal coupling. The same red and blue empirical measures are matched with $c_A(x,y)=(x-y)^\top A(x-y)$ for the Euclidean metric and two increasingly anisotropic Mahalanobis metrics. The small gray ellipse shows the unit ball of the metric: directions in which the ellipse is elongated are cheaper, and this deforms the transport segments selected by the OT plan.}
\index{empirical!measure}
\label{fig:metric-learning-cost-deformation}
\end{figure}

\paragraph{Inverse Optimal Transport}
\index{inverse OT}

Inverse OT asks for a ground cost that explains observed matchings or flows as optimal transport plans. In its most direct form, one observes a plan $\widehat\pi$ with marginals $(\alpha,\beta)$ and seeks a cost $c$ such that $\widehat\pi$ is optimal for
\index{ground cost}
\index{plan!transport}
\[
	\inf_{\pi\in\Couplings(\alpha,\beta)}\int c(x,y)\d\pi(x,y).
\]
This is ill-posed without structure: adding potentials $u(x)+v(y)$ to a cost does not change the set of optimal couplings, and many costs can rationalize the same sparse plan.
\index{optimal coupling}

A useful statistical methodology is to measure the suboptimality of the observed plan through a Fenchel--Young loss. Write the score as $s=-c$ and define the convex regularized prediction value
\index{Fenchel-Young loss}
\[
	G_\epsilon(s)=
	\sup_{\pi\in\Couplings(\alpha,\beta)}
	\int s\d\pi-\epsilon\KL(\pi|\alpha\otimes\beta).
\]
The Fenchel--Young loss
\index{Fenchel-Young loss}
\[
	\mathcal L_\epsilon(c;\widehat\pi)
	=
	G_\epsilon(-c)+G_\epsilon^*(\widehat\pi)+\int c\d\widehat\pi
\]
is nonnegative by Fenchel's inequality and vanishes exactly when $\widehat\pi\in\partial G_\epsilon(-c)$, i.e. when $\widehat\pi$ satisfies the regularized optimality conditions for $c$. Sharpened Fenchel--Young losses for inverse problems over measures and inverse entropic/unbalanced OT are developed in~\cite{andrade2025sharpened}; curvature and identifiability of inverse OT with respect to the cost are studied in~\cite{peyre2026curvature}. Entropic regularization is important here because it makes the forward map smoother and provides gradients with respect to $c$, at the price of a bias that must be controlled statistically.
\index{Fenchel-Young loss}
\index{entropic!regularization}
\index{unbalanced!OT}

In the discrete unregularized case, this loss reduces to the optimality gap of the observed coupling. For $\widehat P\in\CouplingsD(\a,\b)$ and a cost matrix $C$, denote it by
\[
	\mathcal L_{\mathrm{iOT}}(C;\widehat P)
	=
	\dotp{C}{\widehat P}
	-
	\umin{P\in\CouplingsD(\a,\b)} \dotp{C}{P}.
\]
This inverse-OT gap loss is nonnegative and vanishes exactly when $\widehat P$ is optimal for $C$.
\index{inverse OT!gap loss}

In practice, one restricts the cost to a finite-dimensional model class, often affine:
\[
	C_\theta=\sum_{r=1}^R \theta_r C^{(r)},
	\qquad \theta\in\Theta,
\]
where $\Theta$ is convex and the matrices $C^{(r)}$ encode features, graph distances or a Mahalanobis parameterization. This viewpoint appears in low-rank and sparse inverse OT models~\cite{dupuy2016estimating,andrade2024sparsistency} and in convex formulations for learning OT costs from observed plans~\cite{ma2020learning,peyre2026curvature}.

\index{inverse OT}
A minimal finite-dimensional model is obtained by learning a bilinear cost on $\RR^d$,
\[
	c_A(x,y)=\dotp{Ax}{y},
	\qquad A\in\RR^{d\times d}.
\]
For empirical measures $\al=\frac1n\sum_i\de_{x_i}$ and $\be=\frac1n\sum_j\de_{y_j}$, this gives the cost matrix
\[
	C(A)_{i,j}=\dotp{Ax_i}{y_j},
\]
so both maps $A\mapsto C(A)$ and $A\mapsto c_A$ are linear. Inverse OT within this model asks which matrix $A$ makes an observed matching or coupling look optimal; learning the cost is thus reduced to estimating a linear parameter.
\index{bilinear!cost}

For a fixed matrix $A$, the forward prediction is the optimal face
\[
	\mathcal P_A\eqdef
	\uargmin{P\in\CouplingsD(\ones_n/n,\ones_n/n)}
	\dotp{C(A)}{P}.
\]
When this face is a singleton, write its element as $P_A$; otherwise $P_A$ denotes a deterministic tie-broken selection. Although $A\mapsto C(A)$ is linear, the solution correspondence $A\mapsto\mathcal P_A$ is polyhedral: changing $A$ changes the direction in which the transport polytope is probed, and a tie-broken selection is constant on normal-cone cells. Figure~\ref{fig:inverse-ot-forward-logo} illustrates this correspondence on the OT4ML point clouds. The construction follows the visual idea of the Python Optimal Transport logo~\cite{flamary2021pot}: red source atoms, blue target atoms and straight segments show the selected optimal bijection. The rank-one matrices $A=-e_1e_1^\top$ and $A=-e_2e_2^\top$ only score horizontal or vertical correlations. The matrix $A=-I$ gives the usual quadratic $\Wass_2$ assignment, up to the marginal-only terms discussed below, while $A=+I$ reverses the correlation and produces an anti-$\Wass_2$ matching.

\begin{figure}[ht]
\centering
\begin{tabular}{@{}cc@{}}
\includegraphics[width=.48\linewidth]{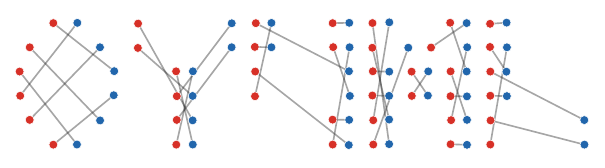} &
\includegraphics[width=.48\linewidth]{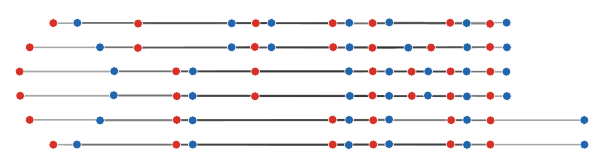}
\\[-.1em]
\small horizontal rank one, $A=-e_1e_1^\top$ &
\small vertical rank one, $A=-e_2e_2^\top$
\\[.35em]
\includegraphics[width=.48\linewidth]{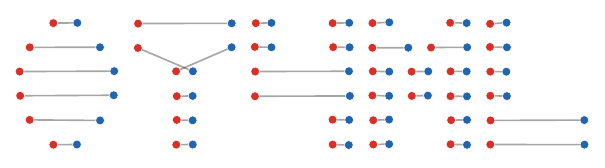} &
\includegraphics[width=.48\linewidth]{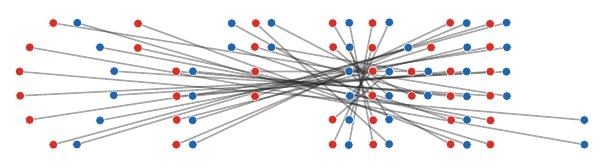}
\\[-.1em]
\small quadratic $\Wass_2$, $A=-I$ &
\small anti-$\Wass_2$, $A=+I$
\end{tabular}
\caption{Forward solutions of the bilinear cost $c_A(x,y)=\dotp{Ax}{y}$ on the OT4ML logo point clouds. Each panel solves the equal-weight assignment problem with a different matrix $A$; the source atoms are red, the target atoms are blue, and the gray segments give one deterministic optimal bijection.}
\label{fig:inverse-ot-forward-logo}
\end{figure}

This elementary model already contains the quadratic Wasserstein assignment. Adding to a cost matrix a term depending only on $x_i$ or only on $y_j$ shifts all feasible couplings by the same constant, and therefore does not change the optimizer. Since
\[
	\norm{x-y}^2=\norm{x}^2+\norm{y}^2-2\dotp{x}{y},
\]
the usual quadratic Wasserstein assignment has the same optimizer as the bilinear cost with $A_\star=-I$, up to these marginal-only terms and an irrelevant positive factor. The inverse problem goes in the opposite direction: after observing a coupling, one asks which matrices $A$ could have generated it. Figure~\ref{fig:inverse-ot-gap-loss} generates an observed coupling $\widehat P$ from this cost on two empirical mixtures of Gaussians, and then evaluates $\mathcal L_{\mathrm{iOT}}(C(A_t);\widehat P)$ along the anisotropic path
\[
	A_t=-\diag(1+t,1-t),
	\qquad -1\leq t\leq 1,
\]
so that $t=0$ recovers the matrix that generated the observed coupling. Equivalently, with equal weights, $\widehat P\in\CouplingsD(\ones_n/n,\ones_n/n)=\mathcal B_n/n$ and the plotted loss is the Kantorovich gap
\[
	\mathcal L_{\mathrm{iOT}}(C(A_t);\widehat P)
	=
	\dotp{C(A_t)}{\widehat P}
	-
	\umin{P\in\CouplingsD(\ones_n/n,\ones_n/n)}
	\dotp{C(A_t)}{P},
	\qquad
	C(A_t)_{i,j}=\dotp{A_t x_i}{y_j}.
\]
Because $t\mapsto C(A_t)$ is affine and the Kantorovich value is a minimum of affine functions over the fixed polytope $\CouplingsD(\ones_n/n,\ones_n/n)$, this one-dimensional gap is convex and piecewise affine. Its zero set can contain an interval for a small sample, reflecting the fact that the same observed coupling remains optimal for a cone of nearby costs.

\begin{figure}[ht]
\centering
\begin{tabular}{@{}ccc@{}}
\includegraphics[width=.35\linewidth]{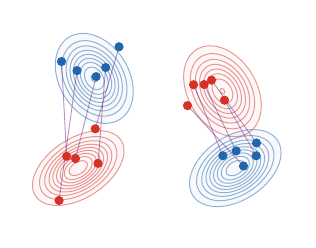} &
\includegraphics[width=.29\linewidth]{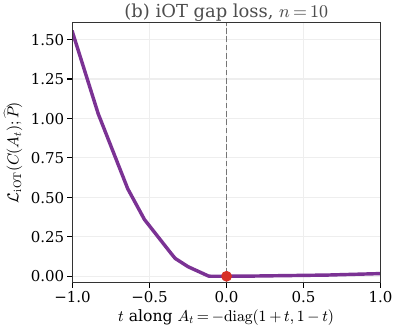} &
\includegraphics[width=.29\linewidth]{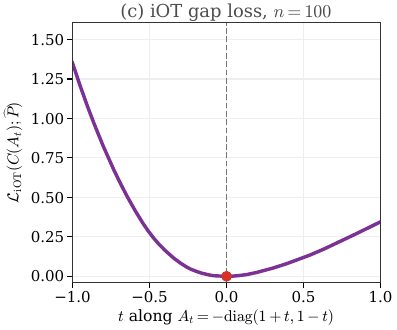}
\\[-.1em]
\small observed map, $n=10$ &
\small loss curve, $n=10$ &
\small loss curve, $n=100$
\end{tabular}
\caption{Inverse-OT gap loss for a bilinear cost. Panel (a): two empirical mixtures of two Gaussians are matched with the cost $c_{A_\star}(x,y)=\dotp{A_\star x}{y}$ for $A_\star=-I$, which gives the same optimizer as the quadratic $\Wass_2$ cost; red and blue level sets display the two sampling densities. Panels (b,c): the unregularized Fenchel--Young Kantorovich gap $\mathcal L_{\mathrm{iOT}}(C(A_t);\widehat P)$ along $A_t=-\diag(1+t,1-t)$ for $n=10$ and $n=100$, using the same vertical scale. The red dot marks the generating parameter $t=0$; the curves are convex and piecewise affine.}
\label{fig:inverse-ot-gap-loss}
\end{figure}

The comparison between $n=10$ and $n=100$ illustrates an important statistical effect: as the number of sampled points grows, the flat region of the empirical gap typically shrinks and the loss develops more visible curvature around the generating parameter. This anticipates the population theory of Peyr\'e, Poon and Tron~\cite{peyre2026curvature}: in the limit $n\to+\infty$, when the limiting Monge map itself has nondegenerate curvature as the cost parameter varies, the iOT loss identifies the cost robustly (up to the usual marginal-only gauge freedoms). In that regime, minimizing the gap is not only a certificate of optimality of the observed transport, but also a stable way to recover the underlying cost.

\begin{prop}[Convex dual-gap formulation of inverse OT]\label{prop-inverse-ot-convex}
\index{dual!gap}
\index{inverse OT}
	Let $\widehat P\in\CouplingsD(\a,\b)$ be an observed coupling and let $C_\theta$ depend affinely on $\theta\in\Theta$, where $\Theta$ is convex. The condition that $\widehat P$ is optimal for the cost $C_\theta$ is equivalent to the existence of dual potentials $(f,g)$ such that
\index{dual!potential}
	\[
		f_i+g_j\leq (C_\theta)_{i,j}
		\qandq
		\sum_{i,j}\widehat P_{i,j}\big((C_\theta)_{i,j}-f_i-g_j\big)=0.
	\]
	Consequently, for a convex regularizer $R$, the noisy inverse problem can be relaxed as the convex program
\index{convex!regularizer}
	\eql{\label{eq-inverse-ot-convex}
		\umin{\theta\in\Theta,f,g}
		R(\theta)+\lambda
		\sum_{i,j}\widehat P_{i,j}\big((C_\theta)_{i,j}-f_i-g_j\big)
		\quad\text{subject to}\quad
		f_i+g_j\leq (C_\theta)_{i,j}\quad\forall i,j.
	}
\end{prop}

\begin{proof}
	For a fixed cost $C_\theta$, Kantorovich duality gives
\index{Kantorovich!duality}
	\[
		\min_{P\in\CouplingsD(\a,\b)}\dotp{C_\theta}{P}
		=
		\max_{f_i+g_j\leq (C_\theta)_{i,j}}
		\dotp{f}{\a}+\dotp{g}{\b}.
	\]
	Since $\widehat P$ has marginals $(\a,\b)$, every dual feasible pair satisfies
	\[
		\dotp{C_\theta}{\widehat P}-\dotp{f}{\a}-\dotp{g}{\b}
		=
		\sum_{i,j}\widehat P_{i,j}\big((C_\theta)_{i,j}-f_i-g_j\big)
		\geq0.
	\]
	This nonnegative quantity is exactly the primal-dual gap of $\widehat P$. It vanishes if and only if $\widehat P$ reaches the dual value and is therefore optimal. If $C_\theta$ is affine and $\Theta$ and $R$ are convex, the constraints and objective in~\eqref{eq-inverse-ot-convex} are convex, proving the relaxation claim.
\index{dual!gap}
\end{proof}

\begin{alg}[Inverse OT by dual-gap fitting]\label{alg:inverse-ot-dual-gap-learning}
\index{inverse OT}
\index{dual!gap}
\textbf{Input:} Observed plan $\widehat P\in\CouplingsD(\a,\b)$, features $C^{(r)}$, feasible set $\Theta$, regularizer $R$.

\textbf{Output:} Identified cost $C_{\theta^\star}$ and potentials $(f^\star,g^\star)$.

\textbf{Set} parametric cost:
\(C_\theta=\sum_r\theta_r C^{(r)}.\)

\textbf{Let} $(\theta^\star,f^\star,g^\star)$ be a minimizer of
\(\min_{\theta\in\Theta,f,g} R(\theta)+\lambda \sum_{i,j}\widehat P_{i,j}\big((C_\theta)_{i,j}-f_i-g_j\big)\)

\textbf{Subject to}
\(f_i+g_j\leq(C_\theta)_{i,j} \qquad\text{for all }(i,j).\)
\textbf{Return} $\theta^\star$, $C_{\theta^\star}$, and $(f^\star,g^\star)$.
\end{alg}

The formulation~\eqref{eq-inverse-ot-convex} is useful because it avoids differentiating through a forward OT solver: it learns a cost by making the observed plan satisfy complementary slackness. In statistical settings, $\widehat P$ is only partially observed or noisy, so one adds sparsity, low-rank, smoothness or metric constraints to select a meaningful cost~\cite{dupuy2016estimating,andrade2024sparsistency}. For entropic OT, the optimality condition becomes smoother:
\index{entropic!OT}
\index{optimality!complementary slackness}
\[
	\widehat P_{i,j}\approx a_i b_j
	\exp\pa{\frac{f_i+g_j-(C_\theta)_{i,j}}{\epsilon}},
\]
which leads to likelihood-based or KL-based convex objectives when $C_\theta$ is affine, and connects inverse OT with generalized Sinkhorn iterations and transport-regularized inverse problems~\cite{karlsson2016generalized,ma2020learning}. Neural parameterizations of $C_\theta$ are more flexible but reintroduce non-convexity; the convex formulation above is the clean mathematical baseline.
\index{Sinkhorn!iteration}


\section{Weak Optimal Transport}
\index{weak!optimal transport}
\label{sec-weak-ot}

Weak OT relaxes the cost so that it depends on the conditional distribution of destinations rather than only on pointwise pairs. It is useful when a source point is allowed to choose a randomized response and the model only penalizes an aggregate of that response, such as its conditional mean.
\index{conditional law}

\paragraph{Barycentric projection of a coupling.}
\index{barycentric!projection}
The first object to isolate is therefore the map obtained by collapsing each conditional law to its barycenter.

\begin{defn}[Barycentric projection of a coupling]\label{def-barycentric-projection}
\index{barycentric!projection}
	Let $\alpha,\beta\in\Pp_2(\RR^d)$ and let $\pi\in\Couplings(\alpha,\beta)$. Disintegrate $\pi$ with respect to its first marginal as
	$\pi(\d x,\d y)=\pi_x(\d y)\alpha(\d x)$. The barycentric projection of $\pi$ is the map
	\begin{equation}\label{eq-barycentric-projection}
		\bar T_\pi(x)
		\eqdef
		\int_{\RR^d}y\d\pi_x(y),
		\qquad
		\bar\beta_\pi
		\eqdef
		(\bar T_\pi)_\sharp\alpha.
	\end{equation}
\end{defn}
The projected target $\bar\beta_\pi$ records the distribution of conditional means, not the full second marginal. Thus it is generally different from $\beta$; if $\pi=(\Id,T)_\sharp\alpha$ is induced by a map, then $\bar T_\pi=T$ and $\bar\beta_\pi=\beta$. This projection is not an optimal map for an arbitrary coupling: a deterministic rotation of a radially symmetric source, for example, projects to the rotation itself, whereas the optimal map from the source to itself is the identity. The useful positive statement is attached to quadratic optimal plans, as in the tangent-space viewpoint on $\Wass_2$ developed by Ambrosio, Gigli and Savar{\'e}~\cite[Chap.~7]{ambrosio2006gradient}.
\index{optimal plan}

\begin{prop}[Barycentric projection of a quadratic optimal plan]\label{prop-barycentric-projection-optimal}
\index{optimal plan}
\index{barycentric!projection}
	Let $\pi\in\Couplings(\alpha,\beta)$ be optimal for the quadratic cost $\norm{x-y}^2$ between $\alpha,\beta\in\Pp_2(\RR^d)$, and define $\bar T_\pi$ and $\bar\beta_\pi$ by~\eqref{eq-barycentric-projection}. Then $(\Id,\bar T_\pi)_\sharp\alpha$ is an optimal coupling between $\alpha$ and $\bar\beta_\pi$. Equivalently, $\bar T_\pi$ is a quadratic optimal transport map from $\alpha$ to the projected target $\bar\beta_\pi$.
\index{optimal coupling}
\index{transport map}
\index{barycentric!projection}
\index{cost!quadratic}
\end{prop}

\begin{proof}
	By Theorem~\ref{thm:opt_ccm}, $\pi$ is concentrated on a $c$-cyclically monotone set $\Gamma$ for $c(x,y)=\norm{x-y}^2$. For the quadratic cost, and since it is enough to check cyclic permutations, this means that every finite cycle $(x_i,y_i)_{i=1}^m\subset\Gamma$ satisfies
\index{cost!quadratic}
	\[
		\sum_{i=1}^m \dotp{x_i}{y_i}
		\geq
		\sum_{i=1}^m \dotp{x_i}{y_{i+1}},
		\qquad y_{m+1}=y_1.
	\]
	After changing the disintegration on an $\alpha$-negligible set, $\pi_x$ is supported on the section
\index{disintegration}
	\[
		\Gamma_x=\enscond{y}{(x,y)\in\Gamma}
	\]
	for $\alpha$-a.e. $x$.
	Choose $x_1,\ldots,x_m$ in this full-measure set and independently sample $Y_i\sim\pi_{x_i}$. Applying the cyclic inequality to $(x_i,Y_i)$ and taking expectations gives
	\[
		\sum_{i=1}^m \dotp{x_i}{\bar T_\pi(x_i)}
		\geq
		\sum_{i=1}^m \dotp{x_i}{\bar T_\pi(x_{i+1})}.
	\]
	Thus $(\Id,\bar T_\pi)_\sharp\alpha$ is concentrated on a cyclically monotone graph. By the cyclic-monotonicity characterization of quadratic optimality, this plan is optimal between its two marginals, namely $\alpha$ and $\bar\beta_\pi$.
\index{cyclic!monotonicity}
\end{proof}

Weak transport costs use the same disintegration but allow the objective to depend on the whole conditional law, or on summaries such as the barycentric projection~\eqref{eq-barycentric-projection}. The framework was introduced through general transport costs and weak transport inequalities in~\cite{gozlan2017kantorovich}; existence, duality and optimality conditions on Polish spaces are developed in~\cite{backhoff2019weak}. For a weak cost $C:\X\times\Pp(\Y)\to\RR\cup\{+\infty\}$, the weak OT value is
\index{weak!cost}
\index{disintegration}
\index{conditional law}
\index{barycentric!projection}
\index{weak!transport}
\index{Polish space}
\begin{equation}\label{eq-weak-ot}
	\WOT_C(\alpha,\beta)
	\eqdef
	\inf_{\pi\in\Couplings(\alpha,\beta)}
	\int C(x,\pi_x)\d\alpha(x).
\end{equation}
The classical Kantorovich problem is recovered when $C(x,\nu)=\int c(x,y)\d\nu(y)$, because the objective then becomes $\int c(x,y)\d\pi(x,y)$. The genuinely weak behavior starts when $C$ is nonlinear in $\nu$.
\index{Kantorovich!problem}

\begin{prop}[Weak Kantorovich duality]\label{prop-weak-ot-duality}
\index{Kantorovich!duality}
\index{weak!Kantorovich duality}
	Assume that $\X,\Y$ are compact metric spaces and that $C(x,\nu)$ is lower semicontinuous, bounded from below and convex in $\nu$, with the standard qualification assumptions ensuring Fenchel--Rockafellar duality. For $g\in\Cc(\Y)$ define the weak $C$-transform
\index{duality!Fenchel-Rockafellar}
\index{lower semicontinuity}
\index{Fenchel duality}
	\[
		g^C(x)
		\eqdef
		\inf_{\nu\in\Pp(\Y)}
		\left\{
			C(x,\nu)-\int g(y)\d\nu(y)
		\right\}.
	\]
	Then
	\[
		\WOT_C(\alpha,\beta)
		=
		\sup_{g\in\Cc(\Y)}
		\left\{
			\int g^C(x)\d\alpha(x)+\int g(y)\d\beta(y)
		\right\}.
	\]
	When $C(x,\nu)=\int c(x,y)\d\nu(y)$, this reduces to the usual Kantorovich dual with $g^C(x)=\inf_y(c(x,y)-g(y))$.
\end{prop}

\begin{proof}
	For any coupling $\pi$ and any $g\in\Cc(\Y)$, the definition of $g^C$ gives
	\[
		C(x,\pi_x)\geq g^C(x)+\int g(y)\d\pi_x(y).
	\]
	After integration with respect to $\alpha$, the second term becomes $\int g\d\beta$ because the second marginal of $\pi$ is $\beta$. This proves weak duality.
\index{duality!weak}

	For the reverse inequality, consider the convex minimization over probability kernels $x\mapsto\pi_x$ with the affine constraint $\int \pi_x\d\alpha(x)=\beta$. Fenchel--Rockafellar duality gives a continuous Lagrange multiplier $g$ for this marginal constraint. Minimizing the Lagrangian over each conditional law gives exactly the pointwise term $g^C(x)$, while the multiplier contributes $\int g\d\beta$. The compactness, lower semicontinuity, convexity and qualification assumptions ensure no duality gap. This is the weak-cost analogue of Proposition~\ref{prop-duality-discr}.
\index{duality!Fenchel-Rockafellar}
\index{lower semicontinuity}
\index{Fenchel duality}
\index{conditional law}
\index{marginal!constraint}
\index{affine!constraint}
\end{proof}

\begin{prop}[Barycentric weak transport is weaker than $\Wass_2$]\label{prop-barycentric-weak-ot}
\index{weak!transport}
	Let $\alpha,\beta\in\Pp_2(\RR^d)$ and define
	\[
		C_{\mathrm{bar}}(x,\nu)
		=
		\norm{x-\int y\d\nu(y)}^2.
	\]
	Equivalently, for a coupling $\pi$, the integrand is $\norm{x-\bar T_\pi(x)}^2$. Then
	\[
		\mathcal{W}_{C_{\mathrm{bar}}}(\alpha,\beta)
		\leq
		\Wass_2^2(\alpha,\beta).
	\]
\end{prop}

\begin{proof}
	Let $\pi$ be any coupling and disintegrate it as $\pi_x\alpha$. By Jensen's inequality,
\index{Jensen inequality}
\index{disintegration}
	\[
		\norm{x-\bar T_\pi(x)}^2
		\leq
		\int\norm{x-y}^2\d\pi_x(y).
	\]
	Integrating in $x$ gives $\int C_{\mathrm{bar}}(x,\pi_x)\d\alpha(x)\leq\int\norm{x-y}^2\d\pi(x,y)$. Taking the infimum over $\pi$ proves the claim.
\end{proof}

\begin{figure}[ht]
\centering
\begin{tabular}{@{}cc@{}}
\includegraphics[width=.43\linewidth]{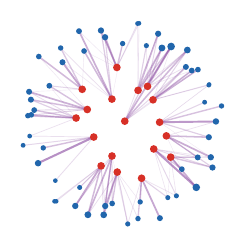} &
\index{barycentric!projection}
\includegraphics[width=.43\linewidth]{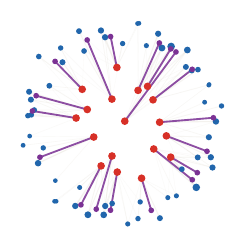}
\\[-.1em]
\small conditionals $\pi_x$ & \small barycentric projection $\bar T_\pi$
\end{tabular}
\caption{Weak barycentric transport on a small disk-to-annulus coupling. The left panel shows the full conditional laws: each red source atom splits its mass among several blue target atoms, with segment thickness proportional to transported mass. The right panel collapses each conditional law $\pi_x$ to its barycenter $\bar T_\pi(x)=\int y\d\pi_x(y)$, shown in violet. The barycentric weak cost only sees the red-to-violet displacement, and therefore ignores the conditional spread around each barycenter.}
\index{weak!cost}
\index{barycentric!transport}
\label{fig:weak-ot-barycentric-projection}
\end{figure}

The barycentric cost is the canonical example to keep in mind: admissibility still constrains the full conditional laws to have second marginal $\beta$, but the objective only charges the displacement from $x$ to $\bar T_\pi(x)$ and ignores the conditional variance around this barycenter. This connects weak OT with martingale transport, Strassen-type convex-order constraints, barycentric projections and learning problems where conditional averages are meaningful objects.
\index{conditional law}
\index{barycentric!projection}


\chapter{Beyond Comparing Measures}
\label{sec-beyond-comparing-measures}

The last group leaves the setting of scalar measures on a common ambient space. Vector- and matrix-valued OT transports mass with internal degrees of freedom, Gromov--Wasserstein compares metric-measure spaces without a prescribed correspondence, and quantum OT replaces scalar couplings by positive operators. In each case, the transport plan also has to encode structure carried by the support, the fibers or the non-commutative state space.
\index{positive operator}
\index{plan!transport}
\index{correspondence}
\index{quantum!OT}
\index{metric-measure space}
\index{Gromov-Wasserstein}

\section{Vector and Matrix-Valued Measures}
\index{vector-valued measure}
\index{matrix-valued measure}
\label{sec-vector-matrix-valued-measures}

Scalar OT transports a nonnegative density. In imaging, color processing, spectral analysis, diffusion tensor imaging and quantum-inspired models, the object attached to a point can instead have several nonnegative components or a positive semidefinite matrix. The first step beyond scalar OT is the positive vector-valued case, where the fiber remains linear and commutative but the channels may interact.
\index{positive!semidefinite matrix}

\paragraph{Positive vector-valued measures.}
\index{positive!vector-valued measure}

\begin{defn}[Positive vector-valued measure]\label{def-positive-vector-valued-measure}
\index{vector-valued measure}
\index{positive!vector-valued measure}
	A positive $\RR_+^m$-valued measure on $\X$ is a tuple
	\[
		\mu=(\mu^1,\ldots,\mu^m)\in\Mm_+(\X;\RR_+^m),
	\]
	where each component $\mu^k$ is a nonnegative finite measure.
\end{defn}
This models multi-channel densities such as color histograms, spectral bins or several species transported on the same domain. In a conservative model the mass of each channel is preserved, so one assumes $\mu_0^k(\X)=\mu_1^k(\X)$ for every $k$. The natural vector-valued extension of OT therefore starts from the positive cone $\RR_+^m$.
\index{finite measure}
\index{histogram}
\index{positive!cone}

To keep the notation readable, first assume that the endpoints and the curve have densities. The direct analogue of Benamou--Brenier fixes a vector density $u_t(x)\in\RR_+^m$ and a spatial flux
\index{flux}
\index{Benamou-Brenier}
$V_t(x)=(V_{t,1},\ldots,V_{t,d})\in(\RR^m)^d$, where $V_{t,\ell}^k$ is the momentum of channel $k$ in spatial direction $\ell$. The conservative vector transport cost associated with an action density $\Phi$ is
\index{momentum}
\begin{equation}\label{eq-vector-valued-bb}
	\mathcal W_{\Phi}^2(\mu_0,\mu_1)
	\eqdef
	\inf_{u,V}
	\int_0^1\!\int_\X
	\Phi(u_t(x),V_t(x))\,\d x\,\d t
\end{equation}
subject to the endpoint constraints $u_0\d x=\mu_0$, $u_1\d x=\mu_1$ and the componentwise continuity equation
\index{continuity equation}
\begin{equation}\label{eq-vector-valued-continuity}
	\partial_t u_t+\nabla_x\cdot V_t=0,
	\qquad
	(\nabla_x\cdot V_t)^k=\sum_{\ell=1}^d \partial_{x_\ell} V_{t,\ell}^k .
\end{equation}
Thus each component satisfies its own continuity equation, but the cost may still couple the components. Singular curves are handled as in scalar dynamic OT by replacing densities and fluxes by measures and using the lower semicontinuous perspective recession convention.
\index{perspective recession}
\index{recession convention}
\index{flux}
\index{lower semicontinuity}
\index{continuity equation}

The following elementary family separates independent channel motion from genuinely coupled vector transport.

\begin{example}[Diagonal and coupled positive mobilities]
Choose a mobility matrix $\mathsf M(u)\in\mathbb S_+^m$, where $\mathbb S_+^m$ denotes the cone of real symmetric positive semidefinite matrices, and set
\index{positive!semidefinite matrix}
\[
	\Phi_{\mathsf M}(u,V)
	=
	\sum_{\ell=1}^d V_{\ell}^{\top}\mathsf M(u)^\dagger V_{\ell},
\]
with the usual convention that the value is finite only when each $V_\ell$ belongs to the range of $\mathsf M(u)$. One chooses $\mathsf M$ so that this matrix perspective is convex and one-homogeneous in $(u,V)$; this holds for the linear positive mobilities below. For $m=1$ and $\mathsf M(u)=u$, one recovers exactly the scalar Benamou--Brenier action. For
\index{Benamou-Brenier}
\[
	\mathsf M_{\mathrm{diag}}(u)=\diag(u_1,\ldots,u_m),
\]
the channels move independently. Non-diagonal mobilities are the simplest way to couple the coordinates while keeping the same componentwise conservation law. For instance, with $q=m^{-1/2}(1,\ldots,1)$ and $\kappa\geq0$,
\[
	\mathsf M_\kappa(u)=\diag(u)+\kappa\Big(\sum_{k=1}^m u_k\Big) q q^\top
\]
increases the mobility in the common channel direction $q$ while leaving transverse directions controlled by the diagonal part. The local cost of moving one component can therefore depend on the densities and momenta of the other components, even though each component mass remains conserved.
\index{momentum}
\end{example}

\begin{prop}[Diagonal positive vector Benamou--Brenier]\label{prop-diagonal-positive-vector-bb}
\index{Benamou-Brenier}
Assume that $\mu_0^k,\mu_1^k\in\Mm_+(\X)$ have the same mass $m_k$ for every $k$. For the diagonal mobility $\mathsf M_{\mathrm{diag}}$, the value of~\eqref{eq-vector-valued-bb} is
\[
	\mathcal W_{\mathrm{diag}}^2(\mu_0,\mu_1)
	=
	\sum_{k:m_k>0} m_k\,\Wass_2^2\!\left(\frac{\mu_0^k}{m_k},\frac{\mu_1^k}{m_k}\right),
\]
with the convention that zero-mass channels contribute zero.
\end{prop}
\begin{proof}
For $\mathsf M_{\mathrm{diag}}(u)=\diag(u_1,\ldots,u_m)$, the action separates as
\[
	\sum_{\ell=1}^d V_\ell^\top \mathsf M_{\mathrm{diag}}(u)^\dagger V_\ell
	=
	\sum_{k=1}^m \frac{|V^k|^2}{u^k},
\]
where $V^k=(V_1^k,\ldots,V_d^k)$ is the spatial momentum of channel $k$, and the scalar perspective convention is used. The constraint~\eqref{eq-vector-valued-continuity} also separates into $\partial_t u^k+\nabla\cdot V^k=0$. The minimization therefore splits into $m$ independent scalar Benamou--Brenier problems. If $m_k=0$, nonnegativity and conservation force the whole channel to vanish. If $m_k>0$, normalizing $\rho_t^k=u_t^k/m_k$ and $p_t^k=V_t^k/m_k$ factors the channel action as $m_k\int |p_t^k|^2/\rho_t^k$, hence the scalar value is $m_k\Wass_2^2(\mu_0^k/m_k,\mu_1^k/m_k)$. Summing over the channels proves the claim.
\index{momentum}
\index{Benamou-Brenier}
\end{proof}

The conservative positive-cone model above is the basic extension of Benamou--Brenier. Adding a source term $\partial_t u+\nabla\cdot V=S$ and a convex perspective penalty in $S$ gives unbalanced or reaction--transport variants. Such generalized transport models with dissipation and density modulation were developed by Maas, Rumpf, Sch{\"o}nlieb and Simon~\cite{maas2015generalized,maas2016generalized}; related nonlinear mobility distances and gradient structures appear in~\cite{dolbeault2009new,MielkeCVPDE}. Figure~\ref{fig:vector-valued-measure-geodesics} contrasts the exact diagonal case $\kappa=0$, where each positive channel is transported by its quantile map, with a large-$\kappa$ illustrative common-mode interpolation in which the channels move more coherently. The endpoints are two-mode mixtures: at each spatial mode the two channels have Gaussian profiles with the same center but different amplitudes.
\index{gradient!structure}
\index{vector-valued measure}
\index{positive!cone}
\index{quantile!map}

\begin{figure}[H]
\centering
\begin{tabular}{@{}cc@{}}
\includegraphics[width=.45\linewidth]{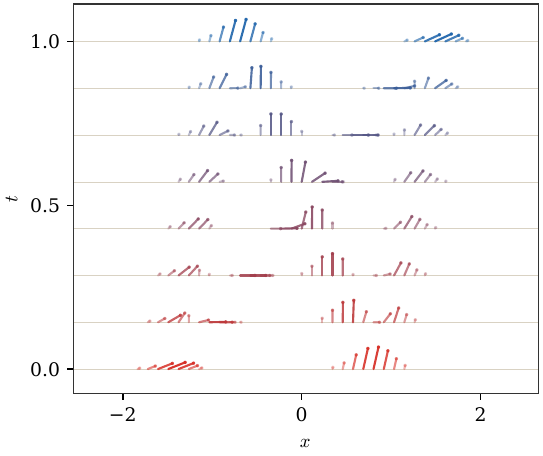} &
\index{vector-valued measure}
\includegraphics[width=.45\linewidth]{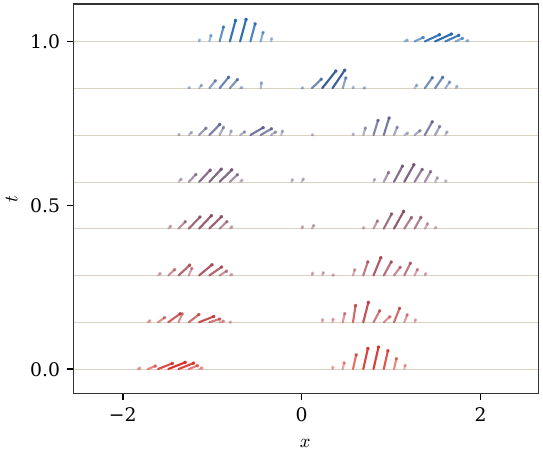}
\\[-.1em]
\small independent channels $(\kappa=0)$ &
\small strongly coupled channels $(\kappa\gg1)$
\end{tabular}
\caption{One-dimensional positive $\RR_+^2$-valued transport displayed by arrow glyphs at eight time levels. Each endpoint is a mixture of two localized Gaussian modes, and, inside each mode, both channel profiles have the same center. Each arrow is proportional to the local fiber value $(u_t^1(x),u_t^2(x))$, and time runs vertically from the red source to the blue target. Left: for $\kappa=0$, the diagonal mobility of Proposition~\ref{prop-diagonal-positive-vector-bb} gives two independent scalar quantile geodesics. Right: a large-$\kappa$ common-mode interpolation bends the display toward the direction $q=2^{-1/2}(1,1)$, illustrating the qualitative effect of a mobility that favors coherent channel motion while keeping the same componentwise continuity equation.}
\index{continuity equation}
\label{fig:vector-valued-measure-geodesics}
\end{figure}

\paragraph{Positive matrix-valued measures.}
\index{matrix-valued measure}
\index{positive!matrix-valued measure}
The next simplest fiber is the positive matrix cone. This is the simplest tensor-valued model beyond vectors: the diagonal entries behave like positive channels, while the eigenvectors encode local orientations.

\begin{defn}[Positive matrix-valued measure]\label{def-positive-matrix-valued-measure}
\index{matrix!cone}
\index{positive!matrix-valued measure}
	Write $\mathbb S^m$ for real symmetric matrices and $\mathbb S_+^m$ for the positive semidefinite cone. A positive $\mathbb S_+^m$-valued measure is an element
	\[
		\mathcal A\in\Mm_+(\X;\mathbb S_+^m).
	\]
\end{defn}
If $\mathcal A$ has density $A(x)\succeq0$, then $\tr A(x)$ is the scalar amount of mass at $x$, while, wherever $\tr A(x)>0$, the normalized matrix $A(x)/\tr A(x)$ records an internal covariance or orientation. This is the matrix analogue of the positive vector case: diagonal matrices encode nonnegative vector components, and non-diagonal matrices add a local eigenbasis.

The conservative Benamou--Brenier model fixes a matrix density $A_t(x)\in\mathbb S_+^m$ and symmetric matrix fluxes $P_t(x)=(P_{t,1},\ldots,P_{t,d})\in(\mathbb S^m)^d$. With no flux through the boundary of $\X$, the full matrix mass $\int_\X A_t(x)\d x$ is conserved, so the endpoints must have the same total matrix. The model minimizes the matrix-perspective action
\index{flux}
\index{Benamou-Brenier}
\begin{equation}\label{eq-matrix-valued-bb}
\begin{aligned}
\mathcal W_{\mathrm{mat}}^2(\mathcal A_0,\mathcal A_1)
\eqdef
\inf_{A,P}\int_0^1\!\int_\X
\sum_{\ell=1}^d \tr\big(P_{t,\ell}^{\top} A_t^\dagger P_{t,\ell}\big)\d x\d t
\end{aligned}
\end{equation}
subject to $A_0\d x=\mathcal A_0$, $A_1\d x=\mathcal A_1$ and to the matrix-valued continuity equation
\index{matrix-valued measure}
\index{continuity equation}
\begin{equation}\label{eq-matrix-valued-continuity}
\partial_t A_t+\nabla_x\cdot P_t=0,
\qquad
\nabla_x\cdot P_t=\sum_{\ell=1}^d \partial_{x_\ell}P_{t,\ell}.
\end{equation}
Here $A^\dagger$ denotes the Moore--Penrose inverse, with the usual perspective convention: the action is finite only when the columns of each $P_{t,\ell}$ belong to the range of $A_t$. The map $(A,P)\mapsto\tr(P^{\top}A^\dagger P)$ is the matrix fractional function; it is jointly convex on $A\succeq0$. This gives the simplest non-trivial matrix-valued transport model: spatial motion is conservative, but the fiber carries orientation through the eigenvectors of $A_t(x)$.

\begin{prop}[Diagonal matrix subproblem]\label{prop-matrix-diagonal-reduction}
\index{matrix!subproblem}
Assume that the endpoints are diagonal in a fixed orthonormal basis,
\[
	\mathcal A_i=\diag(\mu_i^1,\ldots,\mu_i^m),
	\qquad i=0,1,
\]
and that $\mu_0^k(\X)=\mu_1^k(\X)=m_k$ for every $k$. If one restricts the admissible curves in~\eqref{eq-matrix-valued-bb} to remain diagonal in that basis,
\[
	A_t=\diag(u_t^1,\ldots,u_t^m),
	\qquad
	P_{t,\ell}=\diag(V_{t,\ell}^1,\ldots,V_{t,\ell}^m),
\]
then the value of this restricted matrix problem is
\[
	\sum_{k:m_k>0} m_k\,\Wass_2^2\!\left(\frac{\mu_0^k}{m_k},\frac{\mu_1^k}{m_k}\right),
\]
with zero contribution from zero-mass channels. Thus the commuting matrix submodel is exactly the diagonal positive vector-valued Benamou--Brenier model of Proposition~\ref{prop-diagonal-positive-vector-bb}.
\index{Benamou-Brenier}
\end{prop}
\begin{proof}
The continuity equation~\eqref{eq-matrix-valued-continuity} is diagonal entry by diagonal entry and gives $\partial_t u^k+\nabla\cdot V^k=0$. Moreover,
\index{continuity equation}
\[
	\sum_{\ell=1}^d \tr\big(P_{t,\ell}^{\top}A_t^\dagger P_{t,\ell}\big)
	=
	\sum_{k=1}^m \frac{|V_t^k|^2}{u_t^k}
\]
with the same scalar perspective convention as before. The admissible set and the action are therefore exactly those of the diagonal vector model.
\end{proof}

The restriction to a fixed diagonal basis gives eigenvalue transport; it should be read as a commuting submodel, not as a claim that non-diagonal excursions can never change the unrestricted value. The genuinely matrix-valued case starts when the eigenspaces vary with $x$ or along the interpolation, so that the transported object carries both mass and orientation. Static matrix-valued Monge--Kantorovich problems and dual test-function metrics were developed in~\cite{Ning2014metrics,JiangSpectral,ning2015matrix}; dynamic versions and related non-commutative geometries appear in~\cite{Chen2016,ChenGangbo17,Carlen2014,2016-peyre-qot}. Figure~\ref{fig:matrix-valued-measure-geodesic} shows the analogous independent/coupled contrast for positive $2\times2$ matrix fibers, using two localized matrix modes whose eigenvalue profiles share a common center at each mode.
\index{Kantorovich!problem}
\index{matrix-valued measure}

\begin{figure}[H]
\centering
\begin{tabular}{@{}cc@{}}
\includegraphics[width=.46\linewidth]{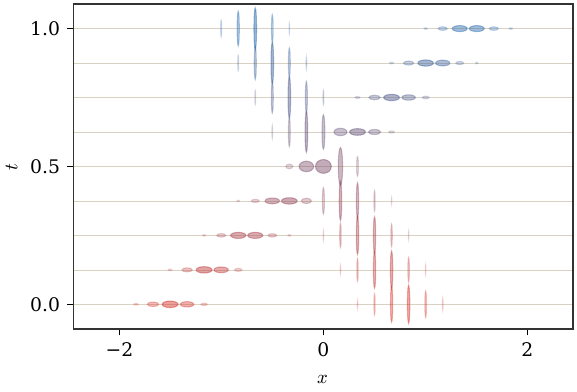} &
\index{matrix-valued measure}
\includegraphics[width=.46\linewidth]{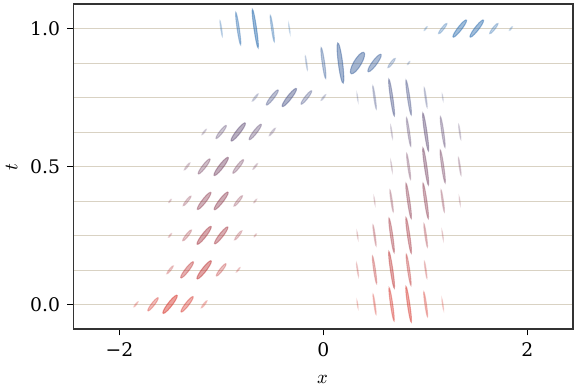}
\\[-.1em]
\small commuting tensor channels &
\small strongly coupled tensor fibers
\end{tabular}
\caption{Positive $2\times2$ matrix-valued transport on a one-dimensional base. Each endpoint is a mixture of two localized matrix modes; within one mode, both eigenvalue profiles are Gaussian bumps with the same center. Each ellipse is the glyph of a positive semidefinite matrix $A_t(x)$, with axes given by eigenvectors and eigenvalues. Left: the matrices are diagonal in a fixed basis, giving the commuting tensor analogue of independent vector channels. Right: a coupled illustrative interpolation bends packet motion toward the trace-density transport and uses non-commuting eigendirections; the superposition remains positive semidefinite and produces spatially varying orientations.}
\index{positive!semidefinite matrix}
\label{fig:matrix-valued-measure-geodesic}
\end{figure}

\section{Gromov--Wasserstein}
\index{Gromov-Wasserstein}
\label{sec-gromov-wasserstein}

Gromov--Wasserstein compares spaces through their internal distance structures rather than through a fixed ambient ground cost. This is the right extension for graphs, shapes and point clouds whose points are not pre-aligned.
\index{ground cost}

\paragraph{Discrete formulation.}

Optimal transport needs a ground cost $\C$ to compare histograms $(\a,\b)$, and thus cannot be used directly if the histograms are not defined on the same underlying space, or if one cannot pre-register these spaces to define a ground cost.
\index{histogram}
\index{ground cost}
To address this issue, one can instead use a weaker requirement: two matrices $\distD \in \RR^{n \times n}$ and $\distD' \in \RR^{m \times m}$ are available and represent relationships between the points on which the histograms are defined. A typical scenario is when these matrices are powers of distance matrices.
The Gromov--Wasserstein problem reads
\index{Gromov-Wasserstein}
\eql{\label{eq-gw-def}
	\GWD( (\a,\distD), (\b,\distD') )^p \eqdef
	\umin{ \P \in \CouplingsD(\a,\b) }
		\Ee_{\distD,\distD'}(\P) \eqdef
		\sum_{i,j,i',j'} \De(\distD_{i,i'},\distD'_{j,j'})^p \P_{i,j}\P_{i',j'},
}
where $p \geq 1$ and $\De$ is a distance on $\RR$, typically $\De(u,v)=|u-v|$.
This is a non-convex quadratic problem over the transport polytope. In the uniform case with the additional hard-assignment constraint $m=n$ and $\P$ a permutation matrix, it becomes a Quadratic Assignment Problem (QAP)~\cite{loiola-2007}; this restricted graph-matching form is already NP-hard in full generality.
\index{assignment problem}
\index{quadratic!assignment problem}
\index{matrix!permutation}
\index{transportation!polytope}
The relaxed coupling formulation used in GW can therefore be read as a soft graph-matching model~\cite{lyzinski-2015}.

\begin{figure}[ht]
\centering
\begin{tabular}{@{}ccc@{}}
\includegraphics[width=.30\linewidth]{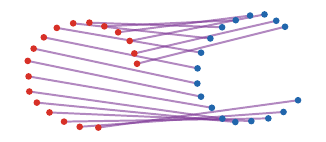} &
\includegraphics[width=.30\linewidth]{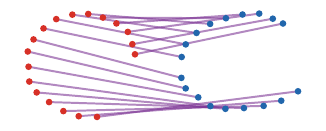} &
\includegraphics[width=.30\linewidth]{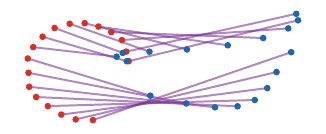}
\\[-.1em]
\small isometric copy &
\small mild deformation &
\small strong deformation
\end{tabular}
\caption{Gromov--Wasserstein correspondences under increasing deformation. The red and blue point clouds are not compared through an ambient Euclidean cross-cost; instead, the GW coupling compares their internal pairwise distances. A perfectly isometric copy admits a clean structural match, while mild and deliberately stronger deformations progressively bend the correspondence.}
\index{pairwise distance}
\index{Gromov-Wasserstein}
\index{correspondence}
\label{fig:gromov-isometry-matching}
\end{figure}

When the matrices $\distD,\distD'$ are genuine distance matrices, the general construction below shows that $\GWD$ satisfies the triangle inequality and defines a distance between metric spaces equipped with a probability distribution, up to measure-preserving isometries.
\index{triangle inequality}
\index{measure-preserving!isometry}
This distance was introduced and studied in detail by Memoli in~\cite{memoli-2011}. An in-depth mathematical exposition (in particular, its geodesic structure and gradient flows) is given in~\cite{SturmGW}. See also~\cite{schmitzer2013modelling} for applications in computer vision.
\index{gradient!flow}
Its relation to Hausdorff and Gromov--Hausdorff distances is discussed at the end of this section.
\index{Hausdorff distance}
\index{Gromov-Hausdorff distance}

\paragraph{General setting.}

\begin{defn}[Metric-measure space]\label{def-metric-measure-space}
\index{metric-measure space}
	A metric-measure space is a triple
	\[
		\XX=(\X,\dist_\X,\al),
	\]
	where $(\X,\dist_\X)$ is a metric space and $\al$ is a probability measure on $\X$.
\end{defn}
The general setting corresponds to computing couplings between metric-measure spaces $\XX=(\X,\dist_\X,\al)$ and $\YY=(\Y,\dist_\Y,\be)$, where the distance and the measure are both part of the data.
The natural setting is that of Polish metric spaces; compactness is often assumed in this section to avoid existence and integrability issues.
One defines
	\begin{align}
		\label{eq-gw-generic}
		\GW( \XX,\YY )^p \eqdef
		\umin{ \pi \in \Couplings(\al,\be) }
		\int_{\X^2 \times \Y^2}
		 \De(\dist_\X(x,x'),\dist_\Y(y,y'))^p
		\d\pi(x,y)\d\pi(x',y').
	\end{align}

\begin{prop}[Euclidean GW is controlled by Wasserstein]\label{prop-gw-controlled-by-wasserstein}
	Let $\alpha,\beta$ be probability measures on $\RR^d$, equipped with the Euclidean distance, and take $\De(u,v)=|u-v|$ in~\eqref{eq-gw-generic}. Then
\index{probability measure}
	\[
		\GW((\RR^d,\norm{\cdot},\alpha),(\RR^d,\norm{\cdot},\beta))
		\leq
		2\Wass_p(\alpha,\beta).
	\]
\end{prop}
\begin{proof}
	Let $\pi$ be any coupling between $\alpha$ and $\beta$. For two independent pairs $(X,Y),(X',Y')\sim\pi$, the reverse triangle inequality gives
\index{triangle inequality}
	\[
		\big|\norm{X-X'}-\norm{Y-Y'}\big|
		\leq
		\norm{X-Y}+\norm{X'-Y'}.
	\]
	Taking the $L^p$ norm and using Minkowski gives a bound by $2(\int\norm{x-y}^p\d\pi)^{1/p}$. Optimizing over $\pi$ proves the claim.
\index{Minkowski inequality}
\end{proof}

To turn GW from a distortion score into a metric statement, one must quotient out the relabelings that preserve both distances and mass.

\begin{defn}[Isometric metric-measure spaces]\label{def-isometric-mm-spaces}
\index{metric-measure space}
\index{measure-preserving!isometry}
	Two metric-measure spaces $\XX=(\X,\dist_\X,\al)$ and $\YY=(\Y,\dist_\Y,\be)$ are isometric if there exists a measurable map $\phi:\supp(\al)\to\supp(\be)$ such that $\phi_\sharp\al=\be$, $\phi(\supp(\al))=\supp(\be)$, and
	\[
		\dist_\Y(\phi(x),\phi(x'))=\dist_\X(x,x')
	\]
	for all $x,x'\in\supp(\al)$.
\end{defn}

The next theorem explains why the averaged distortion above is not merely a matching score. Once one quotients out measure-preserving isometries, it defines a genuine distance between metric-measure spaces.
\index{distortion}
\index{measure-preserving!isometry}
\index{metric-measure space}

\begin{thm}[Gromov--Wasserstein metric modulo isometries]\label{thm-gw-metric}
\index{Gromov-Wasserstein}
\index{Gromov-Wasserstein!metric}
\index{modulo isometry}
	For compact metric-measure spaces, $p\geq1$ and $\De(u,v)=|u-v|$, $\GW$ defines a distance up to measure-preserving isometries.
\index{measure-preserving!isometry}
\end{thm}

\begin{proof}
	If $\GW(\XX,\YY)=0$ and $\pi$ is an optimal plan, then $\dist_\X(x,x')=\dist_\Y(y,y')$ holds $\pi\otimes\pi$-almost everywhere. By continuity, this equality holds on $\supp(\pi)^2$. We show that $\XX$ and $\YY$ are isometric by showing that both are isometric to the support space $(\supp(\pi),\dist_\pi,\pi)$, where
\index{optimal plan}
	\[
		\dist_\pi((x,y),(x',y'))
		\eqdef
		\frac12\dist_\X(x,x')+\frac12\dist_\Y(y,y').
	\]
	The first projection $\psi:(x,y)\mapsto x$ is measure-preserving. For $((x,y),(x',y'))\in\supp(\pi)^2$,
	\[
		\dist_\X(\psi(x,y),\psi(x',y'))
		=
		\dist_\X(x,x')
		=
		\dist_\Y(y,y')
		=
		\dist_\pi((x,y),(x',y')),
	\]
	so $\psi$ is an isometry and therefore injective. To see surjectivity onto $\supp(\al)$, take $x\in\supp(\al)$. Since $\psi_\sharp\pi=\al$, there is a sequence $(x_k,y_k)\in\supp(\pi)$ with $x_k\to x$. The equality of distances on $\supp(\pi)$ makes $(y_k)_k$ Cauchy, and compactness gives a convergent subsequence with limit $(x,y)\in\supp(\pi)$. The same argument for the second projection shows that the support space is also isometric to $\YY$.

	For the triangle inequality, let $\pi$ be an optimal coupling between $\XX$ and $\YY$, and $\xi$ an optimal coupling between $\YY$ and $\ZZ=(\Z,\dist_\Z,\ga)$. By the gluing lemma, take $\sigma$ on $\X\times\Y\times\Z$ whose $(\X,\Y)$ and $(\Y,\Z)$ marginals are $\pi$ and $\xi$. Let $\rho=(P_{\X,\Z})_\sharp\sigma$, and write $\bar\sigma=\sigma\otimes\sigma$ for the product law of two independent triples $(x,y,z)$ and $(x',y',z')$. Then $\rho$ is feasible between $\XX$ and $\ZZ$, and
\index{triangle inequality}
\index{optimal coupling}
\index{gluing lemma}
	\begin{align*}
		\GW(\XX,\ZZ)
		&\leq
		\left(
		\int
		\abs{\dist_\X(x,x')-\dist_\Z(z,z')}^p
		\d\bar\sigma
		\right)^{1/p} \\
		&\leq
		\left(
		\int
		\abs{\dist_\X(x,x')-\dist_\Y(y,y')}^p
		\d\bar\sigma
		\right)^{1/p}
		+
		\left(
		\int
		\abs{\dist_\Y(y,y')-\dist_\Z(z,z')}^p
		\d\bar\sigma
		\right)^{1/p} \\
		&=
		\GW(\XX,\YY)+\GW(\YY,\ZZ),
	\end{align*}
		where the second inequality uses the pointwise triangle inequality followed by Minkowski's inequality. Symmetry and non-negativity are immediate.
\index{Minkowski inequality}
\end{proof}

The metric structure also gives geodesics. Sturm's construction is useful conceptually because it allows one to speak about interpolation, barycenters and gradient flows directly on the space of metric-measure spaces, even though the intermediate space lives on a product support and is therefore expensive numerically~\cite{SturmGW}.
\index{metric-measure space}
\index{gradient!flow}

\begin{prop}[Gromov--Wasserstein geodesics]\label{prop-gw-geodesics}
\index{Wasserstein!geodesic}
\index{Gromov-Wasserstein}
\index{Gromov-Wasserstein!geodesic}
	Let $\XX_0=(\X_0,\dist_{\X_0},\al_0)$ and $\XX_1=(\X_1,\dist_{\X_1},\al_1)$ be compact metric-measure spaces, take $\De(u,v)=|u-v|$ in~\eqref{eq-gw-generic}, and let $\pi^\star$ be an optimal coupling. Define, on $\Z=\X_0\times\X_1$,
\index{optimal coupling}
	\[
		\dist_t((x_0,x_1),(x'_0,x'_1))
		\eqdef
		(1-t)\dist_{\X_0}(x_0,x'_0)+t\dist_{\X_1}(x_1,x'_1),
		\qquad
		\XX_t=(\Z,\dist_t,\pi^\star).
	\]
	At $t=0$ and $t=1$, and possibly in degenerate cases, one quotients $\Z$ by the zero-distance relation associated with $\dist_t$. Then $t\mapsto\XX_t$ is a constant-speed geodesic:
\index{constant-speed geodesic}
	\[
		\GW(\XX_s,\XX_t)=|t-s|\GW(\XX_0,\XX_1)
		\qquad
		\forall s,t\in[0,1].
	\]
\end{prop}

\begin{proof}
	Write $D=\GW(\XX_0,\XX_1)$. For $s<t$, couple $\XX_s$ and $\XX_t$ by the diagonal coupling induced by the identity on $\Z$ and the measure $\pi^\star$. For two independent points $z=(x_0,x_1)$ and $z'=(x'_0,x'_1)$ sampled from $\pi^\star$,
	\[
		\dist_t(z,z')-\dist_s(z,z')
		=
		(t-s)\big(\dist_{\X_1}(x_1,x'_1)-\dist_{\X_0}(x_0,x'_0)\big).
	\]
	Using this feasible coupling gives $\GW(\XX_s,\XX_t)\leq(t-s)D$. The same construction with the projections from $\Z$ to $\X_0$ and $\X_1$ gives $\GW(\XX_0,\XX_t)\leq tD$ and $\GW(\XX_t,\XX_1)\leq(1-t)D$. The triangle inequality for $\GW$ then yields, for $0\leq s\leq t\leq1$,
\index{triangle inequality}
\index{feasible coupling}
	\[
		D
		\leq
		\GW(\XX_0,\XX_s)+\GW(\XX_s,\XX_t)+\GW(\XX_t,\XX_1)
		\leq
		sD+\GW(\XX_s,\XX_t)+(1-t)D,
	\]
	so $\GW(\XX_s,\XX_t)\geq(t-s)D$. This proves equality.
\end{proof}

\begin{figure}[ht]
\centering
\begin{tabular}{@{}cc@{}}
\includegraphics[width=.43\linewidth]{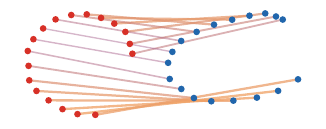} &
\includegraphics[width=.23\linewidth]{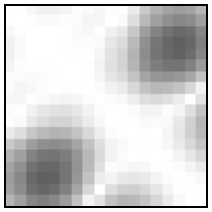}
\\[-.1em]
\small hard correspondence $\sigma$ &
\small residual $|d_\X-d_\Y\circ\sigma|$
\end{tabular}
\caption{Local distortion in a mildly non-isometric GW match. The left panel colors transport segments by the average residual induced by the displayed hard correspondence. The right panel shows the pairwise-distance residual matrix $|d_\X(x_i,x_{i'})-d_\Y(y_{\sigma(i)},y_{\sigma(i')})|$ in white-to-black scale, with darker entries marking larger local distortion. This matrix is the local contribution minimized by the discrete GW objective for the displayed correspondence.}
\index{distance residual}
\index{distortion}
\index{correspondence}
\label{fig:gromov-nonisometric-distortion}
\end{figure}

We now record the profile lower bound used above as a useful initialization principle for the non-convex solver.
\index{profile lower bound}

\begin{prop}[M\'emoli profile lower bound]\label{prop-memoli-gw-profile-lower-bound}
\index{Memoli profile}
	Let $\XX=(\X,\dist_\X,\alpha)$ and $\YY=(\Y,\dist_\Y,\beta)$ be compact metric-measure spaces and take $\De(u,v)=|u-v|$ in~\eqref{eq-gw-generic}, with the same exponent $p\geq1$. For each $x\in\X$ and $y\in\Y$, define the distance-profile measures on $\RR_+$ by
\index{metric-measure space}
	\[
		\alpha_x\eqdef(\dist_\X(x,\cdot))_\sharp\alpha,
		\qquad
		\beta_y\eqdef(\dist_\Y(y,\cdot))_\sharp\beta.
	\]
	Let $\mathsf E_\XX=(x\mapsto\alpha_x)_\sharp\alpha$ and $\mathsf E_\YY=(y\mapsto\beta_y)_\sharp\beta$, which are probability measures on $\Pp(\RR_+)$. Then
\index{probability measure}
	\[
		\Wass_p\bigl(\mathsf E_\XX,\mathsf E_\YY\bigr)
		\leq
		\GW(\XX,\YY).
	\]
	Here the left-hand distance is taken on the space $\Pp(\RR_+)$ of profile measures.
	Its ground cost is the one-dimensional Wasserstein distance $\Wass_p$.
\index{ground cost}
\index{Wasserstein!distance}
\end{prop}
\begin{proof}
	Fix any $\pi\in\Couplings(\alpha,\beta)$. It induces a coupling $(x,y)\mapsto(\alpha_x,\beta_y)$ between $\mathsf E_\XX$ and $\mathsf E_\YY$, hence
	\[
		\Wass_p\bigl(\mathsf E_\XX,\mathsf E_\YY\bigr)^p
		\leq
		\int_{\X\times\Y}\Wass_p(\alpha_x,\beta_y)^p\,\d\pi(x,y).
	\]
	For fixed $(x,y)$, the map $(x',y')\mapsto(\dist_\X(x,x'),\dist_\Y(y,y'))$ pushes the same coupling $\pi$ to a coupling between $\alpha_x$ and $\beta_y$. Therefore
	\[
		\Wass_p(\alpha_x,\beta_y)^p
		\leq
		\int_{\X\times\Y}
		\abs{\dist_\X(x,x')-\dist_\Y(y,y')}^p
		\d\pi(x',y').
	\]
	Integrating in $(x,y)$ gives
	\[
		\Wass_p\bigl(\mathsf E_\XX,\mathsf E_\YY\bigr)^p
		\leq
		\int_{\X^2\times\Y^2}
		\abs{\dist_\X(x,x')-\dist_\Y(y,y')}^p
		\d\pi(x,y)\d\pi(x',y').
	\]
	Taking the infimum over $\pi$ and then the $p$-th root proves the claim.
\end{proof}

This lower bound is useful computationally because the profile cost matrix $C_{ij}=\Wass_p(\alpha_{x_i},\beta_{y_j})^p$ is an ordinary OT cost between points. Solving this easier OT problem gives a geometry-aware initialization for the non-convex GW iterations below, before the full pairwise-distance distortion is optimized.
\index{distortion}
\index{cost matrix}

\paragraph{Entropic regularization and iterative solver.}
\index{entropic!regularization}

For the common squared distortion $\De(u,v)^2=(u-v)^2$, one often computes a stationary point of the entropic relaxation
\index{distortion}
\index{stationarity}
\eql{\label{eq-gw-entropy}
	\umin{\P\in\CouplingsD(\a,\b)}
	\Ee_{\distD,\distD'}(\P)-\epsilon\HD(\P).
}
Although the objective is non-convex, successive linearizations lead to a practical mirror-descent scheme~\cite{peyre2016gromov}. Up to an irrelevant global factor in the gradient, one alternates
\index{linear!linearization}
\eql{\label{eq-gw-sinkh}
	\itt{\P}
	\eqdef
	\umin{\P\in\CouplingsD(\a,\b)}
	\dotp{\P}{\it{\C}}-\epsilon\HD(\P),
	\qquad
	\it{\C}
	\eqdef
	\distD^{\odot2}\a\,\ones_m^\top
	+
	\ones_n(\distD'^{\odot2}\b)^\top
	-
	2\distD\,\it{\P}\,\transp{\distD'}.
}
Each update is an ordinary entropic OT problem and can therefore be solved with Sinkhorn iterations. This is the standard entropic GW solver used to compute soft maps between domains; it improves scalability and smooths the landscape, but it does not remove the non-convexity of the GW objective.
\index{Sinkhorn!iteration}
\index{entropic!OT}

\begin{alg}[Entropic Gromov--Wasserstein linearization]\label{alg:entropic-gromov-wasserstein}
\textbf{Input:} Metric matrices $\distD,\distD'$, weights $\a,\b$, regularization $\epsilon>0$, tolerance $\mathrm{tol}$.

\textbf{Output:} Approximate entropic GW coupling $\P\in\CouplingsD(\a,\b)$.

\textbf{Initialize:} Set $\P^{(0)}=\a\otimes\b$.

\textbf{For} $k=0,1,\ldots$ \textbf{do}:
\begin{algblock}
\(\C^{(k)} = \distD^{\odot2}\a\,\ones_m^\top + \ones_n(\distD'^{\odot2}\b)^\top - 2\distD\,\P^{(k)}\,\transp{\distD'}.\)

\textbf{Solve} entropic OT subproblem:
\(\P^{(k+1)} = \uargmin{\P\in\CouplingsD(\a,\b)} \dotp{\P}{\C^{(k)}}-\epsilon\HD(\P).\)

\textbf{If} $\norm{\P^{(k+1)}-\P^{(k)}}\leq\mathrm{tol}$ \textbf{then}:
\begin{algblock}
\textbf{Return} $\P^{(k+1)}$.
\end{algblock}
\end{algblock}
\end{alg}

\paragraph{Adding features.}
\index{cost!feature}
Fused Gromov--Wasserstein augments the structural term with a feature transport cost~\cite{vayer2019optimaltransportstructured}. In the discrete case, given a cross-feature cost $M\in\RR^{n\times m}$ and a parameter $\lambda\in[0,1]$, one minimizes
\index{Gromov-Wasserstein}
\index{Gromov-Wasserstein!fused}
\index{cost!feature}
\[
	\operatorname{FGW}_{\lambda,p}((\a,\distD),(\b,\distD'))^p
	\eqdef
	\umin{\P\in\CouplingsD(\a,\b)}
	(1-\lambda)\sum_{i,j}M_{ij}\P_{ij}
	+\lambda
	\sum_{i,j,i',j'}\De(\distD_{ii'},\distD'_{jj'})^p\P_{ij}\P_{i'j'}.
\]
The first term compares node attributes in the usual OT sense, while the second compares the intrinsic geometry. The endpoints $\lambda=0$ and $\lambda=1$ recover feature-only OT and pure GW respectively; intermediate values trade attribute matching against structural matching. This is useful when the two spaces have both distances and features, and these two sources of information may disagree.

\begin{figure}[ht]
\centering
\begin{tabular}{@{}ccc@{}}
\small feature-only OT &
\small fused GW &
\small pure GW \\[-.15em]
\includegraphics[width=.30\linewidth]{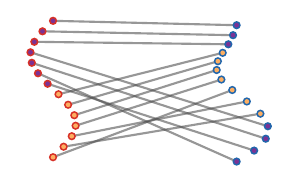} &
\includegraphics[width=.30\linewidth]{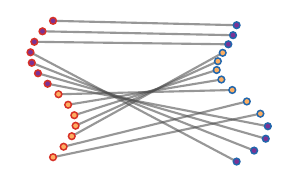} &
\includegraphics[width=.30\linewidth]{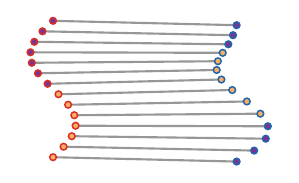}
\end{tabular}
\caption{Feature information and intrinsic geometry in fused Gromov--Wasserstein. Small inner disks encode binary node features. Feature-only OT follows the attributes even when this crosses the shape structure, pure GW follows the intrinsic ordering, and fused GW balances the feature term with the pairwise-distance distortion.}
\index{feature term}
\index{node feature}
\index{distortion}
\index{Gromov-Wasserstein!fused}
\label{fig:fused-gromov-feature-geometry}
\end{figure}

\paragraph{Hausdorff and Gromov--Hausdorff viewpoints.}
\index{Hausdorff distance}
\index{Gromov-Hausdorff distance}

If $A,B$ are compact subsets of a common metric space $(\Z,\dist_\Z)$, their Hausdorff distance is
\index{Hausdorff distance}
\[
	d_{\mathrm H}^{\Z}(A,B)
	=
	\max\left\{
		\sup_{a\in A}\inf_{b\in B}\dist_\Z(a,b),
		\sup_{b\in B}\inf_{a\in A}\dist_\Z(a,b)
	\right\}.
\]
The Gromov--Hausdorff distance removes the common ambient space by minimizing this quantity over all isometric embeddings into a third space:
\index{isometric embedding}
\index{Hausdorff distance}
\index{Gromov-Hausdorff distance}
\[
	d_{\mathrm{GH}}(\X,\Y)
	=
	\inf_{\Z,\phi,\psi}
	d_{\mathrm H}^{\Z}(\phi(\X),\psi(\Y)).
\]
Equivalently, it is half the minimal distortion of a correspondence between $\X$ and $\Y$~\cite{gromov-2001,memoli-2007}. This is a worst-case set distance: every point must be matched with small distortion. Gromov--Wasserstein replaces correspondences by probability couplings and worst-case distortion by averaged distortion. It is therefore better adapted to noisy sampled shapes and weighted graphs, but it can ignore small sets of mass that would dominate the Hausdorff distance.
\index{distortion}
\index{correspondence}
\index{Gromov-Wasserstein}
\index{Hausdorff distance}

\section{Quantum Optimal Transport}
\index{quantum!optimal transport}
\label{sec-quantum-ot}

Quantum optimal transport replaces probability vectors by density matrices and scalar couplings by positive operators on a tensor product space. This is the right language when the transported objects are matrix-valued signals, covariance-like descriptors or quantum states, and it exposes a precise bridge between OT, non-commutative entropy and operator scaling. The finite-dimensional formulation below follows the semidefinite viewpoint developed in matrix-valued and quantum OT~\cite{Ning2014metrics,Chen2016,ChenGangbo17,2016-peyre-qot,caglioti2019quantum,chakrabarti2019quantum}.
\index{positive operator}
\index{quantum!OT}
\index{operator scaling}

\paragraph{Finite-dimensional states and couplings.}
\index{density!matrix}
\index{quantum!coupling}

\begin{defn}[Hermitian and density matrices]\label{def-hermitian-density-matrices}
\index{Hermitian matrix}
\index{density!matrix}
	Let $\mathbb H_n$ be the real vector space of $n\times n$ Hermitian matrices,
	\[
		\mathbb H_n^+
		=
		\enscond{A\in\mathbb H_n}{A\succeq0},
		\qquad
		\mathbb H_n^{+,1}
		=
		\enscond{A\in\mathbb H_n^+}{\tr(A)=1}.
	\]
	Elements of $\mathbb H_n^{+,1}$ are density matrices.
\end{defn}
A joint quantum state between $\CC^n$ and $\CC^m$ is a matrix $T\in\mathbb H_{nm}^+$ acting on $\CC^n\otimes\CC^m$. Its marginals are the partial traces, defined by duality through
\index{density!matrix}
\index{trace!partial}
\begin{equation}\label{eq-qot-partial-traces}
	\tr(F\,\operatorname{Tr}_B T)=\tr((F\otimes\Id_m)T),
	\qquad
	\tr(G\,\operatorname{Tr}_A T)=\tr((\Id_n\otimes G)T),
\end{equation}
for all $F\in\mathbb H_n$ and $G\in\mathbb H_m$. Thus $\operatorname{Tr}_B(T)\in\mathbb H_n^+$ and $\operatorname{Tr}_A(T)\in\mathbb H_m^+$ play exactly the role of the two marginals of a classical coupling. The feasible set is never empty, since $A\otimes B$ has marginals $A$ and $B$.

\begin{defn}[Finite-dimensional quantum OT]\label{def-finite-dimensional-qot}
\index{quantum!OT}
	Let $A\in\mathbb H_n^{+,1}$, $B\in\mathbb H_m^{+,1}$ and let $C\in\mathbb H_{nm}$ be a Hermitian cost observable. The quantum OT value is the semidefinite program
\index{semidefinite program}
\index{Hermitian matrix}
	\begin{equation}\label{eq-qot-primal}
		\mathrm{QOT}_C(A,B)
		\eqdef
		\min_{T\in\mathbb H_{nm}^+}
		\enscond{\tr(CT)}{\operatorname{Tr}_B(T)=A,\ \operatorname{Tr}_A(T)=B}.
	\end{equation}
\end{defn}

\begin{example}[Classical diagonal case]\label{rem-qot-classical-diagonal-case}
	If $A$, $B$, $C$ and $T$ are all diagonal in fixed bases, then $A$ and $B$ are probability vectors, $T$ is a nonnegative matrix and the partial-trace constraints reduce to the usual row and column sum constraints. Hence classical Kantorovich OT is the diagonal, commutative subcase of~\eqref{eq-qot-primal}. The genuinely quantum feature is that $T$ may contain off-diagonal coherences and entanglement.
\index{trace!partial constraint}
\end{example}

\begin{prop}[Quantum Kantorovich duality]\label{prop-qot-duality}
\index{Kantorovich!duality}
\index{quantum!Kantorovich duality}
	For $A\in\mathbb H_n^{+,1}$ and $B\in\mathbb H_m^{+,1}$, the dual of~\eqref{eq-qot-primal} is
	\begin{equation}\label{eq-qot-dual}
		\mathrm{QOT}_C(A,B)
		=
		\max_{F\in\mathbb H_n,\ G\in\mathbb H_m}
		\left\{
			\tr(FA)+\tr(GB)
			:\,
			F\otimes\Id_m+\Id_n\otimes G\preceq C
		\right\}.
	\end{equation}
	If $A$ and $B$ are positive definite, strong duality follows directly from Slater's condition; the semidefinite case follows by restriction to the supports of $A$ and $B$ or by approximation.
\index{duality!strong}
\index{Slater condition}
\end{prop}

\begin{proof}
	Introduce Hermitian Lagrange multipliers $F$ and $G$ for the two marginal constraints. The Lagrangian is
\index{Hermitian matrix}
\index{marginal!constraint}
	\[
		\tr(CT)+\tr\!\left(F(A-\operatorname{Tr}_B T)\right)
		+\tr\!\left(G(B-\operatorname{Tr}_A T)\right)
		=
		\tr(FA)+\tr(GB)
		+\tr\!\left((C-F\otimes\Id_m-\Id_n\otimes G)T\right),
	\]
	where~\eqref{eq-qot-partial-traces} was used in the last equality. Minimizing over $T\succeq0$ gives a finite lower bound if and only if $C-F\otimes\Id_m-\Id_n\otimes G\succeq0$, in which case the infimum in $T$ is $0$. This gives the dual program. When $A,B\succ0$, the coupling $A\otimes B$ is strictly feasible, so Slater's theorem gives equality of primal and dual values and dual attainment. The general finite-dimensional semidefinite case is obtained by approximation $A_\delta=(1-\delta)A+\delta\Id_n/n$, $B_\delta=(1-\delta)B+\delta\Id_m/m$ and by compactness, or equivalently by reducing to the supports of $A$ and $B$.
\index{Slater condition}
\index{dual!attainment}
\index{trace!partial}
\end{proof}

The dual potentials have the usual scalar gauge freedom: replacing $(F,G)$ by $(F+t\Id_n,G-t\Id_m)$ leaves both the constraint and the value unchanged because $\tr(A)=\tr(B)=1$.
\index{dual!potential}

\paragraph{Entropic regularization and Bregman iterations.}
\index{entropic!regularization}
\index{Bregman!iteration}
As in scalar OT, one obtains a smoother problem by adding the convex quantum entropy functional associated with the matrix logarithm.

\begin{defn}[von Neumann quantum entropy]\label{def-von-neumann-quantum-entropy}
\index{von Neumann entropy}
	For a density matrix or positive semidefinite matrix $T$, the shifted von Neumann entropy functional used here is
	\[
		H(T)=\tr\!\left(T(\log T-\Id)\right),
		\qquad
		\nabla H(T)=\log T,
	\]
	with the convention $0\log0=0$ on eigenvalues. This is the convex negative quantum entropy; on trace-one states it differs from the physical entropy $-\tr(T\log T)$ by a sign and an additive constant.
\end{defn}
For $\epsilon>0$ define
\begin{equation}\label{eq-qot-entropic-primal}
	\mathrm{QOT}_C^\epsilon(A,B)
	=
	\min_{T\succeq0}
	\left\{
		\tr(CT)+\epsilon H(T)
		:\,
		\operatorname{Tr}_B(T)=A,
		\operatorname{Tr}_A(T)=B
	\right\}.
\end{equation}
This is the non-commutative analogue of entropic OT~\cite{2016-peyre-qot,chakrabarti2019quantum}: the Shannon entropy of a coupling is replaced by the trace entropy of a density matrix.
\index{trace entropy}
\index{entropic!OT}
\index{Shannon!entropy}
\index{density!matrix}

\begin{prop}[Entropic quantum OT duality]\label{prop-qot-entropic-duality}
\index{quantum!OT}
\index{quantum!OT duality}
	Assume $A\succ0$, $B\succ0$ and $\epsilon>0$. Then~\eqref{eq-qot-entropic-primal} has a unique positive minimizer. Its dual is
	\begin{equation}\label{eq-qot-entropic-dual}
		\mathrm{QOT}_C^\epsilon(A,B)
		=
		\max_{F\in\mathbb H_n,\ G\in\mathbb H_m}
		\left\{
			\tr(FA)+\tr(GB)
			-\epsilon\,
			\tr\exp\!\left(\frac{F\otimes\Id_m+\Id_n\otimes G-C}{\epsilon}\right)
		\right\}.
	\end{equation}
	At optimality, primal and dual variables are linked by the Gibbs formula
\index{Gibbs!formula}
	\begin{equation}\label{eq-qot-gibbs-coupling}
		T_e(F,G)
		=
		\exp\!\left(\frac{F\otimes\Id_m+\Id_n\otimes G-C}{\epsilon}\right),
	\end{equation}
	with $\operatorname{Tr}_B(T_e)=A$ and $\operatorname{Tr}_A(T_e)=B$.
\end{prop}

\begin{proof}
	The feasible set is compact and nonempty, and it contains the positive definite point $A\otimes B$. The trace entropy is strictly convex on positive semidefinite matrices, hence the regularized primal has a unique minimizer. Slater's condition justifies the Lagrange dual computation. The Fenchel identity
\index{trace entropy}
\index{Slater condition}
\index{positive!semidefinite matrix}
	\[
		\sup_{T\succeq0}\ \tr(YT)-\epsilon H(T)
		=
		\epsilon\,\tr\exp(Y/\epsilon)
	\]
	is the matrix analogue of the scalar exponential conjugacy. Applying it to the Lagrangian of~\eqref{eq-qot-entropic-primal}, with
	$Y=F\otimes\Id_m+\Id_n\otimes G-C$, gives~\eqref{eq-qot-entropic-dual}. The stationarity condition of this Fenchel identity gives~\eqref{eq-qot-gibbs-coupling}; differentiating the dual objective with respect to $F$ and $G$ yields the two marginal equations.
\index{stationarity}
\index{Gibbs!coupling}
\end{proof}

Writing $K=\exp(-C/\epsilon)$, the objective in~\eqref{eq-qot-entropic-primal} differs by a constant from $\epsilon$ times the quantum KL divergence
\[
	D_H(T|K)
	=
	\tr\!\left(T(\log T-\log K)-T+K\right).
\]
The exact quantum analogue of Sinkhorn is an implicit alternating Bregman projection scheme onto the affine marginal sets
\index{Bregman!projection}
\[
	\mathcal M_A=\{T\succeq0:\operatorname{Tr}_B(T)=A\},
	\qquad
	\mathcal M_B=\{T\succeq0:\operatorname{Tr}_A(T)=B\}.
\]

\begin{prop}[Exact Bregman projections]\label{prop-qot-bregman-projections}
\index{Bregman!projection}
	Assume $A,B\succ0$ and let $K=\exp(-C/\epsilon)$. The minimizer of~\eqref{eq-qot-entropic-primal} is equivalently the minimizer of $D_H(T|K)$ over $\mathcal M_A\cap\mathcal M_B$. Moreover, if a current positive definite matrix has Gibbs form $T_e(F,G)$, then its Bregman projection onto $\mathcal M_A$ has the form $T_e(F^+,G)$, where $F^+$ is chosen so that $\operatorname{Tr}_B T_e(F^+,G)=A$. The projection onto $\mathcal M_B$ is analogous. Thus, when each one-block marginal equation is solved exactly, alternating Bregman projections are equivalent to alternating block maximization of the dual~\eqref{eq-qot-entropic-dual}.
\end{prop}

\begin{proof}
	Since $\log K=-C/\epsilon$, the identity
	\[
		\tr(CT)+\epsilon H(T)
		=
		\epsilon D_H(T|K)-\epsilon\tr(K)
	\]
	holds, so the primal minimizer is the constrained Bregman projection of $K$ up to an additive constant. For the projection of a positive definite matrix $S$ onto $\mathcal M_A$, the affine set contains the positive definite point $A\otimes\Id_m/m$; the entropy derivative $\log T-\log S$ is singular at the boundary, so the projection lies in the interior of the positive cone. Its Lagrangian first variation is
\index{first variation}
\index{Bregman!projection}
\index{positive!cone}
	\[
		\log T-\log S-\Lambda\otimes\Id_m=0
	\]
	for a Hermitian multiplier $\Lambda$. Hence $T=\exp(\log S+\Lambda\otimes\Id_m)$. If $S=T_e(F,G)$, this is again of the form $T_e(F+\epsilon\Lambda,G)$. The multiplier is fixed by the marginal equation $\operatorname{Tr}_B(T)=A$. The same argument applies to $\mathcal M_B$. Finally, the first-order optimality condition for maximizing~\eqref{eq-qot-entropic-dual} over one block is exactly the corresponding marginal equation, so the Bregman and block-dual views coincide.
\index{Hermitian matrix}
\index{optimality!first-order}
\end{proof}

In the diagonal case this proposition gives the usual multiplicative Sinkhorn updates. In the non-commutative case, however, the exact block equations
\index{Sinkhorn!update}
\[
	\operatorname{Tr}_B T_e(F,G)=A,
	\qquad
	\operatorname{Tr}_A T_e(F,G)=B
\]
do not admit scalar division formulas, because the exponential of $F\otimes\Id_m+\Id_n\otimes G-C$ cannot be separated unless the local potential $F\otimes\Id_m+\Id_n\otimes G$ commutes with the cost $C$.
When all matrices are diagonal in the same basis, commutativity restores the scalar form $T_e=\diag(u)K\diag(v)$ and the marginal equations reduce to the usual Sinkhorn divisions.

Algorithm~\ref{alg:quantum-exact-bregman} records this exact but implicit non-commutative analogue of Sinkhorn.

\begin{alg}[Exact quantum Bregman projections]\label{alg:quantum-exact-bregman}
\index{quantum!Bregman projection}
\textbf{Input:} Density matrices $A,B$, cost $C$, regularization $\epsilon>0$, tolerance $\mathrm{tol}$.

\textbf{Output:} Quantum entropic coupling $T$ with partial traces $A$ and $B$.

\textbf{Initialize:} Set Hermitian potentials $F^{(0)}=0$ and $G^{(0)}=0$.

\textbf{For} $k=0,1,\ldots$ \textbf{do}:
\begin{algblock}
\(T^{(k)}= T_e(F^{(k)},G^{(k)}) = \exp\!\left( \frac{F^{(k)}\otimes\Id_m+\Id_n\otimes G^{(k)}-C}{\epsilon} \right).\)

\textbf{Solve} $A$-projection equation:
\(\operatorname{Tr}_B T_e(F^+,G^{(k)})=A,\)

\textbf{Set} $F^{(k+1)}=F^+$.

\textbf{Solve} $B$-projection equation:
\(\operatorname{Tr}_A T_e(F^{(k+1)},G^+)=B,\)

\textbf{Set} $G^{(k+1)}=G^+$.

\textbf{If} both partial-trace residuals are at most $\mathrm{tol}$ \textbf{then}:
\begin{algblock}
\textbf{Return} $T_e(F^{(k+1)},G^{(k+1)})$.
\end{algblock}
\end{algblock}
\end{alg}

\paragraph{Gurvits scaling and quantum Sinkhorn.}
\index{Gurvits scaling}
\index{quantum!Sinkhorn}
The algorithm often called quantum Sinkhorn comes from the operator-scaling literature of Gurvits and subsequent developments~\cite{gurvits2003classical,gurvits2004classical,georgiou2015positive,garg2018recent}. It replaces the true Gibbs coupling~\eqref{eq-qot-gibbs-coupling} by the symmetric factorization
\index{operator scaling}
\index{Gibbs!coupling}
\begin{equation}\label{eq-qot-symmetric-scaling}
	T_s(F,G)
	=
	\exp\!\left(\frac{Z}{2\epsilon}\right)\exp(-C/\epsilon)\exp\!\left(\frac{Z}{2\epsilon}\right)
	=
	(U\otimes V)K(U\otimes V),
	\qquad
	Z=F\otimes\Id_m+\Id_n\otimes G,
\end{equation}
where $U=\exp(F/(2\epsilon))$, $V=\exp(G/(2\epsilon))$ and $K=\exp(-C/\epsilon)$. If $[Z,C]=0$, then $T_s(F,G)=T_e(F,G)$; otherwise this is a Strang-type symmetric surrogate.
\index{Strang splitting}

Fix a Choi convention and let $\mathcal K:\mathbb H_m\to\mathbb H_n$ be the completely positive map represented by the positive Choi matrix $K$; let $\mathcal K^\star$ be its Hilbert--Schmidt adjoint. Up to the transpose dictated by the chosen Choi convention, the marginal equations for the symmetric coupling take the operator-scaling form
\index{Choi matrix}
\index{operator scaling}
\index{scaling!form}
\[
	U\,\mathcal K(V^2)\,U=A,
	\qquad
	V\,\mathcal K^\star(U^2)\,V=B.
\]
They can be enforced by the explicit congruence normalizations
\index{congruence normalization}
\begin{equation}\label{eq-qot-gurvits-updates}
\begin{aligned}
	R_V&=\mathcal K(V^2),
	&
	U&\leftarrow
	R_V^{-1/2}\bigl(R_V^{1/2} A R_V^{1/2}\bigr)^{1/2}R_V^{-1/2},
	\\
	S_U&=\mathcal K^\star(U^2),
	&
	V&\leftarrow
	S_U^{-1/2}\bigl(S_U^{1/2} B S_U^{1/2}\bigr)^{1/2}S_U^{-1/2}.
\end{aligned}
\end{equation}
\begin{alg}[Gurvits/operator scaling for quantum Sinkhorn]\label{alg:quantum-gurvits-scaling}
\textbf{Input:} Positive marginals $A,B$, positive kernel operator $K$, maps $\mathcal K,\mathcal K^\star$, tolerance $\mathrm{tol}$.

\textbf{Output:} Symmetrically scaled coupling $T_s$.

\textbf{Initialize:} Set $U=\Id_n$ and $V=\Id_m$.

\textbf{Set} residual \(r=+\infty\).

\textbf{While} \(r>\mathrm{tol}\) \textbf{do}:
\begin{algblock}
\(R_V=\mathcal K(V^2), \qquad U\leftarrow R_V^{-1/2}\bigl(R_V^{1/2} A R_V^{1/2}\bigr)^{1/2}R_V^{-1/2}.\)

\(S_U=\mathcal K^\star(U^2), \qquad V\leftarrow S_U^{-1/2}\bigl(S_U^{1/2} B S_U^{1/2}\bigr)^{1/2}S_U^{-1/2}.\)

\textbf{Set} \(T_s=(U\otimes V)K(U\otimes V)\) and \(r\) to the maximum of its two operator-marginal residuals against $A$ and $B$.
\end{algblock}
\algreturnskip
\textbf{Return} \(T_s\).
\end{alg}
These inverse square roots are well-defined when $K\succ0$ and $U,V,A,B\succ0$. This is Gurvits/operator scaling with prescribed targets; when all matrices are diagonal it reduces to classical Sinkhorn scaling, and when the targets are proportional to identities it matches the usual bistochastic operator-scaling normalization, up to the conventional trace normalization.
\index{inverse square root}
\index{Sinkhorn!scaling}
\index{trace norm}
\index{operator scaling}

\begin{rem}[Gurvits scaling is not the exact Bregman scheme]
\index{Gurvits scaling}
	It is important not to identify~\eqref{eq-qot-gurvits-updates} with the exact Bregman scheme for~\eqref{eq-qot-entropic-primal}. The exact Bregman step would enforce the marginals of $T_e(F,G)=\exp((Z-C)/\epsilon)$ and would be a block maximization of the true concave dual~\eqref{eq-qot-entropic-dual}. Gurvits scaling instead enforces the marginals of the surrogate
\[
	T_s=
	\exp\!\left(\frac{Z}{2\epsilon}\right)
	\exp(-C/\epsilon)
	\exp\!\left(\frac{Z}{2\epsilon}\right).
\]
	The two coincide in the commuting/diagonal regime, but in general the Baker--Campbell--Hausdorff commutator terms do not vanish. The Gurvits iteration should therefore be understood as a tractable symmetric operator-scaling approximation to entropic Q--OT, not as the literal alternating KL projection algorithm.
\index{commutator}
\index{Baker-Campbell-Hausdorff formula}
\index{operator scaling}
\index{KL!projection}
\end{rem}

\begin{rem}[Operator-valued couplings]
\index{operator-valued coupling}
	The same definitions extend formally from matrices to separable Hilbert spaces by replacing density matrices with positive trace-class operators of trace one, observables with bounded self-adjoint operators and~\eqref{eq-qot-partial-traces} with partial traces defined by duality against local bounded observables. If $\Pi(A,B)$ denotes positive trace-class operators with partial traces $A$ and $B$, a bounded cost observable $C$ gives the problem $\inf_{T\in\Pi(A,B)}\Tr(CT)$. For unbounded positive costs one must define the energy through the quadratic form or spectral truncations, and in the entropic case one must ensure that the Gibbs operator $\exp(-C/\epsilon)$ is trace class and that the partial traces of the candidate coupling are well-defined. The matrix formulas above are therefore the clean finite-dimensional core; the operator version adds domain and compactness assumptions rather than a different algebraic structure.
\index{density!matrix}
\index{trace-class operator}
\index{trace!partial}
\end{rem}


\chapter{Dynamic Optimal Transport}
\index{dynamic!optimal transport}
\label{sec-dynamic-optimal-transport}
\label{sec-wasserstein-flows}

Optimal transport becomes especially powerful once distances between measures are seen as actions of moving mass. This chapter first develops the dynamic language: continuity equations describe admissible measure evolutions, while the Benamou--Brenier formula identifies $\Wass_2$ with a least-action principle. These ideas prepare the gradient-flow and generative-model chapters that follow.
\index{continuity equation}
\index{generative model}
\index{Benamou-Brenier}
\index{gradient!flow}

\section{Evolutions over the Space of Measures}
\index{measure!evolution}

We start with the continuity equation because it is the common language for particles, densities and weak measure evolutions. It also makes precise which velocity fields actually move mass.
\index{velocity field}
\index{continuity equation}

\paragraph{Lagrangian and Eulerian descriptions.}
\index{Lagrangian!description}
\index{Eulerian!description}

We consider the evolution $t \mapsto \alpha_t \in \Pp(\RR^d)$. Such an evolution can be described in a ``Lagrangian'' way as the advection of particles along a (time-dependent) vector field $v_t(x)$ in $\RR^d$. At the particle level, this advection is governed by
\index{time-dependent vector field}
\begin{equation}
    \frac{\d x(t)}{\d t} = v_t(x(t)), \label{eq:lagrangian-advection}
\end{equation}
and we write $T_t$ for the associated flow map, so that $T_t(x(0))=x(t)$. The advected measure is then $\alpha_t = (T_t)_\sharp \alpha_0$. For discrete measures, $\alpha_t = \frac{1}{n} \sum_{i=1}^n \delta_{x_i(t)}$, meaning each $x_i(t)$ solves \eqref{eq:lagrangian-advection}.
\index{flow!map}

In the Eulerian description, the same motion is written directly on the evolving measure: the particle ODE becomes the PDE
\index{particle ODE}
\index{Eulerian!description}
\begin{equation}
    \frac{\partial \alpha_t}{\partial t} + \diverg(v_t \alpha_t) = 0. \label{eq:eulerian-advection}
\end{equation}
This PDE is often referred to as the advection equation, the continuity equation, or Liouville's equation when operating over a phase space. It is only a classical PDE when $\alpha_t$ has a smooth density. For general measures, and in particular for empirical measures, it is understood in the weak sense: for any smooth test function $(t,x)\mapsto\varphi(t,x)$ compactly supported in time,
\index{empirical!measure}
\index{continuity equation}
\begin{equation}\label{eq:eulerian-advection-weak}
	\int_0^1\!\int_{\RR^d}
	\left(\partial_t\varphi(t,x)+\dotp{v_t(x)}{\nabla_x\varphi(t,x)}\right)
	\d\alpha_t(x)\d t
	=
	0.
\end{equation}
This equation is obtained from~\eqref{eq:eulerian-advection} by integration by parts. Hence, for smooth positive densities, the classical and weak formulations are equivalent; the weak formulation is useful because it still makes sense for discrete measures whose particles evolve according to~\eqref{eq:lagrangian-advection}.

\begin{prop}[Lagrangian flows solve the continuity equation]\label{prop-lagrangian-flow-continuity}
\index{Lagrangian!flow}
\index{continuity equation}
	Consider a smooth flow $T_t:\RR^d\to\RR^d$ and define $\alpha_t=(T_t)_\sharp\alpha_0$. Define the Eulerian velocity field by
\index{velocity field}
\index{Eulerian!velocity}
	\[
		v_t(T_t(y))=\partial_t T_t(y).
	\]
	Then $(\alpha_t,v_t)$ solves the continuity equation in the weak sense~\eqref{eq:eulerian-advection-weak}.
\index{continuity equation}
	In particular, if $\alpha_0=\frac1n\sum_i\delta_{x_i(0)}$ is empirical, then $\alpha_t=\frac1n\sum_i\delta_{x_i(t)}$ is empirical as well, with particle velocities $\dot x_i(t)=v_t(x_i(t))$.
\end{prop}

\begin{proof}
	Let $\varphi(t,x)$ be a smooth test function vanishing at $t=0$ and $t=1$. Since $\alpha_t=(T_t)_\sharp\alpha_0$,
	\[
		\frac{\d}{\d t}\int \varphi(t,x)\d\alpha_t(x)
		=
		\frac{\d}{\d t}\int \varphi(t,T_t(y))\d\alpha_0(y).
	\]
	The chain rule gives
	\[
		\frac{\d}{\d t}\int \varphi(t,T_t(y))\d\alpha_0(y)
		=
		\int
		\left(
			\partial_t\varphi(t,T_t(y))
			+\dotp{\nabla_x\varphi(t,T_t(y))}{\partial_t T_t(y)}
		\right)
		\d\alpha_0(y).
	\]
	Using the definition of $v_t$ and the push-forward relation, this equals
\index{push-forward}
	\[
		\int
		\left(
			\partial_t\varphi(t,x)+\dotp{\nabla_x\varphi(t,x)}{v_t(x)}
		\right)
		\d\alpha_t(x).
	\]
	Integrating in time and using the boundary values of $\varphi$ gives~\eqref{eq:eulerian-advection-weak}.
\end{proof}

\paragraph{From measure evolutions to vector fields.}
\index{measure!evolution}

For a given evolution $(\alpha_t)_t$, there are typically infinitely many velocity fields $v_t$ satisfying
\index{velocity field}
\begin{equation}\label{eq:inverse-flow}
	\partial_t\alpha_t+\diverg(\alpha_t v_t)=0.
\end{equation}
This non-uniqueness comes from the kernel of the weighted divergence. The linear space of vector fields that leave a measure $\alpha$ invariant is
\[
	\Hh_\alpha=\enscond{v}{\diverg(\alpha v)=0}.
\]
It is usually non-trivial: for instance, if $\alpha$ is an isotropic Gaussian, $\Hh_\alpha$ contains rotational vector fields generated by anti-symmetric matrices.

\paragraph{Dacorogna--Moser inversion.}
\index{Dacorogna-Moser inversion}

Reconstructing particles from an observed density evolution is therefore ill-posed. A simple choice, introduced by Dacorogna and Moser~\cite{DacorognaMoser1990}, is to impose that the flux $\alpha_t v_t$ is a gradient field, leading formally to
\index{flux}
\begin{equation}\label{eq:dacorogna-moser}
	v_t=-\frac{1}{\alpha_t}\nabla\Delta^{-1}(\partial_t\alpha_t),
\end{equation}
with suitable boundary conditions, for instance vanishing at infinity. This formula is useful conceptually but delicate when $\alpha_t$ vanishes, and it does not generally produce a gradient velocity field.
\index{velocity field}
\index{gradient!velocity}

\paragraph{Least-square inversion and gradient structure.}
\index{least-square!inversion}
\index{gradient!structure}

A more robust choice, used implicitly in flow matching, optimal transport and Wasserstein gradient flows, is to select among all admissible velocities the one with smallest kinetic energy:
\index{Wasserstein!gradient}
\index{gradient!flow}
\index{Wasserstein!gradient flow}
\index{flow!matching}
\begin{equation}\label{eq:least-square-field}
	\umin{v}
	\frac12\int_0^1\!\int_{\RR^d}\norm{v_t(x)}^2\,\d\alpha_t(x)\d t
	\quad
	\text{subject to}
	\quad
	\partial_t\alpha_t+\diverg(\alpha_t v_t)=0.
\end{equation}

\begin{prop}[Least-square velocities are gradients]\label{prop-least-square-gradient-velocity}
\index{least-square!velocity}
\index{gradient!velocity}
	Assume that $\alpha_t=\rho_t\,\d x$ is a smooth positive density curve and that boundary terms vanish. The minimizer of~\eqref{eq:least-square-field}, if it exists, is a gradient field
	\[
		v_t=\nabla\phi_t,
	\]
	where $\phi_t$, unique up to an additive constant, solves the weighted Poisson equation
\index{weighted!Poisson equation}
	\begin{equation}\label{eq:least-square-field-explicit}
		-\diverg(\rho_t\nabla\phi_t)=\partial_t\rho_t,
		\qquad
		v_t=-\nabla\Delta_{\alpha_t}^{-1}(\partial_t\alpha_t),
		\quad
		\Delta_{\alpha_t}\phi=\diverg(\alpha_t\nabla\phi).
	\end{equation}
\end{prop}

\begin{proof}
	Introduce a Lagrange multiplier $\phi_t$ for the continuity equation. The constrained problem has the formal saddle formulation
\index{continuity equation}
	\[
		\umin{v}\umax{\phi}
		\int_0^1
		\left[
			\frac12\int_{\RR^d}\norm{v_t(x)}^2\,\d\alpha_t(x)
			+\int_{\RR^d}\phi_t(x)\big(\diverg(\alpha_t v_t)(x)+\partial_t\alpha_t(x)\big)\d x
		\right]\d t.
	\]
	Integrating by parts in the divergence term gives, for each $t$,
	\[
		\int \left(\frac12\norm{v_t}^2-\dotp{\nabla\phi_t}{v_t}\right)\d\alpha_t
		+
		\int \phi_t\,\partial_t\alpha_t .
	\]
	The pointwise minimizer in $v_t$ is therefore $v_t=\nabla\phi_t$. Substituting this into the constraint $\partial_t\rho_t+\diverg(\rho_t v_t)=0$ gives the weighted Poisson equation in~\eqref{eq:least-square-field-explicit}. The inverse notation is just a shorthand for solving this equation on zero-mean right-hand sides, modulo additive constants.
\index{zero mean}
\index{weighted!Poisson equation}
\end{proof}

In general this inversion is still computationally demanding, but special choices of $(\alpha_t)_t$ lead to simpler formulas; this is the mechanism exploited later by flow matching.
\index{flow!matching}

\begin{alg}[Least-square velocity reconstruction]\label{alg:least-square-velocity-reconstruction}
\index{least-square!velocity}
\index{weighted!Poisson equation}
\textbf{Input:} Smooth positive density curve $(\rho_t)_{t\in[0,1]}$ and boundary conditions.

\textbf{Output:} Minimal-energy velocity field $v_t$ realizing the curve.

\textbf{For} each time $t$ \textbf{do}:
\begin{algblock}

\textbf{Compute} $\partial_t\rho_t$.

\textbf{Solve} weighted Poisson equation:
\(-\diverg(\rho_t\nabla\phi_t)=\partial_t\rho_t, \qquad \int\phi_t\rho_t=0.\)

\textbf{Set}
\(v_t=\nabla\phi_t .\)
\end{algblock}
\algreturnskip
\textbf{Return} $(v_t)_{t\in[0,1]}$.
\end{alg}

\section{Benamou--Brenier dynamic formulation of OT}
\index{Benamou-Brenier}
\index{Benamou-Brenier!formulation}
\label{sec-benamou-brenier-dynamic}

The dynamic formulation identifies $\Wass_2$ with the kinetic energy of the cheapest continuity-equation path. It is the point where OT becomes a least-action principle.
\index{continuity equation}
\index{dynamic!formulation}

Instead of assuming that a whole curve $(\alpha_t)_{t\in[0,1]}$ is prescribed, one only fixes its endpoints $\alpha_0$ and $\alpha_1$ and minimizes the least-square energy~\eqref{eq:least-square-field}. The theorem of Benamou and Brenier states that this geodesic energy is exactly the squared Wasserstein distance~\cite{benamou2000computational}.
\index{Wasserstein!distance}

\begin{thm}[Benamou--Brenier]\label{thm-benamou-brenier}
\index{Benamou-Brenier}
	For probability measures $\alpha_0,\alpha_1\in\Pp_2(\RR^d)$,
\index{probability measure}
	\begin{equation}\label{eq:benamou-brenier}
	\Wass_2^2(\alpha_0,\alpha_1)
	=
	\inf_{(\alpha_t,v_t)}
	\int_0^1\!\int_{\RR^d}\norm{v_t(x)}^2\,\d\alpha_t(x)\d t,
	\end{equation}
	where the infimum is over $(\alpha_t,v_t)$ solving $\partial_t\alpha_t+\nabla\!\cdot(\alpha_t v_t)=0$ with $\alpha_{t=0}=\alpha_0$ and $\alpha_{t=1}=\alpha_1$. If $\alpha_0$ has a density and $T$ is the optimal Monge map $T_\sharp\alpha_0=\alpha_1$, the minimizer is
\index{Monge!problem}
	\begin{equation}\label{eq:static-to-dynamic}
		\alpha_t=((1-t)\Id+tT)_\sharp\alpha_0,
		\qquad
		v_t\big((1-t)x+tT(x)\big)=T(x)-x.
	\end{equation}
\end{thm}

\begin{proof}
	For the inequality ``dynamic $\leq$ static'', assume first that a Monge map $T$ exists and define $(\alpha_t,v_t)$ by~\eqref{eq:static-to-dynamic}. Since the Lagrangian velocity $T(x)-x$ is independent of $t$,
\index{Monge!problem}
	\[
		\int_0^1\!\int\norm{v_t}^2\,\d\alpha_t\d t
		=
		\int\norm{T(x)-x}^2\,\d\alpha_0(x),
	\]
	so the dynamic cost is no larger than the static Monge cost. Without a Monge map, the same construction is made with an optimal coupling $\pi$: sample $(X,Y)\sim\pi$ and move along the straight path $\gamma_{X,Y}(t)=(1-t)X+tY$. This path measure has action $\int\norm{x-y}^2\d\pi(x,y)$; projecting path velocities onto their conditional mean at time $t$ gives an admissible Eulerian velocity with no larger action, so the dynamic value is no larger than the Kantorovich value.
\index{dynamic value}
\index{optimal coupling}
\index{Eulerian!velocity}
\index{path!measure}

	Conversely, for a smooth deterministic path, take the flow $T_t$ defined by $\dot T_t=v_t\circ T_t$ and $T_0=\Id$. Then $\alpha_t=(T_t)_\sharp\alpha_0$ and $(T_1)_\sharp\alpha_0=\alpha_1$. Jensen's inequality gives
\index{Jensen inequality}
	\[
		\norm{T_1(x)-x}^2
		\leq
		\int_0^1\norm{v_t(T_t(x))}^2\d t.
	\]
	After integration with respect to $\alpha_0$, the Monge cost is bounded above by the dynamic action. For general finite-energy solutions of the continuity equation, the superposition principle lifts the curve to a probability measure on absolutely continuous paths; applying Jensen's inequality pathwise gives a coupling of the endpoints whose quadratic cost is no larger than the action. Thus the Kantorovich value is bounded above by the dynamic value.
\index{dynamic value}
\index{absolutely continuous path}
\index{dynamic action}
\index{superposition principle}
\index{Jensen inequality}
\index{probability measure}
\index{continuity equation}
\index{cost!quadratic}
\end{proof}

Although~\eqref{eq:benamou-brenier} is not jointly convex in $(\alpha_t,v_t)$, it becomes convex after replacing velocities by the momentum measure $m_t=v_t\alpha_t$ and using the perspective action. In the absolutely continuous case $\alpha_t=\rho_t\,\d x$ and $m_t(x)=\rho_t(x)v_t(x)$, this reads
\index{momentum}
\index{Benamou-Brenier}
\begin{equation}\label{eq:benamou-brenier-convex}
	\Wass_2^2(\alpha_0,\alpha_1)
	=
	\inf_{\substack{\partial_t\rho_t+\diverg m_t=0\\
	\rho_{t=0}\d x=\alpha_0,\ \rho_{t=1}\d x=\alpha_1}}
	\int_0^1\!\int_{\RR^d}
	\frac{\norm{m_t(x)}^2}{\rho_t(x)}\,\d x\,\d t,
	\end{equation}
with the usual convention that the integrand is $0$ when $(\rho_t,m_t)=(0,0)$ and $+\infty$ when $\rho_t=0$ but $m_t\neq0$. For singular endpoints or curves, the same statement is interpreted with vector-valued momentum measures and the corresponding recession convention. This convex reformulation enables geodesic interpolation by convex optimization once the domain is discretized.
\index{recession convention}
\index{momentum}

\paragraph{Proximal splitting.}
\index{proximal!splitting}
\index{Douglas-Rachford!splitting}

The convex momentum formulation also explains the original Benamou--Brenier solver. Papadakis, Peyr\'e and Oudet~\cite{FPapPeyOud13} showed that, after discretization, the ALG2 scheme is a Douglas--Rachford splitting, equivalently ADMM on the Fenchel--Rockafellar dual. Suppressing discretization indices, write $U=(\rho,m)$ and introduce the perspective integrand
\[
	J(\rho,m)=
	\begin{cases}
		\norm{m}^2/\rho, & \rho>0,\\
		0, & (\rho,m)=(0,0),\\
		+\infty, & \text{otherwise}.
	\end{cases}
\]
Let
\[
	\mathcal F(U)=\int_0^1\!\int_{\RR^d}J(\rho_t(x),m_t(x))\,\d x\,\d t,
	\qquad
	\mathcal C=\enscond{(\rho,m)}{\partial_t\rho+\diverg m=0,\ \rho_0\d x=\alpha_0,\ \rho_1\d x=\alpha_1},
\]
and let $\mathcal G=\iota_{\mathcal C}$ be the indicator of this affine continuity constraint. The convex Benamou--Brenier problem is therefore
\[
	\min_U\; \mathcal F(U)+\mathcal G(U).
\]
On the resulting Hilbert space of unknowns, or formally for an $L^2$-type product structure, the two proximal operators are
\[
	\prox_{\tau\mathcal F}(\bar U)
	=
	\argmin_U
	\frac12\norm{U-\bar U}_{\mathsf H}^2+\tau\mathcal F(U),
	\qquad
	\prox_{\tau\mathcal G}(\bar U)
	=
	\argmin_{U\in\mathcal C}
	\frac12\norm{U-\bar U}_{\mathsf H}^2 .
\]
For the standard product metric, the first map is local in $(t,x)$: it is the proximal operator of the convex perspective $J$. The second map is the orthogonal projection onto the affine set defined by the divergence equation and endpoint constraints. Douglas--Rachford then alternates these two simple operations:
\[
	\begin{aligned}
		U^{k+1} &= \prox_{\tau\mathcal F}(Z^k),\\
		\widetilde U^{k+1} &= \prox_{\tau\mathcal G}(2U^{k+1}-Z^k),\\
		Z^{k+1} &= Z^k+\widetilde U^{k+1}-U^{k+1}.
	\end{aligned}
\]
Equivalently one may swap the roles of $\mathcal F$ and $\mathcal G$. At convergence, the two shadow sequences $U^k$ and $\widetilde U^k$ agree and give a minimizer of the convex dynamic problem. This viewpoint is useful because it separates the nonlinear but pointwise perspective proximal step from the global but linear projection onto the continuity equation~\cite{FPapPeyOud13,Eckstein1992}.
\index{perspective function}
\index{continuity equation}
\index{Fenchel-Rockafellar duality}
\index{ADMM}

\begin{alg}[Douglas--Rachford for dynamic Benamou--Brenier]\label{alg:benamou-brenier-douglas-rachford}
\index{Douglas-Rachford!splitting}
\textbf{Input:} Functionals $\mathcal F,\mathcal G=\iota_{\mathcal C}$, proximal parameter $\tau>0$, initial field $Z^0$.

\textbf{Output:} Discrete density-momentum field $U^\star$.

\textbf{For} $k=0,1,\ldots$ \textbf{do}:
\begin{algblock}
\(U^{k+1}=\prox_{\tau\mathcal F}(Z^k).\)

\textbf{Project} reflected point:
\(\widetilde U^{k+1} = \prox_{\tau\mathcal G}(2U^{k+1}-Z^k) = \Proj_{\mathcal C}(2U^{k+1}-Z^k).\)

\textbf{Update}
\(Z^{k+1}=Z^k+\widetilde U^{k+1}-U^{k+1}.\)

\textbf{If} $\norm{U^{k+1}-\widetilde U^{k+1}}\leq\mathrm{tol}$ \textbf{then}:
\begin{algblock}
\textbf{Return} $U^{k+1}$.
\end{algblock}
\end{algblock}
\end{alg}

\begin{figure}[ht]
\centering
\newcommand{\bbgeodesicpanelheight}{.155\linewidth}
\begin{tabular}{@{}cc@{}}
\includegraphics[height=\bbgeodesicpanelheight]{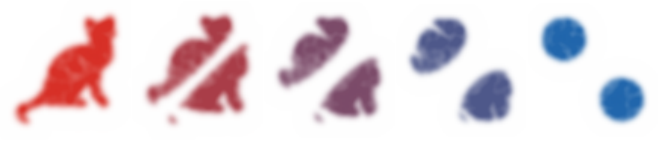} &
\index{Benamou-Brenier}
\includegraphics[height=\bbgeodesicpanelheight]{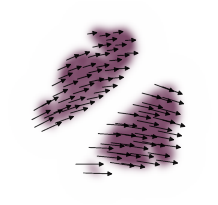}
\\[-.1em]
\small density sequence &
\small midpoint velocities
\end{tabular}
\caption{Benamou--Brenier geodesic between two sampled silhouettes. A discrete quadratic OT plan between finely subsampled cat and two-disks point clouds induces the McCann interpolation $Z_t=(1-t)X+tY$, which is the Lagrangian realization of the least-action solution. The left panel renders local color images of the smaller-bandwidth kernel-smoothed densities with enough padding to include the full silhouettes. The right panel overlays shortened velocity arrows centered at evenly subsampled midpoint particles $Z_{1/2}$; each displayed arrow runs in data coordinates from a source-side tail to a target-side head along the matched characteristic direction $Y-X$, but is not drawn as the full endpoint segment from $X$ to $Y$.}
\index{McCann interpolation}
\index{Benamou-Brenier}
\label{fig:dynamic-benamou-brenier-geodesic}
\end{figure}

\begin{rem}[Path-space formulation]\label{rem-bb-path-space}
\index{path-space!formulation}
	Let $\Ss=C([0,1];\RR^d)$ be the space of continuous paths endowed with the uniform topology. For $t\in[0,1]$ define the evaluation map
	\[
		P_t:\Ss\to\RR^d,
		\qquad
		P_t(\gamma)=\gamma(t).
	\]
	The Benamou--Brenier cost admits the equivalent formulation
\index{Benamou-Brenier}
	\[
		\Wass_2^2(\alpha_0,\alpha_1)
		=
		\inf_{M\in\Pp(\Ss)}
		\enscond{
			\int_{\Ss}\!\int_0^1\norm{\dot\gamma(t)}^2\d t\,\d M(\gamma)
		}{
			(P_0)_\sharp M=\alpha_0,\ (P_1)_\sharp M=\alpha_1
		}.
	\]
	If $\alpha_0$ has a density, the minimizer $M^*$ is unique. Its time marginals reproduce the optimal curve: $\alpha_t=(P_t)_\sharp M^*$ for all $t$. Furthermore, for a.e. $t$, denoting $Q_t(\gamma)=\dot\gamma(t)$ on absolutely continuous paths, the conditional law of the velocity is deterministic:
\index{absolutely continuous path}
\index{conditional law}
	\[
		(P_t,Q_t)_\sharp M^*(\d x,\d q)
		=
		\alpha_t(\d x)\delta_{v_t^*(x)}(\d q),
	\]
	where $v_t^*$ is the optimal velocity field in the Benamou--Brenier formulation. Hence $M^*$ concentrates on straight-line geodesics and, for a.e. $t$, assigns exactly one direction at $\alpha_t$-a.e. spatial point.
\index{Benamou-Brenier}
\index{velocity field}
\index{Benamou-Brenier!formulation}
\end{rem}

\paragraph{Extensions of the dynamic formulation.}
\index{dynamic!formulation}
\label{sec-bb-extensions}

The same variational grammar extends beyond the quadratic Wasserstein distance. One changes either the kinetic exponent, the mobility or the balance equation, while keeping a continuity-type constraint and a convex perspective action.
\index{Wasserstein!distance}

\begin{rem}[Generalized Benamou--Brenier distances]\label{rem-generalized-bb}
\index{Benamou-Brenier}
\index{Benamou-Brenier!distance}
	The dynamic formulation is not specific to $\Wass_2$. For measures with finite $p$-th moments and $p>1$, one has the analogous action formula
\index{dynamic!formulation}
	\[
		\Wass_p^p(\alpha_0,\alpha_1)
		=
		\inf_{\substack{\partial_t\alpha_t+\nabla\!\cdot(\alpha_t v_t)=0\\
		\alpha_{t=0}=\alpha_0,\ \alpha_{t=1}=\alpha_1}}
		\int_0^1\!\int_{\RR^d} |v_t(x)|^p\,\d\alpha_t(x)\,\d t.
	\]
	When $\alpha_t=\rho_t\,\d x$ and $m_t=\rho_t v_t$, this becomes the convex perspective action
	\[
		\int_0^1\!\int_{\RR^d}
		\frac{|m_t(x)|^p}{\rho_t(x)^{p-1}}\,\d x\,\d t,
		\qquad
		\partial_t\rho_t+\nabla\!\cdot m_t=0,
	\]
	with the usual convention that the integrand is $0$ if $(\rho,m)=(0,0)$ and $+\infty$ if $\rho=0$ but $m\neq0$.

	A second class of variants changes the mobility of the medium: the quadratic action $|m|^2/\rho$ is replaced by $|m|^2/\theta(\rho)$ for a suitable concave mobility $\theta$. Under appropriate structural assumptions, this produces transport metrics adapted to nonlinear diffusions and finite-volume discretizations~\cite{dolbeault2009new}. On finite graphs and Markov chains, the analogous action uses an edge mobility, often the logarithmic mean of the endpoint densities, and leads to discrete Wasserstein geometries~\cite{Maas2011,MielkeCVPDE}. These extensions keep the same variational grammar as Benamou--Brenier: a continuity-type constraint, an action density, and geodesics obtained by minimizing an integrated kinetic cost.
\index{Benamou-Brenier}
\end{rem}

\paragraph{Dynamic unbalanced OT.}
\index{unbalanced!OT}
\index{dynamic!unbalanced OT}

Unbalanced dynamic transport is obtained by allowing mass to be created and destroyed along the path. The continuity equation is replaced by a balance equation, and the action penalizes both spatial motion and growth. This dynamic formulation underlies the Hellinger--Kantorovich and Wasserstein--Fisher--Rao metrics~\cite{LieroMielkeSavareShort,2017-chizat-focm}; its equivalence with static entropy-transport and cone formulations is developed in~\cite{LieroMielkeSavareLong,2015-chizat-unbalanced}. A representative quadratic action is
\index{continuity equation}
	\[
		\partial_t\rho_t+\nabla\!\cdot m_t=s_t,
		\qquad
		\int_0^1\!\int
		\left(\frac{|m_t|^2}{\rho_t}+\kappa^2\frac{s_t^2}{\rho_t}\right)\d x\,\d t,
	\]
	with the same perspective convention as above. Equivalently, writing $m_t=\rho_t v_t$ and $s_t=\rho_t g_t$, one minimizes $\int_0^1\int(|v_t|^2+\kappa^2 g_t^2)\d\rho_t\d t$ under $\partial_t\rho_t+\nabla\!\cdot(\rho_t v_t)=g_t\rho_t$. The parameter $\kappa$ fixes the relative cost of reaction and transport; changing it rescales the radial/angular balance in the associated cone metric. For measure-valued triples, the action is understood in the lower-semicontinuous perspective sense
	\[
		\mathcal A_\kappa(\rho,m,s)
		\eqdef
		\int
		\left(
			\frac{\norm{\dot m}^2}{\dot\rho}
			+
			\kappa^2\frac{\dot s^2}{\dot\rho}
		\right)\d\lambda,
		\qquad
		(\dot\rho,\dot m,\dot s)
		=
		\left(\frac{\d\rho}{\d\lambda},\frac{\d m}{\d\lambda},\frac{\d s}{\d\lambda}\right),
	\]
	where $\lambda$ dominates $\rho$ and the total variations of $m$ and $s$, and the value is independent of this choice. The convention is $0/0=0$ and $a/0=+\infty$ for $a>0$, so finite action forces both the flux and source to be absolutely continuous with respect to the transported mass.
\index{balance equation}
\index{Hellinger-Kantorovich}
\index{Wasserstein-Fisher-Rao}

\begin{prop}[Static/dynamic equivalence for unbalanced OT]\label{prop-static-dynamic-unbalanced}
\index{static-dynamic equivalence}
\index{unbalanced!OT}
		Fix the action above and let $\CW_\kappa$ be the cone value of Theorem~\ref{thm-cone-unbalanced-ot} with the cone metric normalized to the same growth scale $\kappa$. For nonnegative finite measures $\alpha_0,\alpha_1$ on $\RR^d$, the dynamic value
	\begin{equation}\label{eq-dynamic-unbalanced-ot}
		\WFR_\kappa^2(\alpha_0,\alpha_1)
		\eqdef
		\inf_{\substack{\partial_t\rho_t+\nabla\cdot m_t=s_t\\
		\rho_0=\alpha_0,\ \rho_1=\alpha_1}}
		\int_0^1
		\mathcal A_\kappa(\rho_t,m_t,s_t)\,\d t
	\end{equation}
		equals the static cone formulation $\CW_\kappa(\alpha_0,\alpha_1)$. Hence the static unbalanced problem and the balance-equation least-action problem define the same geodesic distance.
\end{prop}

\begin{proof}
	The cone construction turns variation of mass into radial motion and spatial transport into angular motion on $\mathfrak C[\RR^d]$. Applying the Benamou--Brenier theorem on the cone to the lifted endpoint measures gives a dynamic least-action problem on $\mathfrak C[\RR^d]$ whose static value is the cone value $\CW_\kappa$ of Theorem~\ref{thm-cone-unbalanced-ot}. This is the standard static/dynamic identification for the Hellinger--Kantorovich and Wasserstein--Fisher--Rao metrics~\cite{LieroMielkeSavareShort,LieroMielkeSavareLong,2017-chizat-focm,2015-chizat-unbalanced}.
\index{cone!lifting}
\index{Benamou-Brenier}

	Projecting a cone curve back to the base space with the weight $r^2$ produces a measure curve $\rho_t$, a spatial flux $m_t$ and a source term $s_t$ satisfying the balance equation. With the matching normalization of the cone metric, the cone kinetic energy decomposes exactly into the perspective action $\mathcal A_\kappa$ in~\eqref{eq-dynamic-unbalanced-ot}. Conversely, any finite-action triple $(\rho_t,m_t,s_t)$ can be lifted to a cone curve whose radial velocity realizes the growth term and whose spatial velocity realizes the transport term, with the same action after relaxation. The two infima are therefore equal; lower semicontinuity gives the general finite-measure statement from the smooth positive case.
\index{lower semicontinuity}
\end{proof}

\begin{figure}[H]
\centering
\setlength{\tabcolsep}{1pt}
\begin{tabular}{@{}rccccc@{}}
& \small $t=0$ & \small $t=.25$ & \small $t=.5$ & \small $t=.75$ & \small $t=1$ \\
\small balanced &
\includegraphics[width=.165\linewidth]{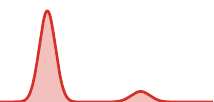} &
\includegraphics[width=.165\linewidth]{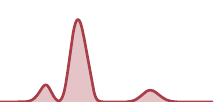} &
\includegraphics[width=.165\linewidth]{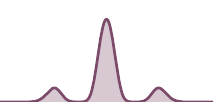} &
\includegraphics[width=.165\linewidth]{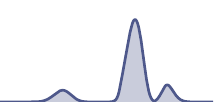} &
\includegraphics[width=.165\linewidth]{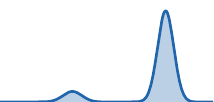} \\
\small unbalanced &
\includegraphics[width=.165\linewidth]{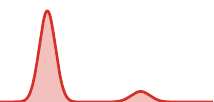} &
\includegraphics[width=.165\linewidth]{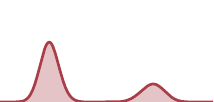} &
\includegraphics[width=.165\linewidth]{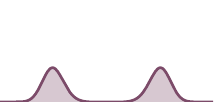} &
\includegraphics[width=.165\linewidth]{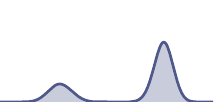} &
\includegraphics[width=.165\linewidth]{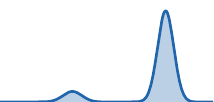}
\end{tabular}
\caption{Balanced and unbalanced Sinkhorn-barycenter interpolations between two one-dimensional Gaussian mixtures with swapped modal masses. The balanced row conserves total mass, so excess mass from the dominant left mode must move along the line toward the dominant right target mode, producing transient mass in the middle. The unbalanced row uses KL-relaxed marginal constraints; mass can be attenuated near overrepresented modes and recreated near underrepresented modes, giving a reaction--transport interpolation closer to the Wasserstein--Fisher--Rao intuition.}
\index{Sinkhorn!barycenter}
\index{unbalanced!barycenter}
\label{fig:dynamic-unbalanced-geodesic}
\end{figure}


\chapter{Wasserstein Gradient Flows}
\index{Wasserstein!gradient}
\index{gradient!flow}
\index{Wasserstein!gradient flow}
\label{sec-wasserstein-gradient-flows}

Once $\Wass_2$ is a dynamic metric, one can run gradient descent directly on the space of measures. This chapter derives the formal Wasserstein gradient, explains the JKO minimizing-movement scheme, records the role of geodesic convexity in convergence, and then applies the same calculus to mean-field neural-network training.
\index{mean-field!neural network}
\index{minimizing movement scheme}
\index{Wasserstein!gradient}
\index{convexity!geodesic}

\section{Minimizing Movements and Wasserstein Gradients}
\index{minimizing movement}
\index{Wasserstein!gradient}

This first section explains how a variational implicit-Euler step on measures gives rise, in the small-step limit, to a continuity equation driven by the Wasserstein gradient of the energy.
\index{Wasserstein!gradient}
\index{continuity equation}

We now consider a function $f(\alpha)$ and seek a minimizing evolution $(\alpha_t)_t$. The general strategy of minimizing movement over a metric space is to construct a discrete-time evolution using an implicit Euler scheme:
\index{implicit Euler scheme}
\index{minimizing movement}
\begin{equation}
    \alpha_{t+\tau} := \arg\min_\alpha \frac{1}{2 \tau} \Wass_2(\alpha_t, \alpha)^2 + f(\alpha). \label{eq:jko-discr}
\end{equation}

\begin{alg}[JKO minimizing movement]\label{alg:jko-minimizing-movement}
\index{JKO scheme}
\textbf{Input:} Energy $f$, initial measure $\alpha^0$, time step $\tau>0$, number of steps $K$.

\textbf{Output:} Discrete gradient-flow trajectory $(\alpha^k)_{k=0}^K$.

\textbf{For} $k=0,\ldots,K-1$ \textbf{do}:
\begin{algblock}
\(\alpha^{k+1} \in \uargmin{\alpha\in\Pp_2(\RR^d)} \frac{1}{2\tau}\Wass_2^2(\alpha^k,\alpha)+f(\alpha).\)

\textbf{Set} $\alpha_t^\tau=\alpha^k$ for $t\in[k\tau,(k+1)\tau)$.

\end{algblock}
\algreturnskip
\textbf{Return} $(\alpha^k)_{k=0}^K$ and $\alpha_t^\tau$.
\end{alg}

\paragraph{Euclidean gradient flows.}
\index{gradient!flow}

If we restrict \eqref{eq:jko-discr} to finite dimensions and assume $\alpha_t = \delta_{x(t)}$ and $\alpha = \delta_x$ (single Dirac measures), this matches the implicit Euler scheme:
\index{Dirac mass}
\index{implicit Euler scheme}
\begin{equation*}
    x(t+\tau) := \arg\min_x \frac{1}{2 \tau} \norm{x - x(t)}^2 + h(x),
\end{equation*}
where $h(x) = f(\delta_x)$. Its solution is formally given by the implicit Euler formula:
\begin{equation*}
    x(t+\tau) = (\Id + \tau \nabla h)^{-1}(x(t)).
\end{equation*}
In contrast, the explicit Euler scheme is:
\begin{equation*}
    x(t+\tau) = (\Id - \tau \nabla h)(x(t)) = x(t) - \tau \nabla h(x(t)).
\end{equation*}

Both schemes converge as $\tau \to 0$ to:
\begin{equation}
    \dot{x}(t) = -\nabla h(x(t)). \label{eq:grad-flow-classical}
\end{equation}

\paragraph{Wasserstein gradient formula.}
\index{Wasserstein!gradient}

The implicit Euler scheme has the advantage that it does not require $h$ or $f$ to be smooth. For $f$, this is crucial to handle evolutions over measures that may have densities, atoms or other singular parts.
\index{implicit Euler scheme}

As $\tau \to 0$, under certain conditions on $f$, \eqref{eq:jko-discr} defines a continuous evolution $t \mapsto \alpha_t$. As discussed earlier, this evolution can be described as a Lagrangian evolution \eqref{eq:lagrangian-advection}. We use the following first-variation convention: for any $\beta\in\Pp(\RR^d)$ and the signed zero-mass perturbation $\rho=\beta-\alpha$,
\index{first variation}
\index{signed!measure}
\begin{equation*}
    f((1-\tau)\alpha + \tau \beta) = f(\alpha + \tau \rho) = f(\alpha) + \tau \int [\delta f(\alpha)](x) \d \rho(x) + o(\tau).
\end{equation*}
The key infinitesimal object is the vector field that represents this differential in the Wasserstein metric.

\Needspace{7\baselineskip}
\begin{defn}[Wasserstein gradient]\label{def-wasserstein-gradient}
\index{Wasserstein!gradient}
	Assume that $f$ admits a smooth first variation $\delta f(\alpha)$. In the smooth formal calculus on $\Pp_2(\RR^d)$, the Wasserstein gradient of $f$ at $\alpha$ is the gradient vector field
	\begin{equation*}
	    \Wgrad f(\alpha) = \nabla_x \delta f(\alpha).
	\end{equation*}
\end{defn}
The associated formal gradient flow is the continuity equation
\begin{equation}
    \frac{\partial \alpha_t}{\partial t} + \diverg(-\Wgrad f(\alpha_t) \alpha_t) = 0. \label{eq:wassflow-pde}
\end{equation}
The following proposition explains why this vector field is the Riemannian gradient for the $L^2(\alpha)$ metric on velocities.
\index{signed!measure}
\begin{proposition}[Formal Wasserstein gradient]\label{prop-formal-wass-gradient}
\index{Wasserstein!gradient}
	Assume that $f$ admits a smooth first variation $\delta f(\alpha)$ and that $\alpha$ has a smooth positive density. For infinitesimal perturbations generated by a velocity field $v$ through $(\Id+\tau v)_\sharp\alpha$, the differential of $f$ is
\index{first variation}
\index{velocity field}
	\[
		\frac{\d}{\d\tau}_{|\tau=0}
		f\big((\Id+\tau v)_\sharp\alpha\big)
		=
		\int \dotp{\nabla \delta f(\alpha)(x)}{v(x)}\d\alpha(x).
	\]
	Hence, for the Riemannian metric $\norm{v}_{L^2(\alpha)}^2=\int \norm{v}^2\d\alpha$, the Wasserstein gradient is the vector field
\index{Wasserstein!gradient}
	\[
		\Wgrad f(\alpha)=\nabla \delta f(\alpha).
	\]
\end{proposition}
\begin{proof}
	The push-forward expansion gives, in the sense of distributions,
\index{push-forward}
	\[
		(\Id+\tau v)_\sharp\alpha
		=
		\alpha-\tau\diverg(\alpha v)+o(\tau).
	\]
	Using the definition of the first variation,
\index{first variation}
	\[
		f((\Id+\tau v)_\sharp\alpha)
		=
		f(\alpha)-\tau\int \delta f(\alpha)\diverg(\alpha v)\d x+o(\tau).
	\]
	An integration by parts, with either compact support or vanishing boundary flux, gives
\index{flux}
	\[
		-\int \delta f(\alpha)\diverg(\alpha v)\d x
		=
		\int \dotp{\nabla\delta f(\alpha)}{v}\d\alpha.
	\]
	By definition of the Riesz representative for the $L^2(\alpha)$ metric, this representative is $\nabla\delta f(\alpha)$.
\end{proof}

The Wasserstein gradient-flow viewpoint already appears in John D. Lafferty's PhD work, published as ``The Density Manifold and Configuration Space Quantization'', under the name ``density manifold''. It was then systematically developed by Otto, who exposed the formal Riemannian structure of this space~\cite{otto2001geometry}. Rigorous metric-space treatments and numerical JKO schemes can be found in~\cite{ambrosio2006gradient,JDB-JKO,2015-Peyre-siims,gallouet2017jko}.
\index{Wasserstein!gradient}
\index{Wasserstein!gradient flow}
\index{JKO scheme}

\paragraph{From the JKO step to the velocity field.}
\index{velocity field}

A first-order expansion of the JKO step explains why~\eqref{eq:wassflow-pde} uses the vector field $\Wgrad f(\alpha)$. Write \eqref{eq:jko-discr} as a minimization over displacement fields $v$ such that $\alpha = (\Id + \tau v)_\sharp \alpha_t$:
	\[
		\min_{v} \frac{1}{2 \tau} \tau^2 \norm{v}_{L^2(\alpha_t)}^2 + f( (\Id + \tau v)_\sharp \alpha_t ).
	\]
Then we perform a first-order Taylor expansion of this formulation using
	\[
		 (\Id + \tau v)_\sharp \alpha_t =  \alpha_t - \tau \diverg( v \alpha_t ) + o(\tau)
	\]
	\[
		f( (\Id + \tau v)_\sharp \alpha_t ) = f(\alpha_t) - \tau \int \delta f(\alpha_t) \diverg( v \alpha_t ) \d x + o(\tau)
	\]
	\[
		 = f(\alpha_t) + \tau \int \langle \nabla_x \delta f(\alpha_t)(x), v(x) \rangle \d \alpha_t(x) + o(\tau)
	\]
to obtain the following first-order expansion in $\tau$ of the problem minimized in \eqref{eq:jko-discr}
	\[
		 \min_{v} f(\alpha_t) +  \tau \int \Big[ \frac{1}{2} \norm{v(x)}^2 + \langle \Wgrad f(\alpha_t)(x), v(x) \rangle \Big]\d \alpha_t(x) + o(\tau).
	\]
The pointwise minimizer is $v=-\Wgrad f(\alpha_t)$, which gives the velocity in the continuity equation.
\index{continuity equation}
We now detail examples of such Wasserstein gradient flows.
\index{Wasserstein!gradient}
\index{gradient!flow}
\index{Wasserstein!gradient flow}

\begin{figure}[ht]
\centering
\begin{tabular}{@{}cc@{}}
\small density iterates & \small quantile motion \\[-.15em]
\includegraphics[width=.43\linewidth]{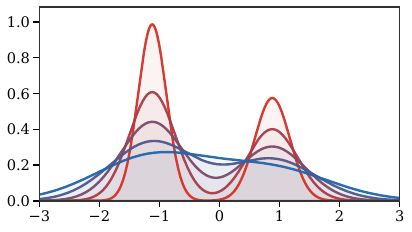} &
\includegraphics[width=.43\linewidth]{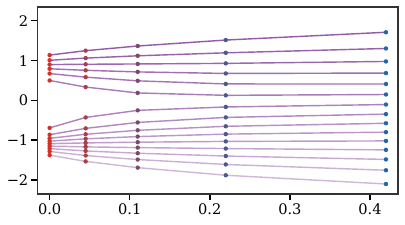}
\end{tabular}
\caption{JKO minimizing movements for the entropy flow in one dimension. The left panel displays successive implicit-Euler minimizers for the heat equation, colored from red to blue. The right panel tracks inverse CDF values $Q_t(s)=F_t^{-1}(s)$ for selected probability levels $s$, giving a Lagrangian view of the proximal movement in Wasserstein space.}
\index{minimizing movement}
\index{Wasserstein!space}
\index{heat equation}
\label{fig:gradflow-jko-entropy-steps}
\end{figure}

\paragraph{Discrete evolutions.}
\index{measure!evolution}

If $f(\alpha)$ can be evaluated on discrete distributions and $\Wgrad$ is continuous in this case, the flow \eqref{eq:wassflow-pde} maintains the number of Dirac masses, $\alpha_t = \frac{1}{n} \sum_i \delta_{x_i(t)}$. The particles $X(t) := (x_i(t))_i$ evolve according to a system of coupled ODEs:
\index{Dirac mass}
\begin{equation}
    \dot{x}_i(t) = -n\nabla_{x_i} F(X(t)), \label{eq:wassflows-particles}
\end{equation}
where $F(X) := f\left(\frac{1}{n} \sum_i \delta_{x_i}\right)$ and the factor $n$ comes from the empirical Wasserstein metric $\frac1n\sum_i \norm{\dot x_i}^2$.

\begin{alg}[Empirical Wasserstein particle descent]\label{alg:empirical-wasserstein-particle-flow}
\index{particle!Wasserstein descent}
\textbf{Input:} Particles $X^0=(x_1^0,\ldots,x_n^0)$, functional $f$, step size $h$, tolerance $\mathrm{tol}$.

\textbf{Output:} Particle trajectory $(X^k)_k$ and empirical measures.

\textbf{Define}
\(F(X)=f\!\left(\frac1n\sum_{i=1}^n\delta_{x_i}\right).\)

\textbf{For} $k=0,1,\ldots$ \textbf{do}:
\begin{algblock}

\textbf{For} $i=1,\ldots,n$ \textbf{do}
\begin{algblock}
\(g_i^k=n\nabla_{x_i}F(X^k), \qquad x_i^{k+1}=x_i^k-h\,g_i^k.\)
\end{algblock}
\textbf{If} \(\max_i\norm{x_i^{k+1}-x_i^k}\leq \mathrm{tol}\) \textbf{then}:
\begin{algblock}
\textbf{Return} $\frac1n\sum_i\delta_{x_i^{k+1}}$.
\end{algblock}
\end{algblock}
\end{alg}

\paragraph{Linear Functionals.}
\index{linear!functional}

The first benchmark is a functional whose first variation does not depend on the current measure.

\begin{example}[Linear potentials generate independent particles]
Take a linear functional
   \begin{equation}\label{eq:linear-func}
       f(\alpha) = \int h(x) \d \alpha(x).
   \end{equation}
   Then $\delta f(\alpha) = h$ is independent of $\alpha$, and the flow \eqref{eq:wassflow-pde} becomes
   \begin{equation*}
       \frac{\partial \alpha_t}{\partial t} + \diverg(-\nabla h \alpha_t) = 0.
   \end{equation*}
   This implies particles move independently according to the usual gradient flow \eqref{eq:grad-flow-classical}.
\index{gradient!flow}
\end{example}

\paragraph{Shannon Neg-Entropy.}
\index{Shannon!negative entropy}

Entropy gives the opposite benchmark: the flow is no longer a deterministic push-forward of particles, but a diffusion of density.

\begin{example}[Entropy generates heat and porous-medium flows]
The canonical density-dependent functional is the Shannon neg-entropy
   \begin{equation}
       f(\alpha) = \int \log\left(\frac{\d \alpha}{\d x}(x)\right) \d \alpha(x). \label{eq:entropy-func}
   \end{equation}
   Here, $\delta f(\alpha) = \log(\d\alpha/\d x)$ up to an additive constant, so $\Wgrad f(\alpha) = \nabla \alpha/\alpha$ (often called the score). The flow \eqref{eq:wassflow-pde} becomes the heat equation
\index{score function}
\index{heat equation}
   \begin{equation*}
       \partial_t \alpha_t = \Delta \alpha_t.
   \end{equation*}
   Other entropy functionals lead to nonlinear diffusion equations; finite-volume and particle discretizations are discussed in~\cite{CarrilloFiniteVolume,GianazzaARMA,Maas2011,ErbarHeatManifold}.
For a generalized entropy
   \begin{equation}\label{eq:gen-entropies}
       f(\alpha) = \int g\left(\frac{\d \alpha}{\d x}\right) \d x,
   \end{equation}
	   with a scalar convex function $g$, one obtains nonlinear diffusions in the smooth-density regime:
\index{convex!function}
   \begin{equation*}
       \frac{\partial \alpha_t}{\partial t} = \Delta(P(\alpha_t)),
   \end{equation*}
   where the pressure $P$ satisfies $P'(s)=s g''(s)$. For example, $g(s) = s \log(s)$ gives $P(s)=s$ and recovers \eqref{eq:entropy-func}, while $g(s) = s^m/(m-1)$, $m > 1$, gives $P(s)=s^m$ up to an additive constant and yields the porous-medium equation.
\index{porous medium equation}
\end{example}
The preceding examples are also governed by a precise geodesic-convexity criterion.
	A celebrated theorem by McCann~\cite{mccann1997convexity} states that an internal energy of the form~\eqref{eq:gen-entropies}, for $g : \RR^+ \to \RR\cup\{+\infty\}$ with $g(0)=0$, is geodesically convex on $\Pp(\RR^d)$ when $g$ is convex and the map $r \mapsto r^d g(r^{-d})$ is convex and nonincreasing on $(0,+\infty)$.
	Examples of such functions are $g(s)=s^q$ for $q>1$ and Shannon entropy $g(s)=s \log(s)$. %
\index{Shannon!entropy}
	By contrast, $g(s)=-\log(s)$, associated with the reverse KL divergence, does not satisfy this displacement-convexity criterion.
\index{reverse KL divergence}
\index{displacement!convexity}

\begin{figure}[ht]
\centering
\begin{tabular}{@{}ccc@{}}
\small heat equation & \small porous medium $m=2$ & \small porous medium $m=6$ \\[-.15em]
\index{heat equation}
\includegraphics[width=.30\linewidth]{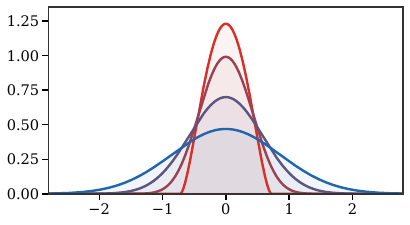} &
\includegraphics[width=.30\linewidth]{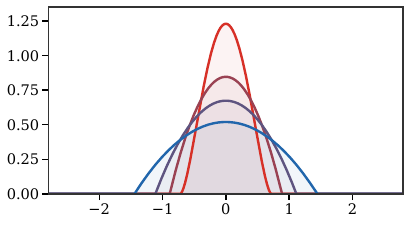} &
\includegraphics[width=.30\linewidth]{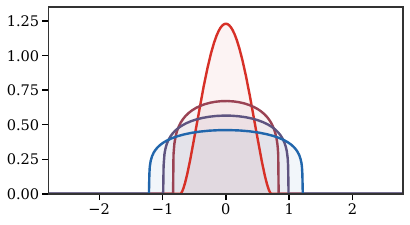}
\end{tabular}
\caption{Entropy-driven Wasserstein gradient flows from the same compact initial density. The heat flow is generated by Shannon entropy $g(\rho)=\rho\log\rho$ and instantly develops Gaussian tails. The porous-medium flows use the power entropy $g(\rho)=\rho^m/(m-1)$, hence $\partial_t\rho=\Delta(\rho^m)$: the middle panel has $m=2$, while the right panel has the stronger nonlinearity $m=6$, i.e. $\partial_t\rho=\Delta(\rho^6)$. Larger powers diffuse mainly where the density is high, producing a flatter core and a sharper compact free boundary.}
\index{Wasserstein!gradient flow}
\index{Shannon!entropy}
\label{fig:gradflow-heat-versus-porous-medium}
\end{figure}

\paragraph{Interaction Energies.}
\index{energy!interaction}

In a similar spirit, to obtain nonlinear evolutions without requiring the measure to have density, one can consider
   \begin{equation}
       f(\alpha) := \iint k(x, y) \d \alpha(x) \d \alpha(y). \label{eq:quadratic-func}
   \end{equation}
   For a symmetric kernel $k$:
   \begin{equation*}
       \delta f(\alpha)(x) = 2 \int k(x, y) \d \alpha(y), \quad \Wgrad f(\alpha)(x) = 2 \int \nabla_x k(x, y) \d \alpha(y).
   \end{equation*}
   For $\alpha_0 = \frac{1}{n} \sum_i \delta_{x_i}$, the flow \eqref{eq:wassflow-pde} implies particles $(x_i(t))_i$ obey:
   \begin{equation*}
       \dot{x}_i(t) = -\frac{2}{n} \sum_j \nabla k(x_i(t), x_j(t)).
   \end{equation*}
	   If $k$ is positive definite, or more generally conditionally positive definite on signed measures of zero total mass as for the energy-distance kernel $k(x,y)=-\norm{x-y}$, and one minimizes the squared kernel discrepancy to a teacher distribution $\beta$, then
\index{teacher distribution}
\index{energy!distance}
\index{signed!measure}
   \[
       \norm{\alpha-\beta}_k^2
       =
       \iint k\d\alpha\d\alpha
       -2\int\!\left(\int k(x,y)\d\beta(y)\right)\d\alpha(x)
       +\mathrm{constant}.
       \]
	       Thus MMD-type training energies are exactly an interaction energy plus a linear potential; the teacher distribution appears through the potential $x\mapsto-2\int k(x,y)\d\beta(y)$. The corresponding empirical Wasserstein gradient flow is
\index{teacher distribution}
\index{Wasserstein!gradient}
\index{gradient!flow}
\index{maximum mean discrepancy}
\index{Wasserstein!gradient flow}
\index{energy!interaction}
       \[
		\dot x_i(t)
		=
		-\frac{2}{n}\sum_j\nabla_x k(x_i(t),x_j(t))
		+
		2\int\nabla_x k(x_i(t),y)\d\beta(y).
       \]

The corresponding simulation loop is Algorithm~\ref{alg:mmd-particle-flow}.

\begin{alg}[MMD particle flow against a teacher law]\label{alg:mmd-particle-flow}
\index{maximum mean discrepancy}
\textbf{Input:} Initial particles $(x_i^0)_{i=1}^n$, teacher law $\beta$ or teacher samples $(y_b)_{b=1}^B$, kernel $k$, step size $h$.

\textbf{Output:} Particle trajectory targeting $\beta$.

\textbf{For} $k=0,1,\ldots$ \textbf{do}:
\begin{algblock}

\textbf{For} $i=1,\ldots,n$ \textbf{do}
\begin{algblock}
\textbf{Set} self-interaction \(r_i^k=-\frac{2}{n}\sum_{j=1}^n\nabla_x k(x_i^k,x_j^k)\).

\textbf{If} $\beta$ is available analytically \textbf{then}:
\begin{algblock}

\textbf{Set} teacher attraction \(a_i^k=2\int\nabla_x k(x_i^k,y)\d\beta(y)\).

\end{algblock}
\textbf{If} only samples $(y_b)_{b=1}^B$ are available \textbf{then}:
\begin{algblock}

\textbf{Set} \(a_i^k=\frac{2}{B}\sum_{b=1}^B\nabla_x k(x_i^k,y_b)\).

\end{algblock}
\textbf{Set} velocity \(v_i^k=r_i^k+a_i^k\).

\textbf{Update}
\(x_i^{k+1}=x_i^k+h\,v_i^k.\)
\end{algblock}
\end{algblock}
\algreturnskip
\textbf{Return} $(x_i^k)_{i,k}$.
\end{alg}

	       The first term is a kernelized self-interaction, while the second is the attraction induced by the continuous teacher kernel mean. At the continuum level, characteristic positive-definite kernels, and the Euclidean energy-distance kernel on probability measures, have $\beta$ as the unique minimizer of $\norm{\alpha-\beta}_k^2$. For a finite number of particles, however, the flow can only form a kernelized quadrature of $\beta$, and small particle systems may cover the target modes poorly. Figure~\ref{fig:gradflow-mmd-particle-count} illustrates this finite-particle effect.
\index{kernelized self-interaction}
\index{teacher kernel mean}
\index{characteristic kernel}
\index{kernel quadrature}
\index{probability measure}
\index{kernel!positive definite}
\index{energy!distance}

\begin{figure}[ht]
\centering
\begin{tabular}{@{}ccc@{}}
\small $n=10$ particles & \small $n=50$ particles & \small $n=300$ particles \\[-.15em]
\includegraphics[width=.30\linewidth]{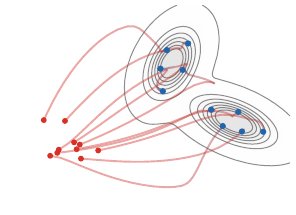} &
\includegraphics[width=.30\linewidth]{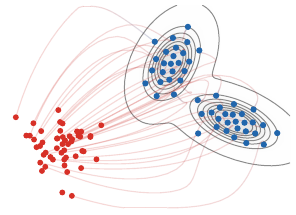} &
\includegraphics[width=.30\linewidth]{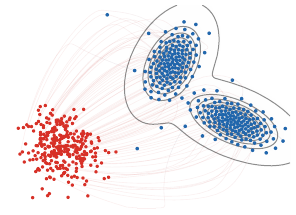}
\end{tabular}
\caption{Particle count in the deterministic Wasserstein gradient flow of the squared MMD-type discrepancy to a smooth two-Gaussian teacher distribution, using here the energy-distance kernel $k(x,y)=-\norm{x-y}$. The teacher itself is shown only through true density contours, while red dots are a compact shifted Gaussian initialization placed away from the target, red-to-blue curves show a thinned subset of particle trajectories, and blue dots show the stabilized long-time particles. With too few particles, the empirical measure forms a sparse kernelized quadrature and may under-cover the target modes; increasing $n$ makes the particle cloud approximate the continuous target geometry more faithfully.}
\index{teacher distribution}
\index{kernel quadrature}
\index{Wasserstein!gradient flow}
\index{empirical!measure}
\index{energy!distance}
\label{fig:gradflow-mmd-particle-count}
\end{figure}

\begin{figure}[ht]
\centering
\begin{tabular}{@{}ccc@{}}
\small repulsive kernel & \small attractive kernel & \small attraction--repulsion \\[-.15em]
\includegraphics[width=.30\linewidth]{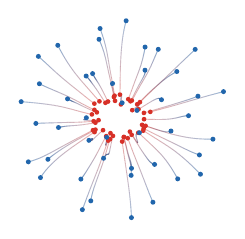} &
\includegraphics[width=.30\linewidth]{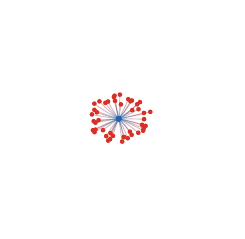} &
\includegraphics[width=.30\linewidth]{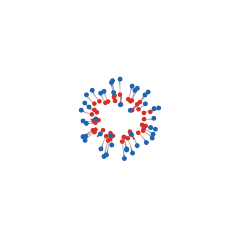}
\end{tabular}
\caption{Interaction-energy particle flows for three choices of $k$. A positive Gaussian kernel $k(x,y)=\exp(-\norm{x-y}^2/(2\sigma^2))$ produces short-range repulsion under Wasserstein descent; changing its sign produces attraction and collapse; adding a quadratic long-range attraction to the repulsive kernel yields a balanced attraction--repulsion dynamics. The curves use arclength-based red-to-blue coloring along a longer integration of the coupled particle ODE~\eqref{eq:wassflows-particles}.}
\index{particle ODE}
\index{energy!interaction}
\label{fig:gradflow-interaction-particles}
\end{figure}

\begin{figure}[ht]
\centering
\begin{tabular}{@{}cccc@{}}
\small OT rays & \small MMD force & \small Sinkhorn force & \small drifting field \\[-.15em]
\includegraphics[width=.225\linewidth]{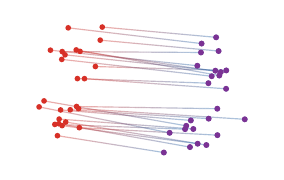} &
\includegraphics[width=.225\linewidth]{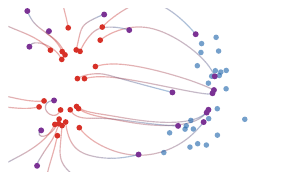} &
\includegraphics[width=.225\linewidth]{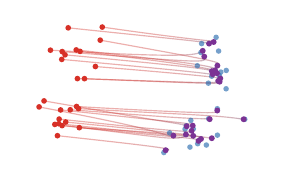} &
\index{Sinkhorn!divergence}
\includegraphics[width=.225\linewidth]{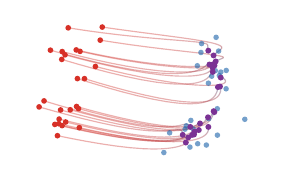}
\end{tabular}
\caption{Particle trajectories induced by different discrepancy geometries. The red particles and blue target cloud are the same in all panels. Straight OT displacement produces rays from an optimal matching; an MMD-type witness field gives smoother nonlocal forces; the Sinkhorn-divergence force is an entropic, debiased transport attraction; and the normalized drifting field combines attraction to data with self-repulsion. The figure is qualitative: it compares geometric behavior, not solver performance.}
\label{fig:gradflow-particle-objective-geometries}
\end{figure}

\paragraph{Stochastic particles and McKean--Vlasov limits.}
\index{stochastic!particle}
\index{McKean-Vlasov limit}
More generally, deterministic particle flows have stochastic counterparts, where Brownian noise at the particle level becomes an entropy term at the level of measures. If the drift does not depend on the empirical measure, each particle evolves independently according to
\index{empirical!measure}
\[
	\d X_t=b(X_t)\d t+\sqrt{2}\sigma\d B_t,
\]
and the one-particle law $\alpha_t=\rho_t\d x$ directly satisfies the linear Fokker--Planck equation
\index{Fokker-Planck equation}
\[
	\partial_t\rho_t=-\diverg(b\rho_t)+\sigma^2\Delta\rho_t.
\]

\begin{example}[Langevin drift as a free-energy flow]
	If $b=-\nabla V$, this linear Fokker--Planck equation is the $\Wass_2$ gradient flow of the free energy
	\[
		\rho\mapsto \int V\rho\,\d x+\sigma^2\int\rho\log\rho\,\d x.
	\]
\end{example}

The mean-field case is different: the drift is recomputed from the current empirical distribution of all particles,
\index{empirical!distribution}
\index{gradient!flow}
\[
	\d X_i^n(t)=b(X_i^n(t),\mu_t^n)\d t+\sqrt{2}\sigma\d B_i(t),
	\qquad
	\mu_t^n=\frac1n\sum_{i=1}^n\delta_{X_i^n(t)} .
\]
For finite $n$, the empirical law $\mu_t^n$ is itself random. Under suitable Lipschitz, growth and chaotic-initialization assumptions, propagation of chaos states that finitely many particles become asymptotically independent as $n\to\infty$, all with the same deterministic law $\rho_t\d x$; equivalently, the empirical measure $\mu_t^n$ converges in probability to this law. The limiting density solves the nonlinear Fokker--Planck, or McKean--Vlasov, equation
\index{Fokker-Planck equation}
\index{McKean-Vlasov limit}
\index{propagation of chaos}
\index{empirical!law}
\[
	\partial_t \rho_t
	=
	-\diverg\big(b(x,\rho_t)\rho_t\big)
	+
	\sigma^2\Delta\rho_t .
\]
When the interaction drift has variational form
\[
	b(x,\rho)=-\nabla \frac{\delta \mathcal E}{\delta \rho}(x),
\]
	this PDE is the Wasserstein gradient flow of the entropy-regularized energy
\index{Wasserstein!gradient}
\index{gradient!flow}
\index{Wasserstein!gradient flow}
	\[
		\mathcal E(\rho)+\sigma^2\int \rho\log\rho\,\d x .
	\]

\begin{figure}[ht]
\centering
\begin{tabular}{@{}c@{}}
\small independent Langevin particles \\[-.15em]
\includegraphics[width=.86\linewidth]{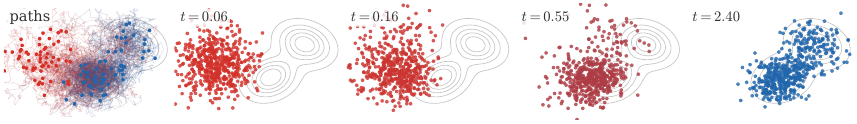} \\[.25em]
\small deterministic KDE-score particles \\[-.15em]
\includegraphics[width=.86\linewidth]{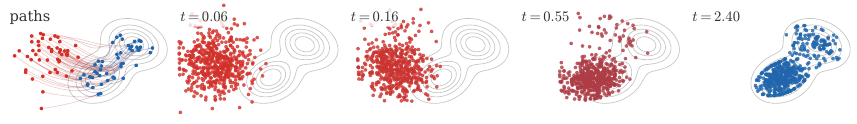} \\[.25em]
\small grid Fokker--Planck density \\[-.15em]
\includegraphics[width=.86\linewidth]{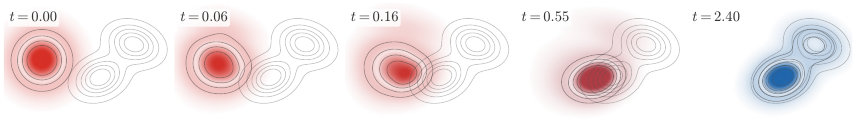}
\end{tabular}
\caption{Three numerical representations of the same entropy-regularized Wasserstein gradient flow of $\KL(\rho|\beta)$, where $\beta$ is a two-Gaussian target shifted to the right of an initially isotropic Gaussian density. The first row simulates independent Langevin particles and displays a thinned set of trajectories in the left panel. The second row evolves many deterministic particles with velocity $\tau(\nabla\log\beta-\nabla\log\rho_t)$, estimating $\nabla\log\rho_t$ by a sharper kernel-density score; only representative trajectories and particle subsets are displayed. The third row solves the corresponding Fokker--Planck equation on a grid, starting from the initial density in the left panel. The remaining columns use front-loaded times, so that the onset of the flow and the later deformation toward a bimodal law are both visible.}
\index{Wasserstein!gradient flow}
\index{Fokker-Planck equation}
\label{fig:gradflow-fokker-planck-three-representations}
\end{figure}

\section{Geodesic Convexity and Convergence}
\index{convexity!geodesic}
\label{sec-geodesic-convexity}

Geodesic convexity is the convexity notion adapted to Wasserstein geometry. It is the condition that turns the formal gradient-flow calculus into a convergence theory.
\index{convexity!geodesic}
\index{gradient!flow}

\paragraph{Geodesics and convexity.}

A constant-speed $\Wass_2$ geodesic between $\alpha_0$ and $\alpha_1$ is obtained, as in Definition~\ref{def-w2-geodesic-induced-by-plan}, from any optimal coupling $\pi^\star\in\Couplings(\alpha_0,\alpha_1)$ by the McCann interpolation
\index{optimal coupling}
\index{constant-speed geodesic}
\index{McCann interpolation}
\[
	\alpha_t=((1-t)P_0+tP_1)_\sharp\pi^\star,
	\qquad t\in[0,1],
\]
where $P_0(x,y)=x$ and $P_1(x,y)=y$. If the optimal plan is induced by a Brenier map $T$, this reduces to $((1-t)\Id+tT)_\sharp\alpha_0$. The coupling formula is important because geodesics exist even when no Monge map exists, for instance when a Dirac mass must split.
\index{Monge!problem}
\index{optimal plan}
\index{Brenier!map}
\index{Dirac mass}

\begin{defn}[Geodesic convexity]\label{def-geodesic-convexity}
\index{convexity!geodesic}
	A functional $f$ on $\Pp_2(\RR^d)$ is geodesically convex if for every $\Wass_2$ geodesic $(\alpha_t)_t$,
	\[
		f(\alpha_t)\leq (1-t)f(\alpha_0)+t f(\alpha_1).
	\]
	It is $\lambda$-geodesically convex if the right-hand side is improved by $-\frac{\lambda}{2}t(1-t)\Wass_2^2(\alpha_0,\alpha_1)$.
\end{defn}

\begin{prop}[Basic geodesically convex energies]\label{prop-basic-geodesic-convexity}
\index{convexity!geodesic}
	The following formal statements hold on $\Pp_2(\RR^d)$.
	\begin{enumerate}
		\item If $h$ is convex, then $\alpha\mapsto\int h\d\alpha$ is geodesically convex; if $h$ is $\lambda$-strongly convex, it is $\lambda$-geodesically convex.
		\item If $W(x-y)$ is convex as a function of the displacement, then $\alpha\mapsto\frac12\iint W(x-y)\d\alpha(x)\d\alpha(y)$ is geodesically convex.
		\item Shannon entropy $\alpha\mapsto\int\rho\log\rho\,\d x$ is geodesically convex.
\index{Shannon!entropy}
		\item The relative entropy $\KL(\alpha|\gamma)$ with $\d\gamma=e^{-V}\d x/Z$ is $\lambda$-geodesically convex when $V$ is $\lambda$-strongly convex.
\index{entropy!relative}
	\end{enumerate}
\end{prop}
\begin{proof}
	Along a Monge geodesic $X_t=(1-t)X_0+tX_1$, convexity of $h$ gives $h(X_t)\leq(1-t)h(X_0)+t h(X_1)$, and strong convexity gives the additional quadratic term; integrating proves the first claim. The interaction claim follows similarly by applying convexity of $W$ to pairwise differences $X_t-X_t'=(1-t)(X_0-X_0')+t(X_1-X_1')$ and integrating over two independent copies. The entropy claim is McCann's displacement convexity theorem; at the density level it follows from the concavity of the Jacobian determinant under the interpolation of optimal maps. Finally, $\KL(\alpha|\gamma)=\int\rho\log\rho\,\d x+\int V\d\alpha+\mathrm{constant}$, so it is the sum of displacement-convex entropy and a $\lambda$-geodesically convex linear potential.
\index{displacement!convexity}
\index{Jacobian determinant}
\index{convexity!geodesic}
\index{Monge!geodesic}
\end{proof}

\paragraph{Convergence of the flow.}

In general, analyzing \eqref{eq:wassflow-pde} is delicate. The cleanest case is when $f$ is geodesically convex in the sense above. This condition is the Wasserstein analogue of convexity in Euclidean gradient descent.

\begin{prop}[Energy decay for convex Wasserstein flows]\label{prop-convex-wass-flow-rate}
\index{energy!decay}
\index{Wasserstein!flow}
	Assume formally that $f$ is geodesically convex, admits a smooth first variation, and has a minimizer $\alpha^\star$. Let $(\alpha_t)_t$ be a smooth solution of the Wasserstein gradient flow
\index{first variation}
\index{Wasserstein!gradient}
\index{gradient!flow}
\index{Wasserstein!gradient flow}
	\[
		\partial_t\alpha_t+\diverg(\alpha_t v_t)=0,
		\qquad
		v_t=-\Wgrad f(\alpha_t).
	\]
	Then
	\[
		\frac{\d}{\d t} f(\alpha_t)
		=
		-\int \norm{\Wgrad f(\alpha_t)(x)}^2\,\d\alpha_t(x)
		\leq 0.
	\]
	If $T_t$ is the optimal map from $\alpha_t$ to $\alpha^\star$, then
	\[
		f(\alpha_t)-f(\alpha^\star)
		\leq
		-\frac{\d}{\d t}\frac12 \Wass_2^2(\alpha_t,\alpha^\star),
	\]
	and consequently
	\[
		f(\alpha_t)-f(\alpha^\star)
		\leq
		\frac{\Wass_2^2(\alpha_0,\alpha^\star)}{2t}.
	\]
	If $f$ is $\lambda$-geodesically convex with $\lambda>0$, then
	\[
		f(\alpha_t)-f(\alpha^\star)
		\leq
		e^{-2\lambda t}\bigl(f(\alpha_0)-f(\alpha^\star)\bigr).
	\]
\end{prop}

\begin{proof}
	The chain rule and Proposition~\ref{prop-formal-wass-gradient} give
	\[
		\frac{\d}{\d t}f(\alpha_t)
		=
		\int \dotp{\Wgrad f(\alpha_t)(x)}{v_t(x)}\,\d\alpha_t(x)
		=
		-\int \norm{\Wgrad f(\alpha_t)(x)}^2\,\d\alpha_t(x).
	\]
	Geodesic convexity along the geodesic
\index{convexity!geodesic}
	$((1-s)\Id+sT_t)_\sharp\alpha_t$ gives
	\[
		f(\alpha^\star)-f(\alpha_t)
		\geq
		\int \dotp{\Wgrad f(\alpha_t)(x)}{T_t(x)-x}\,\d\alpha_t(x).
	\]
	Since $v_t=-\Wgrad f(\alpha_t)$, this reads
	\[
		f(\alpha_t)-f(\alpha^\star)
		\leq
		\int \dotp{v_t(x)}{T_t(x)-x}\,\d\alpha_t(x).
	\]
	The standard first-variation formula for the squared Wasserstein distance gives
\index{Wasserstein!distance}
	\[
		\frac{\d}{\d t}\frac12\Wass_2^2(\alpha_t,\alpha^\star)
		=
		\int \dotp{x-T_t(x)}{v_t(x)}\,\d\alpha_t(x),
	\]
	which proves the differential inequality. Integrating it from $0$ to $t$ and using the monotonicity of $s\mapsto f(\alpha_s)$ gives
	\[
		t\bigl(f(\alpha_t)-f(\alpha^\star)\bigr)
		\leq
		\int_0^t \bigl(f(\alpha_s)-f(\alpha^\star)\bigr)\,\d s
		\leq
		\frac12\Wass_2^2(\alpha_0,\alpha^\star).
	\]
	If $f$ is $\lambda$-geodesically convex, the Wasserstein analogue of strong convexity gives the slope inequality
	\[
		\int\norm{\Wgrad f(\alpha_t)}^2\d\alpha_t
		\geq
		2\lambda\bigl(f(\alpha_t)-f(\alpha^\star)\bigr).
	\]
	Combining it with the energy dissipation identity yields
\index{energy!dissipation}
	\[
		\frac{\d}{\d t}\bigl(f(\alpha_t)-f(\alpha^\star)\bigr)
		\leq
		-2\lambda\bigl(f(\alpha_t)-f(\alpha^\star)\bigr),
	\]
	and Gronwall's lemma gives the exponential rate.
\end{proof}

\begin{prop}[Convex examples covered by the theory]\label{prop-convex-flow-examples}
	The hypotheses of Proposition~\ref{prop-convex-wass-flow-rate} are satisfied in the following standard cases, at least at the formal smooth level used in this section.
	\begin{enumerate}
		\item For the linear energy $f(\alpha)=\int h\d\alpha$, geodesic convexity holds when $h$ is convex. If $h$ is $\lambda$-strongly convex, then $f$ is $\lambda$-geodesically convex and the flow enjoys the exponential rate of Proposition~\ref{prop-convex-wass-flow-rate}.
\index{convexity!geodesic}
		\item For the interaction energy $f(\alpha)=\frac12\iint W(x-y)\d\alpha(x)\d\alpha(y)$, geodesic convexity holds when $W$ is convex and even. This covers repulsive or attractive pairwise models whose displacement cost has no non-convex wells.
\index{energy!interaction}
		\item The Shannon entropy $f(\alpha)=\int\rho\log\rho\,\d x$ and, more generally, McCann displacement-convex internal energies generate diffusion-type Wasserstein gradient flows.
\index{Wasserstein!gradient}
\index{gradient!flow}
\index{Wasserstein!gradient flow}
\index{Shannon!entropy}
		\item If $\gamma=Z^{-1}e^{-V}\d x$ and $V$ is $\lambda$-strongly convex, then the relative entropy $\KL(\alpha|\gamma)$ is $\lambda$-geodesically convex. Its flow is the Fokker--Planck equation with invariant law $\gamma$.
\index{Fokker-Planck equation}
\index{entropy!relative}
	\end{enumerate}
\end{prop}
\begin{proof}
	Let $(\alpha_t)_t$ be the McCann interpolation between $\alpha_0$ and $\alpha_1$, written with an optimal coupling as $X_t=(1-t)X_0+tX_1$. For a linear energy, Jensen's inequality gives
\index{Jensen inequality}
\index{optimal coupling}
\index{McCann interpolation}
	\[
		h(X_t)\leq(1-t)h(X_0)+t h(X_1),
	\]
	and the strong convexity version gives the additional term $-\frac{\lambda}{2}t(1-t)\norm{X_0-X_1}^2$. Integrating over the optimal coupling proves geodesic convexity and $\lambda$-geodesic convexity.
\index{convexity!geodesic}

	For interaction energies, use two independent copies of the optimal coupling. The pairwise displacement evolves as
\index{optimal coupling}
\index{energy!interaction}
	\[
		X_t-X_t'=(1-t)(X_0-X_0')+t(X_1-X_1').
	\]
	Convexity of $W$ gives the convexity inequality after integration over the product coupling. Evenness of $W$ ensures that the interaction is symmetric in the two particles and matches the usual factor $1/2$ in~\eqref{eq:quadratic-func}.
\index{product!coupling}

	The entropy claim is McCann's displacement-convexity theorem. For smooth positive densities and Brenier maps, it follows from the change-of-variables formula and the concavity of the determinant along positive matrices; the general statement is obtained by approximation. Finally,
\index{change-of-variables}
\index{change-of-variables formula}
\index{displacement!convexity}
\index{Brenier!map}
	\[
		\KL(\alpha|\gamma)=\int\rho\log\rho\,\d x+\int V\d\alpha+\log Z,
	\]
	so it is the sum of the displacement-convex entropy and the $\lambda$-geodesically convex linear potential generated by $V$. Proposition~\ref{prop-convex-wass-flow-rate} then applies to all four cases.
\end{proof}

\paragraph{Convexity and curvature.}

The same language is not restricted to subsets of $\RR^d$. If $(\X,\dist,\mathfrak m)$ is a geodesic metric-measure space, $\Wass_2$ geodesics can be defined by transporting each pair of endpoints along metric geodesics, or more intrinsically by dynamical optimal plans on path space, as discussed in Section~\ref{sec-kantorovich-plan-interpolation}. Given a reference measure $\mathfrak m$, the entropy relative to $\mathfrak m$ is
\index{metric-measure space}
\index{geodesic!space}
\index{entropy!relative}
\index{dynamical optimal plan}
\[
	\mathrm{Ent}_{\mathfrak m}(\alpha)
	\eqdef
	\begin{cases}
		\displaystyle \int_\X \rho\log\rho\,\d\mathfrak m,
		& \text{if } \alpha=\rho\,\mathfrak m,\\
		+\infty,
		& \text{otherwise.}
	\end{cases}
\]
On a smooth Riemannian manifold $(M,g)$, the Ricci curvature tensor $\mathrm{Ric}_g$ is the trace of the Riemann curvature tensor; the lower bound $\mathrm{Ric}_g\geq\lambda g$ means that $\mathrm{Ric}_g(v,v)\geq\lambda |v|_g^2$ for every tangent vector $v$. The fundamental link between curvature and optimal transport is that this tensor lower bound is exactly encoded by geodesic convexity of entropy.
\index{Ricci curvature}
\index{Riemannian manifold}
\index{convexity!geodesic}

\begin{thm}[Ricci curvature and entropy convexity]\label{thm-ricci-entropy-convexity}
\index{Ricci curvature}
\index{entropy!relative}
\index{convexity!geodesic}
	Let $(M,g)$ be a smooth compact connected Riemannian manifold without boundary, and let $\mathfrak m=\mathrm{vol}_g$. For $\lambda\in\RR$, the lower Ricci bound $\mathrm{Ric}_g\geq\lambda g$ holds if and only if $\mathrm{Ent}_{\mathfrak m}$ is $\lambda$-geodesically convex on $(\Pp_2(M),\Wass_2)$.
\end{thm}

This equivalence was developed in the smooth Riemannian setting by Cordero-Erausquin, McCann and Schmuckenschl{\"a}ger and by von Renesse and Sturm~\cite{cordero2001riemannian,vonrenesse2005transport}; it is a central theme of the optimal-transport approach to curvature in Villani's monograph~\cite{Villani09}. Lott--Villani and Sturm then used the same entropy-convexity principle to define synthetic lower Ricci curvature bounds on metric-measure spaces~\cite{lott2009ricci,sturm2006geometry1,sturm2006geometry2}. Outside this convex, curvature-controlled regime, such as in the mean-field neural-network example below, the flow may still be informative but its convergence analysis requires problem-specific arguments.
\index{metric-measure space}
\index{synthetic Ricci curvature}
\index{mean-field!neural network}

\section{Training Two-Layer MLPs as Wasserstein Flows}

Mean-field limits recast the training of wide neural networks as transport of a distribution of neurons. This section shows how the particle ODE of gradient descent becomes a Wasserstein flow in parameter space.
\index{particle ODE}
\index{neuron}
\index{mean-field!limit}

We use $z\in\RR^d$ for the input data and $y\in\RR^{d'}$ for the label. A neuron is a particle
\[
	x=(u,v)\in\RR^d\times\RR^{d'},
\]
where $u$ is the inner weight and $v$ is the outer vector weight. For a scalar nonlinearity $\sigma$, define the vector-valued feature
\[
	\psi(x,z)=v\,\sigma(\dotp{u}{z})\in\RR^{d'}.
\]
The width-$n$ network and its mean-field version are
\[
	G_X(z)=\frac1n\sum_{i=1}^n\psi(x_i,z),
	\qquad
	G_\alpha(z)=\int\psi(x,z)\d\alpha(x),
	\qquad
	\alpha=\frac1n\sum_i\delta_{x_i}.
\]
This formulation removes the artificial ordering of neurons and allows $\alpha$ to be a continuous distribution of infinitely many neurons.
\index{neuron}

Let $\rho$ be a probability distribution on data-label pairs $(z,y)\in\RR^d\times\RR^{d'}$. The population risk is
\[
	f(\alpha)=\int \ell(G_\alpha(z),y)\d\rho(z,y),
\]
and the empirical risk is the special case $\rho=\rho_N\eqdef N^{-1}\sum_{k=1}^N\delta_{(z_k,y_k)}$. Since $\alpha\mapsto G_\alpha$ is linear, $f$ is convex as a function of $\alpha$ whenever $\ell(\cdot,y)$ is convex. For the empirical neuron law $\alpha_X=n^{-1}\sum_i\delta_{x_i}$, the Wasserstein metric induces on particles the rescaled metric $n^{-1}\sum_i\norm{\dot x_i}^2$. The corresponding particle flow is
\index{neuron law}
\[
	\dot x_i=-n\nabla_{x_i}F(X),
	\qquad
	F(X)=f\!\left(\frac1n\sum_i\delta_{x_i}\right),
\]
which is the gradient flow of $F(X)=f(\alpha_X)$ for this Wasserstein particle metric, equivalently Euclidean gradient descent with the time scale multiplied by $n$. It gives a particle discretization of~\eqref{eq:wassflow-pde}.
\index{gradient!flow}

Assume that $\ell$ is differentiable in its first variable. The first variation is
\index{first variation}
\begin{equation}\label{eq-mlp-first-variation-general}
	\delta f(\alpha)(x)
	=
	\int
	\dotp{\nabla_1\ell(G_\alpha(z),y)}{\psi(x,z)}
	\d\rho(z,y),
\end{equation}
and the Wasserstein gradient in parameter space is
\index{Wasserstein!gradient}
\[
	\Wgrad f(\alpha)(x)
	=
	\nabla_x\delta f(\alpha)(x)
	=
	\int
	[D_x\psi(x,z)]^\top\nabla_1\ell(G_\alpha(z),y)
	\d\rho(z,y).
\]
For the squared Euclidean loss $\ell(s,y)=\frac12\norm{s-y}^2$, the energy is the sum of a quadratic interaction and a linear potential:
\begin{equation}\label{eq-mlp-square-loss-quadratic-linear}
	f(\alpha)
	=
	\frac12\iint k(x,x')\d\alpha(x)\d\alpha(x')
	+
	\int g(x)\d\alpha(x)
	+\frac12\int\norm{y}^2\d\rho(z,y),
\end{equation}
with
\begin{equation}\label{eq-mlp-kernel-linear-potential}
	k(x,x')=\int\dotp{\psi(x,z)}{\psi(x',z)}\d\rho(z,y),
	\qquad
	g(x)=-\int\dotp{y}{\psi(x,z)}\d\rho(z,y).
\end{equation}
Thus
\[
	\delta f(\alpha)(x)=\int k(x,x')\d\alpha(x')+g(x),
	\qquad
	\Wgrad f(\alpha)(x)=\int\nabla_x k(x,x')\d\alpha(x')+\nabla_xg(x).
\]
These kernels are generally not convex in the particle variable, so the geodesic-convex convergence theory above does not apply directly.

\begin{figure}[ht]
\centering
\begin{tabular}{@{}cc@{}}
\small neuron trajectories & \small directional concentration \\[-.15em]
\index{neuron}
\includegraphics[width=.34\linewidth]{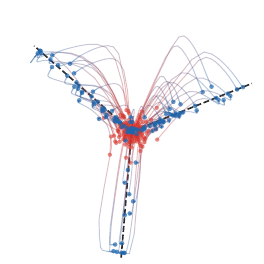} &
\includegraphics[width=.34\linewidth]{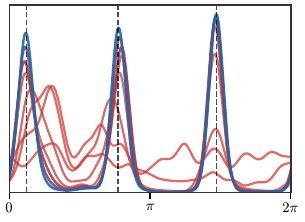}
\end{tabular}
\caption{Mean-field training of a homogeneous two-layer model as transport in neuron space. The left panel shows the Wasserstein particle gradient flow in the reduced homogeneous coordinates $(|u|v_1,|u|v_2)$, with black dashed rays marking the teacher directions. The right panel shows the weighted angular density along a front-loaded sequence of times, colored from red to blue, so that the early concentration of neuron directions is visible. The display follows the rendering of the auxiliary MLP experiment but keeps only the $W_2$ flow, not the spectral-flow comparison.}
\index{neuron}
\index{two-layer neural network}
\index{gradient!flow}
\label{fig:gradflow-mlp-homogeneous-relu}
\end{figure}

\paragraph{Classical convexity and stationarity.}
\index{convexity!classical}
\index{stationarity}
Before using the specific homogeneity mechanism of Chizat and Bach, it is useful to isolate a simpler convex-analytic principle behind many mean-field arguments. Consider an energy of the form
\[
	F(\alpha)
	=
	\frac12\iint k(x,x')\d\alpha(x)\d\alpha(x')
	+
	\int V(x)\d\alpha(x)
	+C,
\]
on probability measures over a parameter domain. Assume that the quadratic part is convex in the classical sense, namely convex for the affine structure of measures:
\index{probability measure}
\[
	Q((1-s)\alpha+s\beta)\leq (1-s)Q(\alpha)+sQ(\beta),
	\qquad
	Q(\alpha)=\frac12\iint k\d\alpha\d\alpha.
\]
This is ordinary convexity of the functional on the convex set of measures, not displacement convexity along $\Wass_2$ geodesics.
\index{convexity!along geodesics}
\index{displacement!convexity}

\begin{prop}[Affine convexity and stationary densities]\label{prop-classical-convex-stationary}
\index{convexity!classical}
\index{stationary density}
	Let $F=Q+\int V\,\d\alpha+C$ be as above, and assume that $Q$ is classically convex. Suppose that a Wasserstein gradient flow for $F$ converges to a measure $\alpha_\infty=\rho_\infty\d x$. Assume also the standard regularity needed to pass to the limit in the first variation, and assume that the support and positivity of $\rho_\infty$ allow the stationary condition to be tested against all admissible zero-mass density perturbations. In the form needed here, assume that this stationarity yields, for every competitor $\beta$, the variational inequality
\index{stationary condition}
\index{first variation}
\index{stationarity}
\index{Wasserstein!gradient}
\index{gradient!flow}
\index{Wasserstein!gradient flow}
	\[
		\int \delta F(\alpha_\infty)(x)\d(\beta-\alpha_\infty)(x)\geq0.
	\]
	Then $\alpha_\infty$ is a global minimizer of $F$.
\index{global!minimizer}
\end{prop}
\begin{proof}[Proof sketch]
	The dissipation identity for the gradient flow gives stationarity of the limit: formally, after passing to the limit,
\index{gradient!flow}
	\[
		\int \norm{\nabla \delta F(\alpha_\infty)}^2\d\alpha_\infty=0.
	\]
	Without such a support and positivity assumption, this identity only controls the first variation on the region explored by the limit. The density hypothesis allows one to test against sufficiently many signed density perturbations of total mass zero. By approximation and the assumed regularity, this yields the displayed first-order variational inequality for arbitrary competitors $\beta$. Classical convexity of $F$ in the affine variable $\alpha$ then gives the usual subgradient inequality
\index{subgradient inequality}
\index{first variation}
\index{convexity!classical}
	\[
		F(\beta)\geq F(\alpha_\infty)
		+
		\int \delta F(\alpha_\infty)\d(\beta-\alpha_\infty)
		\geq F(\alpha_\infty).
	\]
	Thus no competitor has smaller energy. For square-loss two-layer mean-field models, \eqref{eq-mlp-square-loss-quadratic-linear} is exactly of this quadratic-plus-linear form, and positive semidefiniteness of the induced kernel $k$ is the classical convexity assumption.
\index{linear form}
\index{two-layer neural network}
\index{convexity!classical}
\end{proof}

The mean-field description of two-layer training was developed in several works, including~\cite{ChizatBach2018OverparameterizedOT,MeiMontanariNguyen2018MeanFieldNN}. The distinctive contribution of Chizat and Bach is a global-convergence analysis for positively homogeneous networks without adding an explicit regularizer or relying on noisy SGD to create a Laplacian term. The following formal statement isolates the core mechanism and ignores the technical issues due to ReLU non-smoothness, support propagation and compactness.
\index{two-layer neural network}

\begin{prop}[Formal global optimality for two-homogeneous mean-field flows]\label{prop-formal-chizat-bach}
\index{global!optimality}
	Assume that the feature is positively two-homogeneous in the neuron variable,
\index{neuron}
	\[
		\psi(\lambda x,z)=\lambda^2\psi(x,z)
		\qquad(\lambda>0),
	\]
	and that $f(\alpha)=J(G_\alpha)$ with $J$ convex and differentiable as a functional of the predictor. Let $\alpha$ be a smooth stationary point of the Wasserstein flow, so that $\nabla_x\delta f(\alpha)(x)=0$ on $\supp(\alpha)$. Assume also full directional support: for every nonzero direction $\omega$, the support of $\alpha$ intersects the ray $\{\lambda\omega:\lambda>0\}$. Then $\alpha$ is a global minimizer of $f$ over the mean-field model class.
\index{stationarity}
\index{global!minimizer}
\end{prop}
\begin{proof}
	Write
	\[
		h_\alpha(x)=\delta f(\alpha)(x)
		=
		\left\langle \nabla J(G_\alpha),\psi(x,\cdot)\right\rangle_\rho .
	\]
	By two-homogeneity of $\psi$, one has $h_\alpha(\lambda x)=\lambda^2h_\alpha(x)$. Normalize a nonzero direction $\omega$ and choose $r_\omega>0$ with $r_\omega\omega\in\supp(\alpha)$. Stationarity gives a zero radial derivative at this point:
	\[
		0=\frac{\d}{\d r}h_\alpha(r\omega)\bigg|_{r=r_\omega}
		=2r_\omega h_\alpha(\omega).
	\]
	Hence $h_\alpha(\omega)=0$ for every direction $\omega$, and by homogeneity $h_\alpha(x)=0$ for every $x$.

	For any competitor $\beta$, convexity of $J$ gives
	\[
		f(\beta)-f(\alpha)
		\geq
		\int h_\alpha(x)\d(\beta-\alpha)(x)=0.
	\]
	Thus no competitor has smaller risk. The rigorous theorem replaces the full directional support assumption by propagation and overparameterization hypotheses ensuring that a negative descent direction would be present in the support and would contradict stationarity.
\index{stationarity}
\end{proof}


\chapter{Generative Models via Transportation}
\index{generative model}
\label{sec-generative-models-transportation}

The preceding gradient-flow calculus is variational. Modern machine-learning models often use the same transportation language more broadly: one may prescribe an interpolation and regress its velocity, fit a one-step generator to a descent field, or view network depth as a continuous transport of token measures. The examples below separate what is genuinely a Wasserstein gradient flow from what is a transportation dynamics with a useful geometric interpretation.
\index{Wasserstein!gradient}
\index{gradient!flow}
\index{Wasserstein!gradient flow}
\index{token measure}

\section{Generative Models via Flow Matching}
\index{flow!matching}
\index{generative model}

Flow matching constructs a generative map by learning the velocity field of an interpolation. The key computational insight is that a constrained continuity-equation problem can be trained by an unconstrained regression.
\index{velocity field}
\index{continuity equation}
\index{flow!matching}

Generative models aim to build a transportation map $T$ between a reference distribution $\alpha$ (typically an isotropic Gaussian) and the target data distribution $\beta$. Since such reference measures are non-atomic, a measurable map with $T_\sharp\alpha=\beta$ exists on standard Borel spaces, for instance by identifying both probability spaces with the unit interval and using a quantile-type rearrangement. This abstract existence statement is much weaker than having an explicit and numerically stable construction of $T$.
\index{reference!distribution}
\index{reference!measure}
\index{generative model}
Optimal transport is one approach to achieving this, but it is computationally expensive and raises questions about how to estimate it from samples. A different route is to prescribe an interpolation between noise and data, learn its velocity, and obtain $T$ by integrating a time-dependent vector field $v_t$. This point of view sits at the meeting point of two literatures, surveyed from a transport perspective in~\cite{Peyre2026OptimalDiffusionTransports}. The diffusion branch builds on score matching~\cite{Hyvarinen2005ScoreMatching}, denoising score matching~\cite{Vincent2011DenoisingScoreMatching}, nonequilibrium noising chains~\cite{SohlDickstein2015DeepUnsupervised}, denoising diffusion probabilistic models~\cite{Ho2020DDPM}, score-based generative modeling~\cite{Song2019ScoreMatchingGenerative}, and the continuous-time score-SDE/probability-flow formulation~\cite{Song2021ScoreSDE}. The deterministic regression branch was introduced, essentially in parallel, under three closely related names: flow matching~\cite{Lipman2022FlowMatching}, rectified flow~\cite{Liu2023RectifiedFlow}, and stochastic interpolants~\cite{Albergo2025StochasticInterpolants}. In all three cases, the computational object is a velocity field whose regression loss avoids simulating the learned ODE during training.
\index{denoising diffusion probabilistic model}
\index{score-SDE}
\index{probability-flow ODE}
\index{score-based generative modeling}
\index{velocity field}
\index{integral probability metric}
\index{flow!matching}
\index{time-dependent vector field}
\index{denoising score matching}
\index{stochastic!interpolant}
\index{regression loss}
\index{flow!rectified}
\index{flow!matching}
This vector field $v_t$ is obtained by constructing an interpolation $\alpha_t$ and then finding $v_t$ using the least-squares formula~\eqref{eq:least-square-field-explicit}. As we will explain, for a specific class of interpolation (obtained by a parametric push-forward), this $v_t$ can be obtained by avoiding explicitly inverting a Laplacian and instead computing a simple conditional expectation. This conditional expectation can itself be estimated by solving another least-squares problem, but this time unconstrained, making the estimation feasible from finite samples of $\alpha$ and $\beta$.
\index{conditional!expectation}
\index{push-forward}

\paragraph{Stochastic interpolant.}
\index{stochastic!interpolant}

We assume that $\alpha_t$ is defined via a ``projection'' (in a loose sense) of a latent distribution $\pi \in \Pp(\RR^{d'})$, using an operator $P_t : \RR^{d'} \to \RR^d$ where $d' \gg d$, i.e.
\begin{equation}\label{eq:interp-coupling}
	\forall t \in [0,1], \quad \alpha_t := (P_t)_\sharp \pi.
\end{equation}
The basic two-endpoint construction already covers most flow-matching paths used in practice.

\begin{example}[Linear two-endpoint stochastic interpolants]
	Set $d'=2d$, write $(x,y)\in\RR^d\times\RR^d$, and choose $P_0(x,y)=x$ and $P_1(x,y)=y$. If $\pi$ has marginals $(\alpha_0,\alpha_1)$, then $\alpha_t=(P_t)_\sharp\pi$ interpolates between the two endpoint laws. The simplest choices are the independent coupling $\pi=\alpha_0\otimes\alpha_1$ and the straight path
	\[
		P_t(x,y)=(1-t)x+ty.
	\]
	With this linear path and an arbitrary coupling $\pi$, the regression below is the common core of flow matching and rectified flow: Lipman et al. emphasize conditional probability paths and simulation-free training of continuous normalizing flows, while rectified flow emphasizes straight couplings, reflow, and the possibility of reducing transport costs and discretization error~\cite{Lipman2022FlowMatching,Liu2023RectifiedFlow}.

	More complex constructions are possible when sampling from $\pi$ remains simple; stochastic interpolants add latent variables or noise, connecting deterministic flows, probability-flow ODEs and diffusion SDEs~\cite{Albergo2025StochasticInterpolants}.
\end{example}
\index{continuous normalizing flow}
\index{reflow}
\index{discretization error}
\index{latent variable}
\index{conditional probability}
\index{probability-flow ODE}
\index{normalizing flow}
\index{probability path}
\index{integral probability metric}
\index{flow!matching}
\index{flow!rectified}
\index{flow!probability ODE}
\index{probabilistic coupling}
\index{stochastic!interpolant}
\index{probability path}
\index{trivial coupling}
\index{flow!rectified}
\index{flow!matching}

If $\pi = \alpha \otimes \beta$ and $\alpha = \frac{1}{n} \sum_i \delta_{x_i}$, $\beta = \frac{1}{m} \sum_j \delta_{y_j}$, then $\alpha_t$ consists of $n \times m$ Dirac masses
\index{Dirac mass}
\begin{equation*}
    \alpha_t = \frac{1}{nm} \sum_{i,j} \delta_{P_t(x_i,y_j)}.
\end{equation*}
If $\pi = (\Id, T)_\sharp \alpha$ is a Brenier-type coupling, then $\alpha_t = ((1-t)\Id + tT)_\sharp \alpha$ is the so-called McCann OT interpolation.

\begin{figure}[H]
\centering
\begin{tabular}{@{}ccc@{}}
\small product pairing & \small OT pairing & \small curved bridge \\[-.15em]
\includegraphics[width=.30\linewidth]{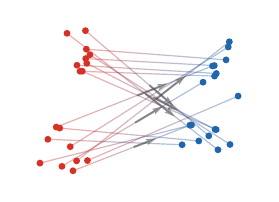} &
\index{flow!matching}
\includegraphics[width=.30\linewidth]{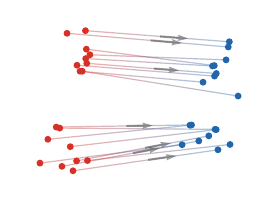} &
\includegraphics[width=.30\linewidth]{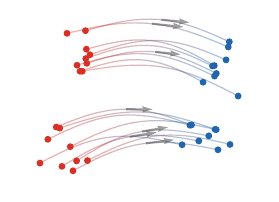}
\end{tabular}
\caption{Flow matching interpolants between the same empirical source and target measures. A product-style random pairing produces crossing paths, an OT pairing gives direct displacement rays, and a curved bridge changes the path geometry while keeping the same endpoints. Gray arrows mark representative midpoint velocities $\partial_tP_t$.}
\index{flow!matching}
\label{fig:generative-flow-matching-interpolants}
\end{figure}

\paragraph{Flow matching formula.}
\index{flow!matching}

This interpolation is not directly useful for sampling from $\beta$, but it can be used to define a flow field $v_t$ so that the Eulerian advection equation~\eqref{eq:eulerian-advection} holds.
This flow field is computed by solving an unconstrained least-squares problem, or equivalently, it is a conditional expectation.
\index{conditional!expectation}

\begin{prop}[Flow matching vector field]\label{prop-flow-matching-vector-field}
\index{flow!matching}
	For each fixed $t$, assume $\partial_tP_t\in L^2(\pi;\RR^d)$. The solution of the flow-matching problem over measurable fields $v_t:\RR^d\to\RR^d$
\index{assignment problem}
\index{flow!matching}
	\begin{equation}
		\min_{v_t} \int_{\RR^{d'}} \norm{v_t(P_t(u)) - [\partial_t P_t](u)}^2 \, \d\pi(u). \label{eq:flow-matching}
	\end{equation}
	Equivalently, the minimizer is characterized $\alpha_t$-almost everywhere by the conditional expectation
\index{conditional!expectation}
	\begin{equation}\label{eq:flow-match-conditional}
		v_t(z) = \EE_{u \sim \pi} \big( [\partial_t P_t](u) \, \big| \, z = P_t(u) \big).
	\end{equation}
	Then the pair $(\alpha_t,v_t)$ satisfies the continuity equation~\eqref{eq:eulerian-advection}.
\index{continuity equation}
\end{prop}

\begin{proof}
	We first recall the two equivalent ways of writing the interpolated measure. Formally, one may write
	\[
		\alpha_t(z)=\int_{\RR^{d'}}\delta(z-P_t(u))\,\d\pi(u),
	\]
	while the rigorous meaning is that, for every smooth test function $\varphi$,
	\begin{equation}\label{eq:flow-matching-pushforward-test}
\index{push-forward}
		\int_{\RR^d}\varphi(z)\,\d\alpha_t(z)
		=
		\int_{\RR^{d'}}\varphi(P_t(u))\,\d\pi(u).
	\end{equation}
	The minimizer in~\eqref{eq:flow-matching} is the orthogonal projection in $L^2(\pi;\RR^d)$ of the latent velocity $\partial_tP_t(u)$ onto the closed subspace of functions that depend on $u$ only through $P_t(u)$. This projection is the conditional expectation~\eqref{eq:flow-match-conditional}. Formally, this can be read as
\index{conditional!expectation}
\index{flow!matching}
	\[
		v_t(z)=\frac{1}{\alpha_t(z)}
		\int_{\RR^{d'}}\delta(z-P_t(u))[\partial_tP_t](u)\,\d\pi(u),
	\]
	and rigorously it means that, for every smooth test vector field $m$,
	\begin{equation}\label{eq:v_t}
		\int \dotp{m(z)}{v_t(z)} \, \d\alpha_t(z)
		=
		\int \dotp{m(P_t(u))}{[\partial_t P_t](u)} \, \d\pi(u).
	\end{equation}

	We now prove that this field transports the curve $(\alpha_t)_t$. The weak form of
	$\partial_t\alpha_t+\diverg(\alpha_t v_t)=0$ is that, for every smooth scalar test function $\varphi$,
	\begin{equation}\label{eq:flow-matching-weak-target}
		\frac{\d}{\d t}\int\varphi(z)\,\d\alpha_t(z)
		-
		\int\dotp{v_t(z)}{\nabla\varphi(z)}\,\d\alpha_t(z)
		=0.
	\end{equation}
	Using~\eqref{eq:flow-matching-pushforward-test} and differentiating under the integral sign gives
\index{push-forward}
\index{flow!matching}
	\begin{equation}\label{eq:flow-matching-test-derivative}
		\frac{\d}{\d t}\int \varphi(z)\d\alpha_t(z)
		=
		\int \dotp{\nabla\varphi(P_t(u))}{[\partial_t P_t](u)}\d\pi(u).
	\end{equation}
	On the other hand, applying~\eqref{eq:v_t} with $m=\nabla\varphi$ gives
	\begin{equation}\label{eq:flow-matching-velocity-test}
		\int\dotp{v_t(z)}{\nabla\varphi(z)}\,\d\alpha_t(z)
		=
		\int \dotp{\nabla\varphi(P_t(u))}{[\partial_t P_t](u)}\d\pi(u).
	\end{equation}
	Comparing~\eqref{eq:flow-matching-test-derivative} and~\eqref{eq:flow-matching-velocity-test} yields~\eqref{eq:flow-matching-weak-target}, which is the desired continuity equation.
\index{continuity equation}
\index{flow!matching}
\end{proof}

The conditional expectation in~\eqref{eq:flow-match-conditional} has a simple measure-theoretic meaning. Let $\alpha_t=(P_t)_\sharp\pi$ and define the vector-valued measure $m_t$ on $\RR^d$ by
\[
	\int_{\RR^d}\dotp{\psi(z)}{\d m_t(z)}
	\eqdef
	\int_{\RR^{d'}}\dotp{\psi(P_t(u))}{[\partial_tP_t](u)}\d\pi(u)
\]
for every bounded continuous vector field $\psi$. Since $\alpha_t(A)=0$ implies $\pi(P_t^{-1}(A))=0$, one has $m_t\ll\alpha_t$. The Radon--Nikodym decomposition of $m_t$ with respect to $\alpha_t$ is therefore
\[
	\d m_t(z)=v_t(z)\d\alpha_t(z),
	\qquad
	v_t=\frac{\d m_t}{\d\alpha_t}.
\]
In the language of Lebesgue decomposition, the flux measure $m_t$ has only an absolutely continuous part with respect to $\alpha_t$ and no singular part; the conditional expectation is precisely this density.
Equivalently, disintegrating $\pi$ with respect to the map $P_t$ gives $\pi(\d u)=\pi_{t,z}(\d u)\alpha_t(\d z)$, where $\pi_{t,z}$ is supported on the fiber $\{u\,:\,P_t(u)=z\}$, and
\[
	v_t(z)=\int_{\{P_t(u)=z\}}[\partial_tP_t](u)\d\pi_{t,z}(u).
\]
Thus the solution of \eqref{eq:flow-matching} is the conditional expectation of the velocities $\partial_t P_t$: intuitively, $v_t(z)$ is the average velocity of all trajectories passing through $z$.
\index{conditional!expectation}
\index{Radon-Nikodym derivative}
\index{disintegration}
Numerically, $(x,t) \to v_t(x)$ can be parameterized by a neural network (e.g., a U-Net for vision tasks) and estimated using stochastic gradient descent on the objective in \eqref{eq:flow-matching}.
\index{stochastic!gradient}
\index{flow!matching}

\begin{alg}[Flow matching regression and sampling]\label{alg:flow-matching-regression}
\index{flow!matching}
\textbf{Input:} Interpolant $P_t(u)$, training source $u\sim\pi$, parametrized field $v_\theta(t,z)$, training steps $N$.

\textbf{Output:} Learned sampler $X_0\mapsto X_1$.

\textbf{Training:}

\textbf{For} $q=1,\ldots,N$ \textbf{do}:
\begin{algblock}
\textbf{Draw} $t_q\sim\mathrm{Unif}(0,1)$ and $u_q\sim\pi$.

\textbf{Set} $z_q=P_{t_q}(u_q)$ and $w_q=\partial_tP_t(u_q)|_{t=t_q}$.

\textbf{Update} $\theta$ by one stochastic-gradient step on \(\norm{v_\theta(t_q,z_q)-w_q}^2.\)

\end{algblock}

\textbf{Sampling:}

\textbf{Draw} $X_0\sim\alpha_0$.

\textbf{Integrate}
\(\dot X_t=v_\theta(t,X_t), \qquad t\in[0,1].\)
\textbf{Return} $X_1$.
\end{alg}

For the exact field $v_t$, integrating the ODE $\dot{x}=v_t(x)$ defines a transport map $T_t$. If $v_t$ is regular enough, or more generally if the continuity equation has a unique solution for this velocity, then $(T_t)_\sharp\alpha_0=\alpha_t$. Thus the same interpolation as~\eqref{eq:interp-coupling} is represented by a deterministic flow rather than by the original coupling.
\index{transport map}
\index{continuity equation}
The sampling procedure consists in first drawing $X_0 \sim \alpha$, and then integrating the ODE $\dot{X}_t = v_t(X_t)$ starting with $X_{t=0} = X_0$.
In the ideal exact-field limit, the resulting $X_{t=1}$ is distributed according to $\alpha_1 = \beta$.

\paragraph{Connection with diffusion models.}
\index{diffusion model}

In the special case where $P_t(x,y)=(1-t)x+ty$ is a linear interpolation and $\pi = \alpha \otimes \beta$, the curve $\alpha_t$ is a convolution of rescaled versions of $\alpha_0$ and $\alpha_1$. The flow-matching problem~\eqref{eq:flow-matching} becomes
\index{assignment problem}
\index{flow!matching}
\[
    \min_{(v_t)_t} \int_{\RR^{d} \times \RR^d} \norm{v_t( (1-t)x+t y ) - (y-x) }^2 \, \d\alpha_0(x) \d\alpha_1(y).
\]
When one endpoint is an isotropic Gaussian, this construction is closely related to the probability-flow formulation of diffusion models, up to the usual change of time parametrization~\cite{Song2021ScoreSDE}. This is why flow matching can be viewed both as a deterministic alternative to diffusion training and as a common language for diffusion paths, OT-inspired paths, and rectified paths~\cite{Lipman2022FlowMatching,Liu2023RectifiedFlow,Albergo2025StochasticInterpolants}. The next two propositions are written in the noising direction, from a data law $\alpha$ to a Gaussian; reversing time gives the corresponding sampling flow. They also give an explicit closed form for $v_t$ and show that it is a gradient field.
\index{probability-flow ODE}
\index{integral probability metric}
\index{diffusion model}
\index{flow!matching}
In this setting, $v_t$ is also the solution of the constrained least-squares problem~\eqref{eq:least-square-field-explicit}. The regression~\eqref{eq:flow-matching} is computationally simpler because the continuity equation has already been enforced by the chosen interpolant.
\index{continuity equation}
To prove this, we rely on Tweedie's formula, which expresses the optimal Gaussian denoiser through the score, i.e. the gradient of the log-density.
\index{Gaussian denoiser}
\index{score function}
\index{Tweedie identity}

\begin{proposition}[Tweedie identity]\label{prop:Tweedie}
Let $W$ be a random vector in $\RR^{d}$ with density $\beta$.
For $\sigma>0$, observe
\[
Z \;=\; W + \sigma\,\varepsilon,
\quad\text{where } \varepsilon \sim \Gaussian(0,I_{d})
\text{ is independent of } W .
\]
Denote by
\[
	\beta_\sigma \;=\; \beta * \Gaussian\bigl(0,\sigma^{2}I_{d}\bigr)
\]
the density of $Z$.
Then
\[
\EE\bigl[\,W \mid Z=z\bigr]
      \;=\; z \;+\;\sigma^{2}\,\nabla \log \beta_\sigma(z)
\qquad\text{for all } z \in \RR^{d}.
\]
\end{proposition}

\begin{proof}
Bayes' rule gives the conditional density
$
p_{W|Z}(w\mid z)
= \dfrac{\beta(w)\,\varphi_\sigma(z-w)}{\beta_\sigma(z)}
$
with $\varphi_\sigma$ the $\Gaussian(0,\sigma^{2}I_{d})$ density.
Hence
\[
\EE[W\mid Z=z]
= \frac{1}{\beta_\sigma(z)}
      \int_{\RR^{d}} w\,
             \beta(w)\,\varphi_\sigma(z-w)\,\d w .
\]
Differentiating the Gaussian convolution under the integral sign and using
$
\nabla_z\varphi_\sigma(z-w)
     = -\sigma^{-2}(z-w)\,\varphi_\sigma(z-w)
$
yields
\[
\nabla_z\beta_\sigma(z)
= \int \beta(w)\,\nabla_z\varphi_\sigma(z-w)\,\d w
= -\sigma^{-2}\Bigl(z-\EE[W\mid Z=z]\Bigr)\,\beta_\sigma(z).
\]
Rearranging finishes the proof.
\end{proof}

\begin{proposition}[Gaussian-endpoint flow-matching field]\label{prop:flow}
\index{flow!matching}
Let $X\sim\alpha$ and $Y\sim\Gaussian(0,I_{d})$ be independent.
For $t\in(0,1)$ set
\[
Z_t \;=\; (1-t)\,X + t\,Y,
\qquad
\alpha_t =\text{Law}(Z_t).
\]
The regression minimizer $v^\star:\RR^d\times(0,1)\to\RR^d$ of
\[
\min_{v}\;\int_{0}^{1}\!
         \iint_{\RR^{d}\times\RR^{d}}
              \bigl|y-x-v\bigl((1-t)x+t y,t\bigr)\bigr|^{2}\,
              \d\alpha(x)\,\d\Gaussian(y)\,\d t
\]
is
\[
v^\star(x,t)
= -\frac{1}{1-t}\,x \;-\; \frac{t}{1-t}\,\nabla\log\alpha_t(x)
\qquad (x\in\RR^{d},\;t\in(0,1)).
\]
In particular, for each $t\in(0,1)$ this field is a gradient field,
\[
	v^\star(\cdot,t)=-\nabla
	\left(
		\frac{\norm{\cdot}^2}{2(1-t)}
		+\frac{t}{1-t}\log\alpha_t
	\right).
\]
\end{proposition}

\begin{proof}
Fix $t\in(0,1)$ and write $W=(1-t)X$, $\sigma=t$, so that
$Z_t = W + \sigma\,Y$ matches the setting of Proposition~\ref{prop:Tweedie}.
\index{Tweedie identity}
Conditional expectations satisfy
\index{conditional!expectation}
$
v^\star(z,t)
= \EE[Y-X\mid Z_t=z]
= \frac{1}{t}\,\EE[Z_t-W\mid Z_t=z]
  -\,\frac{1}{1-t}\,\EE[W\mid Z_t=z].
$
Applying Proposition~\ref{prop:Tweedie} to $\EE[W\mid Z_t=z]$ and
\index{Tweedie identity}
noting $\EE[Y\mid Z_t=z]
      = -\,t\,\nabla\log\alpha_t(z)$
gives the claimed formula.
\end{proof}

\begin{figure}[ht]
\centering
\begin{tabular}{@{}cc@{}}
\small forward noising & \small reverse probability flow \\[-.15em]
\includegraphics[width=.43\linewidth]{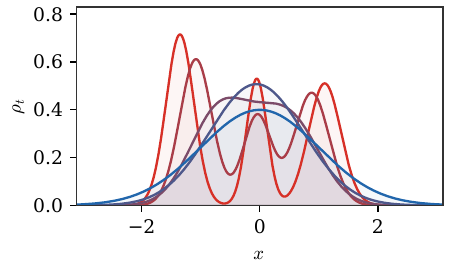} &
\includegraphics[width=.43\linewidth]{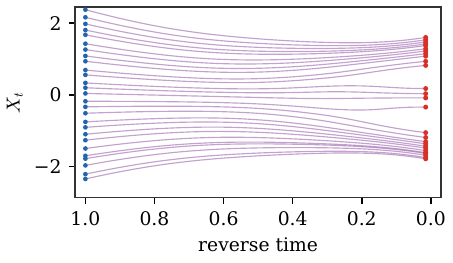}
\end{tabular}
\caption{One-dimensional diffusion bridge for a Gaussian-mixture data law. The forward path $Z_t=(1-t)X+tY$ smooths the red data density toward a blue Gaussian endpoint. Reversing the probability-flow ODE transports a denser set of blue noise samples back toward the data modes, making the splitting of trajectories across mixture components visible.}
\index{Gaussian mixture}
\index{probability-flow ODE}
\index{flow!probability ODE}
\label{fig:generative-diffusion-1d-forward-backward}
\end{figure}

The same probability-flow intuition is visible in two dimensions. For a discrete data law, or more generally for a Gaussian mixture, the noising density is a Gaussian mixture whose score can be evaluated explicitly. This makes it possible to draw backward trajectories without training a neural network. In the plots below, the Gaussian endpoint has covariance $\sigma^2\Id$ to keep the geometry visible at the scale of the three atoms. For a scalar noising schedule $Z_t=a_tX+b_tY$, the intermediate law has component centers $a_t c_j$ and covariance $(b_t\sigma)^2\Id$. For the linear bridge, $p_t(z)=\sum_j w_j\Gaussian((1-t)c_j,(t\sigma)^2\Id)$, with $s_t=\nabla\log p_t$, and the scaled version of Proposition~\ref{prop:flow} gives $v_t(z)=-(z+t\sigma^2s_t(z))/(1-t)$.
\index{noising schedule}
\index{Gaussian mixture}
\index{score function}

\begin{alg}[Exact probability-flow sampling for a Gaussian mixture]\label{alg:gaussian-mixture-probability-flow-sampling}
\index{probability-flow ODE}
\index{Gaussian mixture}
\textbf{Input:} Gaussian-mixture data law, schedule $(a_t,b_t)$, noise level $\sigma$, number of samples $R$.

\textbf{Output:} Backward samples $(Z_0^{(r)})_r$.

\textbf{Define} the noising variable:
\(Z_t=a_tX+b_tY, \qquad Y\sim\Gaussian(0,\sigma^2\Id).\)

\textbf{Compute} closed-form mixture density $p_t$ and score $s_t=\nabla\log p_t$.

\textbf{Set} probability-flow velocity:
\(v_t(z)=\frac{a'_t}{a_t}z+ \left(\frac{a'_tb_t^2}{a_t}-b'_tb_t\right)\sigma^2s_t(z).\)

\textbf{For} $r=1,\ldots,R$ \textbf{do}:
\begin{algblock}

\textbf{Draw} $Z_1^{(r)}$ from the Gaussian endpoint.

\textbf{Integrate} $\dot Z_t^{(r)}=v_t(Z_t^{(r)})$ backward from $t=1$ to $t=0$.

\end{algblock}
\algreturnskip
\textbf{Return} $(Z_0^{(r)})_r$.
\end{alg}

\begin{figure}[ht]
\centering
\begin{tabular}{@{}c@{}}
\small linear bridge, $a_t=1-t$, $b_t=t$ \\[-.15em]
\includegraphics[width=.72\linewidth]{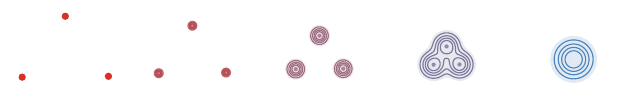} \\[.15em]
\small OU/variance-preserving bridge, $a_t=\cos(\pi t/2)$, $b_t=\sin(\pi t/2)$ \\[-.15em]
\index{bridge!OU}
\index{bridge!variance-preserving}
\includegraphics[width=.72\linewidth]{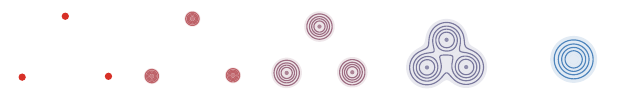}
\end{tabular}
\caption{Two-dimensional noising paths from three Dirac masses to a single Gaussian. The top row shows the linear interpolation $Z_t=(1-t)X+tY$, whose component centers move linearly toward the origin and whose covariance grows like $(t\sigma)^2\Id$. The bottom row uses the variance-preserving Ornstein--Uhlenbeck coefficients $a_\tau=e^{-\tau}$ and $b_\tau=\sqrt{1-e^{-2\tau}}$, reparametrized by $\tau=-\log\cos(\pi t/2)$ so that $a_t=\cos(\pi t/2)$ and $b_t=\sin(\pi t/2)$. It has the same endpoints but a different speed of contraction and noising.}
\index{Ornstein-Uhlenbeck process}
\index{bridge!variance-preserving}
\label{fig:generative-diffusion-2d-forward-backward}
\end{figure}

\paragraph{When is the induced map optimal?}

Integrating the learned velocity gives a deterministic map from $\alpha_0$ to $\alpha_1$, but this map is not automatically the Brenier optimal map. It is optimal only in special cases where the accumulated flow remains the gradient of a convex potential. The Gaussian product-coupling case already shows the precise obstruction: the interpolated covariances are simple, the velocity is affine, but the terminal map can contain a hidden rotational part. This phenomenon, and its extensions to rectified flows and mixtures, is analyzed in depth in~\cite{HertrichChambolleDelon2025RectifiedOT}.
\index{convex!potential}
\index{product!coupling}
\index{flow!rectified}

\begin{prop}[Gaussian flow matching and optimality]\label{prop-gaussian-flow-matching-optimality}
\index{Gaussian!flow matching}
	Let $\Sigma_0,\Sigma_1\succ0$ and let $X_0\sim\Gaussian(0,\Sigma_0)$ and $X_1\sim\Gaussian(0,\Sigma_1)$ be independent. Consider the linear flow-matching interpolation
\index{flow!matching}
	\[
		Z_t=(1-t)X_0+tX_1,
		\qquad
		\alpha_t=\operatorname{Law}(Z_t)=\Gaussian(0,\Sigma_t),
	\]
	where
	\begin{equation}\label{eq-gaussian-product-fm-covariance}
		\Sigma_t=(1-t)^2\Sigma_0+t^2\Sigma_1.
	\end{equation}
	Then the exact flow-matching velocity is affine, $v_t(z)=A_tz$, with
\index{flow!matching}
	\begin{equation}\label{eq-gaussian-product-fm-velocity}
		A_t=\bigl(t\Sigma_1-(1-t)\Sigma_0\bigr)\Sigma_t^{-1}.
	\end{equation}
	The induced flow map $T_t^{\rm FM}$ from $\alpha_0$ to $\alpha_t$ is
\index{flow!map}
	\begin{equation}\label{eq-gaussian-product-fm-map}
		T_t^{\rm FM}
		=
		\Sigma_0^{1/2}
		\Bigl((1-t)^2\Id+t^2\Sigma_0^{-1/2}\Sigma_1\Sigma_0^{-1/2}\Bigr)^{1/2}
		\Sigma_0^{-1/2}.
	\end{equation}
	In particular,
	\begin{equation}\label{eq-gaussian-product-fm-terminal-map}
		T_1^{\rm FM}
		=
		\Sigma_0^{1/2}
		\bigl(\Sigma_0^{-1/2}\Sigma_1\Sigma_0^{-1/2}\bigr)^{1/2}
		\Sigma_0^{-1/2}.
	\end{equation}
	This terminal map coincides with the quadratic optimal transport map
\index{transport map}
	\begin{equation}\label{eq-gaussian-brenier-map-comparison}
		T^{\rm OT}
		=
		\Sigma_0^{-1/2}
		\bigl(\Sigma_0^{1/2}\Sigma_1\Sigma_0^{1/2}\bigr)^{1/2}
		\Sigma_0^{-1/2}
	\end{equation}
	if and only if $\Sigma_0\Sigma_1=\Sigma_1\Sigma_0$.
\end{prop}

\begin{proof}
	The conditional-expectation formula gives
\index{conditional!expectation}
	\[
		v_t(z)=\EE[X_1-X_0\mid Z_t=z].
	\]
	Since all variables are jointly Gaussian, this conditional expectation is linear and
\index{conditional!expectation}
	\[
		v_t(z)
		=
		\operatorname{Cov}(X_1-X_0,Z_t)\operatorname{Cov}(Z_t)^{-1}z
		=
		\bigl(t\Sigma_1-(1-t)\Sigma_0\bigr)\Sigma_t^{-1}z,
	\]
	which proves~\eqref{eq-gaussian-product-fm-velocity}. To solve the characteristic equation, whiten the source by setting
	\[
		C=\Sigma_0^{-1/2}\Sigma_1\Sigma_0^{-1/2},
		\qquad
		\widetilde Z_t=\Sigma_0^{-1/2}Z_t.
	\]
	In these coordinates the source covariance is $\Id$ and
	\[
		\widetilde\Sigma_t=(1-t)^2\Id+t^2C.
	\]
	Because $\Id$ and $C$ commute, the affine flow map in whitened coordinates is simply $\widetilde T_t=\widetilde\Sigma_t^{1/2}$. Indeed,
\index{flow!map}
	\[
		\frac{\d}{\d t}\widetilde\Sigma_t^{1/2}
		=
		\bigl(tC-(1-t)\Id\bigr)\widetilde\Sigma_t^{-1/2},
	\]
	which is exactly the equation $\dot{\widetilde T}_t=\widetilde A_t\widetilde T_t$ with $\widetilde T_0=\Id$. Returning to the original coordinates gives~\eqref{eq-gaussian-product-fm-map}, and $t=1$ gives~\eqref{eq-gaussian-product-fm-terminal-map}.

	Both $T_1^{\rm FM}$ and $T^{\rm OT}$ push $\Gaussian(0,\Sigma_0)$ to $\Gaussian(0,\Sigma_1)$. The Brenier map between nondegenerate Gaussians is the unique symmetric positive definite linear map with this property. Hence $T_1^{\rm FM}=T^{\rm OT}$ if and only if $T_1^{\rm FM}$ is symmetric positive definite. The map $T_1^{\rm FM}$ is similar to $C^{1/2}$, so if it is symmetric then it is automatically positive definite. It remains to characterize symmetry. Since $C^{1/2}$ is symmetric positive definite,
\index{Brenier!map}
	\[
		(T_1^{\rm FM})^\top
		=
		\Sigma_0^{-1/2}C^{1/2}\Sigma_0^{1/2}.
	\]
	Thus symmetry of $T_1^{\rm FM}$ is equivalent to
	$\Sigma_0 C^{1/2}=C^{1/2}\Sigma_0$, hence to $\Sigma_0 C=C\Sigma_0$ by functional calculus. Multiplying this identity on the left and right by $\Sigma_0^{1/2}$ gives $\Sigma_0\Sigma_1=\Sigma_1\Sigma_0$. Conversely, if $\Sigma_0$ and $\Sigma_1$ commute, they are orthogonally co-diagonalizable, and both~\eqref{eq-gaussian-product-fm-terminal-map} and~\eqref{eq-gaussian-brenier-map-comparison} reduce in that basis to the diagonal map with entries $\sqrt{\lambda_{1,k}/\lambda_{0,k}}$. This proves the equivalence.
\index{Brenier!map}
\end{proof}

The proposition gives a compact warning about a common overinterpretation of flow matching.

\begin{rem}[Changing the bridge speed does not restore optimality]
	The same terminal map~\eqref{eq-gaussian-product-fm-terminal-map} is obtained for any scalar schedule $Z_t=a_tX_0+b_tX_1$ with the same endpoints, because after whitening the covariance path remains $a_t^2\Id+b_t^2C$. Thus changing the speed of a scalar Gaussian bridge, for instance by using an OU schedule, cannot repair the non-optimality created by non-commuting covariances.

	Commuting covariances reduce the terminal map to independent one-dimensional scalings, whereas non-commuting covariances create a non-symmetric affine map, hence a transport with a rotational or shearing component. More generally, mixture-like paths can create the same obstruction even when every instantaneous velocity looks natural. This distinction is closely related to counterexamples showing that flow maps associated with Fokker--Planck or diffusion-type evolutions do not in general provide optimal transport maps~\cite{LavenantSantambrogio2022FlowMap}. In particular, starting from an isotropic Gaussian does not by itself guarantee optimality once the target distribution is non-Gaussian; additional structural assumptions on the path or on the coupling are needed.
\end{rem}
\index{affine map}
\index{flow!map}
\index{rotational component}
\index{shearing component}
\index{Lavenant criterion}
\index{transport map}
\index{Fokker-Planck equation}
\index{flow!matching}
\index{Gaussian!flow matching}
\index{covariance!commuting}
\index{flow!map}

\paragraph{Variations on the interpolant.}

The geometry of the generated trajectories depends on the chosen interpolant, not only on the two endpoint laws. There is first a harmless ambiguity: a monotone reparametrization $Z_t=(1-\lambda(t))X+\lambda(t)Y$ of the linear bridge only changes the speed of the flow,
\[
	v_t(z)=\lambda'(t)\,v^{\rm lin}_{\lambda(t)}(z),
	\qquad
	v^{\rm lin}_{r}(z)=\EE[Y-X\mid (1-r)X+rY=z].
\]
It therefore leaves the spatial integral curves unchanged. Diffusion models use a genuinely different family of noising paths. If
\index{diffusion model}
\[
	Z_t=a_tX+b_tY,\qquad Y\sim\Gaussian(0,\sigma^2\Id),
\]
then both the mixture centers and the component variances are changed. Writing $p_t$ for the density of $Z_t$ and $s_t=\nabla\log p_t$, Tweedie's formula gives, away from times where $a_t=0$,
\index{Tweedie identity}
\[
	v_t(z)=a'_t\,\EE[X\mid Z_t=z]+b'_t\,\EE[Y\mid Z_t=z]
	=\frac{a'_t}{a_t}z+
	\left(\frac{a'_tb_t^2}{a_t}-b'_tb_t\right)\sigma^2s_t(z).
\]
For the linear bridge, $a_t=1-t$ and $b_t=t$, this recovers the formula above. For the variance-preserving Ornstein--Uhlenbeck noising used in diffusion models,
\index{Ornstein-Uhlenbeck process}
\index{bridge!variance-preserving}
\index{diffusion model}
\[
	a_\tau=e^{-\tau},\qquad b_\tau=\sqrt{1-e^{-2\tau}},
\]
one obtains the forward probability-flow velocity $v_\tau(z)=-z-\sigma^2\nabla\log p_\tau(z)$. Sampling follows the reverse field $z+\sigma^2\nabla\log p_\tau(z)$ as $\tau$ decreases. This is the noising law used in the left panel of Figure~\ref{fig:generative-diffusion-versus-ot-2d}; the trajectories are more curved than for the linear bridge because the centers and variances evolve according to the OU/Fokker--Planck scaling rather than by affine interpolation. Numerically, the integration is stopped at a small positive time before the Dirac endpoint, where the score becomes singular.
\index{Fokker-Planck equation}

The finite-time coefficients $a_t=\cos(\pi t/2)$ and $b_t=\sin(\pi t/2)$ are not a new spatial interpolant: they are exactly the OU coefficients after the time change $\tau=-\log\cos(\pi t/2)$. Figure~\ref{fig:generative-diffusion-schedule-comparison} therefore compares OU with a genuinely different scalar bridge,
\[
	a_t=(1-t)(1-2t),
	\qquad
	b_t=t,
\]
whose data coefficient changes sign before vanishing. This overshooting bridge is mainly a diagnostic example: it keeps the same endpoints, but its intermediate mixture reflects through the origin and produces visibly different reverse trajectories.
\index{bridge!overshooting}

\begin{figure}[H]
\centering
\begin{tabular}{@{}cc@{}}
\small diffusion-like trajectories & \small OT displacement rays \\[-.15em]
\includegraphics[width=.39\linewidth]{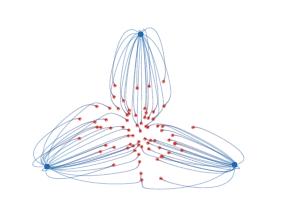} &
\includegraphics[width=.39\linewidth]{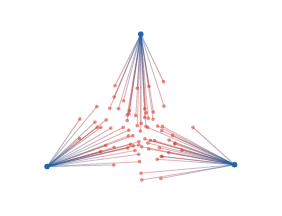}
\end{tabular}
\caption{Diffusion-style sampling trajectories compared with OT rays in the three-Dirac setting of Figure~\ref{fig:generative-diffusion-2d-forward-backward}. Red particles are sampled from the centered Gaussian endpoint and transported toward the three blue atoms. The left panel integrates the reverse probability-flow ODE for the variance-preserving OU noising $a_\tau=e^{-\tau}$, $b_\tau=\sqrt{1-e^{-2\tau}}$, using the closed-form Gaussian-mixture score and stopping just before the singular Dirac endpoint. The right panel uses the straight displacement rays selected by a quadratic OT matching to the same atoms.}
\index{Gaussian mixture}
\index{bridge!variance-preserving}
\index{probability-flow ODE}
\index{score function}
\index{flow!probability ODE}
\label{fig:generative-diffusion-versus-ot-2d}
\end{figure}

\begin{figure}[H]
\centering
\begin{tabular}{@{}ccc@{}}
\small linear bridge & \small OU VP bridge & \small overshooting bridge \\[-.15em]
\index{bridge!overshooting}
\index{bridge!OU}
\includegraphics[width=.30\linewidth]{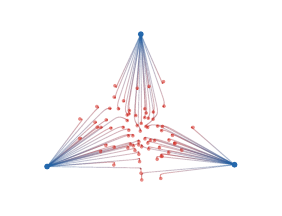} &
\includegraphics[width=.30\linewidth]{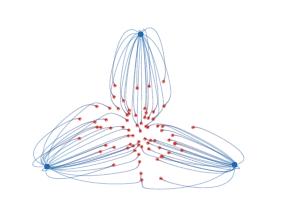} &
\includegraphics[width=.30\linewidth]{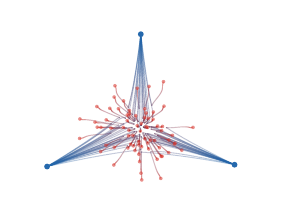}
\end{tabular}
\caption{Effect of the interpolant on the exact reverse flow for the same three-Dirac target and the same Gaussian endpoint. The linear bridge $a_t=1-t$, $b_t=t$ produces almost radial curves. The variance-preserving OU bridge $a_\tau=e^{-\tau}$, $b_\tau=\sqrt{1-e^{-2\tau}}$ changes the relative speed of contraction and noising. The overshooting bridge $a_t=(1-t)(1-2t)$, $b_t=t$ is not a time reparameterization of either one and produces a more pronounced bending of the reverse trajectories.}
\index{reverse flow}
\index{bridge!overshooting}
\index{bridge!variance-preserving}
\index{bridge!OU}
\label{fig:generative-diffusion-schedule-comparison}
\end{figure}


\section{One-Step Generative Models}
\index{one-step!generative model}

One-step generative models try to keep the geometric training principle of flows while removing the expensive multi-step integration at sampling time. The idea is to evolve the model distribution during training, but to store the final evolution in a single generator evaluation.
\index{generative model}

\paragraph{Training a one-step flow.}
\index{one-step!flow}

Let $\zeta$ be a simple latent distribution and let $\alpha_\theta=(G_\theta)_\sharp\zeta$ be the model distribution. Assume that the target data distribution is $\beta$. A Wasserstein-flow construction chooses a discrepancy
\[
	\mathcal E_\beta(\alpha),
\]
for instance a smoothed $\KL(\alpha|\beta)$, an MMD/IPM loss, or the debiased Sinkhorn divergence $\bar\MK_\c^\epsilon(\alpha,\beta)$ introduced in Section~\ref{sec-sinkhorn-div}. The associated formal descent is
\index{maximum mean discrepancy}
\index{integral probability metric}
\index{Sinkhorn!divergence}
\begin{equation}\label{eq-one-step-wgf}
	\partial_t\mu_t+\diverg(\mu_t w_t)=0,
	\qquad
	w_t(x)=-\nabla\delta_\alpha \mathcal E_\beta(\mu_t)(x).
\end{equation}
Instead of integrating~\eqref{eq-one-step-wgf} at inference time, one fits a parametric residual field $U_\eta$ along the current model distribution:
\begin{equation}\label{eq-one-step-l2-fit}
	\min_\eta \int_0^1\!\int
		\norm{U_\eta(t,x)-w_t(x)}^2
		\,\d\mu_t(x)\,\d t.
\end{equation}
In a particle or generator implementation, the learned residual is then used to update the current generator by
\[
	\alpha_{\theta}^{+}
	=
	(\Id+\tau U_\eta)_\sharp \alpha_\theta,
	\qquad\text{or equivalently}\qquad
	G_\theta^{+}(z)=G_\theta(z)+\tau U_\eta(G_\theta(z)).
\]

\begin{alg}[One-step Wasserstein-flow generator update]\label{alg:one-step-wgf-generator-update}
\index{one-step!generative model}
\textbf{Input:} Generator $G_{\theta_k}$, latent law $\zeta$, data law $\beta$, numerical descent-field oracle $W_\beta$, step size $\tau$, batch size $B$.

\textbf{Output:} Updated generator $G_{\theta_{k+1}}$.

\textbf{Draw} $z_b\sim\zeta$ for $b=1,\ldots,B$.

\textbf{Set} $x_b=G_{\theta_k}(z_b)$.

\textbf{Set} \(w_k(x)=W_\beta[\alpha_{\theta_k}](x)\), where \(W_\beta[\alpha]=-\nabla\delta_\alpha\mathcal E_\beta(\alpha)\).

\textbf{Set} $\eta_k$ by minimizing the empirical least-squares loss:
\(\frac1B\sum_{b=1}^B \norm{U_{\eta}(x_b)-w_k(x_b)}^2.\)

\textbf{Update by composition:}
\(G_{\theta_{k+1}}(z) = G_{\theta_k}(z)+\tau U_{\eta_k}(G_{\theta_k}(z)).\)
\textbf{Return} $G_{\theta_{k+1}}$.
\end{alg}

After many training updates, the accumulated generator is evaluated once at test time. This is the organizing principle behind recent one-step methods based on Wasserstein gradient flows: W-Flow uses such a construction with the Sinkhorn divergence as a tractable global discrepancy~\cite{Han2026WFlow}, while drifting methods evolve the generated distribution during training through a fitted vector field and also admit one-step inference~\cite{Deng2026Drifting}. The gradient-flow interpretation of drifting models, and its relation to KL, MMD, sliced-Wasserstein and Sinkhorn-type discrepancies, is analyzed in~\cite{Gretton2026DriftingWGF,He2026SinkhornDrifting}. These ideas are also connected to the Sinkhorn-type normalization dynamics used to model attention in Sinkformers~\cite{Sander2022Sinkformers}.
\index{Wasserstein!gradient}
\index{gradient!flow}
\index{Wasserstein!gradient flow}
\index{Sinkhorn!divergence}
\index{drifting!model}

\paragraph{Self-corrected drifting fields.}
\index{drifting!field}

Drifting methods need not start from an exact Wasserstein gradient. They often prescribe an attraction-minus-repulsion field and then regress this field in $L^2(\mu_t)$. A simple continuous version uses a positive kernel $K_\epsilon(x,y)$ and defines, for any measure $\nu$,
\index{Wasserstein!gradient}
\index{kernel!positive}
\begin{equation}\label{eq-normalized-kernel-drift}
	B_\epsilon[\nu](x)
	\eqdef
	\frac{\int (y-x)K_\epsilon(x,y)\,\d\nu(y)}
	     {\int K_\epsilon(x,y)\,\d\nu(y)}.
\end{equation}
For the Gaussian kernel
\index{kernel!Gaussian}
$K_\epsilon(x,y)=\exp(-\norm{x-y}^2/(2\epsilon))$, this normalized field is a score of a smoothed density:
\index{score function}
\begin{equation}\label{eq-normalized-kernel-score}
	B_\epsilon[\nu](x)
	=
	\epsilon\nabla\log\!\left(\int K_\epsilon(x,y)\,\d\nu(y)\right).
\end{equation}
The drifting velocity is then
\begin{equation}\label{eq-cross-minus-self-drift}
	u_t(x)=B_\epsilon[\beta](x)-B_\epsilon[\mu_t](x)
	=
	\epsilon\nabla\log
	\frac{\int K_\epsilon(x,y)\,\d\beta(y)}
	     {\int K_\epsilon(x,y)\,\d\mu_t(y)}.
\end{equation}
The first term pulls samples toward data, while the second term corrects self-attraction and prevents all particles from collapsing onto the same high-density region. Sinkhorn drifting replaces these one-sided kernel normalizations by two-sided entropic OT couplings, so that the cross and self terms are normalized by Sinkhorn scaling rather than by a single denominator~\cite{He2026SinkhornDrifting}.
\index{Sinkhorn!scaling}
\index{kernel!norm}
\index{entropic!OT}

\begin{alg}[Self-corrected drifting particle update]\label{alg:self-corrected-drifting-particles}
\index{drifting!model}
\textbf{Input:} Particles $x_i^k$ for $\mu_k$, data samples $(y_b)_{b=1}^B$ from $\beta$, kernel scale $\epsilon$, step $h$.

\textbf{Output:} Updated particles $x_i^{k+1}$.

\textbf{For} each particle $i$ \textbf{do}:
\begin{algblock}

\textbf{Set} \(Z_{\beta,i}=\sum_{b=1}^B K_\epsilon(x_i^k,y_b)\) and \(b_i^k=Z_{\beta,i}^{-1}\sum_{b=1}^B (y_b-x_i^k)K_\epsilon(x_i^k,y_b)\).

\textbf{Set} \(Z_{\mu,i}=\sum_{j=1}^n K_\epsilon(x_i^k,x_j^k)\) and \(m_i^k=Z_{\mu,i}^{-1}\sum_{j=1}^n (x_j^k-x_i^k)K_\epsilon(x_i^k,x_j^k)\).

\textbf{Set}
\(u_i^k=b_i^k-m_i^k.\)

\textbf{Update}
\(x_i^{k+1}=x_i^k+h\,u_i^k.\)
\end{algblock}
\algreturnskip
\textbf{Return} $(x_i^{k+1})_i$.
\end{alg}

\begin{figure}[ht]
\centering
\begin{tabular}{@{}cc@{}}
\includegraphics[width=.38\linewidth]{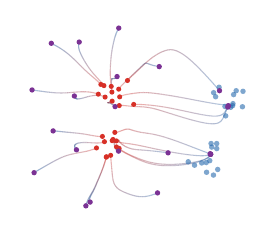} &
\index{drifting!model}
\includegraphics[width=.38\linewidth]{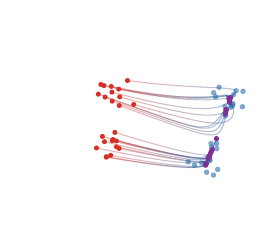}
\\[-.1em]
\small raw kernel drift & \small self-corrected drift
\index{self-corrected field}
\end{tabular}
\caption{Drifting trajectories for a small particle generator. The raw Laplacian-kernel drift has weak long-range attraction and can leave particles away from the data modes. The self-corrected field uses the difference $B_\epsilon[\beta]-B_\epsilon[\mu_t]$, so a longer integration brings particles to the blue modes while repelling them from their own current concentration.}
\label{fig:generative-drifting-model-trajectories}
\end{figure}

\begin{prop}[Drifting as a time-dependent Wasserstein gradient]\label{prop-drifting-semi-relaxed-gradient}
\index{drifting!model}
\index{Wasserstein!gradient}
	Let $\mu_t$ be a smooth curve of positive densities and let $u_t=\nabla\phi_t$ be a smooth time-dependent gradient field. Define the semi-relaxed functional
	\begin{equation}\label{eq-semi-relaxed-drift-functional}
		\mathcal R_t(\alpha|\mu_t)
		\eqdef
		-\int \phi_t(x)\,\d\alpha(x)
		+\int \phi_t(x)\,\d\mu_t(x).
	\end{equation}
	Here $\mu_t$ and $\phi_t$ are frozen when taking the first variation with respect to the first argument $\alpha$. Then the continuity equation
\index{first variation}
\index{continuity equation}
	\[
		\partial_t\mu_t+\diverg(\mu_t u_t)=0
	\]
	is the formal Wasserstein gradient descent of the time-dependent functional $\alpha\mapsto\mathcal R_t(\alpha|\mu_t)$.
\index{Wasserstein!gradient}
\end{prop}

\begin{proof}
	Since $\mu_t$ and $\phi_t$ are fixed in the variation with respect to $\alpha$, the first variation is
\index{first variation}
	\[
		\delta_\alpha \mathcal R_t(\alpha|\mu_t)(x)=-\phi_t(x).
	\]
	By Proposition~\ref{prop-formal-wass-gradient},
	\[
		\Wgrad \mathcal R_t(\alpha|\mu_t)
		=
		\nabla\delta_\alpha \mathcal R_t(\alpha|\mu_t)
		=
		-\nabla\phi_t
		=
		-u_t.
	\]
	The Wasserstein gradient-descent velocity is the negative of this gradient, namely $u_t$. Substituting this velocity in the continuity equation gives the claimed flow.
\index{Wasserstein!gradient}
\index{continuity equation}
\end{proof}

\begin{example}[Kernel drifting as a semi-relaxed divergence]
\index{Gaussian!kernel drifting}
\index{kernel!drifting}
\index{semi-relaxed!divergence}
	For the Gaussian-kernel drift~\eqref{eq-cross-minus-self-drift}, set
	\[
		\phi_t(x)=
		\epsilon\log
		\frac{\int K_\epsilon(x,y)\,\d\beta(y)}
		     {\int K_\epsilon(x,y)\,\d\mu_t(y)}.
	\]
	Then $u_t=\nabla\phi_t$, so Proposition~\ref{prop-drifting-semi-relaxed-gradient} shows that kernel drifting is the Wasserstein gradient descent of
\index{Wasserstein!gradient}
\index{kernel!drifting}
	\[
		\mathcal R_t^{\mathrm{drift}}(\alpha|\mu_t)
		=
		\epsilon
		\int
		\log
		\frac{\int K_\epsilon(x,y)\,\d\mu_t(y)}
		     {\int K_\epsilon(x,y)\,\d\beta(y)}
		\,\d\alpha(x)
		+\mathrm{constant}.
	\]
	It is ``semi-relaxed'' because the current model $\mu_t$ is used to build the potential, but it is not varied inside the denominator when computing the first variation in $\alpha$.
\index{first variation}
\end{example}

\begin{rem}[General fields and projection onto gradients]
\index{gradient!projection}
	A general regressed field $b_t$ is not necessarily a Wasserstein gradient, since Wasserstein tangent vectors are represented by gradient fields modulo $L^2(\mu_t)$-null directions. The gradient component is obtained by the weighted projection
\index{Wasserstein!gradient}
	\[
		\nabla\phi_t
		=
		\uargmin{\nabla\phi}
		\int \norm{\nabla\phi(x)-b_t(x)}^2\,\d\mu_t(x).
	\]
	One may first normalize $b_t$ pointwise, for instance by $b_t/(\norm{b_t}+\eta)$, or globally by $\norm{b_t}_{L^2(\mu_t)}$, before this projection. Proposition~\ref{prop-drifting-semi-relaxed-gradient} then applies to the projected field. Non-gradient components can still be useful in a parametric model, but they are not descent directions of a scalar functional for the $\Wass_2$ Riemannian metric.
\end{rem}

\section{Evolution in Depth of Transformers}
\index{transformer}
\index{depth evolution}

Deep residual architectures can be read as time discretizations of ODEs or PDEs. For transformers, the transported objects are token measures and the velocity is induced by attention.
\index{token measure}

Transformers were introduced as sequence-to-sequence architectures driven by self-attention~\cite{Vaswani2017Attention} and have since become a central architecture for language and vision models~\cite{Brown2020LanguageModels,Dosovitskiy2021Image}. Their distinctive feature is that each token is updated by a data-dependent average of all other tokens. This makes an attention layer permutation-equivariant before positional encoding, context dependent after conditioning on the input sequence, and naturally compatible with a measure viewpoint in which a prompt is regarded as an empirical distribution of tokens.
\index{transformer}
\index{empirical!distribution}
\index{attention!self-attention}

The mathematical limit used below concerns depth rather than model scale: one lets the number of residual attention layers grow while each layer makes a small update, as in continuous-depth neural networks~\cite{Chen2018NeuralODE}. For attention, the resulting velocity is nonlinear in the current token law because it is normalized by the whole context. This measure-theoretic view appears in the analysis of attention as a Lipschitz or interacting-particle operator~\cite{Vuckovic2020MathematicalAttention,Geshkovski2023MathematicalPerspective}, in the Sinkhorn-normalized dynamics of Sinkformers~\cite{Sander2022Sinkformers}, and in recent well-posedness and mean-field-limit results for several attention mechanisms~\cite{Castin2025DynamicsTransformers}. It also separates the infinite-depth limit studied here from the token-limit question, where one controls how a finite empirical context approximates its limiting attention operator~\cite{Bohbot2025TokenSampleComplexity}.
\index{empirical context}
\index{attention!operator}
\index{infinite-depth limit}
\index{token limit}
\index{transformer}
\index{mean-field!limit}
\index{attention!mechanism}
\index{continuous!depth}

We now consider very deep transformers, focusing on a single-head attention mechanism for simplicity while ignoring MLP layers, layer normalization, causality, and masking. This stripped-down framework is best suited to modeling encoders and vision transformers; the references above indicate which parts of this simplified picture extend to richer attention mechanisms.
\index{attention!single-head}
\index{layer normalization}
\index{causality}
\index{masking}
\index{transformer}

\paragraph{Attention as a context-dependent velocity.}
\index{attention}
\index{velocity field}

After tokenization, embedding, and positional encoding, each input (from a set of tokens) is represented as a point cloud $(x_i)_{i=1}^n$ of $n$ points in the space of vectorized tokens. An attention layer with skip connection and rescaling by $1/T$ (where $T$ is the depth) defines a transformation of the tokens:
\begin{equation*}
    x_i \mapsto x_i + \frac{1}{T} \sum_j \frac{e^{\langle Q x_i, K x_j \rangle} V x_j}{\sum_{\ell} e^{\langle Q x_i, K x_\ell \rangle}},
\end{equation*}
where $\theta = (K, Q, V)$ are the parameters of the attention layer, represented by three matrices.

\paragraph{Token measure evolution.}
\index{token measure}

To handle an arbitrary number of tokens, we define $\alpha = \frac{1}{n} \sum_i \delta_{x_i}$ as the empirical measure of tokens and rewrite the transformer mapping as:
\index{empirical!measure}
\index{transformer}
\begin{equation*}
    x_i \mapsto x_i + \frac{1}{T} \Gamma_\theta[\alpha](x_i),
\end{equation*}
where
\begin{equation*}
    \Gamma_\theta[\alpha](x) :=
    \frac{\int e^{\langle Q x, K y \rangle} V y \, \d \alpha(y)}
    {\int e^{\langle Q x, K z \rangle} \, \d \alpha(z)}.
\end{equation*}
At the level of the token distribution, the layer pushes $\alpha$ forward by the ``in-context'' mapping $\Gamma_{\theta_t}[\alpha]$, which depends on the context $\alpha$, the tokens, and the depth-dependent parameters $\theta_t$. Denoting $t \in [0, 1]$ as the depth and $\tau = 1/T$ as the step size, this gives:
\begin{equation*}
    \alpha_{t+\tau} = (\Id + \tau \Gamma_{\theta_t}[\alpha_t])_\sharp \alpha_t.
\end{equation*}
As $\tau \to 0$, this converges formally to the conservation equation
\begin{equation*}
    \partial_t \alpha_t + \diverg(\alpha_t \Gamma_{\theta_t}[\alpha_t]) = 0.
\end{equation*}

\begin{alg}[Residual attention depth evolution]\label{alg:residual-attention-depth-evolution}
\index{attention!self-attention}
\index{transformer}
\textbf{Input:} Tokens $(x_i^0)_{i=1}^n$, depth $T$, layer parameters $(Q_k,K_k,V_k)$.

\textbf{Output:} Final token measure $\alpha_T$.

\textbf{Initialize:}
\(\alpha_0=\frac1n\sum_{i=1}^n\delta_{x_i^0}, \qquad \tau=1/T.\)

\textbf{For} $k=0,\ldots,T-1$ \textbf{do}:
\begin{algblock}

\textbf{For} $i=1,\ldots,n$ \textbf{do}
\begin{algblock}
\(\Gamma_{\theta_k}[\alpha_k](x_i^k) = \frac{\sum_j \exp(\dotp{Q_kx_i^k}{K_kx_j^k})\,V_kx_j^k} {\sum_j \exp(\dotp{Q_kx_i^k}{K_kx_j^k})}.\)

\(x_i^{k+1}=x_i^k+\tau\,\Gamma_{\theta_k}[\alpha_k](x_i^k).\)
\end{algblock}
\textbf{Set}
\(\alpha_{k+1}=(\Id+\tau\Gamma_{\theta_k}[\alpha_k])_\sharp\alpha_k.\)
\end{algblock}
\algreturnskip
\textbf{Return} $\alpha_T$.
\end{alg}

\paragraph{Gradient structure and limitations.}
\index{gradient!structure}

When the token space has dimension $d$ and the query/key space has dimension $r$, take $Q,K\in\RR^{r\times d}$ and $V\in\RR^{d\times d}$. If $V=Q^\top K$, the field $\Gamma_\theta[\alpha]$ is a gradient vector field in the token variable. Indeed, define the log-partition potential
\[
	\Phi_\alpha(x)
	=
	\int \exp(\dotp{Qx}{Ky})\d\alpha(y),
	\qquad
	U_\alpha(x)=\log\Phi_\alpha(x).
\]
Then
\[
	\nabla_x U_\alpha(x)
	=
	\frac{\int Q^\top K y\,\exp(\dotp{Qx}{Ky})\d\alpha(y)}
	     {\int \exp(\dotp{Qx}{Kz})\d\alpha(z)}
	=
	\Gamma_\theta[\alpha](x).
\]
This is an instantaneous gradient in $x$. It is not, however, the gradient of the first variation of a fixed functional of $\alpha$, because the potential $U_\alpha$ itself depends on the current measure through the same attention normalization. Thus the PDE is generally a transportation dynamics, not a Wasserstein gradient flow. Special variants recover additional structure: Sinkhorn attention can be interpreted through doubly stochastic normalization and Wasserstein-type gradient flows~\cite{Sander2022Sinkformers,Castin2025DynamicsTransformers}, while layer normalization leads naturally to dynamics on the sphere and to modified metrics. The key open difficulty for the present viewpoint is training: after the architecture has been rewritten as a controlled transport equation, learning corresponds to optimizing the time-dependent parameters $(\theta_t)_t$ rather than merely analyzing the forward PDE for fixed parameters.
\index{doubly stochastic normalization}
\index{transport equation}
\index{controlled transport}
\index{first variation}
\index{layer normalization}
\index{Wasserstein!gradient}
\index{gradient!flow}
\index{transformer}
\index{Wasserstein!gradient flow}

\section{Flows over the Gaussian Manifold}
\index{Gaussian!manifold}

Gaussian measures provide a useful testing ground for the preceding dynamics. They are not invariant under a general Wasserstein gradient flow: a nonlinear velocity usually creates non-Gaussian densities immediately. The useful substitute is to either identify affine velocities, which exactly preserve Gaussianity, or to project the dynamics onto the Gaussian manifold. In both cases the measure PDE reduces to matrix ODEs for the mean and covariance. This viewpoint is emphasized in the survey~\cite{Peyre2026OptimalDiffusionTransports} and is useful for comparing diffusion paths, Wasserstein gradient flows, drifting fields and transformer-type dynamics.
\index{affine!closure}
\index{drifting!field}
\index{Wasserstein!gradient}
\index{gradient!flow}
\index{transformer}
\index{Wasserstein!gradient flow}
\index{Gaussian!manifold}
\index{Gaussian!measure}

For constrained gradient flows on this family, the covariance equation is the finite-dimensional Bures--Wasserstein gradient flow on positive definite matrices. Thus Gaussian closure is not just a computational shortcut: it is the restriction of Wasserstein geometry to the Gaussian submanifold, where affine gradient fields encode tangent vectors.
\index{Gaussian!submanifold}
\index{affine!gradient}
\index{Bures-Wasserstein geometry}
\index{Wasserstein!gradient}
\index{Gaussian!closure}
\index{gradient!flow}

\begin{figure}[ht]
\centering
\begin{tabular}{@{}ccc@{}}
\small $W_2$ geodesic & \small Sinkhorn closure & \small drifting closure \\[-.15em]
\includegraphics[width=.30\linewidth]{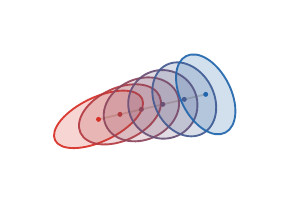} &
\index{Gaussian!closure}
\includegraphics[width=.30\linewidth]{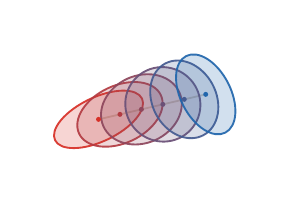} &
\includegraphics[width=.30\linewidth]{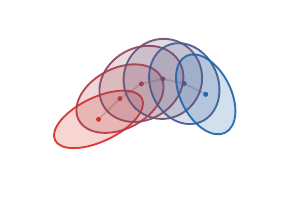}
\end{tabular}
\caption{Gaussian closures of transport dynamics between two overlapping anisotropic Gaussians. The left panel is the exact $W_2$ Gaussian geodesic. The middle panel shows a regularized Sinkhorn-style closure, where the same mean path is accompanied by inflated intermediate covariances. The right panel shows a drifting-style closure with a curved mean path and moment-matched covariance ellipses. These finite-dimensional pictures keep only means and covariances, and therefore discard higher-order shape information that would be created by a genuinely nonlinear velocity field.}
\index{moment matching}
\index{Gaussian!closure}
\index{velocity field}
\label{fig:gradflow-gaussian-closure}
\end{figure}

\paragraph{Gaussianity preservation.}
\index{Gaussian!preservation}

The first question is invariance: one wants a simple criterion ensuring that the continuity equation does not leave the finite-dimensional Gaussian family.
\index{continuity equation}

\begin{prop}[Affine velocities preserve Gaussianity]\label{prop-gaussian-affine-closure}
\index{Gaussian!preservation}
\index{affine!closure}
	Let $\alpha_t=\Gaussian(\mean_t,\cov_t)$, with $\cov_t$ positive definite, solve the continuity equation with an affine velocity
	\[
		v_t(x)=b_t+A_t(x-\mean_t).
	\]
	Then $\alpha_t$ remains Gaussian and its moments solve
	\[
		\dot \mean_t=b_t,
		\qquad
		\dot\cov_t=A_t\cov_t+\cov_t A_t^\top .
	\]
	Conversely, any smooth Gaussian curve with positive definite covariance can be generated by such an affine velocity. If one wants the velocity to be a Wasserstein tangent gradient, one chooses the unique symmetric solution of the Lyapunov equation
\index{affine!closure}
\index{Lyapunov equation}
	\[
		A_t\cov_t+\cov_t A_t=\dot\cov_t.
	\]
\end{prop}

\begin{proof}
	Let $X_t$ follow the characteristic ODE $\dot X_t=b_t+A_t(X_t-\mean_t)$. This linear ODE maps Gaussian random variables to Gaussian random variables. Taking expectation gives $\dot\mean_t=b_t$. Writing $\tilde X_t=X_t-\mean_t$, one has $\dot{\tilde X}_t=A_t\tilde X_t$, hence
\index{random variable}
	\[
		\dot\cov_t
		=
		\frac{\d}{\d t}\EE(\tilde X_t\tilde X_t^\top)
		=
		A_t\cov_t+\cov_t A_t^\top .
	\]
	For the converse, set $b_t=\dot\mean_t$ and choose any matrix $A_t$ satisfying the covariance equation. Since $\cov_t$ is positive definite, the Lyapunov map $A\mapsto A\cov_t+\cov_t A$ is invertible on symmetric matrices, which gives the unique symmetric choice when a gradient velocity is required. In that case $v_t$ is the gradient of the quadratic potential $x\mapsto \dotp{b_t}{x}+\dotp{A_t(x-\mean_t)}{x-\mean_t}/2$.
\index{quadratic!potential}
\index{gradient!velocity}
\end{proof}

\paragraph{Constrained evolution on the Gaussian manifold.}
\index{Gaussian!manifold}

For non-affine velocities, the finite-dimensional substitute is to project the Wasserstein dynamics onto the Gaussian manifold.
\index{affine!closure}
\index{Gaussian!manifold}

Let
\[
	\mathcal G=\{\Gaussian(\mean,\cov):\mean\in\RR^d,\ \cov\succ0\}
\]
be the Gaussian submanifold of $\Pp_2(\RR^d)$. The Wasserstein gradient of a functional constrained to a smooth submanifold $\mathcal M\subset\Pp_2$ is defined as the Riesz representative of the differential restricted to tangent velocities of $\mathcal M$. Equivalently, it is the small-step limit of the constrained JKO scheme
\index{Gaussian!submanifold}
\index{Wasserstein!gradient}
\index{JKO scheme}
\[
	\alpha^{k+1}\in
	\uargmin{\alpha\in\mathcal M}
	\frac{1}{2\tau}\Wass_2^2(\alpha,\alpha^k)+f(\alpha).
\]
For $\mathcal M=\mathcal G$, tangent velocities are affine gradient fields $v(x)=b+A(x-\mean)$ with $A=A^\top$. The constrained gradient is therefore the $L^2(\Gaussian(\mean,\cov))$ projection of the ambient Wasserstein gradient onto this finite-dimensional affine space, whenever the ambient gradient exists.
\index{affine!gradient}
\index{Wasserstein!gradient}

\begin{prop}[Gaussian-constrained Wasserstein gradients]\label{prop-gaussian-gradient-bullet-list}
\index{Wasserstein!gradient}
	Let $f$ be a smooth functional and assume that its restriction to nondegenerate Gaussian measures can be written as
\index{Gaussian!measure}
	\[
		f(\Gaussian(\mean,\cov))=F(\mean,\cov).
	\]
	Then the Wasserstein gradient constrained to the Gaussian family is the affine vector field
\index{Wasserstein!gradient}
	\[
		v_F(x)
		=
		\nabla_\mean F(\mean,\cov)
		+
		2\nabla_\cov F(\mean,\cov)(x-\mean),
	\]
	where $\nabla_\cov F$ denotes the symmetric matrix derivative. Equivalently, $v_F$ is the $L^2(\Gaussian(\mean,\cov))$ projection of the ambient Wasserstein gradient onto affine gradient fields, whenever the ambient gradient exists. Hence the gradient descent flow constrained to Gaussian measures satisfies
\index{affine!gradient}
\index{Wasserstein!gradient}
\index{Gaussian!measure}
	\begin{equation}\label{eq-gaussian-wgf-closure}
		\dot\mean_t=-\nabla_\mean F(\mean_t,\cov_t),
		\qquad
		\dot\cov_t=-2\bigl(\cov_t\nabla_\cov F(\mean_t,\cov_t)+\nabla_\cov F(\mean_t,\cov_t)\cov_t\bigr),
	\end{equation}
	and the descent velocity is affine.
\end{prop}

\begin{proof}
	Test the functional along a Gaussian tangent vector, represented by an affine gradient field
\index{affine!gradient}
	\[
		v(x)=b+A(x-\mean)
	\]
	with $A$ symmetric. The induced first-order variations are $\dot\mean=b$ and $\dot\cov=A\cov+\cov A$. Therefore
	\[
		\d F(\mean,\cov)[b,A\cov+\cov A]
		=
		\dotp{\nabla_\mean F}{b}
		+
		\mathrm{tr}\!\left(\nabla_\cov F(A\cov+\cov A)\right).
	\]
	Since $A$, $\cov$ and $\nabla_\cov F$ are symmetric, the second term equals
	\[
		2\,\mathrm{tr}(\nabla_\cov F\,A\cov)
		=
		\int \dotp{2\nabla_\cov F(x-\mean)}{A(x-\mean)}\d\Gaussian(\mean,\cov)(x).
	\]
	Together with the mean term, this gives
	\[
		\d F(\mean,\cov)[\dot\mean,\dot\cov]
		=
		\int \dotp{v_F(x)}{v(x)}\d\Gaussian(\mean,\cov)(x)
	\]
	for all affine gradient fields $v$. This identifies the constrained Wasserstein gradient in the induced $L^2(\alpha)$ metric, or equivalently the projection of the ambient gradient when it exists. Substituting the descent velocity $-v_F$ in Proposition~\ref{prop-gaussian-affine-closure} gives~\eqref{eq-gaussian-wgf-closure}.
\index{Wasserstein!gradient}
\index{affine!closure}
\end{proof}

This proposition should be read as the organizing rule for Gaussian closures: once the scalar energy has been reduced to a function of $(\mean,\cov)$, its constrained Wasserstein gradient is automatically affine and the covariance follows the Bures-type ODE~\eqref{eq-gaussian-wgf-closure}. When the first variation of $f$ is quadratic, this constrained gradient coincides with the full Wasserstein gradient.
\index{first variation}
\index{Gaussian!closure}

\paragraph{Gaussian-preserving gradient flows.}
\index{Gaussian!preservation}
\index{gradient!flow}

The next examples show that many familiar energies already have affine Wasserstein gradients on Gaussian inputs, so their full flow remains inside the Gaussian family.
\index{Wasserstein!gradient}

\begin{example}[Gaussian energies and affine gradients]\label{ex-gaussian-affine-gradients}
\index{Gaussian!energy}
\index{affine!gradient}
	Proposition~\ref{prop-gaussian-gradient-bullet-list} turns many standard energies into explicit affine fields:
	\begin{itemize}
		\item \emph{Quadratic potential energy.} If
\index{quadratic!potential}
		\[
			f(\alpha)=\int \Bigl(\frac12 x^\top Hx+\dotp{\ell}{x}\Bigr)\d\alpha(x),
			\qquad H=H^\top,
		\]
		then
		\[
			\Wgrad f(\alpha)(x)=Hx+\ell=(H\mean+\ell)+H(x-\mean).
		\]
		This is the Gaussian form of transport under a quadratic confinement.

		\item \emph{Quadratic interaction energy.} If
\index{energy!interaction}
		\[
			f(\alpha)=\frac14\iint (x-y)^\top G(x-y)\d\alpha(x)\d\alpha(y),
			\qquad G=G^\top,
		\]
		then $F(\mean,\cov)=\frac12\mathrm{tr}(G\cov)$ and
		\[
			\Wgrad f(\alpha)(x)=G(x-\mean).
		\]
		The mean is unchanged and the covariance contracts or expands according to the signs of $G$.

		\item \emph{Relative entropy to a Gaussian.} For $\bar\alpha=\Gaussian(\bar\mean,\bar\cov)$,
\index{entropy!relative}
		\[
			f(\alpha)=\KL(\alpha|\bar\alpha)
		\]
		has
		\[
			\Wgrad f(\alpha)(x)
			=
			\bar\cov^{-1}(\mean-\bar\mean)
			+
			(\bar\cov^{-1}-\cov^{-1})(x-\mean).
		\]
		The descent equations are the Ornstein--Uhlenbeck moment equations
\index{Ornstein-Uhlenbeck process}
		\[
			\dot\mean_t=-\bar\cov^{-1}(\mean_t-\bar\mean),
			\qquad
			\dot\cov_t=2\Id-\bar\cov^{-1}\cov_t-\cov_t\bar\cov^{-1}.
		\]

		\item \emph{Squared Wasserstein distance to a Gaussian.} For
\index{Wasserstein!distance}
		\[
			f(\alpha)=\frac12\Wass_2^2(\alpha,\bar\alpha),
			\qquad
			\bar\alpha=\Gaussian(\bar\mean,\bar\cov),
		\]
		the Gaussian Brenier map $T_{\alpha\to\bar\alpha}$ is affine,
\index{Brenier!map}
		\[
			T_{\alpha\to\bar\alpha}(x)=\bar\mean+M(x-\mean),
			\qquad
			M=\cov^{-1/2}(\cov^{1/2}\bar\cov\cov^{1/2})^{1/2}\cov^{-1/2}.
		\]
		Hence
		\[
			\Wgrad f(\alpha)(x)=x-T_{\alpha\to\bar\alpha}(x)
			=
			(\mean-\bar\mean)+(\Id-M)(x-\mean),
		\]
		and descent moves each Gaussian infinitesimally along the Bures--Wasserstein geodesic toward $\bar\alpha$.
\index{Wasserstein!geodesic}
\index{Bures-Wasserstein geometry}

		\item \emph{Gaussian-only losses.} Sliced $\SW_2^2$ losses to a Gaussian, Gaussian Sinkhorn divergences, and any smooth closed formula depending only on $(\mean,\cov)$ fit the same constrained-gradient template:
\index{Gaussian!Sinkhorn}
\index{Gaussian!Sinkhorn divergence}
\index{Sinkhorn!divergence}
			\[
				v_F(x)=\nabla_\mean F+2\nabla_\cov F(x-\mean).
			\]
		For Gaussian Sinkhorn divergences this finite-dimensional flow is studied in~\cite{HardionLacombe2026GaussianSinkhornFlow}.
\index{Gaussian!Sinkhorn divergence}
\index{Sinkhorn!divergence}
	\end{itemize}

	Not every PDE preserves Gaussianity exactly. For instance, Wasserstein flows of relative Fisher information, related to quantum-drift or higher-order diffusion equations, typically require a Gaussian projection to close on $(\mean,\cov)$. Such projected closures are still useful: they expose the finite-dimensional dynamics predicted by a variational model and make it easy to compare variational flows with non-variational affine dynamics such as drifting fields or the Gaussian transformer closure below.
\index{drifting!field}
\index{transformer}
\index{Gaussian!projection}
\index{Fisher information}
\end{example}

\begin{prop}[Centered Gaussian covariance catalogue]\label{prop-centered-gaussian-covariance-catalogue}
\index{Gaussian!covariance}
	Let $\gamma=\Gaussian(0,\Id)$ and let $\mu_t=\Gaussian(0,C_t)$ with $C_t\succ0$. For the normalizations displayed below, the Wasserstein descent constrained to the centered Gaussian manifold satisfies $\dot C_t=h(C_t)$, with
\index{Gaussian!manifold}
	\[
	\begin{array}{rcl}
		\KL(\mu|\gamma) &:& h(C)=2(\Id-C), \\[.35em]
		\frac12\,\mathcal I(\mu|\gamma) &:& h(C)=2(C^{-1}-C), \\[.35em]
		\Wass_2^2(\mu,\gamma) &:& h(C)=4(C^{1/2}-C), \\[.35em]
		\MMD_k^2(\mu,\gamma),\quad k(x,y)=\dotp{x}{y}^2
\index{maximum mean discrepancy}
			&:& h(C)=8(C-C^2), \\[.35em]
		S_\epsilon(\mu,\gamma)
			&:& h(C)=
			4\left(C+\frac{\epsilon^2}{16}\Id\right)^{1/2}
			-2\left(C^2+\frac{\epsilon^2}{16}\Id\right)^{1/2}
			-2C-\frac{\epsilon}{2}\Id, \\[.75em]
		\SW_2^2(\mu,\gamma)
			&:& h(C)=V(C)C+CV(C),
	\end{array}
	\]
	where $S_\epsilon$ is the debiased Sinkhorn divergence for the quadratic cost $\norm{x-y}^2$ and KL regularization strength $\epsilon$, and
\index{Sinkhorn!divergence}
\index{cost!quadratic}
	\[
		V(C)=
		2\int_{\Sphere^{d-1}}
		\left(\frac{1}{\sqrt{\theta^\top C\theta}}-1\right)
		\theta\theta^\top \,\d\sigma(\theta)
	\]
	for the normalized spherical measure $\sigma$. Here
	\[
		\mathcal I(\mu|\gamma)=\int \left|\nabla\log\rho(x)+x\right|^2\rho(x)\,\d x
		\qquad(\mu=\rho\,\d x).
	\]
	Thus the unhalved Fisher divergence has right-hand side $4(C^{-1}-C)$. Multiplying any of these energies by a constant simply rescales the corresponding right-hand side.
\end{prop}

\begin{proof}
	Each row is obtained by identifying the affine descent velocity $v(x)=M_Cx$ generated by the corresponding Gaussian-constrained calculation and then applying Proposition~\ref{prop-gaussian-affine-closure}, which gives $\dot C=M_CC+CM_C^\top$. For $\KL(\cdot|\gamma)$, the Fokker--Planck velocity is $(C^{-1}-\Id)x$, hence $\dot C=2(\Id-C)$. For the Fisher row, the restriction of $\frac12\mathcal I$ to centered Gaussians is
\index{Fokker-Planck equation}
\index{affine!closure}
	\[
		\frac12\left(\tr(C)+\tr(C^{-1})-2d\right).
	\]
	Using Proposition~\ref{prop-gaussian-gradient-bullet-list} gives the descent velocity $(C^{-2}-\Id)x$, hence $\dot C=2(C^{-1}-C)$. This row should be read as a Gaussian projected closure of the fourth-order Fisher flow.

	For $\Wass_2^2(\cdot,\gamma)$, the Brenier map from $\Gaussian(0,C)$ to $\gamma$ is $C^{-1/2}x$, so the descent velocity for the unhalved squared distance is $2(C^{-1/2}-\Id)x$, giving $4(C^{1/2}-C)$. For the polynomial MMD row, centered Gaussians satisfy $\MMD_k^2(\mu,\gamma)=\norm{C-\Id}_{\mathrm F}^2$; the first variation is quadratic and its descent velocity is $4(\Id-C)x$, giving $8(C-C^2)$.
\index{first variation}
\index{maximum mean discrepancy}
\index{Brenier!map}

	Gaussian Sinkhorn dual potentials are quadratic, so the velocity is again linear; differentiating the closed Gaussian formula yields the displayed spectral expression. The square roots are spectral functions of $C$, hence commute with $C$, which is why the covariance ODE closes as a matrix function of $C$ alone. For sliced Wasserstein, each one-dimensional projection is a Gaussian transport with velocity $2((\theta^\top C\theta)^{-1/2}-1)\dotp{\theta}{x}\theta$; averaging these velocities over $\Sphere^{d-1}$ gives $v(x)=V(C)x$ and thus $\dot C=V(C)C+CV(C)$.
\index{covariance!ODE}
\index{Sinkhorn!dual}
\index{sliced Wasserstein!distance}
\index{Gaussian!Sinkhorn}
\index{one-dimensional!projection}
\index{dual!potential}
\index{covariance!ODE}
\end{proof}

\begin{example}[Linear mean-field networks as cross-moment flows]
\index{flow!cross-moment}
	In the two-layer model above, take the linear activation $\sigma(s)=s$, so that
\index{two-layer neural network}
	\[
		\psi((u,v),z)=v\,\dotp{u}{z}.
	\]
	The predictor is the linear map
	\[
		G_{\alpha_t}(z)=Q_t z,
		\qquad
		Q_t=\int v u^\top\d\alpha_t(u,v)\in\RR^{d'\times d}.
	\]
	Thus the energy depends on the neuron law only through the cross moment $Q_t$, a subblock of the raw second moment of $x=(u,v)$. For square loss, set
\index{second moment}
\index{neuron law}
	\[
		S=\int zz^\top\d\rho(z,y),
		\qquad
		R=\int y z^\top\d\rho(z,y),
		\qquad
		H_t=Q_tS-R.
	\]
	The first variation is
\index{first variation}
	\[
		\delta f(\alpha_t)(u,v)=\dotp{H_t}{v u^\top}=v^\top H_t u.
	\]
	Hence the particle velocity in parameter space is linear:
	\[
		-\nabla_{(u,v)}\delta f(\alpha_t)(u,v)
		=
		-\begin{pmatrix}
			0 & H_t^\top \\
			H_t & 0
		\end{pmatrix}
		\begin{pmatrix} u \\ v \end{pmatrix}.
	\]
	Therefore a Gaussian law of neurons remains Gaussian. Its mean and covariance follow Proposition~\ref{prop-gaussian-affine-closure}, with a matrix depending only on the current cross moment $Q_t$, a raw second-moment subblock that becomes a covariance subblock when the neuron law is centered. This exact closure is special to the linear activation; for nonlinear activations, Gaussian closures are usually projections rather than invariant families.
\index{covariance!subblock}
\index{neuron law}
\index{neuron}
\index{Gaussian!closure}
\index{affine!closure}
\end{example}

\paragraph{Non-variational Gaussian-preserving flows.}
\index{Gaussian!preservation}
\index{Gaussian!preserving flow}

The last examples are not ordinary gradient flows of a fixed scalar energy on the full Wasserstein space. They preserve Gaussianity because the prescribed velocity field is affine when evaluated on Gaussian measures.
\index{velocity field}
\index{Wasserstein!space}
\index{Gaussian!measure}
\index{gradient!flow}

\begin{example}[Flow matching and diffusion paths between Gaussians]
	Consider a prescribed Gaussian interpolation $\alpha_t=\Gaussian(\mean_t,\cov_t)$. Proposition~\ref{prop-gaussian-affine-closure} shows that an exact flow-matching velocity can be taken affine:
\index{affine!closure}
\index{flow!matching}
	\[
		v_t(x)=\dot\mean_t+A_t(x-\mean_t),
		\qquad
		A_t\cov_t+\cov_t A_t=\dot\cov_t.
	\]
	In the isotropic case $\cov_t=s_t^2\Id$, this reduces to the transparent formula
	\[
		v_t(x)=\dot\mean_t+\frac{\dot s_t}{s_t}(x-\mean_t).
	\]
	For instance, the diffusion noising path
	\[
		X_t=a_tX_0+\sigma_t Z,\qquad Z\sim\Gaussian(0,\Id),
	\]
	has $\mean_t=a_t\mean_0$ and $\cov_t=a_t^2\cov_0+\sigma_t^2\Id$. Thus, in the Gaussian case, diffusion paths and flow-matching paths reduce to the same mean-covariance bookkeeping, although the corresponding training objectives are different.
\index{flow!matching}
\end{example}

\begin{example}[Gaussian kernel drifting]
\index{kernel!Gaussian}
\index{Gaussian!kernel drifting}
	Let the target be $\beta=\Gaussian(\bar\mean,\bar\cov)$ and assume $\mu_t=\Gaussian(\mean_t,\cov_t)$. For the Gaussian kernel
	\[
		K_\epsilon(x,y)=\exp(-\norm{x-y}^2/(2\epsilon)),
	\]
	the normalized field~\eqref{eq-normalized-kernel-drift} satisfies
	\[
		B_\epsilon[\mu_t](x)
		=
		-\epsilon(\cov_t+\epsilon\Id)^{-1}(x-\mean_t).
	\]
	Thus the drifting velocity~\eqref{eq-cross-minus-self-drift} is affine and preserves Gaussianity. With
	\[
		A_t=(\cov_t+\epsilon\Id)^{-1},
		\qquad
		\bar A=(\bar\cov+\epsilon\Id)^{-1},
	\]
	the ODE is
	\[
		\dot\mean_t=\epsilon\bar A(\bar\mean-\mean_t),
		\qquad
		\dot\cov_t=\epsilon\bigl((A_t-\bar A)\cov_t+\cov_t(A_t-\bar A)\bigr).
	\]
	This finite-dimensional model explains the stabilizing role of the self-normalized repulsion term in drifting: without it, the covariance equation loses the $A_t\cov_t+\cov_tA_t$ contribution.
\end{example}

\begin{example}[Gaussian closure of attention dynamics]
\index{Gaussian!closure}
\index{attention}
	For the transformer PDE, assume $\alpha=\Gaussian(\mean,\cov)$. Since exponential tilting preserves Gaussianity,
\index{transformer}
	\[
		\frac{\int e^{\dotp{Qx}{Ky}}\,y\,\d\alpha(y)}
		     {\int e^{\dotp{Qx}{Kz}}\,\d\alpha(z)}
		=
		\mean+\cov K^\top Qx.
	\]
	Therefore
	\[
		\Gamma_\theta[\alpha](x)=V\mean+V\cov K^\top Qx
	\]
	is affine. The Gaussian token law is preserved and satisfies
	\[
		\dot\mean_t=(V_t+V_t\cov_tK_t^\top Q_t)\mean_t,
		\qquad
		\dot\cov_t=B_t\cov_t+\cov_tB_t^\top,
		\qquad
		B_t=V_t\cov_tK_t^\top Q_t.
	\]
	When $V_t=Q_t^\top K_t$, the matrix $B_t=Q_t^\top K_t\cov_tK_t^\top Q_t$ is symmetric positive semidefinite, matching the gradient-field case mentioned above.
	This closure is not a convergence theorem for trained transformers. It is instead a tractable model of how attention can shear, amplify or contract a cloud of tokens through its covariance.
\end{example}

\paragraph{Contractive Gaussian projection.}
\index{Gaussian!projection}
\index{Gaussian!contractive projection}

The preceding examples show when Gaussianity is preserved or imposed by projection. Gelbrich's inequality~\cite{gelbrich1990formula} gives a useful variational explanation: replacing a measure by the Gaussian with the same first two moments cannot increase its Wasserstein distance to another similarly projected measure.
\index{Wasserstein!distance}

\begin{thm}[Gelbrich theorem]\label{thm-gelbrich-projection}
\index{Gelbrich theorem}
	For $\mu\in\Pp_2(\RR^d)$, let
	\[
		\mathcal R\mu\eqdef \Gaussian(\mean_\mu,\cov_\mu)
	\]
	be the Gaussian with the same mean and covariance as $\mu$. Then
	\[
		\Wass_2^2(\mathcal R\mu,\mathcal R\nu)
		=
		\norm{\mean_\mu-\mean_\nu}^2+\Bb^2(\cov_\mu,\cov_\nu)
		\leq
		\Wass_2^2(\mu,\nu).
	\]
\end{thm}

\begin{proof}
	Take any coupling $(X,Y)$ of $\mu$ and $\nu$, center the variables, and write $C=\EE[(X-\mean_\mu)(Y-\mean_\nu)^\top]$. In the positive definite case, positivity of the block covariance matrix implies the factorization $C=\cov_\mu^{1/2}K\cov_\nu^{1/2}$ with $\norm{K}_{\mathrm{op}}\leq1$, and therefore, by operator/nuclear norm duality,
\index{covariance!matrix}
	\[
		\tr C\leq \tr\left((\cov_\mu^{1/2}\cov_\nu\cov_\mu^{1/2})^{1/2}\right).
	\]
	The semidefinite case follows by adding $\eta\Id$ to both covariance matrices and letting $\eta\downarrow0$.
	Expanding $\EE\norm{X-Y}^2$ gives the lower bound
	\[
		\EE\norm{X-Y}^2
		\geq
		\norm{\mean_\mu-\mean_\nu}^2+\Bb^2(\cov_\mu,\cov_\nu).
	\]
	Taking the infimum over couplings proves the inequality, while equality for Gaussian laws is Proposition~\ref{prop-gaussian-w2-bures}.
\end{proof}

The following preservation criterion is a direct consequence of Gelbrich's theorem and was explained to us by Hugo Lavenant. It says that a functional which does not increase under moment-matched Gaussian projection admits Gaussian minimizing movements from Gaussian initial data.
\index{moment matching}
\index{Lavenant criterion}
\index{Gaussian!projection}
\index{minimizing movement}
\index{Gelbrich theorem}

\begin{thm}[Hugo Lavenant Gaussian-preservation criterion]\label{thm-lavenant-gaussian-preserving-jko}
\index{Lavenant criterion}
\index{Gaussian!preservation}
	Let $F:\Pp_2(\RR^d)\to(-\infty,+\infty]$ satisfy
	\[
		F(\mathcal R\mu)\leq F(\mu)
		\qquad\forall\mu\in\Pp_2(\RR^d),
	\]
	with $\mathcal R$ defined in Theorem~\ref{thm-gelbrich-projection}. If $\gamma$ is Gaussian and $\nu$ minimizes the JKO step
	\[
		\eta\mapsto F(\eta)+\frac1{2\tau}\Wass_2^2(\gamma,\eta),
	\]
	then $\mathcal R\nu$ is also a minimizer. If the JKO minimizer is unique, it is Gaussian. Thus any unique Wasserstein gradient flow obtained as the limit of this JKO scheme preserves Gaussian initial data.
\index{Wasserstein!gradient}
\index{gradient!flow}
\index{Wasserstein!gradient flow}
\index{JKO scheme}
\end{thm}

\begin{proof}
	For the JKO claim, $\mathcal R\gamma=\gamma$ because $\gamma$ is Gaussian. Hence, for any competitor $\eta$,
	\[
		F(\mathcal R\eta)+\frac1{2\tau}\Wass_2^2(\gamma,\mathcal R\eta)
		\leq
		F(\eta)+\frac1{2\tau}\Wass_2^2(\gamma,\eta).
	\]
	Applying this to a minimizer $\eta=\nu$ shows that $\mathcal R\nu$ is again a minimizer. Uniqueness forces $\nu=\mathcal R\nu$.
\end{proof}

\begin{rem}[Gaussian barycenters from contraction]
\index{Gaussian!barycenter}
	The same projection argument also explains why quadratic Wasserstein barycenters of Gaussian measures are Gaussian. If $\be_s$ are Gaussian and
\index{Wasserstein!barycenter}
\index{Gaussian!measure}
	\[
		F(\mu)=\sum_s\la_s\Wass_2^2(\mu,\be_s),
	\]
	then $\mathcal R\be_s=\be_s$, and Theorem~\ref{thm-gelbrich-projection} gives $F(\mathcal R\mu)\leq F(\mu)$. Thus the moment-matched Gaussian projection of any barycenter is again a barycenter; when the barycenter is unique, it must itself be Gaussian. This is the contraction viewpoint behind Corollary~\ref{cor-gaussian-discrete-barycenters}.
\index{moment matching}
\index{Gaussian!projection}
\end{rem}

\chapter*{Conclusion}
\addcontentsline{toc}{chapter}{Conclusion}

Optimal transport is useful because it keeps several viewpoints active at once. It is a linear program over couplings, a duality theory for potentials, a geometry on probability measures, a source of PDEs and gradient flows, and a computational toolbox built around linear programming, Sinkhorn scaling and low-dimensional projections. These viewpoints reinforce each other: Brenier maps explain geodesics, geodesic convexity explains convergence of flows, entropic regularization turns transport into scalable differentiable losses, and dual norms or sliced variants reveal what is gained and lost when OT is simplified.
\index{Sinkhorn!scaling}
\index{Brenier!map}
\index{dual!norm}
\index{linear programming}
\index{entropic!regularization}
\index{probability measure}
\index{convexity!geodesic}
\index{gradient!flow}

For machine learning, this interplay is especially stimulating. Modern generative modeling, diffusion and flow matching, inverse problems, robust optimization, and even continuous-depth views of transformers all ask for ways to move, compare or learn distributions in high dimension. These applications do not merely consume existing OT theory; they create difficult mathematical questions because empirical measures are singular, dimensions are large, models are parametrized and non-convex, and computational approximations are inseparable from statistical error. The strength of OT is precisely that it gives a common language for these tensions, while still leaving enough open structure for new mathematics to be needed.
\index{empirical!measure}
\index{flow!matching}
\index{generative model}
\index{transformer}

\phantomsection
\section*{Acknowledgements}
\addcontentsline{toc}{section}{Acknowledgements}

This work was supported by the European Research Council (ERC project WOLF) and by the French government under the management of Agence Nationale de la Recherche as part of the ``France 2030'' program, reference ANR-23-IACL-0008 (PRAIRIE-PSAI).


\bibliographystyle{plain}
\bibliography{all}

\clearpage
\appendix

\chapter{Notation Table}
\label{app-notation-table}

This appendix collects the main notation used throughout the book. The last
column points to the first section, equation, definition, proposition or theorem
where the notation is defined or first used in a mathematically meaningful way.

{\small
\setlength{\tabcolsep}{4pt}
\renewcommand{\arraystretch}{1.16}
\begin{longtable}{>{\raggedright\arraybackslash}p{.24\textwidth}
                  >{\raggedright\arraybackslash}p{.45\textwidth}
                  >{\raggedright\arraybackslash}p{.24\textwidth}}
\hline
\textbf{Notation} & \textbf{Meaning} & \textbf{First reference} \\
\hline
\endfirsthead
\hline
\textbf{Notation} & \textbf{Meaning} & \textbf{First reference} \\
\hline
\endhead
\hline
\endfoot

\multicolumn{3}{l}{\textbf{Ambient spaces, measures and elementary objects}}\\
\hline
$\RR^d$ & Euclidean ambient space. & Section~\ref{sec-measures} \\
$\X,\Y$ & Source and target spaces. & Eq.~\eqref{eq-monge-continuous} \\
$\Mm(\X)$ & Finite signed Radon measures on $\X$. & Section~\ref{sec-measures} \\
$\Mm_+(\X),\Mm_+^1(\X)$ & Positive finite measures and probability measures. & Section~\ref{sec-measures} \\
$\Pp(\X),\Pp_p(\X)$ & Probability measures, with finite $p$-moment for $\Pp_p$. & Section~\ref{sec-kantorovich-continuous} \\
$\simplex_n$ & Probability simplex of histograms of length $n$. & Definition~\ref{def-probability-simplex} \\
$\de_x$ & Dirac mass at $x$. & Definition~\ref{def-discrete-measure} \\
$\al,\be,\ga$ & Source, target and auxiliary probability measures. & Eq.~\eqref{eq-monge-continuous} \\
$\density{\al}$ & Density of $\al$ with respect to a reference measure. & Definition~\ref{def-relative-density} \\
$\d\al,\d x$ & Integration against $\al$ and against Lebesgue measure. & Section~\ref{sec-measures} \\
$\EE$ & Expectation of a random variable. & Section~\ref{sec-measures} \\
$\supp(\pi)$ & Topological support of a measure. & Definition~\ref{def:support} \\
$\Supp(\b)$ & Index support of a histogram. & Eq.~\eqref{eq-discr-diverg} \\
$\Cc(\X)$ & Continuous real-valued functions on $\X$. & Section~\ref{sec-measures} \\
$\norm{\cdot}$ & Euclidean norm or the norm indicated by a subscript. & Chapter~\ref{sec-matching} \\
$\dotp{\cdot}{\cdot}$ & Euclidean/Frobenius pairing or measure-function pairing. & Section~\ref{sec-measures} \\

\hline
\multicolumn{3}{l}{\textbf{Discrete matching and discrete Kantorovich OT}}\\
\hline
$(x_i)_i,(y_j)_j$ & Source and target point clouds. & Eq.~\eqref{eq-optimal-assignment} \\
$\C=(\C_{i,j})$ & Cost matrix between source and target points. & Eq.~\eqref{eq-optimal-assignment} \\
$\sigma\in\Perm(n)$ & Permutation encoding a one-to-one matching. & Eq.~\eqref{eq-optimal-assignment} \\
$P_\sigma,\mathcal P_n^{\mathrm{perm}}$ & Permutation matrix and the set of all such matrices. & Definition~\ref{def-permutation-matrices} \\
$\mathcal B_n$ & Birkhoff polytope of bistochastic matrices. & Definition~\ref{def-birkhoff-polytope} \\
$\a,\b$ & Discrete probability histograms. & Eq.~\eqref{eq-discr-couplings} \\
$\P$ & Discrete transport/coupling matrix. & Eq.~\eqref{eq-discr-couplings} \\
$\CouplingsD(\a,\b)$ & Polytope of discrete couplings with marginals $\a,\b$. & Eq.~\eqref{eq-discr-couplings} \\
$\ones_n,\transp{\P}$ & All-ones vector and transpose of $\P$. & Eq.~\eqref{eq-discr-couplings} \\
$\MKD_\C(\a,\b)$ & Discrete Kantorovich optimal value with cost $\C$. & Eq.~\eqref{eq-kanto-discr} \\
$\distD$ & Ground distance matrix for discrete Wasserstein distances. & Definition~\ref{def-discrete-wasserstein-distance} \\
$\WassD_p(\a,\b)$ & Discrete $p$-Wasserstein distance. & Definition~\ref{def-discrete-wasserstein-distance} \\

\hline
\multicolumn{3}{l}{\textbf{Monge maps, one-dimensional OT and Gaussians}}\\
\hline
$T,\T$ & Transport map. & Eq.~\eqref{eq-monge-continuous} \\
$T_\sharp\al$ & Push-forward of $\al$ by $T$. & Definition~\ref{defn-pushfwd} \\
$T^\sharp g$ & Pullback of a test function, $T^\sharp g=g\circ T$. & Remark~\ref{rem-pullback-pushforward} \\
$\Id$ & Identity map. & Definition~\ref{defn-pushfwd} \\
$\tilde\Wass_p$ & Directed Monge transport distance. & Eq.~\eqref{eq-monge-distance} \\
$\nabla\phi$ & Brenier map for quadratic cost. & Theorem~\ref{thm-brenier} \\
$\cumul{\al}$ & Cumulative distribution function of a 1-D measure. & Eq.~\eqref{eq-cumul-defn} \\
$\cumul{\al}^{-1}$ & Quantile function of a 1-D measure. & Eq.~\eqref{eq-OT-map-1d} \\
$\Gaussian(\mean,\cov)$ & Gaussian law with mean $\mean$ and covariance $\cov$. & Eq.~\eqref{eq-gauss-pf} \\
$\mean_\al,\cov_\al$ & Mean and covariance of a Gaussian measure $\al$. & Eq.~\eqref{eq-dist-gauss} \\
$\Bb(\cov_\al,\cov_\be)$ & Bures covariance distance. & Definition~\ref{def-bures-metric} \\
$\tr(\cov)$ & Trace of a matrix. & Eq.~\eqref{eq-dist-gauss} \\

\hline
\multicolumn{3}{l}{\textbf{Continuous Kantorovich OT and Wasserstein distances}}\\
\hline
$\pi$ & Coupling or transport plan. & Definition~\ref{def-continuous-couplings} \\
$\Couplings(\al,\be)$ & Set of couplings between $\al$ and $\be$. & Eq.~\eqref{eq-coupling-generic} \\
$\MK_\c(\al,\be)$ & Kantorovich optimal value with ground cost $\c$. & Eq.~\eqref{eq-mk-generic} \\
$\dist$ & Ground distance on the underlying metric space. & Eq.~\eqref{eq-defn-wass-dist} \\
$\Wass_p(\al,\be)$ & $p$-Wasserstein distance. & Definition~\ref{def-wasserstein-distance} \\
$\Wass_\infty(\al,\be)$ & Worst-displacement Wasserstein distance. & Eq.~\eqref{eq-wass-infty} \\
$\mathfrak A,\mathfrak B$ & Probability laws over probability measures. & Eq.~\eqref{eq-wow-parametric-law} \\
$\bar\alpha_{\mathfrak A}$ & Collapsed mixture associated with a law over measures. & Definition~\ref{def-collapsed-barycentric-mixture} \\
$\mathbb W_2$ & Wasserstein distance on the Wasserstein space. & Eq.~\eqref{eq-wow-distance} \\
$\Gamma$ & $c$-cyclically monotone subset of $\X\times\Y$. & Definition~\ref{def:ccm} \\
$\rho$ & Glued or composed coupling. & Lemma~\ref{lem-gluing-general} \\
$\rightharpoonup$ & Weak$^*$ convergence of measures. & Definition~\ref{dfn-weak-conv} \\
$\TV,\norm{\cdot}_{\TV}$ & Total variation divergence/norm. & Section~\ref{sec-measures} \\

\hline
\multicolumn{3}{l}{\textbf{Duality, transforms and weak norms}}\\
\hline
$\fD,\gD$ & Discrete dual potentials. & Eq.~\eqref{eq-dual} \\
$\f,\g$ & Continuous dual potentials. & Eq.~\eqref{eq-dual-generic} \\
$\PotentialsD(\a,\b)$ & Feasible set of discrete dual potentials. & Eq.~\eqref{eq-feasible-potential} \\
$\Potentials(\c)$ & Feasible set of continuous dual potentials. & Eq.~\eqref{eq-dfn-pot-dual} \\
$f^c,g^c$ & $c$-transform of a potential. & Definition~\ref{def-c-transform} \\
$\Laguerre_j(\gD)$ & Laguerre/power cell in semi-discrete OT. & Eq.~\eqref{eq-laguerre-cells} \\
$\Qq_m(\al)$ & Optimal $m$-point quantization error. & Eq.~\eqref{eq-optimal-quantization} \\
$\Lip(f)$ & Lipschitz constant of $f$. & Eq.~\eqref{eq-lip-constant} \\
$\Wass_1$ & Kantorovich--Rubinstein distance/norm. & Eq.~\eqref{eq-w1-metric} \\
$\Wass_{1,G}$ & Graph Wasserstein-1/transshipment distance. & Proposition~\ref{prop-graph-w1-beckmann} \\
$d_G,\nabla_G,\operatorname{div}_G$ & Graph geodesic distance, gradient and divergence. & Proposition~\ref{prop-graph-w1-beckmann} \\
$\norm{\cdot}_B$ & Dual norm induced by a discriminator class $B$. & Eq.~\eqref{eq-dual-norm-cont} \\
$\RKHS,\Krkhs$ & Reproducing kernel Hilbert space and its kernel. & Definition~\ref{def-kernel-mmd-norm} \\
$\MMD_k$ & Maximum mean discrepancy/kernel norm for $k$. & Definition~\ref{def-kernel-mmd-norm} \\
$\Divergm_\phi,\DivergmD_\phi$ & Continuous and discrete $\phi$-divergences. & Eq.~\eqref{eq-phi-div} \\
$\phi'_\infty$ & Recession slope of an entropy function. & Definition~\ref{def_entropy} \\
$\phi^\star$ & Legendre transform of $\phi$. & Eq.~\eqref{eq-legendre} \\
$\KL,\KLD$ & Continuous and discrete Kullback--Leibler divergences. & Definitions~\ref{def-discrete-relative-entropy}, \ref{def-measure-relative-entropy} \\
$\Hellinger$ & Hellinger divergence/distance. & Section~\ref{sec-phi-div} \\
$\JS$ & Jensen--Shannon divergence. & Section~\ref{sec-phi-div} \\

\hline
\multicolumn{3}{l}{\textbf{Entropic regularization and Sinkhorn algorithms}}\\
\hline
$\epsilon$ & Entropic regularization strength. & Eq.~\eqref{eq-regularized-discr} \\
$\HD(\P)$ & Shannon--Boltzmann entropy of a matrix. & Definition~\ref{def-discrete-shannon-boltzmann-entropy} \\
$\MKD_\C^\epsilon(\a,\b)$ & Discrete entropic OT value. & Eq.~\eqref{eq-regularized-discr} \\
$\MK_\c^\epsilon(\al,\be)$ & Continuous entropic OT value. & Eq.~\eqref{eq-entropic-generic} \\
$\K$ & Gibbs kernel $e^{-\C/\epsilon}$. & Eq.~\eqref{eq-scaling-form} \\
$\uD,\vD$ & Left and right Sinkhorn scalings. & Eq.~\eqref{eq-scaling-form} \\
$\diag(\uD)\K\diag(\vD)$ & Scaling form of the entropic coupling. & Eq.~\eqref{eq-sink-matrix} \\
$\odot$ & Entrywise product of vectors. & Eq.~\eqref{eq-dualsinkhorn-constraints2} \\
$\it{\uD},\itt{\uD}$ & Current and next Sinkhorn iterates. & Eq.~\eqref{eq-sinkhorn} \\
$\Hilbert$ & Hilbert projective metric on positive vectors. & Definition~\ref{def-hilbert-metric} \\
$\Proj^\KLD$ & KL/Bregman projection. & Eq.~\eqref{eq-kl-proj} \\
$\bar\MK_\c^\epsilon(\al,\be)$ & Debiased Sinkhorn divergence. & Eq.~\eqref{eq-sinkhorn-divergence} \\

\hline
\multicolumn{3}{l}{\textbf{Extensions of OT}}\\
\hline
$\psi_1,\psi_2$ & Entropy functions penalizing marginal mismatch. & Eq.~\eqref{eq-unbalanced-primal} \\
$\UW_c,\UW_{c,\tau}$ & Relaxed unbalanced OT value with marginal penalties. & Eq.~\eqref{eq-unbalanced-primal} \\
$L_c$ & Reverse-formulation local unbalanced cost. & Eq.~\eqref{eq-unbalanced-reverse-local-cost} \\
$H_c$ & Homogeneous perspective of the local cost $L_c$. & Eq.~\eqref{eq-unbalanced-homogeneous-local-cost} \\
$\HW$ & Homogeneous unbalanced formulation. & Eq.~\eqref{eq-homogeneous} \\
$\mathfrak{C}[\X]$ & Cone over the metric space $\X$. & Section~\ref{sec-unbalanced} \\
$\CW,\CW_\kappa$ & Cone formulation of unbalanced OT, with $\CW_\kappa$ using growth scale $\kappa$. & Theorem~\ref{thm-cone-unbalanced-ot}, Eq.~\eqref{eq-dynamic-unbalanced-ot} \\
$\mathcal A_\kappa$ & Dynamic unbalanced perspective action for transport and growth. & Eq.~\eqref{eq-dynamic-unbalanced-ot} \\
$\WFR_\kappa$ & Wasserstein--Fisher--Rao dynamic distance with growth scale $\kappa$. & Eq.~\eqref{eq-dynamic-unbalanced-ot} \\
$\be_s,\la_s$ & Input measures and weights in barycenter problems. & Eq.~\eqref{eq-barycenter-generic} \\
$\al^\star$ & Optimal measure, often a barycenter. & Section~\ref{sec-barycenters} \\
$\SW_p$ & Sliced Wasserstein distance. & Definition~\ref{def-sliced-wasserstein} \\
$\Sphere^{d-1}$ & Unit sphere of projection directions. & Definition~\ref{def-sliced-wasserstein} \\
$P_\theta$ & Projection on direction $\theta$. & Definition~\ref{def-sliced-wasserstein} \\
$\MaxSW_p$ & Max-sliced Wasserstein distance. & Definition~\ref{def-sliced-variants} \\
$\operatorname{MSWGG}_2$ & Min-sliced lifted-plan upper bound on $\Wass_2$. & Section~\ref{sec-sliced-wasserstein} \\
$\SW_{p,k},\MaxSW_{p,k}$ & Average and max Wasserstein distances over $k$-dimensional projections. & Definition~\ref{def-subspace-sliced-wasserstein} \\
$\Wass_\gamma$ & Spectral Wasserstein distance associated with a matrix gauge $\gamma$. & Eq.~\eqref{eq-spectral-wasserstein} \\
$\mathcal B_\gamma$ & Polar set defining the robust projected form of $\Wass_\gamma$. & Eq.~\eqref{eq-spectral-polar-set} \\
$\Wass_{2,A}$ & Quadratic Wasserstein pseudodistance after projection by $A^{1/2}$. & Eq.~\eqref{eq-quadratic-projected-cost} \\
$\SRW_{2,k}$ & Paty--Cuturi subspace robust Wasserstein distance. & Section~\ref{sec-spectral-subspace-wasserstein} \\
$\LOT_\rho$ & Linear OT distance around reference $\rho$. & Eq.~\eqref{eq-lot-embedding} \\
$\bar T_\pi$ & Barycentric projection of a coupling $\pi$. & Eq.~\eqref{eq-barycentric-projection} \\
$\bar\beta_\pi$ & Pushforward of $\alpha$ by the barycentric projection. & Eq.~\eqref{eq-barycentric-projection} \\
$\WOT_C$ & Weak OT value with conditional-law cost $C$. & Eq.~\eqref{eq-weak-ot} \\
$g^C$ & Weak $C$-transform in weak OT duality. & Proposition~\ref{prop-weak-ot-duality} \\
$u_t,V_t$ & Positive vector-valued density and spatial flux. & Eqs.~\eqref{eq-vector-valued-bb}, \eqref{eq-vector-valued-continuity} \\
$\mathcal W_{\Phi}$ & Dynamic vector-valued BB-type cost. & Eq.~\eqref{eq-vector-valued-bb} \\
$\distD,\distD'$ & Intra-domain distance matrices in discrete GW. & Eq.~\eqref{eq-gw-def} \\
$\De$ & Discrepancy between intra-domain distances. & Eq.~\eqref{eq-gw-def} \\
$\GWD$ & Discrete Gromov--Wasserstein cost. & Eq.~\eqref{eq-gw-def} \\
$\XX,\YY$ & Metric-measure spaces. & Definition~\ref{def-metric-measure-space} \\
$\GW$ & Continuous Gromov--Wasserstein distance. & Eq.~\eqref{eq-gw-generic} \\
$d_{\mathrm H},d_{\mathrm{GH}}$ & Hausdorff and Gromov--Hausdorff distances. & Section~\ref{sec-gromov-wasserstein} \\
$\operatorname{FGW}_{\lambda,p}$ & Fused Gromov--Wasserstein distance. & Section~\ref{sec-gromov-wasserstein} \\
$\mathbb S^m,\mathbb S_+^m$ & Real symmetric matrices and their positive semidefinite cone. & Definition~\ref{def-positive-matrix-valued-measure} \\
$A_t,P_t$ & Positive matrix-valued density and spatial matrix flux. & Eqs.~\eqref{eq-matrix-valued-bb}, \eqref{eq-matrix-valued-continuity} \\
$\mathcal W_{\mathrm{mat}}$ & Conservative matrix-valued BB-type cost. & Eq.~\eqref{eq-matrix-valued-bb} \\
$\mathbb H_n,\mathbb H_n^+,\mathbb H_n^{+,1}$ & Hermitian matrices, positive semidefinite Hermitian matrices and density matrices. & Definition~\ref{def-hermitian-density-matrices} \\
$\operatorname{Tr}_A,\operatorname{Tr}_B$ & Partial traces of a bipartite matrix. & Eq.~\eqref{eq-qot-partial-traces} \\
$\mathrm{QOT}_C(A,B)$ & Finite-dimensional quantum OT value with cost observable $C$. & Eq.~\eqref{eq-qot-primal} \\
$\mathrm{QOT}_C^\epsilon(A,B)$ & Entropically regularized quantum OT value. & Eq.~\eqref{eq-qot-entropic-primal} \\
$T_e(F,G),T_s(F,G)$ & Exact Gibbs coupling and symmetric Gurvits-scaling surrogate. & Eqs.~\eqref{eq-qot-gibbs-coupling}, \eqref{eq-qot-symmetric-scaling} \\

\hline
\multicolumn{3}{l}{\textbf{Dynamic OT and Wasserstein gradient flows}}\\
\hline
$\alpha_t$ & Time-dependent curve of probability measures. & Eq.~\eqref{eq:eulerian-advection} \\
$v_t$ & Eulerian velocity field transporting $\alpha_t$. & Eq.~\eqref{eq:eulerian-advection} \\
$T_t$ & Lagrangian particle flow map. & Eq.~\eqref{eq:lagrangian-advection} \\
$P_t$ & Interpolant map in flow matching; later also path evaluation. & Eq.~\eqref{eq:interp-coupling} \\
$\Wass_2^2$ via action & Benamou--Brenier dynamic formulation. & Eq.~\eqref{eq:benamou-brenier} \\
$\Wgrad f(\alpha)$ & Wasserstein gradient of a functional. & Proposition~\ref{prop-formal-wass-gradient} \\
$\delta f(\alpha)$ & First variation of $f$ at $\alpha$. & Proposition~\ref{prop-formal-wass-gradient} \\
$\partial_t\alpha+\diverg(\alpha v)=0$ & Continuity equation. & Eq.~\eqref{eq:eulerian-advection} \\
$\alpha_{t+\tau}$ & One JKO/minimizing-movement step. & Eq.~\eqref{eq:jko-discr} \\
$\Ss=C([0,1];\RR^d)$ & Path space in the superposition formulation. & Chapter~\ref{sec-wasserstein-flows} \\
\end{longtable}
}

\cleardoublepage
\phantomsection
\addcontentsline{toc}{chapter}{Index}
\printindex

\end{document}